\def\fillandplacepagenumber{
	\par
	\pagestyle{empty}
	\vbox to 0pt{\vss}
	\vfill
	\vbox to 0pt{
		\baselineskip 0pt
		\hbox to \linewidth{\hss}
		\baselineskip\footskip
		\hbox to \linewidth{\hfil\thepage\hfil}\vss
	}
}
\theoremstyle{definition}
\newtheorem{definition}{Definition}[chapter]
\newtheorem{lemma}{Lemma}[chapter]
\newtheorem{theorem}{Theorem}[chapter]
\newtheorem{corollary}{Corollary}[chapter]
\newtheorem{remark}{Remark}[chapter]
\newtheorem{assumption}{Assumption}[chapter]
\newtheorem{prop}{Proposition}[chapter]
\newtheorem{example}{Example}[chapter]
\newcommand*{\addFileDependency}[1]{
  \typeout{(#1)}
  \@addtofilelist{#1}
  \IfFileExists{#1}{}{\typeout{No file #1.}}
}
\newcommand\barbelow[1]{\stackunder[1.2pt]{$#1$}{\rule{1.5ex}{.1ex}}}
\DeclarePairedDelimiter{\ceil}{\lceil}{\rceil}
\def\myfnt{\ifx\protect\@typeset@protect\expandafter\footnote\else\expandafter\@gobble\fi}
\def\BState{\State\hskip-\ALG@thistlm}
\newcommand*\bigcdot{\mathpalette\bigcdot@{.5}}
\newcommand*\bigcdot@[2]{\mathbin{\vcenter{\hbox{\scalebox{#2}{$\m@th#1\bullet$}}}}}
\tikzset{
  c/.style={every coordinate/.try}
}
\newcommand\numberthis{\addtocounter{equation}{1}\tag{\theequation}}
\mathchardef\mhyphen="2D
\newcommand{\offdiag}{\mathit{off\mhyphen diag}}
\newcommand{\diag}{\mathit{diag}}
\newcommand{\tupr}{\rmbf^{1:m}}
\newcommand{\tuprhat}{\hat{\rmbf}^{1:m}}
\newcommand{\TP}{\emph{TP}}
\newcommand{\TN}{\emph{TN}}
\newcommand{\TPs}{\emph{TP }}
\newcommand{\fair}{\textrm{\textup{fair}}}
\newcommand{\quadr}{\textrm{\textup{quad}}}
\newcommand{\qmean}{\textrm{\textup{qmean}}}
\newcommand{\cov}{\textrm{\textup{cov}}}
\newcommand{\eo}{\textrm{\textup{EO}}}
\newcommand{\Ccal}{{\cal C}}
\newcommand{\Dcal}{{\cal D}}
\newcommand{\Ecal}{{\cal E}}
\newcommand{\Hcal}{{\cal H}}
\newcommand{\Lcal}{{\cal L}}
\newcommand{\Mcal}{{\cal M}}
\newcommand{\Rcal}{{\cal R}}
\newcommand{\Scal}{{\cal S}}
\newcommand{\Xcal}{{\cal X}}
\newcommand{\Ycal}{{\cal Y}}
\newcommand{\Zcal}{{\cal Z}}
\newcommand{\Bmbb}{\mathbb{B}}
\newcommand{\Pmbb}{\mathbb{P}}
\newcommand{\Rmbb}{\mathbb{R}}
\newcommand{\Umbb}{\mathbb{U}}
\newcommand{\Zmbb}{\mathbb{Z}}
\newcommand{\thetambf}{\bm{\theta}}
\newcommand{\smbfbar}{\oline{\mathbf{s}}}
\newcommand{\cmbfbar}{\oline{\mathbf{c}}}
\newcommand{\dmbfbar}{\oline{\mathbf{d}}}
\newcommand{\cmbfhat}{\hat{\mathbf{c}}}
\newcommand{\rmbfhat}{\hat{\mathbf{r}}}
\newcommand{\ambfhat}{\hat{\mathbf{a}}}
\newcommand{\fmbfhat}{\hat{\mathbf{f}}}
\newcommand{\dmbfhat}{\hat{\mathbf{d}}}
\newcommand{\bmbfhat}{\hat{\mathbf{b}}}
\newcommand{\Bmbfhat}{\hat{\mathbf{B}}}
\newcommand{\lambdahat}{\hat{\lambda}}
\newcommand{\ambfbar}{\oline{\mathbf{a}}}
\newcommand{\fmbfbar}{\oline{\mathbf{f}}}
\newcommand{\bmbfbar}{\oline{\mathbf{b}}}
\newcommand{\Bmbfbar}{\oline{\mathbf{B}}}
\newcommand{\lambdabar}{\oline{\lambda}}
\newcommand{\Ambf}{\mathbf{A}}
\newcommand{\ambf}{\mathbf{a}}
\newcommand{\Bmbf}{\mathbf{B}}
\newcommand{\bmbf}{\mathbf{b}}
\newcommand{\Cmbf}{\mathbf{C}}
\newcommand{\cmbf}{\mathbf{c}}
\newcommand{\dmbf}{\mathbf{d}}
\newcommand{\embf}{\mathbf{e}}
\newcommand{\fmbf}{\mathbf{f}}
\newcommand{\Imbf}{\mathbf{I}}
\newcommand{\mmbf}{\mathbf{m}}
\newcommand{\ombf}{\mathbf{o}}
\newcommand{\Qmbf}{\mathbf{Q}}
\newcommand{\Rmbf}{\mathbf{R}}
\newcommand{\rmbf}{\mathbf{r}}
\newcommand{\smbf}{\mathbf{s}}
\newcommand{\tmbf}{\mathbf{t}}
\newcommand{\umbf}{\mathbf{u}}
\newcommand{\vmbf}{\mathbf{v}}
\newcommand{\Wmbf}{\mathbf{W}}
\newcommand{\wmbf}{\mathbf{w}}
\newcommand{\xmbf}{\mathbf{x}}
\newcommand{\ymbf}{\mathbf{y}}
\newcommand{\zmbf}{\mathbf{z}}
\newcommand{\oline}[1]{\mkern 1.5mu\overline{\mkern-1.5mu#1}}
\newcommand{\abar}{\oline{a}}
\newcommand{\bbar}{\oline{b}}
\newcommand{\Cbar}{\oline{C}}
\newcommand{\dbar}{\oline{d}}
\renewcommand{\hbar}{\oline{h}}
\newcommand{\Sbar}{\oline{S}}
\newcommand{\sbar}{\oline{s}}
\newcommand{\ahat}{\hat{a}}
\newcommand{\hhat}{\hat{h}}
\newcommand{\ophi}{\overline{\phi}}
\newcommand\inner[2]{\langle #1, #2 \rangle}
\newcommand{\Cii}{C_{ii}}
\newcommand{\Cij}{C_{ij}}
\newcommand{\1}{{\mathbf 1}}
\newcommand{\hphi}{\hat{\phi}}
\newcommand{\hvarphi}{\hat{\varphi}}
\newcommand{\hPsi}{\hat{\Psi}}
\newcommand{\bphi}{\oline{\phi}}
\newcommand{\bvarphi}{\oline{\varphi}}
\newcommand{\bPsi}{\oline{\Psi}}
\newcommand{\scl}[1]{\scalebox{0.9}{#1}}
\newcommand{\alphambf}{\bm{\alpha}}
\newcommand{\taumbf}{\bm{\tau}}
\newcommand{\gammambf}{\bm{\gamma}}
\newcommand{\etambfbar}{\oline{\bm{\eta}}}
\newcommand{\etabar}{\oline{\eta}}
\newcommand{\bmmbf}{\oline{\mathbf{m}}}
\newcommand{\bmone}{\oline{m}_{11}}
\newcommand{\bmzero}{\oline{m}_{00}}
\newcommand{\bell}{\oline{\ell}}
\newcommand{\btau}{\oline{\tau}}
\newcommand{\hmmbf}{\hat{\mathbf{m}}}
\newcommand{\hpone}{\hat{p}_{11}}
\newcommand{\hpzero}{\hat{p}_{00}}
\newcommand{\hqone}{\hat{q}_{11}}
\newcommand{\hqzero}{\hat{q}_{00}}
\newcommand{\hqnot}{\hat{q}_{0}}
\newcommand{\sphi}{\phi^*}
\newcommand{\smmbf}{\mathbf{m}^*}
\newcommand{\smone}{{m_{11}^*}}
\newcommand{\smzero}{{m_{00}^*}}
\newcommand{\spone}{{p_{11}^*}}
\newcommand{\spzero}{{p_{00}^*}}
\newcommand{\sqone}{{q_{11}^*}}
\newcommand{\sqzero}{{q_{00}^*}}
\newcommand{\sqnot}{{q_{0}^*}}
\newcommand{\tmmbf}{\barbelow{\mathbf{m}}}
\newcommand{\tsmbf}{\barbelow{\mathbf{s}}}
\newcommand{\tell}{\barbelow{\ell}}
\newcommand{\ttau}{\barbelow{\tau}}
\newcommand{\pphi}{{\phi'}}
\newcommand{\ppone}{{p}_{11}'}
\newcommand{\ppzero}{{p}_{00}'}
\newcommand{\pqone}{{q}_{11}'}
\newcommand{\pqzero}{{q}_{00}'}
\newcommand{\pqnot}{{q}_{0}'}
\newcommand{\ppphi}{{\phi''}}
\newcommand{\pppone}{{p}_{11}''}
\newcommand{\pppzero}{{p}_{00}''}
\newcommand{\ppqone}{{q}_{11}''}
\newcommand{\ppqzero}{{q}_{00}''}
\newcommand{\ppqnot}{{q}_{0}''}
\DeclareMathOperator*{\argmin}{argmin}
\DeclareMathOperator*{\argmax}{argmax}
\renewcommand{\P}{\mathbb{P}}
\newcommand{\baligned}     {\begin{aligned}}
	\newcommand{\ealigned}     {\end{aligned}}
\newcommand{\barray}       {\begin{array}}
	\newcommand{\earray}       {\end{array}}
\newcommand{\bbmatrix}     {\begin{bmatrix}}
	\newcommand{\ebmatrix}     {\end{bmatrix}}
\newcommand{\bcases}       {\begin{cases}}
	\newcommand{\ecases}       {\end{cases}}
\newcommand{\bcenter}      {\begin{center}}
	\newcommand{\ecenter}      {\end{center}}
\newcommand{\bcolumn}      {\begin{column}}
	\newcommand{\ecolumn}      {\end{column}}
\newcommand{\bcolumns}     {\begin{columns}}
	\newcommand{\ecolumns}     {\end{columns}}
\newcommand{\benumerate}   {\begin{enumerate}}
	\newcommand{\eenumerate}   {\end{enumerate}}
\newcommand{\bequation}    {\begin{equation}}
	\newcommand{\eequation}    {\end{equation}}
\newcommand{\bequationn}   {\begin{equation*}}
	\newcommand{\eequationn}   {\end{equation*}}
\newcommand{\bfigure}      {\begin{figure}}
	\newcommand{\efigure}      {\end{figure}}
\newcommand{\bflushright}  {\begin{flushright}}
	\newcommand{\eflushright}  {\end{flushright}}
\newcommand{\bitemize}     {\begin{itemize}}
	\newcommand{\eitemize}     {\end{itemize}}
\newcommand{\bpmatrix}     {\begin{pmatrix}}
	\newcommand{\epmatrix}     {\end{pmatrix}}
\newcommand{\bsubequations}{\begin{subequations}}
	\newcommand{\esubequations}{\end{subequations}}
\newcommand{\btable}       {\begin{table}}
	\newcommand{\etable}       {\end{table}}
\newcommand{\btabular}     {\begin{tabular}}
	\newcommand{\etabular}     {\end{tabular}}
\newcommand{\bvmatrix}     {\begin{vmatrix}}
	\newcommand{\evmatrix}     {\end{vmatrix}}
\newcommand{\bequali}      {\bsubequations\begin{align}}
	\newcommand{\eequali}      {\end{align}\esubequations}
\newcommand{\balgorithm}  {\begin{algorithm}}
	\newcommand{\ealgorithm}  {\end{algorithm}}
\newcommand{\balgorithmic}{\begin{algorithmic}}
	\newcommand{\ealgorithmic}{\end{algorithmic}}
\newcommand{\bassumption} {\begin{assumption}}
	\newcommand{\eassumption} {\end{assumption}}
\newcommand{\bcorollary}  {\begin{corollary}}
	\newcommand{\ecorollary}  {\end{corollary}}
\newcommand{\bdefinition} {\begin{definition}}
	\newcommand{\edefinition} {\end{definition}}
\newcommand{\bexample}    {\begin{example}}
	\newcommand{\eexample}    {\end{example}}
\newcommand{\bprop}    {\begin{prop}}
	\newcommand{\eprop}    {\end{prop}}
\newcommand{\blemma}      {\begin{lemma}}
	\newcommand{\elemma}      {\end{lemma}}
\newcommand{\bproblem}    {\begin{problem}}
	\newcommand{\eproblem}    {\end{problem}}
\newcommand{\bproof}      {\begin{proof}}
	\newcommand{\eproof}      {\end{proof}}
\newcommand{\bremark}     {\begin{remark}}
	\newcommand{\eremark}     {\end{remark}}
\newcommand{\btheorem}    {\begin{theorem}}
	\newcommand{\etheorem}    {\end{theorem}}
\newcommand{\prodRcal}{\Rcal^{1:m}}
\title{Classification Performance Metric Elicitation and its Applications}
\author{Gaurush Hiranandani}
\begin{document}

%

%
\maketitle

\parindent 1em%

\frontmatter

%
\begin{abstract}
Given a learning problem with real-world tradeoffs, which cost function should the model be trained to optimize? This is the metric selection problem in machine learning. Despite its practical interest, there is limited formal guidance on how to select metrics for machine learning applications. This thesis outlines metric elicitation as a principled framework for selecting the performance metric that best reflects implicit user preferences. Once specified, the evaluation metric can be used to compare and train models. 

In this manuscript, we formalize the problem of Metric Elicitation and devise novel strategies for eliciting classification performance metrics using pairwise preference feedback over classifiers. Specifically, we provide novel strategies for eliciting linear and linear-fractional metrics for \emph{binary} and \emph{multiclass} classification problems, which are then extended to a framework that elicits \emph{group-fair} performance metrics in the presence of multiple sensitive groups. All the elicitation strategies that we discuss are robust to both finite sample and feedback noise, thus are useful in practice for real-world applications.

Using the tools and the geometric characterizations of the feasible confusion statistics space from the binary, multiclass, and multiclass-multigroup classification setups, we further provide strategies to elicit from a wider range of complex, modern multiclass metrics defined by quadratic functions of predictive rates by exploiting their local linear structure. This strategy can then be easily extended to eliciting  metrics of higher order polynomials. From application perspective, we also propose to use the metric elicitation framework in optimizing complex black box metrics that is amenable to deep network training. In particular, the linear elicitation strategies can be used to elicit local-linear approximation of the black-box metrics, which are then exploited by existing iterative optimization routines. Lastly, to bring theory closer to practice, we conduct a preliminary real-user study that shows the efficacy of the metric elicitation framework in recovering the users' preferred performance metric in a binary classification setup.
\end{abstract}

%
\begin{dedication}
``To my parents, brother, and sister-in-law for their love and support."
\end{dedication}

%
\begin{acknowledgments}
The only goal to pursue a Ph.D. for me was to bridge my knowledge gap. From that perspective, I could not have asked for a better \emph{advising} and \emph{guidance} than what my advisor, Professor Oluwasanmi Koyejo, provided. While writing this thesis, I came to believe that I have been successful in achieving my goal to a great extent. I owe every success during my Ph.D. to my esteemed advsisor, Professor Oluwasanmi Koyejo. His expertise and guidance were invaluable for my research. I will always cherish the alignment in our thinking around research work, and the kind of freedom that I had while working with you. Your support and constructive feedback on every idea that I came up with was immensely encouraging. There is always so much to learn from you, especially, the context switching between multiple projects and a very high standard for work-ethics. 

I would like to thank my doctoral committee members: Professor Oluwasanmi Koyejo, Professor Srikant Rayadurgam, Professor Paris Smaragdis, and Professor Shivani Agarwal, who have always been very helpful and have given extremely thoughtful feedback on my thesis research. I am incredibly honored to be able to have them on my Ph.D. committee and feel the utmost gratitude for all their help and support. 

I would like to acknowledge my colleagues from the machine-learning group. 
We have enjoyed sharing offices and been good
friends. I would like to thank Professor Matus Telgarsky, Professor Pierre Moulin, Professor Ruoyu Sun, Professor Jiawei Han, and Professor Nan Jiang, who lectured in the outstanding courses I have taken at University of Illinois at Urbana-Champaign (UIUC). I would also like to express my gratitude to the computer science department for giving a nice and friendly environment to work with. 

Without the support of my mentors, my goal of bridging the knowledge gap would not have been possible. I would like to thank Harikrishna Narasimhan, Mahdi Milani Fard, Nikhil Rao, Sumeet Katariya,  Karthik Subbian, Prateek Jain, Branislav Kveton, Atanu Sinha, Shiv Kumar Saini,  Sunav Choudhary, and Sumit Shekhar for their guidance and sharing of knowledge. I cannot describe in words how much I have learnt from my colleagues/co-authors at various universities and Adobe Research, where I used to work before joining the Ph.D. program. I am thankful to each and everyone of them. 
I would like to thank my former advisors, Professor Harish Karnick and Professor Jean-Marc Schlenker, who inspired me in the very beginning of my research career and encouraged me to
pursue a doctorate degree. I would also like to thank my previous colleagues at Indian Institute of Technology, Kanpur. 

Words cannot express my gratitude towards my parents, Jayshree Hiranandani and Narendra Hiranandani. They have always been there for everything! I am grateful to my parents for the sacrifices they have made in order to make me reach where I am today. I would also like to thank my brother, Dharmendra Hiranandani, who has been an inspiration and idol for me since my childhood. His guidance has been immensely helpful all throughout. The learning from our discussions over several aspects of life and career are deeply engraved inside me. They have been the force behind my achievements. I am also grateful to my sister-in-law, Ruchira Bhelekar Hiranandani, from whom I learn something each and every day regarding positive and cheerful attitude towards life. 

During the writing of this thesis, I was going through ACL surgery rehabilitation. The timely deposit of this thesis could not have been possible without the support of my friends Monika Salkar, Ishita Jain, Siddhansh Agarwal, and Amber Srivastava.     

Lastly, I would like to thank the Computer Science Department at UIUC for awarding me the C.L. and Jane W.-S. Liu Award, which acted as a catalyst for my research on Metric Elicitation. I would also like to thank Intel, Microsoft Azure, and Google Cloud Platform for providing computational resources to support my research. 
\end{acknowledgments}

%
\tableofcontents
\addtocontents{toc}{\vspace{-30.0pt}}
\mainmatter

%

\chapter{Introduction}
\label{chp:introduction}

\emph{Given a class prediction problem, which performance metric should the classifier optimize?} Machine learning practitioners often encounter this question in different forms. For example, natural language processing practitioners could face the question, \emph{``What is a good summary of a given article?~\cite{modani2016summarizing}"}  Similarly, for computer vision folks, \emph{``What is a good caption for a given image?"} poses an identical challenge~\cite{anderson2016spice}. In the field of music/audio research, the question, \emph{``When is one piece of music similar to another?"} may get similar treatment~\cite{berenzweig2004large}. Medical predictions are another application, where ignoring cost sensitive trade-offs can directly impact lives~\cite{sox1988medical}. Even companies in the industry struggle to find an answer to similar questions as specialized teams of statisticians/economists are routinely hired to monitor many metrics -- since optimizing the wrong metric directly translates into lost revenue~\cite{Dmitriev2016MeasuringM, choudhary2018sparse}. 
Unfortunately, there is scant formal guidance within the machine learning literature for how a practitioner might choose an appropriate metric, beyond a few default choices~\cite{caruana2004data, ferri2009experimental, sokolova2009systematic, hiranandani2019clustered}, and even less guidance on selecting a metric that reflects the preferences of the practitioners.

To address this issue, we propose the framework of
\emph{Metric Elicitation (ME)}, where the goal is to estimate a performance metric that best reflect implicit user preferences. This framework enables a practitioner to adjust the performance metrics based on the application, context, and population at hand. The motivation is that by employing metrics that reflect a user's innate trade-offs, one can learn models that best capture the user preferences. On its face, ME simply requires querying a user (oracle) to determine the quality she assigns to classifiers (learned using standard classification data); however, humans are often inaccurate when asked to provide absolute preferences~\cite{qian2013active, hiranandani2017poster}. Therefore, we propose gathering feedback in the form of pairwise classifier comparison queries, where the user is asked to compare two classifiers and provide an indicator of relative preference. Using such queries, ME aims to elicit the innate performance metric of the user. See Figure~\ref{fig:ME} for the visual intuition of the framework. 

\begin{figure*}[t]
	\centering 	\includegraphics[scale=0.55]{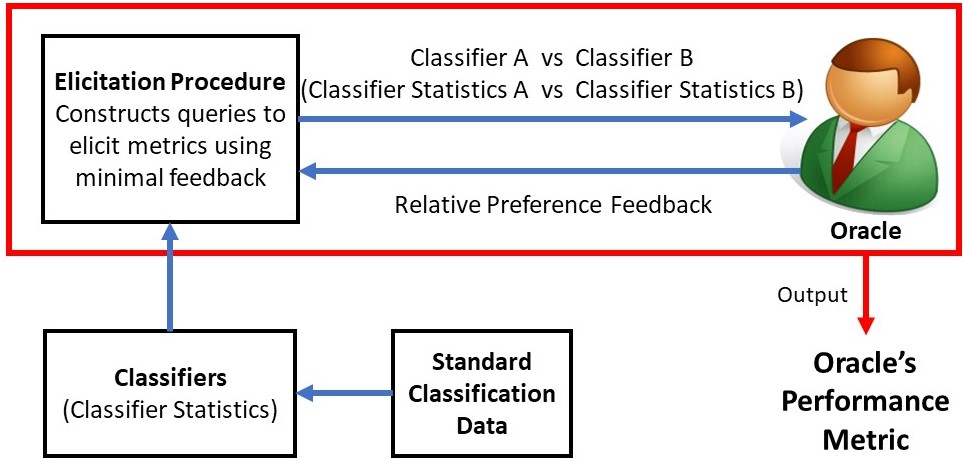}
	\caption{\textbf{Illustration of the Metric Elicitation Framework.} Our goal is to efficiently estimate the oracle’s performance metric. We assume that the models are summarized via measurements (classifier statistics), and the metric is a function of these measurements. The elicitation procedure poses pairwise comparisons queries of the type classifier A vs classifier B (equiv. classifier statistics A vs classifier statistics B). Based on relative preference feedback from the oracle, the framework elicits the oracle's metric in as few queries as possible.}
	\label{fig:ME}
\end{figure*}

We focus on eliciting the most common performance metrics that are functions of either confusion matrix or predictive rates elements~\cite{koyejo2014consistent, sokolova2009systematic}, commonly referred as \emph{measurements} or \emph{classifier statistics} in this manuscript.\footnote{Metrics depending on factors such as model complexity and interpretability are beyond the scope of this manuscript.} Thus, a classifier comparison query can be conceptually represented by a classifier statistics comparison query. Despite this apparent simplification, the problem remains challenging because one can only query feasible classifier statistics, i.e, classifier statistics for which there exists a classifier. To solve this problem, we introduce new characterizations of the space of feasible classifier statistics (associated with binary, multiclass, multiclass-multigroup classification problems) enabling the design of binary-search type procedures that identify the innate performance metric of the oracle. 
Furthermore, all the proposed procedures remain robust, both to noise from classifier estimation and to noise in the pairwise comparison itself. Thus, our work directly results in practical algorithms. The utility of ME is illustrated via the following real life applications.

{\bf Motivating Application 1: Medical Decision-Making using Cost-Sensitive Classification.}
Automated medical decision-making is an important application, where ignoring cost trade-offs can directly impact lives~\cite{sox1988medical}. Consider the case of cancer diagnosis and treatment support under the binary classification setting, where a doctor's unknown, innate performance metric may be approximated by a linear function of the confusion matrix elements, i.e., she has some innate reward values for True Positives and True Negatives -- equivalently, costs for False Positives and False Negatives -- based on known consequences of misdiagnosis, i.e, side-effects of treating a healthy patient vs. mortality rate for not treating a sick patient. Here, the doctor takes the role of the {\em oracle}. Our proposed approach exploits the space of confusion matrices associated with all possible classifiers that can be learned from standard classification data to determine the underlying rewards (equivalently, costs) provably using the least possible number of pairwise comparison queries posed to the doctor. Once the metric is elicited, it can be used to evaluate classifiers and/or train any future classifiers.

{\bf Motivating Application 2: Fair Machine learning.}
Machine learning models are increasingly applied for important decision-making tasks such as hiring and sentencing~\cite{dwork2012fairness,  singla2015learning, corbett2017algorithmic}. Yet, it is increasingly clear that automated decision-making is susceptible to bias; whereby decisions made by the algorithm are unfair to certain subgroups. To this end, several fairness metrics have been proposed -- all with the goal of reducing discrimination and bias from automated decision-making~\cite{barocas2017fairness}. One of the most difficult steps involved in practical deployment is the decision of which fairness metric to employ. This is further exacerbated by the observation that common metrics often lead to contradictory outcomes~\cite{kleinberg2017inherent}. Our approach for metric elicitation can be directly used to solve the {\em fairness metric selection} problem. Here, perhaps groups of ethicists or other relevant decision makers take the role of the {\em oracle}, and group-specific predictive rates correspond to the {\em query space} of interest -- which are easily approximated for any classifier. Metric elicitation can be used to formally quantify these intuitions -- specifying the quantitative metric that is best be applied to measuring or optimizing fairness for a given machine learning task, or to quantify the tradeoff between predictive performance and fairness. 

The applications of the proposed Metric Elicitation framework goes beyond just specifying user preferred performance metrics. It can also be used to learn classifiers that optimize complex performance metrics~\cite{hiranandani2020optimization} -- an aspect often crucial for practical applications. Several existing optimization algorithms are iterative in nature, where in each iteration, a local-linear objective is optimized. The iterates over the optimization routine are then combined to get to the final classifier. If the form of the metric is not known, then \emph{obtaining} the local-linear objective of the metric boils down to \emph{eliciting linear performance metric} in a local neighborhood. Thus, the tools from the  Metric Elicitation framework can be readily applied for optimizing black-box metrics. We discuss one such procedure, which optimizes black-box metrics in the presence of a \emph{machine} oracle, that when queried for a classifier returns an absolute quality feedback for the classifier. We then briefly discuss how the proposed procedure can be extended for a \emph{human} oracle that provides pairwise preference feedback, along with the challenges associated with it. 

Lastly, we conduct a preliminary user study, where we (a) build upon existing visualizations for confusion matrices to ask for pairwise preferences, and (b) try to elicit a linear performance metric using our proposed procedure in a binary classification setup associated with cancer diagnosis. The goal of this preliminary study is to test certain assumptions, check workflow of the implementation, and provide future guidance on visualizing confusion matrices for pairwise comparisons and finally eliciting actual performance metrics in real-life scenarios.

\section{Contributions and Thesis Organization}
\label{sec:contributions}

We first briefly summarize the contributions from this thesis. We then dig deep into each contribution later in Chapters~\ref{chp:me}-\ref{chp:practical}. 

\begin{enumerate}[label=(\alph*)]
\item \textbf{Metric elicitation framework (Chapter~\ref{chp:me}).} We formalize \emph{Metric Elicitation (ME)} -- a principled framework for determining supervised classification metrics from user feedback. For the case of pairwise feedback, we show that under certain conditions metric elicitation is equivalent to learning preferences between pairs of classifier statistics such as confusion matrices or predictive rates.

\item \textbf{Binary classification performance metric elicitation (Chapter~\ref{chp:binary}).} When the underlying metric is linear in the binary classification setup, we propose an elicitation strategy to recover the oracle's metric, whose query complexity decays logarithmically with the desired resolution. We also show that our query-complexity rates match the lower bound.  We further extend the linear metric elicitation algorithm to elicit more complex yet prevalent linear-fractional binary classification performance metrics. 

\item 
\textbf{Multiclass classification performance metric elicitation (Chapter~\ref{chp:multiclass}).} We extend work on binary classification setup by proposing ME strategies for the more complicated multiclass classification setting -- thus significantly increasing the use cases for ME. We propose two algorithms for multiclass classification metric elicitation that use multiple binary-search subroutines that recover the oracle's linear metric. One of the proposed algorithms assumes a sparsity condition on the metric, and thus is useful when the number of classes is large. Similar to the binary case, we further provide algorithms for eliciting linear-fractional multiclass classification performance metrics. 

\item \textbf{Fair performance metric elicitation (Chapter~\ref{chp:fair}).} With respect to applications to fairness, we  devise a novel strategy to elicit group-fair performance metrics for multiclass classification problems with multiple sensitive groups that also includes selecting the trade-off between predictive performance and fairness violation. 
Our procedure exploits the \emph{piecewise} linearity of the metric in group-specific predictive rates, uses binary-search based subroutines, and recovers the metric with linear query complexity.

    \item \textbf{Extension to quadratic metric elicitation and beyond (Chapter~\ref{chp:quadratic}).} The previous ME strategies can only handle metrics that are linear or quasi-linear functions of classifier statistics, which can be restrictive in domains  where the metrics are more complex and nuanced, e.g., ~\cite{menon2013statistical, narasimhan2018learning, hardt2016equality}. 
Thus, we propose novel strategies for eliciting metrics defined by \emph{quadratic} functions of classifier statistics, which can easily be applied to fair metric elicitation setups as well. We are thus be able to handle a more general family of metrics that can better capture a practitioner's innate preferences. We further generalize quadratic elicitation strategy to higher-order polynomial functions. The idea is to approximate a $d$-th order polynomial locally with $(d-1)$-th order polynomials and recursively apply our procedure to the lower-order polynomials.

\item \textbf{Optimizing black-box metrics through metric elicitation (Chapter~\ref{chp:blackbox}).} 
We consider learning to optimize a classification metric defined by a black-box function of the confusion matrix. Such black-box learning settings are ubiquitous, for example, when the learner only has query access to the metric of interest, or in noisy-label and domain adaptation applications where the learner must evaluate the metric via performance evaluation using a small validation sample. Our approach is to adaptively learn example weights on the training dataset such that the resulting weighted objective best approximates the metric on the validation sample. We use the fact that the example weights can be seen as a gradient for the metric and estimated through metric elicitation procedure, where a \emph{machine} oracle responds with absolute quality value of a classifier on a clean validation dataset.  We show how to model and estimate the example weights and use them to iteratively post-shift a pre-trained class probability estimator to construct a classifier. We also analyze the resulting procedure's statistical properties. Experiments on various label noise, domain shift, and fair classification setups confirm that our proposal compares favorably to the state-of-the-art baselines for each application.

\item \textbf{Eliciting real-user metric preferences (Chapter~\ref{chp:practical}).} Beyond technical contributions, our research raises novel questions with regards to classifier or classifier statistics visualization and interpretability for eliciting human preferences. 
We explore existing human-computer interface techniques for this task, including work on visualizing confusion matrices for non-expert users. We create a web user-interface and conduct a preliminary user-study in the binary classification setup in order to elicit real-users' performance metrics and devise procedures to evaluate the fidelity of the metrics that are recovered through the proposed metric elicitation framework.

\end{enumerate}

All our metric elicitation procedures (contributions (a)-(e)) are shown to be robust to both  finite sample and oracle feedback noise, thus are useful in practice. Our methods can be applied either by querying preferences over classifiers or classifiers statistics. Such an equivalence is crucial for practical applications. We provide statistical consistency guarantees of our black-box optimization algorithm (contribution (f)) that uses metric elicitation techniques in the presence of \emph{machine} oracles. We briefly discuss how this algorithm can be extended in the presence of \emph{human} oracles that provide pairwise feedback (including feedback from A/B tests) and the challenges associated with it. The related literature corresponding to each sub-topic is provided in the respective chapter. We draw out conclusions and future work in Chapter~\ref{chp:conclusion}.  Lastly, all the proofs are provided in the corresponding chapters' appendices. 
\chapter{Metric Elicitation}
\label{chp:me}

In this section, we formally describe the problem of Metric Elicitation. We first lay out some  preliminaries and standard notations  corresponding to classification problems that are common to the entire manuscript. 

\textbf{Notation.} For $k \in \Zmbb_+$, we denote the index set by $[k] = \{1, 2, \cdots , k\}$ and use $\Delta_k$ to denote the $(k-1)$-dimensional simplex. We denote the inner product of vectors by $\inner{\cdot}{\cdot}$ and the Hadamard product by $\odot$. For a matrix $\Ambf$, $\offdiag(\Ambf)$ returns a vector of off-diagonal elements of $\Ambf$ in row-major form, and  $\diag(\Ambf)$ returns a vector of diagonal elements of $\Ambf$.  We denote the $\ell_2$-norm and $\ell_\infty$-norm of a vector by $\|\cdot\|_2$ and $\|\cdot\|_\infty$, respectively. 

\section{Preliminaries}
\label{sec:background}

We consider the standard $k$-class classification setting with $X \in \Xcal$ and $Y \in [k]$ representing the input and output random variables, respectively. We assume access to a sample $\{(\xmbf, y)_i\}_{i=1}^n$ of $n$ examples generated \emph{iid} from a distribution $ \Pmbb(X, Y)$. 
We work with (randomized) classifiers 
\begin{equation}
h : \Xcal \rightarrow \Delta_k    
\end{equation}
that takes in a feature vector $x$ as input and outputs its prediction in the form of a probability distribution over the $k$-classes. We further use 
\begin{equation}\Hcal = \{h : \Xcal \rightarrow \Delta_k\}\end{equation}
 to denote the set of all classifiers. 

\emph{Measurements (Classifier Statistics):} 
We assume $q$  measurements (classifier statistics) of each model $h$, with measurement functions $\{ g_i: \Hcal \times \Pmbb \rightarrow \Rmbb\}_{i=1}^q$. We denote the measurements  (classifier statistics) of a classifier $h$ by a vector $\cmbf\smbf(h, \Pmbb) = (g_1(h, \Pmbb), \dots, g_q(h, \Pmbb))$. Examples of such statistics  for a classifier include its confusion matrix $C_{ij}(h) = \Pmbb(Y = i, h = j)$ for $i, j \in [k]$, predictive rate matrix $R_{ij}(h) = \Pmbb(h = j| Y = i)$ for $i, j \in [k]$, etc. 

\emph{Metrics:} We consider performance metrics  that are defined by a general function $\phi : [0, 1]^{q}  \rightarrow \Rmbb$ of classifier statistics $\cmbf\smbf$: 
\begin{equation}
\phi(\cmbf\smbf(h, \Pmbb)).
\end{equation}
Since the scale of the metric does not affect the learning problem~\cite{narasimhan2015consistent}, we allow $\phi$ to be bounded. Observe that for these purposes, the metric is invariant to positive multiplicative scaling and additive bias. One common example of such metrics is linear metric, which given coefficient vector $\ambf \in \Rmbb^q$ with $\Vert \ambf \Vert_2 = 1$ (without  loss of generality, due to scale-invariance) is given by: 
\begin{equation}
\phi^{\text{lin}}  = \inner{\ambf}{\cmbf\smbf(h, \Pmbb)}.
\end{equation}

\emph{Feasible classifier statistics:} We will restrict our attention to only those classifier statistics that are feasible, i.e., can be achieved by some classifier. This allows us to build elicitation methods that can be applied either by querying preferences over classifiers or classifiers statistics. The set of all feasible classifier statistics is given by: 
\begin{equation}
\Ccal\Scal = \{\cmbf\smbf(h, \Pmbb) \,:\, h \in \Hcal \}.
\end{equation}
For simplicity, we will suppress the dependence on $\Pmbb$ and $h$ if it is clear from the context. 

\section{Metric Elicitation: Problem Setup}
\label{sec:me}

We now describe the problem of \emph{Metric Elicitation}. There's an \textit{unknown} metric $\phi$, and we seek to elicit its form by posing queries to an \emph{oracle} asking  which of two classifiers is more preferred by it. The oracle has access to the underlying metric $\phi$ and provides answers by comparing its value on the two classifiers.
\bdefinition
[Oracle Query] Given two classifiers $h_1, h_2$ (equiv. to classifier statistics $\cmbf\smbf_1, \cmbf\smbf_2$ respectively), a query to the Oracle (with metric $\phi$) is represented by:
\begin{align}
\Gamma(h_1, h_2\,;\, \phi) = \Omega(\cmbf\smbf_1, \cmbf\smbf_2\,;\,\phi) = \1[\phi(\cmbf\smbf_1) > \phi(\cmbf\smbf_2)], 
\end{align}
\noindent where $\Gamma: \Hcal \times \Hcal \rightarrow \{0,1\}$ and $\Omega: \Ccal\Scal \times \Ccal\Scal \rightarrow \{0, 1\}$. The query asks whether $h_1$ is preferred to $h_2$ (equiv. if $\cmbf\smbf_1$ is preferred to $\cmbf\smbf_2$), as measured by $\phi$. 
\label{me-def:query}
\edefinition

In practice, the oracle can be an expert, a group of experts, or an entire user population. The ME framework can be applied by posing classifier comparisons directly via interpretable learning techniques~\cite{ribeiro2016should, doshi2017towards} or via A/B testing~\cite{tamburrelli2014towards, hiranandanionline}. For example, in an internet-based applications one may perform A/B testing by deploying two classifiers A and B with two different sub-populations of users and use their level of engagement to decide which of the two classifiers is preferred. For other applications, we may present to the user, visualizations of the measurements such as predictive rates for two different classifiers (e.g.,  \cite{zhang2020joint,beauxis2014visualization}), and have the user provide pairwise feedback. 

Since the metrics we consider are functions of only the classifier statistics, queries comparing classifiers are the same as queries on the associated classifier statistics. So for convenience, we will have our algorithms pose queries comparing two (feasible) classifier statistics, but they can be equivalently seen as comparing two classifiers. 
We next formally state the ME problem.

\bdefinition [Metric Elicitation with Pairwise Queries (given $\Pmbb$)] Suppose that the oracle's (unknown) performance metric is $\phi$.  Using oracle queries of the form $\Omega(\cmbf\smbf_1, \cmbf\smbf_2\,;\,\phi)$, recover a metric $\hphi$ such that $\Vert\phi - \hphi\Vert < \kappa$ under a  suitable norm $\Vert \cdot \Vert$ for sufficiently small error tolerance $\kappa > 0$.
\label{me-def:mePop}
\edefinition

Notice that Definition~\ref{me-def:mePop} involves true population quantities $\cmbf\smbf_1, \cmbf\smbf_2$. However, in practice, we are given only finite samples. This leads to a more practical definition of the metric elicitation problem.

\bdefinition [Metric Elicitation with Pairwise Queries (given $\{(\xmbf_i,y_i)\}_{i=1}^n$)]
The same problem as stated in Definition~\ref{me-def:mePop}, except that the queries are of the form $\Omega(\hat{\cmbf\smbf}_1, \hat{\cmbf\smbf}_2)$, where $\hat{\cmbf\smbf}_1, \hat{\cmbf\smbf}_2$ are the estimated classifier statistics from the given samples.
\label{me-def:meFinite}
\edefinition
The performance of ME is evaluated by both the query complexity and the quality of the elicited metric~\cite{hiranandani2018eliciting, hiranandani2019multiclass}. As is standard in the decision theory literature~ \cite{koyejo2015consistent, hiranandani2018eliciting, hiranandani2019multiclass}, we present our ME approach by first assuming access to population quantities such as the population classifier statistics $\cmbf\smbf(h, \Pmbb)$ as in Definition~\ref{me-def:mePop}, then examine estimation error from finite samples, i.e., with empirical rates $\hat{\cmbf\smbf}(h, \{(\xmbf,y)_i\}_{i=1}^n)$ as in Definition~\ref{me-def:meFinite}. Lastly, in all our proposed metric elicitation strategies, we work with the following noise model:
\bdefinition
Oracle Feedback Noise $(\epsilon_\Omega\geq0)$: 
The oracle may provide wrong answers whenever
$|\phi(\cmbf\smbf)-\phi(\cmbf\smbf')|<\epsilon_\Omega$. Otherwise, it provides correct answers. 
\label{me-def:noise}
\edefinition
Simply put, if the classifier statistics $\cmbf\smbf, \cmbf\smbf'$ are close as measured by $\phi$, then the oracle responses may be incorrect. We show robustness of our approaches under this noise model. We next discuss elicitation strategies for the different classification scenarios starting with the binary classification problem setup.
\newpage
\chapter{Binary Classification Performance Metric Elicitation}
\label{chp:binary}



In this chapter, we focus on eliciting binary classification performance metrics from pairwise feedback, where a practitioner is queried to provide relative preference between two classifiers. Here, we choose our measurement space to be the space of feasible confusion matrices associated with the classifiers for binary classification. By exploiting key geometric properties of the space of confusion matrices, we obtain provably query efficient algorithms for eliciting performance metrics. 
We emphasize that the notion of pairwise classifier comparison 
is not new and is already prevalent in the industry. An example is A/B testing \cite{tamburrelli2014towards}, 
where the whole population of users acts as an oracle.\footnote{In A/B testing, sub-populations of users are shown classifier A vs. classifier B, and their responses determine the overall preference. Interestingly, while each person is shown a sample output from one of the classifiers, the entire user population acts as the oracle for comparing classifiers.}  Similarly, classifier comparison by a single expert is becoming commonplace due to advances in the field of interpretable machine learning~\cite{ribeiro2016should, doshi2017towards}.

In this first edition of metric elicitation strategies, we focus on the most common performance metrics which are functions of the confusion matrix \cite{koyejo2014consistent, narasimhan2015consistent, sokolova2009systematic}, particularly, linear and ratio-of-linear functions. 
This includes almost all modern metrics such as 
accuracy, $F_\beta$-Measure, Jaccard Similarity Coefficient~\cite{sokolova2009systematic}, etc. 
By construction, pairwise classifier comparisons may be conceptually represented by their associated pairwise confusion matrix comparisons. 
Despite this apparent simplification, the problem remains challenging because one can only query feasible confusion matrices, i.e. confusion matrices for which there exists a classifier. As we show, our characterization of the space of confusion matrices enables the design of efficient binary-search type procedures that identify the innate performance metric of the oracle. 
While classifier (confusion matrix) comparisons may introduce additional noise, our approach remains robust, both to noise from classifier (confusion matrix) estimation, and to noise in the comparison itself. Thus, our work directly results in a practical algorithm. 

\textbf{Example:}  
Consider the case of cancer diagnosis, where a doctor's unknown, innate performance metric is a linear function of the confusion matrix, i.e., she has some innate reward values for True Positives and True Negatives 
-- 
equivalently (equiv.), costs for False Positives and False Negatives 
-- 
based on known consequences of misdiagnosis. 
Here, the doctor takes the role of the oracle. 
Our proposed approach exploit the space of confusion matrices associated with all possible classifiers that can be learned from standard classification data 
and determine the underlying rewards (equiv., costs) provably using the least possible number of pairwise comparison queries posed to the doctor. 

Our contributions in this chapter are summarized as follows:
\begin{itemize}
	\item When the underlying metric is linear, we propose a binary search algorithm that can recover the metric with query complexity that decays logarithmically with the desired resolution. We further show that our query-complexity rates match the lower bound. 
	\item We extend the elicitation algorithm to more complex linear-fractional performance metrics.
	\item We prove robustness of the proposed approach under feedback and classifier estimation noise.
\end{itemize}

All the proofs in this chapter are provided in Appendix~\ref{apx:binary}. 
\section{Background}
\label{sec:background}

Let $X \in \Xcal$ and $Y \in \{0, 1\}$ represent the input and output random variables respectively (0 = negative class, 1 = positive class). 
We assume a dataset of size $n$, $\{(x_i, y_i)\}_{i=1}^n$, generated \emph{iid} from a data generating distribution $ \Pmbb \overset{\text{iid}}{\sim} (X, Y)$. 
Let $f_X$ be the marginal distribution for $\Xcal$. Let $\eta(x) = \Pmbb(Y = 1 | X = x)$ and $\zeta = \Pmbb(Y = 1)$ represent the conditional and the unconditional probability of the positive class, respectively. Note that the earlier term is a function of the input $x$; whereas, the latter is a constant. 
We denote a classifier by $h$, and let $\Hcal = \{h : \Xcal \rightarrow [0, 1]\}$ be the set of all classifiers. 
A confusion matrix for a classifier $h$ is denoted by $C(h, \Pmbb) \in \Rmbb^{2 \times 2}$, comprising true positives (TP), false positives (FP), false negatives (FN), and true negatives (TN) and is given by:
\begin{ceqn}
\begin{align}
	C_{11} &= TP(h, \Pmbb) = \Pmbb(Y = 1, h = 1), \nonumber \\
	C_{01} &= FP(h, \Pmbb) = \Pmbb(Y = 0, h = 1), \nonumber \\
	C_{10} &= FN(h, \Pmbb) = \Pmbb(Y = 1, h = 0), \nonumber \\ 
	C_{00} &= TN(h, \Pmbb) = \Pmbb(Y = 0, h = 0). 
	\label{bin-eq:components}
\end{align}\nopagebreak
\end{ceqn}\nopagebreak
Clearly,  $\sum_{i, j} C_{ij} = 1$. We denote the set of all confusion matrices by $\Ccal = \{C(h, \Pmbb)  : h \in \Hcal\}$. 
Under the population law $\Pmbb$, the components of the confusion matrix can be further decomposed as:
\begin{equation}
FN(h, \Pmbb) = \zeta - TP(h, \Pmbb) \quad \text{and} \quad FP(h, \Pmbb) = 1 - \zeta - TN(h, \Pmbb).
\end{equation}
This decomposition reduces the four dimensional space to two dimensional space. Therefore, the set of confusion matrices can be defined as \begin{equation}
\Ccal = \{(TP(h, \Pmbb), TN(h, \Pmbb)) : h \in \Hcal\}.
\end{equation}
For clarity, we will suppress the dependence on $\Pmbb$ in our notation. In addition, we will subsume the notation $h$ if it is implicit from the context and denote the confusion matrix by $C = (TP, TN)$.

We represent the boundary of the set $\Ccal$ by $\partial\Ccal$. Any hyperplane (line) $\ell$ in the $(tp, tn)$ coordinate system is given by:
\begin{equation}
\ell := a\cdot tp + b\cdot tn = c, \quad \text{ where } a, b, c \in \Rmbb.
\end{equation}
Let $\phi : [0, 1]^{2 \times 2}  \rightarrow \Rmbb$ be the performance metric for a classifier $h$ determined by its confusion matrix $C(h)$. Without loss of generality (w.l.o.g.), we assume that $\phi$ is a utility, so that larger values are better. 

\subsection{Types of Performance Metrics}
\label{ssec:metrics}

We consider two of the most common families of binary classification metrics, namely linear and linear-fractional functions of the confusion matrix \eqref{bin-eq:components}.
\bdefinition
Linear Performance Metric (LPM): We denote this family by $\varphi_{LPM}$. Given constants (representing weights) $\{a_{11}, a_{01}, a_{10}, a_{00}\} \in \Rmbb^4$, we define the metric as:
\begin{ceqn}
\begin{align}
\phi(C) &= a_{11}TP + a_{01}FP  + a_{10}FN + a_{00}TN \nonumber \\
				  &= m_{11}TP + m_{00}TN + m_0,   
\label{eq:linear}
\end{align}
\end{ceqn}
where $m_{11} = (a_{11} - a_{10})$, $m_{00} = (a_{00} - a_{01})$, and $m_0 = a_{10}\zeta + a_{01}(1-\zeta)$. 
\edefinition
\bexample Weighted Accuracy (WA)~\cite{steinwart2007compare}:
\begin{equation}
WA = w_1TP+w_2TN,
\end{equation}
where $w_1, w_2 \in [0, 1]$ 
($w_1, w_2$ can be shifted and scaled to $[0, 1]$ without changing the learning problem ~\cite{narasimhan2015consistent}).
\label{ex:lossbased}
\eexample
\bdefinition
Linear-Fractional Performance Metric (LFPM): We denote this family by $\varphi_{LFPM}$. Given constants  $\{a_{11}, a_{01}, a_{10}, a_{00}$, $b_{11}, b_{01}, b_{10}, b_{00}\} \in \Rmbb^8$, we define the metric as: 
\begin{ceqn}
\begin{align}
\phi(C) &= \frac{a_{11}TP +  a_{01}FP +  a_{10}FN +  a_{00}TN}{b_{11}TP +  b_{01}FP +  b_{10}FN +  b_{00}TN} \nonumber \\
&= \frac{p_{11}TP +  p_{00}TN +  p_0}{q_{11}TP +  q_{00}TN +  q_0},
\label{linear-fractional}
\end{align}
\end{ceqn}
where $p_{11} = (a_{11} - a_{10})$, $p_{00} = (a_{00} - a_{01})$, $q_{11} = (b_{11} - b_{10})$, $q_{00} = (b_{00} - b_{01})$, $p_0 = a_{10}\zeta + a_{01}(1 - \zeta)$, $q_0 = b_{10}\zeta + b_{01}(1 - \zeta)$.
\edefinition
\bexample
The $F_\beta$ measure and the Jaccard similarity coefficient (JAC)~\citep{sokolova2009systematic}:
\begin{align}
    F_{\beta} = \frac{TP}{\frac{TP}{1+\beta^2} - \frac{TN}{1+\beta^2} + \frac{\beta^2\zeta + 1 - \zeta}{1+\beta^2}}, \, JAC = \frac{TP}{1-TN}
\label{ex:lf-examples}
\end{align}
\eexample

\subsection{Bayes Optimal and Inverse Bayes Optimal Classifiers}
\label{ssec:bayes}

Given a performance metric $\phi$, the Bayes utility $\btau$ is the optimal value of the performance metric over all classifiers, i.e., 
\begin{equation}
\btau = \sup_{h \in \Hcal}\phi(C(h)) = \sup_{C \in \Ccal}\phi(C).
\end{equation}
The Bayes classifier $\hbar$ (when it exists) is the classifier that optimizes the performance metric, so 
\begin{equation}
\hbar = \argmax\limits_{h \in \Hcal}\phi(C(h)).
\end{equation}
Similarly, the Bayes confusion matrix is given by 
\begin{equation}
\Cbar = \argmax\limits_{C \in \Ccal}\phi(C).
\end{equation} 
We further define the inverse Bayes utility   
\begin{equation}
\ttau = \inf_{h \in \Hcal}\phi(C(h)) = \inf_{C \in \Ccal}\phi(C).
\end{equation}
The inverse Bayes classifier is given by \begin{equation}
\barbelow{h} = \argmin\limits_{h \in \Hcal}\phi(C(h)).
\end{equation}
Similarly, the inverse Bayes confusion matrix is given by: 

\begin{equation}
\barbelow{C} = \argmin\limits_{C \in \Ccal}\phi(C).
\end{equation}

Notice that for $\phi \in \varphi_{LPM}$ \eqref{eq:linear}, 
the Bayes classifier predicts the label which  maximizes the expected utility conditioned on the instance, as discussed below.
\bprop Let $\phi \in \varphi_{LPM}$, then 
\begin{equation}
\hbar(x) = \left\{\begin{array}{lr}
			 \1[\eta(x) \geq \frac{m_{00}}{m_{11} + m_{00}}],& \; m_{11} + m_{00} \geq 0 \\
			 \1[\frac{m_{00}}{m_{11} + m_{00}} \geq \eta(x) ],& \;  o.w. 
	 	 \end{array}\right\}
\end{equation}
is a Bayes optimal classifier \emph{w.r.t} 
$\phi$. Further, the inverse Bayes classifier is given by $\barbelow{h}= 1 - \hbar$.
\label{pr:bayeslinear}
\eprop

\vspace{-1cm}
\subsection{Problem Setup}
\label{ssec:query}

We borrow the problem setup from Chapter~\ref{chp:me}, particularly, the definitions of oracle query (Definition \ref{me-def:query}) and Metric Elicitation with finite samples (Definition~\ref{me-def:meFinite}). Since our choice of measurements is the confusion matrix entries, for ease of understanding, we re-state these definitions after replacing classifier statistics by confusion matrices for binary classification.  

We first formalize \emph{oracle query}. 
Recall that by the definition of confusion matrices~\eqref{bin-eq:components}, there exists a surjective mapping from $\Hcal \rightarrow \Ccal$.
An oracle is queried to determine relative preference between two classifiers. However, since we only consider metrics which are functions of the confusion matrix, a comparison query over classifiers becomes equivalent to a comparison query over confusion matrices in our setting.

\bdefinition
Oracle Query: Given two classifiers $h, h'$ (equiv. to confusion matrices $C, C'$ respectively), a query to the Oracle (with metric $\phi$) is represented by:
\begin{ceqn}
\begin{align}
\Gamma(h, h' \,;\,\phi) = \Omega(C, C'\,;\,\phi) &= \1[\phi(C) > \phi(C')] =: \1[C \succ C'],
\end{align}
\end{ceqn}
where $\Gamma: \Hcal \times \Hcal \rightarrow \{0,1\}$ and $\Omega: \Ccal \times \Ccal \rightarrow \{0, 1\}$. The query denotes whether $h$ is preferred to $h'$ (equiv. to $C$ is preferred to $C'$) as measured according to $\phi$.
\label{def:query}
\edefinition
We emphasize that depending on practical convenience, the oracle may be asked to compare either confusion matrices or classifiers achieving the corresponding confusion matrices, via approaches discussed in the beginning of  Chapter~\ref{chp:binary}. Henceforth, for simplicity of notation, we will treat any comparison query as confusion matrix comparison query. Next, we state the metric elicitation problem. 

\bdefinition 
    Metric Elicitation (given $\{(x_i,y_i)\}_{i=1}^n$): Suppose that the oracle's true, unknown performance metric is $\phi$.  Recover a metric $\hphi$ by querying the oracle for as few pairwise comparisons of the form $\Omega(\hat C, \hat C')$, where $\hat C, \hat C'$ are the estimated confusion matrices from the samples, such that $\Vert\phi - \hphi\Vert_{\_\_} < \kappa$ for sufficiently small $\Rmbb \ni\kappa > 0$ and for any suitable norm $\Vert \cdot \Vert_{\_\_}$.
\label{eq:mefinite}
\edefinition

Ultimately, we want to perform ME as described in  Definition~\ref{eq:mefinite}. A good approach to do so is to first solve ME by assuming access to the appropriate population quantities such as the population confusion matrices $\Cmbf(h, \Pmbb)$, and then consider practical implementation using estimated confusion matrices from finite data, i.e., $\Cmbf(h, \{(x_i,y_i)\}_{i=1}^n)$. This is a standard approach in decision theory (see e.g. \cite{koyejo2015consistent}), where estimation error from finite samples is adjudged as a noise source and handled accordingly. 
\section{Confusion Matrices}
\label{bin-sec:confusion}\vspace*{-1ex}

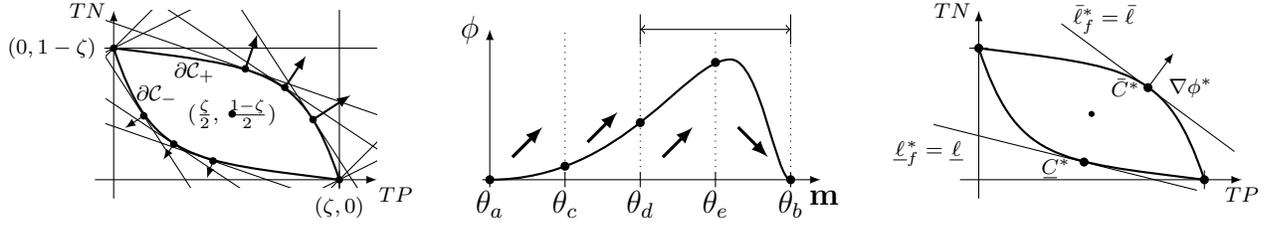
\begin{figure*}[t]
	\centering
	\begin{tikzpicture}[scale = 1.0]
    

    	\begin{scope}[shift={(-5,0)},scale = 0.5]\scriptsize
    	\def\r{0.1};
    	\def\s{0.06};
	
	\draw[thick] (0,3.5) .. controls (4,3) and (5,3) .. (6,0)
    	.. controls (2,0.5) and (1,0.5) .. (0,3.5);
    \draw[-latex] (0,-.5)--(0,4.5); 
    \draw[-latex] (-.5,0)--(7,0);
    \node[left] at (0,4.5) {$TN$};
    \node[below] at (7.5,0) {$TP$};
    \draw (6,0) +(0,0.25) -- +(0,-.25);
    \draw (0,3.5) +(.25,0) -- +(-.25,0);
    \node[below] at (6,-.25) {$(\zeta, 0)$};
    \node[left] at (-0.25,3.5) {$(0, 1-\zeta)$};
    
    \coordinate (C*) at (4.55,2.455);    
    \coordinate (C1) at (3.5,2.95);
    \coordinate (C2) at (5.3,1.6);
    
	\coordinate (Cent) at (3.15,1.75);    
    
    \coordinate (Ct) at (1.6,0.95);
    \coordinate (Ct1) at (2.64,0.5);
    \coordinate (Ct2) at (0.8,1.7); 
    \coordinate (C+) at (2.15,2.85);
    
    \coordinate (C-) at (1.15,2.25);

    \fill[color=black] 
    		(0,3.5) circle (\r)
    		(6,0) circle (\r)
        (Cent) circle (\r)
        (C*) circle (\r)
        (C1) circle (\r)
        (C2) circle (\r)
        (Ct) circle (\r)
        (Ct1) circle (\r)
        (Ct2) circle (\r);
    
    	\clip (-0.2,-0.2) rectangle (7,4.5);   
    
    \draw (C*) +(-24,16) -- +(24,-16);
    \draw[-latex, thick] (C*) -- +(.56,0.84);
    \draw (C1) +(-23,8) -- +(23,-8);
    \draw[-latex,thick] (C1) -- +(.32,0.92);
    \draw (C2) +(-13,20) -- +(13,-20);
    \draw[-latex,thick] (C2) -- +(1,.65);
    
    \draw (Ct) +(-24,16) -- +(24,-16);
    \draw[-latex] (Ct) -- +(-.28,-0.42);
    \draw (Ct1) +(-23,8) -- +(23,-8);
    \draw[-latex] (Ct1) -- +(-.16,-0.46);
    \draw (Ct2) +(-13,20) -- +(13,-20);
    \draw[-latex] (Ct2) -- +(-.5,-.325);
    
	
	\draw(6,0) +(0,10) -- +(0,-20);
	\draw(6,0) +(10,10) -- +(-20,-20);
	\draw(6,0) +(20,10) -- +(-20,-10);
	\draw(0,3.5) +(10,0) -- +(-20,0);    
	\draw(0,3.5) +(10,10) -- +(-20,-20);    
	\draw(0,3.5) +(10,5) -- +(-20,-10);    
    
    
    \node at (Cent) {{$(\frac{\zeta}{2},\, \frac{1-\zeta}{2})$}};
    
    \node at (C+) {{$\partial\Ccal_+$}};
    \node at (C-) {{$\partial\Ccal_-$}};

    \end{scope}

    
	\def\r{0.06};
	
    \draw[thick] (0,0) .. controls (1.8,0) and (2.6,1.6) .. (3.2,1.6) 
    ..controls (3.6,1.6) and (3.8,0) .. (4,0);
    \draw[-latex] (0,-.1)--(0,2); 
    \draw[-latex] (-0.1,0)--(4.4,0);
    \node[left] at (0,2) {$\phi$};
    \node[below right] at (4.1,0) {$\mathbf m$};
   
   	\coordinate (C1) at (0,0.00);
    \coordinate (C2) at (1,0.18);
    \coordinate (C3) at (2,0.76);
    \coordinate (C4) at (3,1.56);
    \coordinate (C5) at (4,0.00);
    
    \node[below] at (0,0) {$\theta_a$};
    \node[below] at (1,0) {$\theta_c$};
    \node[below] at (2,0) {$\theta_d$};
    \node[below] at (3,0) {$\theta_e$};
    \node[below] at (4,0) {$\theta_b$};
    
    \foreach \x in {1,2,3,4} {
    	\draw (\x,-.1) -- (\x,.1);
        \draw[dotted] (\x,0) -- (\x,2);
    }
    \fill[color=black] 
    		(C1) circle (\r)
    		(C2) circle (\r)
            (C3) circle (\r)
            (C4) circle (\r)
            (C5) circle (\r);   
    
    \draw[very thick,-latex] (0.3,0.3) -- (0.7,0.7);
    \draw[very thick,-latex] (1.3,0.5) -- (1.7,0.9);
    \draw[very thick,-latex] (2.3,0.3) -- (2.7,0.7);
    \draw[very thick,-latex] (3.3,0.7) -- (3.7,0.3);
    
    \draw (2,1.8)--(2,2.2) (4,1.8)--(4,2.2);
    \draw[<->] (2,2)--(4,2);

    
     \begin{scope}[shift={(6.5,0)},scale = 0.5]\scriptsize
     
     \def\r{0.12};
	
	\draw[thick] (0,3.5) .. controls (4,3) and (5,3) .. (6,0)
    	.. controls (2,0.5) and (1,0.5) .. (0,3.5);
    \draw[-latex] (0,-.5)--(0,4.5); 
    \draw[-latex] (-.5,0)--(7,0);
    \node[left] at (0,4.5) {$TN$};
    \node[below] at (7,0) {$TP$};
    \draw (6,0) +(0,0.25) -- +(0,-.25);
    \draw (0,3.5) +(.25,0) -- +(-.25,0);
    
    \coordinate (C*) at (4.5,2.45);
    \coordinate (l*) at (2.3,4.25);
    \coordinate (Ct) at (2.8,0.47);
    \coordinate (lt) at (-0.2,1.25); 
    
    \fill[color=black] (0,3.5) circle (\r)
    		(6,0) circle (\r)
            (3,1.75) circle (0.08)
            (C*) circle (\r)
            (Ct) circle (\r);
            
    \node[right] at (l*) {$\bar{\ell}_f^* = \bar{\ell}$};
    \node[below left] at (lt) {$\barbelow{\ell}^*_f = \barbelow{\ell}$};
    
    \draw (C*) +(-2.3,1.7) -- +(2.3,-1.7);
    \draw[-latex] (C*) +(0,0) -- +(0.68,0.92);
    \node[left] at (C*) {$\bar{C}^*$};
    \node[below right] at ($(C*)+(0.34,0.46)$) {$\nabla \phi^*$};
   	
    \draw (Ct) +(-4,1) -- +(3,-0.75);
    \node[below left] at ($(Ct)+(-0.15,0.30)$) {$\barbelow{C}^*$};
     \end{scope}
\end{tikzpicture}
	\caption{\small{(a)} \normalsize  Supporting hyperplanes (with normal vectors) and resulting geometry of $\Ccal$; \small(b) \normalsize Sketch of Algorithm~\ref{bin-alg:linear}; \small(c) \normalsize Maximizer $\Cbar^*$ and minimizer $\barbelow{C}^*$ along with the supporting hyperplanes for LFPMs.}
	\label{bin-fig:lin-fr}
\end{figure*}

ME will require  confusion matrices that are achieved by all possible 
classifiers, thus it is necessary to characterize the set $\Ccal$ in a way which is useful for the task.

\bassumption
We assume $g(t)=\Pmbb[\eta(X)\geq t]$ is continuous and 
strictly decreasing for $t \in [0, 1]$.
\label{bin-as:eta}
\eassumption
\vspace*{-1ex}
This is equivalent to standard assumptions \citep{koyejo2014consistent} that the event $\eta(X)=t$ has positive density but zero probability. 
Note that this requires $X$ to have no point mass. 

\bprop{(Properties of $\Ccal$ --- Figure~\ref{bin-fig:lin-fr}(a).)}\label{pr:strict-convex}
The set of confusion matrices $\Ccal$ is convex, closed, contained in the rectangle $[0,\zeta]\times [0,1-\zeta]$ (bounded), and $180$-degree rotationally symmetric around the center-point $(\frac{\zeta}{2}, \frac{1-\zeta}{2})$. Under Assumption~\ref{bin-as:eta}, $(0,1-\zeta)$ and $(\zeta,0)$ are the only vertices of $\Ccal$, and $\Ccal$ is strictly convex. Thus, any supporting hyperplane of $\Ccal$ is tangent at only one point.\footnote{Additional visual intuition about the geometry of C (via an example) is given in Appendix~\ref{appendix:visualization}.}
\eprop

\subsection{LPM Parametrization and Connection with Supporting Hyperplanes of $\Ccal$}
\label{ssec:parametrization}
For an LPM $\phi$ \eqref{eq:linear}, Proposition~\ref{pr:strict-convex} guarantees the existence of a unique Bayes confusion matrix on the boundary $\partial \Ccal$.  
This is because optimum for a linear function over a strictly convex set is unique and lies on the boundary~\cite{boyd2004convex}.
Note 
that any linear function with the same trade-offs for \TPs and \TN, i.e. same $(m_{11}, m_{00})$, is maximized at the same boundary point regardless of the bias term $m_0$. 
Thus, different LPMs can be generated by varying trade-offs $\mmbf = (m_{11}, m_{00})$ such that $\norm{\mmbf} = 1$ and $m_0 = 0$. The condition $\norm{\mmbf} = 1$ does not affect the learning problem as discussed in Example~\ref{ex:lossbased}. In other words, the performance metric is scale invariant. 
This allows us to represent the family of linear metrics $\varphi_{LPM}$ by a single parameter~$\theta \in [0, 2\pi]$:
\begin{ceqn}
\begin{equation}
	\varphi_{LPM} = \{ \mmbf = (\cos\theta, \sin\theta) : \theta \in [0, 2\pi]\}.
	\label{set:lfpm}
\end{equation}
\end{ceqn}
Given $\mmbf$ (equiv. to $\theta$), we can recover the Bayes classifier using Proposition~\ref{pr:bayeslinear}, and then the Bayes confusion matrix  $\Cbar_\theta$ = $\Cbar_\mmbf = (\oline{TP}_{\mmbf}, \oline{TN}_{\mmbf})$ using \eqref{bin-eq:components}. Under Assumption~\ref{bin-as:eta}, due to strict convexity of $\Ccal$, the Bayes confusion matrix $\Cbar_\mmbf$ is unique; therefore, we have that
\begin{ceqn}
\begin{align}
\langle \mmbf, C \rangle <  \langle \mmbf, \Cbar_\mmbf \rangle \qquad \forall \; C \in \Ccal, C \neq \Cbar_\mmbf.
\end{align}
\end{ceqn}
Notice the connection between the linear performance metrics and the supporting hyperplanes of the set $\Ccal$ (see Figure~\ref{bin-fig:lin-fr}(a)). Given $\mmbf$, there exists a supporting hyperplane tangent to $\Ccal$ at only $\Cbar_\mmbf$
defined as follows:
\begin{ceqn}
\bequation
\bell_\mmbf \coloneqq m_{11}\cdot tp + m_{00}\cdot tn = m_{11}\oline{TP}_{\mmbf} +  m_{00}\oline{TN}_{\mmbf}.
\label{eq:support}
\eequation
\end{ceqn}

Clearly, if $m_{11}$ and $m_{00}$ are of opposite sign (i.e., $\theta \in (\sfrac{\pi}{2}, \pi)\cup(\sfrac{3\pi}{2}, 2\pi)$), then $\hbar_{\mmbf}$ is the trivial classifier predicting either 1 or 0 everywhere. In other words, if the slope of the hyperplane is positive, then it touches the set $\Ccal$ either at $(\zeta, 0)$ or $(0, 1 - \zeta)$.  
When $m_{11}, m_{00} \neq 0$ with the same sign (i.e., $\theta \in (0, \sfrac{\pi}{2})\cup(\pi, \sfrac{3\pi}{2})$), 
then the Bayes confusion matrix is away from the two vertices. Now, we may split the boundary $\partial\Ccal$ as follows:
\bdefinition 
\label{def:boundary}
The Bayes confusion matrices for LPMs with $m_{11}, m_{00} \geq 0$ $(\theta \in [0, \sfrac{\pi}{2}])$ form the upper boundary, denoted by $\partial\Ccal_+$. The Bayes confusion matrices for LPMs with $m_{11}, m_{00} < 0$ $(\theta \in (\pi, \sfrac{3\pi}{2}))$ form the lower boundary, denoted by $\partial\Ccal_-$. From Proposition ~\ref{pr:bayeslinear}, it follows that the confusion matrices in $\partial\Ccal_+$ and $\partial\Ccal_-$ correspond to the classifiers of the form  $\1[\eta(x)\geq \delta]$ and $\1[\delta \geq \eta(x)]$, respectively, for some $\delta \in [0, 1]$.
\edefinition
\section{Algorithms}
\label{sec:algorithms}

In this section, we propose binary-search type algorithms, which exploit the geometry of the set $\Ccal$ (Section \ref{bin-sec:confusion}) to find the maximizer / minimizer and the associated supporting hyperplanes for any quasiconcave / quasiconvex metrics. These algorithms are then used to elicit LPMs and LFPMs, both of which belong to both quasiconcave and quasiconvex function families.

We allow \emph{noisy} oracles; however, for simplicity, 
we will first discuss algorithms and elicitation with no-noise, and then show that they are robust to the noisy feedback (Section~\ref{sec:noise}). 
Moreover, as one typically prefers metrics which reward correct classification, we first discuss metrics that are monotonically increasing in both $\TP$ and $\TN$. The monotonically decreasing case is discussed in Appendix~\ref{appendix:decreasing} as a natural extension. 

The following lemma for any quasiconcave and quasiconvex metrics forms the basis of our proposed algorithms. 
\blemma
Let 
$\rho^+:[0,1]\to \partial\mathcal C_+$, $\rho^-:[0,1]\to \partial\mathcal C_-$
be continuous, bijective, parametrizations of the upper and lower boundary,
respectively. Let $\phi:\mathcal C\to \mathbb R$ be a quasiconcave function,
and $\psi:\mathcal C\to \mathbb R$ be a quasiconvex function, 
which are 
monotone increasing in both $TP$ and $TN$.
Then the composition $\phi\circ \rho^+: [0,1]\to\mathbb R$ is
quasiconcave (and therefore unimodal) on the interval $[0, 1]$, and $\psi\circ\rho^-:[0,1]\to\mathbb R$ is quasiconvex (and therefore unimodal) on the interval $[0,1]$. \label{lem:quasiconcave}
\elemma
The unimodality of quasiconcave (quasiconvex) metrics on the upper (lower) boundary of the set $\Ccal$ 
along with the one-dimensional parametrization of $\mmbf$ using $\theta \in [0, 2\pi]$ (Section \ref{bin-sec:confusion}) allows us to devise binary-search-type methods to find the maximizer $\Cbar$, the minimizer $\barbelow{C}$, and the first order approximation of $\phi$ at these points, i.e., the supporting hyperplanes at $\Cbar$ and $\barbelow{C}$. 
\balgorithm[t]
\caption{Quasiconcave Metric Maximization}
\label{bin-alg:linear}
\balgorithmic[1]
\STATE \textbf{Input:} $\epsilon > 0$ and oracle $\Omega$.
\STATE \textbf{Initialize:} $\theta_a = 0$, $\theta_b = \frac{\pi}{2}$.
\WHILE {$\abs{\theta_b - \theta_a} > \epsilon$}
\STATE Set $\theta_c = \frac{3\theta_a + \theta_b}{4}$, $\theta_d = \frac{\theta_a + \theta_b}{2}$, and $\theta_e = \frac{\theta_a + 3\theta_b}{4}$. Set corresponding slopes ($\mmbf$'s) using \eqref{set:lfpm}. \\
\STATE Obtain $\hbar_{\theta_a}$,$\hbar_{\theta_c}$,$\hbar_{\theta_d}$, $\hbar_{\theta_e}, \hbar_{\theta_b}$ using Proposition \ref{pr:bayeslinear}. Compute $\Cbar_{\theta_a}$,$\Cbar_{\theta_c}$,$\Cbar_{\theta_d}$,$\Cbar_{\theta_e}, \Cbar_{\theta_b}$ using~\eqref{bin-eq:components}.
\STATE Query $\Omega(\Cbar_{\theta_c}, \Cbar_{\theta_a}), \Omega(\Cbar_{\theta_d}, \Cbar_{\theta_c}), \Omega(\Cbar_{\theta_e}, \Cbar_{\theta_d}),$ and $\Omega(\Cbar_{\theta_b}, \Cbar_{\theta_e})$.
\STATE If $\Cbar_\theta \succ \Cbar_{\theta'}\prec \Cbar_{\theta''}$ for consecutive $\theta<\theta'<\theta''$, assume the default order $\Cbar_\theta \prec \Cbar_{\theta'}\prec \Cbar_{\theta''}.$
\STATE {\bf if }($\Cbar_{\theta_a} \succ \Cbar_{\theta_c}$) Set $\theta_b = \theta_d$.
\STATE {\bf elseif }($\Cbar_{\theta_a} \prec \Cbar_{\theta_c} \succ \Cbar_{\theta_d}$) Set $\theta_b = \theta_d$.
\STATE {\bf elseif }($\Cbar_{\theta_c} \prec \Cbar_{\theta_d} \succ \Cbar_{\theta_e}$) Set $\theta_a = \theta_c$,  $\theta_b = \theta_e$.
\STATE {\bf elseif }($\Cbar_{\theta_d} \prec \Cbar_{\theta_e} \succ \Cbar_{\theta_b}$) Set $\theta_a = \theta_d$.
\STATE {\bf else } 
 Set $\theta_a = \theta_d$.
\ENDWHILE
\STATE\textbf{Output:} $\bmmbf, \Cbar,$ and $\bell$, where $\bmmbf = \mmbf_d$ ($\theta_d$), $\Cbar = {\Cbar}_{\theta_d},$ and $\bell := \langle \bmmbf, (tp, tn) \rangle = \langle \bmmbf, {\Cbar} \rangle$.
\ealgorithmic
\ealgorithm

\textbf{Algorithm \ref{bin-alg:linear}.} \emph{Maximizing quasiconcave metrics and finding supporting hyperplanes at the optimum:} Since $\phi$ is monotonically increasing in both \TPs and \TN, and $\mathcal C$ is convex, the maximizer must be on the upper boundary. 
Hence, we start with the interval $[\theta_a = 0, \theta_b = \frac{\pi}{2}]$ (Definition~\ref{def:boundary}). We divide it into four equal parts and set slopes using \eqref{set:lfpm} in line 4 (see  Figure~\ref{bin-fig:lin-fr}(b) for visual intuition). 
Then, we compute the Bayes classifiers using Proposition \ref{pr:bayeslinear} and the associated Bayes confusion matrices in line 5. We pose four pairwise queries to the oracle in line 6. 
Line 7 gives the default direction to binary search in case of out-of-order responses.\footnote{Due to finite samples, $\Ccal$'s boundary may have staircase-type bumps in practice. This may lead to out-of-order responses, even when the metric is unimodal \emph{w.r.t.} $\theta$.} In lines 8-12, we shrink the search interval by half based on oracle responses. We stop when the search interval becomes smaller than a given $\epsilon > 0$ (tolerance). Lastly, we output the slope $\bmmbf$, the Bayes confusion $\Cbar$, and the supporting hyperplane $\bell$ at that point.

\balgorithm[t]
\caption{Quasiconcave Metric Minimization}
\label{bin-alg:linearmin}
\balgorithmic[1]
\STATE Follow Algorithm~\ref{bin-alg:linear} except:
\STATE \textbf{Initialize:} $\theta_a = \pi$, $\theta_b = \frac{3\pi}{2}$.
\STATE \textbf{Invert Responses:} Replace oracle responses $C\prec C'$ with $C\succ C'$ and vice versa.
\ealgorithmic
\ealgorithm

\textbf{Algorithm 3.2.}
\emph{Minimizing quasiconvex metrics and finding supporting hyperplane at the optimum:} The same algorithm can be used for quasiconvex minimization with only two changes. First, we start with $\theta \in [\pi,\frac 32\pi]$, because the optimum will lie on the lower boundary $\partial\mathcal C_-$. Second, we check for $C\prec C'$ whenever Algorithm~\ref{bin-fig:lin-fr} checks for $C\succ C'$, and vice versa. Here, we output the counterparts, i.e., slope $\tmmbf$,  inverse Bayes Confusion matrix $\barbelow{C}$, and supporting hyperplane $\tell$.  

\vspace{-0.5cm}
\section{Metric Elicitation}
\label{bin-sec:me}
\vskip -0.25cm

In this section, we discuss how Algorithms~\ref{bin-alg:linear}, 3.2,  and~\ref{alg:grid-search} (discussed later) are used as subroutines to elicit LPMs and LFPMs. See Figure~\ref{fig:me} for a brief summary.

\vspace{-0.6cm}
\subsection{Eliciting LPMs}
\label{ssec:elicit_linear}
\vskip -0.2cm
Suppose that the oracle's metric is $ \varphi_{LPM} \ni \sphi = \smmbf$, where, WLOG, $\norm{\smmbf} =1$ and $m_0^* = 0$ (Section \ref{bin-sec:confusion}). 
Application of Algorithm~\ref{bin-alg:linear} to the oracle, who responds according to $\smmbf$, returns the maximizer and supporting hyperplane at that point. Since the true performance metric is linear, we take the elicited metric, $\hmmbf$, to be the slope of the resulting supporting hyperplane.

\begin{figure}[t] \centering \fbox{ \parbox{0.95\columnwidth}{ {\begin{small}
\textbf{LPM Elicitation} (True metric $\sphi = \smmbf$)\vspace{1pt}
\benumerate[noitemsep,nolistsep,leftmargin=*]
\item Run Algorithm~\ref{bin-alg:linear} to get $\Cbar^*$ and a hyperplane $\bell$.
\item Set the elicited metric to be the slope of $\bell$. \\
\eenumerate 
\vskip -.6 cm
\begin{tikzpicture}%
        \draw (6,0) +(0,0) -- +(9.6,0);
     \end{tikzpicture}

\textbf{LFPM Elicitation} (True metric $\sphi$)\vspace{1pt}
\benumerate[noitemsep,nolistsep,leftmargin=*]
\item Run Algorithm~\ref{bin-alg:linear} to get $\Cbar^*$, a hyperplane $\bell$, and SoE~\eqref{bin-eq:lin-fr-equi}.
\item Run Algorithm~3.2 to get $\barbelow{C}^*$, a hyperplane $\tell$, and SoE~\eqref{eq:lin-fr-equi-lower}.
\item Run the oracle-query independent Algorithm~\ref{alg:grid-search} to get the elicited metric, which satisfies both the SoEs.
\eenumerate
\vspace{2pt}
\end{small}
}
}}
\vskip -0.1cm
\caption{LPM and LFPM elicitation procedures.}
\label{fig:me}
\end{figure}
\subsection{Eliciting LFPMs}
\label{ssec:elicit_linearfrac}

An LFPM is given by \eqref{linear-fractional}, where $p_{11}, p_{00}, q_{11}$, and $q_{00}$ are not simultaneously zero. Also, it is bounded over $\Ccal$. As scaling and shifting does not change the linear-fractional form, \emph{w.l.o.g.}, we may take $\phi(C) \in [0, 1] \, \forall  C \in \Ccal$ with positive numerator and denominator.
\bassumption
Let $\phi \in \varphi_{LFPM}$ \eqref{linear-fractional}. We assume that $p_{11}, p_{00} \geq 0$, $p_{11} \geq q_{11}$, $p_{00} \geq q_{00}$, $p_0 = 0$, $q_0 = (p_{11} - q_{11})\zeta + (p_{00} - q_{00})(1 - \zeta)$, and $p_{11} + p_{00} = 1$.
\label{assump:sufficient}
\eassumption

\bprop
The conditions in Assumption \ref{assump:sufficient} are sufficient for $\phi \in \varphi_{LFPM}$ to be bounded in $[0,1]$ and simultaneously monotonically increasing in TP and TN.
\vskip -0.1cm
\label{bin-prop:sufficient}
\eprop

The conditions in Assumption~\ref{assump:sufficient} 
are reasonable as we want to elicit any unknown bounded, monotonically increasing LFPM. To no surprise, examples outlined in \eqref{ex:lf-examples} and Koyejo et al. \cite{koyejo2014consistent} satisfy these conditions. 
We first provide intuition for eliciting LFPMs (Figure~\ref{fig:me}). We obtain two hyperplanes: one at the maximizer on the upper boundary, and other at the minimizer on the lower boundary. This results in two nonlinear systems of equations (SoEs) having only one degree of freedom, but they are satisfied by the true unknown metric. 
Thus, the elicited metric is one where solutions to the two systems match pointwise on the confusion matrices. 
Formally, suppose that the oracle's metric is:

\begin{align}
\sphi(C) = \frac{\spone TP +  \spzero TN}{\sqone TP +  \sqzero TN +  \sqnot}.
\end{align}
Let $\btau^*$ and $\barbelow{\tau}^*$ be the maximum and minimum value of $\sphi$ over $\Ccal$, respectively, i.e., 
\begin{equation}
\underline{\tau}^*\leq \phi^*(C)\leq \overline{\tau}^* \; \forall \, C\in \Ccal.
\end{equation}
Under Assumption  \ref{bin-as:eta}, we have a hyperplane 
\begin{align}
\bell_f^* := (\spone - \btau^*\sqone)tp +  (\spone - \btau^*\sqone)tn = \btau^*\sqnot 
\label{eq:lf-support}
\end{align}
touching the set $\Ccal$ only at $(\oline{TP}^*, \oline{TN}^*)$ on the upper boundary $\partial \Ccal_+$.  
Similarly, we have a hyperplane
\bequation
{\tell}^*_f := (\spone - \ttau^* \sqone)tp +  (\spzero - \ttau^* \sqzero)tn = \ttau^* \sqnot,
\label{eq:lf-lower-support}
\eequation
which touches the set $\Ccal$ only at $(\barbelow{TP}^*, \barbelow{TN}^*)$ on the lower boundary $\partial \Ccal_-$. 
To help with intuition, see Figure \ref{bin-fig:lin-fr}(c). 
Since LFPM is quasiconcave, Algorithm \ref{bin-alg:linear} returns a hyperplane
${\bell := \bmone tp + \bmzero tn = \oline{C}_0}$,
where $\oline{C}_0 = \bmone \oline{TP}^* + \bmzero \oline{TN}^*$. This is equivalent to $\bell_f^*$ up to a constant multiple; therefore, the true metric is the solution to the following non-linear SoE:
\begin{align}
\spone - \btau^*\sqone = \alpha \bmone,  \spzero - \btau^*\sqzero = \alpha \bmzero,
  \btau^*\sqnot = \alpha \oline{C}_0,
\end{align}
where $\alpha \geq 0$, because LHS and $\oline{m}$'s are non-negative. Additionally, we ignore the case when $\alpha = 0$, since this would imply a constant $\phi$. 
Next, we may divide the above equations by $\alpha > 0$ on both sides so that all the coefficients $\oline{p}^*$'s and $\oline{q}^*$'s are factored by $\alpha$. This does not change $\sphi$; thus, the SoE becomes:
\begin{align}
\ppone - \btau^*\pqone =  \bmone,  \ppzero - \btau^*\pqzero =  \bmzero, 
 \btau^*\pqnot =  \oline{C}_0.
\label{bin-eq:lin-fr-equi}
\end{align}
Notice that none of the conditions in Assumption \ref{assump:sufficient} are changed except $\ppone + \ppzero = 1$. However, we may still use this condition to learn a constant $\alpha$ times the true metric, which does not harm the elicitation problem. 

As LFPM is also quasiconvex, Algorithm~3.2 gives a hyperplane 
${{\tell} := {\barbelow{m}}_{11} tp + {\barbelow{m}}_{00} tn = {\barbelow{C}}_0},$
where ${\barbelow{C}}_0 = {\barbelow{m}}_{11} {\barbelow{TP}}^* + {\barbelow{m}}_{00} {\barbelow{TN}}^*$. This is equivalent to ${\tell}^*_f$ up to a constant multiple; thus, the true metric is also the solution of the following SoE:
\begin{align}
\spone - {\ttau}^* \sqone = \gamma {\barbelow{m}}_{11}, 
\spzero - {\ttau}^* \sqzero = \gamma {\barbelow{m}}_{00},
 {\ttau}^* \sqnot = \gamma \barbelow{C}_0,
\end{align}
where $\gamma \leq 0$ since LHS is positive, but $\barbelow{m}$'s are negative. Again, we may assume $\gamma < 0$. By dividing the above equations by $-\gamma$ on both sides, all the coefficients ${p}^*$'s and ${q}^*$'s are factored by $-\gamma$. This does not change $\sphi$; thus, the system of equations becomes the following:
\begin{ceqn}
\begin{align}
\pppone - {\ttau}^* \ppqone =  {\barbelow{m}}_{11}, 
\pppzero - {\ttau}^* \ppqzero =  {\barbelow{m}}_{00}, 
  {\ttau}^* \ppqnot =  \barbelow{C}_0.
\label{eq:lin-fr-equi-lower}
\end{align}
\end{ceqn}
\bprop
Under Assumption~\ref{assump:sufficient}, knowing $p_{11}'$ solves the system of equations~\eqref{bin-eq:lin-fr-equi} as follows:
\begin{align}
	p_{00}' &= 1 - p_{11}', \, \nonumber q_0' = \oline{C}_0\frac{P'}{Q'}, \nonumber \\
	q_{11}' &= (p_{11}' - \bmone)\frac{P'}{Q'}, \, q_{00}' = (p_{00}' - \bmzero)\frac{P'}{Q'}, 
\end{align}
where $P' = p_{11}'\zeta + p_{00}'(1 - \zeta)$ and $Q' = P' + \oline{C}_0 - \bmone\zeta -  \bmzero(1 - \zeta)$. 
\label{pr:solvesystem}
\eprop

\balgorithm[t]
\caption{Grid Search for Best Ratio}
\label{alg:grid-search}
\balgorithmic[1]
\STATE \textbf{Input:} $k, \Delta$. 
\STATE \textbf{Initialize:} $\sigma_{opt} = \infty, p_{11, opt}' = 0$.
\STATE Generate $C_1,...,C_k$ on $\partial C_+$ and $\partial C_-$ (Section \ref{bin-sec:confusion}).
\FOR {($\ppone = 0$; $\ppone \leq 1$; $\ppone = \ppone + \Delta$)}
\STATE Compute $\pphi$, $\ppphi$ using Proposition \ref{pr:solvesystem}. Compute array $r = [\frac{\pphi(C_1)}{\ppphi(C_1)},...,\frac{\pphi(C_k)}{\ppphi(C_k)}]$. Set $\sigma = \text{std}(r).$\\
\STATE {\bf if }($\sigma < \sigma_{opt}$) Set $\sigma_{opt} = \sigma$ and $p_{11, opt}' = \ppone$.
\ENDFOR
\STATE\textbf{Output:} $p_{11, opt}'$.
\ealgorithmic
\ealgorithm
Now assume we know $p_{11}'$. Using Proposition~\ref{pr:solvesystem}, we may solve the system~\eqref{bin-eq:lin-fr-equi} and obtain a metric, say $\phi'$. System~\eqref{eq:lin-fr-equi-lower} can be solved analogously, provided we know $p_{11}''$, to get a metric, say $\phi''$. Notice that when $\sfrac{\spone}{\spzero}=\sfrac{\ppone}{\ppzero}=\sfrac{\pppone}{\pppzero}$, then $\sphi (C) = \phi'(C)/\alpha = -\phi''(C)/\gamma$. This means that when the  true ratios of $p$'s are known, then $\phi'$, $\phi''$ are constant multiples of each other. So, to know the true $\ppone$ (or, $\pppone$) is to search the grid $[0,1]$ and select the one where the ratios of $\phi'$ and $\phi''$ are constant on a number of confusion matrices. Since we can generate many confusion matrices on $\partial \Ccal_+$ and $\partial \Ccal_-$ (vary $\delta$ in Definition~\ref{def:boundary}), we can estimate the ratio $p_{11}'$ to $p_{00}'$ using grid search based Algorithm~\ref{alg:grid-search}. We may then use Proposition~\ref{pr:solvesystem} for the output of Algorithm~\ref{alg:grid-search} and set the elicited metric $\hphi = \pphi$.
Note that Algorithm~\ref{alg:grid-search} is independent of oracle queries and easy to implement, thus it is suitable for the purpose.
\vspace{-0.2cm}
\section{Guarantees}
\label{sec:noise}
In this section, we discuss guarantees for the elicitation procedures (Section~\ref{bin-sec:me}) in the presence of (a) confusion matrices' estimation noise from finite samples and (b) oracle feedback noise with the following notion that is borrowed from Definition~\ref{me-def:noise}.
\bdefinition
Oracle Feedback Noise $(\epsilon_\Omega\geq0)$: 
The oracle may provide wrong answers whenever 
$|\phi(C)-\phi(C')|<\epsilon_\Omega$. Otherwise, it provides correct answers. 
\edefinition
Simply put, if the confusion matrices are close as measured by $\phi$, then the oracle responses can be wrong. Moving forward to the guarantees, we make two assumptions which hold in most common settings.
\begin{assumption}\label{as:sup-norm-convergence}
	Let $\{\hat \eta_i(x)\}_{i=1}^{n}$ be a sequence of estimates of $\eta(x)$ depending on the sample size. 
	We assume that $\Vert\eta - \hat \eta_i\Vert_\infty \stackrel{P}{\to} 0$.
\end{assumption}
\begin{assumption}\label{as:low-weight-around-opt}
	For quasiconcave $\phi$, recall that the Bayes classifier 
	is of the form $h = \1[\eta(x)\geq \delta]$. Let $\oline{\delta}$ be the threshold that maximizes $\phi$. 
    We assume that the probability that $\eta(X)$ lies near $\oline{\delta}$ is bounded from below and above. Formally, 
    \bequation
    k_0 \nu \leq \Pmbb\left[(\oline{\delta}-\eta(X))\in[0,\nu]\right],
    \Pmbb\left[(\eta(X)-\oline{\delta})\in[0,\nu]\right]\leq k_1 \nu
    \eequation
    for any $0<\nu\leq\frac{2}{k_0}\sqrt{k_1\epsilon_\Omega}$ and some $k_1 \geq k_0 > 0$.
\end{assumption}
Assumption \ref{as:sup-norm-convergence} is arguably natural, as most estimation is parametric, where the function classes are sufficiently well behaved. Assumption \ref{as:low-weight-around-opt} ensures 
that near the optimal threshold $\oline{\delta}$, the values of $\eta(X)$ have bounded density. 
In other words, when $X$ has no point mass, the slope of $\eta(X)$ where it attains the optimal threshold $\oline{\delta}$ is neither vertical nor horizontal. 
We start with guarantees for the algorithms in their respective tasks.
\begin{theorem}\label{thm:quasi}
Given $\epsilon,\epsilon_\Omega \geq 0$ and a 1-Lipschitz metric $\phi$ that is monotonically increasing in TP, TN. If it is quasiconcave (quasiconvex) then Algorithm \ref{bin-alg:linear} (Algorithm~3.2) finds an approximate maximizer $\Cbar$ (minimizer $\barbelow{C}$). Furthemore, $(i)$ the algorithm returns the supporting hyperplane at that point, $(ii)$ 
the value of $\phi$ at that point is within $O(\sqrt{\epsilon_\Omega} +  \epsilon)$ of the optimum, and $(iii)$ the number of queries is $O(\log\frac1\epsilon)$.
\end{theorem}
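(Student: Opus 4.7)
The plan is to separate the three claims, handle noiseless correctness first via the unimodality granted by Lemma~\ref{lem:quasiconcave}, then quantify how finite noise tolerance $\epsilon$ and oracle feedback noise $\epsilon_\Omega$ degrade the output.

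\textbf{Query complexity (iii).} This is immediate from the loop structure. Each iteration partitions $[\theta_a,\theta_b]$ into four equal pieces with five knots $\theta_a,\theta_c,\theta_d,\theta_e,\theta_b$, performs four comparisons, and in each branch of lines~8--12 replaces the interval by one of length $(\theta_b-\theta_a)/2$. Starting from a window of length $\pi/2$, after $O(\log(1/\epsilon))$ iterations the window has length at most $\epsilon$, at a cost of $4$ queries per iteration. Thus the total query count is $O(\log(1/\epsilon))$.

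\textbf{Noiseless correctness and the hyperplane (i).} By Lemma~\ref{lem:quasiconcave}, the map $\theta\mapsto \phi\big(\rho^+(\theta)\big)$ on the upper boundary (parametrized by slope angle as in \eqref{set:lfpm}) is unimodal; Proposition~\ref{pr:bayeslinear} and the construction in line~5 realize $\rho^+(\theta)$ as the Bayes confusion matrix $\Cbar_\theta$ for the LPM with slope $\mmbf(\theta)$. I would then verify case by case that each branch in lines~8--12 is the correct ``trinary-search'' rule for a unimodal function: whenever three consecutive query values are strictly ordered, the mode cannot lie in one of the four sub-intervals and that sub-interval is discarded. At termination, $\theta_d$ lies within $\epsilon$ of the true maximizer $\theta^\star$, so $\Cbar_{\theta_d}$ lies within $O(\epsilon)$ of $\Cbar^\star$ (Assumption~\ref{as:low-weight-around-opt} with the bounded density bound $k_1$ making the map $\theta\mapsto \Cbar_\theta$ Lipschitz near $\theta^\star$). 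Since $\Cbar_{\theta_d}$ is by Proposition~\ref{pr:bayeslinear} the unique boundary point where the line with slope $\bmmbf$ is tangent (Proposition~\ref{pr:strict-convex}, strict convexity), the returned line $\bell$ is a genuine supporting hyperplane at $\Cbar$.

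\textbf{Robustness to feedback noise and the $\sqrt{\epsilon_\Omega}$ rate (ii).} The key quantitative step is a quadratic lower bound for the metric's decay from the optimum. Let $\bar\delta$ be the optimal threshold of Assumption~\ref{as:low-weight-around-opt}, and consider a classifier $\1[\eta(X)\geq \bar\delta+\nu]$ for small $\nu>0$. Assumption~\ref{as:low-weight-around-opt} implies that the resulting confusion matrix satisfies $\|\Cbar_{\bar\delta+\nu}-\Cbar^\star\|\in[\Theta(\nu),\Theta(\nu)]$ with constants $k_0,k_1$. Because $\phi$ attains its optimum on the upper boundary at $\bar\delta$, the directional derivative of $\phi\circ\rho^+$ in $\nu$ vanishes there, giving a quadratic-type lower bound $|\phi(\Cbar^\star)-\phi(\Cbar_{\bar\delta\pm\nu})|\geq c\,\nu^2$ for some $c=c(k_0,k_1)>0$, valid for $\nu\le \tfrac{2}{k_0}\sqrt{k_1\epsilon_\Omega}$. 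Consequently, oracle errors (which occur only when $|\phi(C)-\phi(C')|<\epsilon_\Omega$) can only happen for query pairs within threshold distance $O(\sqrt{\epsilon_\Omega})$ of $\bar\delta$. Outside this region the binary search proceeds as in the noiseless case and contracts correctly. Inside it, any $\theta$ the algorithm returns still corresponds to a $\Cbar$ with $\|\Cbar-\Cbar^\star\|=O(\sqrt{\epsilon_\Omega})$, hence by 1-Lipschitzness $|\phi(\Cbar)-\phi(\Cbar^\star)|=O(\sqrt{\epsilon_\Omega})$. Combining with the resolution error gives the advertised $O(\sqrt{\epsilon_\Omega}+\epsilon)$ bound. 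The quasiconvex/minimization analogue for Algorithm~3.2 follows verbatim after replacing the upper boundary parametrization $\rho^+$ with $\rho^-$ and inverting the preference comparisons.

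\textbf{Main obstacle.} The delicate part is propagating the two error sources through the case analysis of lines~8--12: comparisons near the peak may be flipped by the oracle, and a single flipped comparison can push the algorithm into the wrong sub-interval. I would address this by showing that any such flip is confined to a $\sqrt{\epsilon_\Omega}$-neighborhood of $\theta^\star$ (using the quadratic lower bound), so the discarded half-interval always retains at least one point within $O(\sqrt{\epsilon_\Omega})$ of $\theta^\star$; the inductive invariant ``every current $[\theta_a,\theta_b]$ contains a $\theta$ whose $\phi$-value is within $O(\sqrt{\epsilon_\Omega})$ of optimum'' then carries through all iterations. Finite-sample estimation of $\eta$ (Assumption~\ref{as:sup-norm-convergence}) contributes only a lower-order perturbation by continuity of $\Cbar_\theta$ in $\eta$, and can be folded into $\epsilon_\Omega$.
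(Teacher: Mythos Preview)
Your overall architecture matches the paper's: (iii) is immediate from the halving loop; (i) follows from the supporting-hyperplane parametrization together with unimodality via Lemma~\ref{lem:quasiconcave}; and for (ii) both you and the paper argue that oracle errors are confined to a threshold-window of width $O(\sqrt{\epsilon_\Omega})$ around $\bar\delta$, then convert that window to a metric gap via Assumption~\ref{as:low-weight-around-opt} and $1$-Lipschitzness. Your inductive invariant for propagating flipped comparisons through the case analysis of lines~8--12 is in fact a cleaner formulation than the paper's informal ``binary search will again put us back in the correct direction, once we leave the misreporting region.''

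The one genuine gap is your justification of the quadratic lower bound. ``Directional derivative of $\phi\circ\rho^+$ vanishes at the optimum'' does not by itself yield $|\phi(\Cbar^\star)-\phi(\Cbar_{\bar\delta\pm\nu})|\geq c\nu^2$ for a general quasiconcave $1$-Lipschitz $\phi$: first-order flatness without a curvature hypothesis gives nothing (consider a piecewise-linear tent). The paper does not argue this way. It defers this step to the proof of Theorem~\ref{thm:linear}, where for a \emph{linear} metric $\phi(C)=m_{11}TP+m_{00}TN$ one computes explicitly
\[
\phi(C_{\bar\delta})-\phi(C_{\delta'}) \;=\; \int_{x:\,\delta'\le\eta(x)\le\bar\delta}\big[m_{00}(1-\eta(x))-m_{11}\eta(x)\big]\,df_X,
\]
and since $\bar\delta=m_{00}/(m_{11}+m_{00})$ the integrand equals $(m_{11}+m_{00})(\bar\delta-\eta(x))$. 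Restricting the integration to $\eta(x)\in[\bar\delta-\nu,\,\bar\delta-\tfrac{k_0}{2k_1}\nu]$ and using \emph{both} bounds in Assumption~\ref{as:low-weight-around-opt}---the lower bound $k_0\nu/2$ for the probability mass of that sub-interval and the pointwise integrand lower bound $(m_{11}+m_{00})\tfrac{k_0}{2k_1}\nu$---gives $\phi(C_{\bar\delta})-\phi(C_{\delta'})\geq \tfrac{k_0^2}{4k_1}\nu^2$. The quadratic thus arises from (integrand of order $\nu$) $\times$ (mass of order $\nu$), an explicit computation rather than a Taylor argument. Strictly speaking this establishes the $\sqrt{\epsilon_\Omega}$ bound only for LPMs; the paper then invokes it in the general statement without further comment, which suffices for everything used downstream (LPM and LFPM elicitation). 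Your proof should either make the same restriction explicit or add a curvature hypothesis on $\phi$ at the optimum.
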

\blemma
\label{lem:lower-bound}
Under our model, no algorithm can find the maximizer (minimizer) in fewer than~$O(\log\frac1\epsilon)$ queries.
\elemma
Theorem~\ref{thm:quasi} and Lemma~\ref{lem:lower-bound}, guarantee that Algorithm~\ref{bin-alg:linear} (Algorithm~3.2), for a quasiconcave (quasiconvex) metric, finds a confusion matrix and a hypeplane which is close to the true maximizer (minimizer) and its associated supporting hyperplane, using just the optimal number of queries. Further, since binary search always tends towards the optimal whenever responses are correct, the algorithms necessarily terminate within a confidence interval of the true maximizer. Thus, we can take $\epsilon$ sufficiently small so that the only error that arises is due to the feedback noise $\epsilon_\Omega$. Now, we present our main result which guarantees effective LPM elicitation. 
Guarantees in LFPM elicitation follow naturally as discussed in the proof of Theorem~\ref{thm:linear}  (Appendix~\ref{appendix:proofs}).
\btheorem\label{thm:linear}
Let $\varphi_{LPM} \ni \sphi = \smmbf$ be the true performance metric. Under Assumption~\ref{as:low-weight-around-opt}, given $\epsilon > 0$, LPM elicitation (Section~\ref{ssec:elicit_linear}) outputs a 
performance metric $\hphi = \hmmbf$, such that $\norm{\smmbf - \hmmbf}_\infty \leq \sqrt{2}\epsilon + \frac 2{k_0}\sqrt{2k_1\epsilon_\Omega}$.
\etheorem

So far, we assumed access to the confusion matrices. However, in practice, we need to estimate them using samples $\{(x_i, y_i)\}_{i=1}^{n}$. 
We now discuss robustness of the algorithms working with samples.
 Recall that, as a standard consequence of Chernoff-type bounds~\cite{boucheron2013concentration}, sample estimates of true-positive and true-negative are consistent estimators. 
Therefore, with high probability, we can estimate the confusion matrix within any desired tolerance, provided we have sufficient samples.  This implies that we can also estimate the $\phi$ values within any tolerance since LPM and LFPM are 1-Lipschitz due to \eqref{set:lfpm} and  Assumption \ref{assump:sufficient}, respectively. Thus, with high probability, the elicitation procedures gather correct oracle's preferences within feedback noise $\epsilon_\Omega$.
Further,  
we may prove the following lemma which allow us to control the 
 error in optimal classifiers from using the  
 estimated $\hat\eta(x)$ rather than the true $\eta(x)$.

\begin{lemma}\label{lem:sample-Cs-optimize-well}
Let $h_{\theta}$ and $\hat h_{\theta}$ be two classifiers estimated using $\eta$ and $\hat\eta$, respectively. 
Further, let ${\oline{\theta}}$ be such that $h_{{\oline{\theta}}} = \argmax_{\theta}\phi(h_{\theta})$. Then
	${\Vert C(\hat h_{{\oline{\theta}}}) - C(h_{{\oline{\theta}}}) \Vert_\infty=O( \Vert {\hat\eta}_n-\eta\Vert_\infty})$.
\end{lemma}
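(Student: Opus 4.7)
The plan is to show that the two classifiers $h_{\oline{\theta}}$ and $\hat h_{\oline{\theta}}$ disagree only on a small set of inputs, then use the fact that each confusion matrix entry is a probability over the set of inputs on which they disagree.

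\textbf{Step 1 (Reduce to disagreement probability).} Since $C(h)$ has entries of the form $\Pmbb(Y = i, h(X) = j)$, for any pair of classifiers $h, \hat h$ we have
\begin{equation*}
| C_{ij}(h) - C_{ij}(\hat h) | \;\le\; \Pmbb\bigl( h(X) \neq \hat h(X) \bigr).
\end{equation*}
Hence it suffices to bound $\Pmbb(h_{\oline{\theta}}(X) \neq \hat h_{\oline{\theta}}(X))$ by $O(\|\hat\eta_n - \eta\|_\infty)$ in order to conclude the $\ell_\infty$ bound on $C(\hat h_{\oline\theta}) - C(h_{\oline\theta})$.

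\textbf{Step 2 (Threshold form at the optimum).} By Proposition~\ref{pr:bayeslinear} together with the boundary characterization in Definition~\ref{def:boundary}, an optimizer of a quasiconcave, coordinatewise monotone metric over $\mathcal{C}$ is achieved on $\partial\mathcal{C}_+$, so $h_{\oline{\theta}}(x) = \mathbf{1}[\eta(x) \ge \oline{\delta}]$ for some threshold $\oline{\delta} \in [0,1]$, and likewise $\hat h_{\oline{\theta}}(x) = \mathbf{1}[\hat\eta(x) \ge \oline{\delta}]$ (the plug-in version uses the same threshold).

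\textbf{Step 3 (Disagreement requires $\eta$ near $\oline\delta$).} Set $\epsilon_n := \|\hat\eta_n - \eta\|_\infty$. If $h_{\oline\theta}(x) \neq \hat h_{\oline\theta}(x)$, then $\eta(x)$ and $\hat\eta_n(x)$ lie on opposite sides of $\oline\delta$, which together with $|\eta(x) - \hat\eta_n(x)| \le \epsilon_n$ forces $|\eta(x) - \oline{\delta}| \le \epsilon_n$. Therefore
\begin{equation*}
\Pmbb\bigl( h_{\oline\theta}(X) \neq \hat h_{\oline\theta}(X) \bigr) \;\le\; \Pmbb\bigl( |\eta(X) - \oline\delta| \le \epsilon_n \bigr).
\end{equation*}

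\textbf{Step 4 (Apply the density assumption).} By Assumption~\ref{as:low-weight-around-opt}, for $\epsilon_n$ small enough (which holds eventually because $\|\hat\eta_n - \eta\|_\infty \xrightarrow{P} 0$ by Assumption~\ref{as:sup-norm-convergence}),
\begin{equation*}
\Pmbb\bigl( |\eta(X) - \oline\delta| \le \epsilon_n \bigr) \;=\; \Pmbb(\oline\delta - \eta(X) \in [0,\epsilon_n]) + \Pmbb(\eta(X) - \oline\delta \in [0,\epsilon_n]) \;\le\; 2 k_1 \epsilon_n.
\end{equation*}
Combining with Step 1 gives $\|C(\hat h_{\oline\theta}) - C(h_{\oline\theta})\|_\infty \le 2 k_1 \|\hat\eta_n - \eta\|_\infty$, which is the claimed $O(\|\hat\eta_n - \eta\|_\infty)$ rate. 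The only subtle point is invoking Assumption~\ref{as:low-weight-around-opt} with the correct argument $\nu = \epsilon_n$, and confirming that the interval constraint $\nu \le \frac{2}{k_0}\sqrt{k_1 \epsilon_\Omega}$ is satisfied in the asymptotic regime; otherwise the argument is purely geometric and follows immediately from the threshold structure of the Bayes classifier.
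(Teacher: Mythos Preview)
Your proof is correct and follows essentially the same approach as the paper: both exploit the threshold structure $h_{\oline\theta}=\1[\eta\ge\oline\delta]$, observe that disagreement between $h_{\oline\theta}$ and $\hat h_{\oline\theta}$ forces $\eta(x)$ to lie within $\|\hat\eta_n-\eta\|_\infty$ of $\oline\delta$, and then invoke Assumption~\ref{as:low-weight-around-opt} to bound the probability of that event. The only cosmetic difference is that the paper bounds $TP(h_{\oline\theta})-TP(\hat h_{\oline\theta})$ directly as an integral over the disagreement region (splitting into ``loss'' and ``gain'' cases), whereas your Step~1 first reduces every confusion entry to the single disagreement probability $\Pmbb(h_{\oline\theta}\ne\hat h_{\oline\theta})$; this is a slightly cleaner packaging of the same idea and yields the identical constant $2k_1$.
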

The errors due to using $\hat\eta$, instead of true $\eta$ may propel in the results discussed earlier, however, only in the bounded sense. This shows that our elicitation approach is robust to feedback and finite sample noise. 
\vspace{-0.5cm}
\section{Experiments}
\label{experiments}
\vskip -0.1cm

In this section, we empirically validate the theory and investigate the sensitivity due to sample estimates.\footnote{A subset of results is shown here. Please refer Appendix~\ref{appendix:experiments} for extended set of results.}

\vspace{-0.5cm}
\subsection{Synthetic Data Experiments}
\label{bin-ssec:theoryexp}

\begin{table}
	\caption{LPM elicitation at tolerance $\epsilon = 0.02$ radians. 
	}
	\label{bin-tab:LPMtheory}
	\vspace{-0.5cm}
	\begin{center}
		\begin{small}
				\begin{tabular}{|c|c|c|c|}
					\hline
					  $\sphi = \smmbf$ & $\hphi = \hmmbf$ & $\sphi = \smmbf$ & $\hphi = \hmmbf$ \\ \hline 
					(0.98,0.17) & (0.99,0.17) & (-0.94,-0.34) & (-0.94,-0.34) \\
					(0.64,0.77) & (0.64,0.77) & (-0.50,-0.87) & (-0.50,-0.87)  \\
					\hline
				\end{tabular}
		\end{small}
	\end{center}
\end{table}

\begin{table*}
	\caption{LFPM Elicitation for synthetic distribution (Section \ref{bin-ssec:theoryexp}) and Magic (\textsc{M}) dataset  (Section \ref{bin-ssec:realexp}). $\alpha$ and $\sigma$ are the mean and standard deviation of $\sfrac{\hphi}{\sphi}$ evaluated over a subset of confusion matrices used in Algorithm~\ref{alg:grid-search}.   
}
	\label{tab:LFPMtheoryreal}
	\vspace{-0.5cm}
	\begin{center}
		\begin{small}
            \resizebox{\textwidth}{!}{%
			\begin{tabular}{|c|c|c|c|c|c|c|}
				\hline
                
				True Metric & \multicolumn{3}{|c|}{Results on Synthetic Distribution (Section \ref{bin-ssec:theoryexp})} & \multicolumn{3}{|c|}{Results on Real World Dataset \textsc{M}  (Section \ref{bin-ssec:realexp})} \\ \hline
				$(\spone, \spzero),  (\sqone, \sqzero, \sqnot)$ & $(\hpone, \hpzero),  (\hqone, \hqzero, \hqnot)$ & $\alpha$ & $\sigma$ & $(\hpone, \hpzero),  (\hqone, \hqzero, \hqnot)$ & $\alpha$ & $\sigma$ \\
				\hline
				(1.00,0.00),(0.50,-0.50,0.50) & (1.00,0.00),(0.25,-0.75,0.75) & 0.92 & 0.03  & (1.00,0.00),(0.25,-0.75,0.75) & 0.90 & 0.06 \\
				(0.20,0.80),(-0.40,-0.20,0.80) & (0.12, 0.88),(-0.43, 0.002, 0.71) & 1.02 & 0.006 & (0.19,0.81),(-0.38,-0.13,0.70) & 1.02 & 0.004  \\
				\hline
			\end{tabular}}
		\end{small}
	\end{center}
\end{table*}

\begin{figure*}[t]
	\centering 
	\subfigure[Table \ref{tab:LFPMtheoryreal}, line 1, col 2]{
		{\includegraphics[width=3.6cm]{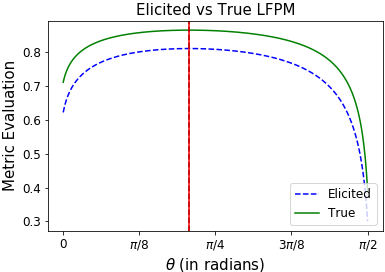}}
		\label{fig:lfpm_1}
	}
	\subfigure[Table \ref{tab:LFPMtheoryreal}, line 2, col 2]{
		{\includegraphics[width=3.6cm]{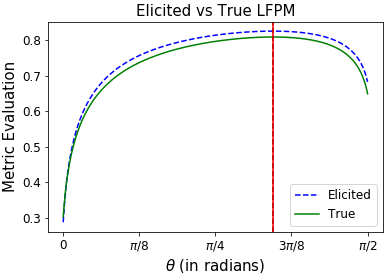}}
		\label{fig:lfpm_6}
	}
	\subfigure[Table \ref{tab:LFPMtheoryreal}, line 1, col 5]{
		{\includegraphics[width=3.6cm]{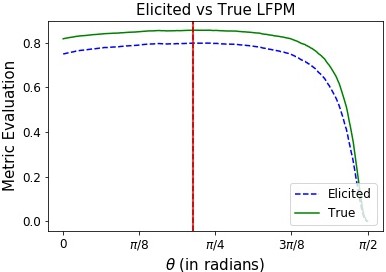}}
    \label{fig:lfpm_1_magic}
    }
    \subfigure[Table \ref{tab:LFPMtheoryreal}, line 2, col 5]{
		{\includegraphics[width=3.6cm]{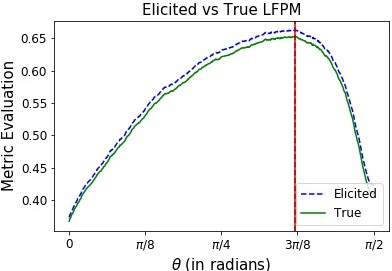}}
		\label{fig:lfpm_6_magic}
	}
	\vskip -0.2cm
	\caption{True (solid green) and elicited (dashed blue) LFPMs  for synthetic distribution and dataset M from Table~\ref{tab:LFPMtheoryreal}. 
	The solid red and coinciding dashed black vertical lines are \emph{argmax} of the true and elicited metric, respectively. 
}
	\label{fig:lfpm-theory}
\end{figure*}

We assume a joint probability for $\Xcal = [-1,1]$ and $\Ycal = \{0, 1\}$ given by $f_X = \Umbb[-1,1]$ and $\eta(x) = \frac{1}{1 + e^{ax}}$, where $\Umbb[-1,1]$ is the uniform distribution on $[-1, 1]$, and $a$ is a parameter controlling the degree of noise in the labels. We fix $a = 5$ in our experiments. To verify LPM elicitation, we first define a true metric $\sphi$. 
This specifies the query outputs in line 6 of Algorithm \ref{bin-alg:linear} (Algorithm~3.2). Then we run LPM elicitation procedure (Section~\ref{ssec:elicit_linear}) to check whether or not we compute the same metric. Some results are shown in Table \ref{bin-tab:LPMtheory}. We elicit the true metrics even for $\epsilon = 0.02$ radians.

Next, we elicit LFPM. We define a true metric $\sphi$ by $\{(\spone, \spzero),  (\sqone, \sqzero, \sqnot)\}$. Then we follow the LFPM elicitation procedure (Section~\ref{ssec:elicit_linearfrac}), where Algorithms~\ref{bin-alg:linear} and 3.2 are run with $\epsilon=0.05$ and Algorithm~\ref{alg:grid-search} is run with $k = 2000$ and $\Delta = 0.01$. The elicited metric $\hat{\phi}$ is denoted by $\{(\hpone, \hpzero), (\hqone, \hqzero, \hqnot)\}$ and presented in Table~\ref{tab:LFPMtheoryreal} (Column 2). 
We also present mean ($\alpha$) and standard deviation ($\sigma$) of the ratio of the elicited metric $\hphi$ to the true metric $\sphi$ over a subset of confusion matrices (columns 3 and 4). 
For improved comparisons, Figure~\ref{fig:lfpm-theory} shows the true and elicited metrics evaluated on selected pairs of $(TP, TN) \in \partial\Ccal_+$. The metrics are plotted together after sorting the slope parameter $\theta$. Clearly, the elicited metric is a constant multiple of the true metric. We also see that the \emph{argmax} of the true and elicited metric coincide, thus validating Theorem~\ref{thm:quasi}.

\vspace{-0.5cm}
\subsection{Real-World Data Experiments}
\label{bin-ssec:realexp}

Now, we validate the elicitation procedures with two real-world datasets. 
The datasets are: (a) Breast Cancer (BC) Wisconsin Diagnostic dataset \cite{street1993nuclear} containing 569 instances, and (b) Magic (M) dataset \cite{dvovrak2007softening} containing 19020 instances. For both the datasets, we standardize the features and split the data into two parts $\Scal_1$ and $\Scal_2$. On $\Scal_1$, we learn the estimator $\hat{\eta}$ using regularized logistic regression model. 
We use $\Scal_2$ for making predictions and computing sample confusion matrices. 

We randomly selected twenty-eight LPMs by choosing $\theta^*$ ($\smmbf)$. 
We then used Algorithm~\ref{bin-alg:linear} (Algortihm~3.2) with different tolerance $\epsilon$ and for different datasets and recovered the estimate $\hat{\mmbf}$ using LPM elicitation. In Table~\ref{tab:app:LPMreal} of Appendix~\ref{appendix:experiments}, we report the proportion of the number of times when our procedure failed to recover the true ${\smmbf}$. We see improved elicitation for dataset $M$, suggesting that ME improves with larger datasets. In particular, for dataset $M$, we elicit all the metrics within threshold $\epsilon = 0.11$ radians. We also observe that $\epsilon = 0.02$ is an overly tight tolerance for both the datasets leading to many failures. This is because the elicitation routine gets stuck at the closest achievable confusion matrix from finite samples, which need not be optimal within the given (small) tolerance. 

Next, we evaluate LFPM elicitation using dataset $M$. We define the same true metrics and follow the same LFPM elicitation process as defined in Section~\ref{bin-ssec:theoryexp}. 
In Table \ref{tab:LFPMtheoryreal} (columns 5, 6, and 7), we present the elicitation results along with mean $\alpha$ and standard deviation $\sigma$ of the ratio of the elicited metric and the true metric. 
We also show the true and elicited metrics evaluated on the selected pairs of $(TP, TN) \in \partial\Ccal_+$ in Figure~\ref{fig:lfpm-theory}, ordered by the parameter $\theta$. We see that the elicited metrics are equivalent to the true metrics up to a constant. 
\vspace{-0.5cm}
\section{Related Work}
\label{sec:discussion}

Our work may be compared to ranking from pairwise comparisons \cite{wauthier2013efficient}. However, we note that our results depend on novel geometric ideas on the space of confusion matrices. Thus, instead of a ranking problem, we show that ME in standard models can be reduced to just finding the maximizer (and minimizer) of an unknown function which in turn yields the true metric -- resulting in low query complexity. A direct ranking approach adds unnecessary complexity to achieve the same task. Further, in contrast to our approach, most large margin ordinal regression based ranking \cite{herbrich2000large} fail to control which samples are queried. There is another line of work, which actively controls the query samples for ranking, e.g., \cite{jamieson2011active}. However, to our knowledge, this requires that the number of objects is finite and finite dimensional -- thus cannot be directly applied to ME without significant modifications, e.g. exploiting confusion matrix properties, as we have. Learning a performance metric which correlates with human preferences has been studied before \cite{janssen2007meta, peyrard2017learning}; however, these studies learn a regression function over some predefined features which is fundamentally different from our problem. Lastly, while \cite{caruana2004data, ferri2009experimental} address how one might qualitatively choose between metrics, none addresses our central contribution -- a principled approach for eliciting the ideal metric from user feedback.

\vspace{-0.5cm}
\section{Concluding Remarks}
\label{sec:conclusion}
 
We conceptualize \emph{metric elicitation} for the binary classification setup and elicit linear and linear-fractional metrics using preference feedback over pairs of classifiers. We propose provably query efficient and robust algorithms to elicit metrics that exploit key geometric properties of the set of confusion matrices associated with the binary classification tasks. 
\chapter{Multiclass Classification Performance Metric Elicitation}
\label{chp:multiclass}


Conceptually, Metric Elicitation (ME) is applicable to any learning setting. However, the  proposed methods in the previous chapter were limited to eliciting binary classification performance metrics. This chapter extends the previous work by proposing ME strategies for the more complicated multiclass classification setting -- thus significantly increasing the use cases for ME. 
Similar to the binary case, we consider the most common families of performance metrics which are functions of the confusion matrix~\cite{narasimhan2015consistent}, which is our choice of measurement space in this chapter; however, in this case, the elements of the confusion matrix summarize multiclass error statistics.

In order to perform efficient multilcass performance metric elicitation, we study novel geometric properties of the space of multiclass confusion matrices. 
Our analysis reveals that due to structural differences between the space of binary and multiclass confusions, we can not trivially extend the elicitation procedure used for binary to the multiclass case.
Instead, we provide novel strategies for eliciting linear functions of the multiclass confusion matrix and extend elicitation to more complicated yet popular functional forms such as linear-fractional functions of the confusion matrix elements~\cite{narasimhan2018learning}. Specifically, the elicitation procedures involve binary-search type algorithms that are robust to both finite sample and oracle feedback noise. In addition, the proposed methods can be applied either by querying pairwise classifier preferences or pairwise confusion matrix preferences.  

\begin{table}[t]
    \centering
    \caption{The Bayes Optimal (BO) and Restricted-Bayes Optimal (RBO) entities.}
    \begin{tabular}{|c|c|}\hline
      Name & Definition \\ \hline
        BO confusion $\cmbfbar$  over a subset $\Scal \subseteq \Ccal$ 
        & $ \displaystyle\argmax_{\cmbf \in \Scal \subseteq \Ccal}\phi(\cmbf)$ \\
        RBO classifier $\hbar_{k_1, k_2}$
        & $\displaystyle\argmax_{h \in \Hcal_{k_1, k_2}}\psi(\dmbf(h))$ \\
        RBO diagonal confusion $\dmbfbar_{k_1, k_2}$ & $\displaystyle\argmax_{\dmbf \in \Dcal_{k_1, k_2}}\psi(\dmbf)$ \\
        \hline
      \end{tabular}
    \label{tab:bayes}
\end{table}

In summary, our main contributions are novel query efficient metric elicitation algorithms for multiclass classification. We first study ME for linear functions of the confusion matrix and then discuss extensions to more complicated functional forms such as the linear-fractional and arbitrary monotonic functions of the confusion matrix. Lastly, we show that the proposed procedures are robust to finite sample and feedback noise, thus are useful in practice. 
All the proofs in this chapter are provided in Appendix~\ref{apx:multiclass}.

\textbf{Notation.} Matrices and vectors are denoted by bold upper case and bold lower case letters, respectively.  Recall that, given a matrix $\Ambf$, $\offdiag(\Ambf)$ returns a vector of off-diagonal elements of $\Ambf$ in row-major form, and $\diag(\Ambf)$ returns a vector of diagonal elements of $\Ambf$. $\norm{\cdot}_1$, $\norm{\cdot}_2$,  and $\norm{\cdot}_\infty$ denote the $\ell_1$-norm,  $\ell_2$-norm, and $\ell_\infty$-norm, respectively. 
\section{Preliminaries}
\label{sec:preliminaries}

The standard multiclass classification setting comprises $k$ classes with $X \in \Xcal$ and $Y \in [k]$ representing the input and output random variables, respectively. We have access to a dataset of size $n$ denoted by $\{(\xmbf, y)_i\}_{i=1}^n$, 
generated \emph{iid} from a distribution $ \Pmbb(X, Y)$. 
Let $\eta_i(\xmbf) = \Pmbb(Y = i | X = \xmbf)$ and $\zeta_i = \Pmbb(Y = i)$ for $i \in [k]$ be the conditional and the unconditional probability of the $k$ classes, respectively. 
Let $\Hcal = \{h : \Xcal \rightarrow \Delta_k\}$ be the set of all classifiers. A confusion matrix for a classifier $h$ is denoted by $\Cmbf(h, \Pmbb) \in \Rmbb^{k \times k}$, where its elements are given by:
\vspace{-0.1cm}
\begin{align}
	C_{ij}(h, \Pmbb) = \Pmbb(Y = i, h = j) \quad \text{for} \; i, j \in [k].
	\label{mult-eq:components}
\end{align}
\vskip -0.1cm
Under the population law $\Pmbb$, it is useful to keep the following decomposition in mind: 
\vspace{-0.1cm}
\begin{equation}
    \Pmbb(Y = i, h = i) = \zeta_i - \Pmbb(Y = i, h \neq i) \implies \Cii(h, \Pmbb) = \zeta_i - \sum_{j=1,j\neq i}^k \Cij(h, \Pmbb).
    \label{mult-eq:decomp}
\end{equation} 
\vskip -0.1cm
Using this decomposition, any confusion matrix is uniquely represented by its $q \coloneqq (k^2 - k)$ off-diagonal elements. Hence, we will represent a confusion matrix $\Cmbf(h, \Pmbb)$ by a vector $\cmbf(h, \Pmbb) = \offdiag(\Cmbf(h, \Pmbb))$, and interchangeably refer the confusion matrix as a vector of \emph{`off-diagonal confusions'}. The space of off-diagonal confusions is denoted by 
\bequation
\Ccal = \{\cmbf(h, \Pmbb) = \offdiag(\Cmbf(h, \Pmbb)) : h \in \Hcal \}.
\eequation
For clarity, we will suppress the dependence on $\Pmbb$ and $h$ if it is clear from the context.

Performance of a classifier is often determined by just the misclassification and not the type of misclassification, especially when the number of classes is large. Therefore, we will also consider metrics that only depend on correct and incorrect predictions, namely $\Pmbb(Y = i, h = i)$ and $\Pmbb(Y = i, h \neq i)$. 
Following the decomposition in~\eqref{mult-eq:decomp}, such metrics require only the diagonal elements of the original confusion matrices. Given a confusion  matrix $\Cmbf$, we will denote its diagonal by $\dmbf = \diag(\Cmbf)$ and refer it as the vector of \emph{`diagonal confusions'}. The space of diagonal confusions is represented by 
\bequation
\Dcal = \{\dmbf = \diag(\Cmbf(h)) : h \in \Hcal\}.
\eequation

Let $\phi : [0, 1]^{q}  \rightarrow \Rmbb$ and $\psi : [0, 1]^{k} \rightarrow \Rmbb$ be the performance metrics for a classifier $h$ determined by its corresponding off-diagonal and diagonal  confusion entries $\cmbf(h)$ and $\dmbf(h)$, respectively. Without loss of generality (w.l.o.g.), we assume the metrics $\phi$ and $\psi$ are utilities so that larger values are preferred. Furthermore, the metrics are scale invariant as global scale does not affect the learning problem~\cite{narasimhan2015consistent}. 
For this chapter, we assume the following regularity assumption on the data distribution.
\bassumption\label{assumption:eta}
We assume that the functions $g_{ij}(r) = \Pmbb\left[\frac{\eta_i(X)}{\eta_j(X)} \geq r\right] \, \forall \, i, j \in [k]$ are continuous and strictly decreasing for $r \in [0, \infty)$. 
\label{mult-as:eta}
\eassumption
Intuitively, this weak assumption ensures that when the cost or reward tradeoffs for the classes change, the preferred confusions for those tradeoffs also change (and vice-versa). 

\vspace{-0.3cm}
\subsection{Bayes Optimal and Restricted Bayes Optimal Confusions and Classifiers}
\label{ssec:bayes}

As illustrated in Table~\ref{tab:bayes}, the Bayes Optimal (BO) confusion $\cmbfbar$ represents the optimal value of the off-diagonal confusions according to the metric $\phi$ over a subset $\Scal\subseteq \Ccal$. 
This is analogously defined for $\psi$ and $\Dcal$. The Restricted Bayes Optimal (RBO) entities are of interest for diagonal metrics $\psi$, and indicate the case where classifiers are `restricted' to predict only classes $k_1, k_2 \in [k]$. Thus $\Hcal_{k_1, k_2}$ and $\Dcal_{k_1, k_2}$ denote the space of classifiers which exclusively predict either $k_1$ or $k_2$ and the associated space of diagonal confusions, respectively. 
Note that for such restricted classifiers $h$, $C_{ii}(h) = d_i(h)$ evaluates to zero at every index $i \neq k_1, k_2$.

\vspace{-0.3cm}
\subsection{Performance Metrics}
\label{ssec:metrics}
We first discuss elicitation for the following two major types of metrics.

\bdefinition Diagonal Linear Performance Metric (DLPM): We denote this family by $\varphi_{DLPM}$. Given $ \ambf \in \Rmbb^{k}$ such that $\norm{\ambf}_1=1$ ( w.l.o.g., due to scale invariance), the metric is defined as: 
\vspace{-0.1cm}
\bequation
\psi(\dmbf) \coloneqq \inner{\ambf}{\dmbf}.
\eequation
This is also called weighted accuracy~\cite{narasimhan2015consistent, hiranandani2021optimizing} and focuses on correct classification.
\label{def:d-linear}
\edefinition

\bdefinition Linear Performance Metric (LPM): We denote this family by $\varphi_{LPM}$. Given $\ambf \in \Rmbb^{q}$ such that $\norm{\ambf}_2 =1$ (w.l.o.g., due to scale invariance), the metric is defined as: 
\vspace{-0.1cm}
\bequation
\phi(\cmbf) \coloneqq \inner{\ambf}{\cmbf}.
\eequation
Cost-sensitive linear metrics belong to $\varphi_{LPM}$~\cite{abe2004iterative} and focus on the types of misclassifications. 
\label{def:linear}
\vskip -0.4cm
\edefinition

\vspace{-0.1cm}
The difference of norms in the definitions is only for simplicity of exposition and chosen to best complement the underlying metric elicitation algorithm and vice-versa. 
Moreover, notice that the elements of diagonal confusions ($\dmbf$'s) and off-diagonal confusions ($\cmbf$'s) reflect correct and incorrect classification, respectively. 
Thus, according to standard practice, w.l.o.g., we focus on eliciting monotonically increasing DLPMs and monotonically decreasing LPMs in their respective arguments. 

\vspace{-0.3cm}
\subsection{Metric Elicitation; Problem Setup}
\label{ssec:me}
This section describes the problem of \emph{Metric Elicitation} and the associated \emph{oracle query}. Our definitions follow from Chapter~\ref{chp:me}, extended so the confusion elements and the performance metrics correspond to the multiclass classification setting. The following definitions hold analogously for the diagonal case by replacing $\phi, \cmbf$ and $\Ccal$ by $\psi, \dmbf$, and $\Dcal$, respectively.

\bdefinition
[Oracle Query] Given two classifiers $h, h'$ (equivalent to off-diagonal confusions $\cmbf, \cmbf'$ respectively), a query to the Oracle (with metric $\phi$) is represented by:
\begin{align}
\Gamma(h, h'\,;\,\phi) = \Omega(\cmbf, \cmbf'\,;\,\phi) &= \1[\phi(\cmbf) > \phi(\cmbf')] =: \1[\cmbf \succ \cmbf'],
\end{align}
where $\Gamma: \Hcal \times \Hcal \rightarrow \{0,1\}$ and $\Omega: \Ccal \times \Ccal \rightarrow \{0, 1\}$. The query asks whether $h$ is preferred to $h'$ (equivalent to $\cmbf$ is preferred to $\cmbf'$), as measured by $\phi$. 
\label{def:query}
\edefinition
\vskip -0.1cm
We elicit metrics which are functions of the confusion matrix, thus comparison queries using classifiers are indistinguishable from comparison queries using confusions.
Henceforth, for simplicity of notation, we denote any query as confusions based  query. Next, we formally state the ME problem.

\bdefinition [Metric Elicitation with Pairwise Queries (given $\{(\xmbf,y)_i\}_{i=1}^n$)] Suppose that the oracle's (unknown) performance metric is $\phi$.  Using oracle queries of the form $\Omega(\cmbfhat, \cmbfhat')$, where $\cmbfhat, \cmbfhat'$ are the estimated off-diagonal confusions from samples, recover a metric $\hphi$ such that $\Vert\phi - \hphi\Vert < \kappa$ under a  suitable norm $\Vert \cdot \Vert$ for sufficiently small error tolerance $\kappa > 0$.
\label{def:me}
\edefinition

The performance of ME is evaluated both by the fidelity of the recovered metric and the query complexity. 
Given the formal definitions, we can now proceed. As is standard in the decision theory literature~ \cite{koyejo2015consistent, hiranandani2018eliciting}, we present our ME solution by first assuming access to population quantities such as the population confusions $\cmbf(h, \Pmbb)$, then examine practical implementation by considering the estimation error from finite samples e.g. with empirical confusions $\cmbfhat(h, \{(\xmbf,y)_i\}_{i=1}^n)$. 
\vspace{-0.5cm}
\section{Geometry and Parametrizations of the Query Spaces}
\label{mult-sec:confusion}
For any query based approach, it is important to understand the structure of the query space. 
Thus, we first study the properties of the query spaces and then develop parametrizations required for efficient elicitation. Readers may find these properties independently useful in other applications as well. 
\vspace{-0.5cm}
\subsection{Geometry of the space of diagonal confusions $\Dcal$ and parametrization of its boundary}
\label{ssec:dlpmproperties}

Let $\vmbf_i \in \Rmbb^k$ for $i \in [k]$ be the vectors with $\zeta_i$ at the $i$-th index and zero everywhere else. 
Notice that $\vmbf_i$'s are the diagonal confusions of the trivial classifiers predicting only class $i$ on the entire space $\Xcal$.

\bprop
[Geometry of $\Dcal$ -- Figure~\ref{fig:lin-fr}~(a)] Under Assumption~\ref{mult-as:eta}, the space of diagonal confusions $\Dcal$ is strictly convex, closed, and contained in the box $[0, \zeta_1]\times\cdots\times[0, \zeta_k]$. The diagonal confusions $\vmbf_i \, \forall \, i \in [k]$ are the only vertices of $\Dcal$. Moreover, for any $k_1, k_2 \in [k]$, the 2-dimensional $(k_1, k_2)$ axes-aligned face of $\Dcal$ is $\Dcal_{k_1, k_2}$ (Figure~\ref{fig:lin-fr}~(b)), which is equivalent  to the space of binary classification confusion matrices confined to classes $k_1, k_2$. In particular, $\Dcal_{k_1, k_2}$ is strictly convex.
\label{prop:Cd}
\eprop

\begin{figure*}[t]
	\centering
	\begin{tikzpicture}[scale = 1.2]
    

    	\begin{scope}[shift={(-5.0,0)},scale = 0.5]\scriptsize
    	\def\r{0.1};
    	\def\s{0.06};
	
	\draw[thick] (0,3.5) .. controls (4,3) and (5,3) .. (7,0);
    \draw[thick] (7,0)	.. controls (2,0.5) and (1,0.5) .. (0,3.5);
    \draw[thick] (0, 3.5)	.. controls (0.5,3.75) and (2,6) .. (2.8,4);
    \draw[dashed] (0,3.5) .. controls (0.5,2) and (2,3.25) .. (2.8,4);
    \draw[thick] (2.8,4)	.. controls (5,3.5) and (7,3) .. (7,0);
    \draw[dashed] (2.8,4) .. controls (2,2) and (2,1) .. (7,0);

    \draw[-latex] (0,-.5)--(0,4.5); 
    \draw[-latex] (-.5,0)--(8,0);
    \draw[-latex] (-0.35,-0.5)--(3.5,5);
    
    \node[above] at (7.5,0) {$d_{1}$};
    \node[left] at (0,4.5) {$d_{2}$};
    \node[left] at (3.3, 5) {$d_{3}$};
    
    
    \node[below] at (7,-.25) {
        $\vmbf_1 =  (\zeta_1, 0, 0)$};
    \node[left] at (-0.25,3.5) {$\vmbf_2 =$};
    \node[left] at (-0.25,3.0) {$(0, \zeta_2, 0)$};
    \node[right] at (3.0,4.25) {$\vmbf_3 = (0, 0, \zeta_3)$};
    
    \coordinate (C*) at (4.55,2.455);    
    \coordinate (C1) at (3.5,2.95);
    \coordinate (C2) at (5.3,1.6);
    
	\coordinate (Cent) at (3.65,2.25);    
    
    \coordinate (Ct) at (1.6,0.95);
    \coordinate (Ct1) at (2.64,0.5);
    \coordinate (Ct2) at (0.8,1.7); 
    \coordinate (C+) at (2.15,2.85);
    
    \coordinate (C-) at (1.15,2.25);
    
    \coordinate (labelleft) at (3.65, -1.10); 
    \node at (labelleft) {{\normalsize{(a)}}};
    
    \fill[color=black] 
    		(0,3.5) circle (\r)
    		(7,0) circle (\r)
    		(2.8, 4) circle (\r);
        
    
    	\clip (-0.2,-0.2) rectangle (7,4.5);   
    
    
    
	
    
    
    

    \end{scope}

    
    \begin{scope}[shift={(-0.2,0)},scale = 0.5]\scriptsize
    	\def\r{0.1};
    	\def\s{0.06};
	
	\draw[thick] (0,3.5) .. controls (4,3) and (5,3) .. (6,0)
    	.. controls (2,0.5) and (1,0.5) .. (0,3.5);
    \draw[-latex] (0,-.5)--(0,4.5); 
    \draw[-latex] (-.5,0)--(7,0);
    \node[left] at (0,4.5) {$d_{k_2}$};
    \node[below] at (7.5,0) {$d_{k_1}$};
    \draw (6,0) +(0,0.25) -- +(0,-.25);
    \draw (0,3.5) +(.25,0) -- +(-.25,0);
    
    \node[below] at (6,-.25) {$(\zeta_{k_1}, 0)$};
    \node[left] at (-0.25,3.5) {$(0, \zeta_{k_2})$};
    
    \coordinate (C*) at (4.55,2.455);    
    \coordinate (C1) at (3.75,2.9);
    \coordinate (C2) at (5.3,1.6);
    
	\coordinate (Cent) at (3.15,1.75);    
    
    \coordinate (Ct) at (1.60,1.05);
    \coordinate (Ct1) at (2.64,0.5);
    \coordinate (Ct2) at (0.8,1.7); 
    \coordinate (C+) at (2.15,2.85);
    
    \coordinate (C-) at (1.15,2.25);
    
    \coordinate (labelmiddle) at (3.15, -1.10);
    \node at (labelmiddle) {{\normalsize{(b)}}};
    
    \fill[color=black] 
    		(0,3.5) circle (\r)
    		(6,0) circle (\r)
        (Cent) circle (\r);
    
    	\clip (-0.2,-0.2) rectangle (7,4.5);   
    
    
    
	
	\draw(6,0) +(0,10) -- +(0,-20);
	\draw(0,3.5) +(10,0) -- +(-20,0);    
    
    
    \node at (Cent) {{$(\frac{\zeta_{k_1}}{2},\, \frac{\zeta_{k_2}}{2})$}};
    
    \node[right] at (C1) {\tiny{$\partial\Dcal^+_{k_1, k_2}$}};
    \node[below] at (Ct) {\tiny{$\partial\Dcal^-_{k_1, k_2}$}};
    
    \end{scope}

	
   
    
    
    
    

    
    \begin{scope}[shift={(4.3,0)},scale = 0.5]\scriptsize
    
    \def\r{0.12};
    
    \coordinate (C*) at (3.8,3.05);
    \coordinate (l*) at (0.95,4.3);
    \coordinate (Ct) at (2.7,0.3);
    \coordinate (lt) at (-0.40,1.20);
    \coordinate (Cent) at (3,1.75);
    \coordinate (Space) at (0.50,0.1);
    \coordinate (Sphere) at (5,1.50);
    \coordinate (Lambda) at (3.75,2);
    \coordinate (C) at (1.525,1.35);
    \coordinate (fC) at (0.90,1.15);
    
    \coordinate (u1) at (-0.30, 0.10);
    \coordinate (uextra12) at (1.25, 3.25);
    \coordinate (u2) at (4, 4.5);
    \coordinate (uextra2k) at (5.75, 1);
    \coordinate (uk) at (3.75, -0.5);
    
    \coordinate (labelright) at (3,-1.10);
    \node at (labelright) {\normalsize{(c)}};
    
    \draw (C) -- (1.25,0.70);
    \draw (Cent) -- (4.5, 1.75);
    
    \fill[color=black] 
            (Cent) circle (0.08)
            (C*) circle (\r)
            (Ct) circle (\r)
            (C) circle (0.04)
            
            (u1) circle (0.08)
            (u2) circle (0.08)
            (uk) circle (0.08);
    
    \draw (u1) .. controls (-0.2,1) and (0.8, 2.75) .. (uextra12) 
    -- (u2) .. controls (7,4) and (6.5,2) .. (uextra2k) -- (uk) .. controls  (2.5,-0.75) and (-0.1,-0.5) .. (u1);
    	
    \draw[thick] (Cent) circle (1.5cm);
    
    \node[right] at (l*) {$\barbelow{\ell}^\ast$};
    \node[above left] at (lt) {$\bell^\ast$};
    \node at (Space) {\large{$\Ccal$}};
    \node at (Sphere) {\large{$\Scal_\lambda$}};
    \node at (Lambda) {$\lambda$};
    \node[right] at (C) {\tiny$\cmbf$\normalsize};
    \node at (fC) {\tiny{$f^*_{(\cmbf)}$}\normalsize};
    
    \draw (C*) +(-2.3,1.7) -- +(2.5,-1.9);
    \draw[-latex] (C*) +(0,0) -- +(0.68,0.92);
    \node[below left] at ($(C*)+(0.15,0)$) {$\barbelow{\cmbf}^*$};
    \node[below right] at ($(C*)+(0.34,0.46)$) {$-\nabla \phi^*$};
   	
    \draw (Ct) +(-4,1) -- +(3,-0.75);
    \node[below left] at ($(Ct)+(0,0.15)$) {$\cmbfbar^*$};
    \node[below] at (Cent) {$\ombf$};
    
    \node[above left] at (u1) {{$\umbf_1$}};
    \node[above] at (u2) {{$\umbf_2$}};
    \node[below right] at (uk) {{$\umbf_k$}};
    
    
    
    
    

    
    
   	
    
     \end{scope}
\end{tikzpicture}
    \caption{(a) Geometry of the space of diagonal confusions $\Dcal$ for $k=3$: a strictly convex space. Notice that each of the three axis-aligned faces are equivalent in geometry to the following figure in (b); (b) Geometry of diagonal confusions when restricted to classifiers predicting only classes $k_1$ and $k_2$ i.e. $\Dcal_{k_1, k_2}$;
	(c) A sphere $S_\lambda$ centered at $\ombf$ with radius $\lambda$, contained in the convex space of off-diagonal confusions $\Ccal$. \small$f^\ast{(\cmbf)}\,$\normalsize denotes the distance of $\cmbf$ from the hyperplane $\bell^\ast$ tangent at $\cmbfbar^\ast$. 
	}
	\label{fig:lin-fr}
\end{figure*}
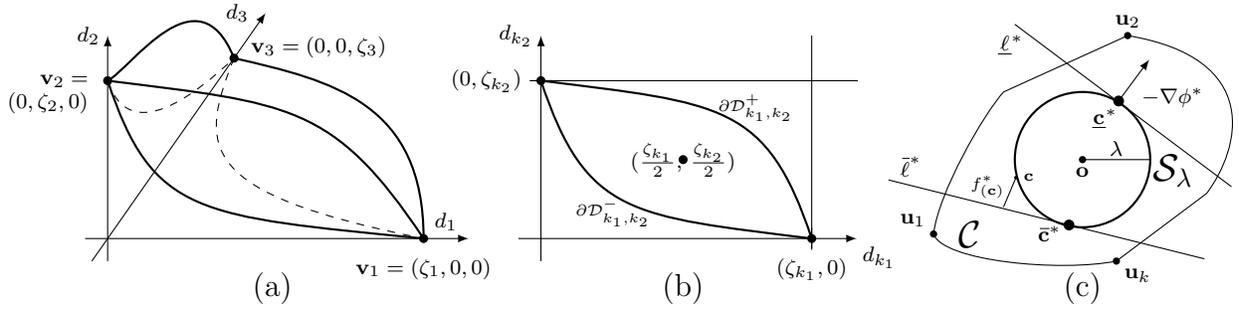

Proposition~\ref{prop:Cd} characterizes the geometry of the space of diagonal confusions $\Dcal$. 
Figure~\ref{fig:lin-fr}(a) illustrates this geometry when $k=3$.  
Interestingly, the 2-dimensional axes-aligned faces of $\Dcal$ (Figure~\ref{fig:lin-fr}~(b)) have exactly the same geometry as the space of binary classification confusion matrices (compare this with Figure~\ref{bin-fig:lin-fr}), where recall that a binary classification confusion matrix is uniquely determined by its two diagonal elements due to~\eqref{mult-eq:decomp}. 
We will exploit the set $\Dcal_{k_1, k_2}$ (more specifically, its boundary) for the elicitation task. 
Now notice that for $\psi \in \varphi_{DLPM}$, the RBO classifier restricted to predict classes $k_1, k_2$, predicts the label (out of the two possible choices) that maximizes the expected utility
conditioned on the instance. This is discussed below. 

\bprop 
Let $\psi \in \varphi_{DLPM}$ be  parametrized by $\ambf$ such that $\Vert \ambf \Vert_1=1$, and let $k_1, k_2 \in [k]$, then
\bequation
\hbar_{k_1, k_2}(\xmbf) = \left\{\begin{array}{lr}
			 k_1, & \; \text{if} \; a_{k_1} \eta_{k_1}(\xmbf) \geq a_{k_2} \eta_{k_2} (\xmbf) \\
			 k_2,& \;  o.w. 
	 	 \end{array}\right\}
\eequation
is the Restricted Bayes Optimal classifier (restricted to classes $k_1, k_2$) with respect to
$\psi$. 
\label{prop:d-k1k2-bayes}
\eprop

For a metric $\psi \in \varphi_{DLPM}$,  Proposition~\ref{prop:d-k1k2-bayes} provides RBO classifiers in $\Hcal_{k_1, k_2}$, which further gives us RBO diagonal confusions $\dmbfbar_{k_1, k_2}$ using~\eqref{mult-eq:components}. 
We know that this $\dmbfbar_{k_1, k_2}$ is unique, since any linear metric over a strictly convex domain ($\Dcal_{k_1, k_2}$) is maximized at a unique point on the boundary~\cite{boyd2004convex}. So, given a DLPM, we have access to a unique point in the query space. This allows us to define and then parametrize a subset of the query space, specifically, the upper boundary of $\Dcal_{k_1, k_2}$ through DLPMs.

\bdefinition 
The upper boundary of $\Dcal_{k_1, k_2}$, denoted by $\partial\Dcal^+_{k_1, k_2}$, constitutes the RBO diagonal confusions confined to classes $k_1, k_2 \in [k]$ for monotonically increasing DLPMs $(a_{i}\geq 0 \,\forall\,i \in [k])$ such that at least one out of $a_{k_1}$ or $a_{k_2}$  is non-zero (i.e., $a_{k_1} + a_{k_2} > 0$). 
\label{def:d-k1k2-boundary}
\edefinition

\textbf{Parameterizing the upper boundary $\partial \Dcal^+_{k_1, k_2}$.} 
Let $m \in [0, 1]$. Construct a DLPM by setting $a_{k_1} =  m$, $a_{k_2} = 1 - m$, and $a_i = 0$ for $i \neq k_1, k_2$. By using Proposition~\ref{prop:d-k1k2-bayes} and~\eqref{mult-eq:components}, obtain its RBO diagonal confusions, which by definition lies on the upper boundary. Thus, varying $m$ in this process, parametrizes the upper boundary $\partial\Dcal^+_{k_1, k_2}$. We denote this parametrization by $\nu(m; k_1, k_2)$, where $\nu: ([0,1]; k_1, k_2) \to \partial\mathcal{D}^+_{k_1,k_2}$. 

\subsection{Geometry of the space $\Ccal$ and parametrization of the enclosed sphere}
\label{ssec:lpmproperties}

Recall that, unlike the diagonal case, we focus on eliciting LPMs monotonically decreasing in the elements of the off-diagonal confusions (Section~\ref{ssec:metrics}). 
To this end, let $\umbf_i \in \Ccal$ for $i \in [k]$ be the off-diagonal confusions achieved by trivial classifiers predicting only class $i$ on the entire space $\Xcal$. 

\bprop
[Geometry of $\Ccal$ -- Figure~\ref{fig:lin-fr}~(c)] The space of off-diagonal confusions $\Ccal$ is convex and contained in the box $[0, \zeta_1]^{(k-1)}\times \cdots \times [0, \zeta_k]^{(k-1)}$. $\{\umbf_i\}_{i=1}^k$ belong to the set of vertices of $\Ccal$. $\Ccal$ always contains the point $\ombf = \frac{1}{k} \sum_{i=1}^k \umbf_i$ which corresponds to the off-diagonal confusions of the trivial classifier that randomly predicts each class with equal probability on the entire space $\Xcal$. 
\label{mult-prop:C}
\eprop

We find that 
the space of off-diagonal confusions $\Ccal$ has quite different geometry than the diagonal case. For instance, $\Ccal$ is not strictly convex. Nevertheless, since $\Ccal$ is convex and always contains the point $\ombf$, we may make the following assumption. 
Please see Figure~\ref{fig:lin-fr}(c) for an illustration.
\bassumption
There exists a $q$-dimensional sphere $\Scal_\lambda \subset \Ccal$ of radius $\lambda > 0$ centered at $\ombf$.  
\label{as:sphere}
\eassumption
Such a sphere always exists as long as the class-conditional distributions are not completely overlapping, i.e., there is some signal for non-trivial classification. 
A method to obtain $\Scal_\lambda$ is discussed in Section~\ref{mult-sec:guarantees}. 
Now recall that the optimum for a linear function optimized over a sphere is given by the slope of the function scaled by the radius of the sphere. This is formalized as a trivial lemma below.

\blemma
Let $\phi \in \varphi_{LPM}$ be parametrized by $\ambf$ such that $\norm{\ambf}_2=1$, then the unique optimal off-diagonal confusion $\cmbfbar$ over the sphere $\Scal_\lambda$ is a point on the boundary of $\Scal_\lambda$ given by $\cmbfbar = \lambda \ambf +\ombf$. 
\label{mult-lem:spherebayes}
\elemma

Given an LPM, Lemma~\ref{mult-lem:spherebayes} provides a unique point in the query space $\Scal_\lambda\subset\Ccal$. This gives us an opportunity to characterize and then parametrize a subset of the query space through LPMs. Since we focus on eliciting monotonically decreasing LPMs, we parametrize the lower boundary of $\Scal_\lambda$.  

\bdefinition
The lower boundary of $\Scal_\lambda$, denoted by $\partial \Scal^-_{\lambda}$, constitutes the set of optimal off-diagonal confusions over the sphere $\Scal_\lambda$ for LPMs with $ a_{i} \leq 0 \; \forall \,i \in [q]$ (monotonically decreasing condition). 
\edefinition

\textbf{Parameterizing the lower boundary of the enclosed sphere $\partial \Scal^-_{\lambda}$.} 
We follow the standard method for parametrizing points on the surface of a sphere via angles. Let $\thetambf$ be a ($q-1$)-dimensional vector of angles, where all the angles except the primary angle are in second quadrant, i.e., $\{\theta_i \in [\pi/2, \pi]\}_{i=1}^{q-2}$, and the primary angle is in the third quadrant, i.e., $ \theta_{(q-1)} \in [\pi, 3\pi/2]$. Construct an LPM $(\Vert \ambf \Vert_2=1)$ by setting $a_i = \Pi_{j=1}^{i-1} \sin\theta_j \cos{\theta_i}$ for $i \in [q-1]$ and $a_q = \Pi_{j=1}^{q-1} \sin\theta_j$. The choice of the quadrants ensures the monontonically decreasing condition, i.e., $\{a_i \leq 0\}_{i=1}^q$. By using Lemma~\ref{mult-lem:spherebayes}, obtain its BO off-diagonal confusions over the sphere $\Scal_\lambda$, which clearly lies on the lower boundary. Thus, varying $\thetambf$ in this procedure, parametrizes the lower boundary $\partial\Scal^-_{\lambda}$. We denote this parametrization by $\mu(\thetambf)$, where $\mu: [\pi/2, \pi]^{q-2} \times [\pi, 3\pi/2] \to \partial \Scal^-_{\lambda}$.
\section{Metric Elicitation}
\label{mult-sec:me}
Using the outlined parametrizations $\{\nu, \mu\}$, we propose efficient binary-search type algorithms to elicit oracle's implicit performance metric. We will first discuss elicitation with no \emph{feedback} noise from the oracle. We will later show robustness to noisy feedback in  Section~\ref{mult-sec:guarantees}. 

\subsection{DLPM Elicitation}
\label{sssec:dlpmelicit}

The following lemma concerning a broader family of metrics is the route to our elicitation procedures. Since both linear and linear-fractional functions are quasiconcave, the lemma applies to both.

\blemma
Let $\psi : \Dcal \rightarrow \Rmbb$ be a quasiconcave metric which is
monotone increasing in all $\{d_{i}\}_{i=1}^k$. For $k_1, k_2 \in [k]$, let 
$\rho^+:[0,1]\to \partial\Dcal^+_{k_1, k_2}$ be a continuous, bijective, parametrization of the upper boundary. 
Then the composition $\psi \circ \rho^+: [0,1]\to\mathbb R$ is
quasiconcave and thus unimodal on $[0, 1]$.
\label{lemma:slice}
\elemma

\bremark
Under Assumption~\ref{mult-as:eta}, every supporting hyperplane of $\Dcal_{k_1, k_2}$ supports a unique point on the boundary $\partial \Dcal^{+}_{k_1, k_2}$ and vice-versa (Proposition~\ref{prop:Cd}); therefore, the composition $\psi \circ \rho^+$ has no flat regions. In other words, the function $\psi \circ \rho^+$ is concave.
\label{rem:concave}
\eremark

The proof of Lemma~\ref{lemma:slice} first shows that any quasiconcave metric $\psi$ defined on the space $\Dcal$ is also quasiconcave on the restricted space $\Dcal_{k_1, k_2}$, and then shows the quasiconcavity and thus the unimodality (due to the one-dimensional parametrization of $\partial\Dcal^+_{k_1, k_2}$) of $\psi$ on a further restricted space $\partial\Dcal^+_{k_1, k_2}$. Furthermore, Remark~\ref{rem:concave} reveals that the function $\psi \circ \rho^+$ is concave, allowing us to devise the following binary-search type method for elicitation. 

\balgorithm[t]
\caption{DLPM Elicitation}
\label{mult-alg:dlpm}
\balgorithmic[1]
\STATE \textbf{Input:} $\epsilon > 0$, oracle $\Omega$, $\ahat_1 = 1$
\FOR{$i=2,\cdots ,k$}
\STATE \textbf{Initialize:} $m^a = 0$, $m^b = 1$.
\WHILE{$\abs{m^b - m^a} > \epsilon$}
\STATE Set $m^c = \frac{3 m^a + m^b}{4}$, $m^d = \frac{m^a + m^b}{2}$, and $m^e = \frac{m^a + 3 m^b}{4}$.\\
\STATE Set $\dmbfbar^a_{1, i} = \nu(m^a; 1, i)$ (i.e. parametrization of $\partial \Dcal^+_{1, i}$ in Section~\ref{ssec:dlpmproperties}). Similarly, set $\dmbfbar^c_{1, i}, \dmbfbar^d_{1, i}, \dmbfbar^e_{1, i}, \dmbfbar^b_{1, i}$. 
\STATE Query  $\Omega(\dmbfbar^c_{1,i}, \dmbfbar^a_{1,i}),  \Omega(\dmbfbar^d_{1,i}, \dmbfbar^c_{1,i})$,  $\Omega(\dmbfbar^e_{1,i}, \dmbfbar^d_{1,i}), \text{ and } \Omega(\dmbfbar^b_{1,i}, \dmbfbar^e_{1,i})$.
\STATE $[m^a, m^b] \leftarrow$ \emph{ShrinkInterval-1} (responses).
\ENDWHILE
\STATE Set $m^d = \frac{m^a+m^b}{2}$. Then set $\ahat_i = \frac{1-m^d}{m^d}\hat a_1$.
\ENDFOR
\STATE \textbf{Output:} $\ambfhat = \left(\frac{\ahat_1}{\Vert \ambfhat \Vert_1}, \cdots , \frac{\ahat_k}{\Vert \ambfhat \Vert_1}\right)$. 
\ealgorithmic
\ealgorithm

Suppose that the oracle's metric is $\psi^* \in \varphi_{DLPM}$ parametrized by $\ambf^*$ where $\norm{\ambf^*}_1=1$, $\{a^*_i\}_{i=1}^k \geq 0$ (Section~\ref{ssec:metrics}). Using the parametrization $\nu$, Algorithm~4.1  
returns an estimate $\ambfhat$ of $\ambf^\ast$. It  takes two classes at a time, class $1$ and class $i$. Since the metric is unimodal on $\partial\Dcal^+_{1, i}$ (Lemma~\ref{lemma:slice}), the algorithm applies binary-search in the inner while-loop to estimate the ratio $a^*_i/a^*_1$. The \emph{ShrinkInterval-1} subroutine shrinks the interval $[m^a, m^b]$ into half based on the oracle responses in the usual binary-search way for searching the optimum (Figure~\ref{mult-append:fig:shrink1}, Appendix~\ref{append:sec:shrink}). The algorithm repeats this $(k-1)$ times to estimate the ratios \{$a^*_2/a^*_1, \dots, a^*_k/a^*_1\}$. Finally, it outputs a normalized metric estimate $\ambfhat$. 
\subsection{LPM Elicitation}
\label{sssec:lpmelicit}

We now discuss LPM elicitation, where the metrics are assumed to be monotonically decreasing in the off-diagonal confusions. 
Unfortunately, $\partial\Ccal$ may have flat regions due to lack of strict convexity, so the algorithm for the diagonal case does not apply. Instead, we consider a query space given by the sphere $\Scal_\lambda \subset \Ccal$ and propose a coordinate-wise binary-search style algorithm, which is an outcome of our novel geometric characterization and the approach in Derivative-Free Optimization (DFO)~\cite{jamieson2012query}. 

\balgorithm[t]
\caption{LPM Elicitation}
\label{mult-alg:lpm}
\balgorithmic[1]
\STATE \textbf{Input:} $\epsilon > 0$, oracle $\Omega$, $\lambda$, and $\bm{\theta} = \bm{\theta}^{(1)}$
\FOR{$t=1, 2, \cdots, T$}
\STATE Set $\bm{\theta}^a = \bm{\theta}^c=\bm{\theta}^d=\bm{\theta}^e=\bm{\theta}^b = \bm{\theta}^{(t)}$.
\IF{$(t\%(q-1))$} 
\STATE Set $j = t\%(q-1)$ 
\ELSE 
\STATE Set $j = q-1$.
\ENDIF
\IF{$j == q - 1$} \STATE \textbf{Initialize:} $\theta^a_{j} = \pi$, $\theta^b_j = 3\pi/2$. 
\ELSE 
\STATE \textbf{Initialize:} $\theta^a_j = \pi/2$, $\theta^b_j = \pi$.
\ENDIF
\WHILE{$\abs{\theta^b_j - \theta^a_j} > \epsilon$}
\STATE Set $\theta^c_j = \frac{3 \theta^a_j + \theta^b_j}{4}$, $\theta^d_j = \frac{\theta^a_j + \theta^b_j}{2}$, and $\theta^e_j = \frac{\theta^a_j + 3 \theta^b_j}{4}$.
\STATE Set $\cmbfbar^a = \mu(\bm{\theta}^a)$ (i.e. parametrization of $\partial \Scal^-_\lambda$ in  Section~\ref{ssec:lpmproperties}) Similarly, set $\cmbfbar^c, \cmbfbar^d, \cmbfbar^e, \cmbfbar^b$.
\STATE Query $\Omega(\cmbfbar^c, \cmbfbar^a), \Omega(\cmbfbar^d,  \cmbfbar^c)$, $\Omega(\cmbfbar^e, \cmbfbar^d),\Omega(\cmbfbar^b, \cmbfbar^e)$
\STATE $[\theta^a_j, \theta^b_j] \leftarrow$ \emph{ShrinkInterval-2} (responses). 
\ENDWHILE
\STATE Set $\theta^d_j = \frac{1}{2}(\theta^a_j+\theta^b_j)$ and then set $\bm{\theta}^{(t)} = \bm{\theta}^d$.
\ENDFOR
\STATE \textbf{Output:} $\hat a_i =$ $\Pi_{j=1}^{i-1} \sin\theta_j^{(T)} \cos{\theta_i}^{(T)}$ $ \, \forall i \in [q-1]$ and $\hat a_q =$ $\Pi_{j=1}^{q-1} \sin\theta_j^{(T)}$.
\ealgorithmic
\ealgorithm

Suppose that the oracle's metric is $  \phi^{*} \in \varphi_{LPM}$ parametrized by $\ambf^*$ where $\norm{\ambf^*}_2=1$, $\{a^*_i\}_{i=1}^q \leq 0$ (Section~\ref{ssec:metrics}). Using the parametrization $\mu(\thetambf)$ of $\partial \Scal^-_\lambda$ (Section~\ref{ssec:lpmproperties}), Algorithm~4.2 returns an estimate $\ambfhat$ of $\ambf^\ast$. In each iteration, the algorithm updates one angle $\theta_j$ keeping other angles fixed 
by a binary-search procedure, where again the \emph{ShrinkInterval-2} subroutine shrinks the interval $[\theta^a_j, \theta^b_j]$ by half based on the oracle responses (Figure~\ref{append:fig:shrink2}, Appendix~\ref{append:sec:shrink}). Then the algorithm cyclically updates each angle until it converges to a metric sufficiently close to the true metric. The convergence is assured because, intuitively, the algorithm via a dual interpretation minimizes a smooth, strongly convex function $f^\ast(\cmbf)$ measuring the distance of the boundary points from a hyperplane $\bell^\ast$, whose slope is given by $\ambf^\ast$ and is tangent 
at the BO confusion $\cmbfbar^\ast$ (see Figure~\ref{fig:lin-fr}(c)). 
\section{Extensions}
\label{mult-sec:extensions}

We emphasize that the goal of ME is not simply to choose between default or popularly used metrics but to elicit novel metrics which best match the oracle preferences. As the family of human evaluation metrics is believed to be large and since we already have created strategies for linear metrics, we can now certainly aim at efficient elicitation for flexible metric families. Therefore, in this section, we discuss a variety of extensions to other family of metrics.

 For the purpose of clarity in this section, let us replace the notation of the parametrization $\nu(m;k_1, k_2)$ of the upper boundary $\partial\Dcal^+_{k_1, k_2}$ by $\nu^+(m;k_1, k_2)$. This is useful to disambiguate with the parametrization $\nu^-(m;k_1, k_2)$ of the lower boundary $\partial\Dcal^-_{k_1, k_2}$, which is useful in linear-fractional elicitation. 
 
 In addition to the entities defined in Table~\ref{tab:bayes}, we define some more entities  such as the Inverse Bayes Optimal (IBO) and Restricted Inverse Bayes Optimal (RIBO) classifiers, diagonal confusions, utility in Table~\ref{tab:bayesinv}. The six definitions on the left can be analogously described diagonal metrics and diagonal confusions. The six definitions on the right are of interest for the diagonal case. These are useful in the elicitation of linear-fractional metrics.

\begin{table}[t]
	\caption{Bayes Optimal (BO), Inverse Bayes Optimal (IBO), Restricted Bayes Optimal (RBO), and Restricted Inverse Bayes Optimal (RIBO) entities. 
	}
	\vspace{-0.2cm}
	\label{tab:bayesinv}
	\begin{center}
		\begin{small}
				\begin{tabular}{|l|l|l|l|}
					\hline
					  Name & Definition & Name & Definition  \\ \hline 
					\makecell{BO classifier $\hbar$} & \makecell{$\argmax_{h \in \Hcal}\phi(\cmbf(h))$} & \makecell{RBO classifier $\hbar_{k_1, k_2}$} & \makecell{$\argmax_{h \in \Hcal_{k_1, k_2}}\psi(\dmbf(h))$} \\
					\makecell{BO utility $\btau$ \\ over a subset $\Scal\subseteq\Ccal$}&  \makecell{$\max_{\cmbf \in \Scal \subseteq \Ccal}\phi(\cmbf)$} & \makecell{RBO utility $\btau_{k_1, k_2}$} &  \makecell{$\max_{\dmbf \in \Dcal_{k_1, k_2}}\psi(\dmbf)$}  \\
					\makecell{BO confusion $\cmbfbar$ \\ over a subset $\Scal\subseteq\Ccal$} & \makecell{$\argmax\limits_{\cmbf \in \Scal \subseteq \Ccal}\phi(\cmbf)$} & \makecell{RBO confusion $\dmbfbar_{k_1, k_2}$} & \makecell{$\argmax\limits_{\dmbf \in \Dcal_{k_1, k_2}}\psi(\dmbf)$} \\
					\makecell{IBO classifier $\barbelow{h}$} & \makecell{$\argmin_{h \in \Hcal}\phi(\cmbf(h))$} & \makecell{RIBO classifier $\barbelow{h}_{k_1, k_2}$} & \makecell{$\argmin_{h \in \Hcal_{k_1, k_2}}\psi(\dmbf(h))$} \\
					\makecell{IBO utility $\ttau$  \\ over a subset $\Scal\subseteq\Ccal$} &  \makecell{$\min_{\cmbf \in \Scal\subseteq\Ccal}\phi(\cmbf)$}  & \makecell{RIBO utility $\ttau_{k_1, k_2}$} &  \makecell{$\min_{\dmbf \in \Dcal_{k_1, k_2}}\psi(\dmbf)$}  \\
					\makecell{IBO confusion $\barbelow{\cmbf}$ \\over a subset $\Scal\subseteq\Ccal$} & \makecell{$\argmin\limits_{\cmbf \in \Scal\subseteq\Ccal}\phi(\cmbf)$} & \makecell{RIBO confusion $\barbelow{\dmbf}_{k_1, k_2}$} & \makecell{$\argmin\limits_{\dmbf \in \Dcal_{k_1, k_2}}\psi(\dmbf)$} \\
					\hline
				\end{tabular}
		\end{small}
	\end{center}
	\vskip -0.45cm
\end{table}

Lastly, for linear-fractional elicitation, we need to parametrize the lower boundary $\partial\Dcal^-_{k_1, k_2}$ and upper boundary of the sphere $\partial \Scal^+_{\lambda}$ as well. These parametrizations are defined below.

\bdefinition 
The RBO diagonal confusions for DLPMs parametrized by $\ambf$ with $a_{k_1},  a_{k_2} < 0$ form the lower boundary of $\Dcal_{k_1, k_2}$, denoted by $\partial\Dcal^-_{k_1, k_2}$.
\label{def:d-k1k2-boundarylower}
\edefinition

\textbf{Parametrization of $\partial \Dcal^-_{k_1, k_2}$.} 
We denote this parametrization by a function $\nu^-(m; k_1, k_2)$. Take a parameter $-1 \leq m \leq 0$. Create a DLPM $\psi$ by setting $a_{k_1} = m$, $a_{k_2} = -1 - m$, and $a_i = 0$ for $i \neq k_1, k_2 \in [k]$. RBO diagonal confusions of such DLPMs lie on the lower boundary $\partial\Dcal^-_{k_1, k_2}$. As we vary $m$, we move on the lower boundary $\partial\Dcal^-_{k_1, k_2}$. 

\bdefinition
The optimal off-diagonal confusions over the sphere $S_\lambda$ for LPMs parametrized by $\ambf$ with $ a_{i} \geq 0 \; \forall \,i \in [k]$ form the upper boundary of $S_\lambda$, denoted by $\partial S^+_{\lambda}$. 
\edefinition

\textbf{Parametrization of $\partial S^+_{\lambda}$.} 
The parametrization of the upper boundary $\partial S^+_{\lambda}$ is same as that of the lower boundary $\partial S^-_{\lambda}$ (Section~\ref{ssec:lpmproperties}) except that now all the angles are in the first quadrant i.e. $\{\theta_i \in [0, \pi/2]\}_{i=1}^{q-1}$, so to satisfy the condition $ a_{i} \geq 0 \; \forall \,i \in [k]$.  
\vspace{-0.5cm}
\subsection{Diagonal Linear Fractional Performance Metric (DLFPM) Elicitation}
\label{append:sec:dlfpme}

We start by first defining the diagonal linear fractional performance metric. 

\bdefinition Diagonal Linear-Fractional Performance Metric (DLFPM): We denote this family by $\varphi_{DLFPM}$. Given $\ambf, \bmbf \in \Rmbb^{k}$ and $b_0 \in \Rmbb$, the metric is defined as: 
\begin{align}
\psi(\dmbf) &= \frac{\inner{\ambf}{\dmbf}}{\inner{\bmbf}{\dmbf} + b_0}.
\label{eq:d-linear-fr}
\end{align}
\label{def:d-linear-fr}
\vspace{-0.5cm}
\edefinition

For any $\psi \in \varphi_{DLFPM}$, we assume that $\{a_i\}_{i=1}^k, \{b_i\}_{i=1}^k$ are not all zero simultaneously and 
wlog, we take $\psi(\dmbf) \in [0, 1]$ and monotonically increasing in all $\{d_i\}_{i=1}^k$. We also make the following regularity assumption. 

\bassumption
Let $\psi \in \varphi_{DLFPM}$  parametrized by $\ambf$ and $\bmbf$ (Definition~\ref{def:d-linear-fr}). We assume that $a_i \geq 0$ and $a_{i} \geq b_{i}$ for all $i \in [k]$. In addition, $b_0 = \sum_i (a_i - b_i)\zeta_i$ and $\sum_i a_i = 1$.
\label{as:d-sufficient}
\eassumption

Equivalent to fixing $\Vert \ambf \Vert_1 = 1$, $a_i \geq 0$ for the diagonal linear case (Section~\ref{ssec:metrics}), the conditions in Assumption~\ref{as:d-sufficient} are sufficient conditions for DLFPMs to be bounded and monotonically increasing in diagonal elements of the confusion matrices. This is detailed in the following proposition. 
\bprop
The conditions in Assumption~\ref{as:d-sufficient} are sufficient for $\psi \in \varphi_{DLFPM}$ to be bounded in $[0,1]$ and simultaneously monotonically increasing in $\{d_{i}\}_{i=1}^k$.
\label{prop:d-sufficient}
\eprop

We consider $b_0 = \sum_i (a_i - b_i)\zeta_i$, instead of the derived condition $b_0 \geq \sum_i (a_i - b_i)\zeta_i$, which is sufficient to guarantee a unique metric bounded in $[0, 1]$ for elicitation purposes (instead of one of the equivalent alternatives).
Note that most existing linear-fractional metrics satisfy these conditions~\cite{hiranandani2018eliciting, koyejo2015consistent, narasimhan2018learning}.

Now, suppose that the oracle's metric is $ \psi^{*} \in \varphi_{DLFPM} $. 
Let $\btau^*$ and $\barbelow{\tau}^*$ be the maximum and minimum value of $\psi^{*}$, respectively. 
Due to strict convexity of $\Dcal$, we have a hyperplane
\begin{align}
\bell_f^* := \sum_{i=1}^k(a_i^* - \btau^*b_i^*)d^*_{i} = \btau^* b_0 
\end{align}
tangent at the BO diagonal confusions $\dmbfbar^\ast$ on the upper boundary of $\Dcal$, denoted by $\partial \Dcal^+$. 

\noindent Similarly, we have a hyperplane

\bequation
\tell_f^* := \sum_{i=1}^k(a_i^* - \ttau^*b_i^*)\barbelow{d}^*_{i} = \ttau^* b_0 
\eequation
which touches the set $\Dcal$ only at $\barbelow{\dmbf}^{\ast}$ (IBO diagonal confusions) on the lower boundary, denoted by $\partial \Dcal^-$. See Figure~\ref{fig:lin-fr}(c) for the visual intuition, where assume that the underlying space is $\Dcal$ instead of the sphere $\Scal_\lambda$.

Since DLFPM is quasiconcave, Algorithm~4.1 returns a slope of the hyperplane, say $\smbfbar$. Using that slope, we can compute the Bayes Optimal diagonal confusions $\dmbfbar^*$ using Proposition~\ref{prop:d-bayes} (a more general version of Proposition~\ref{prop:d-k1k2-bayes}), 
which gives us the hyperplane ${\bell^\ast := \inner{\smbfbar}{\dmbf} = \inner{\smbfbar}{\dmbfbar^\ast}}$. This is equivalent to $\bell_f^*$ up to a constant multiple; therefore, the true metric is the solution to the following non-linear system of equations (SoE):
\begin{align}
a^*_i - \btau^* b^*_i = \alpha \sbar_i \;\; \forall \; i \in [k], \quad  \btau^* b^*_0 = \alpha \inner{\smbfbar}{\dmbfbar^*}
\label{eq:d-lin-fr-equi}
\end{align}
where $\alpha \geq 0$, because LHS and ${\sbar_i}$'s are non-negative. 
If we somehow know the true $\ambf^\ast$, then by using the following proposition, we can elicit the DLFPM upto a constant multiple, i.e. we can get $\hat{\psi} \approx \alpha \psi^{*}$, which is sufficient for the elicitation task.

\bprop
Knowing $\ambf^\ast$ i.e. using $\ambfhat = \ambf^\ast$ solves the SoEs~\eqref{eq:d-lin-fr-equi} as:
\begin{align}
\hat b_i = (\hat a_i - \sbar_i)\frac{\Lambda_1}{\Lambda_2}, 
\end{align}
where $\Lambda_1 = \sum_i \hat a_i \zeta_i$, $\Lambda_2 = \inner{\smbfbar}{\dmbfbar^\ast} + \sum_i (\hat a_i - \sbar_i)\zeta_i$, and $\hat b_0$ is as defined in Assumption~\ref{as:d-sufficient}. 
\label{prop:d-uppersystem}
\eprop

Now the question is how do we get the true $\ambf^\ast$. To our rescue, 
we also know that a DLFPM is quasiconvex. Thus, by minimizing the metric (again by using restricted classifiers) using Algorithm~4.3 (described next), we can get a similar hyperplane on the lower boundary $\partial\Dcal^-$. Algorithm~4.3 is described below. 

\balgorithm[t]
\caption{Diagonal (Quasiconcave) Metric Minimization}
\label{mult-alg:diaglinearmin}
\balgorithmic[1]
\STATE Follow Algorithm~\ref{mult-alg:dlpm} except:
\STATE \textbf{Initialize:} $m^a = -1$, $m^b = 0$ in step 3 of Algorithm~\ref{mult-alg:dlpm}.
\STATE \textbf{Invert Responses:} Replace oracle responses $\dmbf\prec \dmbf'$ with $\dmbf\succ \dmbf'$ and vice versa.
\ealgorithmic
\ealgorithm

\textbf{Algorithm~4.3.} \emph{Minimizing diagonal quasiconvex metrics:} This algorithm is same as Algorithm~4.1 with only two changes. First, we start with $m \in [-1, 0]$, because the optimum will lie on the lower boundary $\partial\mathcal D^-$. Second, we check for $\dmbf \prec \dmbf'$ whenever Algorithm~4.1 checks for $\dmbf\succ \dmbf'$, and vice-versa. 
Here, we output the counterpart, i.e., slope $\tsmbf$.

Once we get the slope $\tsmbf$, we can obtain the inverse Bayes diagonal confusion $\barbelow{\dmbf}^\ast$ using Proposition~\ref{prop:d-bayes} (a more general version of Proposition~\ref{prop:d-k1k2-bayes}). This will result in a supporting hyperplane ${\tell^\ast := \inner{\tsmbf}{\dmbf} =  \inner{\tsmbf}{\barbelow{\dmbf}^\ast}}$. This hyperplane  is tangent to the lower boundary $\partial \Dcal^-$, and equivalent to ${\tell}^*_f$ up to a constant multiple; thus, the true metric is also the solution of the following SoE:
\begin{align}
a^*_i - \ttau^* b^*_i = \gamma \barbelow{s}_i \;\; \forall \; i \in [k], \quad
\ttau^* b^*_0 = \gamma \inner{\tsmbf}{\barbelow{\dmbf}^\ast}
\end{align}
where $\gamma \leq 0$ since LHS is positive, but $\barbelow{s}_i$'s are negative. Again, we may assume $\gamma < 0$. By dividing the above equations by $-\gamma$ on both sides, all the coefficients are factored by $-\gamma$. This does not change $\psi^{*}$; thus, the system of equations becomes the following:
\begin{align}
a''_i - \ttau^*b''_i =  \barbelow{s}_i,  \;\; \forall \; i \in [k], \quad
 \ttau^*b''_0 =  \inner{\tsmbf}{\barbelow{\dmbf}^\ast}.
\label{append:eq:d-lin-fr-equi-lower-2}
\end{align}

Now, if we know $\ambf'$ in~\eqref{append:eq:d-lin-fr-equi-2}, then by using Proposition~\ref{prop:d-uppersystem}, we may solve the system~\eqref{append:eq:d-lin-fr-equi-2} and obtain a metric, say $\psi'$. System~\eqref{append:eq:d-lin-fr-equi-lower-2} can be solved analogously, provided we know $\ambf''$ in~\eqref{append:eq:d-lin-fr-equi-lower-2}, to get a metric, say $\psi''$.  
Notice that when when we have the true ratio i.e $a^*_i/a^*_j=a'_i/a'_j=a''_i/a''_j$ for $i, j \in [k]$, then  $\psi^{*} = \psi'/\alpha = -\psi''/\gamma$. This means that when the  true ratios are known, then $\psi'$, $\psi''$ are constant multiples of each other. So, we look for the ratios where the solution to the two systems are just pointwise constant multiple of one another. This is the same idea used in the binary case (see Section~\ref{ssec:elicit_linearfrac}).  However, we have to search for the entire grid $[0,1]^k$ instead of $[0,1]$ as is in the binary case. This is a computationally challenging task. 

Notice that we can randomly sample diagonal confusions on the boundary $\partial \Dcal$. This is done by first randomly generating DLPMs and then computing their BO or IBO diagonal confusions using Proposition~\ref{prop:d-bayes}. After obtaining $\bell^\ast$ and $\tell^\ast$, we run the grid seacrh based Algorithm~\ref{append:alg:d-grid-search} to find the estimates of the true $a_i$'s. Although the grid-search based algorithm is independent of oracle queries, it is computationally efficient. It runs for $(k-1)$ rounds, where in each round it matches the solution of the two SoE's as closely as possible on a number of samples from the boundary $\partial \Dcal_{1, k}$ and figures out the ratio of $a_j/a_1$ for $j\neq1 \in [k]$. Thanks to the property $\sum_i a_i=1$ and access to the restricted diagonal confusions, we are saved from searching the entire grid $[0, 1]^k$ to merely $(k-1)$ times grid-search on $[0, 1]$. 

\balgorithm[t]
\caption{DLFPM: Grid Search for Best Pairwise Ratios}
\label{append:alg:d-grid-search}
\balgorithmic[1]
\STATE \textbf{Input:} $n', \delta$.
\FOR{$j=2,\cdots,k$}
\STATE \textbf{Initialize:} $\sigma_{opt} = \infty, a'_j = 0$.
\STATE Sample $\dmbf^{1},...,\dmbf^{n'}$ on $\partial \Dcal_{1,j}$ (BO or IBO diagonal confusions for random $n'$ DLPMs).
\FOR {($a'_j = 0$; $a'_j \leq 1$; $a'_j = a'_j + \delta$)}
\STATE Compute $\psi'$, $\psi''$ using Proposition \ref{prop:d-uppersystem}. 
\STATE Compute array $r = [\frac{\psi'(\dmbf^1)}{\psi''(\dmbf^1)},...,\frac{\psi'(\dmbf^{n'})}{\psi''(\dmbf^{n'})}]$. Set $\sigma = \text{std}(r).$
\STATE {\bf if }($\sigma < \sigma_{opt}$) Set $\sigma_{opt} = \sigma$ and $a'_{j, opt} = a'_j$.
\ENDFOR
\STATE Set $a'_j = \frac{a'_{j, opt}}{1-a'_{j, opt}}$.
\ENDFOR
\STATE $a'_1 =1$.
\STATE\textbf{Output:} $\ambf' = \left(\frac{a'_1}{\norm{\ambf'}_1}, \cdots , \frac{a'_k}{\norm{\ambf'}_1}\right)$.
\ealgorithmic
\ealgorithm

\vspace{-0.1cm}
\subsection{LFPM Elicitation}
\label{append:sec:lfpme}

We start by defining the linear-fractional performance metric in off-diagonal confusions.  
\bdefinition Linear-Fractional Performance Metric (LFPM): We denote this family by $\varphi_{LFPM}$. Given constants $\ambf, \bmbf \in \Rmbb^{q}$ and $b_0 \in \Rmbb$, the metric is defined as  
\begin{align}
\phi(\cmbf) &= \frac{\inner{\ambf}{\cmbf}}{\inner{\bmbf}{\cmbf} + b_0}.
\label{eq:linear-fr}
\end{align} 
\vskip -0.1cm
\label{def:linear-fr}
\edefinition

For any $\phi \in \varphi_{LFPM}$ (Definition~\ref{def:linear-fr}), we assume that $\{a_i\}_{i=1}^q, \{b_i\}_{i=1}^q$ are not all zero simultaneously. Moroever, w.l.o.g., $\phi(\cmbf) \in [-1, 0]\,\, \forall\,\, \cmbf \in \Ccal$ and is monotonically decreasing in all $\{c_{i}\}_{i=1}^q$. Similar to the diagonal case, we make the following regularity assumption. 

\bassumption
Let $\phi \in \varphi_{LFPM}$ (Definition~\ref{def:linear-fr}). We assume that $a_i \leq 0$ and $a_{i} \leq -b_{i}$ for all $i \in [q]$. In addition, $b_0 = \sum_i -(a_i + b_i)\zeta_i$, and $\sum_i a_i=-1$.
\label{as:sufficient}
\eassumption
Equivalent to fixing $\Vert \ambf \Vert_1 = 1$, $a_i \geq 0$ for the diagonal linear case (Section~\ref{ssec:metrics}), the conditions in Assumption~\ref{as:sufficient} are sufficient conditions for LFPMs to be bounded and monotonically decreasing in off-diagonal elements of the confusion matrices. This is detailed in the following proposition. 
\bprop
Assumption~\ref{as:sufficient} is sufficient for $\phi \in \varphi_{LFPM}$ to be bounded in $[-1,0]$ and simultaneously monotonically decreasing in $\{c_{i}\}_{i=1}^q$.
\label{prop:sufficient}
\eprop

We consider $b_0 = \sum_i -(a_i + b_i)\zeta_i$, instead of the derived condition $b_0 \geq \sum_i -(a_i + b_i)\zeta_i$, which is sufficient to guarantee a unique metric bounded in $[-1, 0]$ for elicitation purposes (instead of one of the equivalent alternatives). Note that most existing linear-fractional metrics satisfy these conditions~\cite{hiranandani2018eliciting, koyejo2015consistent, narasimhan2018learning}.

Now, suppose that the oracle's metric is $ \phi^{*} \in \varphi_{LFPM} $.
Let $\btau^*$ and $\barbelow{\tau}^*$ be the maximum and minimum value of $\phi^{*}$, respectively.
Due to strict convexity of $\Scal_\lambda$, we have a hyperplane
\begin{align}
\bell_f^* := \sum_{i=1}^q(a_i^* - \btau^*b_i^*)\bar c^*_{i} = \btau^* b_0 
\end{align}
touching the set $\Scal_\lambda$ only at BO confusions $\cmbfbar^{*}$ (over the sphere $\Scal_\lambda$) on the lower boundary $\partial \Scal^-_{\lambda}$. Similarly, we have a hyperplane
\begin{ceqn}
\bequation
\tell_f^* := \sum_{i=1}^q(a_i^* - \ttau^*b_i^*)\barbelow{c}^*_{i} = \ttau^* b_0 
\eequation
\end{ceqn}
which touches the set $\Scal_\lambda$ only at inverse Bayes Optimal confusions $\barbelow{\cmbf}^{\ast}$ (over the sphere $\Scal_\lambda$) on the upper boundary $\partial \Scal^+_{\lambda}$. See Figure~\ref{fig:lin-fr}(c) for the visual intuition.

Here, we use strict convexity of $\Scal_\lambda$ and follow the same arguments as in DLFPM to get a hyerplane ${\bell^\ast := \inner{\smbfbar}{\cmbf} = \inner{\smbfbar}{\cmbfbar^{*}}}$ after using Algortihm~4.2. Here, $\cmbfbar^{*}$  is the optimal best (BO) off-diagonal confusion on the sphere.  The only difference is that the BO confusions lie on the lower boundary $\partial \Scal^-_{\lambda}$ (monotonically decreasing). The SoE we get is:  

\begin{align}
a^*_i - \btau^* b^*_i = \alpha \sbar_i \;\; \forall\; i \in [q], \qquad 
\btau^* b^*_0 = \alpha \inner{\smbfbar}{\cmbfbar^{*}}
\label{eq:lin-fr-equi}
\end{align}
where $\alpha \geq 0$. 
Similar to DLFPMs, by knowing $\ambf^\ast$, we can elicit the LFPM upto a constant multiple.

\bprop
Knowing $\ambf^\ast$ i.e. using $\hat \ambf = \ambf^\ast$ solves the SoEs~\eqref{eq:lin-fr-equi} as:
\begin{align}
\hat b_i = (\hat a_i - \sbar_i)\frac{\Lambda'_1}{\Lambda'_2}, 
\end{align}
where $\Lambda'_1 = -\sum_i \hat a_i \zeta_i$, $\Lambda'_2 = \inner{\smbfbar}{\cmbfbar^{*}} + \sum_i (\hat a_i - \sbar_i)\zeta_i$, and $\hat b_0$ is as defined in Assumption~\ref{as:sufficient}. 
\label{prop:lowersystemsphere}
\eprop

Now again the question is how do we get the true $\ambf^\ast$. To our rescue, 
we also know that an LFPM is quasiconvex. Thus, by minimizing the metric using Algorithm~4.5 (described next), we can get a similar hyperplane ${\tell^\ast := \inner{\tsmbf}{\barbelow{\cmbf}} = \inner{\tsmbf}{\barbelow{\cmbf}^*}}$ tangent to the upper boundary $\partial \Scal^+_{\lambda}$. 

\balgorithm[t]
\caption{General (Quasiconcave) Metric Minimization}
\label{mult-alg:linearmin22}
\balgorithmic[1]
\STATE Follow Algorithm~\ref{mult-alg:lpm} except:
\STATE \textbf{Initialize:} $\theta_j^a = 0$, $\theta_j^b = \pi/2$ in steps 9-13 of Algorithm~\ref{mult-alg:lpm}.
\STATE \textbf{Invert Responses:} Replace oracle responses $\cmbf\prec \cmbf'$ with $\cmbf\succ \cmbf'$ and vice versa.
\ealgorithmic
\ealgorithm

\textbf{Algorithm~4.5} \emph{Minimizing quasiconvex metrics of off-diagonal confusions:} This algorithm is same as Algorithm~4.2 with only two changes. First, we start with $\thetambf \in [0, \pi/2]^q$, because the optimum will lie on the upper boundary $\partial \Scal^+_{\lambda}$. Second, we check for $ \cmbf \prec \cmbf'$ whenever Algorithm~4.2 checks for $\cmbf \succ \cmbf'$, and vice versa. 
Here, we output the counterpart, i.e., slope $\tsmbf$.

Thus, a similar SoE~\eqref{eq:lin-fr-equi} whose solution looks like Proposition~\ref{prop:lowersystemsphere} is obtained. After obtaining $\bell^\ast$ and $\tell^\ast$, we run grid-search Algorithm~4.6  to find the estimates of the true $a_i$'s. 
The algebra related to LFPM elicitation is same as the DLFPM case. However, this time we need to search in $[0,1]^{q-1}$ grid. Again, we have easy access to off-diagonal confusions on the sphere $\partial \Scal_\lambda$ corresponding to BO or IBO off-diagonal confusions for different LPMs (Lemma~\ref{mult-lem:spherebayes}); therefore, we can use the following algorithm, which is analogous to Algorithm~\ref{append:alg:d-grid-search}. 

\textbf{Algorithm~4.6} \emph{LFPM: grid-search for best pairwise ratios:} This is same as Algorithm~\ref{append:alg:d-grid-search} except the following two changes. First, the second line of Algorithm~\ref{append:alg:d-grid-search} will have a for loop running from 2 to $q-1$. Second, in line 4, samples will be generated from the surface of the sphere $\partial \Scal_\lambda$ as discussed above, instead of $\partial \Dcal_{1, k}$.

\balgorithm[t]
\caption{LFPM: Grid-Search for Best Pairwise Ratios}
\label{mult-alg:lfpmgridsearch}
\balgorithmic[1]
\STATE Follow Algorithm~\ref{append:alg:d-grid-search} except:
\STATE Run the for loop in step 2 of Algorithm~\ref{append:alg:d-grid-search} for 2 to $q-1$.
\STATE Generate samples from $\partial\Scal_\lambda$.
\ealgorithmic
\ealgorithm

\subsection{Monotonic Metrics of diagonal confusions }
\label{append:sec:monotonic}

Recall that the space $\Dcal$ is strictly convex. Suppose that the oracle's metric is $\psi^\ast$, which is just monotonic increasing in $\{d_i\}_{i=1}^k$. Let $\ambf^\ast$ be the slope of the supporting hyperplane at the optimal diagonal confusions $\dmbf^\ast$. Then we may use Algorithm~4.1 which will return a linear metric $\ambfhat$ by using pairwise comparisons. Notice that, we may then compute an estimate of the BO diagonal confusions $\dmbfhat$ using Proposition~\ref{prop:d-bayes} corresponding to the output $\ambfhat$ of the algorithm.  Since the space $\Dcal$ is strictly convex, $\inner{\ambfhat}{\dmbf} = \inner{\ambfhat}{\dmbfhat}$ becomes the estimate of the unique supporting hyperplane at $\dmbfhat$.

The first order approximation of $\psi^\ast$ at $\dmbfhat$ can be given by:
\bequation
\psi^\ast(\dmbf) = \psi^\ast(\dmbfhat) + \inner{\ambfhat}{\dmbf - \dmbfhat}. 
\eequation
Since performance metrics are not affected by scale and additive biases, then the first order approximation given by $\inner{\ambfhat}{\dmbf}$ suffices for the elicitation task. Notice that this is of high practical importance to practitioners, since this is an estimate of the weighted accuracy at the estimate of the optimal diagonal confusions. 
\section{Guarantees}
\label{mult-sec:guarantees}
\vskip -0.2cm
We discuss robustness under the following feedback model, which is useful in practical scenarios, and is borrowed from Definition~\ref{me-def:noise}.  

\bdefinition[Oracle Feedback Noise: $\epsilon_\Omega \geq 0$] The oracle responds correctly as long as $|\phi(\cmbf) - \phi(\cmbf')| > \epsilon_\Omega$ (analogously $|\psi(\dmbf) - \psi(\dmbf')| > \epsilon_\Omega$). Otherwise, it may provide incorrect answers.
\label{def:noise}
\edefinition

In other words, the oracle may respond incorrectly if the confusions are too close as measured by the metric $\phi$ (analogously $\psi$). Next, we discuss elicitation guarantees for DLPM and LPM elicitation. 

\btheorem Given $\epsilon,\epsilon_\Omega\geq 0$, and a 1-Lipschitz DLPM $\psi^{\ast}$ parametrized by $\ambf^\ast$. Then the output $\ambfhat$ of Algorithm~4.1 after $O((k-1)\log \tfrac 1\epsilon)$ queries to the oracle satisfies $\Vert \ambf^*-\ambfhat \Vert_{\infty}\leq O(\epsilon+\sqrt{\epsilon_\Omega})$, which is equivalent to $\Vert \ambf^*-\ambfhat \Vert_{2}\leq O(\sqrt k (\epsilon+\sqrt{\epsilon_\Omega}))$ using standard norm bounds.  
\label{thm:dlpm-elit-error}
\etheorem
\vspace{-0.5cm}

Next, we guarantee LPM elicitation when the sphere radius dominates the oracle noise. 

\btheorem 
 Given $\epsilon,\epsilon_\Omega\geq 0$, and a 1-Lipschitz LPM $\phi^\ast$ parametrized by $\ambf^*$. Suppose $\lambda \gg \epsilon_\Omega$, then the output $\ambfhat$ of Algorithm~4.2 after $O\left(z_1\log(z_2/(q\epsilon^2))(q-1)\log \tfrac \pi {2\epsilon}\right)$ queries satisfies $\Vert \ambf^*-\ambfhat \Vert_{2}\leq O(\sqrt{q}(\epsilon+\sqrt{\epsilon_\Omega/\lambda}))$, where $z_1, z_2$ are constants independent of $\epsilon$ and $q$.  
 \label{thm:lpm-elit-error}
\etheorem 

We see that the algorithms are robust to noise, and their query complexity depends linearly in the unknown entities. The term $z_1\log(z_2/(q\epsilon^2))$ may attribute to the number of cycles in Algorithm~4.2, but due to the  curvature of the sphere, we observe that it is not a dominating factor in the query complexity. 
For instance, we find that when $\epsilon = 10^{-2}$, two cycles (i.e. $T = 2(q-1)$ in Algorithm~4.2) are sufficient for achieving elicitation up to the error tolerance $\sqrt{q}\epsilon$. 
Moreover, the query complexity in Theorem~\ref{thm:lpm-elit-error} is optimal. We show this in Chapter~\ref{chp:quadratic} for the quadratic elicitation case, which in turn applies to the above linear elicitation case as well. One remaining question for LPM elicitation is to select a sufficiently large value of $\lambda$. Algorithm~\ref{mult-alg:lambda} (Appendix~\ref{append:ssec:slambda}) provides an offline procedure to compute a $\lambda\geq \tilde{r}/k$, where $\tilde{r}$ is the radius of the largest ball contained in the set $\Ccal$. 

{\bf ME with Finite Samples:}
As a final step, we consider the following questions when working with finite samples: (a) do we get the correct feedback from querying  $\Omega(\cmbfhat, \cmbfhat')$ instead of querying $\Omega(\cmbf, \cmbf')$? (b) what is the effect of $\hat \eta_i$'s when used in place of true $\eta_i$'s? The answers are straightforward. Since the sample estimates of confusion matrices are consistent estimators and the metrics discussed are $1$-Lipschitz with respect to the confusion matrices, with high probability, we gather correct oracle feedback as long as we have sufficient samples. Furthermore, 
 subject to regularity assumptions, Lemma~\ref{lem:sample-Cs-optimize-well} shows that the errors due to using $\hat\eta$ affect the (binary) confusion matrices on the boundary in a controlled manner. Since  Algorithm~4.1 uses pairwise RBO (binary) classifiers, it inherits the error guarantees in the multiclass case. On the other hand, since Algorithm~4.2 does not use the boundary, its results are agnostic to finite sample error as long as the sphere is contained within $\Ccal$. 
\vspace{-0.2cm}
\section{Experiments}
\label{sec:extensions}
In this section, we empirically validate the results of theorems~\ref{thm:dlpm-elit-error} and~\ref{thm:lpm-elit-error} and investigate sensitivity due to finite sample estimates.\footnote{A subset of results is shown here. Refer Appendix~\ref{append:sec:experiments} for more results.} 
For the ease of judgments, we show results for $k=3$ and $k=4$ classes.

\subsection{Synthetic Data Experiments}
\label{ssec:theoryexp}
\begin{table}
	\caption{DLPM elicitation at $\epsilon = 0.01$ for synthetic data. The number of queries used for $k=3$ and $k=4$ is 56 and 84, respectively.}  
\vskip -0.3cm
	\label{tab:DLPMtheory}
	\begin{center}
		\begin{small}
				\begin{tabular}{|c|c|c|c|}
					\hline
					\multicolumn{2}{|c|}{Classes $k=3$} & \multicolumn{2}{c|}{Classes $k=4$} \\ \hline
					  $\psi^{\ast} = \ambf^\ast$ & $\hat \psi = \ambfhat$ & $\psi^{\ast} = \ambf^\ast$ & $\hat \psi = \ambfhat$ \\ \hline 
					  (0.21, 0.59, 0.20) & (0.21, 0.60, 0.20) & (0.22, 0.13, 0.14, 0.52) & (0.22, 0.13, 0.14, 0.52)  \\
					  (0.23, 0.15, 0.62) & (0.23, 0.15, 0.62) &  (0.58, 0.17, 0.08, 0.18) & (0.58, 0.17, 0.08, 0.18) \\
					\hline
				\end{tabular}
		\end{small}
	\end{center}
	\vskip -0.4cm
\end{table}
\begin{table}
	\caption{LPM elicitation at $\epsilon = 0.01$ for synthetic data. The number of queries used for $k=3$ and $k=4$ is 320 and 704, respectively.}
	\vskip -0.3cm
	\label{tab:LPMtheory}
	\begin{center}
		\begin{small}
				\begin{tabular}{|c|c|c|}
					\hline
					  Classes & $\sphi = \ambf^\ast$ & $\hphi = \ambfhat$ \\ \hline 
					  3 & (-0.37, -0.89, -0.09, -0.23, -0.04, -0.03) & (-0.37, -0.89, -0.09, -0.23, -0.04, -0.03)  \\ 
					  3 & (-0.80, -0.55, -0.18, -0.08, -0.14, -0.05) & (-0.80, -0.55, -0.18, -0.08, -0.14, -0.05)  \\ \hline
					  \multirow{2}{*}{4}  & (-0.90, -0.28	-0.10, -0.31, -0.04,	-0.05, & (-0.90,	-0.28,	-0.10,	-0.31,	-0.04,	-0.05, \\
					  & -0.03,	-0.04,	-0.02,	-0.01,	-0.01,	-0.01) & -0.03,	-0.04,	-0.02,	-0.01,	-0.01,	-0.01) \\ 
					  \multirow{2}{*}{4} & (-0.54,	-0.10,	-0.62,	-0.52,	-0.03,	-0.07, & (-0.55,	-0.11, -0.62,	-0.51,	-0.03,	-0.07,  \\ 
					  & -0.11,	-0.07,	-0.14,	-0.03,	-0.03,	-0.04) & -0.11,	-0.07,	-0.14, -0.03,	-0.03,	-0.04)\\
					\hline
				\end{tabular}
		\end{small}
	\end{center}
	\vskip -0.40cm
\end{table}

\begin{figure*}[t]
	\centering 
	\subfigure{
		{\includegraphics[width=5cm]{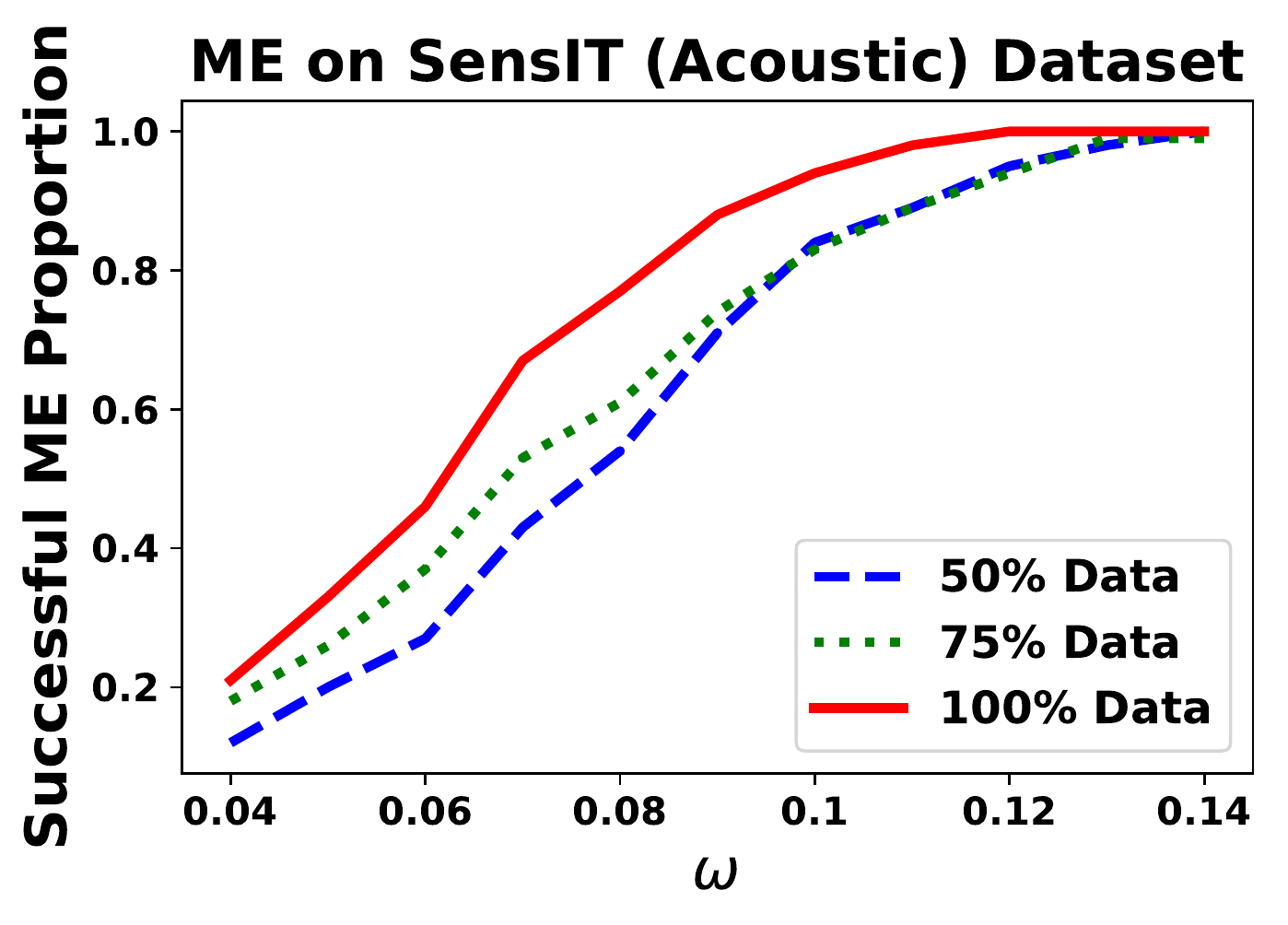}}
		\label{fig:app:lfpm_1}
	}
	\qquad
	\subfigure{
		{\includegraphics[width=5cm]{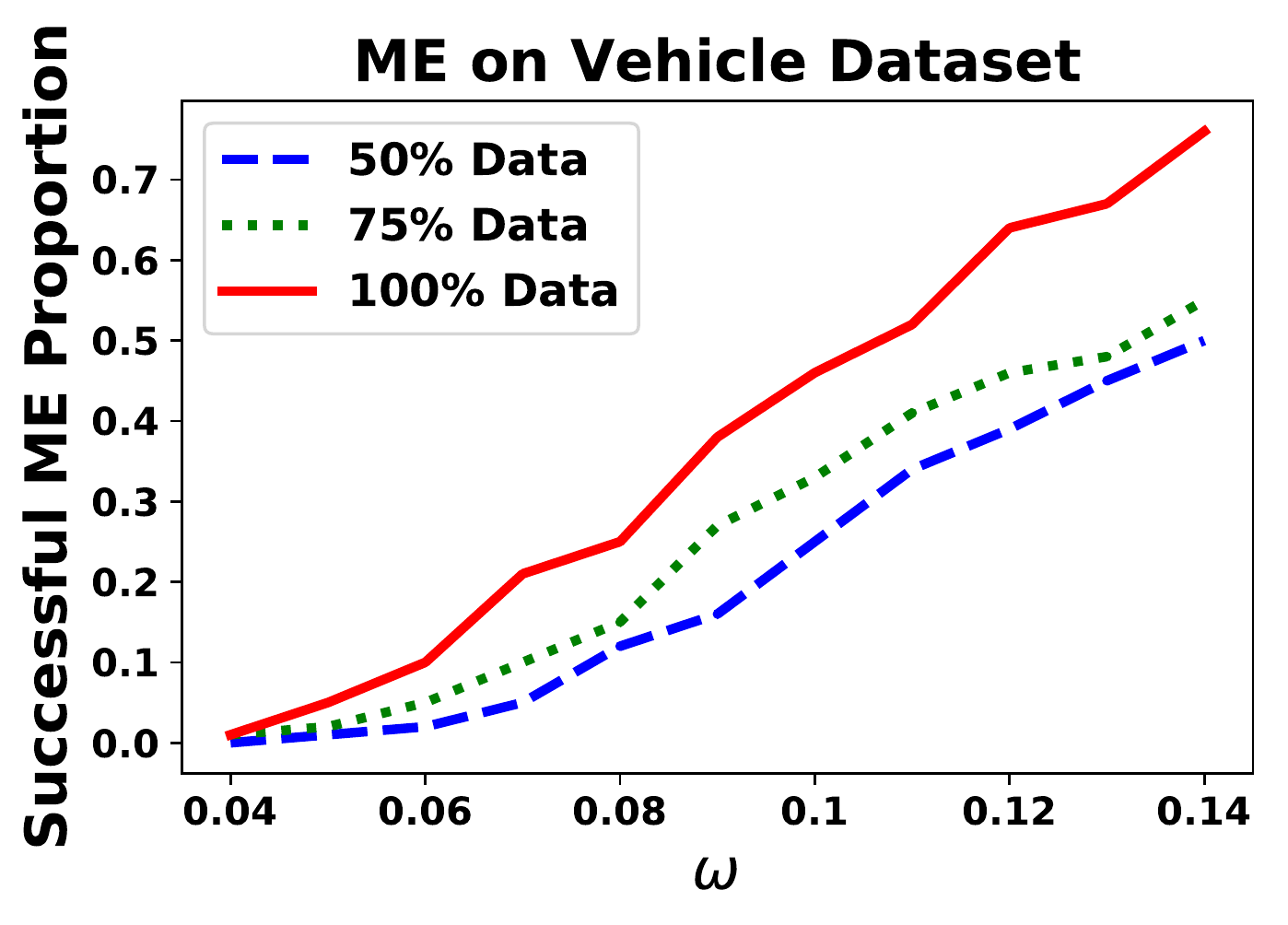}}
		\label{fig:app:lfpm_2}
	}
	\vskip -0.25cm
	\caption{DLPM elicitation on real data for $\epsilon = 0.01$. For randomly chosen hundred $\ambf^\ast$, we show the proportion of times our estimates $\ambfhat$ obtained with $4(k-1)\ceil*{\log(1/\epsilon)}$ queries satisfy $\Vert \ambf^\ast - \ambfhat \Vert_\infty \leq \omega$.}
	\label{fig:dlpmreal}
	\vskip -0.4cm
\end{figure*}

We assume a joint distribution for $\Xcal = [-1,1]$ and $\Ycal = [k]$. This is given by the marginal distribution $f_X = \Umbb[-1,1]$ and $\eta_i(x) = \frac{1}{1 + e^{p_i x}}$ for $i \in [k]$, where $\Umbb[-1,1]$ is the uniform distribution on $[-1, 1]$ and $\{p_i\}_{i=1}^k$ are the parameters controlling the degree of noise in the labels. We fix $(p_1, p_2, p_3) = (1,3,5)$ and $(p_1, p_2, p_3, p_4) = (1,3,6, 10)$ for experiments with three and four classes, respectively. 
To verify elicitation, we first define a true metric $\psi^{\ast}$ or $\sphi$. This specifies the query outputs of Algorithm~4.1 or Algorithm~4.2. Then we run the algorithms to check whether or not we recover the same metric. Some results are shown in Table~\ref{tab:DLPMtheory} and Table~\ref{tab:LPMtheory}. Results verify that we elicit the true metrics even for small $\epsilon = 0.01$, and as predicted, this requires only $ 4(k-1)\ceil*{\log(1/\epsilon)}$ and $ 4T\ceil*{\log(\pi/2\epsilon)}$ queries for DLPM and LPM elicitation respectively, where $\ceil*{\cdot}$ is the ceil function and $T=2(q-1)$.

\subsection{Real-World Data Experiments}
\label{ssec:realexp}
\vskip -0.1cm
Finite samples may affect the size of the sphere $S_\lambda$ in LPM elicitation, but we observe that as long as $\lambda$ is greater than $\epsilon_\Omega$ LPMs can be elicited (Appendix~\ref{append:ssec:sphereexp}). Thus, here  we emprically validate only DLPM elicitation with finite samples. 
We consider two real-world datasets: (a) SensIT (Acoustic) dataset~\cite{duarte2004vehicle} (78823 instances, 3 classes), and (b) Vehicle dataset~\cite{siebert1987vehicle} (846 instances, 4 classes). From each dataset, we create two other datasets containing randomly chosen $50\%$ and $75\%$ of the datapoints. So, we have six datasets in total. For all the datasets, we standardize the features and split the dataset into two parts $\Scal_1$ and $\Scal_2$. On $\Scal_1$, we learn $\{ \hat\eta_i(x) \}_{i=1}^k$ using a regularized softmax regression model. 
We use $\Scal_2$ for making predictions and computing sample confusions. 

We randomly selected 100 DLPMs i.e. $\ambf^\ast$'s. 
We then used Algorithm~4.1 with $\epsilon=0.01$ to recover the estimates $\ambfhat$'s. 
In Figure~\ref{fig:dlpmreal}, we show the proportion of times $\Vert \ambf^\ast - \ambfhat \Vert_\infty \leq \omega$ for different values of $\omega$. 
We see improved elicitation as we increase the number of datapoints in both the datasets, suggesting that ME improves with larger datasets. In particular, for the full SensIT (Acoustic) dataset, we elicit all the metrics within $\omega = 0.12$. We also observe that $\omega \in [0.04, 0.08]$ is an overly tight evaluation criterion that can result in failures. 
This is because the elicitation routine gets stuck at the closest achievable sample confusions, which need not be optimal within the (small) search tolerance $\epsilon$. 
\vspace{-0.5cm}
\section{Discussion and Future Work}
\label{mult-sec:discussion}
\bitemize[itemsep=0.1em, leftmargin=*]

\item\textbf{Practical Convenience.} Our procedures can also be applied by posing pairwise classifier comparisons directly. One way is to use A/B testing~\cite{tamburrelli2014towards} where the user population acts an oracle. Another way is to use comparisons from a single expert, perhaps combined with interpretable machine learning techniques~\cite{ribeiro2016should,doshi2017towards}. 
We suggest the approach proposed by Narasimhan~\cite{narasimhan2018learning} for estimating the classifier associated with a given confusion matrix. 
\item \textbf{Advantage of Algorithm~4.1.} 
If there is a reason to restrict the metric search to DLPM e.g. due to prior knowledge, then Algorithm~\ref{mult-alg:dlpm} is preferred for its lower query complexity. 
\item \textbf{Future Work.} 
We plan to extend our procedures for the oracles that are only probably correct. This can be done easily by applying majority voting over repeated queries~\cite{kaariainen2006active}.
\eitemize
\vskip -0.3cm

\vspace{-0.5cm}
\section{Related Work}
\label{sec:relatedwork}
\vskip -0.2cm
The closest line of work to this chapter  is the simpler setting of binary classification from Chapter~\ref{chp:binary}. As we move to multiclass performance ME, we find that the form of metrics and the complexity of the query space increases. This results in stark differences in the elicitation algorithms. Algorithm~4.1, which is closest to the binary approach, only works for Restricted Bayes Optimal classifiers, and Algorithm~4.2 requires a coordinate-wise binary-search approach. As a result, novel methods are also required to provide query complexity guarantees. 
The LPM elicitation problem can be posed as a Derivative-Free Optimization~\cite{jamieson2012query} to a certain extent, but only after exploiting the geometry as we have. 
In addition, passively learning linear functions using pairwise comparisons has been studied before~\cite{herbrich2000large, joachims2002optimizing, peyrard2017learning}, but these approaches fail to control sample (i.e. query) complexity and end up utilizing more queries than the active approaches~\cite{settles2009active, jamieson2011active, kane2017active}. Papers which actively control the query samples for linear elicitation, e.g.~\cite{qian2015learning}, exploit the query space like us in order to achieve lower query complexity. However, unlike us,~\cite{qian2015learning} does not provide theoretical bounds and is also applied to a different query space.
\section{Concluding Remarks}
\label{mult-sec:conclusion}
We study the space of multiclass confusions and propose efficient algorithms to elicit diagonal-linear and linear performance metrics. We theoretically show that the procedures are robust under feedback and finite sample noise and validate the latter empirically via simulated oracles. 
We extend elicitation to other families e.g. linear-fractional metrics, thus covering a wide range of metrics encountered in practice. 
\chapter{Fair Performance Metric Elicitation}
\label{chp:fair}

Machine learning models are increasingly employed for critical decision-making tasks such as
hiring and sentencing~\cite{singla2015learning, angwin2016machine, corbett2017algorithmic, friedler2019comparative, lahoti2019ifair}. Yet, it is increasingly evident that automated decision-making is susceptible to bias, whereby decisions made by the algorithm are unfair to certain subgroups~\cite{barocas2016big,angwin2016machine, chouldechova2017fair, berk2018fairness, lahoti2019ifair}. To this end, a wide variety of group fairness metrics 
have been proposed – all to reduce discrimination and bias from automated decision-making~\cite{kamishima2012fairness, dwork2012fairness, hardt2016equality, kleinberg2017inherent, woodworth2017learning, menon2018cost}. However, 
a dearth of formal principles for selecting the  most appropriate metric has highlighted the confusion of experts, practitioners, and end users in deciding which group fairness metric to employ~\cite{zhang2020joint}. This is further exacerbated by the observation that common metrics often lead to contradictory outcomes~\cite{kleinberg2017inherent}. 

While the problem of selecting an appropriate fairness metric has gained prominence in recent years~\cite{hardt2016equality, menon2018cost,zhang2020joint}, it perhaps best understood as a special case of the task of choosing evaluation metrics in machine learning. For instance, when a cost-sensitive predictive model classifies patients into cancer categories~\cite{yang2014multiclass} even without considering fairness, it is often unclear how the cost-tradeoffs be chosen so that they reflect the expert’s decision-making, i.e., replacing expert intuition by quantifiable metrics. The proposed {\rm Metric Elicitation} (ME) framework provides a solution. 

Existing research suggests a fundamental trade-off between algorithmic fairness and performance~\cite{kamishima2012fairness, zafar2017fairness, corbett2017algorithmic, bechavod2017learning, menon2018cost, zhang2020joint}, 
where in addition to appropriate 
metrics, the practitioner or policymaker must choose a trade-off operating point between the competing objectives~\cite{zhang2020joint}. To this end, in this chapter, we extend the ME framework from eliciting multiclass classification metrics to the task of eliciting \emph{fair} performance metrics from pairwise preference feedback 
in the presence of multiple sensitive groups. In particular, we elicit metrics that reflect, jointly, the (i) predictive performance evaluated as a weighting of classifier's overall predictive rates, (ii) fairness violation assessed as the discrepancy in predictive rates among groups, and (iii) a trade-off between the predictive performance and fairness violation. Importantly, the elicited metrics are sufficiently flexible to encapsulate and generalize many existing predictive performance and fairness violation measures.

In eliciting group-fair performance metrics, we tackle three new challenges. 
First, from preference query perspective, the predictive performance and fairness violations are correlated, thus increasing the complexity of joint elicitation.
Second, we find that in order to measure both positive and negative violations, the fair metrics are necessarily non-linear functions of the predictive rates, thus existing results on linear ME from previous chapters cannot be applied directly. Finally, as we show, the number of groups directly impacts query complexity. 
We overcome these  challenges by proposing a novel query efficient procedure that exploits the geometric properties of the set of predictive rates. 

{\bf Contributions.} We consider metrics for algorithmically group-fair classification and propose a novel approach for eliciting predictive performance, fairness violations, and their trade-off point, from expert pairwise feedback.
Our procedure uses binary-search based subroutines and recovers the metric with linear query complexity.
Moreover, the procedure is robust to both  finite sample and oracle feedback noise thus is useful in practice. Lastly, our method can be applied either by querying preferences over classifiers or predictive  rates, which is our choice of measurements (classifier statistics) for this chapter. 
All the proofs in this chapter are provided in Appendix~\ref{apx:fair}.

\textbf{Notations.} Matrices and vectors are denoted by bold upper case and bold lower case letters, respectively. 
The group membership is denoted by superscripts and coordinates of vectors, matrices, and tuples are denoted by subscripts.
\section{Background}
\label{sec:preliminaries}

The standard multiclass, multigroup classification setting comprises $k$ classes and $m$ groups with $X \in \Xcal$, $G \in [m]$ and $Y \in [k]$ representing the input, group membership, and output random variables, respectively. The groups are assumed to be disjoint and known apriori~\cite{hardt2016equality, kleinberg2017inherent}. We have access to a dataset $\{(\xmbf, g, y)_i\}_{i=1}^n$ of size $n$, generated \emph{iid} from a distribution $ \Pmbb(X, G, Y)$. The measurements (classifier statistics) that we choose to work with in this chapter are the group-specific rates and the overall rates, which are described below.  

\emph{Group-specific rates:} 
We consider separate (randomized) classifiers $h^g : \Xcal \rightarrow \Delta_k$ for each group $g$, and use 
 \bequation
 \Hcal^g = \{h^g : \Xcal \rightarrow \Delta_k\}
 \eequation
 to denote the set of all classifiers for group $g$. 
The group-specific rate matrix $\Rmbf^g(h^g, \Pmbb) \in \Rmbb^{k \times k}$ for a classifier $h^g$ 
is given by:
\begin{align}
	R^g_{ij}(h^g, \Pmbb) \coloneqq \Pmbb(h^g = j | Y = i, G= g )  \quad \text{for} \; i, j \in [k].
	\label{eq:components}
\end{align}
Notice that the predictive rates satisfy the following useful decomposition: 
\begin{equation}
    R_{ii}^g(h^g, \Pmbb) = 1 - \sum\nolimits_{j=1,j\neq i}^k R_{ij}^g(h^g, \Pmbb),
    \label{eq:decomp}
\end{equation}
any rate matrix is uniquely represented by its $q \coloneqq (k^2 - k)$ off-diagonal elements  as a vector $\rmbf^g(h^g, \Pmbb) = \offdiag(\Rmbf^g(h^g, \Pmbb))$. So we  will  interchangeably refer to the rate matrix as a \emph{`vector of rates'}. 
The feasible set of rates associated with a group $g$ is denoted by
\bequation
\Rcal^g = \{\rmbf^g(h^g, \Pmbb) \,:\, h^g \in \Hcal^g \}.
\eequation
For clarity, we will suppress the dependence on $\Pmbb$ and $h^g$ if it is clear from the context.

\emph{Overall rates:} We define the overall classifier $h : (\Xcal, [m]) \rightarrow \Delta_k$ by 
\bequation
h(\xmbf, g) \coloneqq h^g(\xmbf)
\eequation
and denote its tuple of group-specific rates by:
\bequation
\rmbf^{1:m} \coloneqq  (\rmbf^1, \dots, \rmbf^m) \in \Rcal^1 \times \dots \times \Rcal^m =: \prodRcal.
\eequation
This tuple allows us to measure the fairness violation across groups. The fairness violation is believed to be in trade-off with the predictive performance~\cite{kamishima2012fairness, bechavod2017learning, menon2018cost}.  The latter is measured using the overall rate matrix of the classifier $h$:
\begin{align*}
R_{ij} \coloneqq \Pmbb(h=j|Y=i) 
= \sum\nolimits_{g=1}^m t_{i}^gR_{ij}^g,
    \numberthis \label{eq:overallrate}
\end{align*}
where $ t^g_{i} \coloneqq \Pmbb(G=g|Y=i)$ is the prevalence of group $g$ within class $i$. For an overall classifier $h$, the \emph{`vector of rates'} $\rmbf = \offdiag(\Rmbf)$ can be conveniently written in terms of its group-specific tuple of rates as \bequation
\rmbf =  \sum_{g=1}^m \bm{\tau}^g \odot \rmbf^g,
\eequation
where $\bm{\tau}^g \coloneqq \offdiag([\tmbf^g \; \tmbf^g \dots \tmbf^g]).$ 

{\em Fairness violation measure:} The (approximate) fairness of a classifier is often determined by the `discrepancy' in rates across different groups e.g. \emph{equalized odds}~\cite{hardt2016equality, barocas2017fairness}. So given two groups $u, v\in[m]$, we define the discrepancy in their rates as:
\vspace{-0.1cm}
\begin{equation}
    \dmbf^{uv} \coloneqq \vert \rmbf^{u} - \rmbf^{v} \vert.
    \label{eq:discrepancy}
\end{equation}
\vskip -0.3cm
\noindent Since there are $m$ groups, the number of \emph{discrepancy vectors} are $\tiny{{m\choose 2}}$ . 

\vspace{-0.15cm}
\subsection{Fair Performance Metric}
\label{ssec:metric}
\vskip -0.1cm
We aim to elicit a general class of metrics, which recovers and generalizes existing fairness measures, based on trade-off between predictive performance and fairness violation~\cite{kamishima2012fairness, hardt2016equality, chouldechova2017fair, bechavod2017learning, menon2018cost}.
Let $\phi : [0, 1]^{q}  \rightarrow \Rmbb$ be the cost of overall misclassification (aka.\ predictive performance) and $\varphi : [0, 1]^{m \times q} \rightarrow \Rmbb$ be the fairness violation cost for a classifier $h$ 
determined by the overall rates $\rmbf(h)$ and group discrepancies 
$\{\dmbf^{uv}(h)\}_{u,v = 1, v >u}^m$, respectively. 
Without loss of generality (w.l.o.g.), we assume the metrics $\phi$ and $\varphi$ are costs. 
Moreover, the metrics 
are scale invariant as global scale does not affect the learning problem~\cite{narasimhan2015consistent}; hence let $\phi : [0, 1]^{q}  \rightarrow [0,1]$ and $\varphi : [0, 1]^{m\times q}  \rightarrow [0,1]$.

\bdefinition[Fair Performance Metric] Let $\phi$
and $\varphi$ be monotonically increasing linear functions of overall rates and group discrepancies, respectively. The fair metric $\Psi$ is a trade-off between $\phi$ and $\varphi$. 
In particular, given $\ambf \in \Rmbb^q, \ambf \geq 0$ (misclassification weights), a set of vectors $\Bmbf \coloneqq \{\bmbf^{uv} \in \Rmbb^q, \bmbf^{uv}\geq0\}_{u, v=1, v>u}^m$ (fairness violation weights), and a scalar $\lambda$ (trade-off) with
\begin{align*}
    \Vert \ambf \Vert_2 = 1, \quad \quad \sum\nolimits_{u,v=1, v>u}^{m} \Vert \bmbf^{uv} \Vert_2 = 1, \quad \quad 0 \leq \lambda \leq 1, \numberthis
    \label{eq:scaleinvariance}
\end{align*}
(w.l.o.g., due to scale invariance), we define the metric $\Psi$ as:
\begin{align*}
    \Psi(\tupr \,;\, \ambf, \Bmbf, \lambda) \,\coloneqq\,  \underbrace{(1-\lambda)}_{\text{trade-off}}\underbrace{\inner{\ambf}{\rmbf}}_{\phi(\rmbf)} + \lambda \underbrace{\left(\sum\nolimits_{u,v=1,v>u}^{m} \inner{\mathbbm{\bmbf}^{uv}}{\dmbf^{uv}}\right)}_{\varphi(\tupr)}
    \numberthis \label{eq:linmetric}.
\end{align*}
\label{def:linear}
\edefinition

Examples of the misclassification cost $\phi(\rmbf)$ include cost-sensitive linear metrics~\cite{abe2004iterative}. Many existing fairness metrics for two classes and two groups such as \emph{equal opportunity}~\cite{hardt2016equality}, \emph{balance for the negative class}~\cite{kleinberg2017inherent} \emph{error-rate balance} (i.e., $0.5|r_1^1 - r^2_1| + 0.5|r_2^1 - r^2_2|)$~\cite{chouldechova2017fair}, \emph{weighted equalized odds} (i.e., $b_1|r_1^1 - r^2_1| + b_2|r_2^1 - r^2_2|)$~\cite{hardt2016equality, bechavod2017learning}, etc. correspond to fairness violations of the form $\varphi(\tupr)$ considered above. The combination of $\phi(\rmbf)$ and $\varphi(\tupr)$ as defined in $\Psi(\tupr)$ appears regularly in prior work~\cite{kamishima2012fairness, bechavod2017learning, menon2018cost}. 
Notice that the metric is flexible to allow different fairness violation costs for different pairs of groups 
thus capable of enabling reverse discrimination~\cite{opotow1996affirmative}. 
Lastly, while the metric is linear with respect to (w.r.t.) the discrepancies, it is non-linear w.r.t.\ the group-wise rates. Hence, standard linear ME algorithm from Chapters~\ref{chp:binary} and~\ref{chp:multiclass} cannot be trivially applied for eliciting the metric in Definition~\ref{def:linear}.

\subsection{Fair Performance Metric Elicitation; Problem Statement}
\label{ssec:me}
We now state the problem of \emph{Fair Performance Metric Elicitation (FPME)} and define the associated \emph{oracle query}. The broad definitions follow from Chapter~\ref{chp:me}, extended so the predictive rates (classifier statistics) and the performance metrics correspond to the multiclass multigroup-fair classification setting.

\bdefinition
[Oracle Query] Given two classifiers $h_1,  h_2$ (equivalent to a tuple of rates $\tupr_1, \tupr_2$ respectively), a query to the Oracle (with metric $\Psi$) is represented by:
\begin{align}
\Gamma(h_1, h_2\,;\,\Psi) = \Omega\left( \tupr_1, \tupr_2\,;\,\Psi\right) &= \1[\Psi(\tupr_1) > \Psi(\tupr_2)],
\end{align}
where $\Gamma: \Hcal \times \Hcal \rightarrow \{0,1\}$ and $\Omega: \prodRcal \times \prodRcal \rightarrow \{0, 1\}$. In simple words, the query asks whether $h_1$ is preferred to $ h_2$ (equivalent to whether $\tupr_1$ is preferred to $\tupr_2$), as measured by $\Psi$.
\label{fair-def:query}
\edefinition

In practice, the oracle can be an expert, a group of experts, or an entire user population. The ME framework can be applied by posing classifier comparisons directly to them via interpretable learning techniques~\cite{ribeiro2016should, doshi2017towards} or via A/B testing~\cite{tamburrelli2014towards}. For example, one may perform A/B testing for an internet-based application by deploying two classifiers A and B 
and use the population's level of engagement to decide the preference between the two classifiers. For other applications, intuitive visualizations of the predictive rates for two different classifiers (see e.g.,  \cite{zhang2020joint,beauxis2014visualization}) can be used to ask preference feedback from a group of domain experts. 

We emphasize that the metric $\Psi$ used by the  oracle is unknown to us and can be accessed only through queries to the oracle. 
Since the metrics we consider are functions of rates, comparing two classifiers on a metric is equivalent to comparing their corresponding rates. Henceforth, we will 
denote  any query to the oracle by a pair of rates $(\rmbf_1^{1:m}, \rmbf_2^{1:m})$.
Also, whenever we refer to an oracles's dimension, we are referring to the dimension of its rate arguments. For instance,  we will consider the oracle in Definition~\ref{fair-def:query} to be of dimension $m\times q$. 
Next, we formally state the FPME problem.

\bdefinition [Fair Performance Metric Elicitation with Pairwise Comparison Queries (given $\{(\xmbf,g,y)_i\}_{i=1}^n$)] Suppose that the oracle's (unknown) performance metric is $\Psi$. Using oracle queries of the form $\Omega(\hat \rmbf_1^{1:m}, \hat \rmbf_2^{1:m})$, where $\hat \rmbf_1^{1:m}, \hat \rmbf_2^{1:m}$ are the estimated rates from samples, recover a metric $\hPsi$ such that $\Vert \Psi - \hPsi \Vert < \omega$ under a  suitable norm $\Vert \cdot \Vert$ for sufficiently small error tolerance $\omega > 0$.
\label{def:me}
\edefinition
\vskip -0.05cm
Similar to the standard metric elicitation problems (Chapters~\ref{chp:binary} and~\ref{chp:multiclass}), the performance of FPME is evaluated both by the fidelity of the recovered metric and the query complexity. As done in decision theory literature~ \cite{koyejo2015consistent, hiranandani2018eliciting}, we present our FPME solution by first assuming access to population quantities such as the population rates $\tupr(h, \Pmbb)$, and then discuss how elicitation can be performed from finite samples, e.g., with empirical rates $\hat \rmbf^{1:m}(h, \{(\xmbf,g,y)_i\}_{i=1}^n))$.
\subsection{Linear Performance Metric Elicitation -- Warmup}
\label{fair-ssec:mpme}
We revisit the Linear Performance Metric Elicitation (LPME) procedure from Chapter~\ref{chp:multiclass}, which we will use as as a subroutine to elicit fair performance metrics. 
The LPME procedure assumes an enclosed sphere $\Scal \subset \Zcal$, where $\Zcal$ is the $q$-dimensional space of classifier statistics that are feasible, i.e., can be achieved by some classifier. 
It also assumes access to a $q$-dimensional oracle $\Omega'$ whose scale invariant linear metric is of the form $\xi(\zmbf) \coloneqq \inner{\ambf}{\zmbf}$ with $\Vert \ambf \Vert_2=1$, analogous to the misclassification cost in Definition~\ref{def:linear}. Analogously, the oracle queries are of the type $\Omega'( \zmbf_1, \zmbf_2 ) \coloneqq \1[\xi(\zmbf_1) > \xi(\zmbf_2)]$.

When the number of classes $k = 2$, LPME elicits the coefficients $\ambf$ using a simple one-dimensional binary search. When $k > 2$, LPME performs binary search in each coordinate while keeping the others fixed, and performs this in a coordinate-wise fashion until convergence. By restricting this coordinate-wise binary search procedure to posing queries from within a sphere $\Scal$, LPME can be equivalently seen as minimizing a strongly-convex function and shown to converge to a solution $\ambfhat$ close to $\ambf$. Specifically, 
the algorithm
takes the query space $\Scal\subset\Zcal$, binary-search tolerance $\epsilon$, and the oracle $\Omega'$ as input, and by querying $O(q\log(1/\epsilon))$ queries recovers $\ambfhat$ with $\Vert \ambfhat \Vert_2=1$  such that $\Vert \ambf - \ambfhat \Vert_2 \leq O(\sqrt{q}\epsilon)$ (Theorem~\ref{thm:lpm-elit-error} in Chapter~\ref{chp:multiclass}). Please see the details of the LPME procedure in Algorithm~4.2 (Chapter~\ref{chp:multiclass}) for completeness. We summarize the discussion with the following remark.

\bremark
Given a $q$-dimensional space $\Zcal$ enclosing a sphere $\Scal\subset \Zcal$ and an oracle $\Omega'$ with linear  metric $\xi(\zmbf)\coloneqq\inner{\ambf}{\zmbf}$, the LPME algorithm (Algorithm~4.2, Chapter~\ref{chp:multiclass}) provides an estimate $\ambfhat$ with $\Vert \ambfhat \Vert_2=1$ such that the estimated slope is close to the true slope, i.e.,  $\sfrac{{a}_i}{{a}_j} \approx \sfrac{\hat a_i}{\hat a_j} \; \forall \; i, j\in [q]$. 
\label{fair-rm:ratio}
\eremark
\vskip -0.1cm
 Note that the algorithm estimates the direction of the 
 coefficient vector, 
 not its magnitude. 
\vspace{-0.1cm}
\vspace{-0.6cm}
\section{Geometry of the Product Set $\prodRcal$}
\label{sec:confusion}

The LPME  procedure described above works with rate queries of dimension $q$. We would like to use this procedure to elicit the fair metrics in Definition~\ref{def:linear} defined on tuples of dimension $m\times q$. So to make use of LPME, we restrict our queries to a $q$-dimensional sphere $\Scal$ which is common to the feasible rate region $\Rcal^g$ for each group $g$, i.e., to a sphere in the intersection $\Rcal^1\cap\ldots\cap\Rcal^m$. We show now that such a sphere does indeed exist under a mild assumption.

\bassumption
For all groups, the conditional-class distributions are not identical, i.e., $\forall\;g\in[m], \forall \;i\neq j, \, \Pmbb(Y=i|X, G=g) \neq \Pmbb(Y=j|X, G=g).$ In other words, there is some non-trivial signal for classification for each group.
\label{as:clsconditional}
\eassumption

\begin{figure}[t]
    \centering
	\begin{tikzpicture}[scale = 1.6]
    \begin{scope}[shift={(4.3,0)},scale = 0.5]\scriptsize
    
    \def\r{0.12};
    
    
    \coordinate (a) at (-0.2,1);
    \coordinate (b) at (0.8, 2.75);
    \coordinate (c) at (7, 4);
    \coordinate (d) at (6.5, 2);
    \coordinate (e) at (2.5, -0.75);
    \coordinate (f) at (-0.1, -0.5);
    
    \coordinate (Cent) at (3,1.75);
    \coordinate (Centcent) at (2.85,1.90);
    \coordinate (CentR) at (3.6,2.35);
    \coordinate (CentL) at (2.4,1.15);
    
    \coordinate (Space1) at (0.2,0.2);
    \coordinate (Space2) at (-0.3,1.3);
    \coordinate (Spacem) at (-0.1,2.5);
    
    \coordinate (Sphere) at (5,2);
    \coordinate (Sphereplus) at (2.6,2.55);
    \coordinate (Sphereminus) at (3.4,0.95);
    
    \coordinate (r) at (3.75,2);
    
    \coordinate (u11) at (-0.25, -0.25);
    \coordinate (uextra121) at (1.25, 3.75);
    \coordinate (u21) at (4, 4.5);
    \coordinate (uextra2k1) at (6, 1);
    \coordinate (uk1) at (3.5, -0.5);
    
    \coordinate (u12) at (0.3, -0.70);
    \coordinate (uextra122) at (0.65, 2.65);
    \coordinate (u22) at (3.4, 4.2);
    \coordinate (uextra2k2) at (5.15, 1.6);
    \coordinate (uk2) at (3.75, 0.1);
    
    \coordinate (u13) at (-0.30, 0.10);
    \coordinate (uextra123) at (1.25, 3.25);
    \coordinate (u23) at (4, 4.5);
    \coordinate (uextra2k3) at (5.75, 1);
    \coordinate (uk3) at (3.75, -0.5);
    
    \coordinate (u14) at (-0.30, 0.10);
    \coordinate (uextra124) at (1.25, 3.25);
    \coordinate (u24) at (4, 4.5);
    \coordinate (uextra2k4) at (5.75, 1);
    \coordinate (uk4) at (3.75, -0.5);
    
    

    \fill[color=black] 
            (Cent) circle (0.08)
            
            
            (u11) circle (0.08)
            (u21) circle (0.08)
            (uk1) circle (0.08);
            
            
            
    
    
    
    \draw[dashed, blue, thick] (u11) .. controls (a) and (b) .. (uextra121) 
    -- (u21) .. controls (c) and (d) .. (uextra2k1) -- (uk1) .. controls  (e) and (f) .. (u11);
    
    \draw[dashed, brown, thick] (u11) .. controls (-2.25, 1.25) and (0.9, 3) .. (u21) .. controls (8.5,4.75) and (7,2) .. (6.5, 1.25) .. controls (6.5, 1) and (5.25, 0.3) .. (uk1) -- (u11);
    
    \draw[dashed, red, thick] (u11) .. controls (-1.5, 1.5) and (-0.5, 3.5) .. (1.5, 4.5) 
    -- (u21) .. controls (6.5, 3.5) and (6, 2) .. (6, 1.75) -- (uk1) .. controls  (3, -1) and (-0.25, -0.75) .. (u11);
    
    
    
    
    
    
    	
    \draw[thick] (Cent) circle (1.5cm);
    
    \draw[thick, dotted] (CentR) circle (0.58cm);
    
    \node at (Space1) {{$\Rcal^1$}};
    \node at (Space2) {{$\Rcal^2$}};
    \node at (Spacem) {{$\Rcal^m$}};
    
    \node at (Sphere) {\large{$\Scal_\rho$}};
    \node at (Sphereplus) {\tiny{$\Scal^+_{\varrho}$}};
    
   	
    \node[below right] at (Centcent) {$\ombf$};
    
    \node[below left] at (u11) {{$\embf_1$}};
    \node[above] at (u21) {{$\embf_2$}};
    \node[below right] at (uk1) {{$\embf_k$}};
    
    
    
    
     \end{scope}
    \end{tikzpicture}
      \captionof{figure}{$\Rcal^1 \times \dots \times \Rcal^m$ (best seen in colors); $\Rcal^u \, \forall \, u \in [m]$ are convex sets with common vertices $\embf_i \, \forall \, i \in [k]$ and enclose the sphere $\Scal_\rho$.}
        \vskip -0.3cm
      \label{fig:geometry}
\end{figure}

Let $\embf_i \in \{0,1\}^q$ be the rate profile for a trivial classifier that predicts class $i$ on all inputs. 
Note that these trivial classifiers evaluate to the same rates $\embf_i$ 
irrespective of which group we apply them  to.
\bprop
[Geometry of $\prodRcal$; Figure~\ref{fig:geometry}] For any group $g\in [m]$, the set of confusion rates $\Rcal^g$ is convex, bounded in $[0, 1]^{q}$, and has vertices $\{\embf_i\}_{i=1}^k$. The intersection of group rate sets $\Rcal^1 \cap \dots \cap \Rcal^m$ is convex and always contains the rate $\ombf = \tfrac{1}{k} \tiny{\sum_{i=1}^k \embf_i}$ in the interior, which is associated with the uniform random classifier that predicts each class with equal probability. 
\label{fair-prop:C}
\eprop
Since $\Rcal^1 \cap \dots \cap \Rcal^m$ is convex and always contains a point $\ombf$ in the interior, we can make the following remark (see Figure~\ref{fig:geometry} for an illustration).

\bremark[Existence of common sphere $\Scal_\rho$] There exists a $q$-dimensional sphere $\Scal_\rho \subset \Rcal^1 \cap \dots \cap \Rcal^m$ of non-zero radius $\rho$ centered at $\ombf$. Thus, any rate $\smbf \in\Scal_\rho$ is feasible for all groups, i.e., $\smbf$ is achievable by some classifier $h^g$ for all groups $g \in [m]$.
\label{remark:sphere}
\eremark
\vskip  -0.1cm
A method to obtain $\Scal_\rho$ with suitable radius $\rho$ from Chapter~\ref{chp:multiclass} is discussed in Appendix~\ref{fair-append:ssec:sphere}. From Remark~\ref{remark:sphere}, we observe that any tuple of group rates $\rmbf^{1:m} = (\smbf^1, \ldots, \smbf^m)$ chosen from $\Scal_\rho \times \ldots \times \Scal_\rho$ is achievable for some choice of group-specific classifiers $h^1, \ldots, h^m$. Moreover, when two groups $u, v$ are assigned the same rate profile $\smbf \in \Scal_\rho$, the fairness discrepancy $\dmbf^{uv} = \bm{0}$. We will exploit these observations in the elicitation strategy we discuss next. 
\vskip -0.1cm
\section{Metric Elicitation}
\label{sec:me}

\begin{figure}[t]
\hspace*{-1cm}
  \centering
    \includegraphics[scale=0.8]{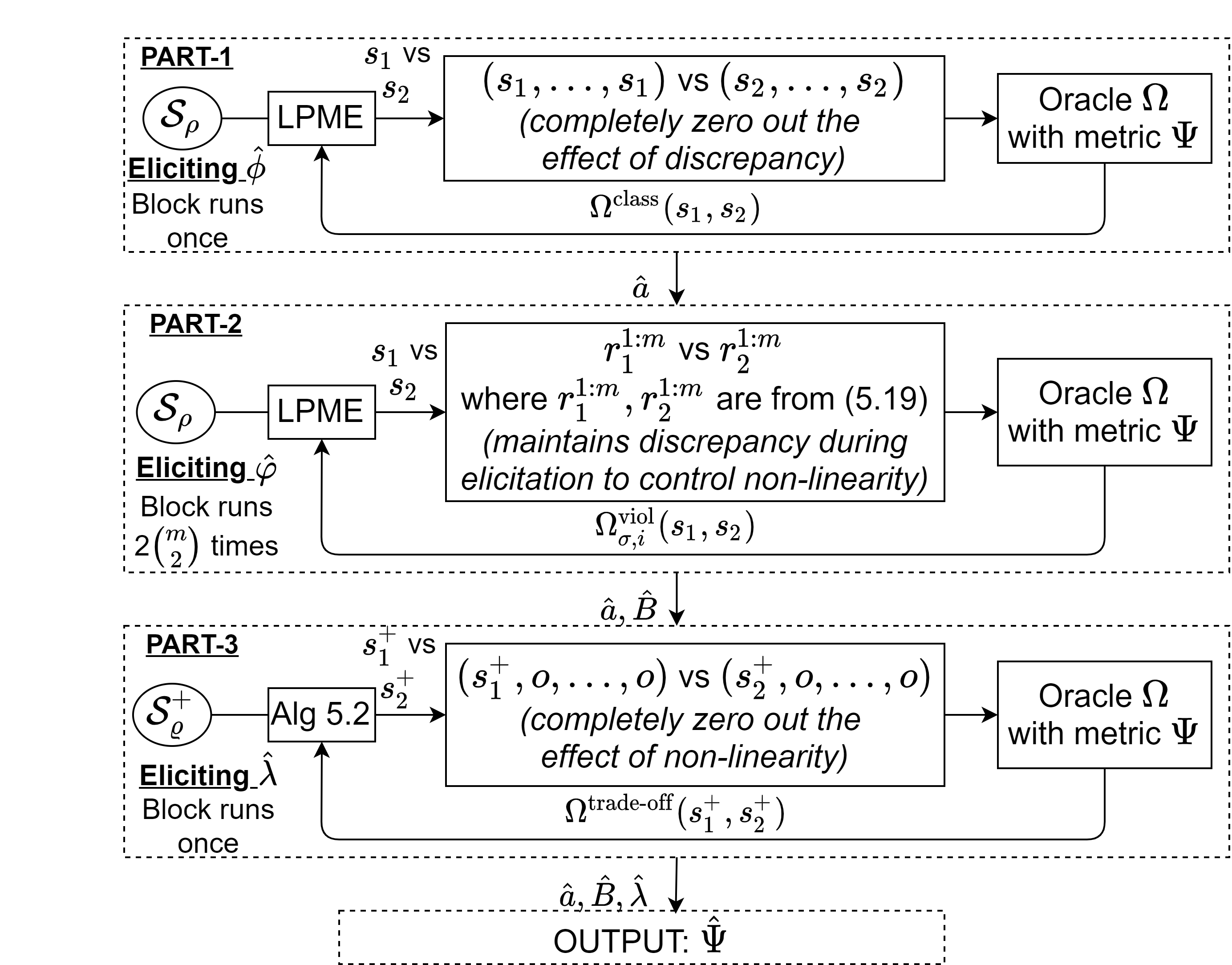}
    \caption{Workflow of the FPME procedure.}
    \label{fig:workflow}
\end{figure}

\balgorithm[t]
\caption{FPM Elicitation}
\label{alg:f-me}
\balgorithmic[1]
\STATE \textbf{Input:} Query spaces $\Scal_{\rho}$, $\Scal_{\varrho}^+$, 
search tolerance $\epsilon > 0$, and oracle $\Omega$
\STATE $\ambfhat \leftarrow$ LPME$(\Scal_\rho, \epsilon, \Omega^{\text{class}})$
\IF{$m==2$}
\STATE $\breve\fmbf\leftarrow$LPME$(\Scal_\rho, \epsilon, \Omega_1^{\text{viol}})$
\STATE $\tilde\fmbf\leftarrow$LPME$(\Scal_\rho, \epsilon, \Omega_2^{\text{viol}})$
\STATE $\bmbfhat^{12} \leftarrow $ normalized solution from~\eqref{eq:bsolm2}
\ELSE 
\STATE Let $\Lcal \leftarrow \varnothing$
\FOR{$\sigma \in \Mcal$}
\STATE $\breve\fmbf^{\sigma}\leftarrow$LPME$(\Scal_\rho, \epsilon, \Omega_{\sigma, 1}^{\text{viol}})$
\STATE $\tilde\fmbf^{\sigma}\leftarrow$LPME$(\Scal_\rho, \epsilon, \Omega_{\sigma, k}^{\text{viol}})$
\STATE Let $\ell^\sigma$ be Eq.~\eqref{eq:midsolb}, extend $\Lcal \leftarrow \Lcal \cup \{\ell^\sigma\}$
\ENDFOR
\STATE $\Bmbfhat \leftarrow $ normalized solution from~\eqref{eq:bsol} using $\Lcal$
\ENDIF
\STATE $\hat \lambda \leftarrow$ Algorithm~\ref{fair-alg:lambda} $(\Scal_{\varrho}^+, \epsilon, \Omega^{\text{trade-off}})$ \\
\STATE \textbf{Output:} $\ambfhat, \Bmbfhat, \hat \lambda$ 
\ealgorithmic
\ealgorithm

We have access to an oracle whose  (unknown) metric $\bPsi$ given 
in Definition~\ref{def:linear} is parameterized by $(\ambfbar, \Bmbfbar, \lambdabar)$. 
The proposed FPME framework for eliciting the oracle's metric is presented in Figure~\ref{fig:workflow} and is summarized in Algorithm~\ref{alg:f-me}. 
 
The  procedure has three parts executed in sequence: (a) eliciting the misclassification cost $\bphi(\rmbf)$ (i.e., $\ambfbar$), (b) eliciting the fairness violation $\bvarphi(\tupr)$ (i.e., $\Bmbfbar$), and (c) eliciting the trade-off between the misclassification cost and fairness violation (i.e., $\lambdabar$). For simplicity, we will suppress the coefficients $(\ambfbar, \Bmbfbar, \lambdabar)$ from the notation $\Psi$ whenever it is clear from context.

Notice that the metric $\Psi$ is \textit{piece-wise linear} in its coefficients.
So our high level idea  
is to restrict the queries we pose to the oracle to lie within regions where the metric $\Psi$ is  linear, so that we can then employ the LPME subroutine to elicit the corresponding linear coefficients. We will show for each of the three components (a)--(c), how we can identify regions in the query space where the metric is linear and apply the LPME procedure (or a variant of it). By restricting the query inputs to those regions, we will essentially be converting the $(m\times q)$-dimensional oracle $\Omega$ in Definition~\ref{fair-def:query} into an equivalent  $q$-dimensional oracle that compares  rates $\smbf_1, \smbf_2$ from the common sphere $ \Scal_\rho \subset \Rcal^1\cap\cdots\cap\Rcal^m$. 
We first discuss our approach assuming the oracle has no \emph{feedback} noise,  and later in Section~\ref{fair-sec:guarantees} show that our approach is robust to noisy feedback and provide query complexity guarantees.

\subsection{Eliciting the Misclassification Cost $\bphi(\rmbf)$: {Part 1 in Figure~\ref{fig:workflow} and Line 1 in Algorithm~\ref{alg:f-me}}}
\label{ssec:elicitphi}

To elicit the misclassification cost coefficients $\ambfbar$, we will query from a region of the query space where the fairness violation term in the metric is zero. Specifically, we will query group rate profile of the form $\rmbf^{1:m} = (\smbf, \dots, \smbf)$, where $\smbf$ is a $q$-dimensional rate from the common sphere $\Scal_\rho$. For these group rate profiles, the metric $\Psi$ simply evaluates to the linear misclassification term, i.e.:
\bequation
\bPsi(\smbf, \dots, \smbf) =  (1-\lambdabar)\inner{\ambfbar}{\smbf}.
\eequation
So given a pair of group rate profiles $\rmbf_1^{1:m} = (\smbf_1, \dots, \smbf_1)$ and $\rmbf_2^{1:m} = (\smbf_2, \dots, \smbf_2)$, where $\smbf_1, \smbf_2 \in \Scal_\rho$, the oracle's response will essentially compare $\smbf_1$ and $\smbf_2$ on the linear metric $(1-\lambdabar)\inner{\ambfbar}{\smbf}$. Hence, we estimate the coefficients $\ambfbar$ by applying LPME over the $q$-dimensional sphere $\Scal_\rho$ with a modified oracle $\Omega^{\text{class}}$ which takes a pair of rate profiles $\smbf_1$ and $\smbf_2$ from $\Scal_\rho$ as input, and responds with:
\bequation
\Omega^{\text{class}}(\smbf_1, \smbf_2) \,=\, \Omega((\smbf_1, \dots, \smbf_1),\, (\smbf_2, \dots, \smbf_2)).
\eequation
This is decribed in line~1 of Algorithm~\ref{alg:f-me}, which applies the LPME subroutine with query space $\Scal_\rho$, binary search tolerance $\epsilon$, and the oracle $\Omega^{\text{class}}$.  
From Remark~\ref{fair-rm:ratio}, this subroutine returns a coefficient vector $\fmbf$ with $\Vert \fmbf \Vert_2=1$ such that:
\begin{equation}
    \frac{(1-\lambdabar)a_i}{(1-\lambdabar)a_j} = \frac{f_i}{f_j} \implies \frac{a_i}{a_j} = \frac{f_i}{f_j}.
\end{equation}
By setting  $\ambfhat = \fmbf$, we recover the classification coefficients independent of the fairness violation coefficients and trade-off parameter. 
See part 1 in Figure~\ref{fig:workflow} for further illustration. 

\subsection{Eliciting the Fairness Violation $\bvarphi(\tupr)$: Part 2 in Figure~\ref{fig:workflow} and lines 3-15 in Algorithm~\ref{alg:f-me}}
\label{ssec:elicitvarphi}
We now discuss eliciting the fairness term $\bvarphi(\tupr)$. We will first discuss the special case of  $m = 2$ groups and later discuss how the proposed procedure can be extended to handle multiple groups. 

\paragraph{Special Case of $m=2$: Lines 4-6 in Algorithm~\ref{alg:f-me}:} 

Recall from Definition \ref{def:linear} that in the violation term, we measure the group discrepancies using the \textit{absolute} difference between the group rates, i.e., $\dmbf^{12} = \vert \rmbf^{1} - \rmbf^{2} \vert$. If we restrict our queries to only those rate profiles $\rmbf^{1:2}$ for which the difference in each coordinate of $\rmbf^{1} - \rmbf^{2}$ is either always positive or always negative, then we can treat the violation term as a linear metric within this region and apply LPME to estimate the associated coefficients. 

To this end, we pose to the oracle queries of the form $\rmbf^{1:2} = (\smbf, \embf_i),$ 
where we assign to group 1 a rate profile $\smbf$ from the common sphere $\Scal_\rho$, and to 
group 2 the  rate profile $\embf_i \in \{0,1\}^q$ for some $i$. Remember that $\embf_i$ is a rate vector associated with a trivial classifier which predicts class $i$ on all inputs, and is therefore a binary vector.
Since we know whether an entry of $\embf_i$ is either a 0 or a 1,  we can decipher the signs of  each entry of the difference vector $\smbf - \embf_i$. 
Hence for group rate profiles of the above form, the metric $\Psi$ can be written as a linear function in $\smbf$:
\begin{align*}
&\bPsi(\smbf, \embf_i) =  
\inner{(1-\lambdabar)\ambfbar \odot (\bm{1}-\bm{\tau}^{2}) + \lambdabar \wmbf_i \odot
\bmbfbar^{12}}{\smbf} + c_i,
\numberthis \label{eq:metricbrichm2}
\end{align*}
where $\wmbf_i \coloneqq 1 - 2\embf_i$ tells us the sign of each entry of $\smbf - \embf_i$,  $c_i$ is a constant, and we have used the fact that $\taumbf^1 = \bm{1} - \taumbf^2$. Fixing a class $i$, we then apply LPME over the $q$-dimensional sphere $\Scal_\rho$ with a modified oracle $\Omega^{\text{viol}}_i$ which takes a pair of rate profiles $\smbf_1,\smbf_2 \in \Scal_\rho$ as input and responds with:
\begin{equation}
\Omega^{\text{viol}}_i(\smbf_1, \smbf_2) \,=\, \Omega((\smbf_1, \embf_i), (\smbf_2, \embf_i)).
\label{eq:parvarphim2}
\end{equation}
One run of LPME with oracle $\Omega^{\text{viol}}_1$ results in $q-1$ independent equations. In order to elicit a $q$-dimensional vector \scl{$\bmbf^{12}$}, we must run LPME again with oracle $\Omega^{\text{viol}}_2$.
This is described in lines 4 and 5 of Algorithm~\ref{alg:f-me}. 
The LPME calls provide us with two slopes $\breve \fmbf, \tilde \fmbf$ such that $\Vert \breve \fmbf \Vert_2= \Vert \tilde \fmbf \Vert_2=1$ from which it is easy to obtain the fairness violation weights:
\begin{align*}
    \bmbfhat^{12} = \frac{\tilde \bmbf^{12}}{\Vert \tilde \bmbf^{12} \Vert_2}, \quad \text{with} \quad
    \tilde \bmbf^{12} = \wmbf_1 \odot \left[ \delta\breve \fmbf - \ambfhat\odot(\bm{1} - \taumbf^{2})  \right],
    \numberthis \label{eq:bsolm2}
\end{align*}
where $\delta$ is a scalar depending on the known entities $\taumbf^{12}, \ambfhat, \breve \fmbf^{12}, \tilde \fmbf^{12}$. The derivation is provided in Appendix~\ref{append:sssec:elicitvarphim2} for completeness. Because $\bvarphi$ is scale invariant (see Definition~\ref{def:linear}), the normalized solution \scl{$\bmbfhat^{12}$} is independent of the true trade-off $\lambdabar$ and depends only on the previously elicited vector $\ambfhat$.

\paragraph{General Case of $m>2$: Lines 8-14 in Algorithm~\ref{alg:f-me}:}
\label{ssec:elicitvarphim}

We briefly outline the elicitation procedure for $m>2$ groups, with details in Appendix~\ref{append:sssec:elicitvarphi}. 
Let $\Mcal$ be a set of subsets of the $m$ groups such that each element $\sigma \in \Mcal$ and $[m] \setminus \sigma$  partition the set of $m$ groups. 
We will later discuss how to choose $\Mcal$ for efficient elicitation. Similar to the two-group case, we pose  queries $\rmbf^{1:m}$ where to a subset of groups $\sigma \in \Mcal$, we assign the trivial rate vector $\embf_i$ and to the rest $[m] \setminus \sigma$ groups, we assign a point $\smbf$ from the common sphere $\Scal_\rho$. Observe that within this query region, the metric $\Psi$ is linear in its inputs. So for a fixed partitioning of groups defined by $\sigma$, we apply LPME with a query space $\Scal_\rho$ using the modified $q$-dimensional oracle:
\begin{equation}
\Omega^{\text{viol}}_{\sigma,i}(\smbf_1, \smbf_2) = \Omega(\rmbf_1^{1:m}, \rmbf_2^{1:m})
~~\text{where}~~ \rmbf_1^g = 
\begin{cases}
    \embf_i & \text{if } g \in \sigma\\
    \smbf_1 & \text{o.w. }
\end{cases}
~~\text{and}~~ \rmbf_2^g = 
\begin{cases}
    \embf_i & \text{if } g \in \sigma\\
    \smbf_2 & \text{o.w. }
\end{cases}.
\label{eq:parvarphi}
\end{equation}
As described in lines 10 and 11 of the algorithm, we repeat this twice fixing class $i$ to 1 and $k$. The guarantees for LPME then give us the following relationship between coefficients $\bmbfbar^{uv}$ we wish to elicit and the already elicited coefficient $\hat{\ambf}$:
\begin{align*}
    \sum\nolimits_{u, v} \1\left[|\{u,v\}\cap \sigma|=1\right] \tilde \bmbf^{uv} = \wmbf_1 \odot \left[ \delta^{\sigma} \breve \fmbf^{\sigma} - \ambfhat\odot (\bm{1} - \taumbf^{\sigma}) \right],  \numberthis 
    \label{eq:midsolb}
\end{align*}
where \scl{$\taumbf^{\sigma} = \sum_{g\in \sigma}\bm{\tau}^{g}$} and $\tilde \bmbf^{uv} \coloneqq \lambdabar\bmbfbar^{uv}/(1 - \lambdabar)$ is a scaled version of the true (unknown) $\bmbfbar^{uv}$.
Since we need to estimate $\tiny{{m\choose 2}}$ coefficients, we repeat the above procedure for $\tiny{{m\choose 2}}$ partitions of the groups defined by $\sigma$  and get a system of $\tiny{{m\choose 2}}$ linear equations. We may choose any $\Mcal$ of size $\tiny{{m\choose 2}}$ so that the equations are independent. From the solution to these equations, we recover $\tilde \bmbf^{uv}$'s, which we further normalize to get estimates of the final fairness violation weights:
\begin{equation}
\bmbfhat^{uv} = \frac{\tilde \bmbf^{uv}}{\sum_{u,v=1, v > u}^m \Vert \tilde \bmbf^{uv} \Vert_2} \quad \text{for} \quad u,v \in [m], v>u.
 \label{eq:bsol}
\end{equation}
Because of normalization, the elicited fairness weights are independent of the trade-off $\lambdabar$.

\subsection{Eliciting Trade-off $\lambdabar$: Part 3 in Figure~\ref{fig:workflow} and Line 16 in Algorithm~\ref{alg:f-me}}
\label{ssec:elicitlambda}
Equipped with estimates of the misclassification and fairness violation coefficients $(\hat{\ambfbar}, \hat{\Bmbfbar})$, the 
 final step is to elicit the trade-off $\lambdabar$ between them. 
 We now show how this can be posed as one-dimensional binary search problem.
Suppose we restrict our queries to be of the form $\rmbf^{1:m} = (\smbf^+, \ombf, \ldots, \ombf),$ where for all but the first group, we assign the rate $\ombf$ associated with a uniform random classifier, and for the first group, we assign some rate $\smbf^+$ such that $\smbf^+ \geq \ombf$. For these rate profiles, the group rate difference terms $\rmbf^1 - \rmbf^v = \smbf^+ -\ombf \geq \mathbf{0}$ for all $v \in \{2, \ldots, m\}$, and all the other difference terms are $\mathbf{0}$. As a result, the metric $\Psi$ is linear in the input rate profiles:
\begin{align*}
        \bPsi(\smbf^+, \ombf, \ldots, \ombf) = \inner{(1-\lambdabar)\taumbf^1\odot\ambfbar + \lambdabar \sum\nolimits_{v=2}^m \bmbfbar^{1v}}{\smbf^+} + c,
         \numberthis \label{eq:metriclambda}
\end{align*}
where $c$ is a constant. 
Despite the metric being linear in the identified input region, we cannot directly apply the LPME procedure described in Section \ref{fair-ssec:mpme} to elicit $\lambda$, because we have one parameter to elicit but the input to the metric is $q$-dimensional. Here we propose a slight variant of LPME.

Similar to the original ME procedure for the binary classification setup in Chapter~\ref{chp:binary}, we first construct a one-dimensional function $\vartheta$, which takes a guess of the trade-off parameter as input, and outputs the quality of the guess. We show that this function is unimodal and its mode coincides with the oracle's true trade-off parameter $\lambda$.
\blemma
Let $\Scal_\varrho^+ \subset \Scal_\rho$ be a $q$-dimensional sphere with radius $\varrho < \rho$ such that $\smbf^+ \geq \ombf, \, \forall\, \smbf^+ \in \Scal^+_\varrho$ (see Figure~\ref{fig:geometry}). Assume the estimates $\hat{\ambf}$ and $\hat{\bmbf}^{uv}$'s satisfy a mild regularity condition $\inner{\hat{\ambf}}{\sum_{v=2}^m \hat{\bmbf}^{1v}}\neq 1$. Define a one-dimensional function $\vartheta$ as:
\begin{equation}
\vartheta(\bar{\lambda}) \coloneqq \Psi(\smbf_{\bar{\lambda}}^*, \ombf, \ldots, \ombf),
\label{eq:vartheta}
\end{equation}
where
\begin{equation}
\smbf^*_{\bar{\lambda}} \,=\, \argmax_{s^+ \in \Scal_\varrho^+}\,\inner{(1-\bar{\lambda})\taumbf^1\odot\hat{\ambfbar} + \bar{\lambda} \sum\nolimits_{v=2}^m \hat{\bmbfbar}^{1v}}{\smbf^+}.
\label{eq:vartheta-max}
\end{equation}
Then 
the function 
$\vartheta$ is strictly quasiconcave (and therefore unimodal) in $\bar{\lambda}$. Moreover, the mode of this function is achieved at the oracle's true trade-off parameter ${\lambda}$.
\label{fair-lm:lambda}
\elemma

For a candidate trade-off $\bar{\lambda}$, 
the function $\vartheta$ first constructs a candidate linear metric based on \eqref{eq:metriclambda}, maximizes this candidate metric over inputs $\smbf^+$, and evaluates the oracle's true metric $\Psi$ at the maximizing rate profile. Note that we cannot directly compute the function $\vartheta$ as it needs the oracle's metric $\Psi$. However, given two candidates for the trade-off parameter $\bar{\lambda}_1$ and $\bar{\lambda}_2$, one can compare the values of $\vartheta(\bar{\lambda}_1)$ and $\vartheta(\bar{\lambda}_2)$ by finding the corresponding maximizers over $\smbf^+$ and querying the oracle to compare them. Because  $\vartheta$ is unimodal, one can use a simple binary search using such pairwise comparisons to find the mode of the function, which we know coincides with the true $\lambda$.

\balgorithm[t]
\caption{Eliciting the trade-off $\lambdabar$}
\label{fair-alg:lambda}
\small
\balgorithmic[1]
\STATE \textbf{Input:} Query space $\Scal_\varrho^+$, binary-search tolerance $\epsilon > 0$, oracle $\Omega^{\text{trade-off}}$
\STATE \textbf{Initialize:} $\lambda^{(a)} = 0$, $\lambda^{(b)} = 1$.
\WHILE{$\abs{\lambda^{(b)} - \lambda^{(a)}} > \epsilon$} 
\STATE Set $\lambda^{(c)} = \frac{3 \lambda^{(a)} + \lambda^{(b)}}{4}$, $\lambda^{(d)} = \frac{\lambda^{(a)} + \lambda^{(b)}}{2}$, $\lambda^{(e)} = \frac{\lambda^{(a)} + 3 \lambda^{(b)}}{4}$
\STATE Set $\smbf^{(a)} = \displaystyle\argmax_{\smbf^+\in\Scal_\varrho^+} \inner{(1-\lambda_a)\taumbf^1\odot\ambfhat + \lambda_a \sum_{v=2}^m \bmbfhat^{1v}}{\smbf^+}$ using Lemma~\ref{mult-lem:spherebayes} (Chapter~\ref{chp:multiclass})
\STATE Similarly, set $\smbf^{(c)}$, $\smbf^{(d)}$, $\smbf^{(e)}$, $\smbf^{(b)}$.
\STATE Query  $\Omega^{\text{trade-off}}(\smbf^{(c)}, \smbf^{(a)})$,  $\Omega^{\text{trade-off}}(\smbf^{(d)}, \smbf^{(c)})$,  $\Omega^{\text{trade-off}}(\smbf^{(e)}, \smbf^{(d)})$, and  $\Omega^{\text{trade-off}}(\smbf^{(b)}, \smbf^{(e)})$.\\
\STATE $[\lambda^{(a)}, \lambda^{(b)}] \leftarrow$ \emph{ShrinkInterval} (responses) using a subroutine analogous to the routine shown in  Figure~\ref{mult-append:fig:shrink1}.
\ENDWHILE
\STATE \textbf{Output:} $\hat\lambda = \frac{\lambda^{(a)}+\lambda^{(b)}}{2}$. 
\ealgorithmic
\ealgorithm

We provide an outline of this procedure in Algorithm~\ref{fair-alg:lambda}, which uses the modified oracle 
\bequation
\Omega^{\text{trade-off}}(\smbf_1^+, \smbf_2^+) = \Omega((\smbf^+_1, \ombf, \ldots, \ombf),\, (\smbf^+_2, \ombf, \ldots, \ombf))
\eequation
to compare the maximizers in \eqref{eq:vartheta-max}. 

\paragraph{Description of Algorithm~\ref{fair-alg:lambda}:} Given the unimodality of $\vartheta(\lambda)$ from Lemma~\ref{fair-lm:lambda}, we devise the binary-search procedure Algorithm~\ref{fair-alg:lambda} for eliciting the true trade-off $\lambdabar$. The algorithm takes in input the query space $\Scal_\varrho^+$, binary-search tolerance $\epsilon$, an equivalent oracle  $\Omega^{\text{trade-off}}$, the elicited $\ambfhat$ from Section~\ref{ssec:elicitphi}, and the elicited $\Bmbfhat$ from Section~\ref{ssec:elicitvarphi}. The algorithm finds the maximizer of the function $\hat{\vartheta}(\lambda)$ defined analogously to~\eqref{eq:vartheta}, where $\ambfbar, \Bmbfbar$ are replaced by $\ambfhat, \Bmbfhat$, using Lemma~\ref{mult-lem:spherebayes} (Chapter~\ref{chp:multiclass}). The algorithm poses four queries to the oracle and shrink the interval $[\lambda^{(a)}, \lambda^{(b)}]$ into half based on the responses using a subroutine analogous to \emph{ShrinkInterval} shown in Figure~\ref{mult-append:fig:shrink1}. The algorithm stops when the length of the search interval $[\lambda^{(a)}, \lambda^{(b)}]$ is less than the tolerance $\epsilon$. 
Combining parts 1, 2 and 3 in Figure~\ref{fig:workflow} completes the FPME procedure. 
\section{Guarantees}
\label{fair-sec:guarantees}
We discuss elicitation guarantees under the following feedback model. 

\bdefinition[Oracle Feedback Noise: $\epsilon_\Omega \geq 0$] For two rates $\tupr_1, \tupr_2 \in \prodRcal$, the oracle responds correctly as long as $|\bPsi(\tupr_1) - \bPsi(\tupr_2)| > \epsilon_\Omega$. Otherwise, it may be incorrect.
\label{fair-def:noise}
\edefinition
In words, the oracle may respond incorrectly if the rates are very close as measured by the metric $\bPsi$. Since deriving the final metric involves offline computations including certain ratios, we discuss guarantees under a regularity assumption that ensures all components are well defined. 
\bassumption
We assume that $1 > c_1 > \lambdabar > c_2 > 0$, $\min_{i}\vert a_i\vert > c_3$, $\min_{i}\vert (1-\lambdabar)a_i {\tau}^{\sigma}_i - \lambdabar w_{ji}
b^{\sigma}_i \vert > c_4 \, \forall\, j\in [q], \sigma \in \Mcal$,  for some $c_1, c_2, c_3, c_4 > 0$, $\rho > \varrho \gg \epsilon_\Omega$, and $\inner{\ambfbar}{\sum_{v=2}^m \bmbfbar^{1v}}\neq 1$. 
\label{as:regularity}
\eassumption
\btheorem 
Given $\epsilon,\epsilon_\Omega\geq 0$, and a 1-Lipschitz fair performance metric $\;\bPsi$ parametrized by $\ambfbar, \Bmbfbar, \lambdabar$, under Assumptions~\ref{as:clsconditional} and~\ref{as:regularity}, Algorithm~\ref{alg:f-me} returns a metric $\hPsi$ with parameters:
\begin{itemize}[itemsep=0pt, leftmargin=1em]
    \item $\ambfhat:$ after $O\left(q\log \tfrac 1 {\epsilon}\right)$ queries such that $\Vert \ambfbar-\ambfhat \Vert_{2}\leq O\left(\sqrt{q}(\epsilon+\sqrt{\epsilon_\Omega/\rho})\right)$.
    \item $\Bmbfhat:$ after $O\left({m\choose 2} q\log \tfrac 1 {\epsilon}\right)$ queries such that $\Vert \text{vec}(\Bmbfbar)-\text{vec}(\Bmbfhat) \Vert_{2}\leq O\left(mq(\epsilon+\sqrt{\epsilon_\Omega/\rho})\right)$, where $\text{vec}(\cdot)$ vectorizes the matrix.
    \item $\lambdahat:$ after $O(\log(\tfrac{1}{\epsilon}))$ queries, with error $\vert \lambdabar - \lambdahat \vert\leq O\left(\epsilon  + \sqrt{\epsilon_\Omega/\varrho} +  \sqrt{m q (\epsilon + \sqrt{\epsilon_\Omega/\rho})/\varrho}\right)$.
\end{itemize}
 \label{thm:error}
\etheorem 
We see that the proposed FPME procedure is robust to noise, and its query complexity depends linearly in the number of unknown entities. For instance, line 2 in Algorithm~\ref{alg:f-me} elicits $\ambfhat \in \Rmbf^q$ by posing $\tilde O(q)$ queries, the `for' loop in line~9 of Algorithm~\ref{alg:f-me} runs for $\tiny{m\choose 2}$ iterations, where each iteration
requires $\tilde O(2q)$ queries, and finally line~16 in Algorithm~\ref{alg:f-me} is a simple binary search requiring $\tilde O(1)$ queries. The work in Chapter~\ref{chp:multiclass} work suggests that linear multiclass elicitation (LPME) elicits misclassification costs ($\phi$) with linear query complexity. Surprisingly, our proposed FPME procedure elicits a more complex (nonlinear) metric without increasing the query complexity order. 
Furthermore, since sample estimates of rates are consistent estimators, and the metrics discussed are $1$-Lipschitz wrt.\ rates, with high probability, we gather correct oracle feedback from querying with finite sample estimates $\Omega(\hat{\rmbf}^{1:m}_1, \hat{\rmbf}^{1:m}_2)$ instead of querying with population statistics $\Omega({\rmbf}^{1:m}_1, {\rmbf}^{1:m}_2)$, as long as we have sufficient samples. Apart from this, Algorithm~1 is agnostic to finite sample errors as long as the sphere $\Scal_\rho$ is contained within the feasible region $\Rcal^1 \cap \dots \cap \Rcal^m$.
\section{Experiments}
\label{sec:experiments}

\begin{figure*}[t]
	\centering 
	\subfigure[]{
		{\includegraphics[width=4.5cm]{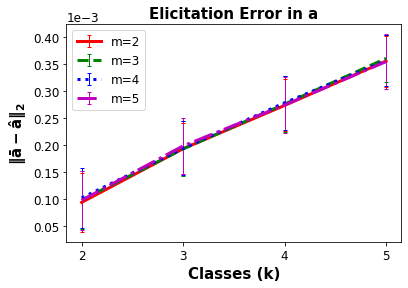}}
		\label{fair-fig:rec_a}
	}\quad\quad
	\subfigure[]{
		{\includegraphics[width=4.5cm]{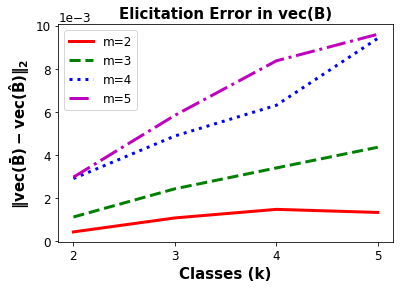}}
		\label{fair-fig:rec_B}
	}\quad\quad
	\subfigure[]{
		{\includegraphics[width=4.5cm]{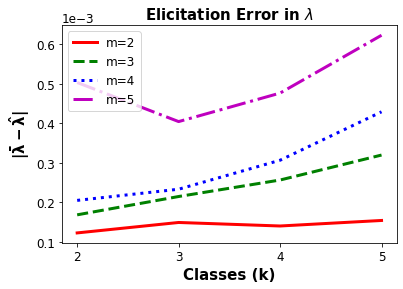}}
		\label{fair-fig:rec_l}
	}
	\caption{Elicitation error in recovering the oracle's metric.}
	\label{fair-fig:recovery}
\end{figure*}

\subsection{Theory Validation}
\label{ssec:exp:theory}

We first empirically validate the FPME procedure and recovery guarantees of Section~\ref{fair-sec:guarantees}. Recall that there exists a sphere $\Scal_\rho \subset \Rcal^1 \cap \dots \cap \Rcal^m$ as long as there is a non-trivial classification signal within each group (Remark~\ref{remark:sphere}). 
Thus for experiments, we assume access to a feasible sphere $\Scal_\rho$ with $\rho = 0.2$. 
We randomly generate 100 oracle metrics each for $k, m \in \{2,3,4,5\}$ parametrized by $\{\ambfbar, \Bmbfbar, \lambdabar\}$. This specifies the query outputs by the oracle for each metric in Algorithm~\ref{alg:f-me}. We then use Algorithm~\ref{alg:f-me} with tolerance $\epsilon = 10^{-3}$ to elicit corresponding metrics parametrized by $\{\ambfhat, \Bmbfhat, \lambdahat\}$. 
Algorithm~\ref{alg:f-me} makes $1 + 2M$ subroutine calls to LPME procedure and $1$ call to Algorithm~\ref{fair-alg:lambda}. 
LPME subroutine requires exactly $16(q-1)\log(\pi/2\epsilon)$ queries, where we use 4 queries to shrink the interval in the binary search loop and fix 4 cycles for the coordinate-wise search. Also, Algorithm~\ref{fair-alg:lambda} requires $4\log(1/\epsilon)$ queries.
In Figure~\ref{fair-fig:recovery}, we report the mean of the $\ell_2$-norm between the oracle's metric and the elicited metric. 
Clearly, we elicit metrics that are close to the true metrics.
Moreover, this holds true across a range of $m$ and $k$ values demonstrating the robustness of the proposed approach. Figure~\ref{fair-fig:rec_a} shows that the error $\Vert \ambfbar - \ambfhat\Vert_2$ increases only with the number of classes $k$ and not groups $m$. This is expected since $\ambfhat$ is elicited by querying rates that zero out the fairness violation (Section~\ref{ssec:elicitphi}). Figure~\ref{fair-fig:rec_B} verifies Theorem~\ref{thm:error} by showing that $\Vert \text{vec}(\Bmbfbar) - \text{vec}(\Bmbfhat) \Vert_2$ increases with both number of  classes $k$ and groups $m$. In accord with Theorem~\ref{thm:error}, Figure~\ref{fair-fig:rec_l} shows that the elicited trade-off $\hat \lambda$ is also close to the true $\lambdabar$. However, the elicitation error increases consistently with groups $m$ but not with classes $k$. A possible reason may be the cancellation of errors from eliciting $\ambfhat$ and $\Bmbfhat$ separately.

\subsection{Ranking of Classifiers}
\label{ssec:exp:ranking}

Next, we highlight the utility of FPME in ranking real-world classifiers. One of the most important applications of performance metrics is evaluating classifiers, i.e., providing a quantitative score for their quality which then allows us to choose the best (or best set of) classifier(s). In this section, we discuss how the ranking of plausible classifiers is affected when a practitioner employs default metrics to rank (fair) classifiers instead of the oracle's metric or our elicited approximation. 

\begin{table}[t]
\centering
\caption{Dataset statistics; the real-valued regressor in \emph{wine} and \emph{crime} datasets is recast to 3 classes based on quantiles.}
\begin{tabular}{|c|ccccc|}
\hline
\textbf{Dataset} & $k$ & $m$ & \textbf{\#samples} & \textbf{\#features} & \textbf{group.feat} \\ 
\hline
default        & 2  & 2  & 30000           &    33    &     gender       \\
adult        &  2 &  3 &    43156       &    74    &    race        \\
wine        &  3 &  2 & 6497          &     13   &  color          \\
crime        &  3 & 3  &    1907       &     99   &      race   \\ 
\hline
\end{tabular}
\label{fair-tab:stats}
\end{table}

We take four real-world classification datasets with $k, m \in \{2,3\}$ (see Table~\ref{fair-tab:stats}). 60\% of each dataset is used for training and the rest for testing. We create a pool of 100 classifiers for each dataset by tweaking hyperparameters under logistic regression models~\cite{kleinbaum2002logistic}, multi-layer perceptron models~\cite{pal1992multilayer}, support vector machines~\cite{joachims1999svmlight}, LightGBM models~\cite{ke2017lightgbm}, and fairness constrained optimization based models~\cite{narasimhan2019optimizing}. We compute the group wise confusion rates on the test data for each model for each dataset. We will compare the ranking of these classifiers achieved by competing baseline metrics with respect to the ground truth ranking. 

\begin{table}[t]
    \centering
    \caption{Common (baseline) metrics usually deployed to rank classifiers.}
    \begin{tabular}{|c|cccccccc|}
    \hline 
    \textbf{Name} $\rightarrow$  & {${\hphi\hvarphi\lambdahat}$\_a} & {$\hphi\hvarphi\lambdahat$\_w} & {$\hphi\hvarphi$\_a} & {$\hphi\hvarphi$\_w} & {$\hphi$\_a} & {$\hphi$\_w} & 
    o\_p & 
    o\_f \\ \hline
    $\ambfhat$  & acc. & w-acc. & acc. & w-acc.  & acc. & w-acc. & $\ambfbar$ & - \\ 
    $\Bmbfhat$  & acc. & w-acc. & acc. & w-acc.  & elicit & elicit & - & $\Bmbfbar$ \\ 
    $\lambdahat$ & $0.5$  & w-acc. & elicit & elicit & elicit  & elicit & 0 & 1 \\ 
    \hline
\end{tabular}
    \label{fair-tab:baselines}
\end{table}

We generate 100 random oracle metrics $\bPsi$. $\bPsi$'s gives us the ground truth ranking of the above classifiers. We then use our proposed procedure FPME (Algorithm~\ref{alg:f-me}) to recover the oracle's metric. For comparison in ranking of real-world classifiers, we choose a few metrics that are routinely employed by practitioners as baselines (see Table~\ref{fair-tab:baselines}). 
The prefixes (i.e., $\hphi, \hvarphi$, or $\lambdahat$) in name of the baseline metrics denote the components that are set to default metrics, and the suffixes (i.e. `a' or `wa') denote whether the assignment is done with \emph{accuracy} (i.e., equal weights) or with \emph{weighted accuracy} (weights are assigned randomly however maintaining the true order of weights as in $\bPsi$). 
For example, $\hphi\hvarphi\lambdahat$\_a  corresponds to the metric where {$\hphi, \hvarphi, \lambdahat$} are set to standard classification accuracy. Similarly, {$\hphi$\_w} denote a metric where the misclassification cost {$\hphi$} is set to weighted accuracy but both $\hvarphi$ and $\lambdahat$ are elicited using Part 2 and Part 3 of the FPME procedure (Algorithm~\ref{alg:f-me}), respectively. 
Assigning weighted accuracy versions is a commonplace since sometimes the order of the costs associated with the types of mistakes in misclassification cost $\bphi$ or fairness violation $\bvarphi$ or preference for fairness violation over misclassification $\lambdabar$ is known but not the actual cost. 
Another example is {$\hphi\hvarphi$\_a} which corresponds to the metric where {$\hphi, \hvarphi$} are set to accuracy and only the trade-off {$\lambdahat$} is elicited using Part 3 of the FPME procedure (Algorithm~\ref{alg:f-me}). 
This is similar to prior work by Zhang et al.~\cite{zhang2020joint} who assumed the classification error and fairness violation known, so only the trade-off has to be elicited -- however they also assume direct ratio queries, which can be challenging in practice. Our approach applies much simnpler pairwise preference queries. 
Lastly, o\_p and o\_f represent \emph{only predictive performance} with $\lambda=0$ and \emph{only fairness} with $\lambda=1$, respectively. 

\begin{figure*}[t]
	\centering 
	\subfigure{
		{\includegraphics[width=5cm]{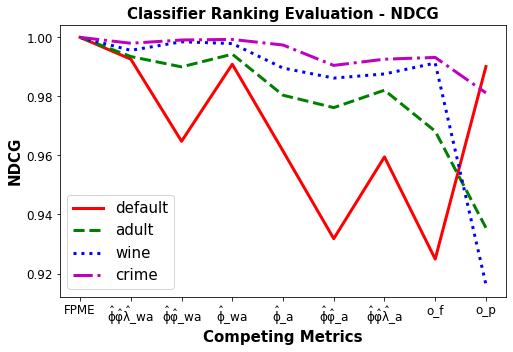}}
		\label{fair-fig:ndcg}
	}\quad\quad
	\subfigure{
		{\includegraphics[width=5cm]{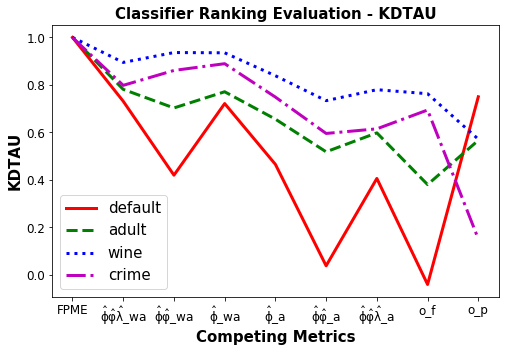}}
		\label{fair-fig:kdtau}
	}
	\caption{Ranking performance of real-world classifiers by competing metrics.}
	\label{fair-fig:ranking}
\end{figure*}

Figure~\ref{fair-fig:ranking} shows average NDCG (with exponential gain)~\cite{valizadegan2009learning} and Kendall-tau coefficient~\cite{shieh1998weighted} over 100 metrics $\bPsi$ and their respective estimates by the competing baseline metrics. We see that FPME, wherein we elicit $\hphi, \hvarphi$, and $\lambdahat$ in sequence, achieves the highest possible NDCG and Kendall-tau coefficient. Even though we make some elicitation error in recovery (Section~\ref{fair-sec:guarantees}), we achieve almost perfect results while ranking the classifiers. 

To connect to practice, this implies that when given a set of classifiers, ranking based on elicited metrics will align most closely to ranking based on the true metric, as compared to ranking classifiers based on default metrics. This is a crucial advantage of metric elicitation for practical purposes. In this experiment, baseline metrics achieve inferior ranking of classifiers in comparison to the rankings achieved by metrics that are elicited using the proposed FPME procedure. Figure~\ref{fair-fig:ranking} also suggests that it is beneficial to elicit all three components $(\ambfbar, \Bmbfbar, \lambdabar)$ of the metric in Definition~\ref{def:linear}, rather than pre-define a component and elicit the rest. For the \emph{crime} dataset, some methods also achieve high NDCG values, so ranking at the top is good; however Kendall-tau coefficient is weak which suggests that overall ranking is poor. With the exception  of the \emph{default} dataset, the weighted versions are better than equally weighted versions in ranking. This is expected because in weighted versions, at least order of the preference for the type of costs matches with the oracle's  preferences.

\section{Related Work}
\label{sec:relatedwork}
Some early attempts to eliciting 
individual fairness metrics~\cite{ilvento2019metric, mukherjee2020two} are distinct from ours – as we are focused on the more prevalent setting of group fairness, yet for which there
are no existing approaches to our knowledge. 
Zhang et al.~\cite{zhang2020joint} propose an approach that elicits only the trade-off between accuracy and fairness using complicated ratio queries. We, on the other hand, elicit classification cost, fairness violation, and the trade-off together as a non-linear function, all using much simpler pairwise comparison queries. Prior work for constrained classification focus on learning classifiers under constraints for fairness~\cite{goh2016satisfying, hardt2016equality, zafar2017constraints,narasimhan2018learning}. We take the regularization view of algorithmic fairness, where a fairness violation is embedded in the metric definition instead of as constraints~\cite{kamishima2012fairness, bechavod2017learning, corbett2017algorithmic, agarwal2018reductions, menon2018cost}. 
From the elicitation  perspective, the closest line of work to ours is in Chapters~\ref{chp:binary} and \ref{chp:multiclass}, where we proposed the problem of ME but solved it only for a simpler setting of classification without fairness. As we move to multiclass, multigroup fair performance ME, we find that the complexity of both the form of the metrics and the query space increases. This results in starkly different elicitation strategy with novel methods required to provide query complexity guarantees. Learning (linear) functions passively using pairwise comparisons is a mature field~\cite{joachims2002optimizing, herbrich2000large, peyrard2017learning}, but these approaches fail to control sample (i.e.\ query) complexity. 
Active learning in fairness~\cite{noriega2019active} is a related direction; however the aim there is to learn a fair classifier based on fixed metric instead of eliciting the metric itself.  
\section{Concluding Remarks and Future Work}
\label{sec:extensions}
\bitemize
\item \textbf{Transportability:} Our elicitation procedure is independent of the population $\Pmbb$ as long as there exists a sphere of rates which is feasible for all groups. Thus, any metric that is learned using one dataset or model class (i.e., by estimated $\hat \Pmbb$) can be applied to other applications and datasets, 
as long as the expert believes the context and tradeoffs are the same.
\item \textbf{Extensions.} 
Our propsal can be modified to leverage the structure in the metric or the groups to further reduce the query complexity. For example, when the fairness violation weights are the same for all pairs of groups,  the procedure in Section~\ref{ssec:elicitvarphim} requires only one partitioning of groups to elicit the metric $\hvarphi$. Such modifications are easy to incorporate. 
In the future, we  plan to extend our approach to more complex metrics such as linear-fractional functions of rates and discrepancies.
\item \textbf{Limitations of group-fair metrics.} Since the metrics we consider depend on a classifier only through its rates, comparing two classifiers on these metrics is equivalent to comparing
their rates. 
Unfortunately, with this setup, all the limitations associated with group-fairness definition of metrics apply to our setup as well. For example, we may discard notions of \emph{individual fairness} when only group-rates are considered for comparing classifiers~\cite{binns2020apparent}. Similarly, issues associated with \emph{overlapping groups}~\cite{kearns2018preventing}, \emph{detailed group specification}~\cite{kearns2018preventing}, \emph{unknown or changing groups}~\cite{hashimoto2018fairness, gillen2018online}, \emph{noisy or biased} group information~\cite{wang2020robust}, among others, pose limitations to our proposed setup. We hope 
that as the first work on the topic, our work will inspire the research community to address many of these open problems for the task of metric elicitation. 
\item \textbf{Optimal bounds.} We conjecture that our query complexity bounds are tight; however, we leave this detail for the future. 
In conclusion, we elicit a more complex (non-linear) group fair-metric with the same query complexity order as standard classification linear elicitation procedures (Chapter~\ref{chp:multiclass}).

\item \textbf{Limitation.} Our work seeks to truly democratize and personalize fair machine learning. Besides, the significance of fair performance metric elicitation lies in how it empowers the practitioner to tune the design of machine learning models to the needs of the target fairness task. However, at the same time, this work may have drawbacks 
because it leaves open the key question of who should be the stakeholders to be queried. This work also assumes a parametric form for the oracle metric, which may not be an exact match to practice. Furthermore, we should be cautious of the result of the failure of the system which could cause disparate impact among sensitive groups when the elicited metric is incorrect, e.g., when applied to settings where the stated assumptions are not met.  
\eitemize
\chapter{Quadratic Metric Elicitation for Fairness and Beyond}
\label{chp:quadratic}

The Metric Elicitation (ME) strategies for the binary and multiclass classification setups that are discussed in Chapters~\ref{chp:binary} and~\ref{chp:multiclass}, respectively, only handle linear or quasi-linear function of predictive rates, which can be restrictive for many applications where the metrics are complex and non-linear. For example, in \emph{fair machine learning}, classifiers are often judged by measuring discrepancies between predictive rates for different protected groups~\cite{hardt2016equality}. Similar discrepancy-based measures are also used in \emph{distribution matching} applications~\cite{narasimhan2018learning, Fab1}. A common measure of discrepancy in such
applications is the squared difference, which is appealing for its smoothness properties and a quadratic metric that cannot be handled by existing approaches. 
Similar quadratic metrics also find use in class-imbalanced
learning~\cite{goh2016satisfying, narasimhan2018learning} (see Section~\ref{ssec:metric} for examples).
Motivated by these examples, in this paper, we propose strategies for eliciting metrics defined by \emph{quadratic} functions of rates, that encompass linear metrics as special cases. We further extend our approach to elicit polynomial metrics, a universal family of functions~\cite{stone1948generalized}. 
This allows one to better capture real-world human preferences.  

Our high-level idea is to
 approximate the quadratic metric using multiple linear functions, employ linear ME to estimate the 
 local slopes, 
 and combine the slope estimates to reconstruct the original metric. 
 While natural and elegant, this approach comes with non-trivial challenges. 
 Firstly, we must choose 
 center 
 points for the local-linear approximations,
 and the chosen points must represent
 feasible queries. Secondly, because of 
 pairwise queries, we only receive \emph{slopes (directions)} and not magnitudes for the local-linear functions, requiring intricate analysis to reconstruct the original 
 metric and to deal with multiplicative errors that result.
 Despite the challenges,
 our method requires a query complexity that is only \emph{linear} in the number of unknown entities, which we show is {\em near-optimal}.

Our interest in quadratic metric elicitation is majorly motivated by applications to \emph{fair machine learning}~\cite{dwork2012fairness, hardt2016equality, kleinberg2017inherent}. While several group-based fairness metrics have been proposed to capture bias in automated decision-making, selecting the right metric 
remains a crucial challenge~\cite{zhang2020joint}. In Chapter~\ref{chp:fair}, we proposed an approach for eliciting group-fair metrics that 
measure discrepancies using the absolute differences 
in rates across multiple sensitive groups. 
Unfortunately, that approach specifically handles metrics that are linear in the group discrepancies and does not  generalize easily to other families of metrics. 
We extend this setup 
to allow for more general fairness metrics defined by quadratic functions of group discrepancies and 
show how our proposed quadratic ME approach can be easily adapted to elicit such metrics. 
Like we did in Chapter~\ref{chp:fair}, here 
we jointly elicit three terms: (i) predictive performance 
defined by a weighted error metric,
(ii) a quadratic fairness violation metric, 
and (iii) a trade-off between the predictive performance and fairness violation. 

\textbf{Contributions and chapter organization.} 
We propose a novel quadratic metric elicitation algorithm for classification problems, which requires only pairwise preference feedback 
either over classifiers or rates (Section \ref{sec:quadme}). 
Specific to group-based fairness tasks, we show how to jointly elicit the predictive and fairness metrics, and the trade-off between them (Section \ref{sec:fairme}).
The proposed approach is robust under feedback and 
finite sample noise and requires a  near-optimal number of queries for elicitation (Section \ref{sec:guarantees}). We empirically validate the proposal 
for multiple classes and groups on simulated oracles (Section \ref{quad-sec:experiments}).
Lastly, we discuss how our strategy can be generalized to elicit higher-order polynomials by recursively applying the procedure to elicit lower-order approximations
(Section \ref{sec:poly}). 
All the proofs in this chapter are provided in Appendix~\ref{apx:quadratic}.

\textbf{Notation.} 
$\|\cdot\|_F$ represents the Frobenius norm, and $\alphambf_i \in \Rmbb^q$ denotes the $i$-th standard basis vector, where the $i$-th coordinate is 1 and  others are 0. 
\section{Background}
\label{sec:background}

We consider a $k$-class classification setting with $X \in \Xcal$ and $Y \in [k]$ denoting the input and output random variables, respectively. We assume access to an $n$-sized sample $\{(\xmbf, y)_i\}_{i=1}^n$ generated \emph{iid} from a distribution $ \Pmbb(X, Y)$. 
We work with randomized classifiers 
\bequation
h : \Xcal \rightarrow \Delta_k
\eequation
that for any $\xmbf$ gives a distribution $h(\xmbf)$ over the $k$ classes and use 
 \bequation
 \Hcal = \{h : \Xcal \rightarrow \Delta_k\}
 \eequation
 to denote the set of all classifiers. Unlike Chapter~\ref{chp:multiclass}, our choice of measurement space is the space of predictive rates (described next). This is just to suit the application of fairness, where predictive rates for two sensitive groups can be compared; however, it is not suitable for group-fair application purposes to compare confusion matrix entries for two sensitive groups. Nevertheless, the proposed algorithm for quadratic (or, polynomial) metric elicitation will also work if the choice of measurement space is the space of confusion matrices. 

\emph{Predictive rates:} 
We define the predictive rate matrix for a classifier $h$ by $\Rmbf(h, \Pmbb) \in \Rmbb^{k \times k}$, where the $ij$-th entry is the fraction of label-$i$ examples for which the randomized classifier $h$ predicts $j$:
\begin{align}
	R_{ij}(h, \Pmbb) \coloneqq P(h(X) = j | Y = i)  \quad \text{for} \; i, j \in [k],
	\label{eq:components}
\end{align}
where the probability is over draw of $(X, Y) \sim \P$ and the randomness in $h$. 
Notice that each diagonal entry
of $\Rmbf$
can be written in terms of its off-diagonal elements: 
\begin{equation}
    R_{ii}(h, \Pmbb) = 1 - \sum\nolimits_{j=1,j\neq i}^k R_{ij}(h, \Pmbb).
    \label{quad-eq:decomp}
\end{equation}
Thus, we can represent a rate matrix with its $q \coloneqq (k^2 - k)$ off-diagonal elements, write it as a vector $\rmbf(h, \Pmbb) = \offdiag(\Rmbf(h, \Pmbb))$, and interchangeably refer to it as the \emph{`vector of rates'}.

\emph{Metrics:} We consider metrics  that are defined by a general function $\phi : [0, 1]^{q}  \rightarrow \Rmbb$ of rates: 
  \bequation
  \phi(\rmbf(h, \Pmbb)).
  \eequation
This includes the (weighted) error rate
$\phi^{\text{err}}(\rmbf(h, \Pmbb))$ $\,=\, \sum_{i} a_i r_i(h, \Pmbb)$, for weights $a_i \in \mathbb{R}_{+}$, the F-measure, and many more metrics~\cite{sokolova2009systematic}. 
Without loss of generality (w.l.o.g.), we treat metrics as costs. 
Since the metric's scale does not affect the learning problem~\cite{narasimhan2015consistent}, we allow $\phi : [0, 1]^{q}  \rightarrow [-1,1]$.

\emph{Feasible rates:} We will restrict our attention to only those rates that are feasible, i.e., can be achieved by some classifier. The set of all feasible rates is given by: 
\bequation
\Rcal = \{\rmbf(h, \Pmbb) \,:\, h \in \Hcal \}.
\eequation
For simplicity, we will suppress the dependence on $\Pmbb$ and $h$ if it is clear from the context.

\subsection{Metric Elicitation: Problem Setup}
\label{ssec:me}

We now describe the problem of \emph{Metric Elicitation}, which follows from Chapter~\ref{chp:me}. There's an \textit{unknown} metric $\phi$, and we seek to elicit its form by posing queries to an \emph{oracle} asking  which of two classifiers is more preferred by it. The oracle has access to the  metric $\phi$ and provides answers by comparing its value on the two classifiers.
\bdefinition
[Oracle Query] Given two classifiers $h_1, h_2$ (equiv. to rates $\rmbf_1, \rmbf_2$ respectively), a query to the Oracle (with metric $\phi$) is represented by:
\begin{align}
\Gamma(h_1, h_2\,;\, \phi) = \Omega(\rmbf_1, \rmbf_2\,;\,\phi) &= \1[\phi(\rmbf_1) > \phi(\rmbf_2)], 
\end{align}
where $\Gamma: \Hcal \times \Hcal \rightarrow \{0,1\}$ and $\Omega: \Rcal \times \Rcal \rightarrow \{0, 1\}$. The query asks whether $h_1$ is preferred to $h_2$ (equiv. if $\rmbf_1$ is preferred to $\rmbf_2$), as measured by $\phi$. 
\label{def:query}
\edefinition

In practice, the oracle can be an expert, a group of experts, or an entire user population. The ME framework can be applied by posing classifier comparisons directly via interpretable learning techniques~\cite{ribeiro2016should, doshi2017towards} or via A/B testing~\cite{tamburrelli2014towards}. For example, in an internet-based application 
one may perform the A/B test by deploying two classifiers A and B with two different sub-populations of users and use their level of engagement to decide the preference over the two classifiers. 
For other applications, one may present 
visualizations of rates of the two classifiers (e.g.,  \cite{zhang2020joint,beauxis2014visualization}), and have the user provide the  preference. 
Moreover, since the metrics we consider are functions of only the predictive rates, queries comparing classifiers are the same as queries on the associated rates. So for convenience, we will have our algorithms pose queries comparing two (feasible) rates.
Indeed  given a feasible rate, one can efficiently find the associated classifier (see Appendix \ref{append:ssec:sphere} for details). 
We next formally state the ME problem.
\bdefinition [Metric Elicitation with Pairwise Queries (given $\{(\xmbf,y)_i\}_{i=1}^n$)] Suppose that the oracle's (unknown) performance metric is $\phi$.  Using oracle queries of the form $\Omega(\rmbfhat_1, \rmbfhat_2\,;\,\phi)$, where $\rmbfhat_1, \rmbfhat_2$ are the estimated rates from samples, recover a metric $\hphi$ such that $\Vert\phi - \hphi\Vert < \kappa$ under a  suitable norm $\Vert \cdot \Vert$ for sufficiently small error tolerance $\kappa > 0$.
\label{def:me}
\edefinition

As discussed in previous chapters, the performance of ME is evaluated both by the query complexity and the quality of the elicited metric. As is standard in the decision theory literature~\cite{koyejo2015consistent, hiranandani2018eliciting, hiranandani2019multiclass, hiranandani2020fair}, we present our ME approach by first assuming access to population quantities such as the population rates $\rmbf(h, \Pmbb)$, then examine estimation error from finite samples, i.e., with empirical rates $\rmbfhat(h, \{(\xmbf,y)_i\}_{i=1}^n)$. 

\subsection{Linear Metric Elicitation}
\label{ssec:mpme}

\begin{figure}[t]
    \centering
	\begin{tikzpicture}[scale = 1.25]
	\hspace{-0.25cm}
    \begin{scope}[shift={(0,0)}, scale = 0.6]\scriptsize
    
    \def\r{0.12};
    
    \coordinate (a) at (-0.4,1);
    \coordinate (b) at (0.6, 3.25);
    \coordinate (c) at (7, 4);
    \coordinate (d) at (6.5, 2);
    \coordinate (e) at (2.5, -0.75);
    \coordinate (f) at (-0.1, -0.5);
    
    \coordinate (labelleft) at (3, -1.5);
    
    \coordinate (Cent) at (3,1.75);
    \coordinate (Centcent) at (2.85,1.90);
    \coordinate (Cent1) at (4.45,1.75);
    \coordinate (Cent2) at (3,3.2);
    \coordinate (CentL) at (1.55,1.75);
    
    \coordinate (Space1) at (0.2,0.2);
    \coordinate (SpaceR) at (0.25,3.25);
    \coordinate (Spacem) at (-0.1,2.5);
    
    \coordinate (Sphere) at (5,3.25);
    \coordinate (Sphere0) at (3,1);
    \coordinate (Sphere1) at (4.45,1);
    \coordinate (Sphere2) at (3,2.45);
    \coordinate (SphereL) at (1.65,1);
    \coordinate (Sphereminus) at (3.4,0.95);
    
    \coordinate (r) at (3.75,2);
    
    \coordinate (u11) at (-0.25, -0.25);
    \coordinate (uextra121) at (1, 3.8);
    \coordinate (u21) at (4, 4.5);
    \coordinate (uextra2k1) at (6, 1);
    \coordinate (uk1) at (3.5, -0.5);
    
    \coordinate (u12) at (0.3, -0.70);
    \coordinate (uextra122) at (0.65, 2.65);
    \coordinate (u22) at (3.4, 4.2);
    \coordinate (uextra2k2) at (5.15, 1.6);
    \coordinate (uk2) at (3.75, 0.1);
    
    \coordinate (u13) at (-0.30, 0.10);
    \coordinate (uextra123) at (1.25, 3.25);
    \coordinate (u23) at (4, 4.5);
    \coordinate (uextra2k3) at (5.75, 1);
    \coordinate (uk3) at (3.75, -0.5);
    
    \coordinate (u14) at (-0.30, 0.10);
    \coordinate (uextra124) at (1.25, 3.25);
    \coordinate (u24) at (4, 4.5);
    \coordinate (uextra2k4) at (5.75, 1);
    \coordinate (uk4) at (3.75, -0.5);

    \fill[color=black] 
            (Cent) circle (0.08)
            (u11) circle (0.08)
            (u21) circle (0.08)
            (uk1) circle (0.08);
    
    \draw[thick] (u11) .. controls (a) and (b) .. (uextra121) 
    -- (u21) .. controls (c) and (d) .. (uextra2k1) -- (uk1) .. controls  (e) and (f) .. (u11);
    
    
    
    \draw[thick] (Cent) circle (2cm);

    \draw[thick, dotted] (Cent) circle (0.5cm);
    \draw[thick, dotted] (Cent1) circle (0.5cm);
    \draw[thick, dotted] (Cent2) circle (0.5cm);
    \draw[thick, dotted] (CentL) circle (0.5cm);
    \node at (SpaceR) {{$\Rcal$}};
    
    \node at (Sphere) {\large{${\Scal}$}};
    \node at (Sphere0) {\tiny{$\Scal_{\ombf}$}};
    \node at (Sphere1) {\tiny{$\Scal_{\zmbf_1}$}};
    \node at (Sphere2) {\tiny{$\Scal_{\zmbf_2}$}};
    \node at (SphereL) {\tiny{$\Scal_{-\zmbf_1}$}};
    
    \node[below right] at (Centcent) {$\ombf$};
    
    \node[below] at (u11) {{$\embf_1$}};
    \node[above] at (u21) {{$\embf_2$}};
    \node[below right] at (uk1) {{$\embf_k$}};
    
    \node at (labelleft) {{\normalsize{(a)}}};
    
     \end{scope}
     
     \hspace{2cm}
     \begin{scope}[shift={(4.3,0)},scale = 0.5]\scriptsize
    
    \def\r{0.12};
    
    \coordinate (a) at (-0.2,1);
    \coordinate (b) at (0.8, 2.75);
    \coordinate (c) at (7, 4);
    \coordinate (d) at (6.5, 2);
    \coordinate (e) at (2.5, -0.75);
    \coordinate (f) at (-0.1, -0.5);
    
    \coordinate (labelright) at (3, -1.75);
    
    \coordinate (Cent) at (3,1.75);
    \coordinate (Centcent) at (2.85,1.90);
    \coordinate (CentR) at (3.6,2.35);
    \coordinate (CentL) at (2.4,1.15);
    
    \coordinate (Space1) at (0.2,0.2);
    \coordinate (Space2) at (-0.3,1.3);
    \coordinate (Spacem) at (-0.1,2.5);
    
    \coordinate (Sphere) at (4.5,3);
    \coordinate (Sphereplus) at (2.6,2.55);
    \coordinate (Sphereminus) at (3.4,0.95);
    
    \coordinate (r) at (3.75,2);
    
    \coordinate (u11) at (-0.25, -0.25);
    \coordinate (uextra121) at (1.25, 3.75);
    \coordinate (u21) at (4, 4.5);
    \coordinate (uextra2k1) at (6, 1);
    \coordinate (uk1) at (3.5, -0.5);
    
    \coordinate (u12) at (0.3, -0.70);
    \coordinate (uextra122) at (0.65, 2.65);
    \coordinate (u22) at (3.4, 4.2);
    \coordinate (uextra2k2) at (5.15, 1.6);
    \coordinate (uk2) at (3.75, 0.1);
    
    \coordinate (u13) at (-0.30, 0.10);
    \coordinate (uextra123) at (1.25, 3.25);
    \coordinate (u23) at (4, 4.5);
    \coordinate (uextra2k3) at (5.75, 1);
    \coordinate (uk3) at (3.75, -0.5);
    
    \coordinate (u14) at (-0.30, 0.10);
    \coordinate (uextra124) at (1.25, 3.25);
    \coordinate (u24) at (4, 4.5);
    \coordinate (uextra2k4) at (5.75, 1);
    \coordinate (uk4) at (3.75, -0.5);

    \fill[color=black] 
            (Cent) circle (0.08)
            (u11) circle (0.08)
            (u21) circle (0.08)
            (uk1) circle (0.08);
    
    \draw[dashed, blue, thick] (u11) .. controls (a) and (b) .. (uextra121) 
    -- (u21) .. controls (c) and (d) .. (uextra2k1) -- (uk1) .. controls  (e) and (f) .. (u11);
    
    \draw[dashed, brown, thick] (u11) .. controls (-2.25, 1.25) and (0.9, 3) .. (u21) .. controls (8.5,4.75) and (7,2) .. (6.5, 1.25) .. controls (6.5, 1) and (5.25, 0.3) .. (uk1) -- (u11);
    
    \draw[dashed, red, thick] (u11) .. controls (-1.5, 1.5) and (-0.5, 3.5) .. (1.5, 4.5) 
    -- (u21) .. controls (6.5, 3.5) and (6, 2) .. (6, 1.75) -- (uk1) .. controls  (3, -1) and (-0.25, -0.75) .. (u11);
    
    \draw[thick] (Cent) circle (1.5cm);
    
    \node at (Space1) {{$\Rcal^1$}};
    \node at (Space2) {{$\Rcal^2$}};
    \node at (Spacem) {{$\Rcal^m$}};
    
    \node at (Sphere) {\large{$\overline{\Scal}$}};
    
    \node[below right] at (Centcent) {$\ombf$};
    
    \node[below left] at (u11) {{$\embf_1$}};
    \node[above] at (u21) {{$\embf_2$}};
    \node[below right] at (uk1) {{$\embf_k$}};
    
    \node at (labelright) {{\normalsize{(b)}}};
    
     \end{scope}
     
    \end{tikzpicture}
  \caption{(a) Geometry of set of predictive rates $\Rcal$: A convex set enclosing a sphere ${\Scal}$ with trivial rates $\embf_i \, \forall \, i \in [k]$ as vertices; 
(b) Geometry of the product set of group rates $\Rcal^1 \times \dots \times \Rcal^m$ (best seen in color) enclosing a common sphere $\overline{\Scal} \subset \Rcal^1 \cap \dots \cap \Rcal^m$.
  }
  \label{fig:MEgeom}
\end{figure}

As a warm up, we overview the Linear Performance Metric Elicitation (LPME) procedure of Chapter~\ref{chp:multiclass}, which 
we will use as a subroutine.  
Here we assume that the oracle's metric is a linear function of rates $\phi^{\text{lin}}(\rmbf) \coloneqq \inner{\ambf}{\rmbf}$, for some unknown costs $\ambf \in \Rmbf^q$. 
In other words, given two rates $\rmbf_1$ and $\rmbf_2$, the oracle returns $\1[\inner{\ambf}{\rmbf_1} > \inner{\ambf}{\rmbf_2}]$. Since the metrics are scale invariant~\cite{narasimhan2015consistent, hiranandani2019multiclass}, w.l.o.g., one may assume $\Vert \ambf \Vert_2=1$. The goal is to elicit (the slope of) $\ambf$ using pairwise comparisons over rates.

When the number of classes $k = 2$, the coefficients $\ambf$ can be elicited using a simple one-dimensional binary search. When $k > 2$, one can apply a  coordinate-wise procedure, performing a binary search in one coordinate, while keeping the others fixed. The efficacy of this procedure, however, hinges on the geometry of the underlying 
 set of feasible rates $\Rcal$, which we discuss below. 
We first make a mild assumption 
ensuring that there is some signal for non-trivial classification.
\bassumption
\label{assump:distribution}
The conditional-class distributions are distinct, i.e., $P(Y=i|X) \ne P(Y=j|X)\;\;\forall\; i, j \in [k]$.
\label{as:sphere}
\eassumption

Let $\embf_i \in \{0,1\}^q$ denote the rates achieved by a trivial classifier that predicts class $i$ for all inputs. 
\bprop
[Geometry of $\Rcal$; Figure~\ref{fig:MEgeom}(a)] The set of  rates $\Rcal \subseteq [0, 1]^{q}$ is convex, has vertices $\{\embf_i\}_{i=1}^k$, and  
contains the rate profile $\ombf = \tfrac{1}{k} \tiny{\sum_{i=1}^k \embf_i}$ in the interior. Moreover, $\ombf$ is achieved by a classifier which for any input predicts each class with equal probability.
\label{prop:C}
\eprop
\bremark[Existence of sphere ${\Scal}$]
Since $\Rcal$ is convex and contains
the point $\ombf$ in the interior, there exists a 
sphere ${\Scal} \subset \Rcal$ of non-zero radius $\rho$ centered at $\ombf$.  
\label{rem:sphere}
\eremark
\vskip -0.2cm

By restricting the coordinate-wise binary search procedure to posing queries from within a sphere, LPME can be equivalently seen as minimizing a strongly-convex function and shown to converge to a solution $\ambfhat$ close to $\ambf$. 
Specifically, the LPME procedure 
takes any 
sphere $\Scal \subset \Rcal$, binary-search tolerance $\epsilon$, and the oracle $\Omega$ (with metric $\phi^{\text{lin}}$) 
 as input, and by posing 
$O(q\log(1/\epsilon))$
queries recovers coefficients $\ambfhat$ with 
$\Vert \ambf - \ambfhat \Vert_2 \leq O(\sqrt{q}\epsilon)$. 
Please see Chapter~\ref{chp:multiclass} for details. 

\bremark[LPME Guarantee]
Given any $q$-dimensional 
sphere $\Scal \subset \Rcal$ and an oracle $\Omega$ with metric $\phi^{\textrm{\textup{lin}}}(\rmbf)\coloneqq\inner{\ambf}{\rmbf}$, 
the LPME algorithm (Algorithm~4.2, Chapter~\ref{chp:multiclass}) provides an estimate $\ambfhat$ with $\Vert \ambfhat \Vert_2=1$ such that the estimated slope is close to the true slope, i.e.,  $\sfrac{{a}_i}{{a}_j} \approx \sfrac{\hat a_i}{\hat a_j} \; \forall \; i, j\in [q]$.
\label{rm:ratio}
\eremark

Note that the algorithm is closely tied with the scale invariance condition and thus only estimates the direction (slope) of the coefficient vector $\ambf$,
and not its magnitude. Also note the algorithm takes as input an \emph{arbitrary} sphere $\Scal \subset \Rcal$, and restricts its queries to rate vectors within the sphere. 
 In Appendix~\ref{append:ssec:sphere}, we discuss an efficient procedure for identifying a sphere 
 of suitable radius.

\vspace{-0.2cm}
\subsection{Quadratic Performance Metrics}
\label{ssec:metric}
Equipped with the LPME subroutine, our aim is to elicit metrics that are quadratic functions of rates.

\bdefinition[Quadratic Metric] For a vector $ \ambf \in \Rmbb^q$  
and a 
symmetric 
matrix $\Bmbf \in \Rmbb^{q \times q}$ with $\Vert \ambf \Vert_2^2  + \Vert \Bmbf \Vert_F^2 = 1$ (wlog.\ due to scale invariance):
\begin{equation}
    \phi^\quadr(\rmbf \,;\, \ambf, \Bmbf) = \inner{\ambf}{\rmbf} + \frac{1}{2} \rmbf^T \Bmbf \rmbf.
    \label{eq:quadmet}
\end{equation}
\vspace{-0.6cm}
\label{def:quadmet}
\edefinition

This family trivially includes the linear metrics as well as many modern metrics outlined below: 

\bexample[Class-imbalanced learning]
\emph{In problems with imbalanced class proportions, it is common to use metrics that emphasize equal performance across all classes. One example is Q-mean \cite{Lawrence+98,LiuCh11,menon2013statistical}, which is the quadratic mean of rates:}
\bequation
\phi^{\qmean}(\rmbf) \,=\, 1/k\sum_{i=1}^k \left(\sum_{j=1}^{k-1}r_{(i-1)(k-1) + j}\right)^2.
\eequation
\eexample

\bexample[Distribution matching]
\emph{
In certain applications, one needs the proportion of predictions 
for each class (i.e., the coverage) to match a target distribution $\boldsymbol{\pi} \in \Delta_k$ \cite{goh2016satisfying,narasimhan2018learning, narasimhan2019optimizing,Cotter:2019}. A 
measure often used for this task is the squared difference between the per-class coverage and the target distribution:} 
\bequation
\phi^{\cov}(\rmbf) \,=\, \sum_{i=1}^k \left(\cov_i(\rmbf) - \pi_i\right)^2,
\eequation
\emph{where $\cov_i(\rmbf) = 1 - \sum_{j=1}^{k-1}r_{(i-1)(k-1) + j} + \sum_{j> i}r_{(j-1)(k-1)+i}+ \sum_{j<i}r_{(j-1)(k-1)+i-1}$. 
Similar metrics can be found in the quantification literature where the target is set to the class prior $\Pmbb(Y=i)$ \cite{Fab1, 
Kar16}. 
We capture more general quadratic distance measures for distributions, e.g.,} \bequation
(\bf{\cov}(\rmbf) - \boldsymbol{\pi})^{T}\Qmbf (\bf{\cov}(\rmbf)-\boldsymbol{\pi})
\eequation
\emph{for a positive semi-definite matrix $\Qmbf \in PSD_k$ \cite{Lindsay08}.
}
\eexample

\bexample[Fairness violation]
\emph{A popular criterion for group-based fairness is {equalized odds}, which requires equal rates across different protected groups \cite{hardt2016equality,bechavod2017learning}. This can be measured by the squared differences between the group rates. 
With $m$ groups and $\rmbf^g$ denoting the rate vector evaluated on examples from group $g$, this is given by:} 
\bequation
\phi^{\eo}((\rmbf^{1},\dots,\rmbf^{m})) \,=\, \sum_{ v>u}\sum_{i=1}^q \left(r^u_i - r^v_i\right)^2.
\eequation
\emph{Other quadratic fair-criteria for two classes include {equal opportunity} $\phi^{\text{EOpp}}((\rmbf^{1},\dots,\rmbf^{m}))
 = \sum_{v>u}(r_1^u - r_1^v)^2$~\cite{hardt2016equality}, {balance for the negative class} $\phi^{\text{BN}}((\rmbf^{1},\dots,\rmbf^{m}))
  = (r_2^u - r_2^v)^2$~\cite{kleinberg2017inherent}, {error-rate balance} $\phi^{\text{EB}}((\rmbf^{1},\dots,\rmbf^{m})) =  0.5\sum_{v>u}(r_1^u - r_1^v)^2 + (r_2^u - r_2^v)^2$~\cite{chouldechova2017fair}, etc.\ and their weighted variants. 
In Section \ref{sec:fairme}, we consider metrics that trade-off between an 
error term and a quadratic fairness term.}
\eexample
\vskip -0.1cm

Note that, due to the scale invariance condition in Definition~\ref{def:quadmet}, the largest singular value of $\Bmbf$ is bounded by 1. This is because $\Vert \Bmbf \Vert_2 \leq \Vert \Bmbf \Vert_F \leq 1$. Thus the metric $\phi^{\quadr}$ is $1$-smooth and implies that it is locally linear around a given rate. Lastly, we need the following assumption on the metric.

\bassumption
\label{assump:smoothness}
The gradient of  $\phi$ at the trivial rate $\ombf$ is non-zero, i.e., $\nabla \phi^{\quadr}(\rmbf)|_{\rmbf=\ombf} = \ambf + \Bmbf\ombf \neq 0.$
\label{as:smooth}
\eassumption

The non-zero gradient assumption is reasonable for a convex $\phi^{\text{quad}}$, where it merely implies that the optimal classifier for the metric is not the uniform random classifier. 
\section{Quadratic Metric Elicitation}
\label{sec:quadme}
We now 
present our procedure for Quadratic Performance Metric Elicitation (QPME). We assume that the oracle's unknown metric is quadratic  (Definition~\ref{def:quadmet}) and seek to estimate its parameters $(\ambf, \Bmbf)$ 
by posing  queries to the oracle. 
Unlike LPME, a simple binary search based procedure cannot be directly applied to elicit these parameters. Our approach instead approximates the quadratic metric by a linear function at a few select rate vectors and invokes LPME to estimate the local-linear approximations' slopes. 
The challenge, of course, is to pick a small number of \emph{feasible} rates for performing the local approximations and to reconstruct the original metric \emph{just} from the estimated local slopes. 

\subsection{Local Linear Approximation}
We will find it convenient to work with a shifted version of the quadratic metric, centered at the  point $\ombf$, the uniform random rate vector (see Proposition \ref{prop:C}): 
\begin{align*}
\phi^\quadr(\rmbf;\, \ambf, \Bmbf) &=  
\inner{\dmbf}{\rmbf - \ombf} + \frac{1}{2}(\rmbf - \ombf)^T \Bmbf (\rmbf - \ombf) + c\\ &=\bphi(\rmbf;\, \dmbf, \Bmbf) + c\numberthis \label{eq:quadmetshift},
\end{align*}
where $\dmbf= \ambf+\Bmbf\ombf$ and $c$ is a constant independent of $\rmbf$, and so the oracle can be equivalently seen as responding with the shifted metric $\bphi(\rmbf;\, \dmbf, \Bmbf)$.

Let $z$ be a fixed point in $\Rcal$. Since the metric in Definition~\ref{def:quadmet} is smooth, the metric  can be closely approximated by its first-order Taylor expansion in a small neighborhood around $\zmbf$, i.e.,
\begin{equation}
\bphi(\rmbf;\, \dmbf, \Bmbf) \approx \inner{\dmbf + \Bmbf (\zmbf - \ombf)}{\rmbf} + c',
\label{eq:loclinapx}
\end{equation}
for a constant $c'$. 
So if we apply LPME to the metric $\bphi$ with the queries $(\rmbf_1, \rmbf_2)$ to the oracle restricted to a small ball around $\zmbf$, the procedure effectively estimates the  slope of the vector $\dmbf + \Bmbf (\zmbf - \ombf)$ in the above linear function (up to a small approximation error). 

We will exploit this idea by applying LPME to small neighborhoods around selected  points to  elicit the coefficients $\ambf$ and $\Bmbf$ for the original metric in~\eqref{eq:quadmet}. For simplicity, we will assume that the oracle is noise-free and later show robustness to noise and the query complexity guarantees in Section~\ref{sec:guarantees}.

\subsection{Eliciting Metric Coefficients}

\balgorithm[t]
\caption{QPM Elicitation}
\label{alg:q-me}
\balgorithmic[1]
\STATE \textbf{Input:} ${\Scal}$, 
Search tolerance $\epsilon > 0$, Oracle $\Omega$ with 
metric $\bphi$
\STATE $\fmbf_0 \leftarrow$ LPME$\left(\Scal_\ombf, \epsilon, \Omega\right)$ with $\Scal_\ombf \subset {\Scal}$ and obtain~\eqref{eq:0col}
\FOR{$j \in \{1,2,\dots,q\}$}
\STATE $\fmbf_j\leftarrow$LPME$\left(\Scal_{\zmbf_j}, \epsilon, \Omega\right)$ with $\Scal_{\zmbf_j} \subset {\Scal}$ and obtain~\eqref{eq:jcol}
\ENDFOR
\STATE $\fmbf^-_{1} \leftarrow$ LPME$\left(\Scal_{-\zmbf_1}, \epsilon, \Omega\right)$ with $\Scal_{-\zmbf_1}\hspace{-2pt}\subset\hspace{-1pt} {\Scal}$ and obtain~\eqref{eq:negativegrad}
\STATE $\ambfhat, \Bmbfhat \leftarrow $ normalized solution dervied from
~\eqref{eq:poly2elicitamatfinal}
\STATE \textbf{Output:} $\ambfhat, \Bmbfhat$ 
\ealgorithmic
\ealgorithm

We outline the main steps of Algorithm~\ref{alg:q-me} below. Please see Appendix~\ref{append:sec:qpme} for the full derivation.  

\textbf{Estimate coefficients $\dmbf$ (Line 2).}\
We first wish to estimate the linear portion $\dmbf$ of the metric $\bphi$ in~\eqref{eq:quadmetshift}. For this, we
apply the LPME subroutine to a small ball $\Scal_\ombf \subset \Scal$ of radius $\varrho < \rho$ around the point $\ombf$. See Figure~\ref{fig:MEgeom}(a) for an illustration. 
Within this ball, the metric $\bphi$ approximately equals the linear function
$\inner{\dmbf}{\rmbf} + c'$ using~\eqref{eq:loclinapx}, and so the LPME gives us an estimate of the slope of $\dmbf$.
 From Remark~\ref{rm:ratio},  
 the estimates $\fmbf_0 =
 (f_{10}, \dots, f_{q0})$  approximately satisfy the following $(q-1)$ equations: 
\begin{equation}
    \frac{d_i}{d_1} = \frac{f_{i0}}{f_{10}} \qquad \forall \; i \in \{2, \dots, q\}.
    \label{eq:0col}
\end{equation}

\textbf{Estimate coefficients $\Bmbf$ (Lines 3--5).}
Next, we wish to estimate each column of the matrix $\Bmbf$ of the metric $\bphi$ in~\eqref{eq:quadmetshift}. For this, we apply LPME to small neighborhoods around points in the direction of standard basis vectors $\alphambf_{j} \in \Rmbb^{q}$, $j = 1, \ldots, q$.
Note that within a small ball around $\ombf + \alphambf_j$, the metric $\ophi$ is approximately  the linear function
$\inner{\dmbf + \Bmbf_{:,j}}{\rmbf} + c'$, and so the LPME procedure when applied to this region will give us an estimate of the slope of $\dmbf + \Bmbf_{:,j}$. However, to ensure that the center point we choose is a feasible rate, we will have to re-scale the standard basis, and apply the subroutine to balls $\Scal_{\zmbf_j}$ of radius $\varrho < \rho$ centered at $\zmbf_j = \ombf + (\rho - \varrho)\alphambf_j$. See Figure~\ref{fig:MEgeom}(a) for the visual intuition. The returned estimates $\fmbf_j = (f_{1j}, \dots, f_{qj})$ approximately satisfy:
\begin{equation}
\frac{d_i + (\rho-\varrho)B_{ij}}{d_1 + (\rho-\varrho)B_{1j}} = \frac{f_{ij}}{f_{1j}} \quad \forall \; i \in \{2, \ldots, q\},\; j \leq i.
\label{eq:jcol}
\end{equation}
Since the matrix $\Bmbf$ is symmetric, so far we have $q(q+1)/2$ equations. 
Now note that since we are only eliciting slopes using LPME, we always lose out on one degree of freedom. Hence, there are $q$ more unknown entities, 
and to estimate them we need $q-1$ more equations beside the one normalization condition. For this, we apply LPME to a sphere $\Scal_{-\zmbf_1}$ of radius $\varrho$ around rate $-\zmbf_1$ as shown in Figure~\ref{fig:MEgeom}(a). The returned slopes $\fmbf_1^- = (f_{11}^-, \dots, f_{q1}^-)$ approximately satisfy:
\begin{equation}
    \frac{d_2-(\rho - \varrho)B_{21}}{d_1-(\rho - \varrho)B_{11}} = \frac{f_{21}^-}{f_{11}^-}.
    \label{eq:negativegrad}
\end{equation}
\textbf{Put together (Line 6).}\ By combining~\eqref{eq:0col},~\eqref{eq:jcol} and~\eqref{eq:negativegrad}, we  express each entry of $\Bmbf$ in terms of $d_1$:
\begin{align*}
    B_{ij} &= \Big(F_{i,1,j} (1 + F_{j,1,1}) - F_{i,1,j} F_{j,1,0} d_{1} - F_{i,1,0}+
    F_{i,1,j}\textstyle\frac{F^-_{2,1,1} + F_{2,1,1} - 2F_{2,1,0}}{F^-_{2,1,1} - F_{2,1,1}}\Big)d_1,
    \numberthis \label{eq:poly2elicitamatfinal}
\end{align*}
where
$F_{i,j,l} = f_{il} / f_{jl}$ and $F^-_{i,j,l} = f^-_{il}/f^-_{jl}$. 
Using $\dmbf= \ambf+\Bmbf\ombf$ and the fact that the coefficients are normalized, i.e., $\Vert \ambf \Vert_2^2  + \Vert \Bmbf \Vert_F^2 = 1$, we can obtain estimates for $\Bmbf$ and $\ambf$ independent of $d_1$. 
Moreover, the derivation so far assumes $d_1 \ne 0$. This is based on Assumption \ref{assump:smoothness} which states that at least one coordinate of $\dmbf$ is non-zero, and we've assumed w.l.o.g.\ that this is $d_1$. 
In practice, we can identify a non-zero coordinate using $q$ trivial queries of the form $(\varrho\alphambf_i + \ombf, \ombf), \forall i \in [q]$.

Here, we emphasize on a key difference with Chapters~\ref{chp:binary} and~\ref{chp:multiclass} which is that, there we relied on a boundary point characterization that does not hold for general nonlinear metrics. 
Instead, we use structural properties of the metric to estimate local-linear approximations. As we discussed in the beginning of this chapter, while this may seem a natural idea, the QPME procedure tackles three key challenges: (a) works with only \emph{slopes} for the local-linear functions, (b) ensures that the center points for approximations are feasible, and (c) handles the multiplicative errors in the slopes
(see Section~\ref{sec:guarantees}). 
\section{Eliciting Quadratic Fairness Metrics}
\label{sec:fairme}
We now discuss quadratic metric elicitation for \emph{algorithmic fairness}. We consider the setup of Chapter~\ref{chp:fair}, where the goal is to elicit a metric that trades-off between predictive performance and fairness violation~\cite{kamishima2012fairness, hardt2016equality, chouldechova2017fair, bechavod2017learning, menon2018cost}. However, unlike Chapter~\ref{chp:fair}, we handle general quadratic fairness violations and show how QPME can be easily employed to elicit group-fair metrics. 

\subsection{Fairness Preliminaries}
\label{ssec:fpmebackground}
We consider a $k$-class problem comprising $m$ groups and use $g \in [m]$ to denote the group membership. The groups are assumed to be disjoint, fixed, and known apriori~\cite{hardt2016equality, agarwal2018reductions, barocas2016big}. We have access to a dataset of size $n$ denoted by $\{(\xmbf, g, y)_i\}_{i=1}^n$, generated \emph{iid} from a distribution $ \Pmbb(X, G, Y)$.  In this case, we will work with a separate (randomized) classifiers $h^g : \Xcal \rightarrow \Delta_k$ for each group $g$, and use 
 $\Hcal^g = \{h^g : \Xcal \rightarrow \Delta_k\}$
 to denote the set of all classifiers for a  group $g$. 
 
\emph{Group predictive rates:} Similar to~\eqref{eq:components}, we denote the group-conditional rate matrix for a classifier $h^g$ by $\Rmbf^g(h^g, \Pmbb) \in \Rmbb^{k \times k}$, where the $ij$-th entry is additionally conditioned on a group and is given by:
\begin{align}
	R^g_{ij}(h^g, \Pmbb) \coloneqq \Pmbb(h^g = j | Y = i, G= g )\;\;\forall \,i, j \in [k].
	\label{eq:f-components}
\end{align}
\vskip -0.2cm
Analogous to the general setup (Section~\ref{sec:background}), 
 we denote the group rates by vectors $\rmbf^g(h^g, \Pmbb) = \offdiag(\Rmbf^g(h^g, \Pmbb))$,
and the set of feasible rates for group $g$  by 
\bequation
\Rcal^g = \{\rmbf^g(h^g, \Pmbb) \,:\, h^g \in \Hcal^g \}.
\eequation

\emph{Rates for overall classifier:} We construct the overall classifier $h : (\Xcal, [m]) \rightarrow \Delta_k$ by predicting with classifier $h^g$ for group $g$, i.e.\ $h(\xmbf, g) \coloneqq h^g(\xmbf)$. We will be interested in both the predictive performance of the overall classifier and its fairness violation.
For the former, we will measure the overall rate matrix for
$h$ as denoted in~\eqref{eq:components}, which can also be represented as:
\begin{align*}
R_{ij} \coloneqq \Pmbb(h=j|Y=i) 
= \sum\nolimits_{g=1}^m t_{i}^gR_{ij}^g,
    \numberthis \label{eq:overallrate}
\end{align*}
where 
$ t^g_{i} \coloneqq \Pmbb(G=g|Y=i)$
is the prevalence of group $g$ within class $i$. 
For the latter, we will need the $m$ group-specific rates, represented together as a tuple: 
\bequation
\rmbf^{1:m} \coloneqq  (\rmbf^1, \dots, \rmbf^m) \in \Rcal^1 \times \dots \times \Rcal^m =: \prodRcal.
\eequation
Lastly, the overall rates in~\eqref{eq:overallrate} can be 
written as a flattened  vector $\rmbf \in [0,1]^q$, and can be expressed in terms of the group-specific rates as $\rmbf = \sum_{g=1}^m \bm{\tau}^g \odot \rmbf^g$, where
$\bm{\tau}^g \coloneqq \offdiag([\mathbf{t}^g \; \mathbf{t}^g\; \ldots \; \mathbf{t}^g])$. 

\subsection{Fair (Quadratic) Metric Elicitation}
\label{ssec:f-metric}

We seek to elicit a metric 
that trades-off between predictive performance defined by a linear function of the overall rates $\rmbf$ and fairness violation defined by a quadratic function of the group rates $\rmbf^{1:m}$.

\bdefinition \emph{(Fair (Quadratic) Performance Metric)} 
For misclassification costs  $\ambf \in \Rmbb^q$, $\ambf \geq 0$, 
fairness violation costs $\mathbb{B} \,=\, \{\Bmbf^{uv} \in PSD_q\}_{u, v=1, v>u}^m$, 
and a trade-off parameter $\lambda \in [0,1]$, we define:
\begin{align*}
&\phi^\fair(\tupr; \ambf, \mathbb{B}, \lambda) \,\coloneqq\, (1-\lambda)\inner{\ambf}{\rmbf} ~+~
\lambda \frac{1}{2} \left(\sum\nolimits_{v>u} (\rmbf^u - \rmbf^v)^T\mathbbm{\Bmbf}^{uv}(\rmbf^{u} - \rmbf^v)\right)
    \numberthis \label{eq:f-linmetric},
\end{align*}
where w.l.o.g.\
the parameters $\ambf$ and $\Bmbf^{uv}$'s are normalized:
    $\Vert \ambf \Vert_2 = 1, \, \frac{1}{2}\sum_{v>u}^{m} \Vert \Bmbf^{uv} \Vert_F = 1.$
\label{def:f-linmetric}
\edefinition

The coefficients $\ambf, \Bmbf^{uv}$'s are separately normalized so that the predictive performance and fairness violation are in the same scale, and we can additionally elicit the trade-off parameter $\lambda$. Analogous to Definitions \ref{def:query}--\ref{def:me}, we  present the problem of fair quadratic metric elicitation. 
\bdefinition[Fair Quadratic Metric Elicitation with Pairwise Comparison Queries (given $\{(\xmbf,g,y)_i\}_{i=1}^n$)]
Let $\Omega$ be an oracle for the (unknown) metric $\phi^\fair$, which for any given $\tupr_1, \tupr_2$, outputs $\Omega(\tupr_1, \tupr_2) = \1[\phi^\fair(\tupr_1) > \phi^\fair(\tupr_2)]$. 
Using oracle queries of the form $\Omega(\tuprhat_1, \tuprhat_2)$, where $\tuprhat_1, \tuprhat_2$ are the estimated rates from samples, recover a metric $\hphi^\fair = (\ambfhat, \hat{\mathbb{B}}, \lambdahat)$ such that $\Vert\phi^\fair - \hphi^\fair\Vert < \kappa$ under a  suitable norm $\Vert \cdot \Vert$ for sufficiently small error tolerance $\kappa > 0$.
\label{def:fpme}
\edefinition
\vskip -0.1cm

Similar to Section \ref{ssec:mpme}, we study the space of feasible rates $\Rcal^{1:m}$ under the following mild assumption. 
\bassumption
For each group $g\in[m]$, the conditional-class distributions $P(Y=j|X, G=g), \, j \in [q],$ are distinct, i.e.\ there is some signal for non-trivial classification for each group.
\label{as:f-sphere}
\eassumption
\bprop
[Geometry of $\prodRcal$; Figure~\ref{fig:MEgeom}(b)] For each group $g$, a classifier that predicts class $i$ on all inputs results in the same rate vector $\embf_i$. The rate space $\Rcal^g$ for each group $g$ is convex and so is the intersection 
$\Rcal^1 \cap \dots \cap \Rcal^m$, which also contains the rate profile $\ombf = \tfrac{1}{k} \tiny{\sum_{i=1}^k \embf_i}$ (achieved by the uniform random classifier) in the interior. 
\label{prop:f-C}
\eprop

\bremark[Existence of sphere $\overline{\Scal}$ in $\Rcal^1 \cap \dots \cap \Rcal^m$]
There exists a 
sphere $\overline{\Scal} \subset \Rcal^1 \cap \dots \cap \Rcal^m$ of radius $\rho$ centered at $\ombf$. Thus, a rate $\smbf \in\overline{\Scal}$ is feasible for each of the $m$ groups, i.e.\ $\smbf$ is achievable by some classifier $h^g$ for each group $g \in [m]$.
\label{as:f-sphere}
\eremark
\vskip -0.1cm

Because we allow separate classifier for each group, the above remark implies that any rate  $\rmbf^{1:m} = (\smbf^1, \ldots, \smbf^m)$ for arbitrary points $\smbf^1, \ldots, \smbf^m \in \overline{\Scal}$ is achievable for some choice of group-specific classifiers $h^1, \ldots, h^m$. This observation will be useful in the elicitation algorithm we describe next.

\subsection{Eliciting Metric Parameters $({\ambf}, \mathbb{B}, \lambda)$}

\begin{figure}[t]
\hspace*{1.6cm}
    \centering
    \includegraphics[scale=1]{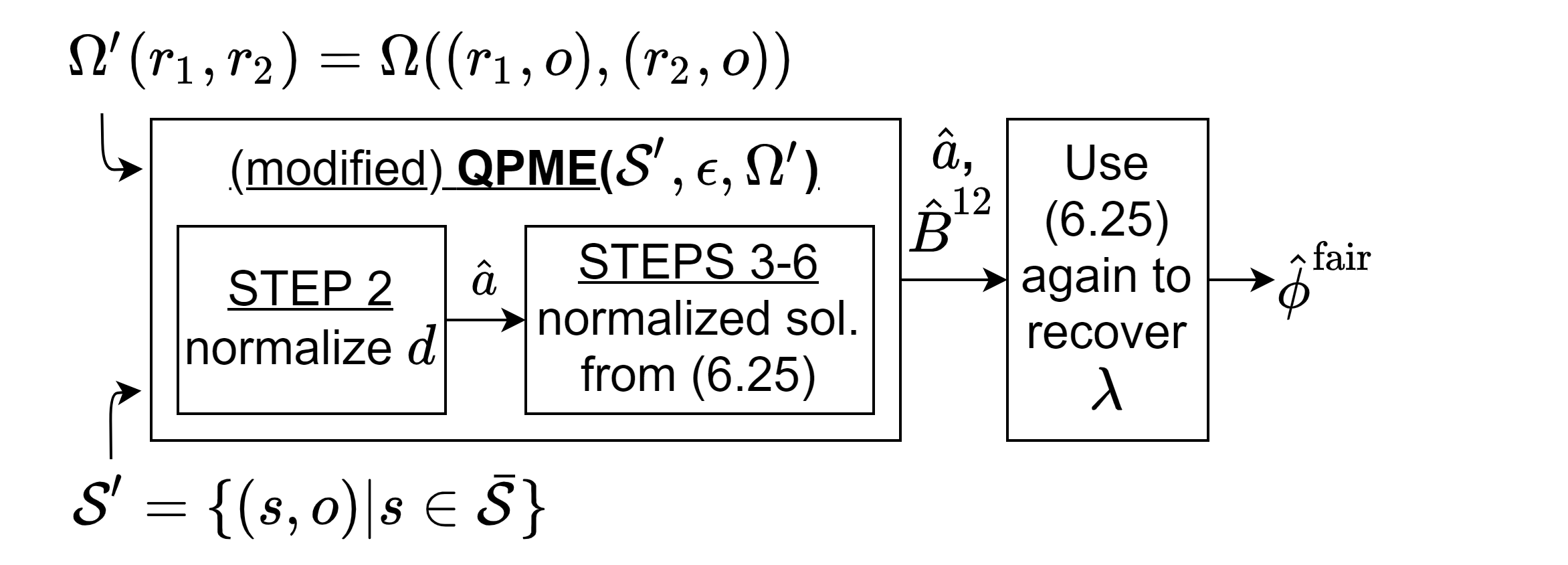}
    \caption{Eliciting Fair Quadratic Metrics (Definition \ref{def:fpme}) for two groups using a minor modification of QPME (Algorithm~\ref{alg:q-me}).}
    \label{fig:fairness-workflow}
\end{figure}

We present a strategy for eliciting fair metrics (Definition \ref{def:f-linmetric}) by adapting the QPME algorithm. For simplicity, we  focus on the $m=2$ case and extend our approach to multiple groups in Appendix \ref{append:sec:fpme}.

Observe that for a rate profile $\rmbf^{1:2} = (\smbf, \ombf)$, where the first group is assigned an arbitrary point in $\overline{\Scal}$ and the second group  is assigned the uniform random classifier's rate $\ombf$, the fair metric~\eqref{eq:f-linmetric} becomes:
\begin{align*}
\phi^\fair((\smbf, \ombf); \ambf,\, \Bmbf^{12}, \lambda) &\coloneqq (1-\lambda)\inner{\ambf}{\bm{\tau}^1 \odot \smbf + \bm{\tau}^2 \odot \ombf} +
 \frac{\lambda}{2} (\smbf - \ombf)^T\Bmbf^{12}(\smbf - \ombf)\vspace{-0.2cm}\\
&\coloneqq \inner{\dmbf}{\smbf - \ombf} +
\frac{1}{2} (\smbf - \ombf)^T\Bmbf(\smbf - \ombf) \\
&\coloneqq \overline{\phi}(\smbf; \dmbf, \Bmbf),
    \numberthis \label{eq:f-linmetricshift}
\end{align*}
where $\dmbf = (1-\lambda)\taumbf^1\odot\ambf$ and $\Bmbf = \lambda \Bmbf^{12}$, and we use $\taumbf^1 + \taumbf^2 =\bm{1}$ (the vector of ones) for the second step. The metric $\overline{\phi}$ above is a particular instance of the quadratic metric in~\eqref{eq:quadmetshift}.  
We can thus apply a slight variant of the QPME procedure in Algorithm~\ref{alg:q-me} to solve the quadratic metric elicitation problem over the sphere $\Scal' = \{(\smbf, \ombf) \,|\, \smbf \in \overline{\Scal}\}$ with the modified oracle $\Omega'(\rmbf_1, \rmbf_2) = \Omega((\rmbf_1, \ombf), (\rmbf_2, \ombf))$. 

The only change needed for the algorithm is in line 7, where 
we need to account for the changed relationship between $\dmbf$ and $\ambf$ and need to separately (not jointly) normalize the linear and quadratic coefficients. With this change, the output of the algorithm directly gives us the required estimates. 
Specifically, from step 2 of Algorithm~\ref{alg:q-me} and \eqref{eq:0col}, we have $\hat{d}_i = (1-\lambda)\tau^1_i \hat{a}_i$. By normalizing $\dmbf$, we 
get $\ambfhat = \frac{\dmbf}{\|\dmbf\|}$ for the linear coefficients.
Similarly, steps 3-6 of Algorithm~\ref{alg:q-me} and \eqref{eq:poly2elicitamatfinal} gives us: 
\begin{align*}
\textstyle
    \hat{B}_{ij} = \lambda\hat{B}^{12}_{ij} = \Big(F_{i,1,j} (1 + F_{j,1,1}) - F_{i,1,j} F_{j,1,0} d_{1}
    + F_{i,1,j}\textstyle\frac{F^-_{2,1,1} + F_{2,1,1} - 2F_{2,1,0}}{F^-_{2,1,1} - F_{2,1,1}}\Big)(1-\lambda)\tau^1_1\hat{a}_1.\numberthis \label{eq:fairBij}
\end{align*}
Again by normalizing we directly get estimates 
$\Bmbfhat^{12} = {\Bmbfhat}/{\|\Bmbfhat\|_F}$ for the quadratic coefficients.

Finally, because the linear and quadratic coefficients are separately  normalized, the estimates $\ambfhat,\, \Bmbfhat^{12}$ are independent of the trade-off parameter $\lambda$. 
Given estimates {\small$\hat{B}^{12}_{ij}$} and $\ahat_1$,  we can now additionally estimate the trade-off parameter {\small$\hat{\lambda}$} 
from~\eqref{eq:fairBij}. See Figure~\ref{fig:fairness-workflow} for an illustration of the entire procedure.


The proposed approach for the fair (quadratic) metric elicitation 
easily extends to multiple groups by applying the QPME procedure described above multiple times after fixing one cluster of groups to the rate $\ombf$ and the remaining to the same rate $\smbf$ in the intersection sphere $\overline{\Scal}$. See Appendix \ref{append:sec:fpme} for details. 
In Appendix~\ref{append:ssec:lambda}, we  also provide an alternate binary search based method similar to Chapter~\ref{chp:fair} for eliciting the trade-off parameter $\lambda$ when the linear predictive and quadratic fairness coefficients are already known. This is along similar lines to the application considered by Zhang et al.~\cite{zhang2020joint}, but unlike them, instead of complicated ratio queries, we require simpler pairwise queries. 
\section{Guarantees}
\label{sec:guarantees}

We discuss guarantees for the QPME procedure (Algorithm~\ref{alg:q-me}) under the following feedback model, which is useful in practice. The fair metric elicitation guarantees follow directly as a consequence.

\bdefinition[Oracle Feedback Noise: $\epsilon_\Omega \geq 0$] Given rates $\rmbf_1, \rmbf_2$, 
the oracle responds correctly iff $|\phi^{\quadr}(\rmbf_1) - \phi^{\quadr}(\rmbf_2)| > \epsilon_\Omega$ and may be incorrect otherwise.
\label{def:noise}
\edefinition

In words, the oracle may respond incorrectly if the rates are very close as measured by the metric $\phi^{\quadr}$. 
Since eliciting the metric involves offline computations including certain ratios, we discuss guarantees under the following regularity assumption that ensures all components are well defined. 
\bassumption
For the shifted quadratic metric $\bphi$ in~\eqref{eq:quadmetshift},  
the gradients at the rate profiles $\ombf$, $-\zmbf_1$, and $\{\zmbf_1, \dots, \zmbf_q\}$, are non-zero vectors. 
Additionally, $\rho > \varrho \gg \epsilon_\Omega$.
\label{as:regularity-q}
\eassumption

\btheorem
Given $\epsilon,\epsilon_\Omega\geq 0$, and a 1-Lipschitz metric $\phi^{\quadr}$ (Definition~\ref{def:quadmet}) parametrized by $\ambf, \Bmbf$, under Assumptions~\ref{assump:distribution},  \ref{assump:smoothness},  and \ref{as:regularity-q}, after $O\left(q^2\log \tfrac 1 {\epsilon}\right)$ queries Algorithm~\ref{alg:q-me} returns a metric $\hphi^{\quadr} = (\ambfhat, \Bmbfhat)$ such that 
    $\Vert \ambf-\ambfhat \Vert_{2}\leq O\left({q}(\epsilon+\sqrt{\varrho+\epsilon_\Omega/\varrho})\right)$ 
    and 
    $\Vert \Bmbf -\Bmbfhat \Vert_{F}\leq O\left(q\sqrt{q}(\epsilon+\sqrt{\varrho + \epsilon_\Omega/\varrho})\right)$.
\label{thm:q-me}
\etheorem

\btheorem
While eliciting the metric $\phi^{\quadr}$ (Definition~\ref{def:quadmet}), at least $\Omega(q^2\log(1/q\sqrt q\epsilon))$ pairwise queries are needed to achieve an error of $q\sqrt q\epsilon$ for some (slack) $\epsilon$. 
\label{thm:lb}
\etheorem
Theorem~\ref{thm:q-me} shows that the QPME procedure is robust to noise and its query complexity depends only \emph{linearly} in the number of unknowns. Theorem~\ref{thm:lb} shows that the inherent complexity of the problem is driven by the number of unknowns, which in the most general case (Definition~\ref{def:quadmet}) is $O(q^2)$. Thus, QPME procedure's query complexity is optimal barring the log 
term. We stress that despite eliciting a more complex (nonlinear) metric, the query complexity order is same as prior methods for linear elicitation with respect to the number of unknowns~\cite{hiranandani2018eliciting, hiranandani2019multiclass}. 
With added structural assumptions on the metric, our proposal can be modified to further reduce the query complexity. For example, suppose one knows that the matrix $\Bmbf$ is diagonal, then each  LPME subroutine call  
needs to estimate only one parameter, which can be done in constant number of queries. The resulting query complexity will be  {\small$\tilde O(q)$} which is again \emph{linear} in the number of unknowns.
Moreover, since sample estimates of rates are consistent estimators, and the metrics are $1$-Lipschitz w.r.t.\ rates, with high probability, we gather correct oracle feedback from querying with finite sample estimates $\Omega(\hat{\rmbf}_1, \hat{\rmbf}_2)$ instead of querying with population statistics $\Omega({\rmbf}_1, {\rmbf}_2)$, as long as we have sufficient samples (see Appendix~\ref{append:sec:confusion}). Other than this, Algorithm~\ref{alg:q-me} is agnostic to finite sample errors as long as the sphere $\Scal$ is in the space $\Rcal$.
\vspace{-0.25cm}
\section{Experiments}
\label{quad-sec:experiments}
We evaluate our approach on simulated oracles. We first present results on a synthetically generated query space and then discuss results on real-world datasets. 

\vspace{-0.25cm}
\subsection{Eliciting Metrics}
\label{ssec:elicitmetrics}

\begin{figure*}[t]
	\centering 
	\vskip -0.1cm
	\subfigure[]{
        \hspace{-0.3cm}
		{\includegraphics[width=4.5cm]{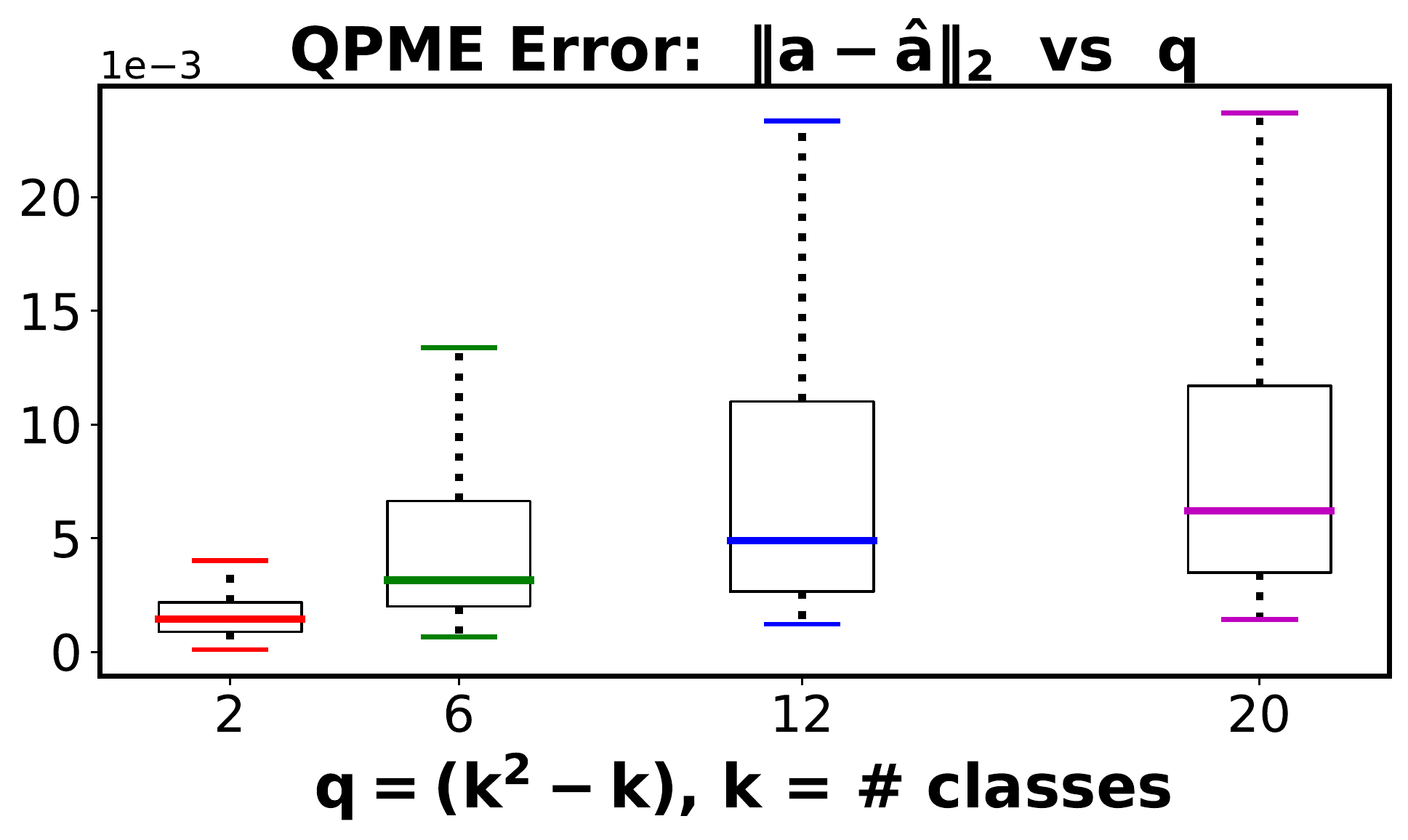}}
		\label{fig:q_rec_a}
	}
	\subfigure[]{
        \hspace{-0.375cm}
		{\includegraphics[width=4.5cm]{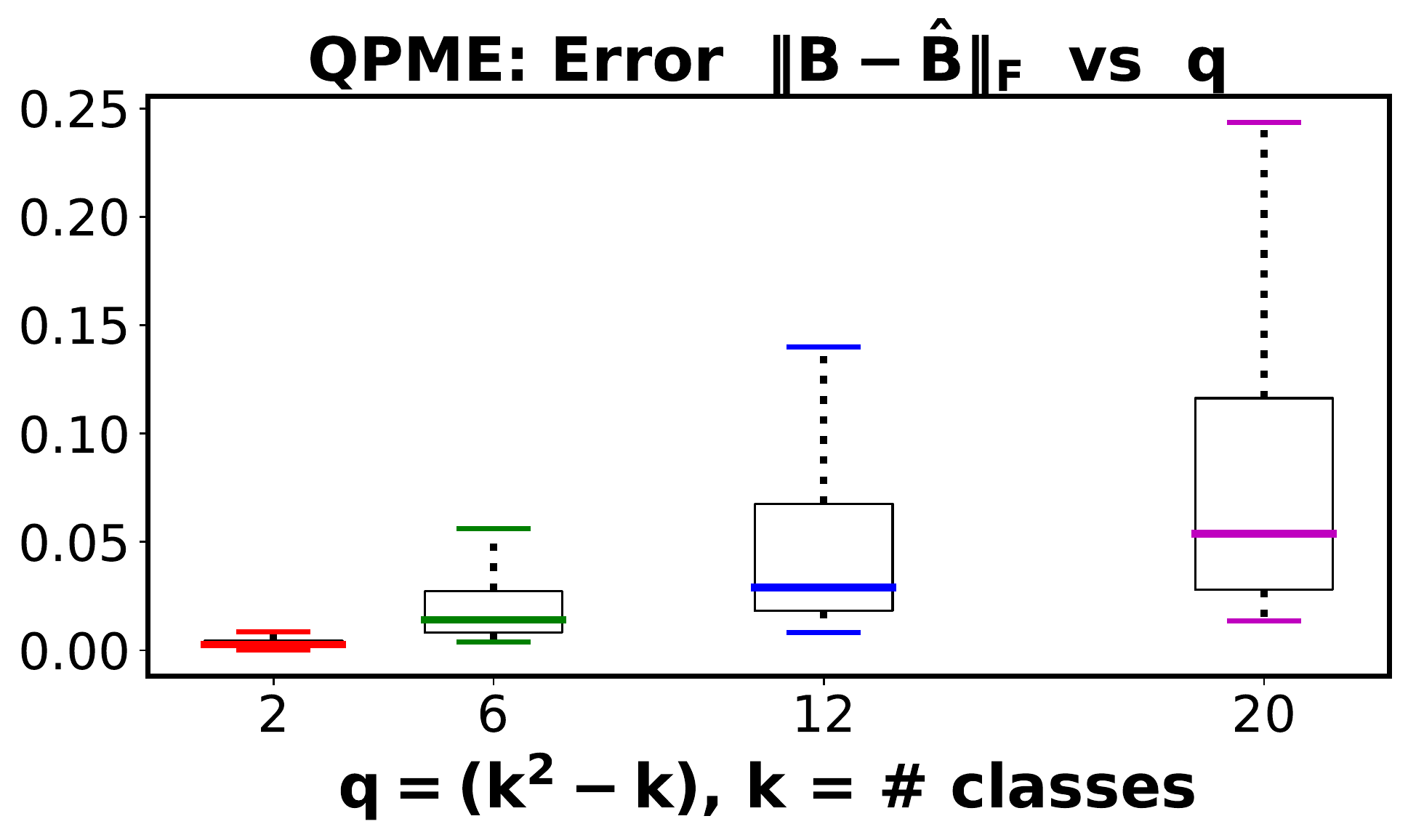}}
		\label{fig:q_rec_B}
	}\\
	\subfigure[]{
        \hspace{-0.375cm}
		{\includegraphics[width=4.5cm]{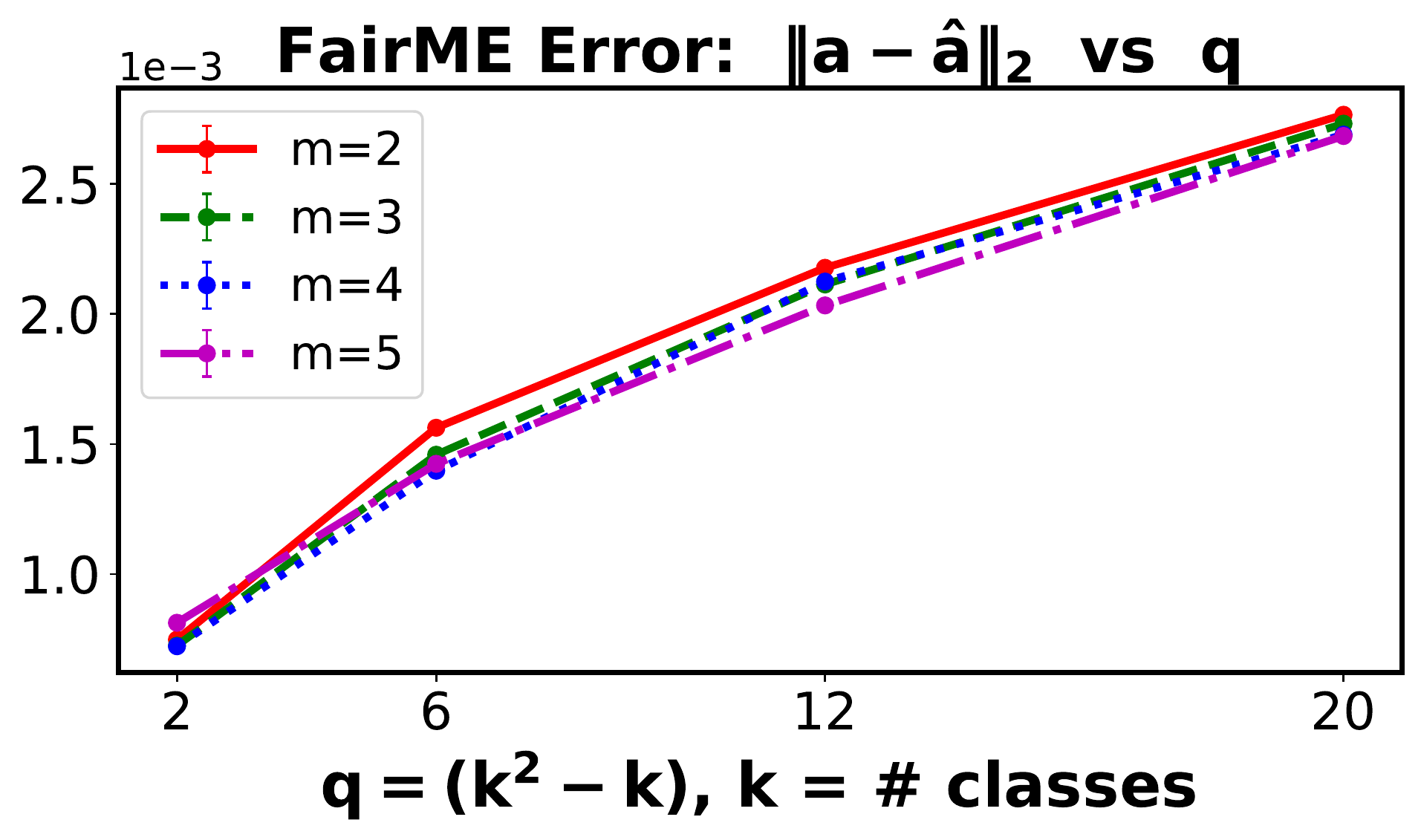}}
		\label{fig:f_rec_a}
	}
	\subfigure[]{
        \hspace{-0.375cm}
		{\includegraphics[width=4.5cm]{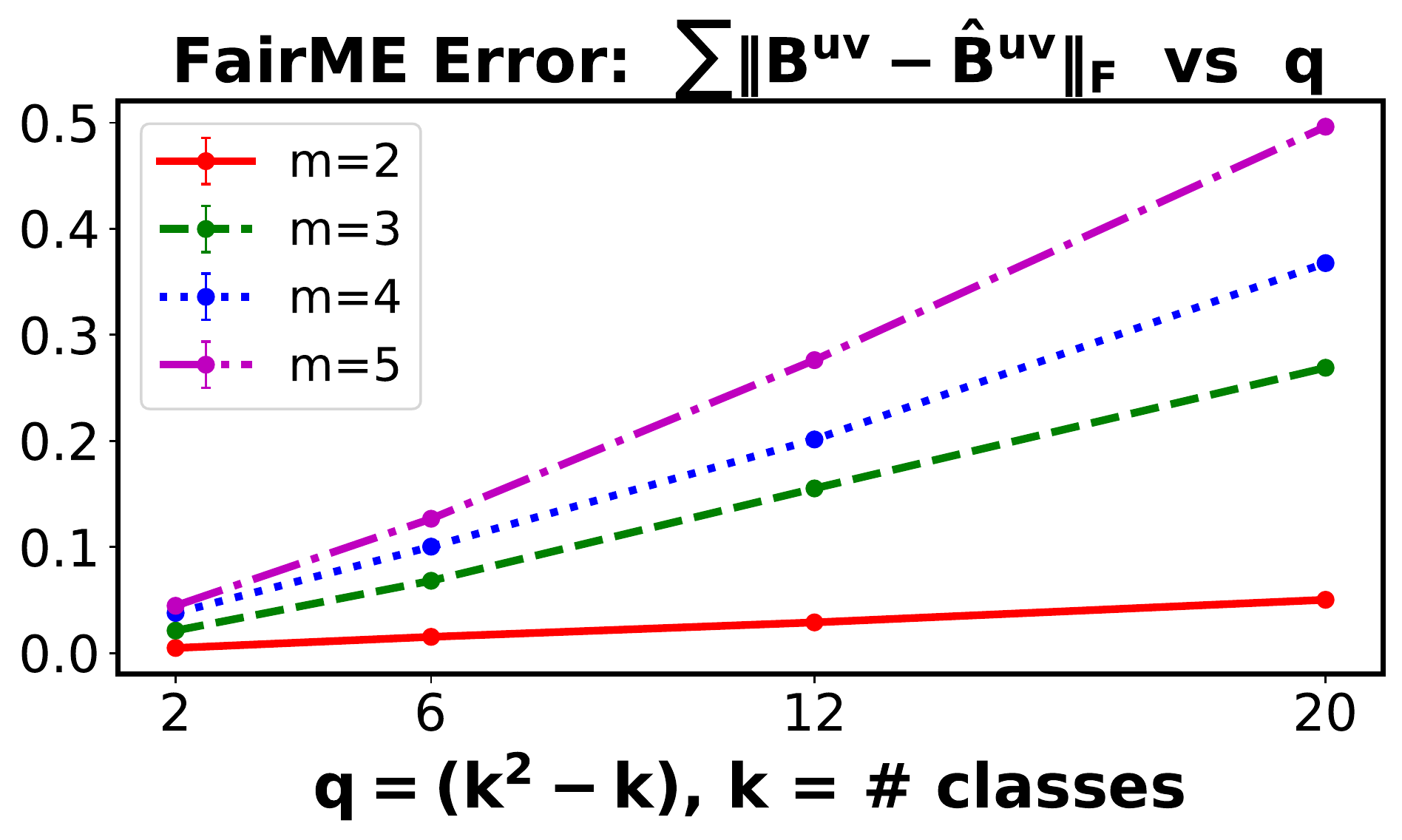}}
		\label{fig:f_rec_B}
	}
	\subfigure[]{
        \hspace{-0.375cm}
		{\includegraphics[width=4.5cm]{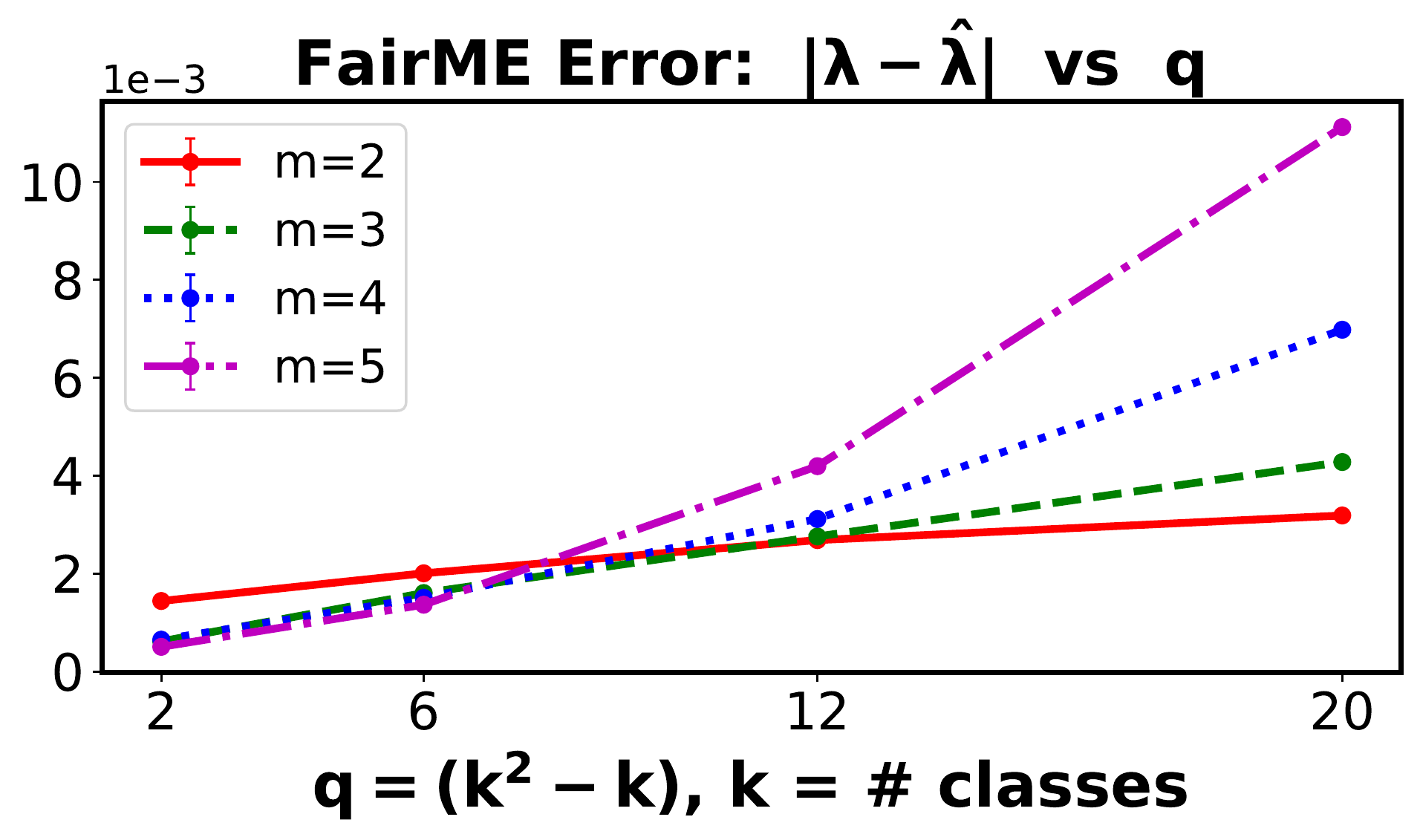}}
		\label{fig:f_rec_l}
	}
	\caption{Average elicitation error over 100 metrics as a function of number of coefficients $q$ and groups $m$ for quadratic metrics in Definition \ref{def:quadmet} (a--b) and fairness metrics in Definition \ref{def:f-linmetric} (c--e).
	}
	\label{fig:recovery}
\end{figure*}

\textbf{Eliciting quadratic metrics.}\ 
We first apply QPME (Algorithm~\ref{alg:q-me}) to elicit quadratic metrics in Definition \ref{def:quadmet}.  We assume access to a $q$-dimensional sphere $\Scal$ centered at rate $\ombf$ with radius $\rho = 0.2$, from which we query rate vectors $\rmbf$. Recall that in practice,
Remark \ref{rem:sphere} guarantees the existence of such a sphere within the feasible region $\Rcal$. We  randomly generate quadratic metrics $\phi^\quadr$ parametrized by $(\ambf, \Bmbf)$ and repeat the experiment over 100 trials for varying  numbers of classes $k \in \{2,3,4,5\}$ (equiv.\ $q \in \{2,6,12,20\}$). 
We run the QPME procedure with tolerance $\epsilon = 10^{-2}$. In Figures~\ref{fig:q_rec_a}--\ref{fig:q_rec_B}, we show box plots 
of the $\ell_2$ (Frobenius) norm between the true and elicited linear (quadratic) coefficients. 
We  generally find that QPME is able to elicit metrics  close to the true ones.
This holds for varying $k$ (and $q$), showing the effectiveness of our approach in handling multiple classes.
The larger standard deviation for $q=20$ is due to Assumption~\ref{as:regularity-q} failing to hold in a few trials and the
resulting estimates not being as accurate. We discuss this in Section \ref{ssec:details}.

\textbf{Eliciting fairness metrics.}\
We next apply the elicitation procedure in Figure \ref{fig:fairness-workflow} with tolerance $\epsilon= 10^{-2}$ to elicit the fairness metrics in Definition \ref{def:f-linmetric}. We randomly generate oracle metrics $\phi^\fair$ parametrized by $(\ambf, \Bmbb, \lambda)$ and repeat the experiment over 100 trials and with varied  number of classes and groups $k, m \in  \{2,3,4,5\}$. Figures~\ref{fig:f_rec_a}--\ref{fig:f_rec_l} show the mean elicitation errors for the the three parameters. For the linear predictive performance, the error {\small$\Vert \ambf - \ambfhat\Vert_2$} increases only with the number of coefficients $q$ and not groups $m$, as it is independent of the number of groups. For the quadratic violation term, the error {\small$\sum_{u,v}\Vert \Bmbf^{uv} - \Bmbfhat^{uv} \Vert_F$} increases with both $q$ and $m$. This is because the QPME procedure is run {\small$m\choose 2$} times for eliciting {\small $m \choose 2$} matrices {\small $\{\Bmbf^{uv}\}_{v > u}$}, and so the elicitation error accumulates with increasing $q$. Lastly,  the elicited trade-off {\small $\hat \lambda$} is seen to be close to the true $\lambda$ as well. 

\subsection{More Details on Simulated Experiments on Quadratic Metric Elicitation} 
\label{ssec:details}

In Figures~\ref{fig:q_rec_a}--\ref{fig:q_rec_B}, we show box plots~\cite{mcgill1978variations}  of the $\ell_2$ (Frobenius) norm between the true and elicited linear (quadratic) coefficients. 
We  generally find that QPME is able to elicit metrics  close to the true ones.

\begin{figure*}[h]
	\centering 
	\subfigure{
		{\includegraphics[width=5cm]{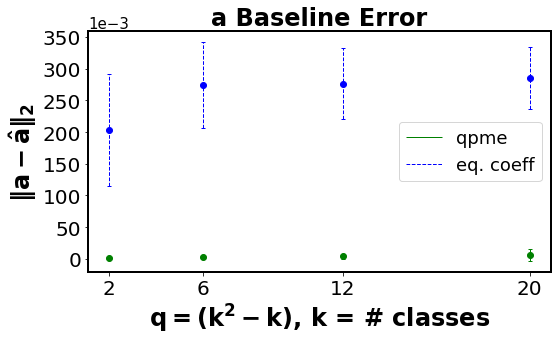}}
		\label{fig:rec_a}
	}\quad\quad
	\subfigure{
		{\includegraphics[width=5cm]{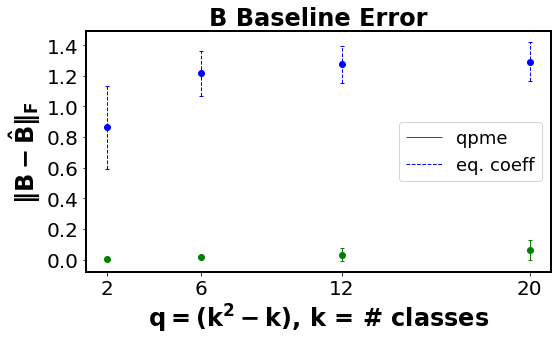}}
		\label{fig:rec_l}
	}
	\caption{Elicitation error in comparison to a baseline which assigns equal coefficients.}
	\label{append:fig:baseline}
\end{figure*}

To reinforce this point, we also compare the elicitation error of the QPME procedure and the elicitation error of a baseline which assigns equal coefficients to $\ambf$ and $\Bmbf$ in Figure~\ref{append:fig:baseline}. We see that the elicitation error of the baseline is order of magnitude higher than the elicitation error of the QPME procedure. This holds for varying $k$ showing that the QPME procedure is able to elicit oracle's multiclass quadratic metrics very well. 

\textbf{Effect of Assumption~\ref{as:regularity-q}}. 
We mentioned in Section \ref{ssec:elicitmetrics} that in a small number of trials,  Assumption~\ref{as:regularity-q} failed to hold with sufficiently large constants $c_{0},c_{-1}, c_1 \ldots, c_q$.
We now analyze in greater detail the effect of this regularity assumption in eliciting quadratic metrics and understand how the lower bounding constants 
impact the elicitation error. Assumption~\ref{as:regularity-q} effectively ensures that the ratios computed in~\eqref{eq:poly2elicitamatfinal} are well-defined. To this end, we generate two sets of 100 quadratic metrics. One set is generated following Assumption~\ref{as:regularity-q} with one coordinate in the gradient being greater than $10^{-2}$, and the other is generated randomly without any regularity condition. For both sets, we run QPME and elicit the corresponding metrics. 

\begin{figure*}[t]
	\centering 
	\subfigure{
		{\includegraphics[width=5cm]{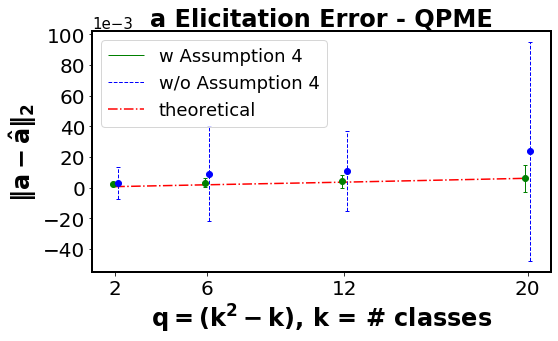}}
		\label{fig:rec_a}
	}\quad\quad
	\subfigure{
		{\includegraphics[width=5cm]{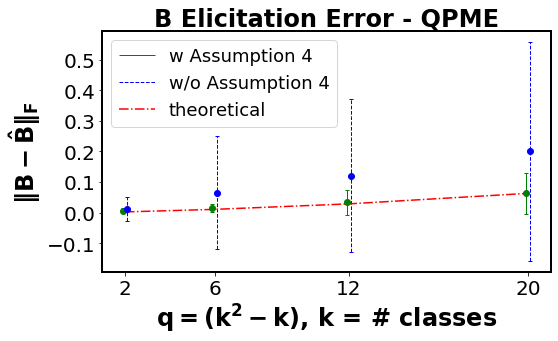}}
		\label{fig:rec_l}
	}
	\caption{Elicitation error for metrics following Assumption~\ref{as:regularity-q} vs elicitation error for completely random metrics.}
	\label{append:fig:regassump}
\end{figure*}

In Figure~\ref{append:fig:regassump}, we see that the elicitation error is much higher when the regularity Assumption~\ref{as:regularity-q} is not followed, owing to the fact that the ratio computation in~\eqref{eq:poly2elicitamatfinal} is more susceptible to errors when gradient coordinates approach zero in some cases of randomly generated metrics. The dash-dotted curve (in red color) shows the trajectory of the theoretical bounds with increasing $q$ (within a constant factor). In Figure~\ref{append:fig:regassump}, we see that  the mean of $\ell_2$ (analogously, Frobenius) norm better follow the theoretical bound trajectory in the case when regularity Assumption~\ref{as:regularity-q}
 is followed by the metrics.

We next analyze the ratio of estimated fractions to the true fractions used in~\eqref{eq:poly2elicitamatfinal} over 1000 simulated runs. Ideally, this ratio should be 1, but as we see in Figure~\ref{append:fig:ratio}, these estimated ratios can be off by a significant amount for a few trials when the metrics are generated randomly. The estimated ratios, however, are more stable under Assumption~\ref{as:regularity-q}. Since we multiply fractions in~\eqref{eq:poly2elicitamatfinal}, even then we may observe the compounding effect of fraction estimation errors in the final estimates. Hence, we see for $k=5$ in Figure~\ref{fig:q_rec_a}-\ref{fig:q_rec_B}, the standard deviation is high due to few trials where the lower bound of $10^{-2}$ on the constants in Assumption~\ref{as:regularity-q}  may not be enough. However, majority of the trials as shown in Figure~\ref{fig:q_rec_a}-\ref{fig:q_rec_B} and Figure~\ref{append:fig:baseline} incur low elicitation error. 

\begin{figure*}[t]
	\centering 
	\subfigure{
		{\includegraphics[width=5cm]{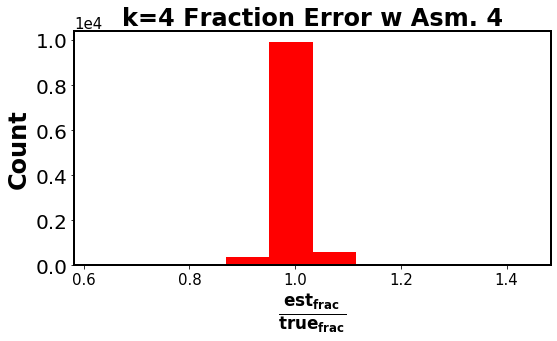}}
		\label{fig:rec_l}
	} 
	\subfigure{
		{\includegraphics[width=5cm]{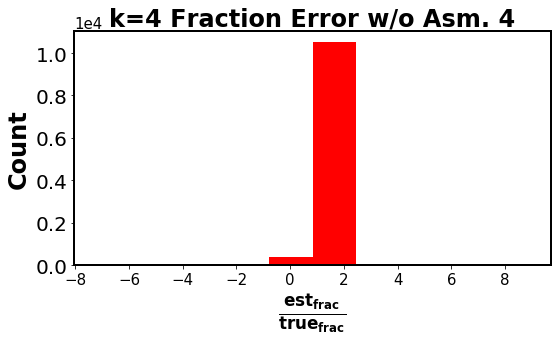}}
		\label{fig:rec_a}
	} \\
	\subfigure{
		{\includegraphics[width=5cm]{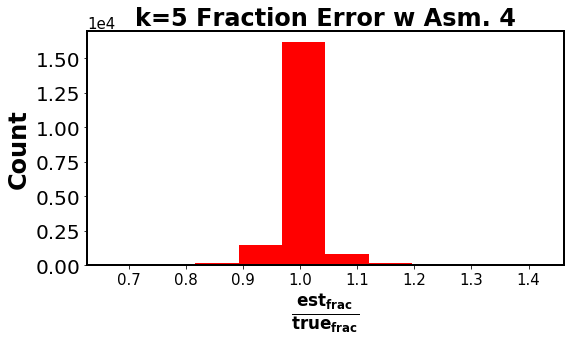}}
		\label{fig:rec_l}
	}
	\subfigure{
		{\includegraphics[width=5cm]{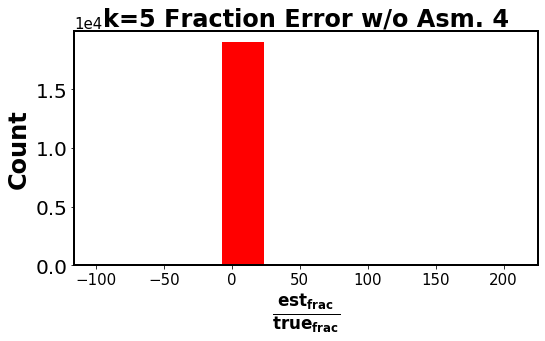}}
		\label{fig:rec_a}
	}
	\caption{Ratio of estimated to true fractions over 1000 simulated runs with and without Assumption~\ref{as:regularity-q}.}
	\label{append:fig:ratio}
\end{figure*}

\subsection{Ranking of Real-World Classifiers}
\label{quad-ssec:ranking}

Performance metrics provide quantifiable scores to classifiers. This score is then often used to rank classifiers and select the best set of classifiers in practice. In this section, we discuss the benefits of elicited metrics in comparison to some default metrics while ranking real-world classifiers. 

\begin{table}[t]
\centering
\caption{Dataset statistics}
\begin{tabular}{|c|ccc|}
\hline
\textbf{Dataset} & $k$ & \textbf{\#samples} & \textbf{\#features} \\ 
\hline
default        & 2   & 30000           &    33    \\
adult        &  2  &    43156       &    74   \\
sensIT Vehicle        &  3  & 98528          &     50   \\
covtype        &  7 &    581012       &     54 \\ 
\hline
\end{tabular}
\label{append:tab:stats}
\end{table}

For this experiment, we work with four real world datasets with varying number of classes $k\in \{2,3, 7\}$. See Table~\ref{append:tab:stats} for details of the datasets. We use 60\% of each dataset to train classifiers. The rest of the data is used to compute (testing) predictive rates. For each dataset, we create a pool of 80 classifiers by tweaking hyper-parameters in some famous machine learning models that are routinely used in practice. Specifically, we create 20 classifiers each from logistic regression models~\cite{kleinbaum2002logistic}, multi-layer perceptron models~\cite{pal1992multilayer},  LightGBM models~\cite{ke2017lightgbm}, and support vector machines~\cite{joachims1999svmlight}. 
We compare ranking of these 80 classifiers provided by competing baseline metrics with respect to the ground truth ranking, which is provided by the oracle's true metric. 

We generate a random quadratic metric $\phi^{\text{quad}}$ following Definition~\ref{def:quadmet}. We treat the true $\phi^{\text{quad}}$ as oracle's metric. It provides us the ground truth ranking of the classifiers in the pool. We then use our proposed procedure QPME (Algorithm~\ref{alg:q-me}) to recover the oracle's metric. For comparison in ranking of real-world classifiers, we choose two linear metrics that are routinely employed by practitioners as baselines. The first is accuracy $\phi^{acc} = 1/\sqrt{q}\inner{\bm{1}}{\rmbf}$, and the second is weighted accuracy, where we just use the linear part  $\inner{\ambf}{\rmbf}$ of the oracle's true quadratic metric $\inner{\ambf}{\rmbf} + \frac{1}{2}\rmbf^T\Bmbf\rmbf$. We repeat this experiment over 100 trials. 

\begin{figure*}[t]
	\centering 
	\subfigure{
		{\includegraphics[width=5cm]{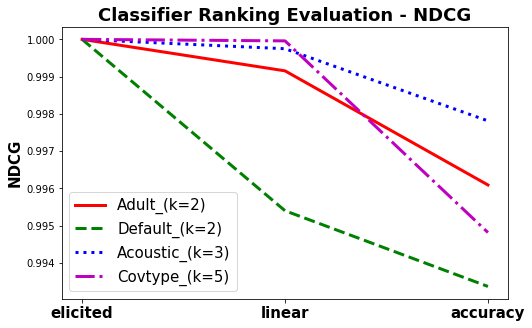}}
		\label{fig:rec_B}
	}\quad\quad
	\subfigure{
		{\includegraphics[width=5cm]{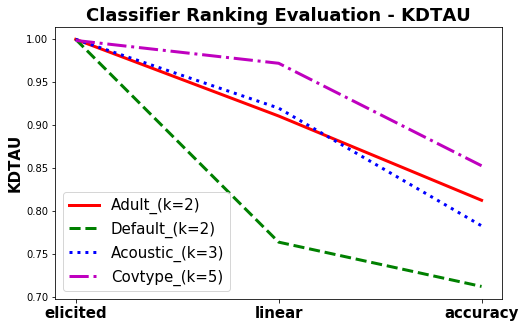}}
		\label{fig:rec_l}
	}
	\caption{Performance of competing metrics while ranking real-world classifiers. `elicited' is the metric elicited by QPME, `linear' is the metric that comprises only the linear part of the oracle's true quadratic metric, and `accuracy' is the linear metric which weigh all classification errors equally (often used in practice).}
	\label{append:fig:ranking}
\end{figure*}

We report NDCG (with exponential gain)~\cite{valizadegan2009learning} and Kendall-tau coefficient~\cite{shieh1998weighted} averaged over the 100 trials in Figure~\ref{append:fig:ranking}. We observe consistently for all the datasets that the elicited metrics using the QPME procedure achieve the highest possible NDCG and Kendall-tau coefficient of 1. As we saw in Section~\ref{sec:guarantees}, QPME may incur elicitation error, and thus the elicited metrics may not be very accurate; however, Figure~\ref{append:fig:ranking} shows that the elicited metrics may still achieve near-optimal ranking results. This implies that when given a set of classifiers, ranking based on elicited metric scores align most closely to true ranking in comparison to ranking based on default metric scores. Consequentially, the elicited metrics may allow us to select or discard classifiers for a given task. This is advantageous in practice. 
For the \emph{covtype} dataset, we see that the \emph{linear} metric also achieves high NDCG values, so perhaps ranking at the top is quite accurate; however Kendall-tau coefficient is low suggesting that the overall ranking of classifiers is poor. We also observe that, in general, the weighted version (\emph{linear} metric) is better than \emph{accuracy} while ranking classifiers.

\begin{figure*}[t]
	\centering 
	\subfigure{
		{\includegraphics[width=5cm]{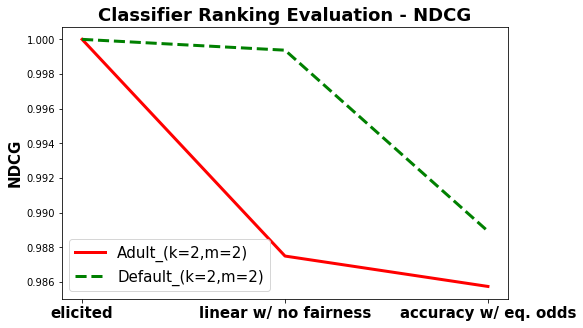}}
		\label{fig:rec_B}
	}\quad\quad
	\subfigure{
		{\includegraphics[width=5cm]{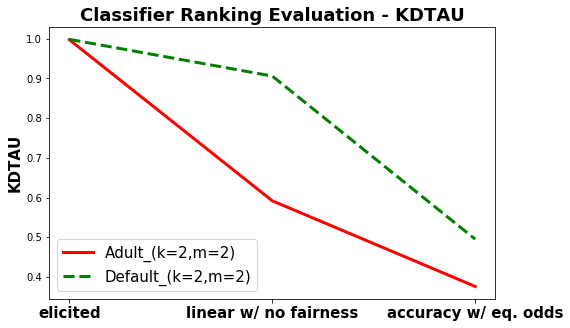}}
		\label{fig:rec_l}
	}
	\caption{Performance of competing metrics while ranking real-world classifiers for fairness. `elicited' is the metric elicited by the (quadratic) fairness metric elicitation procedure from Section~\ref{sec:fairme} (also depicted in Figure~\ref{fig:fairness-workflow}), `linear w/ no fairness' is the metric that comprises only the linear part of the oracle's true quadratic fair metric from Definition~\ref{def:f-linmetric} without the fairness violation, and `accuracy w/ eq. odds' is the metric which weigh all classification errors and fairness violations equally (often used in practice).}
	\label{append:fig:rankingfpme}
\end{figure*}

With regards to fairness, we performed a similar experiment as above for comparing fair-classifiers' ranking on Adult and Default datasets with gender as the protected group. There are two genders provided in the datasets, i.e., $m=2$. We simulate fairness metrics as given in Definition~\ref{def:f-linmetric} that gives ground-truth ranking of classifiers and evaluate the ranking by the elicited (fair-quadratic) metric using the procedure described in Section~\ref{sec:fairme} (also depicted in Figure~\ref{fig:fairness-workflow}). In Figure~\ref{append:fig:rankingfpme}, we show the NDCG and KD-Tau values for our method and for two baselines: (a) `linear w/ no fairness', which is the metric that comprises only the linear part of the oracle's true quadratic fair metric from Definition~\ref{def:f-linmetric} without the fairness violation, and (b) `accuracy w/ eq. odds' is the metric which weigh all classification errors and fairness violations equally. We again see that the elicited (fairness) metric’s ranking is closest to the ground-truth. 
\section{Extension to Higher Order Polynomials}
\label{sec:poly}

Our approach can be generalized to \textit{higher-order polynomials} of rates. 
Consider e.g.\ a cubic polynomial:
\begin{align}
    \phi^{\text{cubic}}(\rmbf)\coloneqq \sum_{i}a_ir_i + \frac{1}{2}\sum_{i,j}B_{ij}r_ir_j + \frac{1}{6}\sum_{i,j,l}C_{ijl}r_ir_jr_l,
\end{align}
where $\Bmbf$ and $\Cmbf$ are symmetric, and $\sum_i a_i^2 +\sum_{ij} B_{ij}^2 + \sum_{ijl} C_{ijl}^2 = 1$ (w.l.o.g., due to scale invariance).  A quadratic approximation to this metric around a point $\zmbf$ is given by: 
\bequation
\sum_{i}a_ir_i + \frac{1}{2}\left(\sum_{i,j}B_{ij}r_ir_j + \sum_{i,j,l}C_{ijl}(r_i - z_i)(r_j - z_j)z_l\right) + c,
\eequation
where $c$ is a constant not affecting the oracle responses. We can estimate the parameters of this  approximation by applying the QPME procedure from Algorithm~\ref{alg:q-me} with the metric centered at an appropriate point, and its queries restricted to a small neighborhood around $\zmbf$. Running QPME once using a sphere around the point $\zmbf_l = \ombf + (\varrho - \varrho')\alphambf_l$, where $\varrho' < \varrho$ will elicit one face of the tensor $\Cmbf_{[:, :, l]}$ upto a scaling factor. Thus, it will require us to run the QPME procedure $q$ times around the basis points $\zmbf_l = \ombf + (\varrho - \varrho')\alphambf_l \; \; \forall l \in [q]$. Since we elicit scale-invariant quadratic approximation, we would need additional run of QPME procedure around the point $\Scal_{-\zmbf_1}$ to elicit all the coefficients. Thus, we can recover the metric $\hat{\phi}^{\text{cubic}} = (\hat{\ambf}, \hat{\Bmbf},\hat{\Cmbf})$ with as many queries as the number of unknowns, i.e, $\tilde O(q^3)$ in the cubic case. 

For a $d$-th order polynomial, one can recursively apply this procedure to estimate $(d-1)$-th order approximations at multiple points, and similarly derive the polynomial coefficients from the estimated local approximations.
\section{Related Work}
\label{sec:relatedwork}

Chapter~\ref{chp:me} formalized the problem of ME, Chapter~\ref{chp:binary} put forward an ME procedure for binary classification and then later Chapter~\ref{chp:multiclass} extends ME to the multiclass setting~\cite{hiranandani2019multiclass}. The focus in the previous chapters, however, was on eliciting linear and fractional-linear metrics; whereas, in this chapter, we elicit more complex quadratic metrics. 
Learning linear functions passively using pairwise comparisons is a mature field~\cite{joachims2002optimizing, herbrich2000large, peyrard2017learning}, but unlike their active learning counter-parts ~\cite{settles2009active, jamieson2011active, kane2017active}, these methods are not query efficient. 
Other related work include active classification~\cite{settles2009active, kane2017active, noriega2019active}, which 
learn classifiers for a fixed (known) metric. In contrast, we seek to elicit an unknown metric by posing queries to an oracle.
There is also some work on active linear elicitation, e.g. Qian et al.~\cite{qian2015learning}, 
but they do not provide theoretical bounds and work with a different query space. We are unaware of prior work on eliciting a quadratic function, either passively 
or
actively using pairwise comparisons. 

The use of metric elicitation for fairness is relatively new, with 
some work on eliciting \textit{individual} fairness metrics~\cite{ilvento2019metric, mukherjee2020two}. To the best of our knowledge, the work in Chapter~\ref{chp:fair} is the only work that elicits \textit{group-fair} metrics, which we extend in this chapter to handle more general 
metrics. 
Zhang et al.~\cite{zhang2020joint} 
elicit  the trade-off between accuracy and fairness using complex ratio queries. In contrast, we jointly  elicit the predictive performance, fairness violation, and trade-off 
using simpler pairwise queries.  
Lastly, prior work 
has also focused on  learning fair classifiers under constraints 
\cite{hardt2016equality, zafar2017constraints,narasimhan2018learning}.
We take the regularization view of fairness, where the fairness violation is included in the objective itself~\cite{kamishima2012fairness, bechavod2017learning, corbett2017algorithmic, agarwal2018reductions}.

Our work is also related to decision-theoretic \emph{preference elicitation}, however, with the following key  differences. We focus on estimating the utility function (metric) explicitly, whereas prior work such as~\cite{boutilier2006constraint, benabbou2017incremental} seek to find the optimal decision via minimizing the max-regret over a set of utilities. Studies that directly learn the utility~\cite{white1984model, perny2016incremental} do not provide query complexity guarantees for pairwise comparisons. Formulations that consider a finite set of alternatives~\cite{white1984model, chajewska2000making, boutilier2006constraint}, are starkly different than ours, because the set of alternatives in our case (i.e. classifiers or rates) is infinite. 
Most papers focus on linear~\cite{white1984model} or bilinear~\cite{perny2016incremental} utilities except for~\cite{braziunas2012decision} (GAI utilities) and~\cite{benabbou2017incremental} (Choquet integral); whereas, we focus on quadratic metrics which are useful for classification tasks, especially, fairness. 
\section{Discussion, Limitations, and Future Work}
\label{sec:discussion}
We have provided an efficient quadratic metric elicitation strategy and shown its application to the pressing issue in algorithmic fairness. Interestingly, the query complexity for these non-linear metrics has the same dependence on the number of unknowns as that for linear metrics. We have also shown how this idea can be extended to elicit higher order polynomial metrics. This significantly increases the use-cases for ME and opens the door for non-linear metric elicitation. A notable advantage of our proposal is that it is independent of the population $\Pmbb$. Thus any metric that is learned using one dataset or model class can be applied to other applications, as long as the expert believes the tradeoffs are the same.
A key challenge that we tackle throughout elicitation is maintaining the feasibility of rates, i.e., rates that are achievable by classifiers. This has a practical advantage, because now one has the flexibility to deploy systems that either compare classifiers or compare rates.  

At the same time, our work has limitations, too. We assume a parametric form for the quadratic oracle metric, which may not be a good match to practice. Extension to polynomial elicitation helps but may lead to overburdening the oracle with the huge number of queries if the degree of the polynomial is high. Another limitation is that it leaves open the question of who the oracles should be. Furthermore, one should be cautious 
of the failure of the metric elicitation system especially while eliciting fairness metrics, because that can cause varying impacts among protected groups. 
We look forward to future work answering these practical questions. 
\chapter{Optimizing Black-box Metrics through Metric Elicitation}
\label{chp:blackbox}

In this chapter, we discuss an interesting application of Metric Elicitation (ME), where the tools and procedures provided in the previous chapters play a key role. We aim to optimize a black-box performance metric, where instead of  a \emph{human} oracle, we have a \emph{machine} oracle that responds with absolute quality value of a classifier. As we discuss later, such settings are prevalent in literature. The motivation for using ME for black-box optimization comes from the fact that many existing optimization algorithms are iterative in nature, where in each iteration, they tend to optimize a local-linear approximation. This local-linear approximation of an unknown (black-box) metric can be elicited using the existing ME tools and results~\cite{hiranandani2020quadratic}. 
We discuss briefly how these procedures can be extended in the presence of \emph{human} oracles that provide pairwise preference feedback (including the A/B tests based scenarios). 
We next discuss the formal black-box optimization problem setup and how our tools from ME can be used to optimize metrics in this setup. 

\newcommand{\Fig}[1]{Figure~\ref{#1}}
\newcommand{\Sec}[1]{Section~\ref{#1}}
\newcommand{\Tab}[1]{Table~\ref{#1}}
\newcommand{\Tabs}[2]{Tables~\ref{#1}--\ref{#2}}
\newcommand{\Eqn}[1]{Eq.~(\ref{#1})}
\newcommand{\Eqs}[2]{Eqs.~(\ref{#1}-\ref{#2})}
\newcommand{\Lem}[1]{Lemma~\ref{#1}}
\newcommand{\Thm}[1]{Theorem~\ref{#1}}
\newcommand{\Prop}[1]{Proposition~\ref{#1}}
\newcommand{\Cor}[1]{Corollary~\ref{#1}}
\newcommand{\App}[1]{Appendix~\ref{#1}}
\newcommand{\Def}[1]{Definition~\ref{#1}}
\newcommand{\Algo}[1]{Algorithm~\ref{#1}}
\newcommand{\Exmp}[1]{Example~\ref{#1}}
\definecolor{Gray}{gray}{0.8}
\renewcommand{\hat}{\widehat}
\renewcommand{\tilde}{\widetilde}
\renewcommand{\>}{{\rightarrow}}
\renewcommand{\=}{\stackrel{\triangle}{=}}
\newcommand{\half}{\textstyle{\frac{1}{2}}}
\newcommand{\grad}{\nabla}
\newcommand{\poly}{\operatorname{poly}}
\newcommand{\rank}{\operatorname{rank}}
\newcommand{\sign}{\operatorname{sign}}
\newcommand{\R}{{\mathbb R}}
\newcommand{\Z}{{\mathbb Z}}
\newcommand{\N}{{\mathbb N}}
\renewcommand{\P}{{\mathbf P}}
\newcommand{\E}{{\mathbf E}}
\newcommand{\Var}{{\mathbf{Var}}}
\newcommand{\I}{{\mathbf I}}
\newcommand{\0}{{\mathbf 0}}
\newcommand{\A}{{\mathbf A}}
\newcommand{\cA}{{\mathcal A}}
\newcommand{\B}{{\mathcal B}}
\newcommand{\cC}{{\mathcal C}}
\newcommand{\C}{{\mathbf C}}
\newcommand{\cE}{{\mathcal E}}
\newcommand{\cF}{{\mathcal F}}
\newcommand{\F}{{\mathbf F}}
\newcommand{\G}{{\mathcal G}}
\renewcommand{\H}{{\mathcal H}}
\newcommand{\cI}{{\mathcal I}}
\newcommand{\K}{{\mathcal K}}
\renewcommand{\L}{{\mathbf L}}
\newcommand{\M}{{\mathcal M}}
\newcommand{\cN}{{\mathcal N}}
\newcommand{\cP}{{\mathcal P}}
\newcommand{\bR}{{\mathbf R}}
\newcommand{\Q}{{\mathbf Q}}
\renewcommand{\S}{{\mathcal S}}
\renewcommand{\B}{{\mathbf B}}
\newcommand{\cT}{{\mathcal T}}
\newcommand{\X}{{\mathcal X}}
\newcommand{\Y}{{\mathcal Y}}
\newcommand{\cZ}{{\mathcal Z}}
\renewcommand{\b}{{\mathbf b}}
\renewcommand{\c}{{\mathbf c}}
\newcommand{\f}{{\mathbf f}}
\newcommand{\h}{{h}}
\newcommand{\m}{{\mathbf m}}
\newcommand{\p}{{\mathbf p}}
\newcommand{\q}{{\mathbf q}}
\renewcommand{\r}{{\mathbf r}}
\newcommand{\s}{{\mathbf s}}
\renewcommand{\u}{{\mathbf u}}
\renewcommand{\v}{{\mathbf v}}
\newcommand{\w}{{\mathbf w}}
\newcommand{\x}{{\mathbf x}}
\newcommand{\zo}{\textup{\textrm{0-1}}}
\newcommand{\ord}{\textup{\textrm{ord}}}
\newcommand{\TPR}{\textup{\textrm{TPR}}}
\newcommand{\TNR}{\textup{\textrm{TNR}}}
\newcommand{\AM}{\textup{\textrm{AM}}}
\newcommand{\GM}{\textup{\textrm{GM}}}
\newcommand{\lin}{\textup{\textrm{lin}}}
\newcommand{\bloss}{{\boldsymbol \ell}}
\newcommand{\balpha}{{\boldsymbol \alpha}}
\newcommand{\bxi}{{\boldsymbol \xi}}
\newcommand{\bpsi}{{\boldsymbol \psi}}
\newcommand{\bpi}{{\boldsymbol \pi}}
\newcommand{\bophi}{{\boldsymbol \phi}}
\newcommand{\bsigma}{{\boldsymbol \sigma}}
\newcommand{\btheta}{{\boldsymbol \theta}}
\newcommand{\seta}{{\boldsymbol \eta}}
\newcommand{\bGamma}{{\boldsymbol \Gamma}}
\newcommand{\bSigma}{{\boldsymbol \Sigma}}

\newcommand{\argsort}{\textup{\textrm{argsort}}}
\newcommand{\er}{\textup{\textrm{er}}}

\newcommand{\boldeta}{{\boldsymbol \eta}}
\newcommand{\rl}{{}}
\newcommand{\mn}{{}}
\newcommand{\LMO}{\text{\tt{LMO}}}
\newcommand{\wLMO}{\widehat{\text{\tt{LMO}}}}
\newcommand{\optsolver}{\text{\tt{OptSolver}}}
\newcommand{\CPE}{\textup{\textrm{CPE}}}
\newcommand{\lwr}{\textup{\textrm{lower}}}
\newcommand{\upr}{\textup{\textrm{upper}}}
\newcommand{\FW}{\textup{\textrm{FW}}}
\newcommand{\BS}{\textup{\textrm{BS}}}
\newcommand{\bmu}{{\boldsymbol \mu}}
\newcommand{\T}{{\mathbf T}}
\newcommand{\svmp}{$\text{SVM}^\text{perf~}$}
\newcommand{\oargmin}{{\textup{\textrm{argmin}}^*}}

\newcommand{\vol}{\textup{\textrm{vol}}}
\newcommand{\V}{{\mathcal V}}
\newcommand{\g}{{\mathbf g}}
\newcommand{\e}{{\mathbf e}}
\newcommand{\cB}{{\mathcal B}}
\newcommand{\hbC}{\widehat{\C}}
\newcommand{\hC}{\widehat{C}}
\newcommand{\bG}{\mathbf{G}}
\newcommand{\blambda}{\boldsymbol{\lambda}}
\newcommand{\perf}{\cE}
\newcommand{\bo}{\mathbf{o}}

\newcommand{\Dtrue}{D}
\newcommand{\Dshift}{\mu}
\newcommand{\tr}{\textup{\textrm{tr}}}
\newcommand{\val}{\textup{\textrm{val}}}
\newcommand{\onehot}{\textup{\textrm{onehot}}}

\newcommand{\W}{\mathbf{W}}
\newcommand{\D}{\mathbf{D}}
\newcommand{\boPhi}{\boldsymbol{\Phi}}
\newcommand{\bM}{\mathbf{M}}
\newcommand{\bO}{\mathbf{O}}
\newcommand{\bD}{\mathbf{D}}
\newcommand{\bcE}{\boldsymbol{\cE}}

\newcommand{\bbeta}{\boldsymbol{\beta}}
\newcommand{\bz}{\mathbf{z}}
\newcommand{\bupsilon}{\boldsymbol{\upsilon}}
\newcommand{\bE}{\mathbf{E}}

\newcommand{\todohari}[1]{{\color{red}TODO(Hari): #1}}
\newcommand{\todogh}[1]{{\color{red}TODO(GH): #1}}


\section{Introduction}
\label{sec:introduction}
In many real-world machine learning tasks, the evaluation metric one seeks to optimize is not explicitly available in closed-form. This is true for metrics that are evaluated through live experiments or by querying human users \cite{tamburrelli2014towards, hiranandani2018eliciting}, or that require access to private or legally protected data \cite{awasthi+21},
and hence cannot be written as an explicit training objective. This is also the case when the learner only has access to data with skewed training distribution or labels with heteroscedastic noise~\cite{huang2019addressing,jiang2020optimizing}, and hence cannot directly optimize the metric on the training set despite knowing its mathematical form. 

These problems can be framed as black-box learning tasks, where the goal is to optimize an unknown classification metric on a large (possibly noisy) training data, given  access to evaluations of the metric on a small, clean validation sample \citep{jiang2020optimizing}. 
Our high-level approach to these learning tasks is to adaptively assign weights to the training examples, 
so that the resulting weighted training objective closely approximates the black-box metric on the validation sample. We then construct a classifier by using the example weights to  post-shift a class-probability estimator pre-trained on the training set. This results in an efficient, iterative approach that does not require any re-training. 

Indeed, example weighting strategies have been  widely used  to both optimize metrics and to correct for distribution shift, but prior works either handle specialized forms of metric
or data noise~\cite{sugiyama2008direct, natarajan2013learning, patrini2017making}, formulate the example-weight learning task as a difficult non-convex problem that is hard to analyze \cite{ren2018learning,zhao2019metric}, or employ an expensive surrogate re-weighting strategy that comes with limited statistical guarantees~\cite{jiang2020optimizing}. 
In contrast, we propose a simple and effective approach to optimize a general black-box metric (that is  a function of the confusion matrix)  and provide a rigorous statistical analysis. 

A key element of our approach is eliciting the weight coefficients by probing the  black-box metric at few select classifiers and solving a system of linear equations matching the weighted training errors to the validation metric. 
We choose the ``probing'' classifiers so that the linear system is well-conditioned, for which we provide both theoretically-grounded options and practically efficient variants. This weight elicitation procedure is then used as a subroutine to iteratively construct the final plug-in classifier.

The contributions in this chapter are as follows:

\bitemize
\item We provide a method for eliciting example weights for linear black-box metrics 
(Section \ref{sec:example-weights}). 
\item We  use  this procedure to iteratively learn a plug-in classifier for general black-box metrics (Section \ref{sec:algorithms}). 
\item We provide theoretical guarantees for metrics that are concave functions of the confusion matrix under distributional assumptions
(Section \ref{sec:theory}). 
\item We experimentally show that our approach is competitive with (or better than) the state-of-the-art methods for tackling label noise in CIFAR-10~\cite{krizhevsky2009learning} and domain shift in Adience~\cite{eidinger2014age}, and optimizing with proxy labels and a black-box fairness metric on Adult~\cite{Dua:2019} (Section \ref{sec:experiments}). 
\eitemize
All the proofs in this chapter are provided in Appendix~\ref{apx:blackbox}.

\textbf{Notations:} 
$\onehot(j) \in \{0,1\}^k$  returns the one-hot encoding of  $j \in [k]$. In this chapter, the $\ell_2$ norm of a vector is denoted by $\|\cdot\|$. 
\section{Problem Setup}
\label{sec:setup}
We consider a standard multiclass setup with an instance space  $\X \subseteq \R^d$ and a label space $\Y = [k]$.
We wish to learn a randomized multiclass classifier $h: \X \> \Delta_k$ that for any input $x \in \X$ predicts a distribution $h(x) \in \Delta_k$ over the $k$ classes. We will also consider deterministic classifiers $h: \X \>[k]$ which map an instance $x$ to one of $k$ classes.

\textbf{Evaluation Metrics.} Let $\Dtrue$ denote the underlying data distribution over $\X \times \Y$. 
We will evaluate the performance of a classifier $h$ on $\Dtrue$ using an evaluation metric $\perf^\Dtrue[h]$, with higher values indicating better performance. Our goal is to  learn a classifier $h$ 
that maximizes this evaluation measure:
\begin{equation}
\textstyle
\max_{h}\,\perf^{\Dtrue}[h].
\label{eq:unconsrained}
\end{equation}

We will  focus on metrics $\perf^\Dtrue$ that can be written in terms of  classifier's  confusion matrix $\C[h] \in [0,1]^{k\times k}$, where the $i,j$-th entry is the probability that the true label is $i$ and the randomized classifier $h$ predicts $j$:
\bequation
C^{\Dtrue}_{ij}[h] = \E_{(x, y) \sim \Dtrue}\left[\1(y = i)h_j(x)\right].
\eequation

The performance of the classifier can then be evaluated using a (possibly unknown) function $\psi: [0,1]^{k\times k}\>\R_+$ of the confusion matrix: 
\begin{equation}
\perf^D[h] = \psi(\C^D[h]).
\label{eq:perf-conf}
\end{equation} 
Several common classification metrics take this form, including typical linear metrics $\psi(\C) \,=\, \sum_{ij}L_{ij}\,C_{ij}$ for some reward matrix $\L \in \R_+^{k\times k}$,  the  F-measure {$\psi(\C) \,=\,\sum_i \frac{2C_{ii}}{\sum_j C_{ij} + \sum_j C_{ji}}$} \cite{Lewis95}, and the G-mean {$\psi(\C) = \big(\prod_i \big({C_{ii}}/\sum_j C_{ij}\big)\big)^{1/k}$} \cite{Daskalaki+06}.

We consider settings where the learner has query-access to the evaluation metric $\perf^D$, i.e., can evaluate the metric for any given classifier $h$ but cannot directly write out the metric as an explicit mathematical objective. This happens when the metric is truly a black-box function, i.e., $\psi$ is unknown, or when $\psi$ is known, but we have access to only a noisy version of the distribution $D$ needed to compute the metric. 

\textbf{Noisy Training Distribution.} 
For learning a classifier, we assume access to a large  sample $S^{\tr}$ of $n^{\tr}$ examples drawn from a 
distribution $\Dshift$, which we will refer to as the ``training'' distribution. The training distribution $\Dshift$ may be the same as the true  distribution $\Dtrue$, or 
may differ from the true  distribution $\Dtrue$ 
in  the  feature distribution $\P(x)$, the conditional label distribution $\P(y|x)$, or both. 
 We also assume access to a  smaller sample $S^{\val}$ of
$n^{\val}$ examples
drawn from the true distribution
$\Dtrue$. 
We will refer to the sample $S^{\tr}$ 
as the ``training'' sample, and the smaller sample $S^{\val}$ 
as the ``validation'' sample. We seek to solve \eqref{eq:unconsrained} using both these  samples. 

The following are some examples of noisy training distributions in the literature:
\bexample[Independent label noise (ILN) \cite{natarajan2013learning, patrini2017making}]
\label{ex:iln}
\emph{
The distribution  $\Dshift$ draws an example $(x,y)$ from $\Dtrue$, and randomly flips $y$ to $\tilde{y}$ with probability 
$\P(\tilde{y}|y)$, independent of the instance $x$.
}
\eexample
\bexample[Cluster-dependent label noise (CDLN) \cite{wang2020fair}]
\emph{
Suppose each  $x$ belongs to one of $m$ disjoint clusters $g(x) \in [m]$. The distribution  $\Dshift$ draws  $(x,y)$ from $\Dtrue$ and randomly flips $y$ to $\tilde{y}$ with probability $\P(\tilde{y}|y,g(x))$. 
}
\eexample
\bexample[Instance-dependent label noise (IDLN) \cite{menon2018learning}] 
\emph{
$\Dshift$ draws  $(x,y)$ from $\Dtrue$ and randomly flips $y$ to $\tilde{y}$ with probability $\P(\tilde{y}|y,x)$, which may depend on  $x$.
}
\label{ex:instnoise}
\eexample
\bexample[Domain shift (DS) \cite{sugiyama2008direct}] 
\label{ex:ds}
\emph{
$\Dshift$ draws $\tilde{x}$ according to a distribution $\P^{\Dshift}({x})$ different from $\P^{\Dtrue}({x})$, but draws $y$ from the true conditional $\P^{\Dtrue}(y|\tilde{x})$.
}
\eexample

\begin{table}[t]
    \centering
    \caption{Example weights $\W: \X \> \R_+^{k\times k}$ for linear metric {\small $\perf^\Dtrue[h]=\langle\L, \C^\Dtrue[h]\rangle$} under the  noise models in Exmp.\ \ref{ex:iln}--\ref{ex:ds}, where $W_{ij}(x)$ is the weight on entry $C_{ij}$. 
    In Sec.\ \ref{sec:example-weights}--\ref{sec:algorithms}, we consider metrics that are functions of the diagonal confusion  entries alone (i.e.\ $\L$ and $\T$ are diagonal), and handle  general  metrics in  Appendix \ref{app:linear-gen}.}
    \begin{tabular}{ccc}
    \hline
         Model &
         Noise Transition Matrix & Correction Weights\\
        \hline
        ILN 
        & $T_{ij} = \P(\tilde{y}=j|y=i)$& 
        $\W(x) = \L \odot \T^{-1}$
        \\
        CDLN 
        & $T^{[m]}_{ij} = \P(\tilde{y}=j|y=i, g(x)=m)$& 
        $\W(x) = 
        \L \odot (\T^{[g(x)]})^{-1}$
        \\
        IDLN 
        & $T_{ij}(x) = \P(\tilde{y}=j|y=i, x)$& 
        $\W(x) = \L \odot (\T(x))^{-1}$
        \\
        DS & 
        - &  $W_{ij}(x) = {\P^\Dtrue(x)}/{\P^\Dshift(x)}, \forall i,j$\\
        \hline
    \end{tabular}
    \label{tab:correction-weights}
\end{table}

Our approach is to learn example weights on the training sample $S^\tr$, so that the resulting weighted empirical objective (locally, if not globally) approximates an estimate of the metric $\perf^\Dtrue$ on the validation sample $S^\val$. 
For ease of presentation, we will assume that the metrics 
only depend on the diagonal entries of the confusion matrix, i.e., $C_{ii}$'s. In Appendix \ref{app:linear-gen}, we elaborate how our ideas can be extended to handle metrics that depend on the entire confusion matrix. 

While our approach uses randomized classifiers, in practice one can replace them with similarly performing deterministic classifiers using, e.g., the techniques of \cite{cotter19stochastic}.
In what follows, we will need the empirical confusion matrix on the validation set $\hat{\C}^\val[h]$, where  \bequation
\hat{C}_{ij}^{\val}[h] = \frac{1}{n^\val}\sum_{(x, y) \in S^\val}\1(y=i)h_j(x).
\eequation

\section{Example Weighting for  Linear Metrics}
\label{sec:example-weights}

We first describe our example weighting strategy for 
linear functions of the diagonal entries of the confusion matrix, which is given by:
\begin{equation}
  \textstyle \perf^\Dtrue[h] \,=\, 
\sum_{i}\beta_i\,C^D_{ii}[h]
\label{eq:linearmetric}
\end{equation}
for 
some (unknown)
weights $\beta_1,\ldots,\beta_k$. In the next section, we will discuss how to use this procedure as a subroutine to handle more complex metrics.

\subsection{Modeling Example Weights} 
We define an example weighting function $\W: \X  \> \R^{k}_+$ which associates 
 $k$ \emph{correction weights} $[W_{i}(x)]_{i=1}^k$ with each example $x$ 
 so that:
\begin{equation}
\textstyle
\E_{(x, y) \sim \Dshift}\Big[\sum_{i} W_{i}(x)\,\1(y = i)h_i(x)\Big] \,\approx\, 
\perf^\Dtrue[h],  \; \forall \; h.\hspace{-2pt}
\label{eq:example-weights}
\end{equation}
Indeed for the noise models in Examples \ref{ex:iln}--\ref{ex:ds}, there exist weighting functions $\W$ for which the above holds with equality.  Table \ref{tab:correction-weights} shows the form of the weighting function for general linear metrics.

Ideally, the weighting function $\W$ assigns $k$ independent weights for each example $x \in \X$. However, in practice, we estimate $\perf^\Dtrue$ using a small validation sample $S^\val \sim \Dtrue$.  
So to avoid having the example weights over-fit to the validation sample, 
we restrict the flexibility of $\W$ and set it to a  weighted sum of $L$ basis functions $\phi^\ell: \X \> [0,1]$:
\begin{equation}
\textstyle
W_{i}(x) \,=\, \sum_{\ell=1}^L \alpha^{\ell}_{i}\phi^\ell(x),
\label{eq:weighting}
\end{equation}
where $\alpha^{\ell}_{i} \in \R$ is the coefficient associated with basis function $\phi^\ell$ and diagonal confusion  entry $(i,i)$. 

In practice, the basis functions can be as simple as a  partitioning of the instance space into $L$ clusters, i.e.,:
\begin{equation}
\phi^\ell(x) =  \1(g(x) = \ell),
\label{eq:hardlcuster}
\end{equation}
for a clustering function $g: \X\>[L]$,
or may define a more complicated soft clustering using, e.g., radial basis functions \cite{sugiyama2008direct} with centers
$x^\ell$ and width $\sigma$:
\begin{equation}
\phi^\ell(x) =  \text{exp}\left( -\Vert x - x^\ell \Vert/2\sigma^2 \right).
\label{eq:softlcuster}
\end{equation}

\subsection{$\phi$-transformed Confusions}
Expanding the weighting function in \eqref{eq:example-weights} gives us:
\begin{equation}
\sum_{\ell=1}^L\sum_{i=1}^k
\alpha^{\ell}_{i}\,
\underbrace{
\E_{(x, y) \sim \Dshift}\big[ \phi^\ell(x)\,\1(y=i)h_i(x) \big]}_{\Phi_i^{\Dshift, \ell}[h]} \,\approx\, \perf^\Dtrue[h],  \; \forall \; h,
\end{equation}
where $\boPhi^{\Dshift, \ell}[h] \in [0,1]^{k}$ can be seen as a $\phi$-transformed  confusion matrix  for the training distribution $\Dshift$. 
For example, if one had only one basis function 
$\phi^1(x) = 1,\forall x$, then $\Phi_{i}^{\Dshift, 1}[h] = \E_{(x, y) \sim \Dshift}\big[ \1(y=i)h_i(x)\big]$ gives the standard confusion entries for the training distribution. If the basis functions divides the data into $L$  clusters, as in \eqref{eq:hardlcuster}, then  $\Phi_{i}^{\Dshift, \ell}[h] = \E_{(x, y) \sim \Dshift}\big[\1(g(x)=\ell, y=i)h_i(x)\big]$  gives the training confusion entries evaluated on examples from cluster $\ell$. 
We can thus re-write equation~\eqref{eq:example-weights} as a weighted combination of the $\Phi$-confusion entries:
\begin{equation}
\sum_{\ell=1}^L\sum_{i=1}^k
\alpha^{\ell}_i
\Phi^{\Dshift, \ell}_i[h] \,\approx\, 
\perf^D[h], \forall h.
\label{eq:example-weights-reduced}
\end{equation}
\vspace{-1cm}
\subsection{Eliciting Weight Coefficients $\balpha$ -- The Metric Elicitation Step}
\label{sec:estimating-weights}
We  next discuss how to estimate the weighting function coefficients $\alpha^\ell_{i}$'s from the training sample $S^\tr$ and validation sample $S^\val$. 
Notice that \eqref{eq:example-weights-reduced} gives a relationship between statistics $\boPhi^{\mu,\ell}$'s computed on the training distribution $\Dshift$, and the evaluation metric of interest computed on the true distribution $\Dtrue$. Moreover, for a fixed classifier $h$, the left-hand side is \textit{linear} in the unknown coefficients
$\balpha = [\alpha_1^1, \ldots, \alpha_1^L, \ldots, \alpha_k^1, \ldots, \alpha_k^L]\in \R^{Lk}$. Thus, this step is similar to eliciting linear metrics (Chapters~\ref{chp:binary},\ref{chp:multiclass})  in the presence of an oracle which provides absolute quality feedback.

We therefore probe the metric $\hat{\perf}^\val$ at $Lm$ different classifiers $h^{1,1}, \ldots, h^{1,k},\ldots,h^{L,1},\ldots, h^{L,k}$, which results in a set of $Lk$ linear equations of the form in \eqref{eq:example-weights-reduced}: 
\begin{align}
\textstyle\sum_{\ell,i}\alpha^{\ell}_i\,
\hat{\Phi}^{\tr, \ell}_i[h^{1,1}]&=
\hat{\perf}^\val[h^{1,1}],
\nonumber
\\
&\vdots\label{eq:system-of-equations-emp}\\
\textstyle\sum_{\ell,i}\alpha^{\ell}_i\,
\hat{\Phi}^{\tr, \ell}_i[h^{L,k}]&=
\hat{\perf}^\val[h^{L,m}],\nonumber
\end{align}
where $\hat{\Phi}_{i}^{\tr, \ell}[h] = \frac{1}{n^\tr}\sum_{(x, y) \in S^\tr}\phi^\ell(x)\,\1(y=i)h_i(x)$ is evaluated on the training sample and the metric $\hat{\perf}^\val[h] = \sum_{i}\beta_i\,\hat{C}^\val_{ii}[h]$ 
is evaluated on the validation sample.

More formally, let $\hat{\bSigma} \in \R^{Lk\times Lk}$ and $\hat{\bcE} \in \R^{Lk}$ denote the left-hand and right-hand side observations in \eqref{eq:system-of-equations-emp}, i.e.,:
\begin{align*}
\textstyle
\hat{\Sigma}_{(\ell,i), (\ell',i')} &\,=\,
\frac{1}{n^\tr}\sum_{(x,y)\in S^\tr}\phi^{\ell'}(x)\1(y=i')h^{\ell,i}_{i'}(x), \\ \nonumber
\hat{\perf}_{(\ell,i)} &\,=\, \hat{\perf}^\val[h^{\ell,i}]. \numberthis 
\label{eq:perf-emp}
\end{align*}
Then the weight coefficients are given by  $\hat{\balpha} = \hat{\bSigma}^{-1}\hat{\bcE}$.

\subsection{Choosing the Probing Classifiers $h^{1,1},\ldots,h^{L,k}$}
\label{subsec:probing-classifier}
We will have to choose the $Lk$ probing classifiers so that $\hat{\bSigma}$ 
is well-conditioned. One way to do this is to choose  the  classifiers so that  $\hat{\bSigma}$ has a high value on the diagonal entries and a low value on the off-diagonals, i.e.\ choose each 
classifier $h^{\ell,i}$ to evaluate to a high value on $\hat{\Phi}^{\tr,\ell}_i[h]$ and a low value on $\hat{\Phi}^{\tr,\ell'}_{i'}[h], \, \;\forall \; (\ell',i') \ne (\ell,i)$. 
This can be framed as the following constraint satisfaction problem on $S^\tr$: 
\begin{center}
For $h^{\ell,i}$ pick $h \in \H$ such that:
\begin{align}
\hat{\Phi}^{\tr,\ell}_i[h] \geq \gamma,~\text{and}~
\hat{\Phi}^{\tr,\ell'}_{i'}[h] \leq \omega, \forall (\ell',i') \ne (\ell,i),
    \label{eq:con-opt}
\end{align}
\end{center}
for 
some $\gamma > \omega > 0$ and a sufficiently flexible hypothesis class $\H$ for which the constraints are feasible. These problems can generally be solved by formulating a constrained classification problem \cite{cotter2019optimization,narasimhan2018learning}. We show in Appendix \ref{app:con-opt} that this problem is feasible and can be efficiently solved for a range of settings.

\begin{figure}
\begin{algorithm}[H]
\caption{\hspace{-0.075cm}\textbf{: ElicitWeights} for Diagonal Linear Metrics}\label{algo:weight-coeff}
\begin{algorithmic}[1]
\STATE \textbf{Input:} $\hat{\perf}^\val$, Basis functions $\phi^1, \ldots, \phi^L: \X \> [0,1]$, 
Training set $S^\tr \sim \Dshift$, Val. set $S^\val \sim \Dtrue$, $\bar{h}$, ${\epsilon}$, $\H$, $\gamma, \omega$
\STATE \textbf{If} 
\textit{fixed classifier}:
\STATE ~~~Choose $h^{\ell,i}(x) = 
\epsilon\phi^\ell(x)\,e^i(x) + (1 - \epsilon\phi^\ell(x))\,\bar{h}(x)$ 
\STATE \textbf{Else:} 
\STATE ~~~$\bar{\H} = \{\tau h + (1-\tau)\bar{h}\,|\, h \in \H, \tau \in [0, \epsilon]\}$
\STATE ~~~Pick $h^{\ell,i} \in \bar{\H}$ to satisfy \eqref{eq:con-opt} 
with slack $\gamma,\omega, \forall (\ell,i)$
\STATE Compute $\hat{\bSigma}$ and $\hat{\bcE}$ using \eqref{eq:perf-emp} with metric $\hat{\perf}^\val$
\STATE \textbf{Output:} $\hat{\balpha} = \hat{\bSigma}^{-1}\hat{\bcE}$
\end{algorithmic}
\end{algorithm}
\vspace{-20pt}
\begin{algorithm}[H]
\caption{\hspace{-0.075cm}\textbf{:} {\textbf{P}lug-\textbf{i}n with \textbf{E}licited \textbf{W}eights} (\textbf{PI-EW}) for Diagonal Linear Metrics
}
\label{algo:linear-metrics}
\begin{algorithmic}[1]
\STATE \textbf{Input:}  $\hat{\perf}^\val$, Basis functions $\phi^1, \ldots, \phi^L: \X \> [0, 1]$, Class probability model $\hat{\eta}^\tr: \X \> \Delta_k$ for  $\Dshift$, 
Training set $S^\tr \sim \Dshift$, Validation set $S^\val \sim \Dtrue$, $\bar{h}$, $\epsilon$
\STATE $\widehat{\balpha} = \textbf{ElicitWeights}(\hat{\perf}^\val, 
\phi^1, \ldots, \phi^L, S^\tr, S^\val, \bar{h}, \epsilon)$ 
\STATE Example-weights:
$
\widehat{W}_{i}(x) \,=\, \sum_{\ell=1}^L \widehat{\alpha}^{\ell}_{i}\phi^\ell_{i}(x)
$
\STATE Plug-in:
$
\widehat{h}(x) \,\in\, \argmax_{i \in [k]} \widehat{W}_{i}(x)\hat{\eta}^\tr_i(x)
$
\STATE \textbf{Output:} $\widehat{h}$
\end{algorithmic}
\end{algorithm}
\vspace{-18pt}
\end{figure}

In practice, we do not explicitly solve \eqref{eq:con-opt} over a hypothesis class $\H$.  Instead, a simpler and surprisingly effective strategy is to 
set the probing classifiers to 
\textit{trivial} classifiers that predict the same class on all (or a subset of) examples. 
To build intuition for why this is a good idea, 
consider a simple setting with only one basis function $\phi^1(x) = 1, \forall x$, where the $\phi$-confusions $\hat{\Phi}^{\tr,1}_i[h] = \frac{1}{n^\tr}\sum_{(x,y)\in S^\tr}\1(y=i)h_i(x)$ are the standard confusion  entries on the training set. In this case, a trivial classifier $e^i(x) = \onehot(i), \forall x$, which predicts class $i$ on all examples, yields the highest value for $\hat{\Phi}^{\tr,1}_i$ and 0 for all other $\hat{\Phi}^{\tr,1}_j, \forall j \ne i$. In fact, in our experiments,  we set the probing classifier $h^{1,i}$ to a randomized combination of $e^i$ and some fixed base classifier $\bar{h}$:
\bequation
h^{1,i}(x) = \epsilon e^i(x) + (1-\epsilon)\bar{h}(x),
\eequation
for large enough $\epsilon$ so that $\hat{\bSigma}$ is well-conditioned.

Similarly, if the basis functions divide the data into $L$  clusters (as in \eqref{eq:hardlcuster}), then we can 
 randomize between $\bar{h}$ and a {trivial} classifier that predicts a particular class $i$ on all examples assigned to the cluster $\ell \in [L]$. The confusion matrix for the resulting classifiers will have higher values than $\bar{h}$ on the $(\ell,i)$-th diagonal entry and a lower value on  other entries. These classifiers can be succinctly written as:
\begin{equation}
h^{\ell,i}(x) = \epsilon\phi^\ell(x)e^i(x) + (1-\epsilon\phi^\ell(x))\bar{h}
\label{eq:trivial-classifiers}
\end{equation}
where we again tune $\epsilon$ to make sure that the resulting $\hat{\bSigma}$ is well-conditioned. This choice of the probing classifiers also works well in practice for general basis functions $\phi^\ell$'s.

Algorithm \ref{algo:weight-coeff} summarizes the weight elicitation procedure, where  the probing classifiers are either constructed by solving the constrained satisfaction problem \eqref{eq:con-opt} or set to the ``fixed'' classifiers in \eqref{eq:trivial-classifiers}. In both cases, the algorithm takes a base classifier $\bar{h}$ and the parameter $\epsilon$ as input, where $\epsilon$ controls the extent to which $\bar{h}$ is perturbed to construct the probing classifiers. This radius parameter $\epsilon$ restricts the probing classifiers to a neighborhood around $\bar{h}$ and will prove handy in the algorithm we develop 
in Section~\ref{ssec:iterativeFW}.

\section{Plug-In Based Algorithms}
\label{sec:algorithms}

\begin{figure}[t]
    \centering
    \includegraphics[scale=1.1]{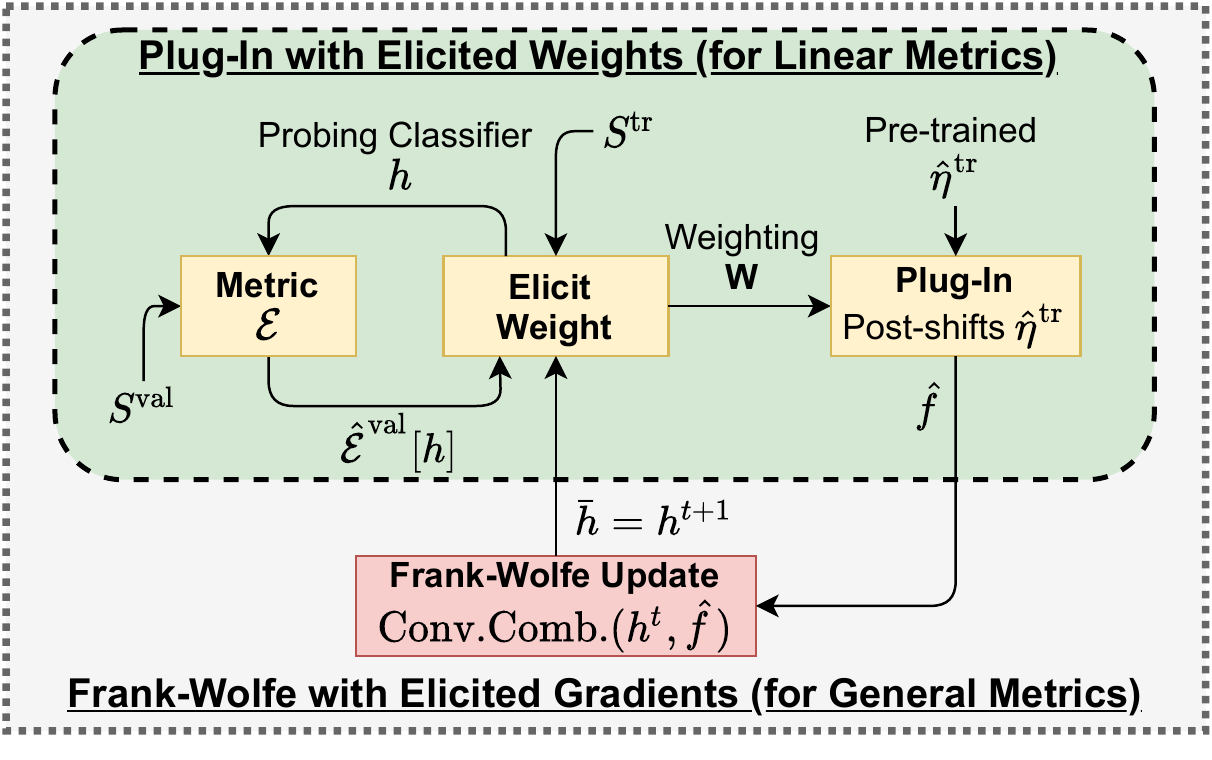}
    \caption{Overview of our apporach.}
    \label{fig:overview}
\end{figure}

Having elicited the weight coefficients $\balpha$, we now seek to learn a classifier that optimizes the left hand side of~\eqref{eq:example-weights-reduced}. 
We do this via the \textit{plug-in} approach: first \textit{pre-train} a model $\hat{\eta}^\tr: \X \> \Delta_k$  on the noisy training distribution $\Dshift$ to estimate the conditional class probabilities $\hat{\eta}^\tr_i(x) \approx \P^\mu(y=i|x)$, and then apply the correction weights to \emph{post-shift} 
$\hat{\eta}^\tr$. 

\begin{figure}
\begin{algorithm}[H]
\caption{\hspace{-0.075cm}\textbf{:} \textbf{F}rank-\textbf{W}olfe with \textbf{E}licited \textbf{G}radients (\textbf{FW-EG}) for General Diagonal Metrics (also depicted in Fig.~\ref{fig:overview})}\label{algo:FW}
\begin{algorithmic}[1]
\STATE \textbf{Input:} $\hat{\perf}^\val$, Basis functions $\phi^1, \ldots, \phi^L: \X \> [0,1]$, Pre-trained $\hat{\eta}^\tr: \X \> \Delta_k$,
$S^\tr \sim \Dshift$, 
$S^\val \sim \Dtrue$, $T$, $\epsilon$
\STATE Initialize classifier $h^0$ and $\c^0 = \diag(\widehat{\C}^\val[h^0])$
\STATE \textbf{For} $t =  0$ \textbf{to} $T-1$ \textbf{do}
\STATE ~~~\textbf{if} $\perf^\Dtrue[h] = \psi(C_{11}^D[h],\ldots,C_{kk}^D[h])$ for known $\psi$:
\STATE ~~~~~~~$\bbeta^{t}\,=\, \nabla\psi(\c^{t})$
\STATE ~~~~~~~$\hat{\perf}^\lin[h]\,=\, \sum_i \beta^t_i \hat{C}^\val_{ii}[h]$
\STATE ~~~\textbf{else}
\STATE ~~~~~~~$\hat{\perf}^\lin[h]= \hat{\perf}^\val[h]$ \hspace{7cm}\COMMENT{small $\epsilon$ recommendeded}
\STATE ~~~$\widehat{f} = \text{\textbf{PI-EW}}(\hat{\perf}^\lin, \phi^1,..., \phi^L, \hat{\eta}^\tr, S^\tr, S^\val, h^{t}, \epsilon)$
\STATE ~~~$\tilde{\c} = \diag(\widehat{\C}^\val[\widehat{f}])$
\STATE ~~~${h}^{t+1} = \big(1-\frac{2}{t+1}\big) {h}^{t} + \frac{2}{t+1} \onehot(\widehat{f})$
\STATE ~~~${\c}^{t+1} = \big(1-\frac{2}{t+1}\big) {\c}^{t} + \frac{2}{t+1}\tilde{\c}$
\STATE \textbf{End For}
\STATE \textbf{Output:} $\hat{h} = h^T$
\end{algorithmic}
\end{algorithm}
\vspace{-18pt}
\end{figure}

\subsection{Plug-in  Algorithm for Linear Metrics}
We first describe our approach for (diagonal) linear metrics $\perf^\Dtrue[h] \,=\,  \sum_{i}\beta_i\,C^D_{ii}[h]$ 
in 
Algorithm \ref{algo:linear-metrics}.
Given the correction weights $\hat{\W}: \X \> \R_+^k$, 
we seek to maximize the following weighted objective on the training distribution: 
\bequation
\textstyle
\max_{h}\,\E_{(x, y) \sim \Dshift}\left[\sum_{i} \hat{W}_{i}(x)\,\1(y = i)h_i(x)\right].
\eequation
This is a standard example-weighted learning problem, for which the following \emph{plug-in} (also known as \emph{post-shift}) classifier is a consistent estimator \cite{narasimhan2015consistent,yang2020fairness}:
\bequation
\widehat{h}(x) \,\in\, \argmax_{i \in [k]} \hat{W}_{i}(x)\,\hat{\eta}^\tr_i(x).
\eequation

\subsection{Iterative Algorithm for General Metrics}
\label{ssec:iterativeFW}

To optimize generic non-linear metrics of the form $\perf^\Dtrue[h] = \psi(C^D_{11}[h], \ldots,C^D_{kk}[h])$ for $\psi: [0,1]^k\>\R_+$, we  apply Algorithm \ref{algo:linear-metrics} iteratively. We consider both cases where $\psi$ is unknown, and where $\psi$ is known, but needs to be optimized using the noisy distribution $\Dshift$. 
The idea is to first elicit local linear approximations to $\psi$ and to then learn plug-in classifiers for the resulting linear metrics in each iteration. 

Specifically, following Narasimhan et al.\cite{narasimhan2015consistent},
we derive our algorithm from the classical Frank-Wolfe  method \cite{Jaggi13} for maximizing a smooth concave function $\psi(\c)$ over a convex set $\cC \subseteq \R^m$. In our case, $\cC$ is the set of confusion matrices $\C^\Dtrue[h]$ achieved by any classifier $h$, and is convex when we allow randomized classifiers (see Lemma~\ref{lem:C-convexity}, Appendix \ref{app:proof-FW}). The algorithm maintains iterates $\c^t$, and at each step, maximizes a linear approximation to $\psi$ at $\c^t$: $\tilde{\c} \,\in\, \argmax_{\c \in \cC}\langle \nabla\psi(\c^{t}), \c\rangle$. The next iterate $\c^{t+1}$ is then a convex combination of $\c^{t}$ and the current solution $\tilde{\c}$.

In Algorithm \ref{algo:FW}, we outline an adaptation of this Frank-Wolfe algorithm to our setting, where
  we maintain a classifier $h^t$ and an estimate of the diagonal confusion entries $\c^t$ from the validation sample $S^\val$. At each step, we linearize $\psi$ using $\hat{\perf}^\lin[h] = \sum_{i} \beta^t_i \hat{C}_{ii}^\val[h]$, where $\bbeta^t = \nabla \psi(\c^t)$, and invoke the plug-in method in Algorithm \ref{algo:linear-metrics} to optimize the linear approximation $\hat{\perf}^\lin$. 
When the mathematical form of $\psi$ is known, one can directly compute the gradient $\bbeta^t$. When it is not known, we can simply set $\hat{\perf}^\lin[h] = \hat{\perf}^\val[h]$, but restrict the  weight elicitation routine (Algorithm \ref{algo:weight-coeff}) 
to choose its probing classifiers $h^{\ell,i}$'s from a small neighborhood  around the current classifier $h^t$ (in which $\psi$ is effectively linear). This can be done by passing $\bar{h}=h^t$ to the weight elicitation routine, and setting the radius ${\epsilon}$  to a small value. 
    
    Each call to Algorithm \ref{algo:linear-metrics} 
    uses the training and validation set
    to elicit example weights for a local linear approximation to $\psi$, and uses the  weights to 
    construct a plug-in classifier. 
    The final  output is a randomized combination of the plug-in classifiers  from each step. 
     Note that Algorithm~\ref{algo:FW} runs efficiently for reasonable values of $L$ and $k$. 
    Indeed the runtime is almost always dominated by the pre-training of the base model $\hat\eta^\tr$, with the time taken to elicit the weights (e.g.\ using~\eqref{eq:trivial-classifiers}) being relatively inexpensive (see Appendix \ref{app:run-time}).
\section{Theoretical Guarantees}
\label{sec:theory}
We provide theoretical guarantees for the weight elicitation procedure and the plug-in methods in Algorithms \ref{algo:weight-coeff}--\ref{algo:FW}. 
\bassumption
\label{asp:alpha-star}
The distributions $\Dtrue$ and $\Dshift$ are such that
for any linear metric $\perf^\Dtrue[h] = \sum_{i}\beta_i C_{ii}[h]$, with $\|\bbeta\| \leq 1$, 
$\exists \bar{\balpha} \in \R^{Lk}$ s.t. $\left|\sum_{\ell,i}
\bar{\alpha}^{\ell}_i
\Phi^{\Dshift, \ell}_i[h] -
\perf^\Dtrue[h]\right| \,\leq\, \nu, \forall h$ and $\|\bar{\balpha}\|_1 \leq B$, for some $\nu \in [0,1)$ and $B>0$.
\eassumption
The assumption states that our choice of basis functions $\phi^1,\ldots,\phi^L$ are such that, any linear metric on  $\Dtrue$ can be approximated (up to a slack $\nu$) by a weighting $W_i(x) = \sum_{\ell}\bar{\alpha}^\ell_i\phi^\ell(x)$ of the  training examples  from $\Dshift$.
The existence of such a weighting function depends on how well the basis functions capture the underlying distribution shift. 
Indeed, the assumption holds for some common settings in Table \ref{tab:correction-weights}, e.g., when the noise  transition $\T$ is diagonal (Appendix \ref{app:linear-gen} handles a general $\T$), and the basis functions are set to $\phi^1(x)=1,\forall x,$ for the IDLN setting, and $\phi^\ell(x) = \1(g(x)=\ell), \forall x,$ for the CDLN setting.

We  analyze the coefficients $\hat{\balpha}$ elicited by Algorithm \ref{algo:weight-coeff} when the probing classifiers $h^{\ell,i}$ are chosen to satisfy \eqref{eq:con-opt}. 
In Appendix  \ref{app:norm-sigma-bound}, we provide an analysis  when the probing classifiers $h^{\ell,i}$ are set to the fixed choices in \eqref{eq:trivial-classifiers}. 

\btheorem[\hspace{-1pt}\textbf{Error bound on elicited weights}]
\label{thm:alpha-diagonal-linear-conopt}
Let $\gamma, \omega > 0$ be such that the constraints in \eqref{eq:con-opt}  are feasible for hypothesis class $\bar{\H}$, for all $\ell, i$. Suppose Algorithm \ref{algo:weight-coeff} chooses each
 classifier $h^{\ell,i}$ to  satisfy \eqref{eq:con-opt}, with $\perf^D[h^{\ell,i}] \in [c, 1], \forall \ell, i$, for some $c>0$. 
Let $\bar{\alpha}$ be defined as in Assumption \ref{asp:alpha-star}. 
Suppose $\gamma > 2\sqrt{2}Lm\omega$ and $n^\tr \geq \frac{L^2m\log(Lm|\H|/\delta)}{(\frac{\gamma}{2} - \sqrt{2}Lm\omega)^2}.$
Fix $\delta\in (0,1)$. Then w.p.\  $\geq 1 - \delta$ over draws of $S^\tr$ and $S^\val$ from $\Dshift$ and $\Dtrue$ resp., 
the coefficients $\hat{\balpha}$ output by Algorithm \ref{algo:weight-coeff} satisfies:
\vspace{-5pt}
\begin{equation}
\|\hat{\balpha} - \bar{\balpha}\| \,\leq\,
\mathcal{O}\bigg(
\frac{Lk}{\gamma^2}\bigg(\sqrt{\frac{L\log(\textstyle \frac{Lk|\H|}{\delta})}{n^\tr}} + 
 \sqrt{\frac{L\log(\textstyle \frac{Lk}{\delta})}{c^2 n^\val}}\bigg) +  \frac{\nu\sqrt{Lk}}{\gamma}\bigg),
\end{equation}
where the term $|\H|$ can be replaced by a measure of capacity of the hypothesis class $\H$.
\etheorem

Because the probing classifiers are chosen using the training set alone, it is only the sampling errors from the training set that depend on the complexity of $\H$, and not those from the validation set. 
This suggests robustness of our approach to  a small validation set as long as the training set is sufficiently large and the number of basis functions is reasonably small. 

For the iterative plug-in method in Algorithm \ref{algo:FW}, we bound the gap between the metric value $\perf^D[\hat{h}]$ for the output classifier $\hat{h}$ on the true distribution $\Dtrue$, and the optimal value. We handle the case where the function $\psi$ is \textit{known} and its gradient $\nabla\psi$ can be computed in closed-form. The more general case of an unknown $\psi$ is handled in Appendix \ref{app:complex-unknown}.
The above bound depends on the gap between the estimated  class probabilities $\hat{\eta}_i^\tr(x)$ for the training distribution and true class probabilities $\eta_i^\tr(x) = \P(y=i|x)$, as well as the quality of the coefficients $\hat{\balpha}$ provided by the weight estimation subroutine, as measured by $\kappa(\cdot)$. 
One can substitute $\kappa(\cdot)$ with, e.g., the error bound provided in Theorem \ref{thm:alpha-diagonal-linear-conopt}. 

\btheorem[\textbf{Error Bound for FW-EG}]
\label{thm:iterative-plugin}
Let $\perf^\Dtrue[h] = \psi(C^\Dtrue_{11}[h],\ldots, C^\Dtrue_{kk}[h])$ for a \emph{known} concave function $\psi: [0,1]^k \>\R_+$, which is $Q$-Lipschitz and $\lambda$-smooth.  Fix $\delta \in (0, 1)$.
Suppose Assumption \ref{asp:alpha-star} holds, and for any linear metric $\sum_i\beta_i C^D_{ii}[h]$, whose associated weight coefficients is $\bar{\balpha}$ with $\|\bar{\balpha}\| \leq B$, 
 w.p. $\geq 1-\delta$ over draw of $S^\tr$ and $S^\val$, the weight estimation routine in Alg.\ \ref{algo:weight-coeff} outputs coefficients $\hat{\balpha}$ with
 $\|\hat{\balpha} -\bar{\balpha}\| \leq 
 \kappa(\delta, n^\tr, n^\val)
 $, for some function $\kappa(\cdot) > 0$. Let $B' = B + \sqrt{Lk}\,\kappa(\delta/T, n^\tr, n^\val).$
 Then w.p.\  $\geq 1 - \delta$ over draws of $S^\tr$ and $S^\val$ from $\Dtrue$ and $\Dshift$ resp., the classifier $\hat{h}$ output by Algorithm \ref{algo:FW} after $T$ iterations satisfies:
\begin{align*}
\max_{h}\perf^D[h] - \perf^D[\hat{h}]\,\leq
\hspace{0.2cm} & 2QB'\E_x\left[\|\eta^\tr(x)-
\hat{\eta}^\tr(x)\|_1\right] +
 4Q\sqrt{Lk}\,\kappa(\textstyle\frac{\delta}{T}, n^\tr, n^\val) +\\
 &
\mathcal{O}\left(\lambda k\sqrt{\frac{k\log(k)\log(n^\val) + \log(k/\delta)}{n^\val}} + \frac{\lambda}{T} + Q\nu\right). \numberthis
\end{align*}
\etheorem

The proof in turn derives an error bound for the plug-in classifier in Algorithm \ref{algo:linear-metrics} for linear metrics (see Appendix \ref{app:pi-ew}). 
\section{Related Work}
\label{sec:relatedwork}
\textbf{Methods for closed-form metrics.} There has been a variety of work on optimizing complex evaluation metrics, including both plug-in type algorithms~\cite{ye2012optimizing, narasimhan2014statistical, koyejo2014consistent, narasimhan2015consistent, yan2018binary}, and those that use convex surrogates for the metric~\cite{joachims2005support, kar2014online, kar2016online, narasimhan2015optimizing,eban2017scalable, narasimhan2019optimizing,hiranandani2020optimization}. 
These methods rely on the test metric having a specific closed-form structure and do not handle black-box metrics. 

\textbf{Methods for black-box metrics.} 
Among recent black-box metric learning works, the closest to ours is by Jiang et al.\cite{jiang2020optimizing}, who  learn a weighted combination of surrogate losses to approximate the metric on a validation set.  
Like us, they probe the metric at multiple classifiers, 
but their approach has several drawbacks on both practical and theoretical fronts. Firstly, Jiang et al.~\cite{jiang2020optimizing} require retraining the model in each iteration, which can be time-intensive, whereas we only post-shift a pre-trained model. Secondly, the procedure they
prescribe for eliciting gradients requires perturbing the model parameters multiple times, which can be very expensive for large deep networks, whereas we only require perturbing the predictions from the model. Moreover, the number of perturbations they need grows \textit{polynomially} with the precision with which they need to estimate the loss coefficients, whereas we only require a \textit{constant} number of them.
Lastly, their approach does not come with strong statistical guarantees, whereas  ours does. Besides these benefits over~\cite{jiang2020optimizing}, we will also see in Section~\ref{sec:experiments} that our method yields better accuracies. 
Other related black-box learning methods
include~\cite{zhao2019metric, ren2018learning, huang2019addressing}, who learn 
a (weighted) loss  to approximate the metric,
but do so using computationally expensive procedures (e.g.\ meta-gradient descent or RL) that often require retraining the model from scratch, and come with limited theoretical analysis. 

\textbf{Methods for distribution shift.} The literature on distribution shift is vast, and so we cover a few representative papers; see~\cite{frenay2013classification,csurka2017comprehensive} 
for a comprehensive discussion. 
For the \textit{independent label noise} setting~\cite{natarajan2013learning}, Patrini et al.~\cite{patrini2017making} propose a loss correction approach that first trains a model with noisy label, use its predictions to estimate the noise transition matrix, and then re-trains model with the corrected loss. This approach is however tailored to optimize linear metrics; whereas, we can handle more complex metrics as well without re-training the underlying model. A plethora of  approaches exist for tackling \textit{domain shift}, including classical importance weighting (IW) strategies~\cite{sugiyama2008direct,shimodaira2000improving, kanamori2009least, lipton2018detecting}
that work in two steps:  estimate the density ratios and train a model with the resulting weighted loss. One such approach is
Kernel Mean Matching~\cite{huang2006correcting}, which matches covariate distributions between training and test sets in a high dimensional RKHS feature space. These IW approaches are however prone to over-fitting when used with deep networks~\cite{byrd2019effect}. More recent iterative variants seek to remedy this~\cite{fang2020rethinking}.
\section{Experiments}
\label{sec:experiments}
We run experiments on four classification tasks, with both known and black-box metrics, and under different label noise and domain shift settings. 
All our experiments use a large training sample, which is either noisy or contains missing attributes, and a smaller clean (and complete) validation sample. We always optimize the cross-entropy loss for learning  $\hat\eta^{\tr}(x) \approx \P^\Dshift(Y|x)$ using the training set (or $\hat\eta^{\val}(x) \approx \P^\Dtrue(Y|x)$ for some baselines), where the  models are varied across experiments. For monitoring the quality of $\hat\eta^{\tr}$ and $\hat\eta^{\val}$, we sample small subsets \emph{hyper-train} and \emph{hyper-val} data from the original training and validation data, respectively. We repeat our experiments over 5 random train-vali-test splits, and report the mean and standard deviation for each metric. We will use $^*$, $^{**}$, and $^{***}$ to denote that the differences between our method and the closest baseline are statistically significant (using Welch's t-test)  at a confidence level of 90\%, 95\%, and 99\%, respectively. We provide the data statistics in Table~\ref{tab:stats}. Observe that we always use small validation data in comparison to the size of the training data. The source code (along with random seeds) is provided on the link below.\footnote{\url{https://github.com/koyejolab/fweg/}}

\begin{table*}[t]
    \centering
    \caption{Data Statistics for different problem setups in Section~\ref{sec:experiments}.}
    {\scriptsize
    \begin{tabular}{lllll}
    \hline
        \textbf{Problem Setup} &\textbf{Dataset} & \textbf{\#Classes} &\textbf{\#Features} & \textbf{train / val / test split} \\
        \hline
        Indepen. Label Noise (Section~\ref{ssec:cifar10}) & CIFAR-10 & 10 & 32 $\times$ 32 $\times$ 3 & 49K / 1K / 10K  \\ \hline
        Proxy-Label (Section~\ref{ssec:adult}) & Adult & 2 & 101 & 32K / 350 / 16K  \\ \hline
        Domain-Shift (Section~\ref{ssec:adience}) & Adience & 2 & 256 $\times$ 256 $\times$ 3 & 12K / 800 / 3K \\ \hline
        Black-Box Fairness Metric (Section~\ref{ssec:adultbb}) & Adult & 2 (2 prot. groups) & 106 & 32K / 1.5K / 14K  \\
        \hline
    \end{tabular}
    }
    \label{tab:stats}
\end{table*}

\textbf{Common baselines}:
We use representative baselines from the black-box learning \cite{jiang2020optimizing},  
iterative re-weighting \cite{ren2018learning}, label noise correction \cite{patrini2017making}, and importance weighting \cite{huang2006correcting} literatures.
First, we list the ones common to all experiments.
\begin{enumerate}
    \item \textbf{Cross-entropy [train]:} 
    Maximizes accuracy on the training set and predicts:
    \bequation
    \widehat{h}(x) \,\in\, \argmax_{i \in [k]} \hat{\eta}^\tr_i(x).
    \eequation
    \item \textbf{Cross-entropy [val]:} 
    Maximizes accuracy on the validation set and predicts:
    \bequation
    \widehat{h}(x) \,\in\, \argmax_{i \in [k]} \hat{\eta}^\val_i(x).
    \eequation
    \item \textbf{Fine-tuning:} Fine-tunes the pre-trained $\hat\eta^{\text{tr}}$ using the validation data, monitoring the cross-entropy loss on the hyper-val data for early stopping. 
    \item \textbf{Opt-metric [val]:} 
    For  metrics $\psi(\C^D[h])$, for which $\psi$ is  \textit{known}, trains a model to directly maximize the metric on the small \textit{validation} set using the Frank-Wolfe based algorithm of \cite{narasimhan2015consistent}.
    \item \textbf{Learn-to-reweight} \cite{ren2018learning}:  Jointly  learns example weights, with the model,  to maximize accuracy on the validation set; does not handle specialized metrics.
    \item \textbf{Plug-in [train-val]:} 
    Constructs a classifier $\widehat{h}(x) \,\in\, \argmax_{i} w_i\hat{\eta}^\val_i(x)$, where the weights $w_i \in \R$ are tuned to maximize the given metric on the validation set, using a coordinate-wise line search (details in  Appendix~\ref{ssec:multiclass-plugin}). 
    \item \textbf{Adaptive Surrogates}~\cite{jiang2020optimizing}: Learns a weighted combination of surrogate losses (evaluated on clusters of examples) 
    to approximate the metric on the validation set. Since this method is not directly amenable for use with large neural networks (see Section~\ref{sec:relatedwork}), we compare with it only when using linear models, and present additional comparisons in App.~\ref{app:exp} (Table~\ref{tab:addedexp}).
    \vspace{-5pt}
\end{enumerate}
\textbf{Hyper-parameters:} The learning rate for Fine-tuning is chosen from  $1\text{e}^{\{-6,\dots,-4\}}$. For PI-EW and FW-EG, we tune the parameter $\epsilon$ from $\{1, 0.4, 1\text{e}^{-\{4,3,2,1\}}\}$. 
The line search for Plug-in is performed with a  spacing of $1\text{e}^{-4}$. The only hyper-parameters the other baselines have are those for training $\hat{\eta}^\tr$ and $\hat{\eta}^\val$, which we state in the individual tasks.

\subsection{Maximizing Accuracy under Label Noise}
\label{ssec:cifar10}
    
In our first task, we train a 10-class image classifier for the CIFAR-10 dataset~\cite{krizhevsky2009learning}, replicating the independent (asymmetric) label noise setup
from~\cite{patrini2017making}. The evaluation metric we use is accuracy. We take 2\% of original training data as validation data and flip labels in the remaining training set based on the following transition matrix: {\small TRUCK $\rightarrow$ AUTOMOBILE, BIRD $\rightarrow$ PLANE, DEER $\rightarrow$ HORSE, CAT $\leftrightarrow$ DOG}, with a flip probability of 0.6. For $\hat\eta^{\text{tr}}$ and $\hat\eta^{\text{val}}$, we use the same ResNet-14 architecture as~\cite{patrini2017making}, trained using SGD with momentum 0.9, weight decay $1\text{e}^{-4}$, 
and  learning rate 0.01, which we divide by 10 after 40 and 80 epochs (120 in total). 

We additionally compare with the \emph{Forward Correction} method of~\cite{patrini2017making}, a specialized method for correcting independent label noise, which estimates the noise transition matrix $\T$ using predictions from $\hat\eta^{\text{tr}}$ on the training set, and retrains it with the corrected loss, thus  training the ResNet twice. 
We saw a notable drop with this method when we used the (small) validation set to estimate $\T$. 

We apply the proposed PI-EW method for linear metrics, using a weighting function $\W$ defined with one of two choices for the basis functions (chosen via
cross-validation): (i) a default basis function that clusters all the points together $\phi^{\text{def}}(x) = 1 \, \forall x$, and (ii) 
ten basis functions $\phi^1, \dots, \phi^{10}$, each one being the average of the RBF kernels (see \eqref{eq:softlcuster}) centered at validation points belonging to a true class. 
The RBF kernels are computed with width 2 on UMAP-reduced 50-dimensional image embeddings~\cite{mcinnes2018umap}.

\begin{table}[t]
    \small
    \centering
    \caption{Test accuracy for noisy label experiment on CIFAR-10.}
\begin{tabular}{ll}
    \hline
        Cross-entropy [train] &  0.582 $\pm$ 0.007\\
        Cross-entropy [val] & 0.386 $\pm$ 0.031 \\
        Learn-to-reweight & 0.651	$\pm$ 0.017\\
        Plug-in [train-val] & 0.733	$\pm$ 0.044\\
        Forward Correction & 0.757 $\pm$	0.005\\
        Fine-tuning & 0.769 $\pm$	0.005\\
        \hline
        PI-EW & $\textbf{0.781} \pm \textbf{0.019}$\\
        \hline
    \end{tabular}
    \label{tab:cifar10assym}
\end{table}

As shown in Table~\ref{tab:cifar10assym}, PI-EW achieves significantly better test accuracies than all the baselines. The results for Forward Correction matches those in \cite{patrini2017making}; unlike this method, we train the ResNet only once, but achieve  2.4\% higher accuracy. 
 
Cross-entropy [val] over-fits badly, and yields the least test accuracy. Surprisingly, the simple fine-tuning yields the second-best accuracy. A possible reason is that the pre-trained model learns a good feature representation, and the fine-tuning step adapts well to the domain change. We also observed that PI-EW achieves better accuracy during cross-validation with ten basis functions,  highlighting the benefit of the underlying modeling in PI-EW. Lastly, in Figure~\ref{fig:weightscifar}, we show the elicited (class) weights with the default basis function  ($\phi^{\text{def}}(x) = 1 \, \forall x$), where e.g. because BIRD $\rightarrow$ PLANE, the weight on BIRD is upweighted and that on PLANE is down-weighted. 

\subsection{Maximizing G-mean with Proxy Labels}
\label{ssec:adult}

Our next experiment borrows the  ``proxy label'' setup from~\cite{jiang2020optimizing} on the Adult dataset~\cite{Dua:2019}. The task is to predict whether a candidate's gender is male, but the training set contains only a proxy for the true label. 
We  sample 1\% validation data from the original training data, and replace the labels in the remaining sample with the feature `relationship-husband'. 
The label noise here is instance-dependent 
(see Example~\ref{ex:instnoise}), and
we seek to maximize the G-mean metric: \bequation
\psi(\C) \,=\,\big(\prod_i \big({C_{ii}}/\sum_j C_{ij}\big)\big)^{1/m}.
\eequation

We train $\hat\eta^{\text{tr}}$ and $\hat\eta^{\text{val}}$ using linear logistic regression using SGD with a learning rate of 0.01. 
As additional baselines, we include the Adaptive Surrogates method of \cite{jiang2020optimizing} and \emph{Forward Correction}~\cite{patrini2017making}. The inner and outer 
learning rates for Adaptive Surrogates are each cross-validated in $\{0.1, 1.0\}$. We also compare with a simple Importance Weighting strategy, 
where we first train a  logistic regression model $f$ to predict if an example $(x,y)$ belongs to the validation data, and train a gender classifier with the training examples weighted by {\small $f(x,y)/(1 - f(x,y))$}. 

We choose between three sets of basis functions (using cross-validation): (i) a default basis function 
$\phi^{\text{def}}(x) = 1 \, \forall x$, (ii) $\phi^{\text{def}}, \phi^{\text{pw}}, \phi^{\text{npw}}$, where  $\phi^{\text{pw}}(x) = \1(x_{\text{pw}} = 1)$ and $\phi^{\text{npw}}(x) = \1(x_{\text{npw}} = 1)$ use features  `private-workforce' and `non-private-workforce' to form hard clusters, (iii) $\phi^{\text{def}}$, $\phi^{\text{pw}}, \phi^{\text{npw}}, \phi^{\text{inc}}$, where $\phi^{\text{inc}}(x) = \1(x_{\text{inc}} = 1)$ uses the binary feature `income'.
These choices are motivated from those used by~\cite{jiang2020optimizing}, who compute surrogate losses on the individual clusters. 
We provide their Adaptive Surrogates method with the same clustering choices.

\begin{table}[t]
    \small
    \centering
    \caption{Test G-mean for proxy label experiment on Adult.}
    \begin{tabular}{ll}
    \hline
        Cross-entropy [train] &  0.654	$\pm$ 0.002\\
        Cross-entropy [val] & 0.394 $\pm$	0.064 \\
        Opt-metric [val] & 0.652 $\pm$	0.027\\
        Learn-to-reweight & 0.668	$\pm$ 0.003\\
        Plug-in [train-val] & 0.672 $\pm$ 	0.013\\
        Forward Correction & 0.214 $\pm$	0.004\\
        Fine-tuning & 0.631	$\pm$ 0.017\\
        Importance Weights & 0.662 $\pm$	0.024\\
        Adaptive Surrogates & 0.682	 $\pm$ 0.002\\
        \hline
        FW-EG [unknown $\psi$] & $\textbf{0.685}	\pm \textbf{0.002}^{**}$\\
        FW-EG [known $\psi$] & $\textbf{0.685}	\pm \textbf{0.001}^{*}$\\
        \hline
    \end{tabular}
    \label{tab:adultproxy}
\end{table}

Table~\ref{tab:adultproxy} summarizes our results. We apply both variants of
our FW-EG method for a non-linear metric $\psi$, one where $\psi$ is \textit{known} and its gradient is available in closed-form, and the other where $\psi$ is assumed to be \textit{unknown}, and is treated as a general black-box metric. Both variants perform similarly and are better than the baselines. Adaptive Surrogates comes a close second, but underperforms by 0.3\% (with results being statistically significant). While the improvement of FW-EG over Adaptive Surrogates is small, the latter is time intensive as, in each iteration, it re-trains a logistic regression model. 
We verify this empirically in Figure~\ref{fig:time} by reporting run-times for Adaptive Surrogates and our method FW-EG (including the pre-training time) against the choices of basis functions (clustering features). We see that our approach is 5$\times$ faster for this experiment. 
Lastly, Forward Correction performs poorly, likely because its loss correction is not aligned with this label noise model.

\subsection{Maximizing F-measure under Domain Shift}
\label{ssec:adience}

We now move on to a domain shift application  (see Example~\ref{ex:ds}). The task is to learn a gender recognizer for the Adience face image dataset~\cite{eidinger2014age}, but with the training and test datasets containing images from different age groups (domain shift based on age). We use images belonging to age buckets 1--5 for training (12.2K images), and evaluate on images from age buckets 6--8 (4K images). For the validation set, we sample 20\% of the 6--8 age bucket images. Here we aim to maximize the F-measure.

For $\hat\eta^{\text{tr}}$ and $\hat\eta^{\text{val}}$, we
use the same ResNet-14 model from the CIFAR-10 experiment, except that the learning rate is divided by 2 after 10 epochs (20 in total). As an additional baseline, we compute importance weights using \emph{Kernel Mean Matching (KMM)}~\cite{huang2006correcting}, and train the same ResNet model with a weighted loss. Since the image size is large for directly applying KMM, we first compute the 2048-dimensional ImageNet embedding~\cite{krizhevsky2012imagenet} for the images and further reduce them to 10-dimensions via UMAP. 
The KMM weights are learned on the 10-dimensional embedding. For the basis functions, besides the default basis $\phi^{\text{def}}(x) = 1 \, \forall x$, we choose from subsets of six RBF basis functions $\phi^1,\ldots,\phi^6$, centered at points from the validation set, each representing one of six age-gender combinations. We use the same UMAP embedding as KMM to compute the RBF kernels.

\begin{table}[t]
    \small
    \centering
    \caption{Test F-measure for domain shift experiment on Adience.}
    \begin{tabular}{ll}
    \hline
        Cross-entropy [train] & 0.760 $\pm$	0.014\\
        Cross-entropy [val] & 0.708 $\pm$	0.022 \\
        Opt-metric [val] & 0.760 $\pm$	0.014\\
        Plug-in [train-val] & 0.759 $\pm$	0.014\\
        Importance Weights [KMM] & 0.760 $\pm$	0.013\\
        Learn-to-reweight & 0.773	$\pm$ 0.009\\
        Fine-tuning & 0.781	$\pm$ 0.014\\
        \hline
        FW-EG [unknown $\psi$] & $\textbf{0.815}	\pm \textbf{0.013} ^{***}$\\
        FW-EG [known $\psi$] & $\textbf{0.804} \pm	\textbf{0.015}^{***}$\\
        \hline
    \end{tabular}
    \label{tab:adiencecovs}
\end{table}

Table~\ref{tab:adiencecovs} presents the test F-measure values. Both variants of FW-EG algorithm provide statistically significant improvements over the baselines. Both Fine-tuning and Learning-to-reweight improve over plain cross-entropy optimization (train), however only moderately, likely because of the small size of the validation  set, and because these methods are not tailored to optimize the F-measure. 

\subsection{Maximizing Black-box Fairness Metric}
\label{ssec:adultbb}

We next handle a black-box metric given only query access to its value. We consider a fairness application where the goal is to balance classification performance across multiple protected groups.  The groups that one cares about are known,
but due to privacy or legal restrictions, the protected attribute for an individual cannot be  revealed~\cite{awasthi+21}. Instead, we have access to an oracle that reveals the value of the fairness metric for predictions on a validation sample, with the protected attributes absent from the training sample. 
This setup is different from recent work on learning fair classifiers from incomplete group information \cite{lahoti2019ifair, wang2020robust}, in that the focus here is on optimizing \textit{any} given black-box fairness metric.

We use the Adult dataset, and seek to predict whether the candidate's income is greater than \$50K, with \textit{gender} as the protected group. The black-box  metric we consider (whose form is unknown to the learner) is the geometric mean of the true-positive (TP) and true-negative (TN) rates, evaluated separately on the male and female examples, which promotes equal performance for both groups and classes:
\begin{equation}
\perf^\Dtrue[h] = \left( \text{TP}^{\text{male}}[h]\,\text{TN}^{\text{male}}[h])\,\text{TP}^{\text{female}}[h]\,\text{TN}^{\text{female}}[h] \right)^{1/4}.
\label{eq:fairmetric}
\end{equation}

We train the same logistic regression models as in previous Adult experiment in Section \ref{ssec:adult}. Along with the  basis functions $\phi^{\text{def}}$, $\phi^{\text{pw}}$ and $\phi^{\text{npw}}$ we used there, we additionally include two basis $\phi^{\text{hs}}$ and $\phi^{\text{wf}}$  based on features `relationship-husband' and `relationship-wife', which we expect to have correlations with gender.\footnote{The only domain knowledge we use is that the protected group is ``gender"; beyond this, the
form of the metric is unknown, and importantly, an individual's gender is not available.} 
We include two baselines that can handle black-box metrics: Plug-in [train-val], which tunes a threshold on $\hat{\eta}^\tr$ by querying the metric on the validation set, and Adaptive Surrogates. The latter is cross-validated on the same set of clustering features (i.e., basis functions in our method) for computing the surrogate losses. 

\begin{table}[t]
    \small
    \centering
    \caption{Black-box fairness metric on the test set for Adult.}
    \begin{tabular}{ll}
    \hline
        Cross-entropy [train] & 0.736 $\pm$	0.005\\
        Cross-entropy [val] & 0.610 $\pm$	0.020 \\
        Learn-to-reweight &  0.729	$\pm$ 0.007 \\
        Fine-tuning & 0.738 $\pm$	0.005\\
        Adaptive Surrogates & 0.812	$\pm$ 0.004\\
        Plug-in [train-val] & 0.812 $\pm$	0.005\\
        \hline
        FW-EG & $\textbf{0.822}	\pm \textbf{0.002}^{***}$\\
        \hline
    \end{tabular}
    \label{tab:adultbb}
\end{table}

As seen in Table~\ref{tab:adultbb}, FW-EG yields the highest black-box metric on the test set, Adaptive Surrogates comes in second, and surprisingly the simple plug-in approach fairs better than the other baselines. During cross-validation, we also observed that the performance of FW-EG improves with more basis functions, particularly with the ones that are better correlated with gender. Specifically, 
FW-EG with basis functions  $\{\phi^{\text{def}}, \phi^{\text{pw}}, \phi^\text{npw},  \phi^\text{wf}, \phi^\text{hs}\}$ achieves
approximately
1\% better performance than both
FW-EG with $\phi^{\text{def}}$ basis function and FW-EG with basis functions $\{\phi^{\text{def}},\phi^{\text{pw}}, \phi^\text{npw}\}$.

\subsection{Ablation Studies}
\label{ssec:ablation}

\begin{figure*}[t]
	\centering 
	\subfigure[]{
		{\includegraphics[width=5cm]{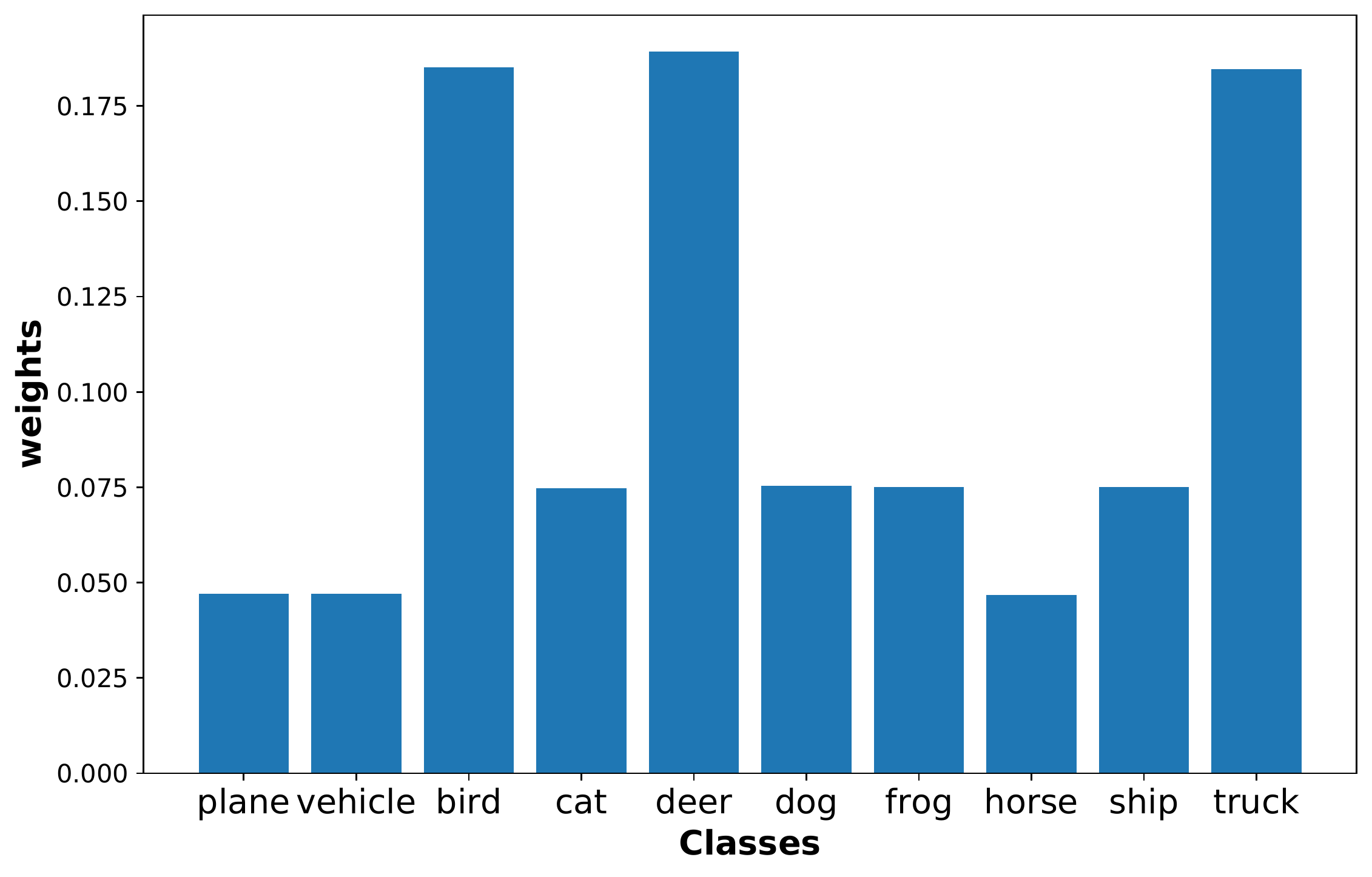}}
		\label{fig:weightscifar}
	}
	\subfigure[]{
		{\includegraphics[width=5cm]{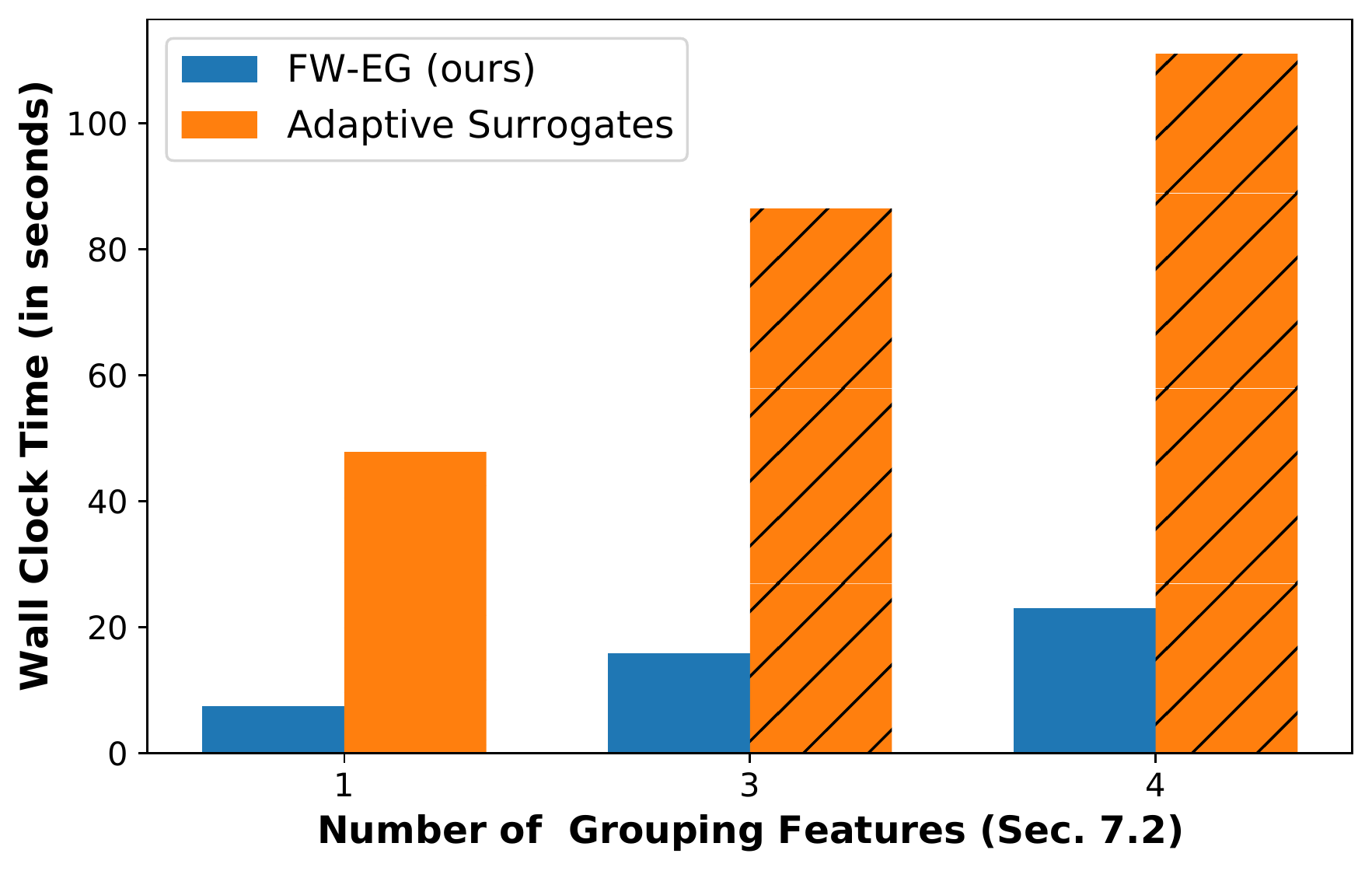}}
		\label{fig:time}
	}
	\\
	\subfigure[]{
		{\includegraphics[width=5cm]{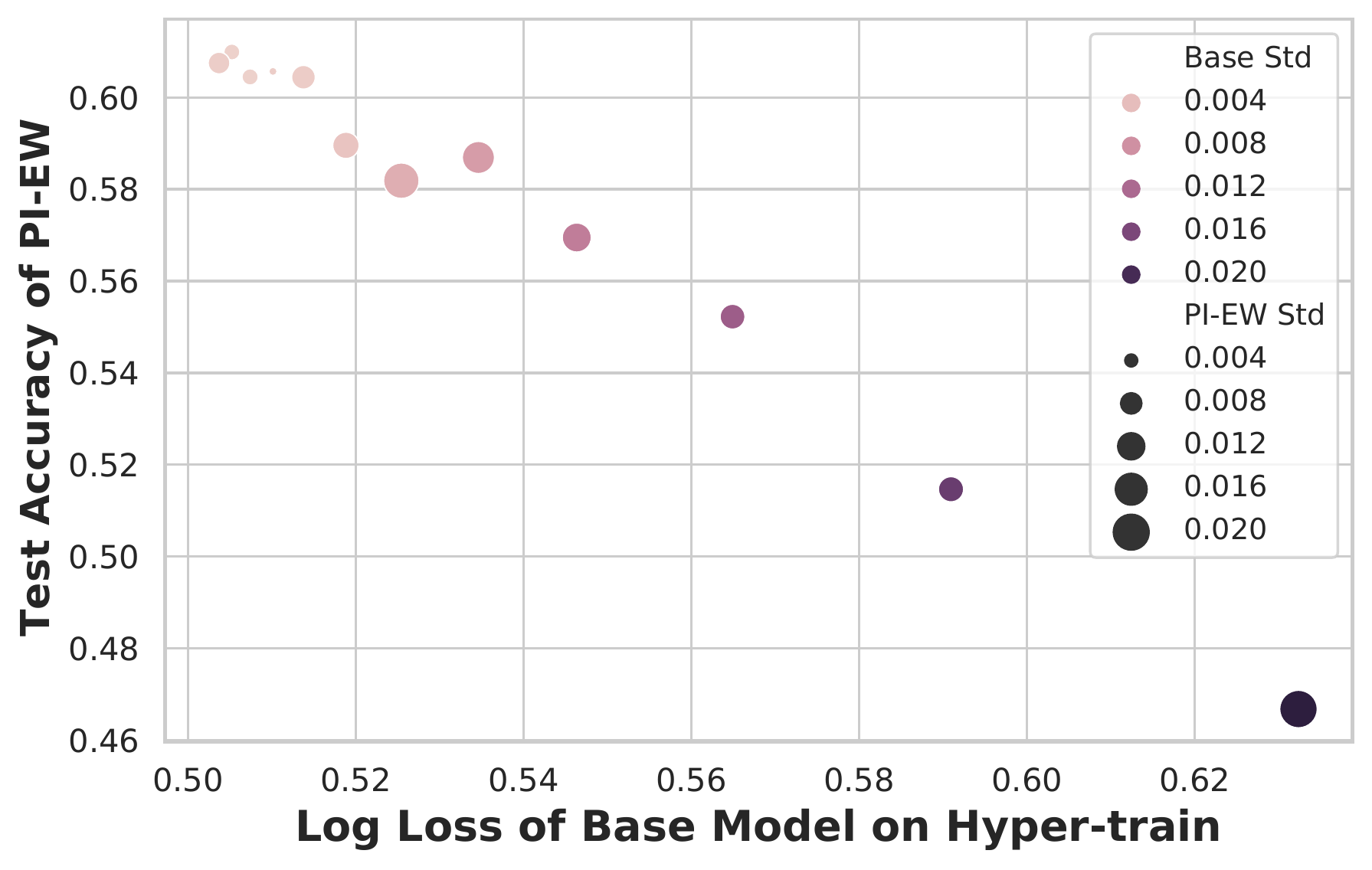}}
		\label{fig:periodicmain}
	}
	\subfigure[]{
		{\includegraphics[width=5cm]{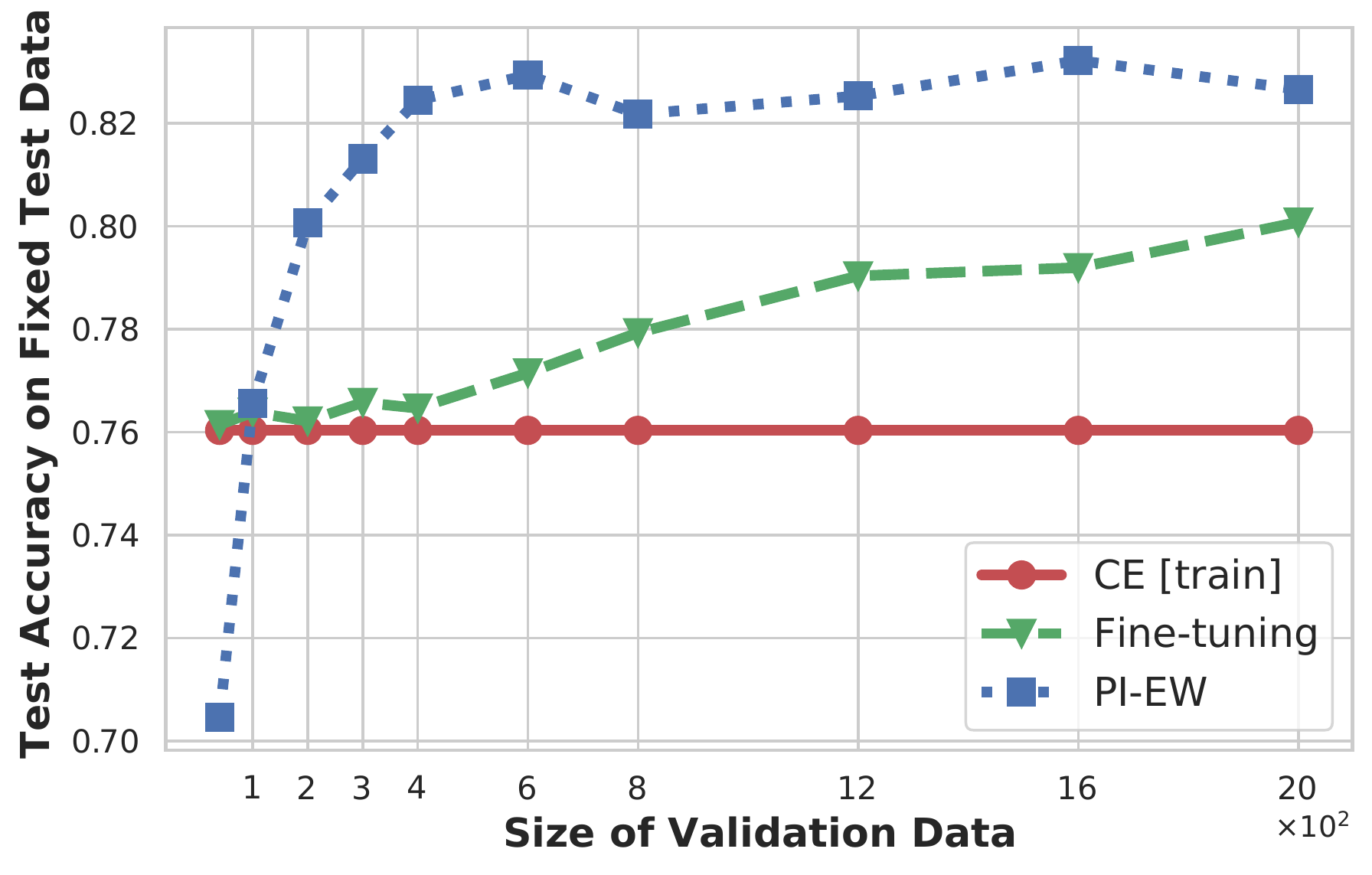}}
		\label{fig:valsizemain}
	}
	\caption{(a) Elicited (class) weights for CIFAR-10 by PI-EW for the default basis (Sec.~\ref{ssec:cifar10}); (b) Run-time for FW-EG and Adaptive Surrogates~\cite{jiang2020optimizing}
    vs no.\ of grouping features on proxy label task (Sec.~\ref{ssec:adult}); (c)  Effect of quality of the base model $\hat\eta^{\tr}$ on Adult (Sec.~\ref{ssec:adult}): as the base model's quality improves, the test accuracies of PI-EW also improves; (d) Effect of the validation set size on Adience (Sec.~\ref{ssec:adience}): PI-EW performs better than fine-tuning even for small validation sets, while both improve with larger ones.
	}
	\label{fig:mainabalation}
\end{figure*}

We close with two sets of experiments. First, we analyze how the performance of PI-EW, while optimizing accuracy for the Adult experiment (Section~\ref{ssec:adult}), varies with the quality of the base model $\hat{\eta}^\tr$. We save an estimate of $\hat\eta^{\tr}$ after every 50 batches (batch size 32) while training the logistic regression model, and use these estimates as inputs to PI-EW. As shown in Figure~\ref{fig:periodicmain}, the test accuracies for PI-EW improves with the quality of $\hat{\eta}^\tr$ (as measured by the log loss on the hyper-train set).  This is in accordance with Theorem~\ref{thm:iterative-plugin}. 
One can further improve the quality of the estimate $\eta^{\text{tr}}$ by using calibration techniques~\cite{guo2017calibration}, which will likely enhance the performance of PI-EW as well.

Next, we show that PI-EW is robust to changes in the validation set size when trained on the Adience experiment in Section~\ref{ssec:adience} to optimize accuracy. We set aside 50\% of 6--8 age bucket data for testing, and sample varying sizes of validation data from the rest. As shown in Figure~\ref{fig:valsizemain}, 
PI-EW generally performs better than fine-tuning even for small validation sets, while both improve with larger ones. The only exception is 100-sized validation set (0.8\% of training data), where we see overfitting due to small validation size.
\subsection{Black-box optimization with pairwise comparison oracle}
\label{bbopt:pair}

The proposed algorithm in this chapter works with \emph{machine} oracles that when queried for a classifier $h$ respond with the metric value $\Ecal^D[h].$ We saw various cases, e.g., validation set in distribution shift settings or a regulator in fairness setups, where we have access to such an oracle. We exploit the fact that the example weights act as a gradient or a local linear objective in a small neighborhood for the unknown metric, and elicit such linear metrics through the use of value queries to the machine oracle. 

The same idea can be extended in the presence of a \emph{human} oracle that provides pairwise preferences. This also includes A/B testing scenarios commonly used in the web based applications~\cite{satyal2018ab, bhat2020near,  hiranandani2020cascading}. In order to elicit a local-linear objective around a classifier's confusion matrix $C^D[h]$, one can first construct a small sphere around $C^D[h]$ and the corresponding classifiers by the process discussed  in Section~\ref{append:ssec:slambda}, and then run Algorithm~\ref{mult-alg:lpm} to elicit the local-linear performance metric using the pairwise comparisons. Once the local-linear objective is estimated, then one can post-shift a pre-trained class-conditional estimator similar to the proposed FW-EG algorithm (Algorithm~\ref{algo:FW}). 

However, this approach comes with its own challenges. Firstly, in order to apply the iterative Frank-Wolfe approach, one will need to create the spheres and the corresponding classifiers multiple times. This would make the algorithm time-intensive as it would require to solve an optimization problem in each iteration. Secondly, it is not clear how to elicit the local-linear objective, when one chooses overlapping or softly clustered basis functions. We hope to overcome these challenges in the future.

\section{Concluding Remarks}
\label{sec:discussion}
In this chapter, we proposed the Frank Wolfe with Elicited Gradient (FW-EG) method 
for optimizing black-box metrics given query access to the evaluation metric on a small validation set. Our framework includes common distribution shift settings as special cases, and unlike prior distribution correction strategies, is able to 
handle general non-linear metrics. 
A key benefit of our method is that it is agnostic to the choice of  $\hat\eta^{\tr}$, and can thus be used to post-shift 
pre-trained deep networks, without having to retrain them. We showed that the post-shift example weights can be flexibly modeled with various choices of basis functions (e.g., 
hard clusters, RBF kernels, etc.) and  empirically demonstrated their efficacies. We exploit the fact that the example weights act as a gradient for the unknown metric and estimated through metric elicitation procedure, where a \emph{machine} oracle responds with absolute quality value of a classifier on a clean validation dataset. Moreover, the novel geometrical characterizations discussed in Chapters~\ref{chp:binary},  \ref{chp:multiclass}, and \ref{chp:fair} led us to devise an efficient and a smart method for creating the probing classifiers (see~\eqref{eq:trivial-classifiers}). 
We look forward to further improving the results with more nuanced basis functions. 

\chapter{Practical Metric Elicitation}
\label{chp:practical}

Till now, our contributions towards the Metric Elicitation (ME) framework with pairwise comparisons have been algorithmic. So, to bring theory closer to practice, in this chapter, we conduct a preliminary real-user study that shows the efficacy of the metric elicitation framework in recovering the users' preferred performance metrics in a binary classification setup. 

 We choose \emph{cancer diagnosis}~\cite{yang2014multiclass} as the application for this task, where the ground-truth label is a binary feature denoting whether or not the patient has cancer. This choice is motivated by Application~1 discussed in Chapter~\ref{chp:introduction}, since there are asymmetric costs associated with False Positives and False Negatives -- based on known consequences of misdiagnosis, i.e, side-effects of treating a healthy patient vs. mortality rate for not treating a sick patient. Our work (a) builds upon existing visualizations for confusion matrices to ask for pairwise preferences, and (b) then try to elicit a linear performance metric using our proposed procedure in Algorithm~\ref{bin-alg:linear} in the binary classification setup. We work with ten subjects in this preliminary study, who have some experience either with machine learning or biomedical research in the university setup. 
 
 We create a web User Interface (UI),\footnote{The user-interface is shown  later and is also available at http://safinahali.com/elicitation-graphs-static/} which broadly has three parts to it. First, it shows subjects a couple of confusion matrices and asks questions related to \emph{comprehension, comparison, and simulation}~\cite{shen2020designing}. These questions familiarize the subjects with the visualizations and the components associated with the correct and incorrect predictions. Second, it shows a bunch of pairwise preference queries over confusion matrices. The UI involves running the binary-search procedure from Algorithm~\ref{bin-alg:linear} at the back end, which chooses the next set of queries based on the subject's current real-time responses. Third, the UI comprises of fifteen pairwise comparison queries, where the confusion matrices are randomly chosen from the feasible set. The responses to these queries are used to evaluate the fidelity of the recovered metric through metric elicitation framework. At the end of the web-based task, the subjects are asked some subjective questions which essentially lead to our guidelines that we recommend for implementing the metric elicitation framework in real-life scenarios.

The goal of this preliminary study is to check workflow of the practical implementation of the metric elicitation framework with real data, and to a certain extent, support or reject the hypothesis that the implicit user preferences can be quantified using the pairwise comparison queries over confusion matrices. In addition, the goal includes testing certain assumptions regarding the noise in the subject's (oracle's) responses, work around with finite samples, and provide future guidance on visualizing confusion matrices for pairwise comparisons, eliciting actual performance metrics in real-life scenarios, and evaluating the quality of the recovered metric.

The contributions from this chapter are summarized as follows:

\begin{itemize}
    \item We create a web UI that uses existing visualizations of confusion matrices that are refined to capture preferences over pairwise comparisons. 
    \item The UI implements the binary-search procedure from Algorithm~\ref{bin-alg:linear} at the back end that make use of the real-time responses over confusion matrices to elicit a linear performance metric in the cancer diagnosis setup.  
    \item We perform a user study with ten subjects and elicit their linear performance metrics using the proposed web UI. We compare the quality of the recovered metric  by comparing their responses to the elicited metric's responses over a set of randomly chosen pairwise comparison queries. The study also includes a post-task, \emph{think-aloud}-style interview regarding the utility of the framework.
    \item Lastly, using the task results and the post-task interviews, we present guidelines regarding practical implementation of the ME framework that can be used for future research in this direction. 
\end{itemize}

\section{Dataset and Visualization Choice}
\label{pme-sec:data-vis}

In this section, we first discuss the details of the dataset used and how the feasible set of confusion matrices is constructed. Then, we discuss the choice of visualizations for confusion matrices, which are borrowed from prior work, but are refined to allow for better pairwise comparisons. 

\subsection{Choice of Task and Dataset Used}
\label{pme-ssec:dataset}

Our choice of task domain and the dataset is motivated by Application~1 discussed in Chapter~\ref{chp:introduction}. The task is \emph{cancer diagnosis}~\cite{yang2014multiclass} for which we use the Breast Cancer Wisconsin (Original) dataset from the UCI repository.\footnote{The dataset can be downloaded from https://tinyurl.com/dn2esyvw.} The dataset has been extensively used in the literature for binary classification, where the label $1$ denotes \emph{malignant} cancer and label $0$ denotes \emph{benign} cancer. There are 699 samples in total, wherein each sample has 9 features. Around 35\% of the data is labelled as $1$ and the rest as $0$. The task for any classifier is to take the 9 features of a patient as input and predict whether or not the patient has cancer. 

We divide this data into two equally sized parts -- the training and the test data. Using the training data, we learn a logistic regression model to obtain an estimate of the class-conditional probability, i.e., $\hat\eta(x) = \hat\Pmbb(y=1 | X)$. We then create a pool of thresholded classifiers of the type:
\bequation
h_\tau(x) = \1[\hat\eta(x)\geq \tau],
\label{pme-eq:classifiers}
\eequation
where we vary the threshold $\tau$ from $0$ to $1$ in steps of $1e^{-4}$. Subsequently, we compute confusion matrices for the above threhsolded classifiers on the test data (resulting in 10001 confusion matrices). As discussed in Chapter~\ref{chp:binary} (see Figure~\ref{bin-fig:lin-fr}), the space of confusion matrices is a two-dimensional space and the confusions (tuple of true positives and true negatives) associated with the thresholed classifiers above form the upper boundary. This upper boundary for the estimated confusions on the test data is shown in Figure~\ref{pme-fig:confusions} (see solid, red line). 

\begin{figure}[t]
    \centering
    \includegraphics[scale=0.5]{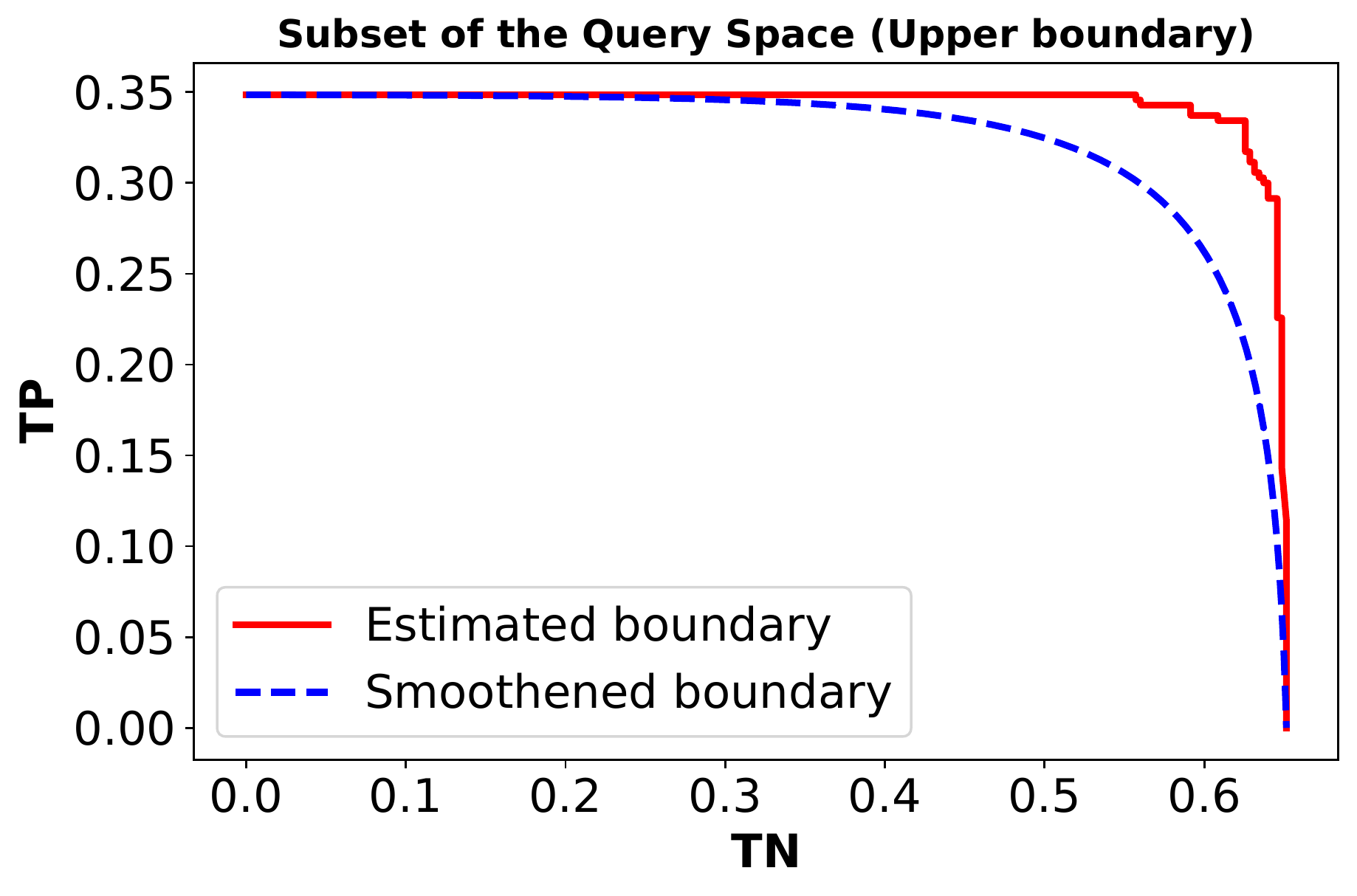}
    \caption{Estimated confusions on test data forming the upper boundary of the space of confusion matrices and the associated smoothened version of the upper boundary.}
    \label{pme-fig:confusions}
\end{figure}

As discussed in Chapter~\ref{chp:binary}, one can use these estimated confusion matrices in practice to elicit linear performance metrics. However, in the binary classification setup, we can easily smoothen the upper boundary, and that too using feasible confusion matrices. This allows to reduce the \emph{staircase} type bumps due to estimation from finite data, and consequentially, lead to better convergence from the binary-search based Algorithm~\ref{bin-alg:linear}. To generate confusions on the smoothened version of the upper boundary, we take the same simulated distribution setting from Section~\ref{bin-ssec:theoryexp}. 

Specifically, we take a joint probability for $\Xcal = [-1,1]$ and $\Ycal = \{0, 1\}$ given by $f_X = \Umbb[-1,1]$ and $\eta(x) = \frac{1}{1 + e^{ax + b}}$, where $\Umbb[-1,1]$ is the uniform distribution on $[-1, 1]$. Then we estimate the parameters $a$ and $b$ such that they minimize the squared error between the (10K) confusions obtained on the test data and the ones simulated by using the above distribution. The smoothened upper boundary is shown as dashed, blue curve in Figure~\ref{pme-fig:confusions}. Clearly, all these confusions are feasible as they would lie inside the region enclosed by the upper and lower boundary, and thus we can use the confusions on the smoothened upper boundary for elicitation purposes.

\subsection{Choice of Visualization}
\label{pme-ssec:vis}

In modern times, ensuring effective public understanding of algorithmic decisions, especially, machine learning models has become an imperative task. With this view in mind, we borrow the visualizations of confusion matrices for the binary classifications setup from Shen et al.~\cite{shen2020designing}. The authors provide a concrete step towards the above goal by redesigning confusion matrices to support non-experts in understanding the performance of machine learning models. The final visualizations that we use from Shen et al.~\cite{shen2020designing} are created over multiple iterative user-studies. 

In the first study, the authors conduct interviews with $7$ subjects and a survey with $102$ subjects and map out two major sets of challenges lay people have in understanding standard confusion matrices. These are (a) general terminologies and (b) the matrix design. These challenges are further elaborated with three sub-challenges that include confusion about the direction of reading the data, layered relations, and the quantities involved. In order to tackle these challenges, the authors came up with four alternative visualizations of the confusion matrix. In the second study, the authors evaluate the efficacy of the proposed visualizations over $483$ subjects on a recidivism prediction task~\cite{shen2020designing}. The authors conclude that the \emph{flow-chart} is the most preferred visualization of a confusion matrix followed by a \emph{bar-chart}. Both these visualizations are shown in Figure~\ref{pme-fig:vis-prior} in the context of a recidivism prediction task.

\begin{figure}
    \centering
    \includegraphics[scale=0.5]{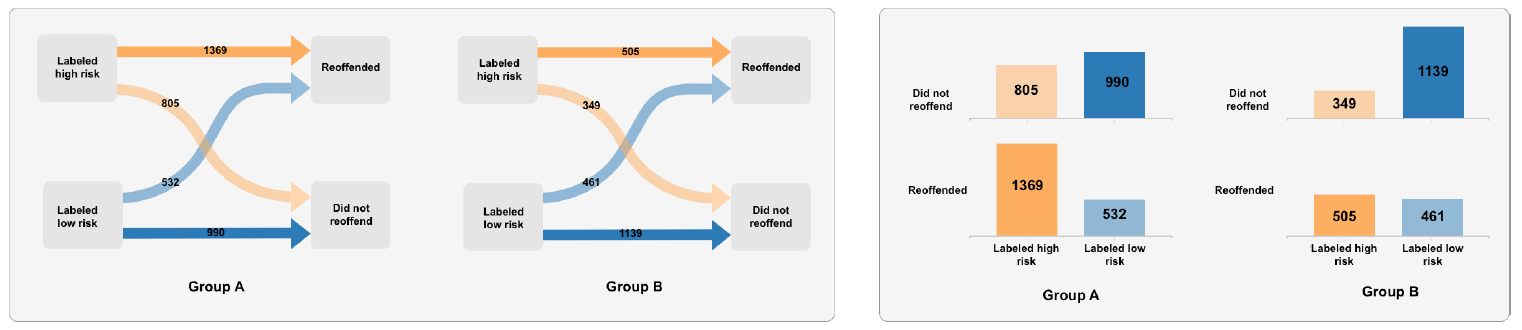}
    \caption{Flow-chart and bar-chart based visualizations for (binary classification) confusion matrices in the recidivism prediction task from Shen et al.~\cite{shen2020designing}.}
    \label{pme-fig:vis-prior}
\end{figure}

However, in light of our preliminary discussions with Human-Computer Interaction (HCI) and machine learning researchers, we make/recommend the following changes in the visualization for pairwise comparison purposes in the metric elicitation framework.

\begin{figure}[t]
    \centering
    \includegraphics[scale=0.75]{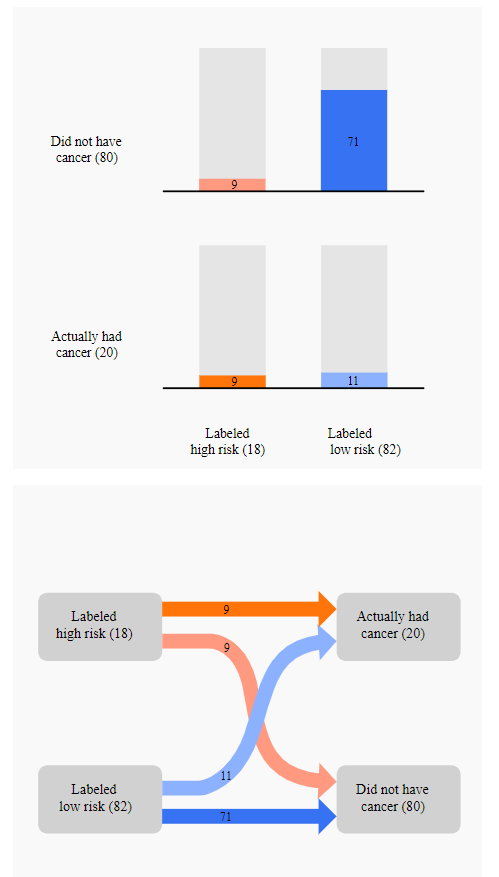}
    \caption{Our modified visualization of a confusion matrix for a cancer diagnosis task. Modification is  from the perspective of obtaining better pairwise preferences.}
    \label{pme-fig:vis-our}
\end{figure}

\begin{enumerate}
    \item Based on the observation that multiple visualizations of the information help in better user understanding~\cite{mazza2009introduction}, we choose to use the top two performing visualizations, i.e., the \emph{flow-chart} and the \emph{bar-chart}, together to depict a confusion matrix. 
    \item We transform the data statistics so that the numbers denote out-of-100 samples. 
    \item We found that the total number of positive and negative labels along with total number of positive and negative predictions are very helpful in comparing two confusion matrices. Therefore, we add the total numbers in the flow-chart boxes and on axes in the bar-charts. 
    \item We also add a zoom-in feature for both the graphs for better understanding.
    \item Although, in this preliminary user study, we have not changed the direction in the flow-chart, in our discussions with HCI and machine learning researchers, we also noted that the current direction is perhaps more important for the recidivism task (that is because there is time component involved with it) but can be changed for the cancer diagnosis task. This allows one to have constants (i.e., total positive and negative labels) in the left column and the varying component (i.e., total positive and negative predictions) on the right column making the comparison easier. Moreover, this change ensures that the bar-chart and the flow-chart represent similar information. We plan to implement this change and record its impact in our future user studies. 
\end{enumerate}
Our modified visualization incorporating the first four points above for a confusion matrix in the context of cancer diagnosis is shown in Figure~\ref{pme-fig:vis-our}. We next discuss the web user interface. 

\section{User Interface}
\label{sec:ui}

We discuss our proposed web User Interface (UI) in detail and discuss our rationale behind its several components. We also provide images of the UI at the end of this chapter. 

The UI starts with a questionnaire asking about demographic information like age, gender, race, highest level of school, and the subjects' expertise in machine learning and healthcare as shown in Figure~\ref{pme-fig:index}. Then the UI has three parts to it as explained in the following sub-sections.

\begin{figure}[t]
    \centering
    \includegraphics[scale=0.6]{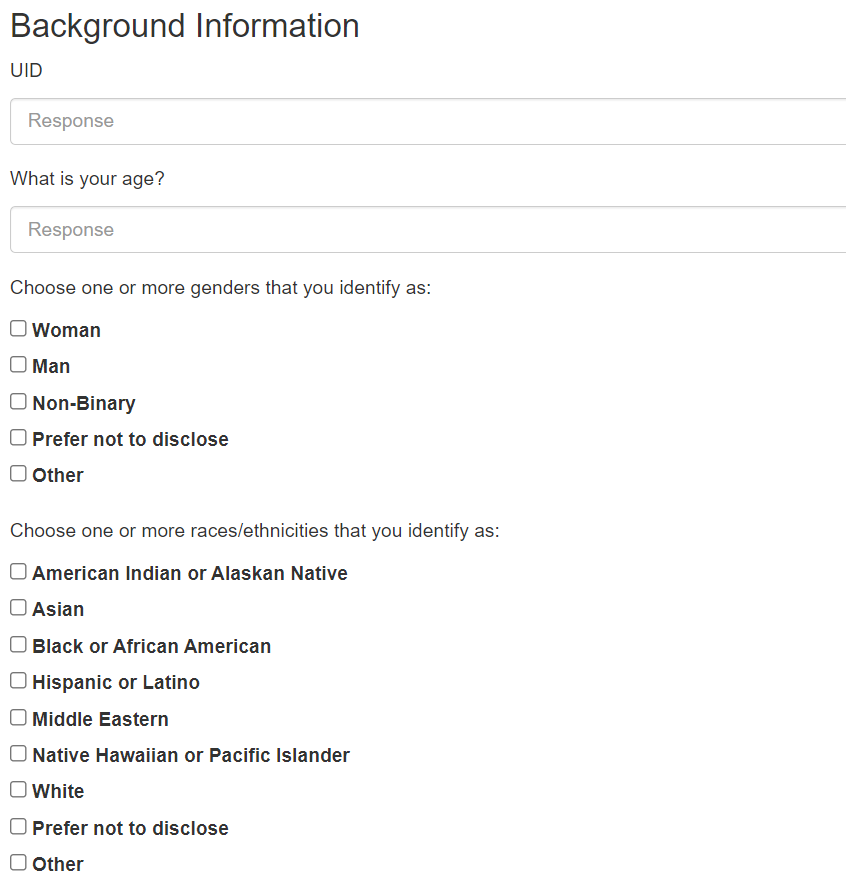}
    \includegraphics[scale=0.6]{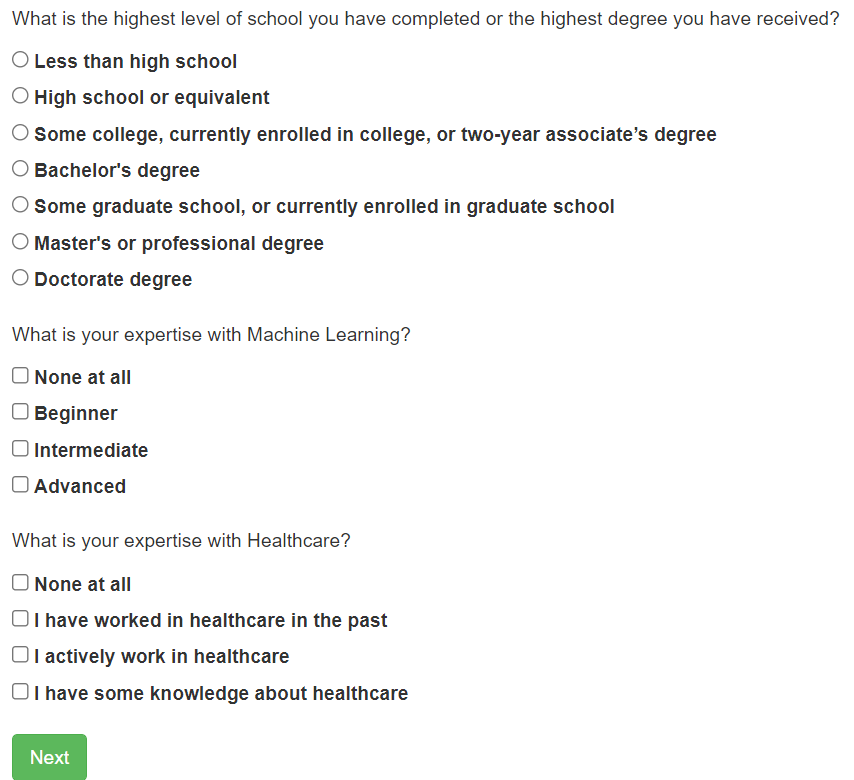}
    \caption{Questionnaire on the first page of the UI.}
    \label{pme-fig:index}
\end{figure}

\subsection{Understanding and Familiarizing with the Visualizations}
\label{pme-ssec:understanding}

After the questionnaire, we describe the task of cancer diagnosis and provide details on how classifiers can be inaccurate in their predictions in layman terms. We also show the proposed visualization of a confusion matrix along with the description as exhibited in Figure~\ref{pme-fig:prelim}.

\begin{figure}[t]
    \centering
    \includegraphics[scale=0.5]{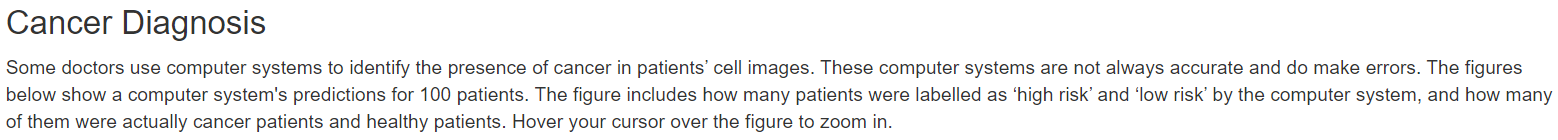}
    \includegraphics[scale=0.75]{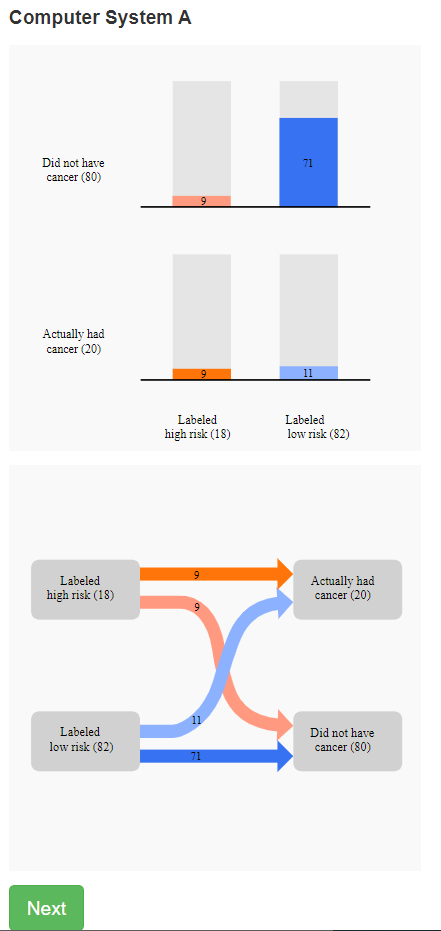}
    \caption{Description of cancer diagnosis along with visualization of a confusion matrix.}
    \label{pme-fig:prelim}
\end{figure}

On the next four pages, we show visualizations of two confusion matrices side by side and ask a series of questions regarding the data depicted in them. The first three adapt the questions from Shen et al.~\cite{shen2020designing} for the cancer diagnosis task. See Figures~\ref{pme-fig:q1}-\ref{pme-fig:q3} for the UI snapshots. Shen et al.~\cite{shen2020designing} framed these questions to evaluate the \emph{comprehension}, \emph{comparison}\footnote{The comparison questions in Shen et al.~\cite{shen2020designing} are different than pairwise comparisons like ours. They focus on comparing just one component, e.g., true positives, at a time.}, and \emph{simulation}-based understanding of the subjects. We use these questions to make them familiarize with the visualizations. The fourth page asks the subjects to actually compare two hypothetically created confusion matrices (see Figure~\ref{pme-fig:q4}). Here, one of the matrices has both higher false positives and false negatives. This question has a definitive answer and was added to make the subjects familiarize with the type of pairwise comparison questions that would follow. In addition, this question indicates how good the subject has grasped the context around cancer diagnosis and the task of pairwise comparisons.  

\begin{figure}
    \centering
    \includegraphics[scale=0.6]{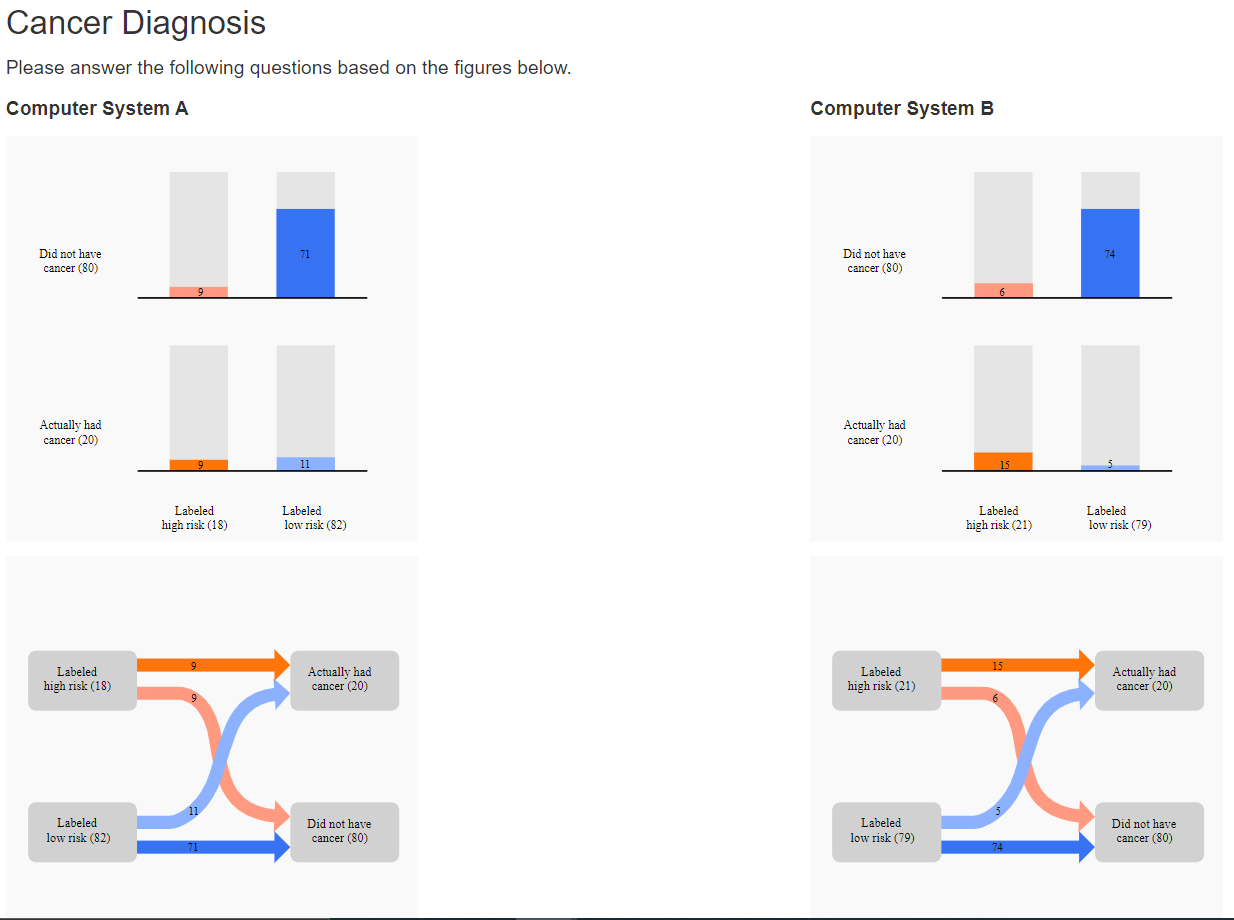}
    \includegraphics[scale=0.6]{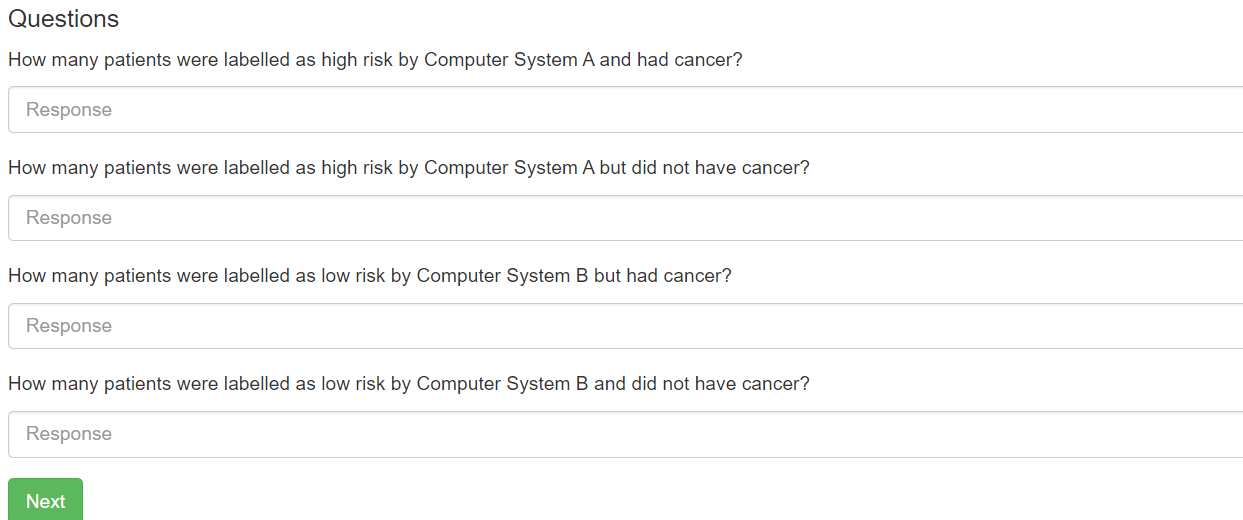}
    \caption{Two confusion matrices side by side. This page asks questions about \emph{comprehending} confusion matrices.}
    \label{pme-fig:q1}
\end{figure}

\begin{figure}
    \centering
    \includegraphics[scale=0.6]{practical/plots/q1_1.PNG}
    \includegraphics[scale=0.6]{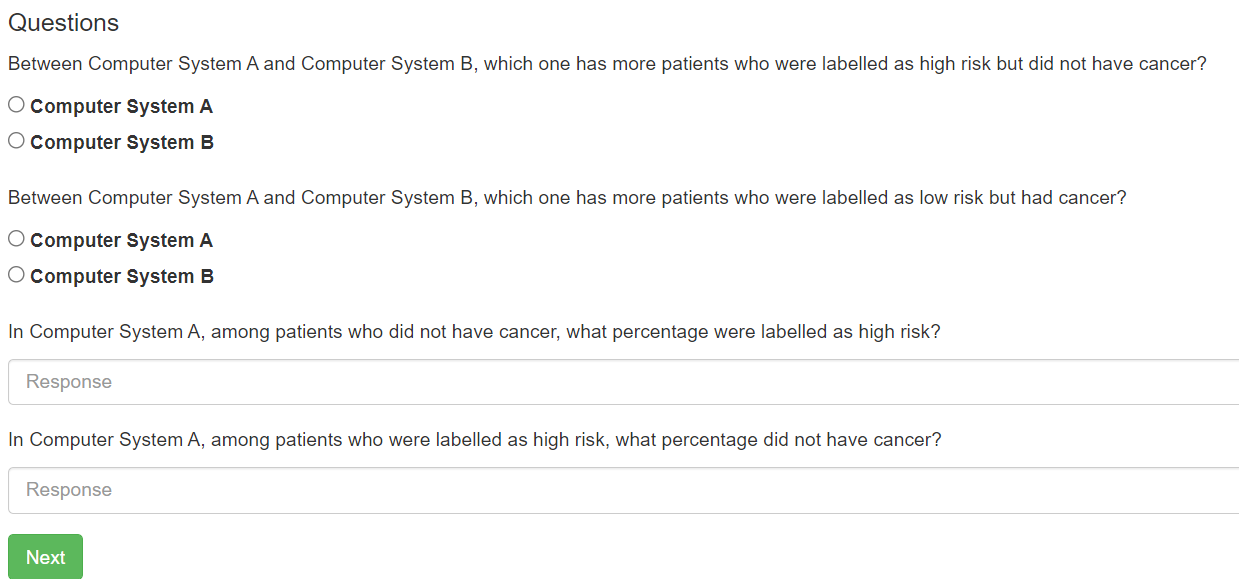}
    \caption{Two confusion matrices side by side. This page asks questions about \emph{comparing} the values in the two confusion matrices.}
    \label{pme-fig:q2}
\end{figure}

\begin{figure}
    \centering
    \includegraphics[scale=0.6]{practical/plots/q1_1.PNG}
    \includegraphics[scale=0.6]{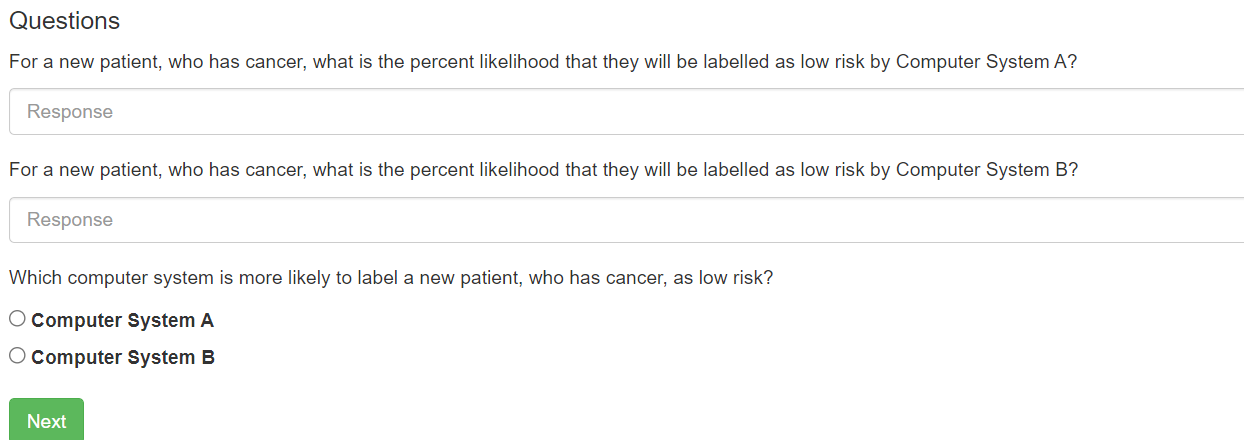}
    \caption{Two confusion matrices side by side. This page asks questions about \emph{simulating} a scenario based on the values in the two confusion matrices.}
    \label{pme-fig:q3}
\end{figure}

\begin{figure}
    \centering
    \includegraphics[scale=0.6]{practical/plots/q1_1.PNG}
    \includegraphics[scale=0.6]{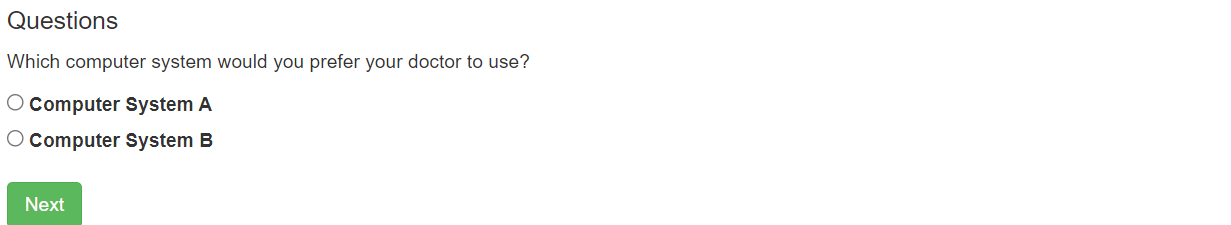}
    \caption{Two hypothetically created confusion matrices side by side. This page asks questions about \emph{pairwise comparison} of the confusion matices. Since one of the confusion matrices is worse in both false positives and false negatives, this question has a definitive answer.}
    \label{pme-fig:q4}
\end{figure}

\subsection{Practically Eliciting Linear Performance Metrics}
\label{pme-ssec:me}

We next explain the second phase of the UI, where we actually ask subjects for pairwise preferences over confusion matrices, and implement our binary-search procedure from Algorithm~\ref{bin-alg:linear}. The confusion matrices used for this procedure are from the smoothened upper boundary shown in Figure~\ref{pme-fig:confusions}; thus, the subjects have to make a choice reflecting on the trade-off between false positives and false negatives. Algorithm~\ref{bin-alg:linear} takes in real-time preferences of the subjects, generates next set of queries based on the current responses, and converge to a linear performance metric at the back end. We save this (linear) performance metric for each subject. We stop the binary-search when the search interval becomes less than or equal to 0.05 ($\epsilon$ in line 3 of Algorithm~\ref{bin-alg:linear}). Moreover, in practice, we do not need to ask four queries per round of binary search; instead, we can reduce the search interval into half by just using at most three pairwise queries in each round (i.e., by querying $\Omega(\Cbar_{\theta_c}, \Cbar_{\theta_a}), \Omega(\Cbar_{\theta_d}, \Cbar_{\theta_c}), \Omega(\Cbar_{\theta_e}, \Cbar_{\theta_d}),$ in line 6 of Algorithm~\ref{bin-alg:linear}). A sample of a pairwise comparison query from a run of the binary search algorithm in the UI is shown in Figure~\ref{pme-fig:me}.

\begin{figure}
    \centering
    \includegraphics[scale=0.5]{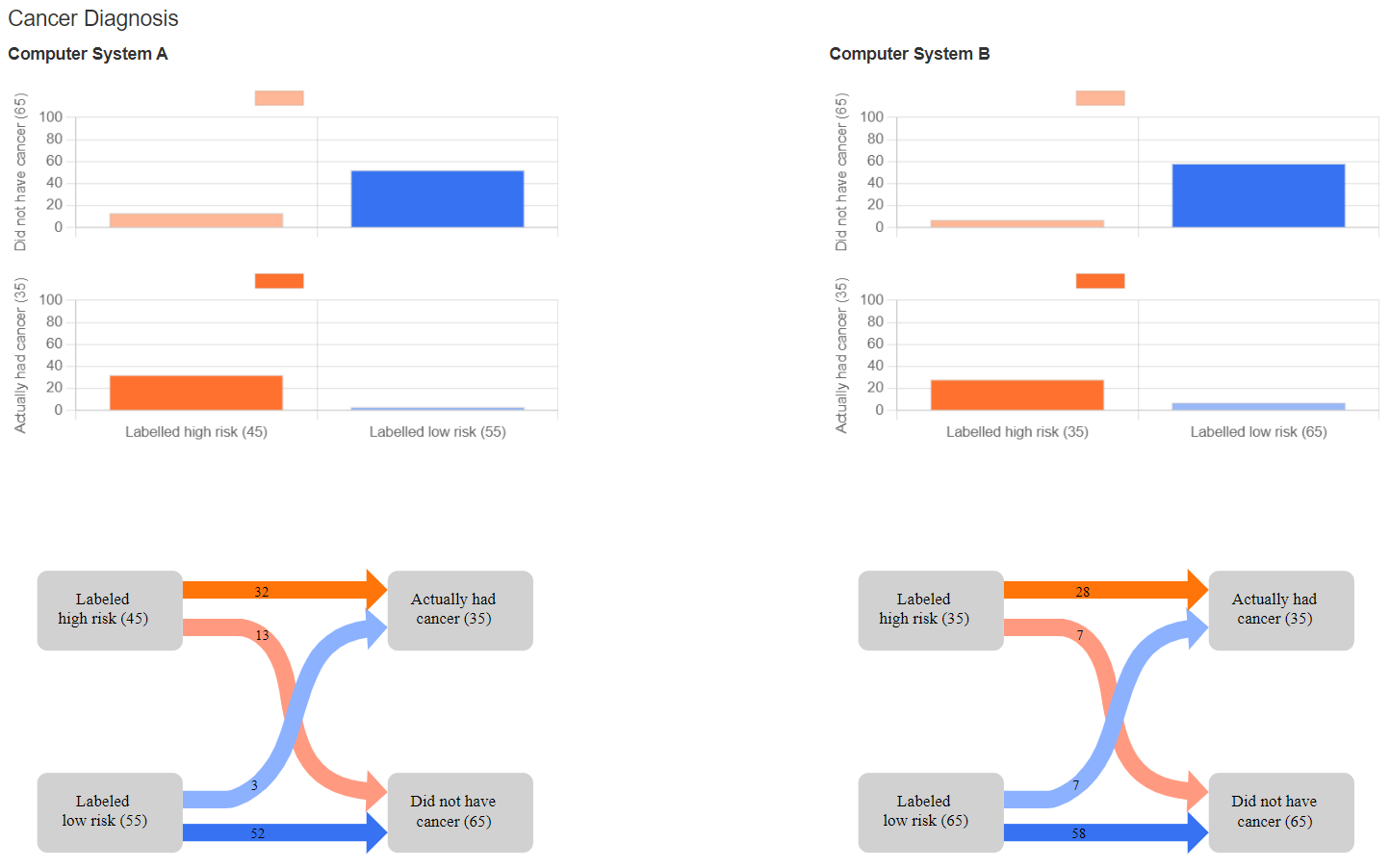}
    \includegraphics[scale=0.5]{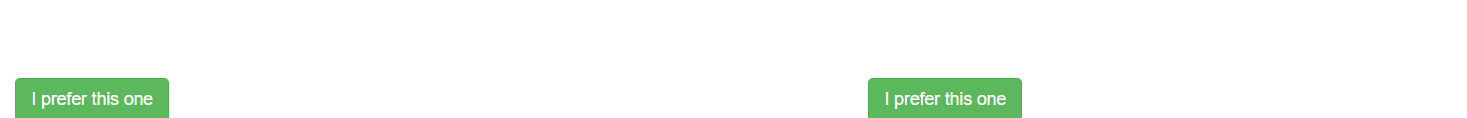}
    \caption{A sample of a pairwise comparison query from a run of the binary-search based procedure Algorithm~\ref{bin-alg:linear}.}
    \label{pme-fig:me}
\end{figure}

\subsection{Pairwise Preferences on a Random Set of Queries}
\label{pme-ssec:eval}

In order to evaluate the quality of the recovered metric, we ask the subjects fifteen pairwise comparison queries, each on a separate web page, right after the binary search algorithm  has converged, and we have elicited the metric. The subjects do not know this information and are shown evaluation queries in continuation to the previous phase (i.e., the binary search). The query comprises of two randomly selected confusion matrices that lie inside the feasible region. The confusion matrices are generated from a sphere of radius 0.1 around the center (0.35/2, 0.65/2). This set of  queries are used to evaluate the effectiveness of the elicited metric. We compute the fraction of times our elicited metric's preferences matches with the subject's preferences on these fifteen queries. A sample of a pairwise comparison query from this phase of the UI is shown in  Figure~\ref{pme-fig:eval}. We ask fifteen such queries. 

\begin{figure}
    \centering
    \includegraphics[scale=0.5]{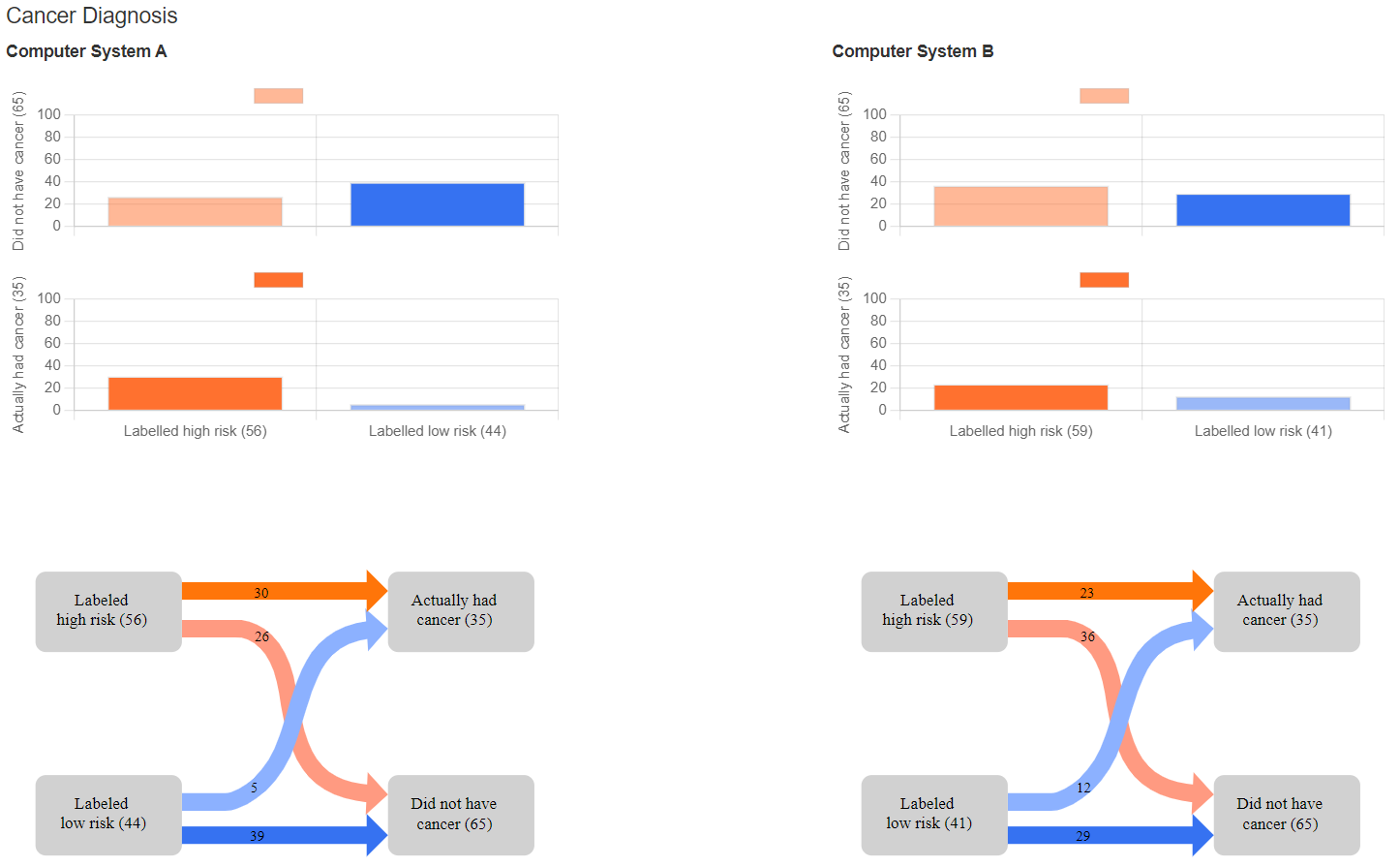}
    \includegraphics[scale=0.5]{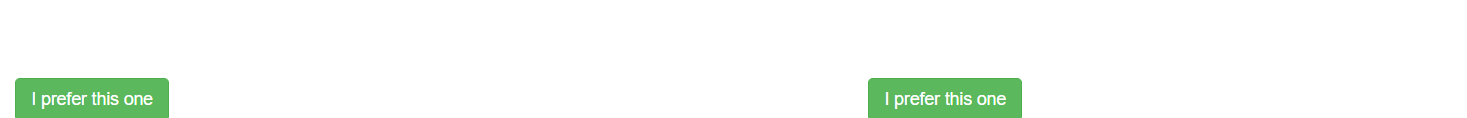}
    \caption{A sample of a pairwise comparison query comprising of randomly selected confusion matrices in the feasible region. These queries are used in evaluating the quality of the recovered metric.}
    \label{pme-fig:eval}
\end{figure}

\section{User Study}
\label{sec:userstudy}

We hired ten subjects in total for this preliminary study. The study was conducted over a video call, where the participants were asked to share the screen after they had filled the questionnaire on the first page. The distributions of the responses from the questionnaire are provided in Table~\ref{pme-tab:questionnaire}. The rest of the responses regarding the confusion matrices were over screen share and were logged in the UI. After the task was done, the web UI showed a `thank you' page and asked the subjects to close the web browser and screen share. The subjects were then asked post-task, \emph{think-aloud} interview questions, which are shown in Table~\ref{pme-tab:posttask}, to reflect on how they performed the given task. The responses from the interviews help us formulate guidelines and recommendations for future research in this direction. 

\begin{table}[t]
    \centering
    \caption{Subjects' demographics: Distribution of responses from the questionnaire. The values in parenthesis show the number of subjects. }
    \begin{tabular}{|c|c|c|c|}
    \hline
         \textbf{Age} & 25 (2) & 26 (3) & 28 (5) \\
         \textbf{Education Level} & in Graduate College (4) & Master's (3) & Doctorate (3) \\
         \textbf{ML Expertise} & None (5) & Beginner (3) & Intermediate (2) \\
         \textbf{Healthcare Knowledge} & None (5) & Some (2) & No response (3)\\
    \hline
    \end{tabular}
    \label{pme-tab:questionnaire}
\end{table}

\begin{table}[t]
    \centering
    \caption{Post-task interview questions.}
    \begin{tabular}{|p{0.035\linewidth}|p{0.91\linewidth}|}
    \hline
         \textbf{Q1} &  What do you think is worse: (a) Large number of patients that actually have cancer but are labelled as low risk by a computer system, or 
(b) Large number of patients that do not have cancer but are labelled as high risk by a computer system.
\\
         \textbf{Q2} & Could you quantify how much worse the chosen option is in comparison to the other? Why or why not? Could you quantify this personally? i.e, 10x worse for me
 \\
         \textbf{Q3} & For the questions presented in this task, how did you decide which system you would prefer your doctor to use?
 \\
         \textbf{Q4} & What was difficult about making these choices?
 \\
         \textbf{Q5} & What additional information would have helped you to make these choices?
 \\
         \textbf{Q6} & Do you have any feedback for us on your experience today? 
\\
    \hline
    \end{tabular}
    \label{pme-tab:posttask}
\end{table}

\section{Results}
\label{pme-sec:results}

In this section, we discuss results from the preliminary user-study both quantitatively and qualitatively. We will try to answer some of the practical questions that surround the metric elicitation framework as discussed in the beginning of this chapter. Specifically, we focus on checking workflow of the practical implementation, support or reject the hypothesis that the implicit user preferences can be quantified using the pairwise comparison queries, testing assumptions regarding the noise model, work around with finite samples, visualizing confusion matrices for pairwise comparisons, eliciting actual performance metrics in real-life scenarios, and evaluating the quality of the recovered metric. We emphasize that the aim behind discussing results from the user study is to formulate guidelines and recommendations for future research on practical metric elicitation. We provide these recommendations as we discuss quantitative and qualitative results and summarize them in Table~\ref{pme-tab:guidance}. 

\begin{table}[t]
    \centering
    \caption{Summary of guidelines and recommendations from the user-study.}
    \begin{tabular}{|p{0.04\linewidth}|p{0.9\linewidth}|}
    \hline
         \textbf{G1} &  Whenever possible, smoothen the query space so to run the binary-search based algorithms with reduced finite sample errors.\\
         \textbf{G2} &  Depending on the search tolerance of the binary-search, show probabilities in the confusion matrix as out-of-$n$ samples, where bigger the $n$, the better it is to differentiate between confusion matrices in a query.\\
         \textbf{G3} & The direction in the flow-chart based visualization of the confusion matrix can be swapped with total number of labels shown in the left column and total predictions on the right. \\
         \textbf{G4} & Perhaps, showing only flow-chart for pairwise comparisons is better than showing flow-chart and bar-chart together. One may also just show, the false positives and false negatives to further reduce the information load.\\
         \textbf{G5} & Measure time to respond for each query. Spending more time on queries that comprise close confusion matrices lead credence to the noise model in Definition~\ref{me-def:noise}.\\
         \textbf{G6} & The terminology ``labelled as high risk/low risk" can be replaced with ``predicted as high risk/low risk" to avoid confusions regarding ground-truth label.\\
         \textbf{G7} & In view of the post-interview question number 2, one needs to devise a UI so to ask for the intuitive guess for the false negative cost. This would also act as a baseline metric for evaluation purposes (see Section~\ref{pme-ssec:quant}).\\
         \textbf{G8} & One can also have a toggle button that shows percentages conditioned on the true classes (i.e., in addition to false positive and false negative, one can have false positive rate and false negative rate). This would aid in making comparisons.\\
         \textbf{G9} & Extend the description on cancer diagnosis and mention the associated (subjective) cost or excerpts that cover different aspects of the cost. For example, how much financial burden a false positive prediction would put on a patient, how much emotional burden would it put, what are the possible side-effects of drugs, etc. \\ 
    \hline
    \end{tabular}
    \label{pme-tab:guidance}
\end{table}

\subsection{Quantitative Results and Findings}
\label{pme-ssec:quant}

\paragraph{Impact of Smoothened Query Space and Out-of-100 Samples:} We first discuss the impact of smoothening of the upper boundary from Section~\ref{pme-ssec:dataset}. Since we choose to ask pairwise preferences over confusion matrices directly, and not over classifiers, we provided a way to generate feasible confusion matrices in Section~\ref{pme-ssec:dataset} that lie on the smoothened version of the upper boundary. As we discussed in Section~\ref{bin-ssec:realexp}, working with finite samples has a drawback that the elicitation routine can get stuck at the closest achievable confusion matrix from finite samples, which need not be optimal within the given (small) tolerance. We find that working with the smoothened version almost always avoids asking pairs that comprise same confusion matrices, and thus guaranteeing better convergence within the chosen binary-search tolerance. We also note that showing probabilities in the form of out-of-10000 or bigger samples instead of out-of-samples 100 allows us to further reduce the cases where the confusion matrices are same in a pair or the comparisons becomes trivial (e.g., same false negatives but different false positives) for the subjects.

\paragraph{Elicited Metrics and Quality Evaluation:} We next discuss the metrics that were elicited for the ten subjects using our web UI, which runs the binary-search based procedure Algorithm~\ref{bin-alg:linear} at the back end. Once the search interval is less than or equal to 0.05, the subjects were asked fifteen queries that we use for evaluation. The measure of effectiveness that we choose is the fraction of times (in \%)  our elicited metric's preferences matches with the subject's preferences over the fifteen queries, i.e., 

\begin{equation}
\Mcal := \frac{\sum_{i=1}^{15} \1[\text{subject's prefer. for query } i == \text{metric's prefer. for query } i]}{15} \times 100.
    \label{pme-eq:fraction}
\end{equation}

We show the elicited metric for the fifteen subjects and the measure $\Mcal$ values in Table~\ref{pme-tab:metrics}. We see for nine out of ten subjects that more than 85\% of the time our elicited metric's preferences matches with the subject's preferences on the fifteen evaluation queries. For three subjects, our metric's preference matches exactly for all the evaluation queries. 

\begin{table}[t]
    \centering
    \caption{The elicited linear performance metrics for the ten subjects along with the fraction of times (in \%) the elicited metric's preferences matches with the subject's preferences over the fifteen evaluation queries.}
    \begin{tabular}{|c|c|c|}
    \hline
    \textbf{Subjects} & \textbf{Linear Performance Metric} & $\Mcal$ \\
    \hline
         S1 & 0.125 \text{TN} + 0.875 \text{TP}  & 87\\
         S2 & 0.141 \text{TN} + 0.859 \text{TP}  & 100\\
         S3 & 0.125 \text{TN} + 0.875 \text{TP}  & 93\\
         S4 & 0.141 \text{TN} + 0.859 \text{TP}  & 100\\
         S5 & 0.328 \text{TN} + 0.672 \text{TP}  & 73\\
         S6 & 0.031 \text{TN} + 0.969 \text{TP}  & 87\\
         S7 & 0.031 \text{TN} + 0.969 \text{TP}  & 100\\
         S8 & 0.359 \text{TN} + 0.641 \text{TP}  & 87\\
         S9 & 0.125 \text{TN} + 0.875 \text{TP}  & 93\\
         S10 & 0.141 \text{TN} + 0.859 \text{TP}  & 87\\
    \hline
    \end{tabular}
    \label{pme-tab:metrics}
\end{table}

The absolute numbers for the $\Mcal$ measure look good; however, how good they are is still a missing piece in this study because of the lack of a baseline. In future, we plan to devise ways to develop a baseline for the metric elicitation task and compare to that baseline on the measure $\Mcal$.

\subsection{Qualitative Feedback}
\label{pme-ssec:qual}

We first describe the general feedback that was observed and discussed with the subjects during the user study over the video sessions. We formulate some guidelines from this feedback. We then mention a few excerpts from the post-task interviews again formulating some recommendations for practical metric elicitation. 

\paragraph{Observations during Study Sessions:} Similar to the observation by Shen et al.~\cite{shen2020designing}, in our user study as well, we also noted that subjects were not very comfortable with answering the \emph{simulation}-based questions (see Figure~\ref{pme-fig:q3}). A possible reason is that the direction of the flow-chart is opposite to the conditioning of probability that is asked in those questions. Bar-chart allows them to answer this question easily; however, we find that by this point in the UI, the subject becomes more comfortable with using the flow-chart. Some users when asked in the post-interview session also mentioned that this could help them better in the pairwise comparison, too.

While comparing confusion matrices in the UI, we observed that after a few rounds, the subjects tend to look at only the flow-charts for comparison. This may mean showing the bar-charts and flow-charts together is overwhelming, and perhaps only the flow-charts are enough. After a few more rounds, some subjects started comparing only flow of  false positives and false negatives in the flow-chart. This suggests that one may further reduce the information load by showing only false positives and false negatives in the flow chart.

Although, we do not quantitatively measure \emph{time to respond} in this version of the UI, but we did observe that the subjects tend to take more time while comparing two confusion matrices that are close (i.e., the queries in the later part of the binary search when the search interval is narrow). This means that the subjects are more prone to make errors for such queries, leading credence to the noise model in Definition~\ref{me-def:noise} that is used in this manuscript throughout. 

Lastly, during the study, we found that some subjects, who were familiar with machine learning, confused the terminology ``labelled as high risk/low risk" for predictions to the ground-truth labels. One suggestion is to replace the word ``labelled" with ``predicted".

\paragraph{Post-task Interview Sessions:} We now discuss post-task interviews and formulate some guidelines. We also mention some excerpts (anonymously) from the interviews. Please see Table~\ref{pme-tab:posttask} for the interview questions.

\textbf{Q1.} Every subject clearly figured out the direction of the costs and mentioned that (in the words of S1), \emph{``a patient who has cancer but was predicted as low risk is a costlier mistake than a patient who does not have cancer but was predicted as high risk."}

\textbf{Q2.} None of the subjects could answer this question with full confidence. This acts as a testimony to the importance of the metric elicitation framework. Often, practitioners make a guess to quantify the asymmetric costs in class-imbalanced learning; however, the guess may be far from innate costs of the practitioner. The subjects agreed that it is easier to compare two confusion matrices using the proposed visualizations than to answer this question. 

\textbf{Q3.} Most of the subjects mention that they preferred the one where false negatives were less. Although some subjects looked at the trade-off, for example, (in the words of S2) \emph{``I was trying to minimize the false negatives but not when very large number of false positives were there."} This reflects that some subjects had to think hard about the trade-offs.

\textbf{Q4.} The subjects mention that deciding on the trade-offs between false positives and false negatives was difficult. (In words of S6) \emph{``It was difficult to pick a preference where both false positives and false negatives needed to be compared"}. Some subjects also mentioned that, (in words of S4), \emph{``In some cases, numbers are really close; thus, it becomes difficult to select one of them"}. This feedback certainly agrees with the choice of the noise model in this  manuscript (see Definition~\ref{me-def:noise}). 

\textbf{Q5.} The responses to this question were important for constructing the guidelines, and this question had varied responses. One subject mentioned that having false positive rate and false negative rate, in addition to false positives and false negatives, would be helpful in making comparisons. (In words of S1), \emph{``One can have percentages on the arrow conditioned on the samples in the box from which they are flowing."} Similarly, some subjects mentioned that it would have been easier to compare if the stages of cancer were mentioned in the predictions; the different stages would have lead to difference preferences. Some subjects quote that some description of the associated costs or excerpts that cover different aspects of the cost, at least subjectively should be described in the beginning of the study. For example, (in words of S2), \emph{``how much financial burden a false positive prediction would put on a patient, how much emotional burden would it put, what are the possible side-effects of drugs, etc. should be highlighted in the beginning."}  

\textbf{Q6.} Most subjects enjoyed the exercise and liked the web UI. Some subjects mentioned that the task allowed them to reflect closely on some important questions regarding performance metrics in machine learning.

\section{Concluding Remarks}
\label{pme-sec:conclusion}

We created a web user-interface (UI) to practically elicit (linear) performance metrics with real users in a binary classification setup. We chose cancer diagnosis as the task domain, because it involves asymmetric costs for false positives and false negatives. We build upon existing visualizations of confusion matrices that are refined to capture preferences over pairwise comparisons. Via this user-study, we demonstrated an implementation of the binary performance metric elicitation procedure from Chapter~\ref{chp:binary} that make use of the real-time user responses over pairwise comparisons of confusion matrices. We also proposed and implemented an evaluation scheme to judge the quality of the recovered metric. 

Using the proposed web UI, we then conducted a preliminary user study with ten subjects and elicited their linear performance metrics. We also compared the quality of the recovered metric  by comparing their responses to the elicited metric's responses over a set of randomly chosen pairwise comparison queries. The study also included a post-task, \emph{think-aloud}-style interviews regarding the utility of the framework. Using the task results and the feedback during the post-task interviews, we presented guidelines and recommendations for practical implementation of the ME framework. In the future, we plan to build upon this pilot study and conduct a comprehensive user study that includes the guidelines presented in this chapter with more subjects. We also plan to extend the current web UI to elicit metrics in the multiclass classification setup.

\chapter{Conclusion and Future Work}
\label{chp:conclusion}

Typical default metrics in machine learning, such as accuracy applied to classification tasks, may not capture tradeoffs relevant to the problem at hand. Thus, optimizing such default metrics can have an undesirable impact on short and long-term utility, including the fairness of the resulting predictions across sensitive subgroups since the same issues plague default fairness
measures. In this thesis, we formalized the problem of \emph{Metric Elicitation (ME)} and proposed it as a principled framework for determining supervised classification metrics from user feedback. Through theoretical and empirical avenues, we showed that under certain conditions metric elicitation is equivalent to learning preferences between pairs of classifier statistics. 

When the underlying metric is linear in the binary classification setup, we proposed an elicitation strategy to recover the oracle's metric, whose query complexity decays logarithmically with the desired resolution. We also showed that our query-complexity rates match the lower bound.  We further extended our strategies to eliciting linear-fractional binary classification performance metrics. 

We then broadened the scope of metric elicitation by proposing ME strategies for the more complicated multiclass classification setting. We proposed two algorithms for multiclass classification metric elicitation that use multiple binary-search subroutines that recover the oracle's linear metric. One of the proposed algorithms assumes that the oracle's metric is dependent on only the diagonal entries of the confusion matrices (a unique sparsity condition on the metric), and thus is useful when the number of classes is large. Similar to the binary case, we further provided algorithms for eliciting  linear-fractional multiclass classification performance metrics. 

With respect to applications to fairness, we  devised a novel strategy to elicit group-fair performance metrics for multiclass classification problems with multiple sensitive groups that also includes selecting the trade-off between predictive performance and fairness violation. The procedure exploited the \emph{piecewise} linearity of the metric in group-specific predictive rates, used binary-search based subroutines, and recovered the metric with linear query complexity. It was interesting to note that we were able to elicit a non-linear metric while maintaining the same query complexity order (linear in the number of unknowns) as the linear elicitation case. 

We then used the tools and geometric characterizations build so far to solve three important problems that benefit the practical aspects of the proposed ME framework. The first involved increasing the complexity of the elicited metrics. The second was to exploit the current linear elicitation framework so to train deep neural networks for optimizing black-box metrics. The third was to conduct real-user study in order to elicit real-user metrics and reflect on the practical nuances of the ME framework. We draw out conclusions from each of these applications below.

The ME strategies for linear or quasi-linear functions of classifier statistics, can be restrictive in domains  where the metrics are more complex and nuanced. 
Thus, we proposed novel strategies for eliciting metrics defined by \emph{quadratic} functions of classifier statistics, which can easily be applied to fair metric elicitation setups as well. We were thus able to handle a more general family of metrics that can better capture a practitioner's innate preferences. We further generalized quadratic elicitation strategy to higher-order polynomial functions. All our metric elicitation procedures were shown to be robust to both  finite sample and oracle feedback noise. 

We then considered learning to optimize a classification metric defined by a black-box function of the confusion matrix. We proposed the Frank Wolfe with Elicited Gradient (FW-EG) method for optimizing black-box metrics given query access to the evaluation metric on a small validation set. Our framework included common distribution shift settings as special cases, and unlike prior distribution correction strategies, was able to handle general non-linear metrics. We showed how to model and estimate the example weights, but more importantly, we exploited the fact that the example weights can be seen as a gradient for the metric and estimated through metric elicitation procedure  in the presence of a \emph{machine} oracle. Experiments on various label noise, domain shift, and fair classification setups confirmed that our proposal compares favorably to the state-of-the-art baselines for each application. We briefly discussed how this procedure can be extended to optimize black-box metrics in the presence of a  \emph{human} oracle providing pairwise comparison feedback.

 Lastly, we created a web UI for eliciting binary classification performance metrics that incorporates enhanced visualizations of confusion matrices for obtaining pairwise feedback. We then conducted a preliminary user-study in the binary classification setup in order to elicit real-users' performance metrics. In the process, we touched upon several practical aspects related to ME. In particular, we focused on checking workflow of the practical implementation, found support for the hypothesis that the implicit user preferences can be quantified using pairwise comparison queries, tested assumptions regarding the noise model, worked around with finite samples, elicited actual performance metrics in real-life scenarios, and evaluated the quality of the recovered metric. Using the quantitative and qualitative results from the pilot study, we formulated several guidelines and recommendations for practically implementing the metric elcitiation framework.
 
 \begin{figure}[t]
	\centering 
		\includegraphics[scale=0.6]{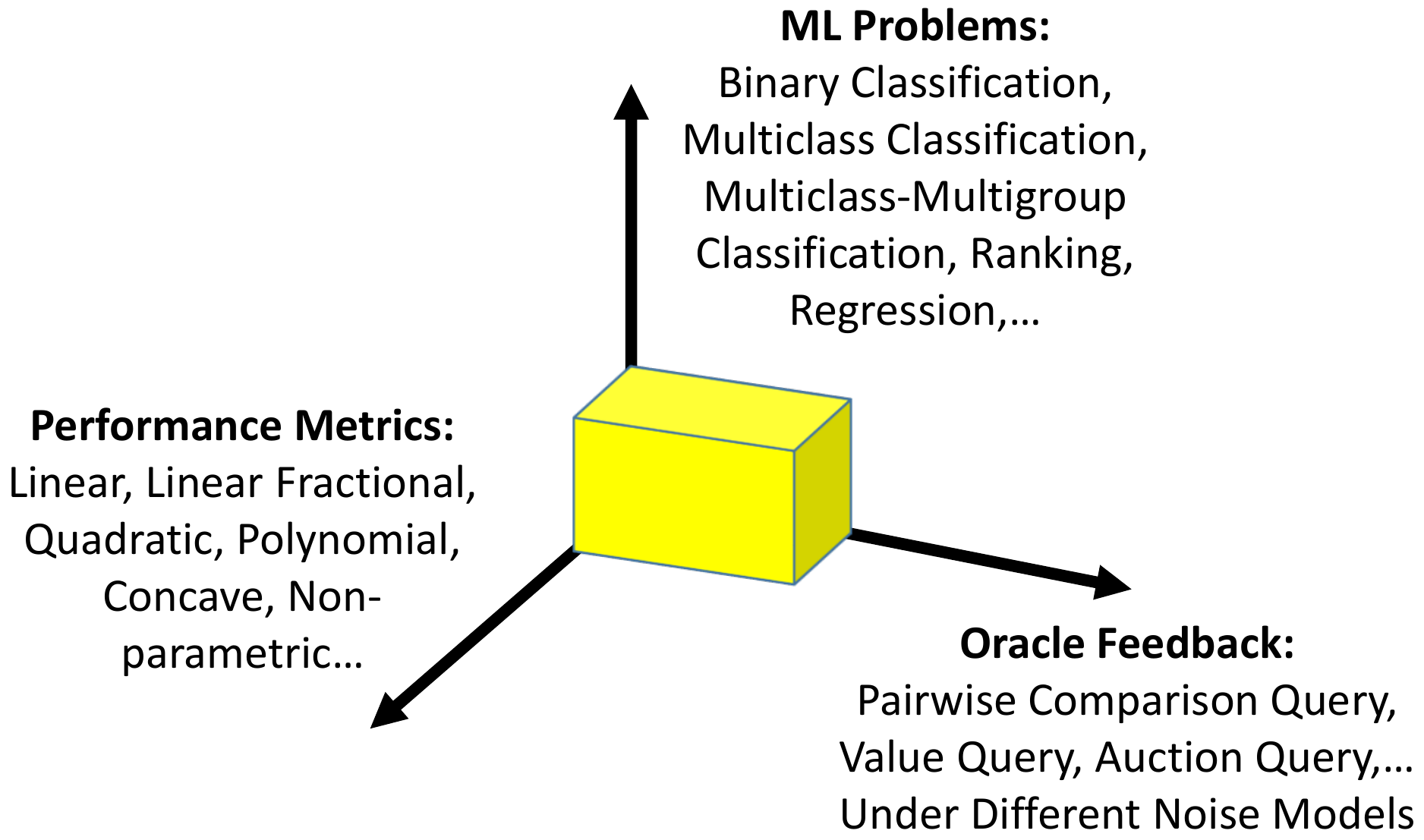}
		\vspace{-0.1cm}
	\caption{\emph{Metric Elicitation for Predictive Machine Learning - Vision:} The three axes show three different nuances of metric elicitation. The first axis contain different predictive machine learning problems. On the second axis, there are various forms of performance metrics that can be elicited. Several oracle feedback and noise models lie on the third axis. This thesis provides solution to the box (shown in yellow color) covering a few parts of the larger problem of metric elicitation.}
	\label{fig:thesis}
	\vskip -0.1cm
\end{figure}
 
We envision the problem of \emph{metric elicitation} to be an important, interesting, and challenging topic for the future with many practical applications in the broad field of artificial intelligence. The underlying space of open problems can be broken into three separate axes. The axes are shown in Figure~\ref{fig:thesis}. On the first axis, there are different predictive machine learning problems such as classification, regression, ranking, etc. Each type of predictive problem involves new frontiers to be explored and exploited like we have done in this manuscript. For example, to elicit ranking metrics, one may require a thorough understanding of the space of statistics that summarize ranking effects. On the second axis, one may deal with various functional forms of performance metrics that can be elicited. Currently, we have focused on eliciting quasi-linear and polynomial functions of classifier statistics. Metric elicitation becomes much more challenging yet more practical when the functional forms are not assumed. The third axis stretches to different forms of oracle queries including various noise models. This direction guarantees the applicability of metric elicitation for real-world scenarios. The expected contribution in the future would be to solve the entire space of problems comprising the three axes, which may then result in a separate sub-field of artificial intelligence under the name -- \emph{Metric Elicitation for Predictive Machine Learning.} Once the metrics are elicited, sophisticated methods may be created to optimize those metrics similar to Chapter~\ref{chp:blackbox}. Thus this entire line of work will answer important open questions in machine learning, impact several multi-disciplinary applications, and transform the way machine learning systems are deployed in practice. 






%

\appendix

\chapter{Binary Classification Performance Metric Elicitation}
\label{apx:binary}
\section{Visualizing the Set of Confusion Matrices}
\label{appendix:visualization}
To clarify the geometry of the feasible set, we visualize one instance of the set of confusion matrices $\Ccal$ using the dual representation of the supporting hyperplanes. The steps are:
\benumerate[wide, labelwidth=!, labelindent=0pt]
\item \emph{Population Model:} We assume a joint probability for $\Xcal = [-1,1]$ and $\Ycal = \{0, 1\}$ given by
\begin{ceqn}
\begin{equation}
f_X = \Umbb[-1,1] \quad \text{and} \quad \eta(x) = \frac{1}{1 + e^{ax}},
\label{prob-dist}
\end{equation}
\end{ceqn}
where $\Umbb[-1,1]$ is the uniform distribution on $[-1, 1]$ and $a>0$ is a parameter controlling the degree of noise in the labels. If $a$ is large, then with high probability, the true label is $1$ on [-1, 0] and $0$ on [0, 1]. On the contrary, if $a$ is small, then there are no separable regions and the classes are mixed in $[-1,1]$.  

Furthermore, the integral $\int_{-1}^{1} \frac{1}{1 + e^{ax}}dx = 1$ for $a \in \Rmbb$ implying $ \Pmbb(Y = 1) = \zeta = \frac{1}{2} \; \forall \; a \in \Rmbb$.
\item \emph{Generate Hyperplanes:} Take $\theta \in [0, 2\pi]$ and set $\mmbf = (m_{11}, m_{00}) = (\cos\theta, \sin\theta)$. Let us denote $x'$ as the point where the probability of positive class $\eta(x)$ is equal to the optimal threshold of Proposition \ref{pr:bayeslinear}. Solving for $x$ in the equation $1/(1 + e^{ax}) = m_{00}/(m_{00} + m_{11})$ gives us
\begin{ceqn}
\begin{align}
x' &= \Pi_{[-1, 1]} \big\{\tfrac{1}{a}\ln\big(\tfrac{m_{11}}{m_{00}}\big)\big\},
\end{align}
\end{ceqn}
where $\Pi_{[-1,1]} \{z\}$ is the projection of $z$ on the interval $[-1,1]$. If $m_{11} + m_{00} \geq 0$, then the Bayes classifier $\hbar$ predicts class $1$ on the region $[-1, x']$ and $0$ on the remaining region. If $m_{11} + m_{00} < 0$, $\hbar$ does the opposite. Using the fact that $Y | X$ and $\hbar | X$ are independent, we have that
\begin{enumerate}[wide, labelwidth=!, labelindent=0pt]
	\item if $m_{11} + m_{00} \geq 0$, then 
	\bequation
	\oline{TP}_\mmbf = \frac{1}{2} \textstyle \int\limits_{-1}^{{x'}} \frac{1}{1 + e^{ax}}dx, \qquad \oline{TN}_\mmbf = \frac{1}{2} \int\limits_{{x'}}^{1} \frac{e^{ax}}{1 + e^{ax}}dx.
	\eequation
	\item if $m_{11} + m_{00} < 0$, then 
	\bequation
	\oline{TP}_\mmbf = \frac{1}{2} \textstyle\int\limits_{{x'}}^{1} \frac{1}{1 + e^{ax}}dx, \qquad \oline{TN}_\mmbf = \frac{1}{2} \int\limits_{-1}^{{x'}} \frac{e^{ax}}{1 + e^{ax}}dx.
	\eequation

Now, we can obtain the hyperplane as defined in \eqref{eq:support} for each $\theta$. 
We sample around thousand $\theta 's \in [0, 2\pi]$ 
randomly. We then obtain the hyperplanes following the above process and plot them.
\end{enumerate}

\begin{figure*}[t]
	\centering 
	\subfigure[a = 0.5]{
		{\includegraphics[width=5cm]{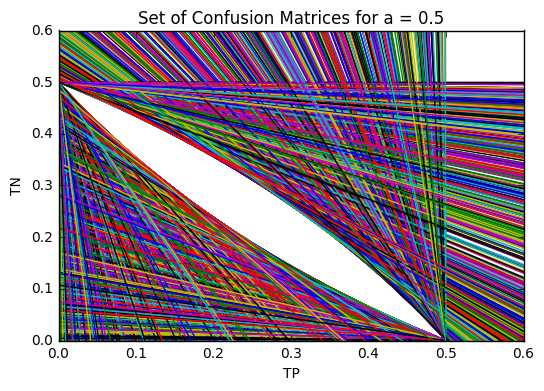}}
		\label{fig:cf_a_0_5}
	}
	\subfigure[a = 1]{
		{\includegraphics[width=5cm]{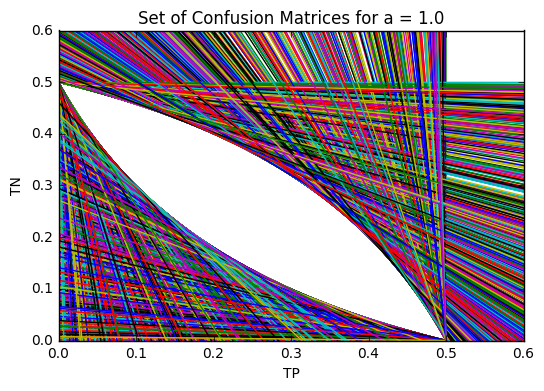}}
		\label{fig:cf_a_1}
	}
	\subfigure[a = 2]{
		{\includegraphics[width=5cm]{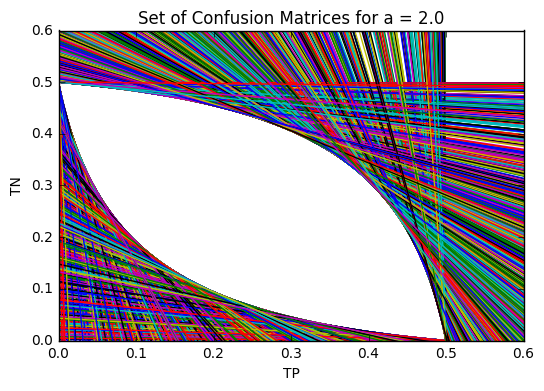}}
		\label{fig:cf_a_3}
	}
	\subfigure[a = 5]{
		{\includegraphics[width=5cm]{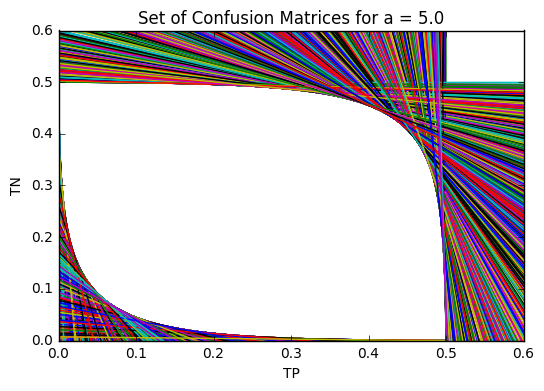}}
		\label{fig:cf_a_5}
	}
	\subfigure[a = 10]{
		{\includegraphics[width=5cm]{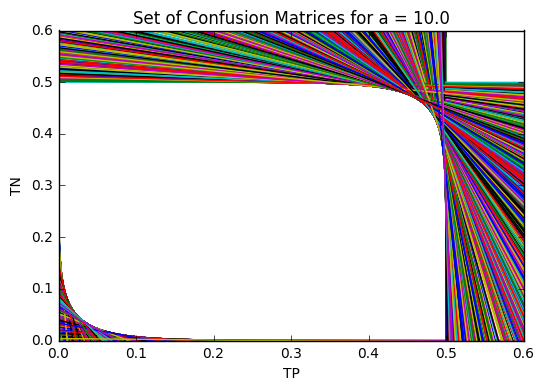}}
		\label{fig:cf_a_10}
	}
	\subfigure[a = 50]{
		{\includegraphics[width=5cm]{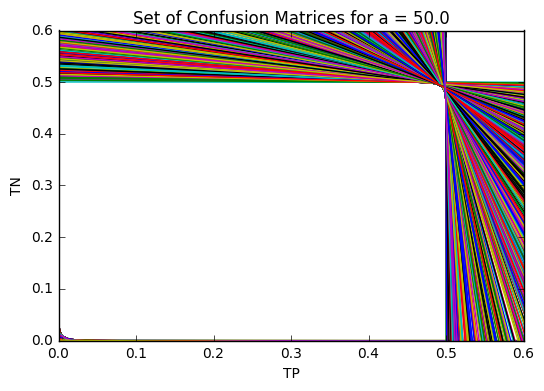}}
		\label{fig:cf_a_50}
	}
	\caption{Supporting hyperplanes and associated set of feasible confusion matrices for exponential model described in equation~\eqref{prob-dist} with $a = 0.5, 1, 2, 5, 10$ and $50$. The middle white region is $\Ccal$, which is the intersection of half-spaces associated with its supporting hyperplanes.}
	\label{fig:fs_cm}
	\end{figure*}
	
The sets of feasible confusion matrices $\Ccal$'s for $ a = 0.5, 1, 2, 5, 10$, and $50$ are shown in Figure \ref{fig:fs_cm}. The middle white region is $\Ccal$: the intersection of the half-spaces associated with its supporting hyperplanes. 
The curve on the right corresponds to the confusion matrices on the upper boundary $\partial\Ccal_+$. Similarly, the curve on the left corresponds to the confusion matrices on the lower boundary $\partial\Ccal_-$. Points $(\zeta, 0) = (\frac{1}{2}, 0)$ and $(0, 1 - \zeta) = (0, \frac{1}{2})$ are the two vertices. The geometry is 180-degree rotationally symmetric around the center point $(\frac{1}{4}, \frac{1}{4})$, which corresponds to the confusion matrix of the uniform random classifier, i.e., the classifier which predicts both classes with equal probability for any input.

Notice that as we increase the separability of the two classes via $a$, all the points in $[0, \zeta] \times [0, 1-\zeta]$ becomes feasible. In other words, if the data is completely separable, then the corners on the top-right and the bottom left are achievable. If the data is `inseparable', then the feasible set contains only the diagonal line joining $(0,\frac{1}{2})$ and $(\frac{1}{2},0)$, which passes through $(\frac{1}{4},\frac{1}{4})$.   
\eenumerate

\section{Proofs}\label{appendix:proofs}
\newcommand{\tp}{TP}
\newcommand{\tn}{TN}
\newcommand{\df}{\,\mathrm df_X}
\newcommand{\dfz}{\,\mathrm df_Z}
\begin{lemma}\label{lem:properties-C}
The feasible set of confusion matrices $\mathcal C$ has the following properties:
	\renewcommand{\theenumi}{(\roman{enumi})}
	\begin{enumerate}[leftmargin=1cm]
	\item For all $(\tp,\tn)\in \mathcal C$, $0\leq \tp\leq \zeta$, and $0\leq \tn\leq 1-\zeta$.
	\item $(\zeta,0)\in\mathcal C$ and $(0,1-\zeta)\in \mathcal C$.
	\item For all $(\tp,\tn) \in \mathcal C$, $(\zeta-\tp, 1-\zeta-\tn)\in \mathcal C$.
	\item $\mathcal C$ is convex.
	\item $\mathcal C$ has a supporting hyperplane associated to every normal vector. 
	\item Any supporting hyperplane with positive slope is tangent to $\mathcal C$
	at $(0,1-\zeta)$ or $(\zeta,0)$.
	\end{enumerate}
\end{lemma}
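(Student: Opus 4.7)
Properties (i)--(iv) follow from direct constructions. For (i), since $\{Y=1,h=1\}\subseteq\{Y=1\}$ we get $TP\leq\zeta$, and analogously $TN\leq 1-\zeta$; non-negativity is obvious. For (ii), the trivial classifier $h\equiv 1$ gives $TP=\zeta,TN=0$, and $h\equiv 0$ gives $TP=0,TN=1-\zeta$. For (iii), given $h$ achieving $(TP,TN)$, the ``flipped'' classifier $h'=1-h$ satisfies $TP(h')=\mathbb P(Y=1,h=0)=\zeta-TP(h)$ and $TN(h')=\mathbb P(Y=0,h=1)=(1-\zeta)-TN(h)$, so $(\zeta-TP,1-\zeta-TN)\in\mathcal C$. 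For (iv), given $h_1,h_2\in\mathcal H$ and $\lambda\in[0,1]$, the randomized classifier $h_\lambda=\lambda h_1+(1-\lambda)h_2\in\mathcal H$ has confusion matrix $\lambda C(h_1)+(1-\lambda)C(h_2)$ by linearity of expectation, which proves convexity.

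For (v), I need to show that every direction $\mathbf m=(m_{11},m_{00})\in\mathbb R^2$ admits a supporting hyperplane. Since $\mathcal C\subseteq[0,\zeta]\times[0,1-\zeta]$ is bounded by (i), it suffices to exhibit, for each $\mathbf m$, a point $\bar C\in\mathcal C$ maximizing $\langle\mathbf m,C\rangle$ over $\mathcal C$. The plan is to use Proposition~3.3, which provides a thresholded Bayes classifier $\bar h$ of the form $\mathbf 1[\eta(x)\geq\delta]$ or $\mathbf 1[\delta\geq\eta(x)]$ (depending on the sign of $m_{11}+m_{00}$) that is optimal among \emph{all} classifiers in $\mathcal H$. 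The associated confusion matrix $\bar C=C(\bar h)$ then satisfies $\langle\mathbf m,\bar C\rangle\geq\langle\mathbf m,C\rangle$ for all $C\in\mathcal C$, so $\{C:\langle\mathbf m,C\rangle=\langle\mathbf m,\bar C\rangle\}$ is a supporting hyperplane with outward normal $\mathbf m$. Under Assumption~2.1, $\eta(X)=\delta$ has probability zero, so the choice of strict vs.\ non-strict inequality in the threshold is immaterial.

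For (vi), a hyperplane with positive slope in the $(tp,tn)$-plane corresponds to a normal vector $\mathbf m$ in which $m_{11}$ and $m_{00}$ have opposite signs. The main step is to show the optimum of $\langle\mathbf m,C\rangle$ in this regime is attained only at a trivial classifier. Suppose $m_{11}>0$ and $m_{00}<0$; then for any $C=(TP,TN)\in\mathcal C$, monotonicity gives $m_{11}TP+m_{00}TN\leq m_{11}\zeta+m_{00}\cdot 0$, since $TP\leq\zeta$ and $TN\geq 0$, with equality at $(\zeta,0)\in\mathcal C$ by (ii). Thus the supporting hyperplane is tangent at $(\zeta,0)$. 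The symmetric case $m_{11}<0,m_{00}>0$ yields tangency at $(0,1-\zeta)$. Combined with (ii), this establishes the claim.

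The main obstacle is ensuring (v) is properly justified: naively one would like closedness of $\mathcal C$, but that requires technical care in the infinite-dimensional classifier space; the cleanest workaround is to invoke Proposition~3.3 to produce an attaining maximizer for each direction directly, bypassing topological closure arguments.
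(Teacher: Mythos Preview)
Your arguments for (i)--(iv) match the paper's essentially verbatim, and your treatment of (vi) via direct monotonicity (using $TP\leq\zeta$ and $TN\geq 0$ to force the optimum of an opposite-sign linear functional to a corner) is correct and in fact more explicit than the paper's one-line geometric remark that any bounded convex subset of $[0,\zeta]\times[0,1-\zeta]$ containing both corners must have its positively-sloped supporting hyperplanes tangent at those corners.

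For (v), however, you take a needlessly elaborate route. The paper dispatches (v) in one line: it follows from convexity (iv) and boundedness (i), since for any normal $\mathbf m$ the value $\sup_{C\in\mathcal C}\langle\mathbf m,C\rangle$ is finite and defines a supporting hyperplane. Your concern about closedness is largely unfounded for the existence claim in (v); whether the supremum is attained is a separate matter (handled later in the paper via Lemma~A.2 and Proposition~3.1). More importantly, your detour through Proposition~3.3 creates a logical-ordering issue: in the paper, Proposition~3.3 is proved \emph{after} Lemma~A.1, and its proof explicitly invokes Lemma~A.1(vi) (and, through Lemma~A.2, also parts (i) and (iv)). Since your proof of (vi) is self-contained this circularity can be resolved by reordering, but the paper's direct argument is considerably cleaner and avoids the dependency altogether.
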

\begin{proof} We prove the statements as follows: 

\begin{enumerate}[label=(\roman*)., leftmargin=1cm]
\item $0\leq \Pmbb[h=Y=1]\leq \Pmbb[Y=1]=\zeta$, and similarly, $0\leq \Pmbb[h=Y=0]\leq \Pmbb[Y=0]=1-\zeta$.

\item If $h$ is the trivial classifier which always predicts $1$, then  $\tp(h)=\Pr[h = Y=1] = \Pr[Y=1]=\zeta$, and $\tn(h)=0$. This means that $(\zeta, 0) \in \Ccal$. Similarly, if $h$ is the classifier which always predicts 0, then $\tp(h)=\Pr[h =Y=1] = 0$, and $\tn(h)=\Pr[h = Y=0] = \Pr[Y=0]= 1 - \zeta$. Therefore, $(0, 1 - \zeta) \in \Ccal$. 

\item Let $h$ be a classifier such that $\tp(h)=\tp$, $\tn(h)=\tn$. Now, consider the classifier $1 - h$ (which predicts exactly the opposite of $h$). We have that

\begin{align}
	\tp(1-h)&=\Pmbb[(1-h)=Y=1] \nonumber \\ &=\Pmbb[Y=1]-\Pmbb[h=Y=1] \nonumber \\
	&=\zeta-\tp(h).
\end{align}
\vspace{-0.2cm}
A similar argument gives
\bequation
\tn(1-h)=1-\zeta-\tn(h).
\eequation

\item Consider any two confusion matrices $(\tp_1,\tn_1),\,(\tp_2,\tn_2)\in \mathcal C$, attained by the classifiers $h_1, h_2 \in \Hcal$, respectively. Let $0\leq \lambda\leq 1$. Define a classifier $h'$ which predicts the output from the classifier $h_1$ with probability $\lambda$ and predicts the output of the classifier $h_2$ with probability $1 - \lambda$. Then,

	\begin{align}
		\tp(h')&=\Pmbb[h'=Y=1] \nonumber \\
		&=\Pmbb[h_1=Y=1|h=h_1]\Pmbb[h=h_1] +  \Pmbb[h_2=Y=1|h=h_2]\Pmbb[h=h_2]\\
		&=\lambda\tp(h_1)+(1-\lambda)\tp(h_2).
	\end{align}
	A similar argument gives the convex combination for $\tn$. Thus, $\lambda(\tp(h_1),\tn(h_1)) +(1-\lambda)(\tp(h_2), \tn(h_2)) \in \Ccal$ and hence, $\Ccal$ is convex.
\item This follows from convexity (iv) and boundedness (i). 

\item For any bounded, convex region in $[0,\zeta]\times [0,1-\zeta]$ which contains the points $(0,\zeta)$ and $(0,1-\zeta)$, it is true that any positively sloped supporting hyperplane will be tangent to $(0,\zeta)$ or  $(0,1-\zeta)$.
\end{enumerate}
\end{proof}

\begin{lemma}\label{lem:max-at-bdy}	
	The boundary of $\mathcal C$ is exactly the confusion matrices of 
	estimators of the form $\lambda \1[\eta(x)\geq t] + (1-\lambda)\1[\eta(x)>t]$ and 
    $\lambda \1[\eta(x)< t] + (1-\lambda)\1[\eta(x)\leq t]$ for some $\lambda, t \in [0, 1]$.
\end{lemma}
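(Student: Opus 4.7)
The proof has two directions. For the \emph{forward direction} (classifiers of the stated form give boundary points), the idea is that any $h_{t,\lambda} = \lambda\1[\eta(x)\geq t]+(1-\lambda)\1[\eta(x)>t]$ is Bayes optimal for the linear metric $(1-t)\tp + t\,\tn$: pointwise, one strictly prefers predicting $1$ on $\{\eta>t\}$ and $0$ on $\{\eta<t\}$, while on the tie set $\{\eta=t\}$ either choice achieves the same contribution. Hence both $\1[\eta\geq t]$ and $\1[\eta>t]$ attain the max, and so does any convex combination (in the randomized sense given by Lemma~A.2.1(iv)). Since a maximizer of a nonzero linear functional over the closed convex set $\mathcal{C}$ lies on $\partial\mathcal{C}$, we get $C(h_{t,\lambda})\in\partial\mathcal{C}$. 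An analogous pointwise argument handles $\lambda\1[\eta<t]+(1-\lambda)\1[\eta\leq t]$, which is Bayes optimal for $-((1-t)\tp + t\,\tn)$. The endpoint thresholds $t\in\{0,1\}$ recover the trivial classifiers and hence the two vertices $(\zeta,0)$ and $(0,1-\zeta)$ from Lemma~A.2.1(ii).

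For the \emph{reverse direction}, fix $(\tp^*,\tn^*)\in\partial\mathcal{C}$. By Lemma~A.2.1(iv)--(v), $\mathcal{C}$ is closed and convex, so the supporting hyperplane theorem gives $(m_{11},m_{00})\neq(0,0)$ with $m_{11}\tp + m_{00}\tn \leq m_{11}\tp^* + m_{00}\tn^*$ for all $(\tp,\tn)\in\mathcal{C}$. Split on the sign of $s := m_{11}+m_{00}$. If $s>0$, set $t = m_{00}/s$; the pointwise Bayes analysis shows that a classifier attains this supporting value iff it predicts $1$ on $\{\eta>t\}$, $0$ on $\{\eta<t\}$, and is arbitrary on the tie set $T_t=\{\eta=t\}$. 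Consequently the achieved confusion matrices form a face $F$ of $\mathcal{C}$ whose two extreme points are $C(\1[\eta>t])$ and $C(\1[\eta\geq t])$, differing by $(p_t,-q_t)$ with $p_t=\P[Y=1,\eta=t]$, $q_t=\P[Y=0,\eta=t]$. For any $\lambda\in[0,1]$, the randomized classifier $h_{t,\lambda}$ predicts $1$ on $T_t$ with probability $\lambda$, yielding $C(h_{t,\lambda})=\lambda C(\1[\eta\geq t])+(1-\lambda)C(\1[\eta>t])$, which traces the entire segment $F$. Hence $(\tp^*,\tn^*)$ is realized by some $h_{t,\lambda}$, with $t$ projected to $[0,1]$ in the edge cases $t\leq 0$ (predict $1$ everywhere, giving $(\zeta,0)$) and $t\geq 1$ (predict $0$ everywhere, giving $(0,1-\zeta)$). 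The case $s<0$ is symmetric and yields the second parametric family $\lambda\1[\eta<t]+(1-\lambda)\1[\eta\leq t]$. The degenerate case $s=0$ produces a trivial classifier (vertex), which is already covered.

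\textbf{Main obstacle.} The non-trivial step is the reverse direction when the tie set $T_t$ has positive mass: then the supporting hyperplane touches $\mathcal{C}$ along a genuine edge, not a single point, and we must show \emph{every} point of this edge is realized by a classifier in our parametric family, not just the two extremes. The key observation is that allowing the randomization parameter $\lambda$ to range over $[0,1]$ exactly interpolates the two boundary classifiers $\1[\eta\geq t]$ and $\1[\eta>t]$ (which are the only two deterministic Bayes optimal choices), and by the linearity of $(\tp,\tn)$ in the randomization (cf.~Lemma~A.2.1(iv)), this parametrization sweeps the edge continuously. The secondary care needed is the book-keeping across the three sign regimes of $s=m_{11}+m_{00}$ to ensure all of $\partial\mathcal{C}$ (including both vertices, the arc $\partial\mathcal{C}_+$, and the arc $\partial\mathcal{C}_-$) is covered by the two stated families. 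No appeal to Assumption~3.1.2 is required, since the randomization $\lambda$ absorbs any possible point mass of $\eta(X)$.
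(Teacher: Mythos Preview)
Your approach via supporting hyperplanes and pointwise Bayes optimality is genuinely different from the paper's. The paper instead fixes a budget $c=\Pmbb[h=1]$, observes this pins the confusion matrix to the line $A_c:\ TP-TN=c+\zeta-1$, and then argues that the two endpoints of the segment $\mathcal{C}\cap A_c$ are attained by maximizing (resp.\ minimizing) $TP$ under that budget --- which forces $h$ to put its positive mass greedily on the largest values of $\eta$, yielding exactly $h_{t+}^\lambda$ (resp.\ $h_{t-}^\lambda$) with $t,\lambda$ determined by $c$. Varying $c\in[0,1]$ then sweeps the boundary. Your route aligns naturally with the linear-metric framework that pervades the chapter; the paper's route is more explicitly constructive.

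There is, however, a circular step in your reverse direction. You invoke ``$\mathcal{C}$ is closed and convex'' via Lemma~\ref{lem:properties-C}(iv)--(v), but that lemma establishes convexity, boundedness, and existence of supporting hyperplanes --- \emph{not} closedness. In the paper's development, closedness of $\mathcal{C}$ is deduced in Proposition~\ref{pr:strict-convex} precisely \emph{from} the present lemma (``from Lemma~\ref{lem:max-at-bdy}, every boundary point is attained''), so citing it here is circular. This matters at your step ``Hence $(TP^*,TN^*)$ is realized by some $h_{t,\lambda}$'': you have correctly shown that the face $F=H\cap\mathcal{C}$ is the closed segment swept by the $h_{t,\lambda}$, but the boundary point $(TP^*,TN^*)$ is a priori only in $H\cap\overline{\mathcal{C}}$, which could strictly contain $F$ absent closedness. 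The paper's budget-slicing sidesteps this by \emph{constructing} the extremal classifier for each $c$ directly, so the boundary is built from achieved points rather than located inside an assumed-closed set. Your argument can be salvaged (e.g., show that every exposed point of $\overline{\mathcal{C}}$ is the confusion of a Bayes-optimal threshold classifier, hence lies in $\mathcal{C}$, forcing $H\cap\overline{\mathcal{C}}=F$), but as written the appeal to closedness is a genuine gap.
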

\begin{proof}
To prove that the boundary is attained by estimators of these forms, consider solving the problem under the constraint $\Pmbb[h=1]=c$. 
We have $\Pmbb[h=1]=TP+FP$, and $\zeta=\Pmbb[Y=1]=TP+FN$, so we get
\bequation
	TP-TN\  =\  c + \zeta - TP - TN - FP - FN\  =\  c+\zeta - 1, 
\eequation
which is a constant. Note that no confusion matrix has two values of $TP- TN$. This effectively partitions $\Ccal$, 
since all confusion matrices are attained by varying $c$ from 0 to 1.
Furthermore, since $A:= TN = TP - c - \zeta + 1$ is an affine space (a line in tp-tn coordinate system), $\mathcal C \cap A$ has at least one endpoint, because $A$ would pass through the box $[\zeta, 0] \times [0, 1- \zeta]$ and has at most two endpoints due to convexity and boundedness of $\Ccal$. 
Since $A$ is a line with positive slope, $\mathcal C\cap A$ is a single point only when $A$ is tangent to $\mathcal C$ at $(0,1-\zeta)$ or $(\zeta,0)$, from Lemma~\ref{lem:properties-C}, part (vi).

Since the affine space $A$ has positive slope, we claim that the two endpoints are attained by maximizing or minimizing $TP(h)$ subject to $\Pr[h=1]=c$.
It remains to show that this happens for estimators of the
form $h_{t+}^\lambda := {\lambda \1[\eta(x)\geq t]} + {(1-\lambda)\1[\eta(x)>t]}$ and 
$h_{t-}^\lambda:=\lambda \1[\eta(x)< t] + (1 - \lambda)\1[\eta(x)\leq t]$, respectively.

Let $h$ be any estimator, and recall
\bequation
	TP(h):=\int_{\mathcal X} \eta(x)\Pmbb[h=1|X=x]\df. 
\eequation
It should be clear that under a constraint $\Pmbb[h=1]=c$, the optimal choice of $h$ puts all the weight onto the larger values of $\eta$. One can begin by classifying those $X$ into the positive class where $n(X)$ is maximum, until one exhausts the budget of $c$. Let $t$ be such that $\Pmbb[h_{t+}^0=1]\leq c\leq \Pmbb[h_{t+}^1=1]$, and let $\lambda \in [0, 1]$ be chosen such that $\Pmbb[h_{t+}^\lambda=1]=c$,
then $h_{t+}^\lambda$ must maximize $TP(h)$ subject to $\Pmbb[h=1]=c$.

A similar argument shows that all TP-minimizing boundary points are attained by the $h_{t-}$'s.
\end{proof}

\bremark
Under Assumption \ref{bin-as:eta}, $\1[\eta(x)>t] = \1[\eta(x)\geq t]$ and $\1[\eta(x)<t] = \1[\eta(x)\leq t]$. Thus, the boundary of $\mathcal C$ is the confusion matrices of estimators of the form $\1[\eta(x)\geq t]$  and $\1[\eta(x)\leq t]$ for some $t \in [0, 1]$.
\eremark

\begin{proof}[Proof of Proposition~\ref{pr:bayeslinear}]
Note, we are maximizing a linear function on a convex set. There are 6 cases to consider:
    \begin{enumerate}[leftmargin=0.5cm]
    \item If the signs of $m_{11}$ and $m_{00}$ differ, the maximum is attained either at $(0,1-\zeta)$ 
    or $(\zeta,0)$, as per Lemma~\ref{lem:properties-C}, part (vi). 
    Which of the two is optimum depends on whether $|m_{11}|\geq |m_{00}|$, i.e. on the sign of
    $m_{11}+m_{00}$. It should be easy to check that in all four possible cases, the statement holds,
    noting that in all four cases, $0 \leq m_{00}/(m_{11}+m_{00}) \leq 1.$
    \item If $m_{11},m_{00}\geq 0$, then the maximum is attained on $\partial\mathcal{C}_+$, and the
    proof below gives the desired result.
    
	We know, from Lemma~\ref{lem:max-at-bdy}, that $\hbar$ must be of the form
	$\1[\eta(x)\geq t]$ for some $t$. It suffices to find $t$. 
	Thus, we wish to maximize $m_{11}TP(h_t)+m_{00}TN(h_t)$.
	Now, let $Z:=\eta(X)$ be the random variable obtained by evaluating $\eta$ at random $X$. Under Assumption~\ref{bin-as:eta}, $df_X = df_Z$ and we have that 
	\bequation
		TP(h_t)\ = \int_{x:\eta(x)\geq t} \eta(x) \df\ =
		\int_{t}^1 z \dfz.
	\eequation
	Similarly, $\TN(h_t) = \int_0^t (1-z)\dfz$. Therefore, 
	
	\begin{align}
		\tfrac{\partial}{\partial t} \big(m_{11}&TP(h_t)+m_{00}
		TN(h_t)\big)
		= -m_{11}tf_Z(t) + \cdot m_{00}(1-t)f_Z(t). 
	\end{align}
    
	So, the critical point is attained at $t=m_{00}/(m_{11}+m_{00})$, as desired.
	A similar argument gives the converse result for $m_{11} + m_{00}< 0$.
	\item if $m_{11},m_{00}<0$, then the maximum is attained on $\partial\mathcal{C}_-$, and an
    argument identical to the proof above gives the desired result. 
	\eenumerate
\end{proof}

\begin{proof}[Proof of Proposition~\ref{pr:strict-convex}]
    
    That $\Ccal$ is convex and bounded is already proven in Lemma~\ref{lem:properties-C}.
    To see that $\mathcal C$ is closed, note that, from Lemma~\ref{lem:max-at-bdy},
    every boundary point is attained. From Lemma~\ref{lem:properties-C}, part (iii), it follows that $\Ccal$ is $180$-degree rotationally symmetric around the point $(\frac{\zeta}{2}, \frac{1-\zeta}{2})$.  
    
    Further, recall every boundary point of $\mathcal C$ can be attained by a thresholding estimator. By the discussion in Section~\ref{bin-sec:confusion}, every boundary point is the optimal classifier for some linear performance metric, and the vector defining this linear metric is exactly the normal vector of the supporting hyperplane at the boundary point.
    
    A vertex exists if (and only if) some point is supported by more than one tangent hyperplane in two dimensional space. This means it is optimal for more than one linear metric. Clearly, all the hyperplanes corresponding to the slope of the metrics where $m_{11}$ and $m_{00}$ are of opposite sign (i.e. hyperplanes with positive slope) support either $(\zeta, 0)$ or $(0, 1-\zeta)$. So, there are at least two supporting hyperplanes at these points, which make them the vertices. Now, it remains to show that there are no other vertices for the set $\Ccal$. 
    
    Now consider the case when the slopes of the hyperplanes are negative, i.e. $m_{11}$ and $m_{00}$ have the same sign for the corresponding linear metrics. We know from Proposition~\ref{pr:bayeslinear} that optimal classifiers for linear metrics are threshold classifiers. Therefore there exist more than one threshold classifier of the form $h_t = \1[\eta(x)\geq t]$ with the same confusion matrix. Let's call them $h_{t_1}$ and $h_{t_2}$ for the two thresholds $t_1, t_2 \in [0, 1]$. This means that 
    \bequation
    \int_{x: \eta(x) \geq t_1} \eta(x)df_X = \int_{x: \eta(x) \geq t_2} \eta(x)df_X.
    \eequation
    Hence, there are multiple values of $\eta$ which are never attained! This contradicts that $g$ is strictly decreasing. Therefore,  there are no vertices other than $(\zeta, 0)$ or $(0, 1-\zeta)$ in $\Ccal$. 
    
    Now, we show that no supporting hyperplane is tangent at multiple points (i.e., there no flat regions on the boundary). If suppose there is a hyperplane which supports two points on the boundary. Then there exist two threshold classifiers with arbitrarily close threshold values, but confusion matrices that are well-separated. Therefore, there must exist some value of $\eta$ which exists with non-zero probability, contradicting the continuity of $g$. 
    By the discussion above, we conclude that under Assumption~\ref{bin-as:eta}, every supporting hyperplane to the convext set $\Ccal$ is tangent to only one point. This makes the set $\Ccal$ strictly convex.
\end{proof}

\begin{proof}[Proof of Lemma~\ref{lem:quasiconcave}] 
We will prove the result for $\phi\circ \rho^+$ on $\partial\mathcal C^+$, and the argument for $\psi\circ \rho^-$ on $\partial\mathcal C^+$ is essentially the same. For simplicity, we drop
the $+$ symbols in the notation. Recall that a function is quasiconcave if and only if its superlevel sets are convex. 

It is given that $\phi$ is quasiconcave. Let $S$ be some superlevel set of $\phi$. We first want to show that for any $r<s<t$, if $\rho(r)\in S$ and $\rho(t)\in S$,
then $\rho(s)\in S$. Since $\rho$ is a continuous bijection, due to the geometry of $\Ccal$ (Lemma~\ref{lem:properties-C} and Proposition~\ref{pr:strict-convex}), we must have --- without loss of generality ---
$TP(\rho(r))< TP(\rho(s)) < TP(\rho(t))$, and $TN(\rho(r))>TN(\rho(s))>TN(\rho(t))$.
(otherwise swap $r$ and $t$). Since the set $\Ccal$ is strictly convex and the image of 
$\rho$ is $\partial \mathcal C$, then $\rho (s)$ must dominate (component-wise) a point in the convex combination
of $\rho(r)$ and $\rho(t)$. Say that point is $z$. Since $\phi$ is monotone increasing, then $x\in S\implies y \in S$ for all $y\geq x$ componentwise. Thereofore, $\phi(\rho(s)) \geq \phi(z)$. Since, $S$ is convex, $z \in S$ and, due to the argument above, $\rho(s) \in S$.

This implies that $\rho^{-1}(\partial \mathcal C\cap S)$ is an interval, and is therefore convex. Thus, the superlevel sets of $\phi\circ \rho$ are convex, so it is quasiconcave,
as desired. This implies unimodaltiy as a function over the real line which has more than one local maximum can not be quasiconcave (consider the super-level set for some value slightly less than the lowest of the two peaks).
\end{proof}

\begin{proof}[Proof of Proposition~\ref{bin-prop:sufficient}] 

For this proof, we denote $TP$ and $TN$ as $C_{11}$ and $C_{00}$, respectively. Let us take a linear-fractional metric
\begin{ceqn}
\begin{align}
\phi(C) = \frac{p_{11}C_{11}+p_{00}C_{00}+p_0}{q_{11}C_{11}+q_{00}C_{00}+q_0}
\label{eq:linear-f}
\end{align}
\end{ceqn}
where $p_{11}, q_{11},p_{00},q_{00}$ are not zero simultaneously.
We want $\phi(C)$ to be monotonic in TP, TN and bounded. If for any $C \in \Ccal$, $\phi(C) < 0$, we can add a large positive constant such that $\phi(C) \geq 0$, and still the metric would remain linear fractional. So, it is sufficient to assume $\phi(C) \geq 0$. Furthermore, boundedness of $\phi$ implies $\phi(C) \in [0,D]$, for some $ \Rmbb \ni D \geq 0$. Therefore, we may divide $\phi(C)$ by $D$ so that $\phi(C) \in [0,1]$ for all $C \in \Ccal$. Still, the metric is linear fractional and $\phi(C) \in [0,1]$.

Taking derivative of $\phi(C)$ w.r.t. $C_{11}$.
\begin{align}
\frac{\partial \phi(C)}{\partial C_{11}} &= \frac{p_{11}}{q_{11}C_{11}+q_{00}C_{00}+q_0} - \frac{q_{11}(p_{11}C_{11}+p_{00}C_{00}+p_0)}{(q_{11}C_{11}+q_{00}C_{00}+q_0)^2} \geq 0 
\end{align}

\begin{align}
\Rightarrow p_{11}(q_{11}C_{11}+q_{00}C_{00}+q_0) \geq q_{11}(p_{11}C_{11}+p_{00}C_{00}+p_0)
\end{align}

If denominator is positive then the numerator is positive as well.
\begin{itemize}
\item Case 1: The denominator $q_{11}C_{11}+q_{00}C_{00}+q_0 \geq 0$.
\begin{itemize}
\item Case (a) $q_{11} > 0$.

\begin{align*}
\Rightarrow p_{11} &\geq q_{11} \phi(C) \\
\Rightarrow p_{11} &\geq q_{11}\sup_{C\in \Ccal} \phi(C)\\
\Rightarrow p_{11} &\geq q_{11}\btau \qquad \text{ (Necessary Condition)} \numberthis 
\end{align*}
We are considering sufficient condition, which means $\btau$ can vary from $[0, 1]$. Hence, a sufficient condition for monotonicity in $C_{11}$ is $p_{11} \geq q_{11}$. Furthermore,
$p_{11} \geq 0$ as well.
\item Case (b) $q_{11} < 0$.
\begin{align}
\Rightarrow p_{11} &\geq {q_{11}} \btau
\end{align}
Since $q_{11} <0$ and $\btau \in [0,1]$, sufficient condition is $p_{11} \geq 0$. So, in this case as well we have that
\begin{align}
p_{11} \geq q_{11}, ~p_{11} \geq 0.
\end{align}
\item Case(c) $q_{11} = 0$.
\begin{align} 
\Rightarrow p_{11} &\geq 0
\end{align}
We again have $p_{11}\geq q_{11}$ and $p_{11} \geq 0$ as sufficient conditions. 

A similar case holds for $C_{00}$, implying $p_{00} \geq q_{00}$ and $p_{00} \geq 0$.
\end{itemize}
\item Case 2: The denominator $q_{11}C_{11}+q_{00}C_{00} + q_0$ is negative. 
\begin{align*}
p_{11} &\leq q_{11} \Big(\frac{p_{11}C_{11}+p_{00}C_{00}+p_0}{q_{11}C_{11}+q_{00}C_{00}+q_0}\Big)\\
\Rightarrow p_{11} &\leq q_{11} \btau \numberthis
\end{align*}
\begin{itemize}
\item Case(a) If $q_{11} > 0$. So, we have $p_{11} \leq q_{11}$ and $p_{11} \leq 0$  as sufficient condition.
\item Case(b) If $q_{11} < 0$, $\Rightarrow p_{11} \leq q_{11}$. So, we have $q_{11} < 0$, $\Rightarrow p_{11} <0$ as sufficient condition.
\item Case(c) If $q_{11} = 0$, $\Rightarrow p_{11} \leq 0$ and $p_{11} \leq q_{11}$ as sufficient condition.

So in all the cases we have that 
\begin{align}
p_{11} \leq q_{11} &\text{ and } p_{11} \leq 0
\end{align}
\vskip -1cm
as the sufficient conditions. A similar case holds for $C_{00}$ resulting in $p_{00} \leq q_{00}$ and $p_{00} \leq 0$. 
\end{itemize}
\end{itemize}

Suppose the points where denominator is positive is $\Ccal^{+}\subseteq \Ccal$. Suppose the points where denominator is negative is $\Ccal^{-} \subseteq \Ccal$. For gradient to be non-negative at points belonging to $\Ccal^{+}$, the sufficient condition is
\vspace{-0.2cm}
\begin{align*}
p_{11} \geq q_{11} &\text{ and } p_{11} \geq 0\\
p_{00} \geq q_{00} &\text{ and } p_{00} \geq 0 \numberthis  
\end{align*}
\vskip -0.25cm
For gradient to be non-negative at points belonging to $\Ccal^{-}$, the sufficient condition is
\begin{align*}
p_{11} \leq q_{11} &\text{ and } p_{11} \leq 0\\
p_{00} \leq q_{00} &\text{ and } p_{00} \leq 0 \numberthis 
\end{align*}
\vskip -0.25cm
If $\Ccal_{+}$ and $\Ccal_{-}$ are not empty sets, then the gradient is non-negative only when $p_{11}, p_{00} = 0$ and $q_{11}, q_{00} = 0$. This is not possible by the definition described in \eqref{eq:linear-f}. Hence, one of $\Ccal_{+}$ or $\Ccal_{-}$ should be empty. WLOG, we assume $\Ccal_{-}$ is empty and conclude that $\Ccal_{+} = \Ccal$. \\
An immediate consequence of this is, WLOG, we can take both the numerator and the denominator to be positive, and the sufficient conditions for monotonicity are as follows:
\begin{align*}
p_{11} \geq q_{11} \text{ and } p_{11} \geq 0\nonumber\\
p_{00} \geq q_{00} \text{ and } p_{00} \geq 0 \numberthis 
\end{align*}

Now, let us take a point in the feasible space $(\zeta,0)$. We know that
\begin{ceqn}
\begin{align}
\phi((\zeta,0)) &= \frac{p_{11}\zeta+p_0}{q_{11}\zeta + q_0} \leq \btau \nonumber \\ 
&\Rightarrow p_{11}\zeta + p_0 \leq \btau (q_{11}\zeta + q_0) \nonumber\\
&\Rightarrow (p_{11} - \btau q_{11})\zeta + (p_0 - \btau q_0) \leq 0\nonumber \\
&\Rightarrow (p_0 - \btau q_0) \leq -\underbrace{(p_{11}-\btau q_{11})}_{\text{positive}}\underbrace{\zeta}_{\text{positive}} \nonumber \\
&\Rightarrow (p_0 - \btau q_0) \leq 0.
\label{eq:p0zero}
\end{align}
\end{ceqn}
\vspace{-0.2cm}
Metric being bounded in $[0,1]$ gives us 
\begin{align*}
\frac{p_{11}C_{11}+p_{00}C_{00}+p_0}{q_{11}C_{11}+q_{00}C_{00}+q_0} & \leq 1 \\
\Rightarrow p_{11}C_{11} + p_{00}C_{00} + p_0 &\leq q_{11}C_{11} + q_{00}C_{00} + q_0 \numberthis 
\end{align*}

\bequation
\Rightarrow q_0 \geq (p_{11}-q_{11})c_{11} + (p_{00}-q_{00})c_{00} + p_0 \qquad \forall C \in \Ccal.
\eequation
Hence, a sufficient condition is 
\bequation
q_0 = (p_{11}-q_{11})\zeta + (p_{00}-q_{00})(1-\zeta) + p_0.
\eequation
Equation \eqref{eq:p0zero}, which we derived from monotonicity, implies that
\begin{itemize}[leftmargin=0.5cm]
\item Case (a) $q_0 \geq 0$, $\Rightarrow p_0 \leq 0$ as a sufficient condition. 
\item Case (b) $q_0 \leq 0$, $\Rightarrow p_0 \leq q_0 \leq 0$ as a sufficient condition. 
\end{itemize}
Since the numerator is positive for all $C \in \Ccal$ and $p_{11}, p_{00} \geq 0$, a sufficient condition for $p_0$ is $p_0 = 0$.

Finally, a monotonic, bounded in $[0,1]$, linear fractional metric is defined by
\begin{align}
\phi(C) &= \frac{p_{11}c_{11}+p_{00}c_{00}+p_0}{q_{11}c_{11}+q_{00}c_{00}+q_0},
\end{align}
where $p_{11} \geq q_{11}, p_{11} \geq 0,
p_{00} \geq q_{00}, p_{00} \geq 0,
q_0 = (p_{11}-q_{11})\zeta + (p_{00}-q_{00})(1-\zeta) + p_0,
p_0 = 0$, and $p_{11}, q_{11}, p_{00}$, and $q_{00}$ are not simulataneously zero. Further, we can divide the numerator and denominator with $p_{11} + p_{00}$ without changing the metric $\phi$ and the above sufficient conditions. Therefore, for elicitation purposes, we can take $p_{11} + p_{00} = 1$.
\end{proof}

\begin{proof}[Proof of Proposition~\ref{pr:solvesystem}] 

For this proof as well, we use $TP = C_{11}$ and $TN = C_{00}$. Since the linear fractional matrix is monotonically increasing in $C_{11}$ and $C_{00}$, it is maximized at the upper boundary $\partial \Ccal_+$. Hence $m_{11} \geq 0$ and $m_{00} \geq 0$. So, after running Algorithm \ref{bin-alg:linear}, we get a hyperplane such that 
\vspace{-0.25cm}
\begin{align}
p_{11} - \tau q_{11} &= \alpha m_{11}, \quad
p_{00} - \tau q_{00} = \alpha m_{00}, \nonumber \\ 
p_0 -\tau q_0 &= -\alpha\underbrace{(m_{11}C_{11}^* + m_{00}C_{00}^*)}_{=: C_0}.
\label{eq:system}
\end{align}
Since $p_{11} - \btau q_{11} \geq 0$ and $m_{11} \geq 0$, $\Rightarrow \alpha \geq 0$. As discussed in the main paper, we avoid the case when $\alpha = 0$. Therefore, we have that $\alpha > 0$.

Equation \eqref{eq:system} implies that 
\begin{align*}
\frac{p_{11}}{\alpha} - \frac{\tau q_{11}}{\alpha} &= m_{11}, \quad
\frac{p_{00}}{\alpha} - \frac{\tau q_{00}}{\alpha} = m_{00}, \nonumber \\
\frac{p_0}{\alpha} - \frac{\tau q_0}{\alpha} &= -C_0. \numberthis
\end{align*}
Assume $p_{11}' = \frac{p_{11}}{\alpha}, p_{00}' = \frac{p_{00}}{\alpha}$, $q_{11}' = \frac{q_{11}}{\alpha}$, $q_{00}' = \frac{q_{00}}{\alpha}$, $p_0' = \frac{p_0}{\alpha}$, $q_0' = \frac{q_0}{\alpha}$. Then, the above system of equations turns into
\begin{align*}
p_{11}' - \btau q_{11}' &= m_{11}, \quad
p_{00}' - \btau q_{00}' = m_{00}, \nonumber \\
p_0' - \btau q_0' &= -C_0. \numberthis
\end{align*}
A $\phi'$ metric defined by the $\ppone, \ppzero, \pqone, \pqzero, \pqnot$ is monotonic, bounded in $[0,1]$, and satisfies all the sufficient conditions of Assumptions~\ref{assump:sufficient}, i.e.,
\begin{align*}
p_{11}' \geq q_{11}' ~,~ p_{00}' \geq q_{11}',~
p_{11}' \geq 0 ~,~ p_{00}' \geq 0, \nonumber \\
q_0' = (p_{11}' - q_{11})\pi + (p_{00}' - q_{00}')\pi + p_0', ~
p_0' = 0. \numberthis
\end{align*}
As discussed in Chapter~\ref{chp:binary}, solving the above system does not harm the elicitation task. For simplicity, replacing the `` $'$ " notation with the normal one, we have that
\begin{align*}
p_{11} - \btau q_{11} &= m_{11}, \quad
p_{00} - \btau q_{00} = m_{00}, \nonumber \\
p_0 - \btau q_0 &= -C_0 \numberthis
\end{align*}
\vskip -0.25cm
From last equation, we have that $\btau  = \frac{C_0 + p_0}{q_0}$. Putting it in the rest gives us

\begin{align}
q_0 p_{11} - (C_0 + p_0)q_{11} = m_{11}q_0 \quad \text{and} \quad 
q_0 p_{00} - (C_0 + p_0) q_{00} = m_{00} q_0.
\end{align}

We already have
\vspace{-0.2cm}
\begin{align*}
q_0 &= (p_{11}-q_{11})\zeta + (p_{00}-q_{00})(1-\zeta)+p_0\\
\Rightarrow q_{11} &= \frac{p_{00}(1-\zeta)-q_{00}(1-\zeta) + p_{11}\zeta - q_0+p_0}{\zeta}, \numberthis
\end{align*}
which further gives us
\begin{align*}
q_0 &= \frac{(C_0 + p_0)[p_{00}(1-\zeta)+p_{11}\zeta +  p_0]}{p_{11}\zeta + p_{00}(1-\zeta) + p_0 + C_0 - m_{11}\zeta - m_{00}(1-\zeta)},\\
q_{00} &= \frac{(p_{00}-m_{00})[p_{00}(1-\zeta)+p_{11}\zeta + p_0]}{p_{11}\zeta + p_{00}(1-\zeta) + p_0 + C_0 -m_{11}\zeta -m_{00}(1-\zeta)},\\
q_{11} &= \frac{(p_{11}-m_{11})[p_{00}(1-\zeta) + p_{11}\zeta + p_0]}{p_{11}\zeta + p_{00}(1-\zeta) + p_0 + C_0 - m_{11}\zeta -m_{00}(1-\zeta)}. \numberthis
\end{align*}
Define
\begin{align}
P := p_{00}(1-\zeta) + p_{11}\zeta + p_0 \quad \text{and} \quad 
Q := P + C_0 - m_{11}\zeta -m_{00}(1-\zeta).
\end{align}

Hence, 
\begin{align}
q_0 = (C_0 + p_0)\frac{P}{Q}, \quad
q_{11} = (p_{11}-m_{11})\frac{P}{Q}, \quad 
q_{00} = (p_{00}-m_{00})\frac{P}{Q}.
\end{align}
Now using sufficient conditions, we have $p_0 = 0$. The final solution is the following:

\begin{align}
q_0 = C_0 \frac{P}{Q}, \quad
q_{11} = (p_{11} - m_{11})\frac{P}{Q}, \quad 
q_{00} = (p_{00} - m_{00})\frac{P}{Q},
\label{eq:systemsolve}
\end{align}
where $P:= p_{11}\zeta + p_{00}(1-\zeta) $ and $Q:= P + C_0 - m_{11}\zeta - m_{00}(1-\zeta)$. 
We have taken ${p}_{11} + {p}_{00} = 1$, but the original $p'_{11} + p'_{00} = \frac{1}{\alpha}$. Therefore, we learn $\hat{\phi}(C)$ such that such that $\hat{\phi}(C) = \alpha \phi(C)$.
\end{proof}

\bcorollary \label{cor:f-beta} 
For $F_\beta$-measure, where $\beta$ is unknown, Algorithm \ref{bin-alg:linear} elicits the true performance metric up to a constant in $O(\log(\frac{1}{\epsilon}))$ queries to the oracle.
\ecorollary
\begin{proof}
Algorithm \ref{bin-alg:linear} gives us the supporting hyperplane, the trade-off, and the Bayes confusion matrix. If we know $p_{11}$, then we can use Proposition \ref{pr:solvesystem} to compute the other coefficients. In $F_\beta$-measure, $p_{11}=1$, and we do not require Algorithms 3.2 and \ref{alg:grid-search}.
\end{proof}

\begin{proof}[Proof of Theorem~\ref{thm:quasi}] We prove the points one by one.

\begin{enumerate}[leftmargin=0.5cm, label=(\roman*)]
\item As a direct consequence of our representation of the points on the boundary via their supporting hyperplanes (Section~\ref{ssec:parametrization}), when we search for the maximizer (mimimizer), we also get the associated supporting hyperplane as well.

\item  By the nature of binary search, we are effectively narrowing our search interval around some target angle $\theta_0$. Furthermore, since the oracle queries are correct unless the $\phi$ values are within $\epsilon_\Omega$, we must have $|\phi(C_{\oline \theta})-\phi(C_{\theta_0})|<\epsilon_\Omega$, and we output $\theta'$ such that $|\theta_0-\theta'|<\epsilon$. Now, we want to check the bound $|\phi(C_{\theta'}) - \phi(C_{\oline{\theta}})|$. In order to do that, we will also consider the threshold corresponding to the supporting hyperplanes at $C_\theta$'s, i.e. $\delta_\theta = \sfrac {\sin\theta}{\sin\theta + \cos\theta}$. 

Notice that,
\begin{ceqn}
\begin{align}
|\phi(C_{\oline{\theta}}) - \phi(C_{\theta'})| &= |\phi(C_{\oline{\theta}}) -\phi(C_{\theta_0}) \nonumber + \phi(C_{\theta_0}) - \phi(C_{\theta'})| \nonumber\\
&\leq |\phi(C_{\oline{\theta}}) -\phi(C_{\theta_0})| + |\phi(C_{\theta_0}) - \phi(C_{\theta'})| 
\end{align}
\end{ceqn}
The first term is bounded by $\epsilon_{\Omega}$ due to the oracle assumption.  For the bounds the second term, consider the following.
$$
|TP(C_{\theta_0}) - TP(C_{\theta'})|
$$

\begin{ceqn}
\begin{align}
&= \left|\int\limits_{x:\frac{sin\theta_0}{sin\theta_0 + cos\theta_0}\geq\eta(x)\geq\frac{sin\theta'}{sin\theta' + cos\theta'}}\!\!\!\!\!\!\!\!\!\!\!\! \eta(x)\df\right| \nonumber \\
&\leq \left|\int\limits_{x:\frac{sin\theta_0}{sin\theta_0 + cos\theta_0} - \oline{\delta}\geq\eta(x) - \oline{\delta}\geq\frac{sin\theta'}{sin\theta' + cos\theta'}- \oline{\delta}}\!\!\!\!\!\!\!\!\!\!\!\! \df\right| \nonumber \\
&= \left|\int\limits_{x:\frac{sin\theta_0}{sin\theta_0 + cos\theta_0} - \frac{sin\oline{\theta}}{sin\oline{\theta} + cos\oline{\theta}} \geq\eta(x) - \oline{\delta}\geq\frac{sin\theta'}{sin\theta' + cos\theta'}- \frac{sin\oline{\theta}}{sin\oline{\theta} + cos\oline{\theta}}}\!\!\!\!\!\!\!\!\!\!\!\! \df\right| \nonumber \\
&= \left|\int\limits_{x:\frac{sin(\theta_0 - \oline{\theta})}{sin(\theta_0 + \oline{\theta})  + cos(\theta_0 - \oline{\theta})} \geq\eta(x) - \oline{\delta}\geq\frac{sin\theta'}{sin\theta' + cos\theta'}- \frac{sin\oline{\theta}}{sin\oline{\theta} + cos\oline{\theta}}}\!\!\!\!\!\!\!\!\!\!\!\! \df\right|,  
\label{eq:integrals}
\end{align}
\end{ceqn}
where the inequality in the second step follows from the fact that $\eta(x) \leq 1$. 

Recall that the left term in the integral limits is actually, $\delta_{\theta_0} - \delta_{\oline\theta}$. When $|\phi(C_{\delta_{\theta_0}})-\phi(C_{\delta_{\oline\theta}})|<\epsilon_\Omega$, then we have $|\oline\delta-\delta_0|<\frac 2{k_0}\sqrt{
k_1\epsilon_\Omega}$. The proof of this statement is given in the proof of Theorem~\ref{thm:linear} (proved later).
Since sin is 1-Lipschitz, adding and subtracting $\sin\theta_0/(\sin\theta_0 + \cos\theta_0)$ in the right term of the integration limit gives us the minimum value of the right term to be $-\epsilon-\frac{2\sqrt{k_1\epsilon_\Omega}}{k_0}$.  
This implies that the quantity in ~\eqref{eq:integrals} is less than 
\vspace{-0.4cm}
\begin{ceqn}
\begin{align}
&\Pmbb[\{(\eta(X) - \oline{\delta}) \leq \frac{2}{k_0}\sqrt{k_1\epsilon_\Omega}\} \cap \{(\oline{\delta} - \eta(X)) \leq \epsilon + \frac{2}{k_0}\sqrt{k_1\epsilon_\Omega}\}]\nonumber \\
&\leq\Pmbb[(\oline{\delta} - \eta(X)) \leq \epsilon + \frac{2}{k_0}\sqrt{k_1\epsilon_\Omega}] \nonumber \\
&\leq \frac{2k_1}{k_0}\sqrt{k_1\epsilon_\Omega} + k_1\epsilon. \quad \text{(by Assumption~\ref{as:low-weight-around-opt})}
\end{align}
\end{ceqn}

As $\Pmbb(A\cap B) \leq min\{\Pmbb(A), \Pmbb(B)\}$, the inequality used in the second step is rather loose, but it shows the dependency on sufficiently small $\epsilon$. It could be independent of the tolerance $\epsilon$ depending on the $\Pmbb(\eta(X) - \oline\delta)$ or the sheer big value of $\epsilon$. Nevertheless, a similar result applies to the true negative rate. 
Since $\phi$ is 1-Lipschitz, we have that $|\phi(C)-\phi(C')|\leq 1\cdot \Vert C-C'\Vert$,
but 

\bequation
\Vert C(\theta_0)-C(\theta')\Vert_\infty 
\leq \frac{2k_1}{k_0}\sqrt{k_1\epsilon_\Omega} + k_1\epsilon. 
\eequation

Hence,
\bequation
|\phi(C_{\theta'}) - \phi(C_{\oline{\theta}})| \leq \sqrt{2}(\frac{2k_1}{k_0}\sqrt{k_1\epsilon_\Omega} + k_1\epsilon) + \epsilon_\Omega.
\eequation
Since the metrics are in $[0, 1]$, $\epsilon_\Omega \in [0, 1]$. Therefore, $\sqrt{\epsilon_\Omega} 
\geq \epsilon_\Omega$. This gives us the desired result.

\item We needed only, for part (ii), that the interval of possible values of $\theta'$ be at most $\epsilon$ to the target angle $\theta_0$. Ideally, this is obtained by making $\log_2(1/\epsilon)$ queries, but due to the region where oracle misreport its preferences, we can be off to the target angle $\theta_0$ by more than $\epsilon$. 

However, binary search will again put us back in the correct direction, once we leave the misreporting region. And this time, even if we are off to the target angle $\theta_0$, we will be closer than before. Therefore, for the interval of possible values of $\theta'$ to be at most $\epsilon$, we require at least $\log(\frac{1}{\epsilon})$ rounds of the algorithm, each of which is a constant number of pairwise queries.

\end{enumerate}\vspace*{-2em}
\end{proof}

\begin{proof}[Proof of Lemma~\ref{lem:lower-bound}] 

For any fixed $\epsilon$, divide the search space $\theta$ into bins of length $\epsilon$, resulting in $\ceil[\big]{\frac{1}{\epsilon}}$ classifiers. When the function evaluated on these classifiers is unimodal, and when the only operation allowed is pairwise comparison, the optimal worst case complexity for finding the argument maximum (of function evaluations) is $O(\log\frac1\epsilon)$ \cite{cormen2009introduction}, which is achieved by binary search. 
\end{proof}

\bprop\label{pr:sample-concentration-confusion} 
    Let $(y_1,x_1,h(x_1)),\,\dotsc,\,(y_n,x_n,h(x_n))$ be $n$ i.i.d.~samples from the joint distribution on $Y$, $X$, and $h(X)$. Then by H\"offding's inequality, 
	\bequation
	\Pmbb\left[\left|\tfrac1n\textstyle\sum_{i=1}^n\1[h_i=y_i=1] - TP(h)\right|\geq \epsilon \right]\leq 2e^{-2n\epsilon^2}.
	\eequation
	The same holds for the analogous estimator on TN.
\eprop
\bproof
Direct application of H\"offding's inequality.
\eproof

\begin{proof}[Proof of Theorem~\ref{thm:linear}]

We will show this for threshold classifiers, as in the statement of the Assumption~\ref{as:low-weight-around-opt}, but it is not difficult to extend the argument to the case of querying angles. (Involves a good bit of trigonometric identities...)

Recall, the threshold estimator $h_\delta$ returns positive if $\eta(x)\geq \delta$, and zero otherwise. Let $\oline\delta$ be the threshold which maximizes performance with respect to $\phi$, and $C_{\oline\delta}$ be its confusion matrix. 
For simplicity, suppose that $\delta'<\oline\delta$. Recall, from Assumption~\ref{as:low-weight-around-opt} that $\Pr[\eta(X)\in [\oline\delta-\frac{k_0}{2k_1}\epsilon,\,\oline\delta]]\leq k_0\epsilon/2$,
but $\Pr[\eta(X)\in[\oline\delta-\epsilon,\oline\delta]]\geq k_0\epsilon$, and therefore 
\bequation
	\Pmbb\Big[\eta(X)\in[\oline\delta-\epsilon,\oline\delta-\tfrac{k_0}{2k_1}\epsilon]\Big]\geq k_0\epsilon/2
\eequation
Denoting $\phi(C)=\langle\mmbf,C\rangle$ and since $\oline\delta = m_{00}/(m_{11}+m_{00})$, by expanding the integral, we get
\begin{align*}
	&\phi(C_{\oline\delta})-\phi(C_{\delta'})= \int_{x:\delta'\leq \eta(x) \leq \oline\delta}\!\!\!\!\!\!\!\!\!\!\!\! [m_{00}(1-\eta(x))-m_{11}\eta(x)]\df \\
    &=\int_{x:\oline\delta - (\oline\delta - \delta')\leq \eta(x)\leq \oline\delta}\!\!\!\!\!\!\!\!\!\!\!\! [m_{00}(1-\eta(x))-m_{11}\eta(x)]\df \\
    &\geq\int_{x:\oline\delta - (\oline\delta - \delta')\leq \eta(x)\leq \oline\delta - \frac{k_0}{2k_1}(\oline\delta - \delta')}\!\!\!\!\!\!\!\!\!\!\!\! [m_{00}(1-\eta(x))-m_{11}\eta(x)]\df \\
    &\geq[(m_{11} + m_{00})\big(\frac{-m_{00}}{m_{00} + m_{11}} + \frac{k_0}{2k_1}(\oline\delta - \delta')\big) + m_{00}] \times  \int_{x:\oline\delta - (\oline\delta - \delta')\leq \eta(x)\leq \oline\delta - \frac{k_0}{2k_1}(\oline\delta - \delta')}\!\!\!\!\!\!\!\!\!\!\!\! \df \\
    &=[(m_{11} + m_{00}) \frac{k_0}{2k_1}(\oline\delta - \delta')] \times \Pmbb[\oline\delta - (\oline\delta - \delta')\leq \eta(x)\leq \oline\delta - \frac{k_0}{2k_1}(\oline\delta - \delta')]\\
    &\geq \tfrac{k_0}2(\oline\delta-\delta') \cdot \tfrac{k_0}{2k_1}(\oline\delta-\delta')=\frac{k_0^2}{4k_1}(\oline\delta-\delta')^2. \numberthis
\end{align*}
Similar results hold when $\delta'>\oline\delta$. 
Therefore, if we have $|\phi(\oline C)-\phi(C(\delta'))|<\epsilon_\Omega$, then we must have $|\oline\delta-\delta'|<\frac 2{k_0}\sqrt{
k_1\epsilon_\Omega}$. Thus, if we are in a regime where the oracle is misreporting the preference ordering, it must be the case that the thresholds are sufficiently close to the optimal threshold.

Again, as in the proof of Theorem~\ref{thm:quasi}, when the tolerance $\epsilon$ is small, our binary search closes in on a parameter $\theta'$ which has $\phi(C_{\delta_{\theta'}})$ within $\epsilon_\Omega$ of the optimum, but from the above discussion, this also implies that the search interval itself is close to the true value, and thus, the total error in the threshold is at most $\epsilon + \frac 2{k_0}\sqrt{k_1\epsilon_\Omega}$. Since $\oline\delta = m_{00}/(m_{11}+m_{00})$, this bound extends to the cost vector with a factor of $\sqrt2$, thus giving the desired result.

We observe that the above theorem actually provide bounds on the slope of the hyperplanes. Thus, the guarantees for LFPM elicitation follow naturally. It only requires that we recover the slope at the upper boundary and lower boundary correctly (within some bounds). This theorem provides those guarantees.   Algorithm~\ref{alg:grid-search} is independent of oracle queries and thus can be run with high precision, making the solutions of the two systems match.  
\end{proof}

\begin{proof}[Proof of Lemma~\ref{lem:sample-Cs-optimize-well}] 

Suppose the performance metric of the oracle is characterized by the parameter $\oline\theta$. Recall the Bayes optimal classifier would be $h_{\oline{\theta}} = \1 [\eta\geq \oline{\delta}]$. Let us assume we are given a classifier $\hhat_{\oline{\theta}} = \1 [\hat\eta\geq \oline{\delta}]$. Notice that the optimal threshold $\oline\delta$ is the property of the metric and not the classifier or $\eta$. We want to bound the difference in the confusion matrices for these two classifiers. Notice that, by  Assumption~\ref{as:sup-norm-convergence}, we can take $n$ sufficiently large so that $\Vert \eta-\hat\eta_n\Vert_\infty$ is arbitrarily small. Consider the quantity

\begin{align}
	TP(h_{\oline{\theta}}) - TP(\hat h_{\oline{\theta}}) = \int_{\eta\geq \oline{\delta}} \!\!\!\!\!\!\!\eta \df  -
    \int_{\hat\eta\geq \oline{\delta}} \!\!\!\!\!\!\!\eta \df
    \label{eq:loss}
\end{align}

Now the maximum loss in the above quantity can occur when, in the region where the classifiers' predictions differ, there $\hat\eta$ is less than $\eta$ with the maximum possible difference. This is equal to

\begin{align*}
    &\int\limits_{x:\oline\delta \leq \eta(x) \leq \oline\delta + \Vert\eta - \hat\eta\Vert_\infty} \!\!\!\!\!\!\!\eta \df \\ \nonumber
    &\leq \Pmbb[\oline\delta \leq \eta(X) \leq \oline\delta + \Vert\eta - \hat\eta\Vert_\infty] \\ \nonumber
    &\leq k_1\Vert\eta - \hat\eta\Vert_\infty. \qquad \text{(by Assumpition~\ref{as:low-weight-around-opt})} \numberthis 
\end{align*}

Similarly, we can look at the maximum gain in the following quantity.

\begin{align}
	 TP(\hat h_{\oline{\theta}}) - TP(h_{\oline{\theta}}) &= 
    \int_{\hat\eta\geq \oline{\delta}} \!\!\!\!\!\!\!\eta \df - 
    \int_{\eta\geq \oline{\delta}} \!\!\!\!\!\!\!\eta \df  
    \label{eq:gain}
\end{align}
Now the maximum gain in the above quantity can occur when, in the region where the classifiers' predictions differ, there $\hat\eta$ is greater than $\eta$ with the maximum possible difference. This is equal to
\begin{align*}
    \int\limits_{x: \oline\delta - \Vert\eta - \hat\eta\Vert_\infty\leq\eta(x)\leq \oline\delta} \!\!\!\!\!\!\!\eta \df 
    &\leq \Pmbb[\oline\delta - \Vert\eta - \hat\eta\Vert_\infty \\ 
    &\leq \eta(X) \leq \oline\delta] \\
    &\leq k_1\Vert\eta - \hat\eta\Vert_\infty. \qquad \text{(by Assumpition~\ref{as:low-weight-around-opt})} \numberthis 
\end{align*}
Hence, 
\bequation
|TP(\hat h_{\oline{\theta}}) - TP(h_{\oline{\theta}})| \leq k_1\Vert\eta - \hat\eta\Vert_\infty.
\eequation
Similar arguments apply for $TN$, which gives us the desired result.
\end{proof}

\section{Extended Experiments}
\label{appendix:experiments}

\begin{table}[t]
	\caption{Empirical Validation for LPM elicitation at tolerance $\epsilon = 0.02$ radians. $\sphi$ and $\hphi$ denote the true and the elicited metric, respectively.}
	\label{tab:app:LPMtheory}
	\begin{center}
		\begin{small}
				\begin{tabular}{|c|c|c|c|}
					\hline
					  $\sphi = \smmbf$ & $\hphi = \hmmbf$ & $\sphi = \smmbf$ & $\hphi = \hmmbf$ \\ \hline 
					(0.98,0.17) & (0.99,0.17) & (-0.94,-0.34) & (-0.94,-0.34) \\
					(0.87,0.50) & (0.87,0.50) &(-0.77,-0.64)& (-0.77,-0.64)  \\
					(0.64,0.77) & (0.64,0.77) & (-0.50,-0.87) & (-0.50,-0.87)  \\
					(0.34,0.94) &(0.34,0.94) &(-0.17,-0.98) & (-0.17,-0.99 ) \\
					\hline
				\end{tabular}
		\end{small}
	\end{center}
\end{table}

In this section, we empirically validate the theory and robustness to finite samples.


\subsection{Synthetic Data Experiments}
\label{ssec:app:theoryexp}

We take the same distribution as in \eqref{prob-dist} with the noise parameter $a = 5$. In the LPM elicitation case, we define a true metric $\sphi$ by $\smmbf = (\smone, \smzero)$. This defines the query outputs in line 6 of Algorithm \ref{bin-alg:linear}. Then we run Algorithm \ref{bin-alg:linear} to check whether or not we get the same metric. The results for both monotonically increasing and monotonically decreasing LPM are shown in Table~\ref{tab:app:LPMtheory}. We achieve the true metric even for very tight tolerance $\epsilon=0.02$ radians.

Next, we elicit LFPM. We define a true metric $\sphi$ by $\{(\spone, \spzero), (\sqone, \sqzero, \sqnot)\}$.  Then, we run Algorithm \ref{bin-alg:linear} with $\epsilon=0.05$ to find the hyperplane $\bell$ and maximizer on $\partial C_+$, Algorithm~3.2 with $\epsilon=0.05$ to find the hyperplane $\tell$ and minimizer on $\partial C_-$, and Algorithm \ref{alg:grid-search} with $n = 2000$ (1000 confusion matrices on both $\partial\Ccal_+$ and $\partial\Ccal_-$ obtained by varying parameter $\theta$ uniformly in $[0, \pi/2]$ and $[\pi, 3\pi/2]$) and $\Delta = 0.01$. This gives us the elicited metric $\hphi$, which we represent by $\{(\hpone, \hpzero), (\hqone, \hqzero, \hqnot)\}$.
In Table \ref{tab:app:LFPMtheoryreal}, we present the elicitation results for LFPMs (column 2). We also present the mean ($\alpha$) and the standard deviation ($\sigma$) of the ratio of the elicited metric $\hphi$ to the true metric $\phi$ over the set of confusion matrices (column 3 and 4 of Table \ref{tab:app:LFPMtheoryreal}). As suggested in Corollary~\ref{cor:f-beta}, if we know the true ratio of $\sfrac{\spone}{\spzero}$, then we can elicit the LFPM up to a constant by only using Algorithm $\ref{bin-alg:linear}$ resulting in better estimate of the true metric, because we avoid errors due to Algorithms 3.2 and~\ref{alg:grid-search}. Line 1 and line 2 of Table \ref{tab:app:LFPMtheoryreal} represent $F_1$ measure and $F_\frac{1}{2}$ measure, respectively. In both the cases, we assume the knowledge of $p_{11}^*=1$. Line 3 to line 6 correspond to some arbitrarily  chosen linear fractional metrics to show the efficacy of the proposed method. For a better judgment, we show function evaluations of the true metric and the elicited metric on selected pairs of $(TP, TN) \in \partial\Ccal_+$ (used for Algorithm \ref{alg:grid-search}) in Figure \ref{fig:app:lfpm-theory}. The true and the elicited metric are plotted together after sorting values based on slope parameter $\theta$. We see that the elicited metric is a constant multiple of the true metric. The vertical solid and dashed line corresponds to the \emph{argmax} of the true and the elicited metric, respectively. In Figure \ref{fig:app:lfpm-theory}, we see that the \emph{argmax} of the true and elicited metrics coincides, thus validating Theorem~\ref{thm:quasi}.

\begin{table}[t]
	\caption{LFPM Elicitation for synthetic distribution (Section \ref{ssec:app:theoryexp}) and Magic (\textsc{M}) dataset  (Section \ref{ssec:app:realexp}) with $\epsilon = 0.05$ radians. $(\spone, \spzero), (\sqone, \sqzero, \sqnot)$ denote the true LFPM. $(\hpone, \hpzero), (\hqone, \hqzero, \hqnot)$ denote the elicited LFPM. $\alpha$ and $\sigma$ denote the mean and the standard deviation in the ratio of the elicited to the true metric (evaluated on the confusion matrices in $\partial\Ccal_+$ used in Algorithm~\ref{alg:grid-search}), respectively. We empirically verify that the elicited metric is constant multiple ($\alpha$) of the true metric.}
	\label{tab:app:LFPMtheoryreal}
	\begin{center}
		\begin{small}
            \resizebox{\textwidth}{!}{%
			\begin{tabular}{|c|c|c|c|c|c|c|}
				\hline
                
				True Metric & \multicolumn{3}{|c|}{Results on Synthetic Distribution (Section \ref{ssec:app:theoryexp})} & \multicolumn{3}{|c|}{Results on Real World Dataset \textsc{M}  (Section \ref{ssec:app:realexp})} \\ \hline
				$(\spone, \spzero), (\sqone, \sqzero, \sqnot)$ & $(\hpone, \hpzero), (\hqone, \hqzero, \hqnot)$ & $\alpha$ & $\sigma$ & $(\hpone, \hpzero), (\hqone, \hqzero, \hqnot)$ & $\alpha$ & $\sigma$ \\
				\hline
				(1.00,0.00),(0.50,-0.50,0.50) & (1.00,0.00),(0.25,-0.75,0.75) & 0.92 & 0.03  & (1.00,0.00),(0.25,-0.75,0.75) & 0.90 & 0.06 \\
				(1.0,0.0),(0.8,-0.8,0.5) & (1.0,0.0),(0.73,-1.09,0.68) & 0.94 & 0.02& (1.0,0.0),(0.72,-1.13, 0.57) & 1.06 & 0.05 \\
				(0.8,0.2),(0.3,0.1,0.3)  & (0.86,0.14),(-0.13,-0.07, 0.60) & 0.90 & 0.06 & (0.23,0.77),(-0.87,0.66,0.76) & 0.84  & 0.09\\
				(0.60,0.40),(0.40,0.20,0.20) & (0.67,0.33),(-0.07,-0.44,76) & 0.82 & 0.05 & (0.16,0.84),(-0.89,0.25,0.89) & 0.65 & 0.05\\
				(0.40,0.60),(-0.10,-0.20,0.65) & (0.36,0.64),(-0.21,-0.25,0.73) & 0.97 & 0.01 & (0.08,0.92),(-0.75,0.12,0.82) & 0.79 & 0.08\\
				(0.20,0.80),(-0.40,-0.20,0.80) & (0.12, 0.88),(-0.43, 0.002, 0.71) & 1.02 & 0.006 & (0.19,0.81),(-0.38,-0.13,0.70) & 1.02 & 0.004  \\
				\hline
			\end{tabular}}
		\end{small}
	\end{center}
\end{table}

\begin{figure}[t]
	\centering 
	\subfigure[Table \ref{tab:app:LFPMtheoryreal}, Line 1, Column 2]{
		{\includegraphics[width=5cm]{binary/plots/lfpm_1_AISTATS.png}}
		\label{fig:app:lfpm_1}
	}
	\subfigure[Table \ref{tab:app:LFPMtheoryreal}, Line 2, Column 2]{
		{\includegraphics[width=5cm]{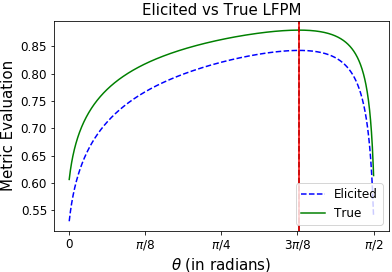}}		\label{fig:app:lfpm_2}
	}
	\subfigure[Table \ref{tab:app:LFPMtheoryreal}, Line 3, Column 2]{
		{\includegraphics[width=5cm]{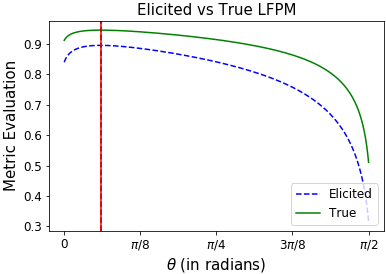}}
		\label{fig:app:lfpm_3}
	}
	\subfigure[Table \ref{tab:app:LFPMtheoryreal}, Line 4, Column 2]{
		{\includegraphics[width=5cm]{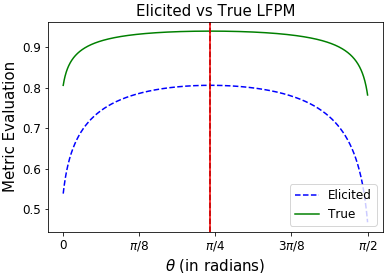}}
		\label{fig:app:lfpm_4}
	}
	\subfigure[Table \ref{tab:app:LFPMtheoryreal}, Line 5, Column 2]{
		{\includegraphics[width=5cm]{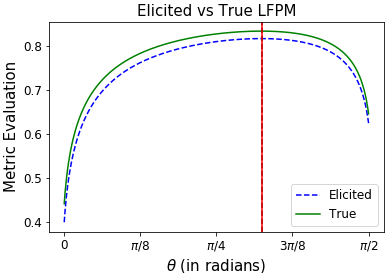}}
		\label{fig:app:lfpm_5}
	}
	\subfigure[Table \ref{tab:app:LFPMtheoryreal}, Line 6, Column 2]{
		{\includegraphics[width=5cm]{binary/plots/lfpm_6_AISTATS.png}}
		\label{fig:app:lfpm_6}
	}
	\caption{True and elicited LFPMs for synthetic distribution from Table \ref{tab:app:LFPMtheoryreal}. The solid green curve and the dashed blue curve are the true and the elicited metric, respectively. The solid red and the dashed black vertical lines represent the maximizer of the true metric and the elicited metric, respectively. The elicited LFPMs are constant multiple of the true metrics with the same maximizer (solid red and dashed black vertical lines overlap).}
	\label{fig:app:lfpm-theory}
	\vskip -0.25cm
\end{figure}

\vspace{-0.25cm}
\subsection{Real-World Data Experiments}
\label{ssec:app:realexp}

In real-world datasets, we do not know $\eta(x)$ and only have finite samples. 
Thus, the feasible space $\Ccal$ is not as well behaved as shown in Figure \ref{fig:fs_cm}, and poses a challenge for the elicitation task. 
Now, we validate the elicitation procedure with two real-world datasets. The datasets are: (a) \textsc{Breast Cancer (BC)} Wisconsin Diagnostic dataset \cite{street1993nuclear} containing 569 instances, and  (b) \textsc{Magic (M)} dataset \cite{dvovrak2007softening} containing 19020 instances. For both the datasets, we standardize the attributes and split the data into two parts $\Scal_1$ and $\Scal_2$. On $\Scal_1$, we learn an estimator $\hat{\eta}$ using regularized logistic regression model with regularizing constant $\lambda=10$ and $\lambda=1$. We use $\Scal_2$ for making predictions and computing sample confusions. 

We generated twenty eight different LPMs $\sphi$ by generating $\theta^*$ (or say, $\smmbf = (\cos{\theta}^*, \sin{\theta}^*))$. Fourteen from the first quadrant starting from $\pi/18$ radians to $5\pi/12$ radians in step of $\pi/36$ radians. Similarly, fourteen from the third quadrant starting from $19\pi/18$ to $17\pi/12$ in step of $\pi/36$ radians. We then use Algorithm~\ref{bin-alg:linear} (Algorithm~3.2) for different tolerance $\epsilon$, for different datasets, and for different regularizing constant $\lambda$ in order to recover the estimate $\hat{\mmbf}$. We compute the error in terms of the proportion of the number of times when Algorithm \ref{bin-alg:linear} (Algorithm~3.2) failed to recover the true ${\smmbf}$ within $\epsilon$ threshold.

We report our results in Table \ref{tab:app:LPMreal}. We see improved elicitation for dataset $M$, suggesting that ME improves with larger datasets. In particular, for dataset $M$, we elicit all the metrics within threshold $\epsilon = 0.11$ radians. We also observe that $\epsilon = 0.02$ is an overly tight tolerance for both the datasets leading to many failures. This is because the elicitation routine gets stuck at the closest achievable confusion matrix from finite samples, which need not be optimal within the given (small) tolerance. Furthermore, both of these observations are consistent for both the regularized  logisitic regression models with regularizer $\lambda$. 

Next, we discuss the case of LFPM elicitation. We use the same true metrics $\sphi$ as described in Section \ref{ssec:app:theoryexp} and follow the same process for eliciting LFPM, but this time we work with \textsc{MAGIC} dataset. 
In Table \ref{tab:app:LFPMtheoryreal} (columns 5, 6, and 7), we present the elicitation results on \textsc{MAGIC} dataset along with the mean $\alpha$ and the standard deviation $\sigma$ of the ratio of the elicited metric and the true metric. Again, for a better judgment, we show the function evaluation of the true metric and the elicited metric on the selected pairs of $(TP, TN) \in \partial\Ccal_+$ (used for Algorithm \ref{alg:grid-search}) in Figure \ref{fig:app:lfpm-real}, ordered by the parameter $\theta$. Although we do observe that the \emph{argmax} is different in two out of six cases (see Sub-figure \subref{fig:app:lfpm_2_magic} and Sub-figure \subref{fig:app:lfpm_3_magic}) due to finite samples, elicited LFPMs are almost equivalent to the true metric up to a constant. 

\begin{table}[t]
    \vskip -0.3cm
	\caption{LPM elicitation results on real datasets ($\epsilon$ in radians). \textsc{M} and \textsc{BC} represent Magic and Breast Cancer dataset, respectively. $\lambda$ is the regularization parameter in the regularized logistic regression models. The table shows error in terms of the proportion of the number of times when Algorithm \ref{bin-alg:linear} (Algorithm~3.2) failed to recover the true $\smmbf (\theta^*)$ within $\epsilon$ threshold. 
	The observations made in Chapter~\ref{chp:binary} are consistent for both models.}
	\label{tab:app:LPMreal}
	\begin{center}
		\begin{small}
			\begin{tabular}{|c|c|c|c|c|}
				\hline
				&
				\multicolumn{2}{|c|}{$\lambda = 10$} &  \multicolumn{2}{|c|}{$\lambda = 1$} \\ \hline
				$\epsilon$ & \textsc{M}  & \textsc{BC} & \textsc{M}  & \textsc{BC} \\
				\hline
				0.02 & 0.57 & 0.79  & 0.54 & 0.79 \\
				0.05 & 0.14  & 0.43  & 0.36 & 0.64 \\
				0.08 & 0.07  & 0.21  & 0.14 & 0.57 \\
				0.11 & 0.00  & 0.07  & 0.07 & 0.43 \\
				\hline
			\end{tabular}
		\end{small}
	\end{center}
	\vskip -0.3cm
\end{table}

\vspace{-0.2cm}
\section{Monotonically Decreasing Case}\label{appendix:decreasing}

If the oracle's metric is monotonically decreasing in \emph{TP, TN}, we can find the supporting hyperplanes at the maximizer and the minimizer. It would require to pose one query $\Omega(C^*_{\pi/4}, C^*_{5\pi/4})$.  The response determines whether we want to search over $\partial\Ccal_+$ or $\partial\Ccal_-$ and apply Algorithms \ref{bin-alg:linear} and 3.2  accordingly. If $C^*_{\pi/4} \prec C^*_{5\pi/4}$, then the metric is monotonically decreasing, and we search for the maximizer on the lower boundary $\partial\Ccal_-$ (and vice-versa).

\begin{figure}[H]
	\centering 
	\subfigure[Table \ref{tab:app:LFPMtheoryreal}, Line 1, Column 5]{
		{\includegraphics[width=5cm]{binary/plots/lfpm_1_magic_AISTATS}}
    \label{fig:app:lfpm_1_magic}
    }
	\subfigure[Table \ref{tab:app:LFPMtheoryreal}, Line 2, Column 5]{
		{\includegraphics[width=5cm]{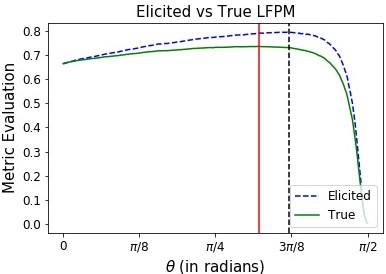}}
		\label{fig:app:lfpm_2_magic}
	}
	\subfigure[Table \ref{tab:app:LFPMtheoryreal}, Line 3, Column 5]{
		{\includegraphics[width=5cm]{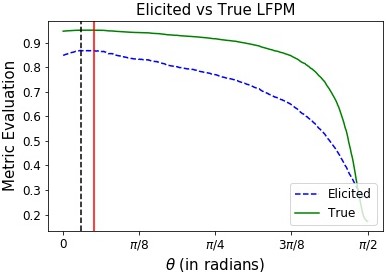}}
		\label{fig:app:lfpm_3_magic}
	}
	\subfigure[Table \ref{tab:app:LFPMtheoryreal}, Line 4, Column 5]{
		{\includegraphics[width=5cm]{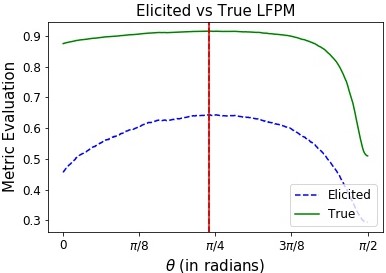}}
		\label{fig:app:lfpm_4_magic}
	}
	\subfigure[Table \ref{tab:app:LFPMtheoryreal}, Line 5, Column 5]{
		{\includegraphics[width=5cm]{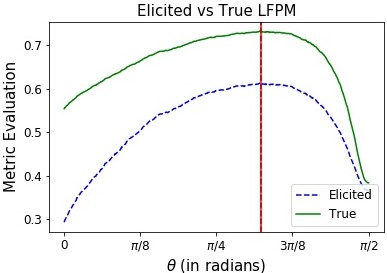}}
		\label{fig:app:lfpm_5_magic}
	}
	\subfigure[Table \ref{tab:app:LFPMtheoryreal}, Line 6, Column 5]{
		{\includegraphics[width=5cm]{binary/plots/lfpm_6_magic_AISTATS}}
		\label{fig:app:lfpm_6_magic}
	}
	\caption{True and elicited LFPMs for dataset \textsc{M} from Table \ref{tab:app:LFPMtheoryreal}. The solid green curve and the dashed blue curve are the true and the elicited metric, respectively. The solid red and the dashed black vertical lines represent the maximizer of the true metric and the elicited metric, respectively. We see that the elicited LFPMs are constant multiple of the true metrics with almost the same maximizer (solid red and dashed black vertical lines overlap except for two cases).}
	\label{fig:app:lfpm-real}
\end{figure}    

\chapter{Multiclass Classification Performance Metric Elicitation}
\label{apx:multiclass}
Let $f_X$ be the marginal distribution for $\Xcal$.  

\section{ShrinkInterval-1 and ShrinkInterval-2 Subroutines}
\label{append:sec:shrink}

\begin{figure}[t]
\begin{minipage}[h]{\textwidth}
  \centering \hspace{-0.5em}
  \begin{minipage}[h]{.48\textwidth}
     \centering
\fbox{\parbox[t]{1.0\textwidth}{\footnotesize{\underline{\bf Subroutine \emph{ShrinkInterval-1}}\normalsize}    \\
\small
\textbf{Input:} Oracle responses for $\Omega(\dmbfbar^c_{1, i}, \dmbfbar^a_{1, i}),$ \\
\text{ \ \ \ \ }\text{ \ \ \ \ } $\Omega(\dmbfbar^d_{1, i}, \dmbfbar^c_{1, i}), \Omega(\dmbfbar^e_{1, i}, \dmbfbar^d_{1, i}), \Omega(\dmbfbar^b_{1, i}, \dmbfbar^e_{1, i})$.\\
\textbf{If} \, ($\dmbfbar^a_{1, i} \succ \dmbfbar^c_{1, i}$) $m^b = m^d$.\\
\textbf{elseif} \, ($\dmbfbar^a_{1, i} \prec \dmbfbar^c_{1, i} \succ \dmbfbar^d_{1, i}$) $m^b = m^d$.\\
\textbf{elseif} \, ($\dmbfbar^c_{1, i} \prec \dmbfbar^d_{1, i} \succ \dmbfbar^e_{1, i}$) $m^a = m^c$,  $m^b = m^e$.\\
\textbf{elseif} \, ($\dmbfbar^d_{1, i} \prec \dmbfbar^e_{1, i} \succ \dmbfbar^b_{1, i}$) $m^a = m^d$.\\
\textbf{else} 
 $m^a = m^d$.\\
\textbf{Output:} $[m^a, m^b]$. 
\normalsize \vspace{-0.25em}
}}
  \end{minipage} \hspace{0.1em}
  \begin{minipage}[h]{.495\textwidth}
     \centering
\fbox{\parbox[t]{1.0\textwidth}

\begin{tikzpicture}[scale = 3]
    

    	\begin{scope}[shift={(-5.0,0)},scale = 0.565]\scriptsize

\def\r{0.06};
	
    
    \draw[thick] (0,0) .. controls (0.2,0) and (0.4,1.6) .. (0.8,1.6) 
    ..controls (1.4,1.6) and (2.2,0) .. (4,0);
    
    \draw[-latex] (0,-.1)--(0,2.505); 
    \draw[-latex] (-0.1,0)--(4.4,0);
    \node[left] at (0,2.35) {$\psi$};
    \node[below right] at (4.1,0) {$m$};
   
   	\coordinate (C1) at (0,0.00);
    \coordinate (C2) at (1,1.56);
    \coordinate (C3) at (2,0.76);
    \coordinate (C4) at (3,0.18);
    \coordinate (C5) at (4,0.00);
    
    \node[below] at (0,0) {$m^a$};
    \node[below] at (1,0) {$m^c$};
    \node[below] at (2,0) {$m^d$};
    \node[below] at (3,0) {$m^e$};
    \node[below] at (4,0) {$m^b$};
    
    \foreach \x in {1,2,3,4} {
    	\draw (\x,-.1) -- (\x,.1);
        \draw[dotted] (\x,0) -- (\x,2);
    }
    \fill[color=black] 
    		(C1) circle (\r)
    		(C2) circle (\r)
            (C3) circle (\r)
            (C4) circle (\r)
            (C5) circle (\r);   
    
    \draw[very thick,-latex] (0.3,0.4) -- (0.7,0.8);
    \draw[very thick,-latex] (1.7,0.4) -- (1.3,0.8);
    \draw[very thick,-latex] (2.7,0.4) -- (2.3,0.8);
    \draw[very thick,-latex] (3.7,0.4) -- (3.3,0.8);
    
    \draw (0,1.8)--(0,2.2) (2,1.8)--(2,2.2);
    \draw[<->] (0,2)--(2,2);
    
    \end{scope}
\end{tikzpicture}
}
  \end{minipage}
\end{minipage}
\caption{(Left): Description of Subroutine \emph{ShrinkInterval-1}. (Right): Visual intuition of the subroutine \emph{ShrinkInterval-1}; in search of the maximizer of a quasiconcave metric $\psi$, the subroutine shrinks the current interval to half based on oracle responses to the four queries.}
\label{mult-append:fig:shrink1}
\end{figure}
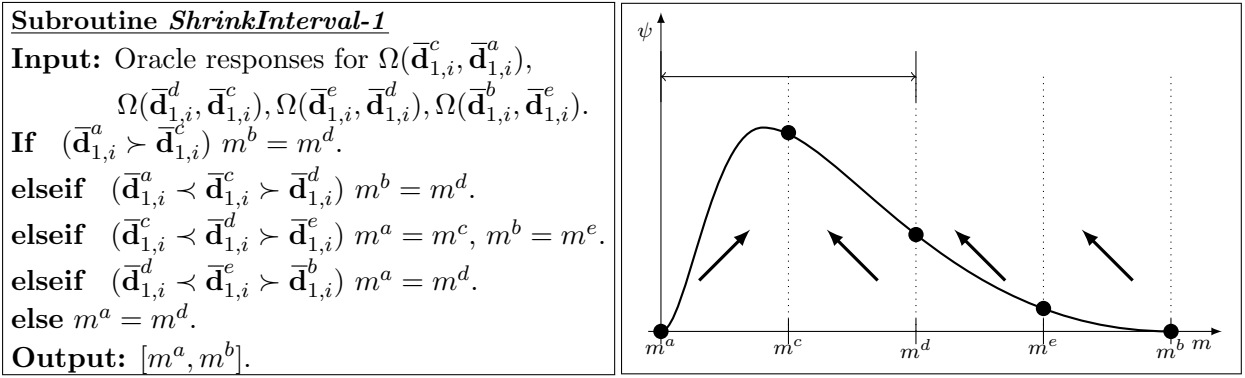

\begin{figure}[t]
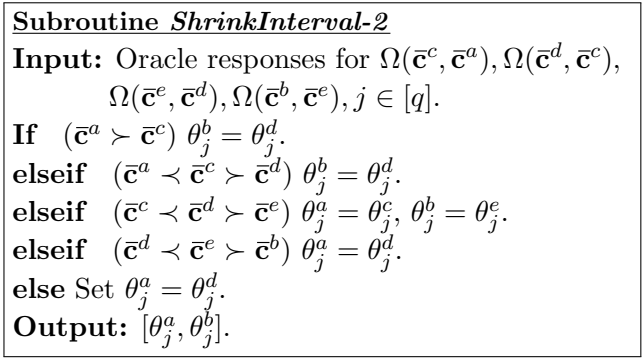

\begin{minipage}[h]{\textwidth}
  \centering \hspace{-0.5em}
  \begin{minipage}[t]{.5\textwidth}
     \centering
\fbox{\parbox[t]{1.0\textwidth}{\footnotesize{\underline{\bf Subroutine \emph{ShrinkInterval-2}}\normalsize}    \\
\small
\textbf{Input:} Oracle responses for $\Omega(\cmbfbar^c, \cmbfbar^a),\Omega(\cmbfbar^d, \cmbfbar^c),$ \\
\text{ \ \ \ \ \ \ \ \ } $\Omega(\cmbfbar^e, \cmbfbar^d), \Omega(\cmbfbar^b,  \cmbfbar^e), j \in [q].$\\
\textbf{If} \, ($\cmbfbar^a \succ \cmbfbar^c$) $\theta_j^b = \theta_j^d$.\\
\textbf{elseif} \, ($\cmbfbar^a \prec \cmbfbar^c \succ \cmbfbar^d$) $\theta_j^b = \theta_j^d$.\\
\textbf{elseif} \, ($\cmbfbar^c \prec \cmbfbar^d \succ \cmbfbar^e$) $\theta_j^a = \theta_j^c$,  $\theta_j^b = \theta_j^e$.\\
\textbf{elseif} \, ($\cmbfbar^d \prec \cmbfbar^e \succ \cmbfbar^b$) $\theta_j^a = \theta_j^d$.\\
\textbf{else} 
 Set $\theta_j^a = \theta_j^d$.\\
\textbf{Output:} $[\theta_j^a, \theta_j^b]$. 
\normalsize
}}
  \end{minipage} 
\end{minipage}
\caption{Formal description of the subroutine \emph{ShrinkInterval-2}. \emph{ShrinkInterval-2} is same as \emph{ShrinkInterval-1} except that it applies to the parameter $\theta_j$ and works with responses to off-diagonal confusions based queries.}
\vskip -0.2cm
\label{append:fig:shrink2}
\end{figure}

Notice that both \emph{ShrinkInterval} sub-routines work with responses to four queries, and based on the responses divides the interval into two. Since the metric  dealt in Algorithm~\ref{mult-alg:dlpm} is concave and unimodal (see Lemma~\ref{lemma:slice} and Remark~\ref{rem:concave}), four queries are required to shrink the interval into by half in every iteration. Since we use the enclosed sphere for LPM elicitation, we can shrink the interval into half based on just two queries in Algorithm~\ref{mult-alg:lpm}, i.e. by querying $\Omega(\cmbfbar^d, \cmbfbar^c)$ and $\Omega(\cmbfbar^e,  \cmbfbar^d)$, due to strong convexity of the sphere (see proof of Theorem~\ref{thm:lpm-elit-error}). However, we show use of four queries in Algorithm~\ref{mult-alg:lpm} just to make the algorithms consistent for the readers to understand.  

\section{Proofs of Section~\ref{mult-sec:confusion}}
\label{append:sec:confusion}

\bproof[Proof of Proposition~\ref{prop:Cd}] The following are the properties of $\Dcal$.

\bitemize[leftmargin=1em]
\item \emph{Convex}: Let us take two classifiers $h_1, h_2 \in \Hcal$ which achieve the diagonal confusions $\dmbf{(h_1)}, \dmbf{(h_2)} \in \Dcal$. We need to check  whether there exists a classifier, which achieves the off-diagonal confusion $\lambda \dmbf{(h_1)} + (1-\lambda)\dmbf{(h_2)}$. Consider a classifier $h$, which with probability $\lambda$ predicts what classifier $h_1$  predicts and with probability $1-\lambda$ predicts what classifier $h_2$ predicts. Then the first component 
\begin{align*}
d_{1}(h) &= \Pmbb(Y=1, h=1)  \\ \nonumber
&= \Pmbb(Y=1, h=h_1|h=h_1)\Pmbb(h=h_1) + \Pmbb(Y=1, h=h_2|h=h_2)\Pmbb(h=h_2)   \\ \nonumber
&= \lambda d_{1}{(h_1)} + (1-\lambda)d_{1}{(h_2)}. \numberthis 
\end{align*}
Similarly, this hold true for $d_{i}{(h)}$ for $i \in [k]$. Hence, $C$ is convex.
\item \emph{Bounded:} Since $D_{i} = P[Y=i, h=i]\leq\zeta_i$ for all $i \in [K]$, $\Dcal \subseteq [0, \zeta_1] \times \cdots \times [0, \zeta_k]$.
\item \emph{Strictly convex and closed:} Since $\Ccal$ is convex, its boundary is intersection of half spaces. Furthermore, any linear functional is maximized at the boundary of a convex set~\cite{boyd2004convex}. Suppose we are given a diagonal linear functional (DLPM) $\ambf$. The BO classifier $h^\ambf$ for that function is given by Proposition~\ref{prop:d-bayes} (whose proof is discussed later). Let the value achieved by the corresponding $BO$ diagonal confusion $\dbar$ is $\alpha$. That is,

\begin{align}
\alpha &= \sum_{i=1}^k a_id_{i} = \sum_{i=1}^k \int_\Xcal a_i \eta_i(\xmbf) \1[h^\ambf(\xmbf) = i|X=\xmbf]df_X. 
\end{align}

Now, if we want to construct another classifier which achieves the same value $\alpha$, there has to be some weight shift from one class to another class without changing the maximum value $\alpha$, but note that $\Pmbb[a_i\eta_i(X) = a_j\eta_j(X)]=0$ for all $i, j \in [k]$ due to Assumption~\ref{mult-as:eta}. 

\noindent Hence, there is a unique maximizer of this linear functional on the boundary. Therefore, the space is strictly convex. One characterization of the boundary of the space $\partial \Dcal$ can be given by BO diagonal-confusions corresponding to any linear functional $\ambf$. These diagonal confusions are achieved by the corresponding BO classifiers. Therefore, these diagonal confusions are always achievable, and the space is closed as well. 

\item \emph{$\vmbf_i$ are always achieved:} It is easy to see that any trivial classifier which predicts only class $i\in [k]$, will achieve the diagonal confusion defined by $\vmbf_i$. 
\item \emph{$\vmbf_i$ are the only vertices:} Certainly, a vertex exists if (and only if) some point is supported by more than $k$ tangent hyperplanes in $k$ dimensional space. This means that the vertex is optimal for more than $k$ linear metric (linear functional). Clearly, all the metrics with slope $\ambf$ such that $a_i > a_j > 0$ and $a_l=0 \; \forall \; l \in [k], l\neq i, j$ support $\vmbf_i$. So, there are at least $k$ supporting hyperplanes at these points, which make them the vertices. Now, we show that these are the only vertices.

Suppose there is a point other than $\vmbf_i$'s which is supported by two hyperplanes given by the slopes $\ambf^1$ and $\ambf^2$. From Proposition~\ref{prop:d-bayes} (discussed later), we can get Bayes optimal classifiers $h^{\ambf^1}$ and $h^{\ambf^2}$, which achieve the same diagonal confusions. This means that
\begin{align}
    &\int_{\xmbf:\frac{\eta_1(\xmbf)}{\eta_j(\xmbf)} \geq t_j, j \in \{2, \cdots, K\}} \eta_1(\xmbf)df_X = \int_{\xmbf:\frac{\eta_1(\xmbf)}{\eta_j(\xmbf)} \geq t'_j, j \in \{2, \cdots, K\}} \eta_1(\xmbf)df_X,  
\end{align}
i.e., the first component $d_{1}$ should be equal for the two classifiers, where $t_j, t'_j$'s are dependent on $\ambf^1$ and $\ambf^2$. Since, these classifiers are different at least for one $j$, $t_j \neq t'_j$. This will mean that there are multiple values of $\frac{\eta_1(\xmbf)}{\eta_j(\xmbf)}$ which are not attained. This contradict with our Assumption~\ref{mult-as:eta} that $g_{1j}$ is strictly decreasing. By strict convexity, there are no supporting hyperplane tangent at multiple points. Hence, $\vmbf_i$ are the only vertices of the set $\Dcal$.   
\eitemize

Since we take classifiers which predict only classes $k_1$ and $k_2$, the values of any diagonal confusion $\dmbf \in \Dcal_{k_1, k_2}$ evaluate to zero at indices except $k_1, k_2$. Therefore, the properties of the space $\Dcal_{k_1, k_2}$ can be proved on similar lines to Chapter~\ref{chp:binary}. 
\eproof

\bproof[Proof of Proposition~\ref{mult-prop:C}] The following are the properties of the space $\Ccal$.
\bitemize[leftmargin=1em, noitemsep]
\item \emph{Convex} The space is convex follows from first point of Proposition~\ref{prop:Cd}.
\item \emph{Bounded:} $C_{ij} = \Pmbb[Y=i, h=j] \leq \Pmbb[Y=i]=\zeta_i$ for $i, j \in [k]$. When confusion matrices written in row major form excluding the diagonal terms, then it is easy to see that $\Ccal \subseteq [0, \zeta_1]^{(k-1)}\times [0, \zeta_2]^{(k-1)}\times \cdots \times [0, \zeta_k]^{(k-1)}$. 
\item \emph{$\umbf_i$'s and $\ombf$ are always achieved:} The classifier which always predicts class $i$, will achieve the confusion matrix $\umbf_i$. Thus, $\umbf_i \in \Ccal \, \forall\, i \in [q]$. Furthermore, a classifier which predicts similar to one of the trivial classifiers with probability $1/k$ will achieve the confusions $\ombf$ (the centroid). 
\item \emph{$\umbf_i$'s are vertices:} Any supporting hyperplane with slope $a_{1i} < a_{1j} < 0$ and $a_{1l}=0$ for $l \in [k], l \neq i, j$ will be supported by $\umbf_1$ (corresponding to BO classifier which predict class 1). Thus, $\umbf_1$ is supported by at least $q$ hyperplanes. Thus, it becomes a vertex of the convex set. Similar is the case with other $\umbf_i$'s.
\eitemize
\eproof

Proposition~\ref{prop:d-k1k2-bayes} can be considered as a corollary of the following more general Proposition.

\bprop
Let $\psi \in \varphi_{DLPM}$, parametrized by $\ambf$, then 
\begin{align}
\hbar(\xmbf) &= \argmax_{i \in [k]} a_i\eta_i(\xmbf), \, \; \; \text{and} \, \;\; 
\barbelow{h}(\xmbf) = \argmin_{i \in [k]} a_i\eta_i(\xmbf)
\label{eq:bayes-dlpm}
\end{align}
are the BO and IBO classifiers \emph{w.r.t} $\psi$, respectively.
\label{prop:d-bayes}
\eprop

\bproof
Let
\begin{align}
\psi &= \sum_i a_i d_{i} = \sum_i \int_\Xcal a_i \eta_i(\xmbf)\1[h(\xmbf)=i]. 
\end{align}
From this mathematical form, it is easy to see that the metric achieves its maximum when a class that maximizes the
expected utility conditioned on the instance is predicted. That is, the metric achieves its maximum when a classifier deterministically predicts class $i$ when $i=\argmax_{j\in [k]} a_j\eta_j(x)$. This is the form of the classifier written in the proposition. Similarly, this metric is minimized when when a classifier minimizes the
expected utility conditioned on the instance, by predicting class $i = \argmin_{j\in [k]} a_j\eta_j(x).$
\eproof

\bproof[Proof of Proposition~\ref{prop:d-k1k2-bayes}]
Recall that classifiers which predict only class $k_1$ and $k_2$ will achieve diagonal confusions, which have zeros at every other index except $k_1, k_2$. Therefore, 
\begin{align*}
\psi &= \sum_i a_i d_{i} = a_{k_1}d_{k_1} + a_{k_2}d_{k_2}\\
&= \int_\Xcal a_{k_1} \eta_{k_1}(x)\1[h(x)=k_1] + \int_\Xcal a_{k_2} \eta_{k_2}(x)\1[h(x)=k_2]. \numberthis 
\end{align*}
Again, using the idea used in the previous proof, the metric achieves its maximum when a class that maximizes the
expected utility conditioned on the instance is predicted. Therefore, 
\bequation
\hbar_{k_1, k_2}(x) = \left\{\begin{array}{lr}
			 k_1, & \; \text{if} \; a_{k_1} \eta_{k_1}(\xmbf) \geq a_{k_2} \eta_{k_2} (\xmbf) \\
			 k_2,& \;  o.w. 
	 	 \end{array}\right\}
\eequation
is the RBO classifier (restricted to classes $k_1, k_2$) with respect to
$\psi$. Furthermore, the RIBO classifier is given by $\barbelow{h}_{k_1, k_2}(\xmbf)= k_2\1[\hbar_{k1, k_2}(\xmbf)=k_1] + k_1\1[\hbar_{k1, k_2}(\xmbf)=k_2]$.
RIBO classifier does exactly the opposite of RBO, i.e., it predicts class $k_1$, wherever RBO predicts class $k_2$ on the instance space $\Xcal$ and vice-versa.
\eproof

\bproof[Proof of Lemma~\ref{mult-lem:spherebayes}]
Suppose the origin is at $\ombf$ and the constrained set is the sphere $
\Scal_\lambda$ with radius $\lambda$ centered at $\ombf$. We want to maximize $\inner{\ambf}{\cmbf}$ such that $\cmbf \in \Scal_\lambda$. Since a linear metric over a convex set is maximized at the boundary~\cite{boyd2004convex}, it is easy to see that $c_i = \lambda a_i$ will maximize this metric. Moving the reference point to the original origin i.e. $\bm{0}^q$ gives us the required answer.  
\eproof

\section{Proofs of Section~\ref{mult-sec:me}}
\label{append:sec:me}

We write Lemma~\ref{lemma:slice} in the following more general form.

\blemma
Let $\psi:\Dcal \to \mathbb R \, (\xi:\Dcal \to \mathbb R)$  be a quasiconcave (quasiconvex) function, which is 
monotone increasing in all $\{d_{i}\}_{i=1}^k$. For $k_1, k_2 \in [k]$, let 
$\rho^+:[0,1]\to \partial\Dcal^+_{k_1, k_2}$ ($\rho^-:[0,1]\to \partial\Dcal^-_{k_1, k_2}$) be a continuous, bijective, parametrization of the upper (lower) boundary. 
Then the composition $\psi \circ \rho^+: [0,1]\to\mathbb R$ ($\xi \circ \rho^-: [0,1]\to\mathbb R$) is
quasiconcave (quasiconvex) and thus unimodal on the interval $[0, 1]$. 
\label{lemma:sliceext}
\elemma

\bproof
A function is quasiconcave iff super-level sets are convex.  We already know from Proposition~\ref{prop:Cd} $\Dcal_{k_1, k_2}$ is convex. Moreover, any vector of diagonal confusions has zeros at every index except at indices $k_1, k_2$. Let $\psi:\Dcal\rightarrow \Rmbb$ be a quasiconcave metric, which implies  that its super-level sets $\Lcal_r^{\Dcal}(\psi) = \{\dmbf \in \Dcal \;:\; \psi(\dmbf) \geq r\}$ are convex. Now, consider the super-level sets of $\psi$ restricted to the diagonal confusions in $\Dcal_{k_1, k_2}$ i.e. $\Lcal_r^{\Dcal_{k_1, k_2}}(\psi) = \{\dmbf \in \Dcal_{k_1, k_2} \;:\; \psi(\dmbf)\geq r\}$. Take any $\dmbf^1, \dmbf^2 \in \Lcal_r^{\Dcal_{k_1, k_2}}(\psi)$. Since $\dmbf^1, \dmbf^2 \in \Dcal$ as well, they belong to the set 
$\Lcal_r^{\Dcal}(\psi)$, which is convex. Hence, for $t \in [0, 1]$, 
$t\dmbf^1 + (1-t)\dmbf^2 \in \Lcal_r^{\Dcal}(\psi)$, which implies that $\psi(t\dmbf^1 + (1-t)\dmbf^2)\geq r$. 
Furthermore, $t\dmbf^1 + (1-t)\dmbf^2 \in \Dcal_{k_1, k_2}$, because $\Dcal_{k_1, k_2}$ is convex. By the above two arguments, we have that $t\dmbf^1 + (1-t)\dmbf^2 \in \Lcal_r^{\Dcal_{k_1, k_2}}(\psi)$. This implies that $\Lcal_r^{\Dcal_{k_1, k_2}}(\psi)$ is convex, and hence $\psi$ restricted to $\Dcal_{k_1, k_2}$ is quasiconcave. The proof analogously follows for quasiconvex metric $\xi$. 

Now, it remains to show that $\psi \circ \rho^+: [0,1]\to\mathbb R$ ($\psi \circ \rho^-: [0,1]\to\mathbb R$) is
quasiconcave (quasiconvex). This can be proved by readily extending the proof of Lemma~\ref{lem:quasiconcave} (Chapter~\ref{chp:binary}) to the diagonal multiclass case. For the sake of completeness, we also provide the proof here. 

We will prove the result for $\psi \circ \rho^+$ on $\partial\mathcal D^+_{k_1, k_2}$, and the argument for $\xi\circ \rho^-$ on $\partial\mathcal D^-_{k_1, k_2}$ is essentially the same. For simplicity, we drop
the $+$ symbols in the notation. It is given that $\psi$ is quasiconcave. Let $S$ be some superlevel set of $\psi$. We first want to show that for any $r<s<t$, if $\rho(r)\in S$ and $\rho(t)\in S$, then $\rho(s)\in S$. Since $\rho$ is a continuous bijection, due to the geometry of $\Dcal_{k_1, k_2}$, we must have --- wlog ---
$d_{k_1}(\rho(r))< d_{k_1}(\rho(s)) < d_{k_1}(\rho(t))$, and $d_{k_2}(\rho(r))>d_{k_2}(\rho(s))>d_{k_2}(\rho(t))$ (otherwise swap $r$ and $t$). Since the set $\Dcal_{k_1, k_2}$ is strictly convex and the image of $\rho$ is $\partial \Dcal_{k_1, k_2}$, then $\rho (s)$ must dominate (component-wise) a point in the convex combination of $\rho(r)$ and $\rho(t)$. Say that point is $z$. Since $\psi$ is monotone increasing, then $x\in S\implies y \in S$ for all $y\geq x$ component-wise. Therefore, $\psi(\rho(s)) \geq \psi(z)$. Since, $S$ is convex, $z \in S$ and, due to the argument above, $\rho(s) \in S$.

This implies that $\rho^{-1}(\partial \mathcal D_{k_1, k_2} \cap S)$ is an interval, and is therefore convex. Thus, the superlevel sets of $\psi\circ \rho$ are convex, so it is quasiconcave,
as desired. This implies unimodaltiy as a function over the real line since a function which has more than one local maximum can not be quasiconcave (consider the super-level set for some value slightly less than the lowest of the two peaks).
\eproof

\section{Proofs of Section~\ref{mult-sec:guarantees}}\label{append:sec:guarantees}
\vskip -0.6cm

\begin{proof}[Proof of Theorem~\ref{thm:dlpm-elit-error}]
     In Chapter~\ref{chp:binary}, it is shown that for binary classification, the inner loop of Algorithm~\ref{mult-alg:dlpm} will estimate the value of $\hat m$ for the Bayes-optimal binary classifier corresponding to a linear metric $\ambf^\ast=(m^\ast,1-m^\ast) \in \Rmbb^2$, such that $|\hat m-m^\ast|<\epsilon+\sqrt{\epsilon_{\Omega}}$ after $O(\log \tfrac1\epsilon)$ iterations. Now, in the multiclass case, this allows us to argue that, for any $1\leq i<j\leq k$, we can estimate a value $m_{ij}$ such that $a^\ast_i/a^\ast_j = (1-m_{ij})/m_{ij}$.
    
    For the required guarantees, wlog, we assumed throughout the algorithm that $a_1 \geq a_k/2$ for all $k$. This is because, if $a_1$ does not satisy this condition, then we can always choose an index $z \in [k]$ which does satisfy this from the following procedure:
\balgorithmic
\STATE Set $z\gets 1$
\FOR{$t=2, 3, \cdots, k$}
\STATE Compute an estimate $\hat m_{tz}$ of $m_{tz}$.
\STATE \textbf{if} {$\hat m_{tz}<\tfrac 12$} \textbf{then} {$z\gets t$} \textbf{else} {do nothing} 
\ENDFOR
\STATE\textbf{Output:} $z$.
\ealgorithmic
\vskip -0.2cm
Let $\varepsilon = \epsilon+\sqrt{\epsilon_\Omega}$.
Now, if $\hat m_{tz}<\tfrac 12$, then $a^\ast_t\geq a^\ast_z\cdot (\tfrac 12-\varepsilon)/(\tfrac 12+\varepsilon)=\frac{1-2\varepsilon}{1+2\varepsilon}$.
It can be shown that this ratio is at least $1-4\varepsilon$. Therefore, if $z$ is the final coordinate output, we must have that $a_z\geq (1-4\varepsilon)^k a_t$ for all $t$. But $(1-4\varepsilon)^k\approx e^{-4k\varepsilon}$, and so for $\varepsilon$ sufficiently small, we have $a_z\geq a_t/2$ for all $t$ as desired.
    Now that we have our assumption, we may proceed to show that the algorithm is correct. We wish to show that $\Vert \ambfhat/|\hat a_z| - \ambf/|a_z|\Vert_{\infty}<O(\varepsilon)$.
We have 
\begin{align*}
    \left|\frac {\hat a_t}{\hat a_z} - \frac{ a_t}{ a_z}\right| &= 
    \left|\frac{1-\hat m_t}{\hat m_t}-\frac{1- m_t}{ m_t}\right|
    = \left|\frac 1{\hat m_t}-\frac 1{ m_t}\right| \\
    &\leq \frac 1{m_t-\varepsilon}-\frac 1{m_t}\leq \frac 1{m_t}\left(\frac 1{1-2\varepsilon}-1\right)
    \leq 2\cdot 2\varepsilon/(1-2\varepsilon)\leq 5\varepsilon \numberthis
\end{align*}
for $\varepsilon<0.1$. This gives us the deisred bound.
\end{proof}

\begin{proof}[Proof of Theorem~\ref{thm:lpm-elit-error}]

Consider the geometry shown in the Figure~\ref{append:fig:theorem2} (left). This shows a function $f [-1,1]^q \rightarrow \Rmbb$ which follow the trajectory of a unit semicircle (semisphere). Let $\xmbf$ be a q-dimensional vector, then this function is given by:
\begin{equation}
    f(\xmbf) = 1 - \sqrt{1 - \sum_i^q x_i^2}.
    \label{eq:distance}
\end{equation}

\begin{figure}[t]
\begin{minipage}[h]{\textwidth}
  \centering \hspace{-0.5em}
  \begin{minipage}[h]{.48\textwidth}
     \centering
\fbox{\parbox[t]{1.0\textwidth}

\begin{tikzpicture}[scale = 2.5]


     \begin{scope}[shift={(4.3,0)},scale = 0.5]\scriptsize
     
     \def\r{0.08};
	
    
    \draw[-latex] (2,-.1)--(2,2.2); 
    \draw[-latex] (-0.1,0)--(4,0);
    \node[left] at (2,2) {$f(\xmbf)$};
    \node[below right] at (3.8,0) {$\xmbf$};
    
    \coordinate (Cent) at (2, 1);
    \coordinate (xl) at (1.3, 0.3);
    \coordinate (xr) at (2.9, 0.6);
    \coordinate (xa) at (1.3, 0);
    \coordinate (xb) at (2.9, 0);
    \coordinate (xstar) at (2.1, 0);
    
    \fill[color=black] 
            (Cent) circle (\r)
            (xl) circle (\r)
            (xr) circle (\r);
    \foreach \x in {1,3} {
    	\draw (\x,-.1) -- (\x,.1);
        \draw[dotted] (\x,0) -- (\x,2);
    }
    \draw[dotted] (1.3,0) -- (1.3,0.3);
    \draw[dotted] (2.9,0) -- (2.9,0.6);
    
     \node[below] at (xa) {$\xmbf^a$};
     \node[below] at (xb) {$\xmbf^b$};
     \node[below] at (xstar) {$\xmbf^\ast$};
    
    
    \draw[thick] (3,1) arc (360:180:1);

     \end{scope}

\end{tikzpicture}
}
  \end{minipage} \hspace{0.3em}
  \begin{minipage}[h]{.48\textwidth}
     \centering
\fbox{\parbox[t]{1.0\textwidth}

\begin{tikzpicture}[scale = 1]
    

    	\begin{scope}[shift={(4.3,0)},scale = 0.5]\scriptsize
     
     \def\r{0.12};
	
    
    \coordinate (C*) at (3.8,3.05);
    \coordinate (l*) at (0.0,4.0);
    \coordinate (Ct) at (2.7,0.3);
    \coordinate (lt) at (-0.6,0.45);
    \coordinate (Cent) at (3,1.75);
    \coordinate (Space) at (0.20,0.1);
    \coordinate (Sphere) at (5,2.65);
    \coordinate (Lambda) at (3.75,2);
    \coordinate (C) at (1.525,1.35);
    \coordinate (fC) at (0.80,1.25);
    
    
    \draw (C) -- (1.25,0.70);
    \draw (Cent) -- (4.5, 1.75);
    
    \draw[-latex] (-2,1.75)--(7,1.75); 
    \draw[-latex] (3,-1.75)--(3,4.75);
    
    \fill[color=black] 
            (Cent) circle (0.08)
            (Ct) circle (\r)
            (C) circle (0.04);
            

    \draw (Cent) circle (1.5cm);
    
    \node[above left] at (lt) {$\bar{\ell}^\ast$};
    \node at (Sphere) {\large{$\Scal_\lambda$}};
    \node[right] at (C) {\tiny$\mu(\thetambf)$\normalsize};
    \node at (fC) {\tiny$f^\ast_{\mu(\thetambf)}$\normalsize};
    
    \draw[dotted] (Cent) +(-3,0.9) -- +(2,-0.6);
   	
    \draw (Ct) +(-4,1) -- +(3,-0.75);
    
    \draw[thick] (4.45,1.4) arc (345:163:1.5);
    \node[below left] at ($(Ct)+(0,0.15)$) {$\mu(\thetambf^*)$};
    \node[below] at (Cent) {$\ombf$};
     \end{scope}
\end{tikzpicture}
}
  \end{minipage}
\end{minipage}
\caption{(Left): A function for the semicircle with unit radius. (Right): Visual intuition for the distance between the boundary points and tangent place at the optimal off-diagonal confusions.}
\label{append:fig:theorem2}
\end{figure}
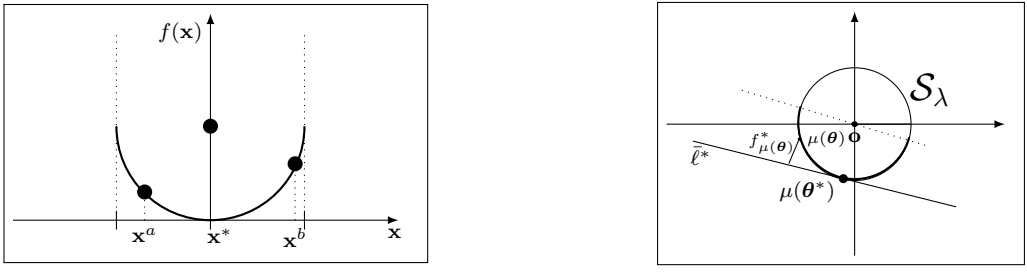

Intuitively, this function evaluates the distance of the points lying on the surface of the semisphere. The point $\xmbf^\ast$ (the origin) is the unique minimizer of this function. Let us restrict the domain of this function to the points $Q = [\xmbf^a, \xmbf^b]$, where $\xmbf^a > -1$ (component-wise) and $\xmbf^b < 1$ (component-wise). Then it is easy to see that the derivative of this function:
\begin{align}
    \nabla f = \left( \frac{x_1}{\sqrt{1 - \sum_i^q x_i^2}}, \dots, \frac{x_q}{\sqrt{1 - \sum_i^q x_i^2}}\right)
\end{align}
is continuously differentiable on a compact domain $Q$. Thus, $\nabla f$ is Lipschitz with some Lipschitz parameter $L$ i.e.:
\begin{equation}
    \Vert \nabla f(\ymbf) - \nabla f(\xmbf) \Vert_2 \leq L \Vert y - x \Vert_2
\end{equation}
which makes the function $f$ to be $L$-smooth. In addition, we observe that:
\begin{align*}
    f(\xmbf) = 1 - \sqrt{1 - \sum_i^q x_i^2}  &\geq \frac{1}{2}\sum_i^q x_i^2. \numberthis
\end{align*}
This implies that there exists a paraboloid always below the function $f$, which by definition, makes the function $f$ a strongly convex function (say with strong convexity parameter $\tau$). Thus, this function satisfies all the requirements i.e smoothness, strong convexity, and has unique minimizer, to inherit the guarantees from Derivative Free Optimization~\cite{jamieson2012query}. Notice that if we apply the coordinate-wise binary search Algorithm~\ref{mult-alg:lpm}, where the inner loop is run for $\log(1/\epsilon)$ queries, to minimize this function using pairwise comparison queries (i.e. the oracle responds with the point that evaluate to lesser value of $f$ out of the two), then by Theorem~5 of~\cite{jamieson2012query} one can guarantee that after $\frac{4L}{\tau} \log(\frac{f(\xmbf^0) - f(\xmbf^\ast)}{\epsilon^2 2 qL^2/\tau})q \log(1/\epsilon)$ queries to the oracle, we can get an estimate of the minimizer $\xmbf^T$ such that $f(\xmbf^T) - f(\xmbf^\ast) < 4qL^2\epsilon^2/\tau$. Notice that for this function $f(x^0) - f(x^\ast) = f(x^0) - 0 = f(x^0) \leq 1$.

Now, for simplicity assume $\lambda=1$. As we discussed, LPM elicitation problem, where queries are asked on a sphere $S_\lambda$ has a dual form, where we use a $(q-1)$ dimensional bijective parametrization based on $\thetambf$ to denote the points on the surface of the sphere. Notice that this parametrization is a function of $\sin$ and $\cos$ and hence it is Lipschitz as well. Due to monotonicity condition, we assume that the points lie on one orthant of the sphere. Now, suppose the true oracle's metric is denoted by $\ambf^\ast$, where $a^\ast_i = \Pi_{j=1}^{i-1} \sin\theta_j \cos{\theta_i}$ for $i \in [q-1]$ and $a^\ast_q = \Pi_{j=1}^{q-1} \sin\theta_j$. Let us denote this parametrization of LPMs by $\Upsilon$, i.e. $\ambf^\ast = \Upsilon(\thetambf^\ast)$. This hyperplane is tangent to the unit sphere on a particular point whose coordinates are $\Upsilon(\thetambf^\ast)$ itself. Since the metric is linear, by posing pairwise comparisons to the oracle, we ask which off-diagonal confusion is closer to the hyperplane.  So, to reach the tangent point on the boundary of the sphere by pairwise comparisons, we are actually decreasing a distance-like function $f^\ast{(\cmbf)}$ shown in Figure~\ref{append:fig:theorem2} (right). This function can be represented as $
f^\ast(\thetambf) = 1 - \inner{\Upsilon(\thetambf^\ast)}{\Upsilon(\thetambf)}$
where $\Upsilon(\thetambf^\ast)$ are fixed coefficients and $\thetambf$ changes in our algorithm. This is equivalent to the $f$ function discussed above. Thus using the above guarantees, after $z_1\log(z_2/(q\epsilon^2)) (q-1)\log(1/\epsilon)$ queries to the oracle, where $z_1, z_2$ are constants independent on $\epsilon$ and $q$, we have:

\begin{align*}
    f^\ast(\thetambf) - f^\ast(\thetambf^\ast) &= f^\ast(\thetambf) - 0 \\
    &= 1 - \inner{\Upsilon(\thetambf^\ast)}{\Upsilon(\thetambf))} \\
    &\leq z_3 q\epsilon^2, \numberthis
\end{align*}
where $z_3$ is a constant depending on curvature of the above function $f$. 
This implies that:
\begin{align*}
    \Vert \ambf^\ast - \ambfhat \Vert_2^2 &= \Vert \ambf^\ast \Vert_2^2 + \Vert \ambfhat \Vert_2^2 - 2 \inner{\ambf^\ast}{\ambfhat} \\
    &= 2(1 - \inner{\ambf^\ast}{\ambfhat}) \\
    &\leq 2 z_3 q\epsilon^2. \numberthis
\end{align*}
Using the inequality proved before we have that $\Vert \ambf^\ast - \ambfhat \Vert_2 \leq O(\sqrt{q}\epsilon)$.
Therefore, in $O\left(T\log \tfrac{1}{\epsilon}\right)$, we can achieve a point $O(\sqrt{q}\epsilon)$ close to the minimizer, where the number of iterations $T \geq z_1\log(z_2/(q\epsilon^2)) (q-1)$. The term $z_1\log(z_2/(q\epsilon^2))$ can be considered as the number of cycles, but due to the curvature of the sphere, we find that it is not a dominating factor in the query complexity. For example, when working with a sphere and $\epsilon = 10^{-2}$, two cycles (i.e. $T=2(q-1)$ in Algorithm~\ref{mult-alg:lpm}) suffices in practice. Thus, updating each $\theta_j$ twice in cycles is sufficient for obtaining the required metric.

It remains to show that, whenever the queried angle is at least $\sqrt{3\epsilon_\Omega/\lambda}$ from the optimal angle, then the oracle gives a correct response. To see this, restrict attention to the hyperplane in which the current angle is moving, say $j$, for the binary-search phase of the loop. Let $\theta^\ast_j$ be the optimal angle. Observe that for any $\theta_j$ such that $\lambda \cos(\theta_j-\theta^*_j)\geq \lambda-\epsilon_\Omega$, the oracle may return a false value. This is because the performance metric is a 1-Lipschitz linear map, and the optimal value on the sphere of radius $\lambda$ is $\lambda$. However, $\cos(x)\leq 1-x^2/3$, and so for $|\theta_j-\theta^*_j| \geq \sqrt{3\epsilon_\Omega/\lambda}$, we have $\lambda \cos(\theta_j-\theta^*_j)\leq \lambda - \lambda (3\epsilon_\Omega/\lambda)/3 = \lambda-\epsilon_\Omega$.
Therefore, so long as $|\theta_j-\theta^*_j|\geq \sqrt{3\epsilon_\Omega/\lambda}$, the oracle provides a correct answer, and the binary search proceeds in the correct direction. 
\end{proof}

\subsection{Finding the Sphere $\Scal_\lambda$}
\label{append:ssec:slambda}

Now, we discuss how a sufficiently large sphere $\Scal_\lambda$ with radius $\lambda$ may be found. Consider the following optimization problem, which is a special case of OP2 in~\cite{narasimhan2018learning}. This problem corresponds to feasiblity check problem for a given off-diagonal confusion $\cmbf^0$ for small $\delta \in \Rmbb$. 

\begin{align}
    \min_{\cmbf \in \Ccal} \; 0 \qquad s.t. \;\; \Vert \cmbf - \cmbf^0 \Vert_2 \leq \delta
    \label{mult-eq:op1}
\end{align}

If a solution to the above problem exists, then Algorithm~1 of~\cite{narasimhan2018learning} returns it. Basically, the approach in~\cite{narasimhan2018learning} will try to construct a classifier whose off-diagonal confusions are $\delta$-close to the given off-diagonal confusion $\cmbf^0$. Hence, checking the feasibility. 

\balgorithm[t]
\caption{Approximating the $\lambda$ Radius}
\label{mult-alg:lambda}
\small
\balgorithmic[1]
\STATE \textbf{Input:} The center $o$ of the feasible region of classifiers.
\FOR{$j=1, 2, \cdots, q$}
\STATE Let $\mathbf e_j$ be the standard basis vector for the $j$-th dimension. 
\STATE Compute the maximum $\ell_j$ such that $o + \ell_j \mathbf e_j$ is feasible by solving~\eqref{mult-eq:op1}.
\ENDFOR
\STATE Let $CONV$ be the convex hull of $\{o\pm \ell_j\mathbf e_j\}_{j=1}^{q}$.
\STATE Compute the radius $r$ of the largest ball which can fit inside of $CONV$, centered at $o$.
\STATE\textbf{Output:} $\lambda=r$.
\ealgorithmic
\ealgorithm

Algorithm~\ref{mult-alg:lambda} computes a value of $\lambda\geq \tilde{r}/k$, where $\tilde{r}$ is the radius of the largest ball contained in the set $\Ccal$. Notice that this algorithm is run offline and does not impact query complexity. Notice that the approach in~\cite{narasimhan2018learning} is consistent, thus we should get a good estimate of the sphere, provided we have sufficient samples. 

\blemma
    Let $\tilde{r}$ be the radius of the largest ball centered at $o$ which fits in the feasible space of classifiers. Then Algorithm~\ref{mult-alg:lambda} returns a radius $\lambda\geq \tilde{r}/k$.
\elemma
\begin{proof}
    Let $\ell_j$ be as computed in the algorithm, and let $\ell:= \min_j \ell_j$. 
    We must have $\ell\geq \tilde{r}$. Furthermore, the region $CONV$ contains the convex hull of $\{o\pm \ell\mathbf e_j\}_{j=1}^{q}$. But this region contains a ball of radius $\ell/\sqrt{q} = \ell/\sqrt{k^2-k}\geq \ell/k\geq \tilde{r}/k$, and so $\lambda\geq \tilde{r}/k$.
\end{proof}

\section{Proofs of Section~\ref{mult-sec:extensions}}\label{append:sec:extensions}

\begin{proof}[Proof of Proposition~\ref{prop:d-sufficient}]
We can add a large positive constant if for any $\dmbf \in \Dcal$, $\psi(\dmbf) < 0$. The metric would remain linear fractional. So, it is sufficient to assume $\psi(\dmbf) \geq 0$. Furthermore, boundedness and scale invariance of $\psi$ implies $\psi(\dmbf) \in [0,1]$, without compromising the linear-fractional form. Now, we look at the sufficient conditions for monotonicity in $\{d_{i}\}_{i=1}^k$ and the numerator and denominator to be positive. Consider the derivative:

\begin{align*}
    \frac{\partial\psi}{\partial d_{1}} &= \frac{a_1}{\sum_i b_i d_{i} + b_0} - \frac{b_1(\sum_i a_i d_{i})}{(\sum_i b_i d_{i} + b_0)^2} \geq 0 \numberthis
\end{align*}

\noindent Assuming denominator is positive, we have the numerator to be positive and 
\begin{align*}
    a_1 \geq b_1\frac{\sum_i a_i d_{i}}{\sum_i b_i d_{i} + b_0} \implies a_1 \geq b_1\sup_{\dmbf \in\Dcal}\frac{\sum_i a_i d_{i}}{\sum_i b_i d_{i} + b_0} \implies a_i \geq b_i\btau \numberthis
\end{align*}
The above condition is necessary. Since $\btau \in [0, 1]$, by considering all the three cases $b_i=0, b_i > 0, b_i < 0$, the  following are the sufficient conditions for monotonicity: $a_1\geq b_1$ and $a_1\geq0$. Similarly, this is true for all $a_i$'s and $b_i$'s i.e. $a_i\geq b_i, a_i \geq 0 \; \forall \; i \in [k]$ for monotonically increasing DLFPMs. Furthermore, as we assumed that  $\psi \in [0, 1]$ i.e. 
\begin{align*}
    \frac{\sum_i a_i d_{i}}{\sum_i b_i d_{i}+b_0} \leq 1 \implies \sum_i (a_i - b_i)d_{i} &\leq b_0 \numberthis
\end{align*}
So, it is sufficient to take $b_0= \sum_i (a_i - b_i)\zeta_i$ to make the metric bounded in $[0,1]$ and denominator positive. In addition, we can divide the numerator and denominator by $\sum_i a_i$ without changing the metric $\psi$. Therefore,  we take $\sum_i a_i=1$ during the elicitation task.
\eproof

\begin{proof}[Proof of Proposition~\ref{prop:d-uppersystem}]
We continue from Equation~\eqref{eq:d-lin-fr-equi}, where we saw that $\alpha\geq0$. Additionally, we ignore the case when $\alpha = 0$, since this would imply a constant $\psi^{*}$. 
Next, we may divide the above equations by $\alpha > 0$ on both sides so that all the coefficients $\ambf^\ast$ and $\ambf^\ast$ are factored by $\alpha$. This does not change the metric $\psi^{*}$; thus, the SoE becomes:
\begin{align}
a'_i - \btau^*b'_i =  \sbar_i  \;\; \forall \; i \in [k], \quad \btau^*b'_0 =  \inner{\smbfbar}{\dmbfbar^\ast}.
\label{append:eq:d-lin-fr-equi-2}
\end{align}
Notice that none of the conditions in Assumption \ref{as:d-sufficient} are changed except $\sum_i a_i = 1$. However, we may still use this condition to learn a constant $\alpha$ times the true metric, which does not harm the elicitation problem. From the last equation, we have that $\btau = \inner{\smbfbar}{\dmbfbar^\ast}/b'_0$. Putting this into rest of the equations gives us:
\bequation
\frac{a'_i - \sbar_i}{\inner{\smbfbar}{\dmbfbar^\ast}} = \frac{b'_i}{b'_0}.
\eequation
By replacing $b'_i$ in the rest of equations further gives us the solution mentioned in the proposition. 
\eproof

\begin{proof}[Proof of Proposition~\ref{prop:sufficient}]
Recall that our metric $\phi$ is monotonically decreasing in $c_i$'s. As LFPMs are transitional and scale invariant, w.l.o.g., we can assume that $\phi \in [-1,0]$. Taking the derivative in $c_1$ gives us:

\begin{align*}
    \frac{\partial\phi}{\partial c_{1}} &= \frac{a_1}{\sum_i b_i a_{i} + b_0} - \frac{b_1(\sum_i a_i c_{i})}{(\sum_i b_i c_{i} + b_0)^2} \leq 0. \numberthis
\end{align*}
Assuming denominator is positive, we have the numerator to be negative and
\vspace{-0.125cm}
\begin{align*}
    a_1 \leq b_1\frac{\sum_i a_i c_{i}}{\sum_i b_i c_{i} + b_0} \implies 
    \leq b_1\phi(\cmbf) \implies b_1. \ttau \numberthis
\end{align*}
\vskip -0.125cm
The above condition is necessary. Since $\ttau \in [-1, 0]$, by considering all the cases i.e. $b_i = 0, b_i > 0, b_i < 0$ the following are the sufficient condition for monotonicity decreasing LFPMs: $a_1\leq -b_1$ and $a_1\leq0$. Similarly, this is true for $a_i\leq -b_i, a_i \leq 0 \; \forall \; i \in [q]$ for monotonically decreasing LFPMs. Furthermore, as we assumed that  $\phi \in [-1, 0]$, i.e.,
\begin{align*}
    \frac{\sum_i a_i c_{i}}{\sum_i b_i c_{i}+b_0} \geq -1 \implies 
    \sum_i -(a_i + b_i)c_{i} \leq b_0 \numberthis
\end{align*}
Again, so it is sufficient to take $b_0= \sum_i -(a_i + b_i)\zeta_i$ to make the metric bounded in $[-1,0]$ and denominator positive. In addition, we can divide the numerator and denominator by $\sum_i |a_i|$ without changing the metric $\phi$. This gives us the condition $\sum_i a_i = -1$.
\eproof

\begin{proof}[Proof of Proposition~\ref{prop:lowersystemsphere}]
We start from~\eqref{eq:lin-fr-equi}, where we saw $\alpha \geq 0$. Additionally, we ignore the case when $\alpha = 0$, since this would imply a constant $\phi^{*}$. 
Next, we may divide the above equations by $\alpha > 0$ on both sides so that all the coefficients $a_i^*$'s and $b_i^*$'s are factored by $\alpha$. This does not change $\phi^{*}$; thus, the SoE becomes:
\begin{align}
a'_i - \btau^*b'_i =  \sbar_i,  \;\; \forall \;\; i \in [q], \qquad 
 \btau^*b'_0 =  \inner{\smbfbar}{\cmbfbar^{*}}.
\label{eq:lin-fr-equi-2}
\end{align}
Notice that none of the conditions in Assumption \ref{as:sufficient} are changed except $\sum_i a_i = -1$. However, we may still use this condition to learn a constant $\alpha$ times the true metric, which does not harm the elicitation problem. 
Similar to DLFPMs, if we somehow know the true $a'_i$'s, we can elicit the LFPM upto a constant multiple. 
From the last equation, we have that $\btau = \inner{\smbfbar}{\cmbfbar^{*}}/b'_0$. Putting this into rest of the equations gives us:
\bequation
\frac{a_i' - \sbar_i}{\inner{\smbfbar}{\cmbfbar^{*}}} = \frac{b_i'}{b_0}.
\eequation
By replacing $b_i$ in the rest of equations gives us the solution mentioned in the proposition.
\eproof

\section{Extended Experiments}
\label{append:sec:experiments}
\vskip -0.2cm

In this section, we empirically validate the theory and investigate the sensitivity and robustness due to finite sample estimates. For the ease of judgments, we show results corresponding to classes $k=3$ and $k=4$. The results and discussion extends to larger number of classes as well. To show the efficacy of the proposed methods, we run experiments on standard machine learning datasets.\footnote{The datasets can be downloaded from: https://www.csie.ntu.edu.tw/~cjlin/libsvmtools/datasets/multiclass.html, www.csie.ntu.edu.tw/~cjlin/libsvmtools/datasets/multiclass.html}

\subsection{DLPM and LPM Elicitation on Simulated Data (Extended)}
\label{append:ssec:lintheoryexp}

We show an extended set of results for the experimental setting discussed in Section~\ref{ssec:theoryexp}. Table~\ref{append:tab:DLPMtheory} and Table~\ref{append:tab:LPMtheory} show elicitation results on the simulated data for DLPMs and LPMs, respectively. We verify that our algorithms elicit the true metrics even for $\epsilon = 0.01$, and as expected, require $ 4(k-1)\ceil*{\log(1/\epsilon)}$ and $ 4T\ceil*{\log(\pi/2\epsilon)}$ queries for DLPM and LPM elicitation, respectively, where $\ceil*{\cdot}$ is the ceil function and $T=2(q-1)$.

\begin{table}[t]
	\caption{DLPM elicitation at $\epsilon = 0.01$ for synthetic data. The  number of queries used for $k=3$ and $k=4$ is 56 and 84, respectively. Since the digits are rounded to two decimal places, $\Vert \ambf^\ast \Vert_1$ or $\Vert \ambfhat \Vert_1$ might not be exactly equal to one.}  
	\label{append:tab:DLPMtheory}
		\vskip 0.15in
	\begin{center}
		\begin{small}
				\begin{tabular}{|c|c|c|c|}
					\hline
					\multicolumn{2}{|c|}{Classes $k=3$} & \multicolumn{2}{|c|}{Classes $k=4$} \\ \hline
					  $\psi^\ast = \ambf^\ast$ & $\hat \psi = \ambfhat $ & $\psi^\ast = \ambf^\ast$ & $\hat \psi = \ambfhat$ \\ \hline 
					  (0.21, 0.59, 0.20) & (0.21, 0.60, 0.20) & (0.13, 0.37, 0.12, 0.38) & (0.13, 0.37, 0.12, 0.38) \\
					  (0.44, 0.26, 0.31) & (0.44, 0.26, 0.31)  & (0.21, 0.26, 0.31, 0.22) & (0.21, 0.26, 0.31, 0.22) \\
					  (0.46, 0.33, 0.22) & (0.46, 0.33, 0.22) & (0.23, 0.17, 0.11, 0.48) & (0.23, 0.17, 0.11, 0.48)  \\
					  (0.23, 0.15, 0.62) & (0.23, 0.15, 0.62) & (0.25, 0.13, 0.45, 0.18) & (0.25, 0.12, 0.45, 0.18) \\
					  (0.31, 0.15, 0.54) & (0.3, 0.15, 0.54) & (0.22, 0.17, 0.31, 0.29) & (0.22, 0.17, 0.31, 0.29)  \\
					  (0.29, 0.40, 0.31) & (0.29, 0.40, 0.31) & (0.38, 0.21, 0.22, 0.20) & (0.38, 0.21, 0.21, 0.20) \\
					  (0.35, 0.32, 0.33) & (0.35, 0.33, 0.33) & (0.22, 0.13, 0.14, 0.52) & (0.22, 0.13, 0.14, 0.52) \\
					  (0.33, 0.35, 0.32) & (0.33, 0.35, 0.31) & (0.58, 0.17, 0.08, 0.18) & (0.58, 0.17, 0.08, 0.18) \\
					\hline
				\end{tabular}
		\end{small}
	\end{center}
	\vskip -0.3cm
\end{table}
\begin{table}[t]
	\caption{LPM elicitation at $\epsilon = 0.01$ for synthetic data. The number of queries used for $k=3$ and $k=4$ is 320 and 704, respectively. Since the digits are rounded to two decimal places, $\Vert \ambf^\ast \Vert_2$ or $\Vert \ambfhat \Vert_2$ might not be exactly equal to one.}
	\label{append:tab:LPMtheory}
		\vskip 0.15in
	\begin{center}
		\begin{small}
				\begin{tabular}{|c|c|c|}
					\hline
					  Classes & $\sphi = \ambf^\ast$ & $\hphi = \ambfhat$ \\ \hline 
					  3 & (-0.37, -0.89, -0.09, -0.23, -0.04, -0.03) & (-0.37, -0.89, -0.09, -0.23, -0.04, -0.03) \\ 
					  3 & (-0.80, -0.55, -0.18, -0.08, -0.14, -0.05) & (-0.80, -0.55, -0.18, -0.08, -0.14, -0.05) \\
					  3 & (-0.19, -0.88, -0.28, -0.10, -0.08, -0.30) & (-0.19, -0.88, -0.28, -0.10, -0.08, -0.30) \\
					  3 & (-0.44, -0.55, -0.33, -0.51, -0.23, -0.28) &  (-0.44, -0.55, -0.33, -0.51, -0.23, -0.28) \\
					  3 & (-0.79, -0.27, -0.25, -0.21, -0.38, -0.23) & (-0.79, -0.27, -0.25, -0.21, -0.38, -0.23) \\ \hline
					  \multirow{2}{*}{4}  & (-0.90, -0.28	-0.10, -0.31, -0.04,	-0.05, & (-0.90,	-0.28,	-0.10,	-0.31,	-0.04,	-0.05, \\
					  & -0.03,	-0.04,	-0.02,	-0.01,	-0.01,	-0.01) & -0.03,	-0.04,	-0.02,	-0.01,	-0.01,	-0.01) \\ 
					  \multirow{2}{*}{4} & (-0.54,	-0.10,	-0.62,	-0.52,	-0.03,	-0.07, & (-0.55,	-0.11, -0.62,	-0.51,	-0.03,	-0.07,  \\ 
					  & -0.11,	-0.07,	-0.14,	-0.03,	-0.03,	-0.04) & -0.11,	-0.07,	-0.14, -0.03,	-0.03,	-0.04) \\
					  \multirow{2}{*}{4}  & (-0.56, -0.07, -0.79, -0.05, -0.16, -0.16, & (-0.56, -0.07, -0.79, -0.05, -0.16, -0.17,  \\
					  & -0.04, -0.02, -0.03, -0.00, -0.01, -0.01) & -0.04, -0.02, -0.03, -0.00, -0.01, -0.01) \\
					  \multirow{2}{*}{4}  & (-0.60, -0.79, -0.09, -0.01, -0.01, -0.02, & (-0.60, -0.79, -0.09, -0.01, -0.01, -0.02, \\
					  & -0.02, -0.01, -0.01, -0.01, -0.00, -0.00) & -0.02, -0.01, -0.01, -0.01,	-0.00, -0.00) \\
					  \multirow{2}{*}{4}  & (-0.45, -0.38, -0.42, -0.19, -0.21, -0.63, & (-0.46, -0.38, -0.41, -0.19, -0.20, -0.62, \\
					  & -0.09, -0.00, -0.00, -0.00,	-0.01, -0.01) & -0.09,	-0.00,	-0.00, -0.00, -0.01, -0.01) \\
					\hline
				\end{tabular}
		\end{small}
	\end{center}
	\vskip -0.3cm
\end{table}
\begin{table}[t]
	\caption{LPM elicitation on sphere with varying radius and $\epsilon = 0.01$. For randomly chosen hundred $\ambf^\ast$, we show the fraction of times our estimates $\ambfhat$ obtained with $4\times2(q-1)\ceil*{\log (1/\epsilon)}$ queries satisfy $\Vert \ambf^\ast - \ambfhat \Vert_\infty \leq \omega$. Notice that we incur error only when the radius is of the order of practical computation error, which can be attributed to $\epsilon_\Omega$ in the simulated setting.}
	\label{append:tab:LPMsphere}
		\vskip 0.15in
	\begin{center}
		\begin{small}
				\begin{tabular}{|c||c|c|c|c|c|}
					\hline
					\diagbox[width=2cm]{\raisebox{0ex}{\quad $\lambda$ }}{ \raisebox{-1.5\height}{$\omega$}} & 0.02 & 0.04 & 0.06 & 0.08 & 0.10 \\ \hline 
					 $1.250 \times 10^{-12}$  & 0.03 & 0.38 &	0.74 &	0.92 &	0.94   \\
  $1.875 \times 10^{-12}$ 	& 0.09 &  0.49 & 0.77 &	0.94  &	0.98 \\
 $2.500 \times 10^{-12}$  &	0.12 & 0.73 & 0.93 & 0.97 &	0.99   \\ \hline
				\end{tabular}
		\end{small}
	\end{center}
\end{table}
\begin{table}[t]
	\caption{DLFPM Elicitation for synthetic distribution for $k=3$ classes with $\epsilon = 0.01$. $(\ambf^\ast, \bmbf^\ast, b^\ast_0)$ denote the true DLFPM. $(\ambfhat, \bmbfhat, \hat b_0)$ denote the elicited LFPM. We empirically verify that the elicited metric is constant multiple ($\alpha$) of the true metric.}
	\label{append:tab:DLFPMtheory3}

	\begin{center}
		\begin{footnotesize}
			\begin{tabular}{|c|c|c|c|}
				\hline
                
				True Metric & \multicolumn{3}{|c|}{Results on Synthetic Distribution (Appendix \ref{append:ssec:linfractheoryexp})} \\ \hline
				\makecell{$(a^\ast_1, a^\ast_2, a^\ast_3),$ \\ $(b^\ast_1, b^\ast_2, b^\ast_3), b^\ast_0$} & \makecell{$(\hat a_1,\hat a_2, \hat a_3),$ \\ $(\hat b_1, \hat b_2, \hat b_3), \hat b_0$} & $\alpha$ & $\sigma$ \\
				\hline
				\makecell{(0.21, 0.59, 0.20), \\
				(0.11, -0.22, -0.27), 0.41} &	\makecell{(0.25, 0.58, 0.18), \\ (0.20, -0.03, -0.17), 0.29} & 1.23 & 0.03  \\
				\makecell{(0.45, 0.27, 0.29),	\\
				(0.39, 0.22, -0.76), 0.43} & \makecell{(0.46, 0.34, 0.20), \\
				(0.42, 0.30, -0.73), 0.38} & 1.03 &	0.04 \\
				\makecell{(0.08, 0.42, 0.50), \\ 
				(0.07, -0.63, 0.20), 0.37} &	\makecell{(0.16, 0.38, 0.47), \\ (0.17, -0.41, 0.23), 0.27} & 1.22 &	0.05 \\
				\hline
			\end{tabular}
		\end{footnotesize}
	\end{center}
\end{table}

\subsection{Effect of Sphere Size on LPM Elicitation}
\label{append:ssec:sphereexp}

For real-world datasets, Algorithm~\ref{mult-alg:lpm} is agnostic to the error from $\hat \eta_i$'s as long as we get a sphere inside the feasible region of sufficient size. With the following experiment, we show that we incur errors in elicitation when the radius $\lambda$ is of the order of $\epsilon_\Omega$. Recall that, when we are working in a simulated setting, a good proxy for $\epsilon_\Omega$ is the practical computation error.

Here, we work with $k=4$ classes. We took $\lambda = 2.500 \times 10^{-12}$ and performed elicitation by considering three spheres of size $\sfrac{1}{2} \lambda$, $\sfrac{3}{4} \lambda$, and $\lambda$. We randomly selected hundered DLPMs i.e. $\ambf^\ast$'s.  
We then used Algorithm~\ref{mult-alg:lpm} with $\epsilon=0.01$ to recover the estimates $\ambfhat$'s. 
In Table~\ref{append:tab:LPMsphere}, we report the proportion of the number of times $\Vert \ambf^\ast - \ambfhat \Vert_\infty \leq \omega$ for different values of $\omega$. We see improved elicitation when we work with $\lambda$ and incur more errors when the sphere's radius is less than that. In particular, if we take the radius of the order (a little) higher than $10^{-12}$ then we perform perfect elicitation. Needless to say, when working with real oracle (users), the magnitude of the oracle's feedback noise $\epsilon_\Omega$ and the size of the sphere will play a role in elicitation performance as suggested in Theorem~\ref{thm:lpm-elit-error}.

\subsection{DLFPM and LFPM Elicitation}
\label{append:ssec:linfractheoryexp}

Now, we validate elicitation for DLFPMs for classes $k=3$ and  $k=4$ using the routine discussed in Section~\ref{append:sec:dlfpme}. We use the same distribution setting of Section~\ref{ssec:theoryexp} for both the classes. We define a true metric $\psi^{\ast}$ by $\{\ambf^\ast, \bmbf^\ast, b_0^\ast\}$.  Then, we run Algorithm~\ref{mult-alg:dlpm} with $\epsilon=0.01$ to find the hyperplane $\bell$ and maximizer on $\partial \Dcal^+$, Algorithm~4.3 with $\epsilon=0.01$ to find the hyperplane $\tell$ and minimizer on $\partial \Dcal^-$, and Algorithm~\ref{append:alg:d-grid-search} with $n' = 1000$ (1000 diagonal confusions on $\partial\Dcal^+$ obtained by varying parameter $m$) and $\delta = 0.01$. This gives us the elicited metric $\hat\psi$, which we represent by $\{\ambfhat, \bmbfhat, \hat b_0\}$. 
In Table~\ref{append:tab:DLFPMtheory3} and Table~\ref{append:tab:DLFPMtheory4}, we present the elicitation results for DLFPMs for classes $k=3$ and $k=4$, respectively. We also present the mean ($\alpha$) and the standard deviation ($\sigma$) of the ratio of the elicited metric $\hat \psi$ to the true metric $\psi^{\ast}$ over the set of diagonal confusions used in Algorithm~\ref{append:alg:d-grid-search} (column 3 and 4 of Table \ref{append:tab:DLFPMtheory3} and Table \ref{append:tab:DLFPMtheory4}). For a better judgment, we show function evaluations of the true metric and the elicited metric in Figure \ref{append:fig:dlfpm-theory}. 
The true and the elicited metric are plotted together after vectorizing the set of diagonal confusions in a certain order based on their parametrizations. As expected, we see that the elicited metric is a constant multiple of the true metric. 

Now, we validate elicitation for LFPMs for classes $k=3$ and  $k=4$ using the routine discussed in Section~\ref{append:sec:lfpme}. We define a true metric $\phi^{\ast}$ by $\{\ambf^\ast, \bmbf^\ast, b_0^\ast\}$.  Then, we run Algorithm~\ref{mult-alg:lpm} with $\epsilon=0.01$ to find the hyperplane $\bell$ and maximizer on $\partial \Scal^-_{\lambda}$, Algorithm~4.5 with $\epsilon=0.01$ to find the hyperplane $\tell$ and minimizer on $\partial \Scal^+_{\lambda}$, and Algorithm~4.6 with $n' = 1000$ (1000 off-diagonal confusions on $\partial \Scal^-_{\lambda}$ obtained by varying parameter $\thetambf$) and $\delta = 0.01$. This gives us the elicited metric $\hat \phi$, which we represent by $\{\ambfhat, \bmbfhat, \hat b_0\}$. 
In Table~\ref{append:tab:LFPMtheory3}, we present the elicitation results for LFPMs for classes $k=3$. We also present the mean ($\alpha$) and the standard deviation ($\sigma$) of the ratio of the elicited metric $\hat \phi$ to the true metric $\phi^{\ast}$ over the set of off-diagonal confusions used in Algorithm~4.6 (column 3 and 4 of Table~\ref{append:tab:LFPMtheory3}). 

For a better judgment, we show function evaluations of the true metric and the elicited metric evaluated on selected off-diagonal confusions in the top row of Figure \ref{append:fig:lfpm-theory}. Due to many terms in the LFPM for $k=4$, we skip providing true metric and the elicited metric and only mention the $\alpha$ and $\sigma$ of the true and elicited metric similar to Table~\ref{append:tab:LFPMtheory3}. We obtained $\alpha = 0.79, 0.72, 0.72$ and $\sigma = 0.007, 0.007, 0.006$ for the three metrics plotted in the bottom row of Figure~\ref{append:fig:lfpm-theory}.     
The true and the elicited metric are plotted together after vectorizing the set of confusions in a certain order based on their parametrizations. As expected, the elicited metric is a constant multiple of the true metric for both $k=3$ and $k=4$.

\begin{table}[t]
	\caption{DLFPM Elicitation for synthetic distribution for $k=4$ classes  with $\epsilon = 0.01$. $(\ambf^\ast, \bmbf^\ast, b^\ast_0)$ denote the true DLFPM. $(\ambfhat, \bmbfhat, \hat b_0)$ denote the elicited LFPM. We empirically verify that the elicited metric is constant multiple ($\alpha$) of the true metric.}
	\vskip -0.2cm
	\label{append:tab:DLFPMtheory4}
	\begin{center}
		\begin{footnotesize}
			\begin{tabular}{|c|c|c|c|}
				\hline
                
				True Metric & \multicolumn{3}{|c|}{Results on Synthetic Distribution (Appendix \ref{append:ssec:linfractheoryexp})} \\ \hline
				\makecell{$(a^\ast_1, a^\ast_2, a^\ast_3, a^\ast_4),$ \\ $(b^\ast_1, b^\ast_2, b^\ast_3, b^\ast_4), b^\ast_0$} & \makecell{$(\hat a_1,\hat a_2, \hat a_3, \hat a_4),$ \\ $(\hat b_1, \hat b_2, \hat b_3,\hat b_4), \hat b_0$} & $\alpha$ & $\sigma$ \\
				\hline
				\makecell{(0.32, 0.35, 0.06, 0.27), \\ (-1, -0.3, -0.32, 0.25), 0.6} & \makecell{(0.2, 0.29, 0.19, 0.32), \\ (-0.4, -0.01, 0.08, 0.33), 0.26} & \makecell{1.58 \\ } & \makecell{0.12 \\ }  \\
				\makecell{(0.31, 0.22, 0.27, 0.2), \\ (-0.17, -0.01, 0.18, 0.09), 0.25} & \makecell{(0.2, 0.3, 0.26, 0.24), \\ (-0.38, 0.07, 0.16, 0.14), 0.28} & 0.95 & 0.04 \\
				\makecell{(0.22, 0.16, 0.41, 0.21), \\ (-0.22, -0.43, -0.18, 0.14), 0.33} & \makecell{(0.19, 0.2, 0.35, 0.26), \\ (-0.09, -0.12, -0.03, 0.24), 0.19} & 1.38 & 0.06 \\
				\hline
			\end{tabular}
		\end{footnotesize}
	\end{center}
\end{table}
\begin{table}[h]
	\caption{LFPM Elicitation  for $k=3$ classes with $\epsilon = 0.01$. $(\ambf^\ast, \bmbf^\ast, b^\ast_0)$ denote the true LFPM. There are thirteen terms to elicit in LFPM. $(\ambfhat, \bmbfhat, \hat b_0)$ denote the elicited LFPM. We empirically verify that the elicited metric is constant multiple ($\alpha$) of the true metric.}
	\label{append:tab:LFPMtheory3}
	\begin{center}
		\begin{footnotesize}
			\begin{tabular}{|c|c|c|c|}
				\hline
                
				True Metric & \multicolumn{3}{|c|}{Results on Synthetic Distribution (Appendix \ref{append:ssec:linfractheoryexp})} \\ \hline
				\makecell{$(a^\ast_1, a^\ast_2, a^\ast_3, a^\ast_4, a^\ast_5, a^\ast_6),$ \\ $(b^\ast_1, b^\ast_2, b^\ast_3, b^\ast_4, b^\ast_5, b^\ast_6), b^\ast_0$} & \makecell{$(\hat a_1,\hat a_2, \hat a_3, \hat a_4, \hat a_5, \hat a_6),$ \\ $(\hat b_1, \hat b_2, \hat b_3, \hat b_4, \hat b_5, \hat b_6), \hat b_0$} & $\alpha$ & $\sigma$ \\
				\hline
				\makecell{(-0.16, -0.05, -0.29, -0.21, -0.17, -0.12), \\ (-0.76, 0.02, -0.88, 0.09, -0.23, -0.38), 2.36} & \makecell{(-0.11, -0.08, -0.15, -0.17, -0.24, -0.25), \\ (-0.66, 0.07, -0.86, 0.04, -0.04, -0.09), 1.89} & 1.11 & 0.01  \\
				\makecell{(-0.17, -0.19, -0.09, -0.18, -0.16, -0.2), \\ (-0.3, -0.74, -0.54, -0.37, -0.89, -0.14), 2.99} & \makecell{(-0.05, -0.08, -0.11, -0.16, -0.31, -0.31), \\ (-0.46, -0.82, -0.43, -0.34, -0.48, 0.09), 2.58} & 1.08 & 0.01 \\
				\makecell{(-0.3, -0.08, -0.1, -0.12, -0.21, -0.18), \\ (-0.24, -0.52, -0.45, 0, -0.41, -0.94), 2.67} & \makecell{(-0.06, -0.08, -0.11, -0.15, -0.27, -0.33), \\ (-0.59, -0.45, -0.37, 0.07, -0.24, -0.57), 2.36} & 1.07 & 0.01 \\
				\hline
			\end{tabular}
		\end{footnotesize}
	\end{center}
\end{table}

\begin{figure}[t]
	\centering 
	\subfigure[Table \ref{append:tab:DLFPMtheory3}, Line 1]{
		{\includegraphics[width=5cm]{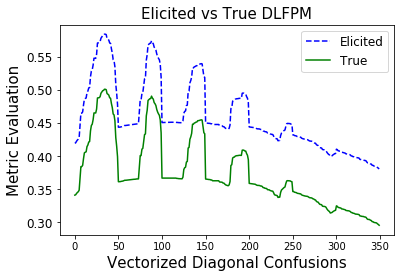}}
		\label{fig:app:lfpm_1}
	}
	\subfigure[Table \ref{append:tab:DLFPMtheory3}, Line 2]{
		{\includegraphics[width=5cm]{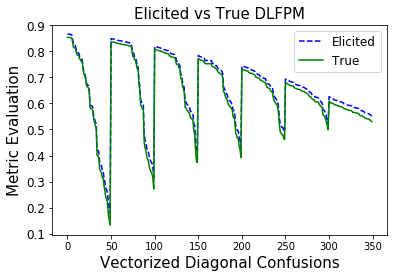}}
		\label{fig:app:lfpm_2}
	}
	\subfigure[Table \ref{append:tab:DLFPMtheory3}, Line 3]{
		{\includegraphics[width=5cm]{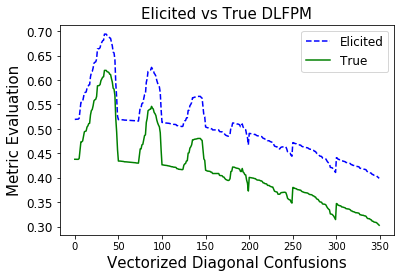}}
		\label{fig:app:lfpm_3}
	}
	\subfigure[Table \ref{append:tab:DLFPMtheory4}, Line 1]{
		{\includegraphics[width=5cm]{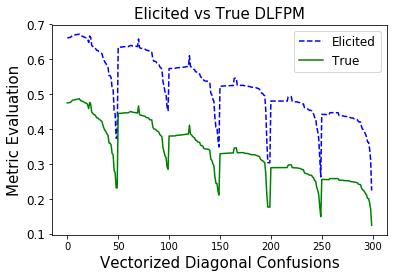}}
		\label{fig:app:lfpm_4}
	}
	\subfigure[Table \ref{append:tab:DLFPMtheory4}, Line 2]{
		{\includegraphics[width=5cm]{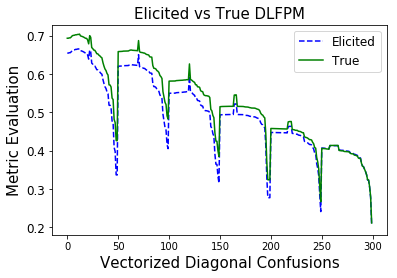}}
		\label{fig:app:lfpm_5}
	}
	\subfigure[Table \ref{append:tab:DLFPMtheory4}, Line 3]{
		{\includegraphics[width=5cm]{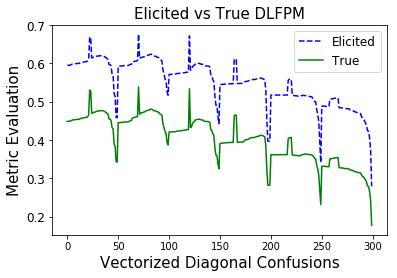}}
		\label{fig:app:lfpm_6}
	}
	\caption{True and elicited DLFPMs for synthetic distribution from Tables~\ref{append:tab:DLFPMtheory3} and \ref{append:tab:DLFPMtheory4}. The solid green curve and the dashed blue curve are the true and the elicited metric, respectively. 
	We see that the elicited DLFPMs are constant multiple of the true metrics.}
	\label{append:fig:dlfpm-theory}
\end{figure}

\begin{figure}[t]
	\centering 
	\subfigure[Table \ref{append:tab:LFPMtheory3}, Line 1]{
		{\includegraphics[width=5cm]{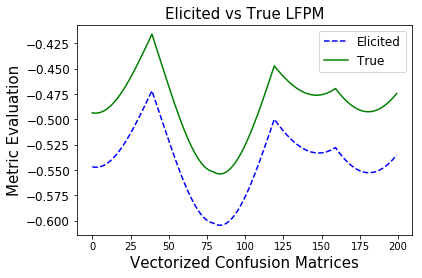}}
    \label{fig:app:lfpm_1_magic}
    }
	\subfigure[Table \ref{append:tab:LFPMtheory3}, Line 2]{
		{\includegraphics[width=5cm]{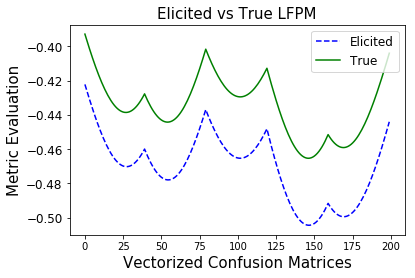}}
		\label{fig:app:lfpm_2_magic}
	}
	\subfigure[Table \ref{append:tab:LFPMtheory3}, Line 3]{
		{\includegraphics[width=5cm]{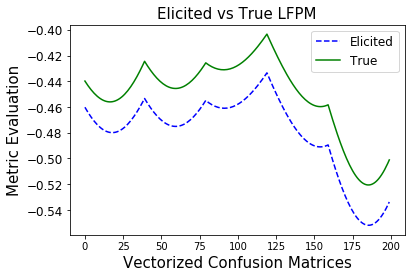}}
		\label{fig:app:lfpm_3_magic}
	}
	\subfigure[LFP Metric 1, $k=4$]{
		{\includegraphics[width=5cm]{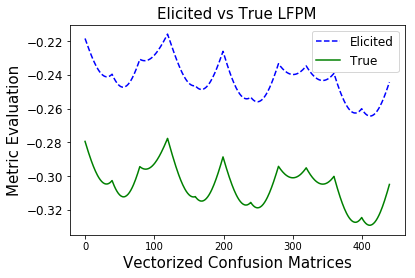}}
		\label{fig:app:lfpm_4_magic}
	}
	\subfigure[LFP Metric 2, $k=4$]{
		{\includegraphics[width=5cm]{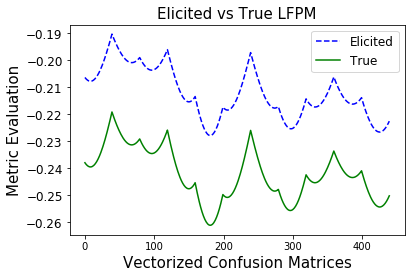}}
		\label{fig:app:lfpm_5_magic}
	}
	\subfigure[LFP Metric 3, $k=4$]{
		{\includegraphics[width=5cm]{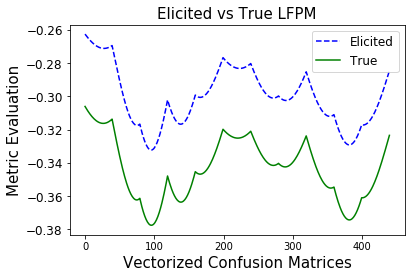}}
		\label{fig:app:lfpm_6_magic}
	}
	\caption{True and elicited LFPMs. The plots in the top row correspond to the metrics in Table~\ref{append:tab:LFPMtheory3} for $k=3$. The bottom row corresponds to metrics for $k=4$. The solid green curve and the dashed blue curve are the true and the elicited metric, respectively. 
	We see that the elicited LFPMs are constant multiple of the true metrics.}
	\label{append:fig:lfpm-theory}
\end{figure}

\chapter{Fair Performance Metric Elicitation}
\label{apx:fair}
\section{Proofs and Details of Section~\ref{sec:confusion}}
\label{append:sec:confusion}

\bproof[Proof of Proposition~\ref{fair-prop:C}]
The set of rates $\Rcal^g$ for a group $g$ satisfies the following properties:
\bitemize[leftmargin=1em]
\item \emph{Convex}: Let us take two classifiers $h_1^g, h_2^g \in \Hcal^g$ which achieve the rates $\rmbf_1^g, \rmbf_2^g \in \Rcal^g$. We need to check whether or not  the convex combination $\alpha \rmbf_1^g + (1-\alpha)\rmbf_2^g$ is feasible, i.e., there exists some classifier which achieve this rate. Consider a classifier $h^g$, which with probability $\alpha$ predicts what classifier $h_1^g$  predicts and with probability $1-\alpha$ predicts what classifier $h_2^g$ predicts. Then the elements of the rate matrix $R_{ij}^g(h)$ is given by: 

\begin{align*}
R_{ij}^g(h) &= \Pmbb(h^g=j | Y=i)  \\ \nonumber
&= \Pmbb(h^g_1=j|h^g=h^g_1, Y=i)\Pmbb(h^g=h_1^g) + \Pmbb( h_2^g=j|h^g=h_2^g, Y=i)\Pmbb(h^g=h^g_2)   \\ \nonumber
&= \alpha \rmbf_{1}^g + (1-\alpha)\rmbf_{2}^g. \numberthis
\end{align*}
Therefore, $\Rcal^g \; \forall \; g \in [m]$ is convex.
\item \emph{Bounded:} Since $R^g_{ij}(h) = P[h=j|Y=i] = P[h=j, Y=i]/P[Y=i] \leq 1$ for all $i,j \in [k]$, $\Rcal^g \subseteq [0, 1]^q$.
\item \emph{$\embf_i$'s and $\ombf$ are always achieved:} The classifier which always predicts class $i$, will achieve the rate $\embf_i$. Thus, $\embf_i \in \Rcal^g \, \forall\, i \in [k], g \in [m]$ are feasible. Just like the convexity proof, a classifier which predicts similar to one of the trivial classifiers with probability $1/k$ will achieve the rates $\ombf$. 
\item \emph{$\embf_i$'s are vertices:} Any supporting hyperplane with slope $\ell_{1i} < \ell_{1j} < 0$ and $\ell_{1p}=0$ for $p \in [k], p \neq i, j$ will be supported by $\embf_1$ (corresponding to the trivial classifier which predict class 1). Thus, $\embf_i$'s  are vertices of the convex set. As long as the class-conditional distributions are not identical, i.e., there is some signal for non-trivial classification conditioned on each group (Assumption~\ref{as:clsconditional}), one can construct a ball around the trivial rate $\ombf$ and thus $\ombf$ lies in the interior.
\eitemize
\vspace{-0.5cm}
\eproof

\vspace{-0.5cm}
\subsection{Finding the Sphere $\Scal_\rho$}
\label{fair-append:ssec:sphere}
\balgorithm[t]
\caption{Obtaining the sphere $\Scal_\rho$ with radius $\rho$}
\label{alg:sphere}
\small
\balgorithmic[1]
\STATE \textbf{Input:} The center $\ombf$ of the feasible region of rates across groups.
\FOR{$j=1, 2, \cdots, q$}
\STATE Let $\mathbf r_j$ be the standard basis vector for the $j$-th dimension. 
\STATE Compute the maximum $\ell_j$ such that $\ombf + \ell_j \mathbf r_j$ is feasible for all groups by solving~\eqref{fair-eq:op1}.
\ENDFOR
\STATE Let $CONV$ be the convex hull of $\{\ombf\pm \ell_j\mathbf r_j\}_{j=1}^{q}$.
\STATE Compute the radius $s$ of the largest ball which can fit inside of $CONV$, centered at $\ombf$.
\STATE\textbf{Output:} Sphere $\Scal_\rho$ with radius $\rho=s$ centered at $\ombf$.
\ealgorithmic
\ealgorithm

In this section, we discuss how a sufficiently large sphere $\Scal_\rho$ with radius $\rho$ may be found. The following discussion is extended from Chapter~\ref{chp:multiclass} (Section~\ref{append:ssec:slambda}) to multiple groups setting and provided here for completeness. 

The following optimization problem is a special case of OP2 in~\cite{narasimhan2018learning,Tavker+2020}. The problem corresponds to feasiblity check problem for a given rate $\rmbf_0$ achieved by all groups within small error $\epsilon >0$. 

\begin{align}
    \min_{\rmbf^g \in \Rcal^g \, \forall g \in [m]} \; 0 \qquad s.t. \;\; \Vert \rmbf^g - \rmbf_0 \Vert_2 \leq \epsilon \quad \forall \; g \in [m].
    \label{fair-eq:op1}
\end{align}

The above problem checks the feasibility and if a solution to the above problem exists, then Algorithm~1 of~\cite{narasimhan2018learning} returns it. The approach in~\cite{narasimhan2018learning} constructs a classifier whose group-wise rates are $\epsilon$-close to the given rate $\rmbf_0$. 

Furthermore, Algorithm~\ref{fair-append:ssec:sphere} computes a value of $\rho\geq \tilde{s}/k$, where $\tilde{s}$ is the radius of the largest ball contained in the set $\Rcal^1 \cap \cdots \cap \Rcal^m$. Notice that the approach in~\cite{narasimhan2018learning} is consistent, thus we should get a good estimate of the sphere, provided we have sufficient samples. The algorithm runs offline and does not impact query complexity.

\blemma
    Let $\tilde{s}$ be the radius of the largest ball centered at $\ombf$ in $\Rcal^1 \cap \cdots \cap \Rcal^m$. Then Algorithm~\ref{fair-append:ssec:sphere} returns a radius $\rho\geq \tilde{s}/k$.
\elemma
\begin{proof}
    Let $\ell_j$ be as computed in the algorithm and $\ell:= \min_j \ell_j$, then we have $\ell\geq \tilde{s}$. Moreover, the region $CONV$ contains the convex hull of $\{o\pm \ell\mathbf r_j\}_{j=1}^{q}$; however, this region contains a ball of radius $\ell/\sqrt{q} = \ell/\sqrt{k^2-k}\geq \ell/k\geq \tilde{s}/k$, and thus $\rho\geq \tilde{s}/k$.
\end{proof}

\section{Derivations of Section~\ref{sec:me}}
\label{append:sec:me}

Notice that $\sum_{g=1}^m \taumbf^g = \mathbf{1}$, i.e., the vector of ones. 

\vspace{-0.1cm}
\subsection{Eliciting the Misclassification Cost $\bphi(\rmbf)$; Part 1 in Figure~\ref{fig:workflow} and line 2 in Algorithm~\ref{alg:f-me}}
\label{append:ssec:phi}

The key to eliciting $\bphi$ is to remove the effect of fairness violation $\bvarphi$ in the oracle responses. As explained in Section~\ref{ssec:elicitphi}, we run the LPME procedure (Algorithm~\ref{mult-alg:lpm}) with the $q$-dimensional query space $\Scal_\rho$, binary search tolerance $\epsilon$, the equivalent oracle $\Omega^{\text{class}}$. From Remark~\ref{fair-rm:ratio}, this subroutine returns a slope $\fmbf$ with $\Vert \fmbf \Vert_2=1$ such that:
\vspace{-0.2cm}
\begin{equation}
    \frac{(1-\lambdabar)a_i}{(1-\lambdabar)a_j} = \frac{f_i}{f_j} \implies \frac{a_i}{a_j} = \frac{f_i}{f_j}.
    \label{append:eq:phisolutionf}
\end{equation}
\vskip -0.2cm
Thus, we set $\ambfhat \coloneqq \fmbf$ (line~2, Algorithm~\ref{alg:f-me}). 

\subsection{Eliciting the Fairness Violation $\bvarphi(\tupr)$; Part 2 in Figure~\ref{fig:workflow} and lines 3-15 in Algorithm~1}
\label{append:ssec:varphi}

\paragraph{Eliciting the Fairness Violation $\bvarphi(\tupr)$ for $m=2$; lines 3-6 in Algorithm~1:}
\label{append:sssec:elicitvarphim2}

For $m=2$, we have only one vector of unfairness weights $\bmbf^{12}$, which we now aim to elicit given $\ambfhat$. As discussed in Section~\ref{ssec:elicitvarphi}, we fix trivial rates (through trivial classifiers) to one group and allow non-trivial rates from $\Scal_\rho$ on another group. This essentially makes the metric in Definition~\ref{def:linear} linear. The elicitation procedure is as follows. 

Fix trivial classifier predicting class $1$ for group 2, i.e., fix $h^{2}(x) = 1 \, \forall \, x \in \Xcal$, and thus $\rmbf^{2}  = \embf_1$. For group 1, we constrain the confusion rates to lie in the sphere $\Scal_\rho$, i.e., $\rmbf^{1}  = \smbf$ for $\smbf \in \Scal_\rho$. 
Then the metric in Definition~\ref{def:linear} amounts to:
    \begin{align*}
    \bPsi((\smbf, \embf_1); \ambfbar, \bmbfbar^{12}, \lambdabar) &=
    (1-\lambdabar)\inner{\ambfbar \odot (1-\bm{\tau}^{2})}{\smbf} + \lambdabar  \inner{\bmbfbar^{12}}{\vert \embf_1 - \smbf \vert} + c_1.
    \numberthis \label{append:eq:metricbm2}
    \end{align*}
The above is a function of $\smbf \in \Scal_\rho$. Since $\embf_i$'s are binary vectors and since $0 \leq \smbf \leq 1$, the sign of the absolute function with respect to $\smbf$ can be recovered. Recall that the rates are defined in row major form of the rate matrices, thus $\embf_1$ is $1$ at every $(k + j*(k-1))$-th coordinate, where $j \in \{0,\dots,k-2\}$, and 0 otherwise. The coordinates where the confusion rates are $1$ in $\embf_1$, the absolute function  opens with a negative sign (wrt. $\smbf$) and with a positive sign otherwise. In particular, define a $q$-dimensional vector $\wmbf_1$ with entries $-1$ at every $(k + j*(k-1))$-th coordinate, where $j \in \{0,\dots,k-2\}$, and $1$ otherwise. One may then write the metric $\bPsi$ as:
\begin{align*}
&\bPsi((\smbf, \embf_1)\,;\, \ambfbar, \bmbfbar^{12}, \lambdabar) =  
\inner{(1-\lambdabar)\ambfbar \odot (\bm{1}-\bm{\tau}^{2}) + \lambdabar \wmbf_1 \odot 
\bmbfbar^{12}}{\smbf} + c_1.
\numberthis \label{append:eq:metricbrich1m2}
\end{align*} 
This is again a linear metric elicitation problem where $\smbf \in \Scal$. We may again use the LPME procedure (Algorithm~\ref{mult-alg:lpm}), which outputs a (normalized) slope $\breve \fmbf$ with $\Vert \breve \fmbf \Vert_2 = 1$ in line 4 of Algorithm~\ref{alg:f-me}. Using Remark~\ref{fair-rm:ratio}, we get $q-1$ independent equations and may represent every element of $\bmbfbar^{12}$ based on one element, say $\bbar^{12}_{k-1}$, i.e.:

\begin{align*}
    \frac{\breve f_{k-1}}{\breve f_{i}} &= \frac{(1-\lambdabar){(1-\tau^2_{k-1})\abar_{k-1} + \lambdabar \bbar^{12}_{k-1}}}{(1-\lambdabar){(1-\tau^2_{i})\abar_{i} + \lambdabar w_{1i}\bbar^{12}_{i}}} \qquad \forall \; i \in [q].\\
    \implies 
    \lambdabar \bmbfbar^{12} &= \wmbf_1 \odot \left[ \left( \frac{(1-\lambdabar)(1-\tau^{2}_{k-1})\abar_{k-1} + \lambdabar \bbar^{12}_{k-1}}{\breve f_{k-1}}\right) \breve \fmbf - (1-\lambdabar)((1-\taumbf^{2})\odot\ambfbar)  \right].
    \numberthis \label{append:eq:metricbfirstm2}
\end{align*}

In order to elicit entire $\bmbfbar^{12}$, we need one more linear relation such as~\eqref{append:eq:metricbfirstm2}. So, we now fix the trivial classifier predicting class $k$ for group 2, i.e., fix $h^{2}(x) = k \, \forall \, \xmbf \in \Xcal$, and thus $\rmbf^{2}  = \embf_k$. For group 1, we constrain the rates to again lie in the sphere $\Scal_\rho$ i.e. $\rmbf^{1}  = \smbf$ for $\smbf \in \Scal_\rho$. 
Since the rate vectors are in row major form of the rate matrices, notice that $\embf_k$ is $1$ at every $(k-1 + j*(k-1))$-th coordinate, where $j \in \{0,\dots,k-2\}$, and 0 otherwise. 
In particular, define a $q$-dimensional vector $\wmbf_k$ with entries $-1$ at every $(k-1 + j*(k-1))$-th coordinate, where $j \in \{0,\dots,k-2\}$, and $1$ otherwise. One may then write the metric $\bPsi$ as:
\begin{align*}
&\bPsi((\smbf, \embf_k); \ambfbar, \bmbfbar^{12}, \lambdabar) =
    (1-\lambdabar)\inner{\ambfbar \odot (1-\bm{\tau}^{2})}{\smbf} + \lambdabar  \inner{\bmbfbar^{12}}{\vert \embf_k - \smbf \vert} + c_k.
\numberthis \label{append:eq:metricbrich2m2}
\end{align*}
This is a linear metric elicitation problem where $\smbf \in \Scal$. Thus, line 5 of Algorithm~\ref{alg:f-me} applies LPME subroutine (Algorithm~\ref{mult-alg:lpm}), which outputs a (normalized) slope $\tilde \fmbf$ with $\Vert \tilde \fmbf \Vert_2 = 1$. Using Remark~\ref{fair-rm:ratio}, we extract the following relation between two of its coordinates, say the $(k-1)$-th and $((k-1)^2+1)$-th coordinates:
\begin{align*}
\frac{\tilde f_{k-1}}{\tilde f_{(k-1)^2+1}} = \frac{(1-\lambdabar)(1 - \tau^{2}_{k-1})\abar_{k-1} - \lambdabar \bbar^{12}_{k-1}}{(1-\lambdabar)(1- \tau^{2}_{(k-1)^2+1}) \abar_{(k-1)^2+1} + \lambdabar \bbar^{12}_{(k-1)^2+1}}.
\numberthis \label{append:eq:metricbsecondm2}
\end{align*}
Combining equations~\eqref{append:eq:metricbfirstm2} and~\eqref{append:eq:metricbsecondm2} and replacing the true $\ambfbar$ with the estimated $\ambfhat$ from Section~\ref{ssec:elicitphi}, we have an estimate of the scaled substitute as:
\begin{align*}
    \tilde \bmbf^{12} &= \wmbf_1 \odot \left[ \delta \breve \fmbf^{12} - \ambfhat \odot (\bm{1} - \taumbf^{2})  \right], \numberthis \label{append:eq:ellsolm2}
    \\
    \text{where} \; \delta &= \frac{2(1-\tau^{2}_{k-1})\hat a_{k-1}}{\breve f_{k - 1}} \left[ \frac{ \frac{(1-\tau^{2}_{(k-1)^2+1})\hat a_{(k-1)^2+1}}{(1-\tau^{2}_{k-1})\hat a_{k-1}} -  \frac{\tilde f_{(k-1)^2+1}}{\tilde f_{k-1}} }{\left( \frac{\breve f_{(k-1)^2+1}}{\breve f_{k-1}} - \frac{\tilde f_{(k-1)^2+1}}{\tilde f_{k-1}} \right)} \right],
\end{align*}
and $\tilde \bmbf$ is a scaled substitute defined as $\tilde \bmbf^{12}\coloneqq \frac{\lambdabar}{(1-\lambdabar)} \bmbfbar^{12}$, which nonetheless is computable from~\eqref{append:eq:ellsolm2}. Since we require a solution $\bmbfhat$ such that $\Vert \bmbfhat \Vert_2 = 1$ (Definition~\ref{def:linear}), we normalize $\tilde \bmbf$ and get the final solution:

\begin{equation}
\bmbfhat^{12} = \frac{\tilde \bmbf^{12}}{\Vert \tilde \bmbf^{12} \Vert_2}.
 \label{append:eq:bsolm2}
\end{equation} 
Notice that, due to normalization, the solution is  independent of the true trade-off $\lambdabar$.

\paragraph{Eliciting the Fairness Violation $\bvarphi(\tupr)$ for $m>2$; line 8-14 in Algorithm~\ref{alg:f-me}:}
\label{append:sssec:elicitvarphi}

Consider a non-empty set of sets $\Mcal \subset 2^{[m]} \setminus \{\varnothing, [m]\}$. We will later discuss how to choose $\Mcal$ for efficient elicitation. When $m>2$, we partition the set of groups $[m]$ into two sets of groups. Let $\sigma \in \Mcal$ and $[m] \setminus \sigma$ be one such partition of the $m$ groups defined by the set $\sigma$. We follow exactly similar procedure as in the previous section, i.e., fixing trivial rates (through trivial classifiers) on the groups in $\sigma$ and allowing non-trivial rates from $\Scal_\rho$ on the groups in $[m] \setminus \sigma$. In particular, consider a paramterization $\nu : (\Scal_\rho, \Mcal, [k]) \rightarrow \Rcal^{1:m}$ defined as:
\begin{equation}
    \nu(\smbf, \sigma, i) \coloneqq \tupr \quad \text{such that} \quad \rmbf^g = \begin{cases}
    \embf_i & \text{if } g \in \sigma\\
    \smbf & \text{o.w. }
    \end{cases}
\label{append:eq:parvarphi}
\end{equation}
i.e., $\nu$ assigns trivial confusion rates $\embf_i$ on the groups in $\sigma$ and assigns $\smbf \in \Scal_\rho$ on the rest of the groups. 
Similar to the previous section, we first fix trivial classifier predicting class $1$ for groups in $\sigma$ and constrain the rates for groups in $[m] \setminus \sigma$ to be on the sphere $\Scal_\rho$. Such a setup is governed by the parametrization $\nu(\cdot,\sigma, 1)$ in equation~\eqref{append:eq:parvarphi}. Specifically, fixing $h^g(\xmbf)=1\; \forall\; g \in \sigma$ would entail the metric in Definition~\ref{def:linear} to be:
\vspace{-0.1cm}
    \begin{align*}
    \bPsi(\nu(\smbf, \sigma, 1); \ambfbar, \Bmbfbar, \lambdabar) &= 
    (1-\lambdabar)\inner{\ambfbar\odot(\bm{1} - \taumbf^{\sigma})}{\smbf} +  \lambda \inner{\etambfbar^{\sigma}}{\vert \embf_1 - \smbf \vert} + c_1,
    \numberthis \label{append:eq:metricb}
    \end{align*}
where $\taumbf^{\sigma} = \sum_{g\in \sigma}\bm{\tau}^{g}$ and $\etambfbar^{\sigma} = \sum_{u, v \in [m], v > u} \1\left[|\{u,v\}\cap\sigma|=1\right]\bmbfbar^{uv}$. 
Similar to the previous section, since $\embf_i$'s are binary vectors, the sign of the absolute function w.r.t. $\smbf$ can be recovered. In particular, the metric amounts to:
\begin{align*}
&\bPsi(\nu(\smbf, \sigma, 1); \ambfbar, \Bmbfbar, \lambdabar) = \inner{(1-\lambdabar)\ambfbar \odot (\bm{1}-\bm{\tau}^{2}) + \lambdabar \wmbf_1 \odot 
\etambfbar^{\sigma}}{\smbf} + c_1,
\numberthis \label{append:eq:metricbrich1}
\end{align*}
where $\wmbf_1 \coloneqq 1 - 2\embf_1$ and $c_1$ is a constant not affecting the responses. Notice that~\eqref{append:eq:metricb} and~\eqref{append:eq:metricbrich1} are analogous to~\eqref{append:eq:metricbm2} and~\eqref{append:eq:metricbrich1m2}, respectively, except that $\taumbf^2$ is replaced by $\taumbf^\sigma$ and $\bmbfbar^{12}$ is replaced by $\etambfbar^{\sigma}$. This is a linear metric in $\smbf$. We again the use the LPME procedure in line 10of Algorithm~\ref{alg:f-me}, which outputs a normalized slope $\breve \fmbf^\sigma$ such that $\Vert \breve \fmbf^\sigma\Vert_2=1$, and thus we get an analogous solution to~\eqref{append:eq:metricbfirstm2} as:

\begin{align*}
    \lambdabar \etambfbar^{\sigma} &= \wmbf_1 \odot \left[ \left( \frac{(1-\lambdabar)(1 - \tau^{\sigma}_{k-1})\abar_{k-1} + \lambdabar \etabar^{\sigma}_{k-1}}{\breve f^{\sigma}_{k-1}}\right) \breve \fmbf^{\sigma} - (1-\lambdabar)((\bm{1} - \taumbf^{\sigma})\odot\ambfbar  \right].
    \numberthis \label{append:eq:metricbfirst}
\end{align*}
In order to elicit entire $\etambfbar^{\sigma}$, we need one more linear relation such as~\eqref{append:eq:metricbfirst}. So, we now fix the trivial rates through trivial classifier predicting class $k$ for the groups in $\sigma$, i.e., fix $h^{g}(x) = k \, \forall \, \xmbf \in \Xcal$ if $g \in \sigma$, and thus $\rmbf^{g}  = \embf_k$ for all groups $g \in \sigma$. For the rest of the groups, we constrain the confusion rates to again lie in the sphere $\Scal_\rho$ i.e. $\rmbf^{g}  = \smbf$ for $\smbf \in \Scal_\rho$ for all groups $g \in [m] \setminus \sigma$. Such a setup is governed by the parametrization $\nu(\cdot,\sigma, k)$~\eqref{append:eq:parvarphi}. The metric $\bPsi$ in Definition~\ref{def:linear} amounts to:
\begin{align*}
&\bPsi(\nu(\smbf, \sigma, k); \ambfbar, \Bmbfbar, \lambdabar) = (1-\lambdabar)\inner{\ambfbar \odot (1-\bm{\tau}^{\sigma})}{\smbf} + \lambdabar  \inner{\etambfbar^{\sigma}}{\vert \embf_k - \smbf \vert} + c_k.
\numberthis \label{append:eq:metricbrich2}
\end{align*}
Thus by running LPME procedure again in line 11 of Algorithm~\ref{alg:f-me} results in $\tilde \fmbf^{12}$ with $\Vert \tilde \fmbf^{12} \Vert_2 = 1$. Using Remark~\ref{fair-rm:ratio}, we extract the following relation between the $(k-1)$-th and $((k-1)^2+1)$-th coordinates:
\begin{align*}
\frac{\tilde f^{\sigma}_{k-1}}{\tilde f^{\sigma}_{(k-1)^2+1}} = \frac{(1-\lambdabar)(1- \tau^{\sigma}_{k-1})\abar_ {k-1} - \lambdabar \etabar^{\sigma}_{k-1}}{(1-\lambdabar)(1 - \tau^{\sigma}_{(k-1)^2+1})\abar_{(k-1)^2+1} + \lambdabar \etabar^{\sigma}_{(k-1)^2+1}}.
\numberthis \label{append:eq:metricbsecond}
\end{align*}
Combining equations~\eqref{append:eq:metricbfirst} and~\eqref{append:eq:metricbsecond}, we have:

\begin{align*}
    \sum\nolimits_{u, v} \1\left[|\{u,v\}\cap \sigma|=1\right] \tilde \bmbf^{uv} &= \gammambf^{\sigma}, \numberthis \label{append:eq:ellsol}
\end{align*}
where
\begin{align*}
    \gammambf^{\sigma} &= \wmbf_1 \odot \left[ \delta^{\sigma} \fmbf^{\sigma} - \ambfhat \odot (\bm{1} - \taumbf^{\sigma})  \right], \\ \delta^{\sigma} &= \frac{2(1-\tau^{\sigma}_{k-1})\hat a_{k-1}}{f^\sigma_{k - 1}} \left[ \frac{ \frac{(1 - \tau^{\sigma}_{(k-1)^2+1})\hat a_{(k-1)^2+1}}{(1-\tau^{\sigma}_{k-1})\hat a_{k-1}} -  \frac{\tilde f^{\sigma}_{(k-1)^2+1}}{\tilde f^{\sigma}_{k-1}} }{\left( \frac{f^{\sigma}_{(k-1)^2+1}}{f^{\sigma}_{k-1}} - \frac{\tilde f^{\sigma}_{(k-1)^2+1}}{\tilde f^{\sigma}_{k-1}} \right)} \right], \numberthis
\end{align*}
and $\tilde \bmbf^{uv} \coloneqq \lambdabar\bmbfbar^{uv}/(1 - \lambdabar)$ is a scaled version of the true (unknown) $\bmbfbar$, which nonetheless can be computed from~\eqref{append:eq:ellsol}.

By two runs of LPME algorithm, we can get $\gammambf^{\sigma}$ and solve~\eqref{append:eq:ellsol}. However, the left hand side of~\eqref{append:eq:ellsol} does not allow us to recover the $\tilde \bmbf$'s separately and provides only one equation. Let us denote the Equation~\eqref{append:eq:ellsol} by $\ell^\sigma$ corresponding to the set $\sigma$. In order to elicit all $\tilde \bmbf$'s we need a system of $M \coloneqq {m\choose 2}$ independent equations in order to elicit the   $M$ weight vectors. 

This is easily achievable by choosing $M$ $\sigma$'s so that we get $M$ set of unique equations like~\eqref{append:eq:ellsol}. Let $\Mcal$ be those set of sets. 

\noindent In most cases,  pairing two groups to have trivial rates (through trivial classifiers) and rest of the groups to have rates from the sphere $\Scal$ will work. For example, when $m=3$, fixing $\Mcal = \{ \{1,2\}, \{1,3\}, \{2,3\}\}$ suffices. Thus, running over all the choices of sets of groups $\sigma \in \Mcal$ provides the system of equations $\Lcal \coloneqq \cup_{\sigma \in \Mcal} \ell^\sigma$ (line 12 in Algorithm~\ref{alg:f-me}), which is formally described as follows:

\begin{equation}
    \left[ \begin{array}{cccc} \Xi & 0 & \dots & 0\\
    0 & \Xi & \dots & 0 \\
    \dots & \dots & \dots & \dots \\
    0 & 0 & \dots & \Xi 
    \end{array}\right] \left[ \begin{array}{c} \tilde \bmbf_{(1)} \\
    \tilde \bmbf_{(2)} \\
    \dots \\
    \tilde \bmbf_{(q)}
    \end{array}\right] = \left[ \begin{array}{c} \bm\gamma_{(1)} \\
    \bm\gamma_{(2)} \\
    \dots \\
    \bm\gamma_{(q)}
    \end{array}\right],
    \label{fair-append:btilde}
\end{equation}
where $\tilde \bmbf_{(i)} = (\tilde b_{i}^1,\tilde b_{i}^2, \cdots, \tilde b_{i}^M)$ and $\gammambf_{(i)} = (\gamma_{i}^1, \gamma_{i}^2, \cdots, \gamma_{i}^M)$ are vectorized versions of the $i$-th entry across groups for $i \in [q]$, and $\Xi \in \{0,1\}^{M\times M}$ is a binary full-rank matrix denoting membership of groups in the set $\sigma \in \Mcal$. For instance, for the choice of $\Mcal = \{ \{1,2\}, \{1,3\}, \{2,3\}\}$ when $m=3$ gives:
\bequation
\Xi = \left[ \begin{array}{ccc} 0 & 1 & 1\\
    1 & 0 & 1\\
    1 & 1 & 0\\
\end{array}\right].
\eequation
From technical point of view, one may choose any $\Mcal$ such that the resulting group membership matrix $\Xi$ is non-singular. Hence the solution of the system of equations $\Lcal$ is:
\begin{equation}
    \left[ \begin{array}{c} \tilde \bmbf_{(1)} \\
    \tilde \bmbf_{(2)} \\
    \dots \\
    \tilde \bmbf_{(q)}
    \end{array}\right] = \left[ \begin{array}{cccc} \Xi & 0 & \dots & 0\\
    0 & \Xi & \dots & 0 \\
    \dots & \dots & \dots & \dots \\
    0 & 0 & \dots & \Xi
    \end{array}\right]^{(-1)} \left[ \begin{array}{c} \bm\gamma_{(1)} \\
    \bm\gamma_{(2)} \\
    \dots \\
    \bm\gamma_{(q)}
    \end{array}\right].
    \label{fair-append:eq:sol-b}
\end{equation}
When we normalize $\tilde \bmbf$, we get the final fairness violation weight estimates as:
\begin{equation}
\bmbfhat^{uv} = \frac{\tilde \bmbf^{uv}}{\sum_{u,v=1, v > u}^m \Vert \tilde \bmbf^{uv} \Vert_2} \quad \text{for} \quad u,v \in [m], v>u.
 \label{append:eq:bsol}
\end{equation} 
Notice that, due to the above normalization, the solution is again independent of the true trade-off $\lambdabar$.

\subsection{Eliciting Trade-off $\lambdabar$; Part 3 in Figure~\ref{fig:workflow} and line 16 in Algorithm~\ref{alg:f-me}}
\label{append:ssec:lambda}

For ease of notation, let us construct a parametrization $\nu' : \Scal^+_\varrho  \rightarrow \Rcal^{1:m}$:
\vspace{-0.1cm}
\begin{equation}
\nu'(\smbf^+) \coloneqq (\smbf^+, \ombf, \dots, \ombf).
    \label{append:eq:parlambda}
\end{equation}

Using the parametrization $\nu'$ from~\eqref{append:eq:parlambda}, the metric in Definition~\ref{def:linear} reduces to a linear metric in $\smbf^+$ as discussed in~\eqref{eq:metriclambda}, i.e:
\begin{align*}
        \bPsi(\nu'(\smbf^+) \,;\,\ambfbar, \Bmbfbar, \lambdabar) = \inner{(1-\lambdabar)\taumbf^1\odot\ambfbar + \lambdabar \sum\nolimits_{v=2}^m \bmbfbar^{1v}}{\smbf^+} + c.
         \numberthis \label{append:eq:metriclambda}
\end{align*}

We first show the proof of Lemma~\ref{fair-lm:lambda} and then discuss the trade-off elicitation algorithm (Algorithm~\ref{fair-alg:lambda}).

\bproof[Proof of Lemma~\ref{fair-lm:lambda}]

For simplicity, let us abuse notation for this proof and denote $\taumbf^1\odot\ambfbar$ simply by $\ambf$,  $\sum\nolimits_{v=2}^m \bmbfbar^{1v}$ simply by $\bmbf$, and $\Scal_\varrho^+$ simply by $\Scal$. 

$\Scal$ is a convex set. Let $\Zcal = \{\zmbf = (z_1, z_2) \,|\, z_1 = <\ambf, \smbf>, z_2 = <\bmbf, \smbf>, \smbf \in \Scal\}$.

\emph{Claim:} $\Zcal$ is convex.

Let $z, z' \in \Zcal$.

$\alpha z_1+ (1-\alpha) z'_1 ~=~
\alpha <\ambf, \smbf> + (1-\alpha)  <\ambf, \smbf'> 
~=~
<\ambf, \alpha \smbf + (1-\alpha)  \smbf'> 
$

$\alpha z_2+ (1-\alpha) z'_2 ~=~
\alpha <\bmbf, \smbf> + (1-\alpha)  <\bmbf, \smbf'> 
~=~
<\bmbf, \alpha \smbf + (1-\alpha)  \smbf'> 
$

Since $\alpha \smbf + (1-\alpha)  \smbf' \in \Scal$,
$\alpha z + (1-\alpha) z' \in \Zcal$. Hence $\Zcal$ is convex. 

\emph{Claim:} The boundary of the set $\Zcal$ is a strictly convex curve with no vertices for $\ambf\neq \bmbf$.  

Recall that, the required function is given by:
\begin{align}
\vartheta(\lambda)  = \max\nolimits_{\zmbf \in \Zcal} (1-\lambda)z_1 + \lambda z_2 + c \label{eq:metricwomod}
\end{align}

(i) Since the set $\Zcal$ is convex, every boundary point is supported by a hyperplane. 

(ii)  Since $\ambf \neq \bmbf$, notice that the slope is uniquely defined by $\lambda$. Since the sphere $\Scal$ is strictly convex, the above linear functional defined by $\lambda$ is maximized by a  unique point in $\Zcal$ (similar to Lemma~\ref{mult-lem:spherebayes}). Thus, the the hyperplane is tangent at a unique point on the boundary of $\Zcal$. 

(iii) It only remains to show that there are no vertices on the boundary of $\Zcal$. Recall that a vertex exists if (and only if) some point is supported by more than one tangent hyperplane in two dimensional space. This means there are two values of $\lambda$ that achieve the same maximizer. This is contradictory since there are no two linear functionals that achieve the same maximizer on $\Scal$. 

This implies that the boundary of $\Zcal$ is a strictly convex curve. Since we are interested in the maximization of $\vartheta$, let this boundary be the upper boundary denoted by $\partial\Zcal_+$. 

\emph{Claim:} Let  $\upsilon:[0,1]\to \partial\mathcal \Zcal_+$ be continuous, bijective, parametrizations of the upper boundary. Let $\vartheta:\mathcal \Zcal \to \mathbb R$ be a quasiconcave function which is 
monotone increasing in both $z_1$ and $z_2$.
Then the composition $\vartheta\circ \upsilon: [0,1]\to\mathbb \Rmbb$ is strictly
quasiconcave (and therefore unimodal with no flat regions) on the interval $[0, 1]$.

Let $S$ be some superlevel set of the quasiconcave function $\vartheta$. 
Since $\upsilon$ is a continuous bijection and since the boundary $\partial \Zcal_+$ is a strictly convex curve with no vertices, w.l.o.g., for any $r<s<t$, 
$z_1(\upsilon(r))< z_1(\upsilon(s)) < z_1(\upsilon(t))$, and $z_2(\upsilon(r))>z_2(\upsilon(s))>z_2(\upsilon(t))$.
(otherwise, swap $r$ and $t$). Since the boundary $\partial \Zcal_+$ is a strictly convex curve, then $\upsilon (s)$ must be greater (component-wise) a point in the convex combination of $\upsilon(r)$ and $\upsilon(t)$. Let us denote that point by $u$. Since $\vartheta$ is monotone increasing, then $x\in S$ implies that $y \in S$, too,  for all $y\geq x$ componentwise. Therefore, $\vartheta(\upsilon(s)) \leq \vartheta(u)$. Since $S$ is convex, $u \in S$ and thus $\upsilon(s) \in S$.

This implies that $\upsilon^{-1}(\partial \Zcal_+\cap S)$ is an interval; hence it is convex, which in turn tells us that the superlevel sets of $\vartheta\circ \upsilon$ are convex. So, $\vartheta\circ \upsilon$ is quasiconcave, as desired. This implies unimodaltiy, because a function  defined on real line which has more than one local maximum can not be quasiconcave. Moreover, since there are no vertices on the boundary $\partial \Zcal_+$, the $\vartheta\circ \upsilon: [0,1]\to\mathbb \Rmbb$ is strictly quasiconcave (and thus unimodal with no flat regions) on the interval $[0, 1]$. This completes the proof of Lemma~\ref{fair-lm:lambda}.
\eproof

\vspace{-0.5cm}
\section{Proof of Section~\ref{fair-sec:guarantees}}
\label{append:sec:guarantees}
\bproof[Proof of Theorem~\ref{thm:error}] We break this proof into three parts. 
\vspace{-0.2cm}
\begin{enumerate}[leftmargin=*]
\item \emph{Elicitation guarantees for the misclassification cost $\hphi$ (i.e., $\ambfhat$)}

Since Algorithm~\ref{alg:f-me} elicits a linear metric using the $q$-dimensional sphere $\Scal$, the guarantees on $\ambfhat$ follows from Theorem~\ref{thm:lpm-elit-error}. Thus, under Assumption~\ref{as:regularity}, the output $\ambfhat$ from line 2 of Algorithm~\ref{alg:f-me} satisfies $\Vert \ambf^*-\ambfhat \Vert_{2}\leq O(\sqrt{q}(\epsilon+\sqrt{\epsilon_\Omega/\rho}))$ after $O\left(q\log \tfrac \pi {2\epsilon}\right)$ queries. 

\item \emph{Elicitation guarantees for the fairness violation cost $\hvarphi$ (i.e., $\Bmbfhat$)}

We start with the definition of true $\bm{\gamma}$ (i.e. when all the elicited entities are true) from~\eqref{append:eq:ellsol} and let us drop the superscript $\sigma$ for simplicity. Furthermore, let $\epsilon+\sqrt{\epsilon_\Omega/\rho}$ be denoted by $\epsilon$.

\begin{align*}
\gammambf = \wmbf_1 \odot \left[ \delta  \breve \fmbf - \ambfbar\odot(\bm{1}-\taumbf)  \right], \quad \text{where} \numberthis
\end{align*}
\vspace{-0.5cm}
\begin{align*}
\delta = \frac{2(1-\tau_{k-1})\abar_{k-1}}{ \breve f_{k - 1}} \left[ \frac{ \frac{(1-\tau_{(k-1)^2+1}) \abar_{(k-1)^2+1}}{(1-\tau_{k-1}) \abar_{k-1}} -  \frac{ \tilde f_{(k-1)^2+1}}{\tilde f_{k-1}} }{\left( \frac{\breve f_{(k-1)^2+1}}{ \breve f_{k-1}} - \frac{ \tilde f_{(k-1)^2+1}}{ \tilde f_{k-1}} \right)} \right]. \numberthis
\end{align*}
Let us look at the derivative of the $i$-th coordinate of $\bm{\gamma}$.  
\bequation
\frac{\partial \gamma_i}{\partial a_j} = \begin{cases}
0 & \text{if } j\neq i,j\neq k-1,j\neq(k-1)^2+1\\
-\tau_i & \text{if } j = i\\
c_{i, 1} & \text{if } j = k-1\\
c_{i, 2} & \text{if } j = (k-1)^2+1,
\end{cases}
\eequation
where $c_{i, 1}$ and $c_{i, 2}$ are some bounded constants due to Assumption~\ref{as:regularity}. Similarly, $\partial\gamma_i/\partial f_j$ is bounded as well due to the regularity Assumption~\ref{as:regularity}. This means that $\gamma_i$ is Lipschitz in $\ell_2$-norm w.r.t. $\ambf$ and $\fmbf$. Thus,
\bequation
\Vert \bm{\gamma} - \bm{\hat \gamma} \Vert_\infty \leq c_3 \Vert \ambfbar - \ambfhat  \Vert_2  + c_4 \Vert \breve \fmbf - \hat{\breve{\fmbf}} \Vert_2,
\eequation
for some Lipschits constants $c_3$ and $c_4$. From the bounds of Part~1 of this proof, we have:
\bequation
\Vert \bm{\gamma} - \bm{\hat \gamma} \Vert_\infty \leq O(\sqrt q\epsilon).
\eequation

Recall the construction of $\tilde \bmbf_{(i)}$ from~\eqref{fair-append:btilde}. We then have from the solution of system of equations~\eqref{fair-append:eq:sol-b} that:

\vspace{-0.35cm}
\bequation
 \tilde \bmbf_{(i)} = \Xi^{-1}\bm{\gamma}_{(i)} \quad \forall \; i \in [q],
\eequation
where $\tilde \bmbf_{(i)} =  (\tilde b_{i}^1, \tilde b_{i}^2, \cdots, \tilde b_{i}^M)$ and $\tilde \gammambf_{(i)} = (\gamma_{i}^1, \gamma_{i}^2, \cdots, \gamma_{i}^M)$ are vectorized versions of the $i$-th entry across groups for $i \in [q]$. $\Xi \in \{0,1\}^{M \times M}$ is a full-rank symmetric matrix with bounded infinity norm $\Vert \Xi^{-1} \Vert_\infty \leq c$ (here, infinity norm of a matrix is defined as the maximum absolute row sum of the matrix). Thus we have: $\Vert \tilde \bmbf_{(i)} - \hat {\tilde\bmbf}_{(i)}\Vert_\infty = $
\vspace{-0.4cm}
\bequation
\Vert \Xi^{-1} \bm{\gamma}_{(i)} - \Xi^{-1} \hat{\bm{\gamma}}_{(i)}
\Vert_\infty = \Vert \Xi^{-1} (\bm{\gamma}_{(i)} - \hat{\bm{\gamma}}_{(i)}) \Vert_\infty \leq \Vert \Xi^{-1} \Vert_\infty \Vert \bm{\gamma}_{(i)} - \hat{\bm{\gamma}}_{(i)} \Vert_\infty,
\eequation
which gives
\bequation
\Vert \tilde \bmbf_{(i)} - \hat {\tilde\bmbf}_{(i)}\Vert_\infty \leq O(\sqrt q \epsilon).
\eequation

Now, our final estimate is the normalized form of $\hat{\tilde{\bmbf}}$ from~\eqref{append:eq:bsol}, so the final error in the stacked version $vec(\Bmbfbar)$ and $vec(\Bmbfhat)$ is:

\begin{equation}
\Vert vec(\Bmbfbar) - vec(\Bmbfhat) \Vert_\infty \leq O(\sqrt q\epsilon).
\label{eq:errinB}
\end{equation}

Since there are $q\times M$ entities in $vec(\Bmbf)$, we have:

\begin{align*}
\Vert vec(\Bmbfbar) - vec(\Bmbfhat) \Vert_2 \leq O(\sqrt{qM}\sqrt q\epsilon) = O(mq\epsilon).
\numberthis \label{append:bguarantee}
\end{align*}

Due to elicitation on sphere and the oracle noise $\epsilon_\Omega$ as defined in Definition~\ref{fair-def:noise}, we can replace $\epsilon$ with $\epsilon + \sqrt{\epsilon_\Omega/\rho}$ back to get the final bound on fairness violation weights as in Theorem~\ref{thm:error}.

\item \emph{Elicitation guarantees for the trade-off parameter (i.e., $\lambdahat$)}

The metric for our purpose is a linear metric in $\smbf^+ \in \Scal_\rho^+$ with the following slope:
\vspace{-0.2cm}
\begin{align*}
    \bPsi(\nu'''(\smbf^+) \,;\,\ambfbar, \Bmbfbar, \lambdabar) = \inner{(1-\lambdabar)\taumbf^{1}\odot\ambfbar + \lambdabar \sum_{v=2}^m \bmbfbar^{1v}}{\smbf^+}. \numberthis
    \label{append:eq:lambdametric}
\end{align*}
\vskip -0.2cm
Since we elicit $\lambda$ through queries over a surface of the sphere, we pose this problem as finding the right angle (slope) defined by the true $\lambdabar$. Note that
$\lambdabar$ is what we want to elicit; however, due to oracle noise $\epsilon_\Omega$, we can only aim to achieve a target angle $\lambda_t$. Moreover, we do not have true $\ambfbar$ and $\Bmbfbar$ but have only estimates $\ambfhat$ and $\Bmbfhat$. Thus we query proxy solutions always and can only aim to achieve an estimated version $\lambda_e$ of the target angle. Lastly, Algorithm~\ref{fair-alg:lambda} is stopped within an $\epsilon$ threhsold, thus the final solution $\lambdahat$  is within $\epsilon$ distance from $\lambda_e$. In total, we want to find:

\bequation
\vert \lambdabar - \lambdahat \vert \leq \underbrace{\vert \lambdabar - \lambda_t \vert}_{\text{oracle error}} +  \underbrace{\vert \lambda_t - \lambda_e \vert}_{\text{estimation error}} + \underbrace{\vert \lambda_e - \lambdahat \vert}_{\text{optimization error}}.
\eequation

\begin{itemize}[itemsep = 0pt, leftmargin = 0.5cm]
    \item optimization error: $\vert \lambda_e - \lambdahat \vert\leq \epsilon$. 
    \item oracle error: Notice that the oracle correctly answers as long as $\varrho(1 - \cos(\lambdabar - \lambda_t)) > \epsilon_\Omega$. This is because the metric is a 1-Lipschitz linear function, and the optimal value on the sphere of radius $\varrho$ is $\varrho$. However, as $1 - \cos(x) \geq x^2/3$, so oracle is correct as long as $\vert \lambdabar - \lambda_e\vert \geq \sqrt{3\epsilon_\Omega/\varrho}$. Given this, the binary search proceeds in the correct direction. 
    \item estimation error: We make this error because we only have access to the estimated $\ambfhat$ and $\Bmbfhat$ not the true $\ambfbar$ and $\Bmbfbar$. However, since the metric in~\eqref{append:eq:lambdametric} is Lipschitz in $\ambfbar$ and $\sum_{v=2}^m \bmbfbar^{1v}$, this error can be treated as oracle feedback noise where the oracle responses with the estimated $\ambfhat$ and $\Bmbfhat$. Thus, if we replace $\epsilon_\Omega$ from the previous point to the error in $\ambfhat$ and $\sum_{v=2}^m\bmbfhat^{1v}$, the binary search moves in the right direction as long as 
    \bequation
    \vert \lambda_t -\lambda_e\vert \geq O\left(\sqrt{\frac{\Vert \ambfbar - \ambfhat \Vert_2 + \sum_{v=2}^m \Vert \bmbfbar^{1v} - \bmbfhat^{1v} \Vert_2}{\varrho}}\right) = O\left(\sqrt{m q (\epsilon + \sqrt{\epsilon_\Omega/\rho})/\varrho}\right),
    \eequation
    where we have used~\eqref{append:bguarantee} to bound the error in $\{\bmbfhat^{1v}\}_{v=2}^m$.
\end{itemize}
Combining the three error bounds above gives us the desired result for trade-off parameter in Theorem~\ref{thm:error}.
\end{enumerate}
\vspace{-0.4cm}
\eproof  

\chapter{Quadratic Performance Metric Elicitation}
\label{apx:quadratic}
\section{Geometry of the Feasible Space (Proofs of Section~\ref{ssec:mpme}, \ref{ssec:f-metric})}
\label{append:sec:confusion}

\bproof[Proof of Proposition~\ref{prop:C} and Proposition~\ref{prop:f-C}]

The proof of Proposition~\ref{prop:f-C} is same as Proposition~\ref{fair-prop:C}. The proof of Proposition~\ref{prop:C} is analogous where the probability measures (corresponding to classifiers and their rates) are not conditioned on any group. 
\eproof

\subsection{Finding the Sphere $\Scal\subset \Rcal$}
\label{append:ssec:sphere}

In this section, we provide details regarding how a sphere $\Scal$ with sufficiently large radius $\rho$ inside the feasible region $\Rcal$ may be found (see Figure~\ref{fig:MEgeom}(b)). The following discussion is borrowed from Appendix~\ref{apx:fair} and provided here for completeness. 

\balgorithm[t]
\caption{Obtaining the sphere $\Scal \subset \Rcal$ (Figure~\ref{fig:MEgeom}(b)) of radius $\rho$ centered at $\ombf$}
\label{alg:sphere}
\small
\balgorithmic[1]
\FOR{$j=1, 2, \cdots, q$}
\STATE Let $\mathbf \alphambf_j$ be the standard basis vector. 
\STATE Compute the maximum constant $c_j$ such that $\ombf + c_j \mathbf \alphambf_j$ is feasible by solving~\eqref{quad-eq:op1}.
\ENDFOR
\STATE Let $CONV$ denote the convex hull of $\{\ombf\pm c_j\mathbf \alphambf_j\}_{j=1}^{q}$. It will be centered at $\ombf$.
\STATE Compute the radius $\rho$ of the largest ball that fits in $CONV$.
\STATE\textbf{Output:} Sphere $\Scal$ with radius $\rho$ centered at $\ombf$.
\ealgorithmic
\ealgorithm

The following optimization problem is a special case of OP2 in~\cite{narasimhan2018learning}. The problem is associated with a feasibility check problem.  Given a rate profile $\rmbf_0$, the optimization routine tries to construct a classifier that achieves the rate $\rmbf_0$ within small error $\epsilon >0$. 

\begin{align}
    \min_{\rmbf \in \Rcal} \; 0 \qquad s.t. \;\; \Vert \rmbf - \rmbf_0 \Vert_2 \leq \epsilon.
    \label{quad-eq:op1}
\end{align}

The above optimization problem checks the feasibility, and if there exists a solution to the above problem, then Algorithm~1 of~\cite{narasimhan2018learning} returns it. 
Furthermore, Algorithm~\ref{alg:sphere} computes a value of $\rho\geq \tilde{p}/k$, where $\tilde{p}$ is the radius of the largest ball contained in the set $\Rcal$. Also, the approach in~\cite{narasimhan2018learning} is consistent, thus we should get a good estimate of the sphere, provided we have sufficiently large number of samples. The algorithm is completely offline and does not impact oracle query complexity.

\blemma
    Let $\tilde{p}$ denote the radius of the largest ball in $\Rcal$ centered at $\ombf$. Then Algorithm~\ref{alg:sphere} returns a sphere with radius $\rho\geq \tilde{p}/k$, where $k$ is the number of classes. 
\elemma

The idea in Algorithm~\ref{alg:sphere} can be trivially extended to finding a sphere $\Sbar \subset \Rcal^1\cap\dots\cap\Rcal^m$ corresponding to Remark~\ref{as:f-sphere}.

\section{{Quadratic Performance Metric Elicitation Procedure}}
\label{append:sec:qpme}

In this section, we describe how the subroutine calls to LPME in Algorithm~\ref{alg:q-me} elicit a quadratic metric in Definition~\ref{def:quadmet}. We start with the shifted metric  of Equation~\eqref{eq:loclinapx}. 

As explained in Chapter~\ref{chp:quadratic}, we may assume $d_1 \neq 0$ due to Assumption~\ref{assump:smoothness}. We can derive the following solution using any non-zero coordinate of $\dmbf$, instead of $d_1$. We can identify a non-zero coordinate using $q$ trivial queries of the form $(\varrho\alphambf_i + \ombf, \ombf), \forall i \in [q]$. 

\begin{enumerate}
    \item From line 2 of Algorithm~\ref{alg:q-me}, we get local linear approximation at $\ombf$. Using Remark~\ref{rm:ratio}, we have~\eqref{eq:0col} which is
    \begin{equation}
    d_i = \frac{f_{i0}}{f_{10}}d_1 \qquad \forall \; i \in \{2, \dots, q\}.
    \label{append:eq:0col}
\end{equation}
\item Similarly, if we apply LPME on small balls around rate profiles $\zmbf_j$, Remark~\ref{rm:ratio} gives us:
\begin{equation}
\frac{d_i + (\rho-\varrho)B_{ij}}{d_1 + (\rho-\varrho)B_{1j}} = \frac{f_{ij}}{f_{1j}} \quad \forall \; i \in \{2, \ldots, q\},\; j \leq i.
\label{append:eq:jcol}
\end{equation}

\begin{align*}
    &\implies d_i + (\rho-\varrho)B_{ij} = \frac{f_{ij}}{f_{1j}}(d_1 + (\rho-\varrho)B_{1j})\\
    &\implies (\rho-\varrho)B_{ij} = \frac{f_{ij}}{f_{1j}}(d_1 + (\rho-\varrho)B_{j1}) - d_i \\
    &\implies (\rho-\varrho)B_{ij} = \frac{f_{ij}}{f_{1j}}(d_1 +  \frac{f_{j1}}{f_{11}} (d_1 + (\rho - \varrho)B_{11}) - d_j ) - \frac{f_{i0}}{f_{10}}d_1\\
    &\implies (\rho-\varrho)B_{ij} = \left(\frac{f_{ij}}{f_{1j}} - \frac{f_{i0}}{f_{10}} + \frac{f_{ij}}{f_{1j}} \left(\frac{f_{j1}}{f_{11}} - \frac{f_{j0}}{f_{10}}\right)\right)  d_1 + (\rho-\varrho)  \frac{f_{j1}}{f_{11}}B_{11}, \numberthis \label{append:eq:solvemidssystem}
\end{align*}
where we have used that the matrix $\Bmbf$ is symmetric in the second step, and~\eqref{append:eq:0col} in the last two steps. We can represent each element in terms of $B_{11}$ and $d_1$. So, a relation between $B_{11}$ and $d_1$ may allow us to represent each element of $\ambf$ and $\Bmbf$ in terms of $d_1$.

\item Therefore, by applying LPME on small balls around rate profiles $-\zmbf_1$, Remark~\ref{rm:ratio} gives us~\eqref{eq:negativegrad}:

\begin{equation}
    \frac{d_2-(\rho - \varrho)B_{21}}{d_1-(\rho - \varrho)B_{11}} = \frac{f_{21}^-}{f_{11}^-}.
    \label{append:eq:negativegrad}
\end{equation}

\item Using~\eqref{append:eq:jcol} and~\eqref{append:eq:negativegrad}, we have:

\begin{align*}
    (\rho - \varrho)B_{11} = \frac{ \frac{f_{21}^-}{f_{11}^-} + \frac{f_{21}}{f_{11}} - 2\frac{f_{20}}{f_{10}}  }{ \frac{f_{21}^-}{f_{11}^-} - \frac{f_{21}}{f_{11}} }d_{1}.
    \numberthis \label{append:eq:firsttermB}
\end{align*}
Putting~\eqref{append:eq:firsttermB} in~\eqref{append:eq:solvemidssystem}, we get:
\begin{align*}
    B_{ij} &=  \left[\frac{f_{ij}}{f_{1j}}\left(1 + \frac{f_{j1}}{f_{11}} \right) - \frac{f_{ij}}{f_{1j}}\frac{f_{j0}}{f_{10}} - \frac{f_{i0}}{f_{10}} +  \frac{f_{ij}}{f_{1j}}\frac{f_{j1}}{f_{11}} \frac{ \frac{f_{21}^-}{f_{11}^-} + \frac{f_{21}}{f_{11}} - 2\frac{f_{20}}{f_{10}}  }{ \frac{f_{21}^-}{f_{11}^-} - \frac{f_{21}}{f_{11}}  }\right]d_1 \\
    &= \left(F_{i,1,j} (1 + F_{j,1,1}) - F_{i,1,j} F_{j,1,0}  - F_{i,1,0} + F_{i,1,j}\frac{F^-_{2,1,1} + F_{2,1,1} - 2F_{2,1,0}}{F^-_{2,1,1} - F_{2,1,1}}\right)d_1,
    \numberthis \label{append:eq:poly2elicitamatfinal}
\end{align*}
where
$F_{i,j,l} = \frac{f_{il}}{f_{jl}}$ and $F^-_{i,j,l} = \frac{f^-_{il}}{f^-_{jl}}$. As $\ambf = \dmbf + \Bmbf \ombf$, we can represent each element of $\ambf$ and $\Bmbf$ using~using~\eqref{append:eq:0col}  and \eqref{append:eq:poly2elicitamatfinal} in terms of $d_1$. We can then use the normalization condition $\Vert \ambf\Vert_2^2 + \Vert \Bmbf \Vert_F^2 = 1$ to get estimates of $\ambf, \Bmbf$ which are independent of $d_1$. 
\end{enumerate}

This completes the derivation of solution from QPME (section~\ref{sec:quadme}).

\section{{Fair (Quadratic) Performance Metric Elicitation Procedure}}
\label{append:sec:fpme}

\balgorithm[t]
\caption{Fair (Quadratic) Performance Metric Elicitation}
\label{alg:fqme}
\balgorithmic[1]
\STATE \textbf{Input:} Query set $\Scal'$, search tolerance $\epsilon > 0$, oracle $\Omega'$ 
\STATE Let $\Lcal \leftarrow \varnothing$ 
\FOR{$\sigma \in \Mcal$}
\STATE $\bm{\beta}^{\sigma}\leftarrow$ QPME$(\Scal', \epsilon, \Omega')$
\STATE Let $\ell^\sigma$ be Eq.~\eqref{append:eq:fairBij}, extend $\Lcal \leftarrow \Lcal \cup \{\ell^\sigma\}$
\ENDFOR
\STATE $\hat{\Bmbb} \leftarrow $ normalized solution from~\eqref{append:eq:fairbsol} using $\Lcal$
\STATE $\hat \lambda \leftarrow$ trace back normalized solution from~\eqref{append:eq:fairbsol} for any $\sigma$
\STATE \textbf{Output:} $\ambfhat, \hat{\Bmbb}, \hat \lambda$ 
\ealgorithmic
\ealgorithm

We first discuss eliciting the fair (quadratic) metric in Definition~\ref{def:f-linmetric}, where all the parameters are unknown. We then provide an alternate procedure for eliciting just the trade-off parameter $\lambda$ when the predictive performance and fairness violation coefficients are known. The latter is a separate application as discussed in~\cite{zhang2020joint}. However, unlike Zhang et al.~\cite{zhang2020joint}, instead of ratio queries, we use simpler pairwise comparison queries.

In this section, we work with any number of groups $m\geq 2$. The idea, however, remains the same as described in Chapter~\ref{chp:quadratic} for number of groups $m=2$. We specifically select queries from the sphere $\overline{\Scal} \subset \Rcal^1 \cap \dots \cap\Rcal^m$, which is common to all the group-specific feasible region of rates, so to reduce the problem into multiple instances of the proposed QPME procedure of Section~\ref{sec:quadme}. 

Suppose that the oracle's fair performance metric is $\phi^{\text{fair}}$ parametrized by $(\ambf, \Bmbb, \lambda)$  as in Definition~\ref{def:f-linmetric}. The overall fair metric elicitation procedure framework is summarized in Algorithm~\ref{alg:fqme}. The framework exploits the sphere $\overline{\Scal} \subset \Rcal^1 \cap \dots\cap\Rcal^m$ and uses the QPME procedure (Algorithm~\ref{alg:q-me}) as a subroutine multiple times. 

Let us consider a non-empty set of sets $\Mcal \subset 2^{[m]} \setminus \{\varnothing, [m]\}$. We will later discuss how to choose such a set $\Mcal$. 
We partition the set of groups $[m]$ into two sets of groups. Let $\sigma \in \Mcal$ and $[m] \setminus \sigma$ be one such partition of the $m$ groups defined by the set of groups $\sigma$. For example, when $m=3$, one may choose the set of groups $\sigma = \{1, 2\}$. 

Now, consider a sphere $\Scal'$ whose elements $\rmbf^{1:m} \in \Scal'$ are given by:
\begin{equation}
    \rmbf^g = \begin{cases}
    \smbf & \text{if } g \in \sigma\\
    \ombf & \text{o.w. }
    \end{cases}
\label{eq:parvarphi}
\end{equation}
This is an extension of the sphere $\Scal'$ defined in Chapter~\ref{chp:quadratic} for the $m>2$ case. Elements in $\Scal'$ have rate profiles $\smbf \in \overline{\Scal}$ to the groups in $\sigma$ and trivial rate profile $\ombf$ to the remaining groups in $[m] \setminus \sigma$. 
Analogously, the modified oracle is $\Omega'(\rmbf_1, \rmbf_2) = \Omega((\rmbf^{1:m}_1), (\rmbf^{1:m}_2))$, where $\rmbf^{1:m}_1, \rmbf^{1:m}_2$ are the elements of the spheres $\Scal'$ above. 
Thus, for elements in $\Scal'$, the metric in Definition~\ref{def:f-linmetric} reduces to:

\begin{align*}
\phi^{\text{fair}}(\rmbf^{1:m} \in \Scal' \,;\, \ambf, \Bmbb, \lambda) =  
(1-\lambda)\inner{\ambf \odot \taumbf^\sigma}{\smbf - \ombf} + \lambda \frac{1}{2} (\smbf - \ombf)^T\Wmbf^\sigma(\smbf - \ombf) + c^\sigma 
\numberthis \label{eq:metricbrich}
\end{align*}
where $\taumbf^\sigma = \sum_{g\in \sigma}\taumbf^g$, $\Wmbf^\sigma = \sum_{u \in \sigma, v \in [m]\setminus\sigma} B^{uv}$, and $c^\sigma$ is a constant not affecting the oracle responses.

The above metric is a particular instance of $\bphi(\smbf; \dmbf, \Bmbf)$ in~\eqref{eq:quadmetshift} with $\dmbf \coloneqq (1-\lambda)\ambf\odot\taumbf^\sigma$ and $\Bmbf \coloneqq \lambda \Wmbf^\sigma$; thus, we apply QPME procedure as a subroutine in  Algorithm~\ref{alg:fqme} to elicit the metric in~\eqref{eq:metricbrich}. 

The only change needed to be made to the algorithm is in line 7, where 
we need to take into account the changed relationship between $\dmbf$ and $\ambf$, and need to separately (not jointly) normalize the linear and quadratic coefficients. With this change, the output of the algorithm directly gives us the required estimates. 
Specifically, we have from line 2 of Algorithm~\ref{alg:q-me} and \eqref{eq:0col} 
an estimate 

\begin{equation}
 \frac{{d}_{i}}{{d}_{1}} = \frac{\tau^\sigma_{i} {a}_i}{\tau^\sigma_{1} {a}_1} = \frac{f_{i0}}{f_{10}} \implies    {a}_i = \frac{f_{i0}}{f_{10}} \frac{\tau^\sigma_{1}}{\tau^\sigma_{i}} {a}_1.
 \label{append:eq:fair0col}
\end{equation}

Using the normalization condition (i.e., $\Vert \ambf \Vert_2 = 1$), we directly get an estimate $\ambfhat$ for the linear coefficients. Similarly, steps 3-5 of Algorithm~\ref{alg:q-me} and \eqref{eq:poly2elicitamatfinal} gives us:$\hat{B}_{ij} = $
\begin{align*}
    \sum_{u \in \sigma, v \in [m]\setminus\sigma} \tilde B^{uv}_{ij} &= \Big(F_{i,1,j}^\sigma (1 + F_{j,1,1}^\sigma) - F_{i,1,j}^\sigma F_{j,1,0}^\sigma d_{1}
    - F_{i,1,0}^\sigma + F_{i,1,j}^\sigma\textstyle\frac{F^{-, \sigma}_{2,1,1} + F_{2,1,1}^\sigma - 2F_{2,1,0}^\sigma}{F^{-, \sigma}_{2,1,1} - F_{2,1,1}^\sigma}\Big)\tau^1_1\hat{a}_1 \\
    &= \beta^\sigma,  \numberthis \label{append:eq:fairBij}
\end{align*}
where the above solution is similar to the two group case in~\eqref{eq:fairBij}, but here it is corresponding to a partition of groups defined by $\sigma$, and $\tilde \Bmbf^{uv} \coloneqq \lambda\Bmbf^{uv}/(1 - \lambda)$ is a scaled version of the true (unknown) $\Bmbf^{uv}$. Let equation~\eqref{append:eq:fairBij} be denoted by $\ell^\sigma$. Also, let the right hand side term of~\eqref{append:eq:fairBij} be denoted by $\beta^\sigma$. 

Since we want to elicit $m\choose 2$ fairness violation weight matrices in $\Bmbb$, we require $m\choose 2$ ways of partitioning the groups into 
two sets so that we construct $m\choose 2$ independent matrix equations similar to~\eqref{append:eq:fairBij}. 
Let $\Mcal$ be those set of sets. 
Thus, running over all the choices of sets of groups $\sigma \in \Mcal$ provides the system of equations $\Lcal \coloneqq \cup_{\sigma \in \Mcal} \ell^\sigma$ (line 5 in Algorithm~\ref{alg:fqme}), which is:

\begin{equation}
    \left[ \begin{array}{cccc} \Xi & 0 & \dots & 0\\
    0 & \Xi & \dots & 0 \\
    \dots & \dots & \dots & \dots \\
    0 & 0 & \dots & \Xi 
    \end{array}\right] \left[ \begin{array}{c} \tilde \bmbf_{(11)} \\
    \tilde \bmbf_{(12)} \\
    \dots \\
    \tilde \bmbf_{(qq)}
    \end{array}\right] = \left[ \begin{array}{c} \bm\beta_{(11)} \\
    \bm\beta_{(12)} \\
    \dots \\
    \bm\beta_{(qq)}
    \end{array}\right],
    \label{append:btilde}
\end{equation}

where $\tilde \bmbf_{(ij)} = (\tilde b_{ij}^1,\tilde b_{ij}^2, \cdots, \tilde b_{ij}^{m\choose 2})$ and $\gammambf_{(ij)} = (\beta_{ij}^1, \beta_{ij}^2, \cdots, \beta_{ij}^{m\choose 2})$ are vectorized versions of the $ij$-th entry across groups for $i, j \in [q]$, and $\Xi \in \{0,1\}^{{m\choose 2}\times {m\choose 2}}$ is a binary full-rank matrix denoting membership of groups in the set $\sigma$. For example, when one chooses $\Mcal = \{ \{1,2\}, \{1,3\}, \{2,3\}\}$ for $m=3$, $\Xi$ is given by:
\bequation
\Xi = \left[ \begin{array}{ccc} 0 & 1 & 1\\
    1 & 0 & 1\\
    1 & 1 & 0\\
\end{array}\right].
\eequation
One may choose any set of sets $\Mcal$ that allows the resulting group membership matrix $\Xi$ to be  non-singular. The solution of the system of equations $\Lcal$ is:

\begin{equation}
    \left[ \begin{array}{c} \tilde \bmbf_{(11)} \\
    \tilde \bmbf_{(12)} \\
    \dots \\
    \tilde \bmbf_{(qq)}
    \end{array}\right] = \left[ \begin{array}{cccc} \Xi & 0 & \dots & 0\\
    0 & \Xi & \dots & 0 \\
    \dots & \dots & \dots & \dots \\
    0 & 0 & \dots & \Xi
    \end{array}\right]^{(-1)} \left[ \begin{array}{c} \bm\beta_{(11)} \\
    \bm\beta_{(12)} \\
    \dots \\
    \bm\beta_{(qq)}
    \end{array}\right].
    \label{append:eq:sol-b}
\end{equation}
When all $\tilde \Bmbf^{uv}$'s are normalized, we have the estimated fairness violation weight matrices as:
\begin{equation}
\Bmbfhat^{uv} = \frac{\tilde \Bmbf^{uv}}{\frac{1}{2}\sum_{u,v=1, v > u}^m \Vert \tilde \Bmbf^{uv} \Vert_F} \quad \text{for} \quad u,v \in [m], v>u.
 \label{append:eq:fairbsol}
\end{equation} 
Due to the above normalization, the solution is again independent of the true trade-off $\lambda$.

Given estimates $\hat{B}^{uv}_{ij}$ and $\ahat_1$,  we can now additionally estimate the trade-off parameter $\hat{\lambda}$ from  $\ell^\sigma$~\eqref{append:eq:fairBij} for any $\sigma \in \Mcal$. This completes the fair (quadratic) metric elicitation procedure. 

\subsection{Eliciting Trade-off $\lambda$ when (linear) predictive performance and (quadratic) fairness violation coefficients are known}
\label{append:ssec:lambda}

We  now provide an alternate binary search based method similar to Chapter~\ref{chp:fair} for eliciting the trade-off parameter $\lambda$ when the linear predictive and quadratic fairness coefficients are already known. This is along similar lines to the application considered by Zhang et al.~\cite{zhang2020joint}, but unlike them, instead of ratio queries, we require simpler pairwise queries. 

Here, the key insight is to approximate the non-linearity posed by the fairness violation in Definition~\ref{def:f-linmetric}, which then reduces the problem to a  one-dimensional binary search. We have:
\begin{align*}
&\phi^{\text{fair}}(\tupr \,;\, \ambf, \Bmbb, \lambda) \,\coloneqq\,  (1-\lambda)\inner{\ambf}{\rmbf} + \lambda \frac{1}{2} \left(\sum\nolimits_{u,v=1,v>u}^{m} (\rmbf^u - \rmbf^v)^T\mathbbm{\Bmbf}^{uv}(\rmbf^{u} - \rmbf^v)\right). \numberthis \label{append:eq:fairmetshifted}
\end{align*}
To this end, we define a new sphere $\Scal' = \{ (\smbf,\ombf, \dots, \ombf)  | \smbf \in \overline{\Scal}\}$. The elements in $\Scal'$ is the set of rate profiles whose first group achieves rates $\smbf \in \overline{\Scal}$ and rest of the groups achieve trivial rate $\ombf$ (corresponding to uniform random classifier). For any element in $\Scal'$, the associated discrepancy terms $(\rmbf^u - \rmbf^v) = 0$ for $u,v \neq 1$. 
Thus for elements in $\Scal'$, the metric in Definition~\ref{def:f-linmetric} reduces to:
\vspace{-0.2cm}
\begin{align*}
        \phi^{\text{fair}}((\smbf, \ombf, \dots, \ombf) \,;\,\ambf, \Bmbb, \lambda) =& (1-\lambda)\inner{\taumbf^1\odot\ambf}{\smbf - \ombf} + 
        \lambda\frac{1}{2} (\smbf - \ombf)^T\sum_{v=2}^m \Bmbf^{1v} (\smbf - \ombf) + c.
         \numberthis \label{append:eq:metriclambda}
\end{align*}
\vskip -0.3cm
Additionally, we consider a small sphere $\overline{\Scal}'_{\zmbf_1}$, where $\zmbf_1 \coloneqq (\rho - \varrho)\bm{\alpha}_1 + \ombf$, similar to what is shown in Figure~\ref{fig:MEgeom}(a). We may approximate the quadratic term on the right hand side above by its first order Taylor approximation as follows:

\begin{align*}
        \phi^{\text{fair}}( (\smbf, \ombf, \dots, \ombf) ;\ambf, \Bmbb, \lambda) &\approx  \phi^{\text{fair, apx}}( (\smbf, \ombf, \dots, \ombf) ;\ambf, \Bmbb, \lambda) \\ &= \inner{(1-\lambda)\taumbf^1\odot\ambf + \lambda \sum_{v=2}^m \Bmbf^{1v}(\zmbf_1 - \ombf)}{\smbf}
         \numberthis \label{eq:metriclambdalinear}
\end{align*}

\noindent for $\smbf$ in a small neighbourhood around the rate profile $\zmbf_1$. Since the metric is essentially linear in $\smbf$, the following lemma from Chapter~\ref{chp:fair} shows that the metric in~\eqref{eq:metriclambdalinear} is quasiconcave in $\lambda$. 

\blemma
Under the regularity assumption that \bequation
\inner{\taumbf^1\odot\ambf}{\sum_{v=2}^m \Bmbf^{1v}(\zmbf_1 - \ombf)}\neq 1,
\eequation
the function
\begin{equation}
\vartheta(\lambda) \coloneqq \max_{\smbf \in \overline{\Scal}'_{\zmbf_1}} \phi^{\text{fair, apx}}( (\smbf, \ombf, \dots, \ombf) ;\ambf, \Bmbb, \lambda)
\label{append:eq:vartheta}
\end{equation}
is strictly quasiconcave (and therefore unimodal) in $\lambda$.
\label{lm:lambda}
\elemma
The unimodality of $\vartheta(\lambda)$ allows us to perform the one-dimensional binary search in Algorithm~\ref{alg:lambda} using the query space $\overline{\Scal}'_{\zmbf_1}$, tolerance $\epsilon$, and the oracle $\Omega$. The binary search algorithm is same as Algorithm~\ref{fair-alg:lambda} and provided here for completeness. 

\balgorithm[t]
\caption{Eliciting the trade-off $\lambda$ when predictive performance and fairness violation are known}
\label{alg:lambda}
\small
\balgorithmic[1]
\STATE \textbf{Input:} Query space $\overline{\Scal}'_{\zmbf_1}$, binary-search tolerance $\epsilon > 0$, oracle $\Omega$
\STATE \textbf{Initialize:} $\lambda^{(a)} = 0$, $\lambda^{(b)} = 1$.
\WHILE{$\abs{\lambda^{(b)} - \lambda^{(a)}} > \epsilon$} 
\STATE Set $\lambda^{(c)} = \frac{3 \lambda^{(a)} + \lambda^{(b)}}{4}$, $\lambda^{(d)} = \frac{\lambda^{(a)} + \lambda^{(b)}}{2}$, $\lambda^{(e)} = \frac{\lambda^{(a)} + 3 \lambda^{(b)}}{4}$
\STATE Set $\smbf^{(a)} = \displaystyle\argmax_{\smbf \in\overline{\Scal}'_{\zmbf_1}} \inner{(1-\lambda^{(a)})\taumbf^1\odot\ambfhat + \lambda^{(a)} \sum_{v=2}^m \Bmbfhat^{1v}(\zmbf_1 - \ombf)}{\smbf}$ using Lemma~\ref{mult-lem:spherebayes}
\STATE Similarly, set $\smbf^{(c)}$, $\smbf^{(d)}$, $\smbf^{(e)}$, $\smbf^{(b)}$.
\STATE Query  $\Omega(\smbf^{(c)}, \smbf^{(a)})$,  $\Omega(\smbf^{(d)}, \smbf^{(c)})$,  $\Omega(\smbf^{(e)}, \smbf^{(d)})$, and  $\Omega(\smbf^{(b)}, \smbf^{(e)})$.\\
\STATE $[\lambda^{(a)}, \lambda^{(b)}] \leftarrow$ \emph{ShrinkInterval} (responses) -- subroutine analogous to the routine in  Fig.~\ref{mult-append:fig:shrink1}.
\ENDWHILE
\STATE \textbf{Output:} $\hat\lambda = \frac{\lambda^{(a)}+\lambda^{(b)}}{2}$. 
\ealgorithmic
\ealgorithm
\vspace{-0.3cm}

\section{Elicitation Guarantee for the QPME Procedure}
\label{append:sec:guarantees}
\vskip -0.2cm
\subsection{Sample complexity bounds} Recall from Definition~\ref{def:noise} that the oracle responds correctly as long as $|\phi(\rmbf_1) - \phi(\rmbf_2)| > \epsilon_\Omega$. For simplicity, we assume that our algorithm 
has  access to the population rates $\rmbf$ defined in Eq.~(1). 
In practice, we expect  to estimate the rates using a sample $D\coloneqq \{\xmbf, y\}_{i=1}^n$ drawn from the distribution $\Pmbb$, and to query classifiers from a hypothesis class $\mathcal{H}$ with finite capacity. Standard generalization bounds (e.g.\ Daniely et al.~\cite{Daniely:2015}) give us that with high probability over draw of $D$, the estimates  $\hat{\rmbf}$ are close to the population rates $\rmbf$, up to the desired  tolerance $\epsilon_\Omega$, 
as long as we have sufficient samples. Further, since the metrics $\phi$ are Lipschitz w.r.t.\ rates, with high probability, 
we thus gather correct oracle feedback from querying with finite sample estimates $\Omega(\hat{\rmbf}_1, \hat{\rmbf}_2)$.

More formally, for $\delta \in (0,1)$, as long as the  sample size $n$ is greater than ${O\big(\sfrac{\log(|\mathcal{H}|/\delta)}{\epsilon_\Omega^2}\big)}$, the guarantee in Theorem 1 holds with probability at least $1 - \delta$ (over draw of $D$), where $|\mathcal{H}|$ can in turn be replaced by a measure of capacity of the hypothesis class $\mathcal{H}$. For example, one can show the following corollary to Theorem \ref{thm:q-me} for a
hypothesis class $\mathcal{H}$ in which each classifier is a randomized combination of a finite number of deterministic classifiers chosen from $\bar{\mathcal{H}}$, and whose capacity is measured in terms of the Natarajan dimension~\cite{Natarajan:1989} of $\bar{\mathcal{H}}$.
\begin{corollary}
Suppose the hypothesis class $\mathcal{H}$ of randomized classifiers used to choose queries to the oracle is  of the form:
\bequation
\mathcal{H} =\bigg\{x \mapsto \sum_{t=1}^T\alpha_t h_t(x) \,\bigg|\, T \in \mathbb{Z}_+, \alpha \in \Delta_T, h_1, \ldots, h_T \in \bar{\mathcal{H}}\bigg\},
\eequation
for some class $\bar{\mathcal{H}}$ of deterministic multiclass classifiers $h: \mathcal{X} \rightarrow \{0,1\}^k$. Suppose the deterministic hypothesis class $\bar{\mathcal{H}}$  has   Natarajan dimension $d > 0$, and $\phi$ is $1$-Lipschitz. Then for any $\delta \in (0,1)$,
as long as the  sample size $n 
\geq O\Big(\frac{d\log(k) + \log(1/\delta)}{\epsilon_\Omega^2}\Big)$, the guarantee in Theorem 1 hold with probability at least $1 - \delta$ (over draw of $D = \{\xmbf_i, y_i\}_{i=1}^n$ from $\Pmbb$). 
\end{corollary}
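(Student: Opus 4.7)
The plan is to reduce the corollary to a uniform convergence statement for the predictive rates $\rmbf(h) \in [0,1]^q$ over the hypothesis class $\Hcal$, and then invoke Theorem~\ref{thm:q-me} with $\epsilon_\Omega$ replaced by the sample-level deviation between $\rmbf$ and $\hat{\rmbf}$. Since $\phi$ is $1$-Lipschitz in the rates, if we establish
\bequation
\sup_{h \in \Hcal}\,\|\hat{\rmbf}(h) - \rmbf(h)\|_\infty \,\leq\, \epsilon_\Omega
\eequation
with probability at least $1-\delta$, then every pairwise query $\Omega(\hat\rmbf_1,\hat\rmbf_2)$ issued by Algorithm~\ref{alg:q-me} will agree with the population-level oracle whenever $|\phi(\rmbf_1)-\phi(\rmbf_2)| > 2\epsilon_\Omega$, which slots directly into the feedback noise model of Definition~\ref{def:noise} (up to an absorbable constant), and the conclusion of Theorem~\ref{thm:q-me} carries over.

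My first step would be to note that each randomized classifier $h \in \Hcal$ is a finite convex combination $h = \sum_t \alpha_t h_t$ of deterministic $h_t \in \bar{\Hcal}$, and by linearity of expectation both the population and empirical rates satisfy $R_{ij}(h) = \sum_t \alpha_t R_{ij}(h_t)$ and $\hat R_{ij}(h) = \sum_t \alpha_t \hat R_{ij}(h_t)$. Therefore
\bequation
\sup_{h\in\Hcal}\,|\hat R_{ij}(h) - R_{ij}(h)| \,\leq\, \sup_{h \in \bar\Hcal}\,|\hat R_{ij}(h) - R_{ij}(h)|,
\eequation
so it suffices to prove uniform convergence over the deterministic base class $\bar{\Hcal}$.

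Next, for each fixed pair $(i,j)$, the indicator $(x,y) \mapsto \mathbf{1}(h(x)=j, y=i)$ ranges over a binary function class whose complexity is governed by the Natarajan dimension of $\bar\Hcal$. Using the Natarajan-dimension growth-function bound (Natarajan~\cite{Natarajan:1989}; see also Daniely et al.~\cite{Daniely:2015}), the shattering coefficient of this class on $n$ points is at most $(nk^2)^d$. A standard Chernoff plus union-bound argument (or, equivalently, a Rademacher/symmetrization argument with Massart's lemma) then yields
\bequation
\sup_{h\in\bar\Hcal}\,|\hat R_{ij}(h) - R_{ij}(h)| \,\leq\, O\!\left(\sqrt{\frac{d\log(nk) + \log(1/\delta)}{n}}\right)
\eequation
with probability at least $1 - \delta/(kq)$, after conditioning on $Y=i$ (whose sampled count concentrates around $n\,\P(Y=i)$ by Chernoff, and can be absorbed into constants provided class priors are bounded away from zero, or handled by a ratio-of-averages argument). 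Taking a union bound over the $q = k(k-1)$ off-diagonal entries and inverting gives the stated sample-complexity rate $n \geq O\!\big((d\log k + \log(1/\delta))/\epsilon_\Omega^2\big)$, where the $\log(nk)$ factor is cleaned up by a standard iterative inversion.

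The main obstacle, and the part I would be most careful about, is the conditional-on-$Y=i$ step: the rates are conditional probabilities, so uniform convergence of $\hat R_{ij}$ does not follow from a single unconditional Hoeffding bound. I would handle this either by conditioning on the class-$i$ subsample (using Chernoff to control its size) and then applying the Natarajan-dimension uniform bound to the class-conditional empirical measure, or by bounding the joint rate $\P(h=j, Y=i)$ and the prior $\P(Y=i)$ separately and using a ratio perturbation inequality. A secondary nuisance is that the scale-invariance condition on $\phi$ in Definition~\ref{def:quadmet} means the Lipschitz constant of $\phi$ on the restricted rate domain is a universal constant, so the propagation from rate error to metric error requires no extra tracking beyond a multiplicative constant, which gets absorbed into the $O(\cdot)$ of Theorem~\ref{thm:q-me}.
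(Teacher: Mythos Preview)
Your proposal is correct and follows essentially the same approach as the paper: reduce uniform convergence over $\Hcal$ to uniform convergence over the deterministic base class $\bar{\Hcal}$ via linearity of expectation, then apply Natarajan-dimension generalization bounds from Daniely et al.~\cite{Daniely:2015}, and finally use the $1$-Lipschitz property of $\phi$ to convert rate deviations into the oracle-noise model of Definition~\ref{def:noise}. The paper's own proof is a one-line sketch invoking precisely these two ingredients; your write-up simply supplies the details (and your discussion of the conditional-on-$Y=i$ step is a sensible elaboration the paper leaves implicit).
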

The proof adapts generalization bounds from Daniely et al.~\cite{Daniely:2015}, and uses the fact that the predictive rate for any randomized classifier in $\mathcal{H}$ is a convex combination of rates for deterministic classifiers in $\bar{\mathcal{H}}$ (due to linearity of expectation). 

\vspace{-0.5cm}
\subsection{Proofs}
Before presenting the proof of Theorem \ref{thm:q-me}, we re-write the LPME guarantees from~\cite{hiranandani2019multiclass} for linear metrics in the presence of an oracle noise parameter $\epsilon_\Omega$ from Definition~\ref{def:noise}. 

\blemma[LPME guarantees with oracle noise (Chapter~\ref{chp:multiclass})]
\label{lem:LPMEwnoise}
Let the oracle $\Omega$'s metric be $\phi^{\text{lin}} = \inner{\ambf}{\rmbf}$ and its feedback noise parameter from Definition~\ref{def:noise} be $\epsilon_\Omega$. Then, if the LPME procedure (Algorithm~\ref{mult-alg:lpm}) is run using a 
sphere $\Scal \subset \Rcal$ of radius $\varrho$ and the  binary-search tolerance $\epsilon$, then by posing $O(q\log(1/\epsilon))$
queries it recovers coefficients $\ambfhat$ with $\Vert \ambf - \ambfhat \Vert_2 \leq O\left(\sqrt{q}(\epsilon + \sqrt{\epsilon_\Omega/\varrho})\right)$.
\elemma

We will use the above result while proving Theorem~\ref{thm:q-me}. 

\bproof[Proof of Theorem~\ref{thm:q-me}] 

We first find the smoothness coefficient of the metric in Definition~\ref{def:quadmet}.

A function $\phi$ is said to be $L$-smooth if for some bounded constant $L$, we have:

\vspace{-0.1cm}
\bequation
\Vert \nabla \phi(x) - \nabla \phi(y) \Vert_2 \leq L\Vert x - y \Vert_2.
\eequation

For the metric in Definition~\ref{def:quadmet}, we have:
\vspace{-0.2cm}
\begin{align*}
\Vert \nabla \phi^{\text{quad}}(x) - \nabla \phi^{\text{quad}}(y) \Vert_2 &= \Vert \ambf + \Bmbf\xmbf - (\ambf + \Bmbf\ymbf) \Vert_2 \\
&\leq \Vert \Bmbf \Vert_2 \Vert x - y \Vert_2\\
&\leq \Vert \Bmbf \Vert_F \Vert x - y \Vert_2 \leq 1\cdot\Vert x - y \Vert_2, \numberthis 
\end{align*}
where in the last step, we have used the scale invariance condition from Definition~\ref{def:quadmet}, i.e., $\Vert \ambf \Vert_2 + \Vert \Bmbf \Vert_F = 1$, which implies that  $\Vert \Bmbf \Vert_F = 1 - \Vert \ambf \Vert_2 \leq 1$. 
Hence, the metrics in Definition~\ref{def:quadmet} are $1$-smooth. 

Now, we look at the error in Taylor series approximation when we approximate the metric $\phi^{\text{quad}}$ in  Definition~\ref{eq:quadmet} with a linear approximation. Our metric is 

\vspace{-0.1cm}
\bequation
\phi^{\text{quad}}(\rmbf) = \inner{\ambf}{\rmbf} + \frac{1}{2}\rmbf^T\Bmbf\rmbf.
\eequation

We approximate it with the first order Taylor polynomial around a point $\zmbf$:

\bequation
T_1(\rmbf) = \inner{\ambf}{\zmbf} + \frac{1}{2}\zmbf^T\Bmbf\zmbf + \inner{\ambf + \Bmbf\zmbf}{\rmbf}
\eequation
The bound on the error  in this approximation is:
\begin{align*}
\vert E(\rmbf) \vert &= \vert \phi^{\text{quad}}(\rmbf) - T_1(\rmbf) \vert   \\
&= \frac{1}{2} \vert (\rmbf -\zmbf)^T \Delta\phi^{\text{quad}}|_\cmbf  (\rmbf -\zmbf) \vert \quad \text{(First-order Taylor approximation error)} \\
&= \frac{1}{2} \vert (\rmbf -\zmbf)^T \Bmbf  (\rmbf -\zmbf) \vert \qquad \quad \;\;\; \text{(Hessian at any point $\cmbf$ is the matrix $\Bmbf$)}\\
&\leq \frac{1}{2}\Vert \Bmbf \Vert_2 \Vert \rmbf - \zmbf \Vert _2^2 \\
&\leq \frac{1}{2}\Vert \Bmbf \Vert_F \varrho^2 \leq  \frac{1}{2} \varrho^2 \qquad\qquad \;\;\;\; \text{(Due to the scale invariance condition)} \numberthis
\end{align*}

So when the oracle is asked $\Omega(\rmbf_1, \rmbf_2) = \1[\phi^{\text{quad}}(\rmbf_1) > \phi^{\text{quad}}(\rmbf_2)]$, the approximation error can be treated as feedback error from the oracle with feedback noise 
$2\times \frac{1}{2} \varrho^2$. 
Thus, the overall feedback noise by the oracle is 
$\epsilon_\Omega + \varrho^2$ for the purposes of using Lemma~\ref{lem:LPMEwnoise} later. 

We first prove guarantees for the matrix $\Bmbf$ and then for the vector $\ambf$. We write Equation~\eqref{eq:poly2elicitamatfinal} in the following form assuming $d_1 = 1$ (since we normalize the coefficients at the end due to scale invariance): 

\begin{align*}
B_{ij} &= F_{ij} =  \left[\frac{f_{ij}}{f_{1j}}\left(1 + \frac{f_{j1}}{f_{11}} \right) - \frac{f_{ij}}{f_{1j}}\frac{f_{j0}}{f_{10}} - \frac{f_{i0}}{f_{10}} + \frac{f_{ij}}{f_{1j}}\frac{f_{j1}}{f_{11}} \frac{ \frac{f_{21}^-}{f_{11}^-} + \frac{f_{21}}{f_{11}} - 2\frac{f_{20}}{f_{10}}  }{ \frac{f_{21}^-}{f_{11}^-} - \frac{f_{21}}{f_{11}}  }\right]. \\
\implies \Bmbf[:, j] &= \fmbf_j\left( \frac{1}{f_{1j}} + \frac{f_{j1}}{f_{1j}f_{11}} + \frac{f_{j0}}{f_{1j}f_{10}} + \frac{f_{j1}}{f_{1j}f_{11}}\left(  \frac{ \frac{f_{21}^-}{f_{11}^-} + \frac{f_{21}}{f_{11}} - 2\frac{f_{20}}{f_{10}}  }{ \frac{f_{21}^-}{f_{11}^-} - \frac{f_{21}}{f_{11}}  } \right) \right) + \fmbf_0\frac{1}{f_{10}} \\
&= c_j\fmbf_j + c_0\fmbf_0, \numberthis \label{eq:Bj}
\end{align*} 
where $\Bmbf[:, j]$ is the $j$-th column of the matrix $\Bmbf$, and the constants $c_j$ and $c_0$ are well-defined due to the regularity Assumption~\ref{as:regularity-q}. Notice that,
\bequation
\frac{\partial \Bmbf[:, j]}{\partial \fmbf_j} = \diag(\cmbf'_j)\odot\Imbf \quad, \text{and} \quad 
\frac{\partial \Bmbf[:, j]}{\partial \fmbf_0} = \diag(\cmbf'_0)\odot\Imbf,
\eequation
where $\cmbf'_j, \cmbf'_0$ are vector of Lipschitz constants (bounded due to Assumption~\ref{as:regularity-q}). This implies

\begin{align*}
\Vert \Bmbfbar[:, j] - \Bmbfhat[:, j]\Vert_2 &\leq c'_j \Vert \fmbfbar_j - \fmbfhat_j \Vert_2 + c'_0 \Vert \fmbfbar_0 - \fmbfhat_0 \Vert_2\\
&\leq c'_j\sqrt{q}\left(\epsilon + \sqrt{\varrho + \epsilon_\Omega/\varrho}\right) + c'_0\sqrt{q}\left(\epsilon + \sqrt{\varrho + \epsilon_\Omega/\varrho}\right) \\
&= O\left(\sqrt{q}\left(\epsilon + \sqrt{\varrho + \epsilon_\Omega/\varrho}\right)\right), \numberthis 
\end{align*}
where we have used LPME guarantees from Lemma~\ref{lem:LPMEwnoise} under the oracle-feedback noise parameter $\epsilon_\Omega + \varrho^2$. 

The above inequality provides bounds on each column of $\Bmbf$. Since $\Vert \xmbf \Vert_\infty \leq \Vert \xmbf \Vert_2$, we have $\max_{ij}\vert B_{ij} - \hat{B}_{ij} \vert \leq O\left(\sqrt{q}\left(\epsilon + \sqrt{\varrho + \epsilon_\Omega/\varrho}\right)\right)$, and consequentially, $\Vert \Bmbf - \Bmbfhat \Vert_F \leq O\left(q\sqrt{q}\left(\epsilon + \sqrt{\varrho + \epsilon_\Omega/\varrho}\right)\right)$. 

Now let us look at guarantees for $\ambf$. Since $\ambf = \dmbf - \Bmbf\ombf$ from~\eqref{eq:quadmetshift}, we can write 

\bequation
\ambf = c_0\fmbf_0 - \sum_{j=1}^qo_j\Bmbf[:, j],
\eequation
where $c_0 = 1/f_{10}$. Since $\ombf$ is the rate achieved by random classifier, $o_j = 1/k \; \forall j \in [k]$, and thus we have
\bequation
\frac{\partial \ambf}{\partial \fmbf_0} = c_0\Imbf \quad \text{and} \quad \frac{\partial \ambf}{\partial \Bmbf[:, j]} = \frac{1}{k}\Imbf.
\eequation
Thus,
\begin{align*}
\Vert \ambf - \ambfhat \Vert_2 &\leq c'_0 \sqrt{q}\left(\epsilon + \sqrt{\varrho + \epsilon_\Omega/\varrho}\right) + \frac{1}{k}\sum_{j=1}^q \sqrt{q}\left(\epsilon + \sqrt{\varrho + \epsilon_\Omega/\varrho}\right) \\
& =c'_0 \sqrt{q}\left(\epsilon + \sqrt{\varrho + \epsilon_\Omega/\varrho}\right) + \frac{1}{\sqrt{q}}\sum_{j=1}^q c'_j\sqrt{q}\left(\epsilon + \sqrt{\varrho + \epsilon_\Omega/\varrho}\right) \\
&= O\left(q\left(\epsilon + \sqrt{\varrho + \epsilon_\Omega/\varrho}\right)\right), \numberthis 
\end{align*}
where $c'_0, c'_j$'s are some Lipschitz constants (bounded due to Assumption~\ref{as:regularity-q}), and we have used the fact that $q = k^2 - k$ in the second step.
\eproof

Notice the trade-off in the elicitation error that depends on the size of the sphere. As expected, when the radius of the sphere $\varrho$ increases, the error due to approximation increases, but at the same time, error due to feedback reduces because we get better responses from the oracle. In contrast, when the radius of the sphere $\varrho$ decreases, the error due to approximation decreases, but the error due to feedback increases.

The following corollary translates our guarantees on the elicited metric to the guarantees on the optimal rate of the elicited metric. This is useful in practice, because the optimal classifier (rate) obtained by optimizing a certain metric is often the key entity for many applications. 

\bcorollary
Let $\phi^{\quadr}$ be the oracle's quadratic metric  and $\hat\phi^{\quadr}$ be its estimate obtained by the QPME procedure (Algorithm~\ref{alg:q-me}). Moreover, let $\rmbf^*$ and $\hat\rmbf^*$ be the minimizers of $\phi^{\quadr}$ and $\hat\phi^{\quadr}$, respectively. Then,  $\phi^{\quadr}(\hat\rmbf^*) \leq \phi^{\quadr}(\rmbf^*) + O\left(q^2\sqrt{q}\left(\epsilon + \sqrt{\varrho + \epsilon_\Omega/\varrho}\right)\right).$
\ecorollary

\bproof
We first show that if $\vert \phi^{\quadr}(r) - \hat\phi^{\quadr}(r)\vert \leq \epsilon$ for all rates $r$  and some slack $\epsilon$, then it follows that $\phi^{\quadr}(\hat\rmbf^*) \leq \phi^{\quadr}(\rmbf^*) + 2\epsilon.$ This is because:

\begin{align*}
    \phi^{\quadr}(\hat\rmbf^*) &\leq \hat\phi^{\quadr}(\hat\rmbf^*) + \epsilon \qquad\qquad \left(\text{as $\hat\phi^{\quadr}$ approximates $\phi^{\quadr}$}\right)\\
    &\leq \hat\phi^{\quadr}(\rmbf^*) + \epsilon \qquad\qquad \left(\text{as $\hat\rmbf^*$ minimizes $\hat\phi^{\quadr}$}\right) \\
    &\leq \phi^{\quadr}(\rmbf^*) + 2\epsilon \qquad\quad\;\; \left(\text{as $\hat\phi^{\quadr}$ approximates $\phi^{\quadr}$}\right) \numberthis \label{eq:metapx}
\end{align*}

Now, let us derive the trivial bound $\vert \phi^{\quadr}(r) - \hat\phi^{\quadr}(r)\vert$ for any rate $\rmbf$. 

\begin{align*}
    \vert \phi^{\quadr}(r) - \hat\phi^{\quadr}(r)\vert &= \vert \inner{\ambf - \hat \ambf}{\rmbf} + \frac{1}{2}\rmbf^T (\Bmbf - \hat\Bmbf)\rmbf \vert \\
    &\leq \vert \inner{\ambf - \hat \ambf}{\rmbf} \vert + \frac{1}{2}\vert \rmbf^T (\Bmbf - \hat\Bmbf)\rmbf \vert\\
    &\leq \Vert \ambf - \ambf \Vert_2 \Vert \rmbf\Vert_2 + \frac{1}{2}\Vert \Bmbf - \Bmbf \Vert_2 \Vert \rmbf\Vert_2^2\\
    &\leq \Vert \ambf - \ambf \Vert_2 \sqrt q + \frac{1}{2}\Vert \Bmbf - \Bmbf \Vert_F q\\
    &\leq O\left(q^2\sqrt{q}\left(\epsilon + \sqrt{\varrho + \epsilon_\Omega/\varrho}\right)\right), \numberthis \label{eq:metapx2}
\end{align*}
where in the fourth step, we have used the fact that the rates are bounded in $[0, 1]$; hence $\Vert \rmbf \Vert_2 \leq \sqrt{q}$, and in the fifth step, we have used the guarantees from Theorem~\ref{thm:q-me}. Combining\eqref{eq:metapx} and~\eqref{eq:metapx2} gives us the desired result. 
\eproof

\bproof[Proof of Theorem~\ref{thm:lb}]
For the purpose of this proof, let us replace $\left(\epsilon + \sqrt{\varrho + \epsilon_\Omega/\varrho}\right)$ by some slack $\epsilon$. Theorem 1 guarantees that after running the QPME procedure for $O(q^2\log(1/\epsilon)$ queries, we have $\norm {a - \hat a}_2 \leq O(q\epsilon)$ and $\norm {B - \hat B}_F \leq O(q\sqrt q\epsilon).$

If we vectorize the tuple $(\ambf, \Bmbf)$ and denote it by $w$, we have $\norm{w - \hat w}_2 \leq O(q\sqrt q\epsilon)$, where both $\Vert w\Vert_2, \Vert \hat w\Vert_2=1$, due to the scale invariance condition from Definition~\ref{def:quadmet}. Note that $w$ is $\frac{q^2 + 3q}{2}$-dimensional vector and defines the scale-invariant quadratic metric elicitation problem. 
Now, we have to count the minimum number of $\hat w$ that are possible such that $\norm{w - \hat w}_2 \leq O(q\sqrt q\epsilon)$.

This translates to finding the covering number of a ball in $\Vert \cdot \Vert_2$ norm with radius 1, where the covering balls have radius $q\sqrt q\epsilon$. Let us denote the cover by $\{u_i\}_{i=1}^N$ and the ball with radius 1 as $\Bmbb$. We then have:

\begin{align*}
Vol(\Bmbb) &= \leq \sum_{i=1}^N Vol(q\sqrt q\epsilon \Bmbb + u_i) \\
&= NVol(q\sqrt q\epsilon \Bmbb) \\
&= (q\sqrt q\epsilon)^{\frac{q^2 + 3q}{2} - 1}. \numberthis
\end{align*}
\vskip -0.1cm
Thus the number of $\hat w$ that are possible are at least 
\vspace{-0.2cm}
\bequation
c\left(\frac{1}{q\sqrt q\epsilon}\right)^{\frac{q^2 + 3q}{2} - 1} \leq N,
\eequation
where $c$ is a constant. Since each pairwise comparison provides at most one bit, at least $O(q^2)\log(\frac{1}{q\sqrt q\epsilon})$ bits are required to get a possible $\hat w$. We require $O(q^2)\log(\frac{1}{\epsilon})$ queries, which is near-optimal barring log terms. 
\eproof  

\chapter{Optimizing Black-box Metrics through Metric Elicitation}
\label{apx:blackbox}
\textbf{Notation:} For an index $j \in [k]$, $\onehot(j) \in \{0,1\}^k$ denotes a one-hot encoding of $j$, and for a classifier $h: \X \> [k]$, $\tilde{h} = \onehot(h)$ denotes the same classifier with one-hot outputs, i.e.\ $\tilde{h}(x) = \onehot(h(x))$.

\vspace{-0.5cm}
\section{Extension to General Linear Metrics}
\label{app:linear-gen}
We describe how our proposal extends to black-box metrics $\perf^\Dtrue[h] = \psi(\C[h])$ defined by a function $\psi:[0,1]^{k\times k}\>\R_+$ of \textit{all} confusion matrix entries. 
This handles, for example, the label noise models in Table \ref{tab:correction-weights} with a general (non-diagonal) noise transition matrix $\T$. We begin with metrics that are linear functions of the diagonal and off-diagonal confusion matrix entries $\perf^\Dtrue[h] = \sum_{ij} \beta_{ij}C_{ij}[h]$ for some $\bbeta \in \R^{k\times k}$. 
In this case, we will use an example weighting function $\W: \X \> \R_+^{k\times k}$ that maps an instance $x$ to an $k\times k$ weight matrix $\W(x)$, where $W_{ij}(x) \in \R_+^{k\times k}$ is the weight associated with the $(i,j)$-th confusion matrix entry.

\textbf{\textit{Note that in practice, the metric $\perf^\Dtrue$ may depend on only a subset of $d$ entries of the confusion matrix, in which case, the weighting function only needs to weight those entries. Consequently, the weighting function can be parameterized with  $Ld$ parameters, which can then be estimated by solving a system of $Ld$ linear equations. 
For the sake of completeness, here we describe our approach for  metrics that depend on all $k^2$ confusion entries.}}

\textbf{Modeling weighting function:} Like in \eqref{eq:weighting}, we propose modeling this function as a weighted sum of $L$ basis functions:
\bequation
W_{ij}(x) \,=\, \sum_{\ell=1}^L \alpha^{\ell}_{ij}\phi^\ell(x),
\eequation
where each $\phi^\ell:\X\>[0,1]$ and $\alpha^\ell_{ij} \in \R$. Similar to \eqref{eq:example-weights}, our goal is to then estimate coefficients $\balpha$ so that:
\begin{equation}
\E_{(x, y) \sim \Dshift}\Big[\sum_{ij} W_{ij}(x)\,\1(y = i)h_j(x)\Big] \,\approx\, 
\perf^\Dtrue[h],  \forall h.
\label{eq:example-weights-off-diag}
\end{equation}
Expanding the weighting function in \eqref{eq:example-weights-off-diag}, we get:
\vspace{-0.25cm}
\begin{equation}
\sum_{\ell=1}^L\sum_{i,j}
\alpha^{\ell}_{ij}\,
\underbrace{
\E_{(x, y) \sim \Dshift}\big[ \phi^\ell(x)\,\1(y=i)h_j(x) \big]}_{\Phi_{i,j}^{\Dshift, \ell}[h]} \,\approx\, \perf^\Dtrue[h],  \forall h,
\end{equation}

\noindent which can be re-written as:
\begin{equation}
\sum_{\ell=1}^L\sum_{i,j}
\alpha^{\ell}_{ij}
\Phi^{\Dshift, \ell}_{ij}[h] \,\approx\, 
\perf^D[h], \forall h.
\label{append:eq:example-weights-reduced}
\end{equation}

\textbf{Estimating coefficients $\balpha$:} To estimate $\balpha \in \R^{Lk^2}$, our proposal is to probe the metric $\perf^\Dtrue$ at $Lk^2$ different classifiers $h^{\ell,1,1}, \ldots, h^{\ell,k,k}$, with one classifier for each combination $(\ell,i,j)$ of basis functions and confusion matrix entries, and to solve the following  system of $Lk^2$ linear equations:
\begin{align}
\sum_{\ell,i,j}\alpha^{\ell}_{ij}\,
\hat{\Phi}^{\tr, \ell}_{ij}[h^{1,1,1}]&=
\hat{\perf}^\val[h^{1,1,1}]
\nonumber
\\
&\vdots\label{append:eq:system-of-equations-emp}\\
\sum_{\ell,i,j}\alpha^{\ell}_{ij}\,
\hat{\Phi}^{\tr, \ell}_{ij}[h^{L,m,m}]&=
\hat{\perf}^\val[h^{L,k,k}]\nonumber
\end{align}
Here $\hat{\Phi}^{\tr,\ell}_{ij}[h]$ is an estimate of ${\Phi}^{\Dshift,\ell}_{ij}[h]$ using training sample $S^\tr$ and $\hat{\perf}^\val[h]$ is an estimate of $\perf^\Dtrue[h]$ using the validation sample $S^\val$. Equivalently, defining $\hat{\bSigma} \in \R^{Lk^2 \times Lk^2}$ and $\hat{\bcE} \in \R^{Lk^2}$ with each:
\bequation
\hat{\Sigma}_{(\ell,i,j), (\ell',i',j')} = \hat{\Phi}^{\tr, \ell'}_{i'j'}[h^{\ell,i,j}];~~~
\hat{\perf}_{(\ell,i,j)} = \hat{\perf}^\val[h^{\ell,i,j}],
\eequation
we compute $\hat{\balpha} = \hat{\bSigma}^{-1}\hat{\bcE}$.

\textbf{Choosing probing classifiers:} As described in Section \ref{subsec:probing-classifier}, we propose picking each probing classifier $h^{\ell,i,j}$ so that the $(\ell,i,j)$-th diagonal entry  of $\hat{\bSigma}$ is large and the off-diagonal entries are all small. This can be framed as the following constrained satisfaction problem:
\begin{center}
For $h^{\ell,i,j}$ pick $h \in \H$ such that:
\begin{align*}
\hat{\Phi}^{\tr,\ell}_{i,j}[h] \geq \gamma,~\text{and}~
\hat{\Phi}^{\tr,\ell'}_{i',j'}[h] \leq \omega, \forall (\ell',i',j') \ne (\ell,i,j), \numberthis
\end{align*}
\end{center}
for some $0 < \omega < \gamma < 1$. While the more practical  approach prescribed in Section \ref{subsec:probing-classifier} of constructing the probing classifiers from trivial classifiers that predict the same class on all or a subset of examples does not apply here (because here we need to take into account both the diagonal and off-diagonal confusion entries), the above problem {can be solved} using off-the-shelf tools available for rate-constrained optimization problems \cite{cotter2019optimization}.

\textbf{Plug-in classifier:} Having estimated an example weighting function $\hat{\W}: \X \> \R^{k\times k}$, we seek to maximize a weighted objective on the training distribution:

\bequation
\max_{h}\,\E_{(x, y) \sim \Dshift}\left[\sum_{ij} \hat{W}_{ij}(x)\,\1(y = i)h_j(x)\right],
\eequation
for which we can construct a plug-in classifier that post-shifts a pre-trained class probability model $\hat{\eta}^\tr: \X \> \Delta_k$:
\bequation
\widehat{h}(x) \,\in\, \argmax_{j \in [k]} \sum_{i=1}^k\hat{W}_{ij}(x)\,\hat{\eta}^\tr_i(x).
\eequation
For handling general non-linear metrics $\perf^\Dtrue[h] = \psi(\C[h])$ with a  smooth $\psi:[0,1]^{k\times k}\>\R_+$, we can directly adapt the iterative plug-in procedure in Algorithm \ref{algo:FW}, which would in turn construct a plug-in classifier of the above form in each iteration (line 9). See \cite{narasimhan2015consistent} for more details of the iterative Frank-Wolfe based procedure for optimizing general metrics, where the authors consider non-black-box metrics in the absence of distribution shift.

\section{Proofs}
\subsection{Proof of Theorem \ref{thm:alpha-diagonal-linear-conopt}}
\btheorem[(Restated) \textbf{Error bound on elicited weights}]
Let the input metric be of the form $\hat{\perf}^\lin[h] = \sum_{i}\beta_i \hat{C}^\val_{ii}[h]$ for some (unknown) coefficients $\bbeta \in \R_+^k, \|\bbeta\|\leq 1$.
Let $\perf^\Dtrue[h] = \sum_{i}\beta_i C^\Dtrue_{ii}[h]$. 
Let $\gamma, \omega > 0$ be such that the constraints in \eqref{eq:con-opt}  are feasible for hypothesis class $\bar{\H}$, for all $\ell, i$. Suppose Algorithm \ref{algo:weight-coeff} chooses each
 classifier $h^{\ell,i}$ to  satisfy \eqref{eq:con-opt}, with $\perf^D[h^{\ell,i}] \in [c, 1], \forall \ell, i$, for some $c>0$.
Let $\bar{\alpha}$ be the associated coefficient in Assumption \ref{asp:alpha-star} for metric $\perf^D$. 
Suppose $\gamma > 2\sqrt{2}Lk\omega$ and $n^\tr \geq \frac{L^2k\log(Lk|\H|/\delta)}{(\frac{\gamma}{2} - \sqrt{2}Lk\omega)^2}.$
Fix $\delta\in (0,1)$. Then w.p.\  $\geq 1 - \delta$ over draws of $S^\tr$ and $S^\val$ from $\Dshift$ and $\Dtrue$ resp., 
the coefficients $\hat{\balpha}$ output by Algorithm \ref{algo:weight-coeff} satisfies:
\begin{eqnarray}
{\|\hat{\balpha} - \bar{\balpha}\| \,\leq\,}
\mathcal{O}\Big(
\frac{Lk}{\gamma^2}\sqrt{\frac{L\log(Lk|\H|/\delta)}{n^\tr}} + 
\frac{\sqrt{Lk}}{\gamma} \Big( \sqrt{\frac{L^2k\log(Lk/\delta)}{c^2\gamma^2 n^\val}} +  \nu\Big)\Big), 
\end{eqnarray}
where the term $|\H|$ can be replaced by a measure of capacity of the hypothesis class $\H$.
\etheorem
The solution from Algorithm \ref{algo:weight-coeff} is given by 
 $\hat{\balpha} = \hat{\bSigma}^{-1}\hat{\bcE}$. 
Let $\bar{\balpha}$ be the ``true'' coefficients given in Assumption \ref{asp:alpha-star}. 
Let ${\bSigma} \in \R^{Lk\times Lk}$ denote the population version of  $\hat{\bSigma}$, with
$
\Sigma_{(\ell,i), (\ell',i')} \,=\, \E_{(x,y)\sim \mu}\big[\phi^{\ell'}(x)\1(y=i')h^{\ell,i}_{i'}(x)\big]
$.
Similarly, denote the population version of $\hat{\bcE}$ by:
$
{\perf}_{(\ell, i)} \,=\, \perf^\Dtrue[h^{\ell,i}]
$. 
Let  ${\balpha} = \bSigma^{-1}{\bcE}$ be the solution we obtain had we used the population versions of these quantities.
Further, define 
the vector $\bar{\bcE} \in \R^{Lk}$:

\begin{equation}
\bar{\perf}_{(\ell', i')} = \sum_{\ell,i}
\bar{\alpha}^{\ell}_i
{\Phi}^{\Dshift, \ell}_i[h^{\ell',i'}].
\label{eq:perf-bar}
\end{equation}
It trivially follows that the coefficient $\bar{\balpha}$ given by Assumption \ref{asp:alpha-star} can be written as $\bar{\balpha} = \bSigma^{-1}\bar{\bcE}$. 

We will find the following lemmas useful. 
Our first two lemmas bound the gap between the empirical and population versions of $\bSigma$ (the left-hand side of the linear system) and $\bcE$ (the right-hand side of the linear system).
\blemma[Confidence bound for $\bSigma$]
\label{lem:Sigma-diff}
Fix $\delta \in (0,1)$. With probability at least $1 - \delta$ over draw of $S^\tr$ from $\Dshift$, 
\begin{eqnarray}
|\Sigma_{(\ell,i),(\ell',i')} - \hat{\Sigma}_{(\ell,i),(\ell',i')}| \leq 
\mathcal{O}\left(\sqrt{\frac{p_{\ell,i}\log(Lk|\H|/\delta)}{n^\tr}}\right),
\end{eqnarray}
where $p_{\ell,i} = \E_{(x,y)\sim \mu}[\phi^\ell(x)\1(y=i)]$, 
and consequently,
\begin{equation}
\|\bSigma - \hat{\bSigma}\| \leq \mathcal{O}\left(\sqrt{\frac{L^2k\log(Lk|\H|/\delta)}{n^\tr}}\right).
\end{equation}
\elemma
\begin{proof}
Each row of $\bSigma - \hat{\bSigma}$ contains the difference between the elements $\Phi^{\Dshift,\ell}_{i}[h]$ and $\hat{\Phi}^{\tr,\ell}_{i}[h]$ for a classifier $h$ chosen from $\H$. Using multiplicative Chernoff bounds, we have for a fixed $h$, with probability at least $1 - \delta$ over draw of $S^\tr$ from $\Dshift$
\begin{eqnarray}
|\Phi^{\Dshift,\ell}_i[h] - \hat{\Phi}^{\tr,\ell}_i[h]| \leq 
\mathcal{O}\left(\sqrt{\frac{p_{\ell,i}\log(1/\delta)}{n^\tr}}\right),
\end{eqnarray}
where $p_{\ell,i} = \E_{(x,y)\sim \mu}[\phi^\ell(x)\1(y=i)]$.
Taking a union bound over all $h \in \H$, we have with probability at least $1 - \delta$ over draw of $S^\tr$ from $\Dshift$, for any $h \in \H$:
\begin{eqnarray}
|\Phi^{\Dshift,\ell}_i[h] - \hat{\Phi}^{\tr,\ell}_i[h]| \leq 
\mathcal{O}\left(\sqrt{\frac{p_{\ell,i}\log(|\H|/\delta)}{n^\tr}}\right).
\end{eqnarray}
Taking a union bound over all $Lk \times Lk$ entries,  we have with probability at least $1 - \delta$, for all $(\ell,i),(\ell',i')$:
\begin{eqnarray}
|\Sigma_{(\ell,i),(\ell',i')} - \hat{\Sigma}_{(\ell,i),(\ell',i')}| \leq 
\mathcal{O}\left(\sqrt{\frac{p_{\ell,i}\log(Lk|\H|/\delta)}{n^\tr}}\right)
.
\end{eqnarray}
Upper bounding the operator norm of $\bSigma - \hat{\bSigma}$ with the Frobenius norm, we have
\begin{eqnarray*}
\|\bSigma - \hat{\bSigma}\| &\leq&
\mathcal{O}\left(\sqrt{\frac{\log(Lk|\H|/\delta)}{n^\tr}}\sqrt{\sum_{(\ell,i),(\ell',i')}p_{\ell',i'}}\right)\\&\leq&
\mathcal{O}\left(\sqrt{\frac{\log(Lk|\H|/\delta)}{n^\tr}}\sqrt{\sum_{\ell,i,\ell'}(1)}\right)\,\leq\, \mathcal{O}\left(\sqrt{\frac{L^2k\log(Lk|\H|/\delta)}{n^\tr}}\right), \numberthis 
\end{eqnarray*}
where the second inequality uses the fact that $\sum_{i'}p_{\ell',i'} = \E_{x\sim \P^\mu}\left[\phi^{\ell'}(x)\right]\leq 1$.
\end{proof}

\blemma[Confidence bound for $\bcE$]
Fix $\delta \in (0,1)$. With probability at least $1 - \delta$ over draw of $S^\val$ from $\Dtrue$, 
\bequation
\|\bcE - \hat{\bcE}\| \leq \mathcal{O}\left(\sqrt{\frac{Lk\log(Lk/\delta)}{n^\val}}\right).
\eequation
\label{lem:Perf-diff}
\elemma
\begin{proof}
From an application of Hoeffding's inequality, we have for any fixed $h^{\ell,i}$:
\begin{align*}
|\perf_{(\ell,i)} \,-\, \hat{\perf}_{(\ell,i)}| =
|\perf^\Dtrue[h^{\ell,i}] \,-\, \hat{\perf}^\val[h^{\ell,i}]|
\,&=\, \left|\sum_i\beta_i C^\Dtrue_{ii}[h^{\ell,i}] \,-\, \sum_i\beta_i \hat{C}^\val_{ii}[h^{\ell,i}]\right| \\
&\leq \mathcal{O}\left(\sqrt{\frac{\log(1/\delta)}{n^\val}}\right), \numberthis
\end{align*}
which holds with probability at least $1-\delta$ over draw of $S^\val$ and uses the fact that each $\beta_i$ and $C^D_{ii}[h]$ is bounded. 
Taking a union bound over all $Lk$ probing classifiers, we have:
\bequation
\|\bcE \,-\, \hat{\bcE}\| 
\leq \mathcal{O}\left(\sqrt{Lk}\sqrt{\frac{\log(Lk/\delta)}{n^\val}}\right).
\eequation
Note that we do not need a uniform convergence argument like in Lemma \ref{lem:Sigma-diff} as the probing classifiers are chosen independent of the validation sample.
\end{proof}

Our last two lemmas show that $\bSigma$ is well-conditioned. We first show that because the probing classifiers $h^{\ell,i}$'s are chosen to satisfy \eqref{eq:con-opt}, the  diagonal and off-diagonal entries of $\bSigma$ can be lower and upper bounded respectively as follows.
\blemma[Bounds on diagonal and off-diagonal entries of $\bSigma$]
Fix $\delta \in (0,1)$. 
With probability at least $1 - \delta$ over draw of $S^\tr$ from $\Dshift$, 
\bequation
\Sigma_{(\ell, i), (\ell, i)} \geq \gamma \,-\, \mathcal{O}\left(\sqrt{\frac{p_{\ell,i}\log(Lk|\H|/\delta)}{n^\tr}}\right), \forall (\ell,i)
\eequation
and
\bequation
\Sigma_{(\ell, i), (\ell', i')} \leq \omega \,+\, \mathcal{O}\left(\sqrt{\frac{p_{\ell,i}\log(Lk|\H|/\delta)}{n^\tr}}\right), \forall (\ell,i) \ne (\ell', i'),
\eequation
where $p_{\ell,i} = \E_{(x,y)\sim \mu}[\phi^\ell(x)\1(y=i)]$.
\label{lem:Sigma-inv-concentration}
\elemma
\begin{proof}
Because the probing classifiers $h^{\ell,i}$'s are chosen from $\H$ to satisfy \eqref{eq:con-opt}, we have $\hat{\Sigma}_{(\ell, i), (\ell, i)} \geq \gamma, \forall (\ell,i)$ and 
$\hat{\Sigma}_{(\ell, i), (\ell', i')} \leq \omega, \forall (\ell,i) \ne (\ell', i').$ The proof follows from generalization bounds similar to Lemma \ref{lem:Sigma-diff}.
\end{proof}
The bounds on the diagonal and off-diagonal entries of $\bSigma$ then allow us to bound its smallest and largest singular values.

\blemma[Bounds on singular values of $\bSigma$]
We have $\|\bSigma\| \,\leq\, L\sqrt{k}$.
Fix $\delta \in (0,1)$. Suppose $\gamma > 2\sqrt{2}Lk\omega$ and $n^\tr \geq \frac{L^2k\log(Lk|\H|/\delta)}{(\frac{\gamma}{2} - \sqrt{2}Lk\omega)^2}.$
With probability at least $1 - \delta$ over draw of $S^\tr$ from $\Dshift$, 
$
\|\bSigma^{-1}\|  \,\leq\, \mathcal{O}\left(\frac{1}{\gamma}\right).
$
\label{lem:Sigma-inv}
\elemma
\begin{proof}
We first derive a straight-forward upper bound on the 
the operator norm of $\bSigma$ in terms of its Frobenius norm: $\|\bSigma\| \,\leq\,$
\begin{eqnarray} \sqrt{\sum_{(\ell,i),(\ell',i')}\Sigma^2_{(\ell,i),(\ell',i')}}
\,\leq\, \sqrt{\sum_{(\ell,i),(\ell',i')}p_{\ell',i'}^2}
\,\leq\, \sqrt{\sum_{(\ell,i),(\ell',i')}p_{\ell',i'}}
\,\leq\, \sqrt{\sum_{\ell,i,\ell'}1} \,=\, L\sqrt{k},
\end{eqnarray}
where $p_{\ell,i} = \E_{(x,y)\sim \mu}[\phi^\ell(x)\1(y=i)]$ and the last inequality uses the fact that $\sum_{i'}p_{\ell',i'} = \E_{x\sim \P^\mu}\left[\phi^{\ell'}(x)\right]\leq 1$.

To bound the operator norm of $\|\bSigma^{-1}\|$, denote
 $\upsilon_{\ell, i} = \mathcal{O}\left(\sqrt{\frac{p_{\ell,i}\log(Lk|\H|/\delta)}{n^\tr}}\right)$.  
From Lemma \ref{lem:Sigma-inv-concentration}, we can express $\bSigma$ as a sum of a matrix $\A$ and a diagonal matrix $\D$, i.e.\ $\bSigma = \A + \D$, where each 
$A_{(\ell, i), (\ell, i)} = 0$,
$A_{(\ell, i), (\ell', i')} \leq \omega + \upsilon_{\ell,i}, \forall (\ell, i) \ne (\ell', i')$ and $D_{(\ell, i), (\ell, i)} \geq \gamma - \upsilon_{\ell,i}$. 
Let $\sigma_{\ell,i}(\bSigma)$ denote the $(\ell,i)$-th largest singular value of $\bSigma$.
By Weyl's inequality, we have that the singular values of $\bSigma$ can be bounded
in terms of the singular values $\D$
(see e.g., \cite{stewart1998perturbation}):
\bequation
|\sigma_{\ell,i}(\bSigma) - \sigma_{\ell,i}(\D)| \leq \|\A\|, \quad \text{or} \quad 
\sigma_{\ell,i}(\D) - \sigma_{\ell,i}(\bSigma) \leq \|\A\|.
\eequation
We further have:
\allowdisplaybreaks
\begin{eqnarray*}
\sigma_{\ell,i}(\bD)  - \sigma_{\ell,i}(\bSigma) &\leq& \|\A\| \leq \sqrt{\sum_{(\ell,i) \ne (\ell',i')}(\omega+\upsilon_{\ell,i})^2}
+ \upsilon_{\ell, i}\\
&\leq&
\sqrt{2}\sqrt{\sum_{(\ell,i) \ne (\ell',i')}\omega^2 + \sum_{(\ell,i) \ne (\ell',i')}\upsilon_{\ell,i}^2}
+ \upsilon_{\ell, i}
\\
&\leq&
\sqrt{2}\sqrt{\sum_{(\ell,i) \ne (\ell',i')}\omega^2} + \sqrt{2}\sqrt{\sum_{(\ell,i) \ne (\ell',i')}\upsilon_{\ell,i}^2}\\
&\leq&
\sqrt{2}Lk\omega \,+\, \mathcal{O}\left(\sqrt{\frac{\log(Lk|\H|/\delta)}{n^\tr}}\right)\sqrt{\sum_{(\ell,i) \ne (\ell',i')}p_{\ell,i}}\\
&\leq& 
\sqrt{2}Lk\omega \,+\, \mathcal{O}\left(\sqrt{\frac{L^2k\log(Lk|\H|/\delta)}{n^\tr}}\right). \numberthis
\end{eqnarray*}
Since $\sigma_{\ell,i}(\D) \geq \gamma - \max_{\ell, i}\upsilon_{\ell, i}$, and 
\bequation
\sigma_{\ell,i}(\bSigma) \,\geq\,
\gamma \,-\, \sqrt{2}Lk\omega \,-\, \mathcal{O}\left(\sqrt{\frac{L^2k\log(Lk|\H|/\delta)}{n^\tr}}\right) - \max_{\ell, i}\upsilon_{\ell, i}.
\eequation
Substituting for $\max_{\ell, i}\upsilon_{\ell, i} \leq \mathcal{O}\left(\sqrt{\frac{\log(Lk|\H|/\delta)}{n^\tr}}\right)$, and denoting $\sqrt{2}Lk\omega \,+\, \mathcal{O}\left(\sqrt{\frac{L^2k\log(Lk|\H|/\delta)}{n^\tr}}\right)$ by $\xi$, we have $\sigma_{\ell,i}(\bSigma) \,\geq\, \xi$.
With this, we can bound operator norm of $\|\bSigma^{-1}\|$ as:
\bequation
\|\bSigma^{-1}\| = \frac{1}{\min_{\ell,i} \sigma_{\ell,i}(\bSigma)} \,\leq\,
\frac{1}{\gamma - \xi}\,\leq\,
\mathcal{O}\left(\frac{1}{\gamma}\right),
\eequation
where the last inequality follows from 
the assumption that $n^\tr \geq \frac{L^2k\log(Lk|\H|/\delta)}{(\frac{\gamma}{2} - \sqrt{2}Lk\omega)^2}$ and hence
$\xi \leq \mathcal{O}\left(\gamma/2\right)$. 
\end{proof}

We are now ready to prove Theorem \ref{thm:alpha-diagonal-linear-conopt}.
\begin{proof}[Proof of Theorem \ref{thm:alpha-diagonal-linear-conopt}]
\allowdisplaybreaks
The solution from Algorithm \ref{algo:weight-coeff} is given by 
 $\hat{\balpha} = \hat{\bSigma}^{-1}\hat{\bcE}$. 
Recall we can write the ``true'' coefficients by
$\bar{\balpha} = \bSigma^{-1}\bar{\bcE}$, 
where $\bar{\bcE}$ is defined in \eqref{eq:perf-bar},
and we also defined $\balpha = \bSigma^{-1}{\bcE}$.
The left-hand side of Theorem \ref{thm:alpha-diagonal-linear-conopt} can then be expanded as:
\begin{eqnarray}
    \|\hat{\balpha} - \bar{\balpha}\| &\leq&
    \|\hat{\balpha} - {\balpha}\| + \|{\balpha} - \bar{\balpha}\|\\
    &\leq&
    \|\hat{\balpha} - {\balpha}\| + \|\bSigma^{-1}({\bcE} - \bar{\bcE})\|\\
    &\leq&
    \|\hat{\balpha} - {\balpha}\| + \|\bSigma^{-1}\|\|({\bcE} - \bar{\bcE})\|\\
    &\leq&
    \|\hat{\balpha} - {\balpha}\| + \nu\sqrt{Lk} \|\bSigma^{-1}\|\label{eq:lhs-penultimate} \\
    &\leq& \|\hat{\balpha} - {\balpha}\| \,+\, \frac{2\nu\sqrt{Lk}}{\gamma}.
    \label{eq:lhs}
\end{eqnarray}
The second-last step follows from Assump. \ref{asp:alpha-star}, particularly, from
$\left|\sum_{\ell,i}
\bar{\alpha}^{\ell}_i
\Phi^{\Dshift, \ell}_i[h] -
\perf^\Dtrue[h]\right| \,\leq\, \nu, \forall h$, which gives us that
$\left|\sum_{\ell,i}
\bar{\alpha}^{\ell}_i
\Phi^{\Dshift, \ell}_i[h^{\ell',i'}] -
\perf^\Dtrue[h^{\ell',i'}]\right| \,\leq\, \nu$, for all $\ell', i'$. 
The last step follows from Lemma \ref{lem:Sigma-inv} and holds with probability at least $1-\delta$ over draw of $S^\tr$.

All that remains is to bound the term $\|\hat{\balpha} - {\balpha}\|$. 
Given that 
$\hat{\balpha} = \hat{\bSigma}^{-1}\hat{\bcE}$. 
and $\balpha = \bSigma^{-1}{\bcE}$, we can
use standard error analysis for linear systems (see e.g.,
\cite{demmel1997applied}) to bound:
\begin{eqnarray*}
{\|\hat{\balpha} - {\balpha}\|}
&\leq&
\|{\balpha}\|\|\bSigma\|\|\bSigma^{-1}\|\left(
\frac{\|\bSigma - \hat{\bSigma}\|}{\|\bSigma\|}
\,+\,
\frac{\|\bcE - \hat{\bcE}\|}{\|\bcE\|}\right)\\
&\leq&
\|\bSigma^{-1}\|^2\|\bcE\|\left(
\|\bSigma - \hat{\bSigma}\|
\,+\,
\|\bSigma\|\frac{\|\bcE - \hat{\bcE}\|}{\|\bcE\|}\right)
~~(\text{from  $\balpha = \bSigma^{-1}\bcE$})\\
&\leq&
\|\bSigma^{-1}\|^2\|\bcE\|\left(
\|\bSigma - \hat{\bSigma}\|
\,+\,
L\sqrt{k}\frac{\|\bcE - \hat{\bcE}\|}{\|\bcE\|}\right)
~(\text{from Lemma \ref{lem:Sigma-inv}})\\
&\leq&
\|\bSigma^{-1}\|^2\sqrt{Lk}\left(
\|\bSigma - \hat{\bSigma}\|
\,+\,
\frac{L\sqrt{k}}{\sqrt{Lk}c}{\|\bcE - \hat{\bcE}\|}\right)
~(\text{using  $\perf_{(\ell,i)} \in (c,1]$})
\\
&\leq&
\|\bSigma^{-1}\|^2\sqrt{Lk}\left(
\|\bSigma - \hat{\bSigma}\|
\,+\,
\frac{\sqrt{L}}{c}{\|\bcE - \hat{\bcE}\|}\right)\\
&\leq&
\mathcal{O}\left(\frac{\sqrt{Lk}}{\gamma^2}\left(\sqrt{\frac{L^2k\log(Lk|\H|/\delta)}{n^\tr}}
\,+\,
\frac{\sqrt{L}}{c}\sqrt{\frac{Lk\log(Lk/\delta)}{n^\val}}\right)\right)\\
&=&
\mathcal{O}\left(\frac{Lk}{\gamma^2}\left(\sqrt{\frac{L\log(Lk|\H|/\delta)}{n^\tr}}
\,+\,
\frac{1}{c}\sqrt{\frac{L\log(Lk/\delta)}{n^\val}}\right)\right), \numberthis 
\end{eqnarray*}
where the last two steps follow from Lemmas \ref{lem:Sigma-diff}--\ref{lem:Perf-diff} and Lemma \ref{lem:Sigma-inv},
and  hold with probability at least $1-\delta$ over draws of $S^\tr$ and $S^\val$.
Plugging this back into \eqref{eq:lhs} completes the proof.
\end{proof}

\subsection{Error Bound for PI-EW}
\label{app:pi-ew}
We will first provide error bound for the PI-EW algorithm, which is a special case of the FW-EG algorithm. When the metric  is linear, we have the following bound on the gap between the metric value achieved by classifier $\hat{h}$ output by Algorithm \ref{algo:linear-metrics}, and the optimal  value. This result will then be useful in proving an error bound for the FW-EG procedure (Algorithm \ref{algo:FW}) in the next section, that essentially focuses on the non-linear metric optimization. 
\blemma[\textbf{Error Bound for PI-EW}]
\label{lem:plugin-linear}
Let the input metric be of the form $\hat{\perf}^\lin[h] = \sum_{i}\beta_i \hat{C}^\val_{ii}[h]$ for some (unknown) coefficients $\bbeta \in \R_+^k, \|\bbeta\|\leq 1$, and denote $\perf^\lin[h] = \sum_{i}\beta_i C^\Dtrue_{ii}[h]$.
Let $\bar{\balpha}$ be the associated weighting coefficient for $\perf^\lin$  in Assumption \ref{asp:alpha-star}, with $\|\bar{\balpha}\|_1 \leq B$ and with slack $\nu$. Fix $\delta>0$.
Suppose w.p. $\geq 1-\delta$ over draw of $S^\tr$ and $S^\val$, the weight elicitation routine in line 2 of Algorithm \ref{algo:linear-metrics}
provides coefficients $\hat{\balpha}$ with
 $\|\hat{\balpha} -\bar{\balpha}\| \leq 
 \kappa(\delta, n^\tr, n^\val)
 $, for some function $\kappa(\cdot) > 0$. Let $B' = B + \sqrt{Lk}\,\kappa(\delta, n^\tr, n^\val).$
 Then with the same probability, the classifier $\hat{h}$ output by Algorithm \ref{algo:linear-metrics} satisfies:
\begin{eqnarray}
{
\max_{h}\perf^\lin[h] - \perf^\lin[\hat{h}]} &\leq
B'\E_x\left[\|\eta^\tr(x)-
\hat{\eta}^\tr(x)\|_1\right]
\,+\,
 2\sqrt{Lk}\,\kappa(\delta, n^\tr, n^\val)
  \,+\, 2\nu,
\end{eqnarray}
where $\eta_i^\tr(x) = \P^\mu(y=i|x)$. Furthermore, when the metric coefficients $\|\bbeta\| \leq Q$, for some $Q > 0$, then 
\begin{eqnarray}
{
\max_{h}\perf^\lin[h] - \perf^\lin[\hat{h}]} &\leq
Q\left(B'\E_x\left[\|\eta^\tr(x)-
\hat{\eta}^\tr(x)\|_1\right]
\,+\,
 2\sqrt{Lk}\,\kappa(\delta, n^\tr, n^\val)\,+\, 2\nu\right).
\end{eqnarray}
\elemma

\begin{proof}
For the proof, we will treat $\widehat{h}$ as a classifier that outputs one-hot labels,
i.e.\ as classifier  $\widehat{h}: \X \> \{0,1\}^k$ with
\begin{equation}
    \widehat{h}(x) \,=\, \onehot\left(\argmax^*_{i \in [k]} \widehat{W}_{i}(x)\hat{\eta}^\tr_i(x)\right),
    \label{eq:h-hat-onehot}
\end{equation}
where $\argmax^*$ breaks ties in favor of the largest class.

Let
$\bar{W}_i(x)  = \sum_{\ell=1}^L \bar{\alpha}^{\ell}_{i}\phi^\ell(x)$ and $\hat{W}_i(x) = \sum_{\ell=1}^L \hat{\alpha}^{\ell}_{i}\phi^\ell(x)$. It is easy to see that 
\begin{equation}
\label{eq:w-bar-w-hat}
|\bar{W}_i(x) - \hat{W}_i(x)| \leq 
\|\bar{\balpha} - \hat{\balpha}\|\sqrt{\sum_{\ell=1}^L \phi^\ell(x)^2} \leq
\sqrt{Lk}\|\bar{\balpha} - \hat{\balpha}\| \leq \sqrt{Lk}\kappa,
\end{equation}
where in the second inequality we use $|\phi^\ell(x)| \leq 1$, and in the last inequality, we have shortened the notation $\kappa(\delta, n^\tr, n^\val)$ to $\kappa$ and for simplicity will avoid mentioning that this holds with high probability.

\allowdisplaybreaks
Further, recall from Assumption \ref{asp:alpha-star} that
\begin{equation*}
|\bar{W}_i(x)| \leq \|\bar{\balpha}\|_1\max_{\ell}|\phi^\ell(x)| \leq B(1) = B \numberthis 
\end{equation*}
and so from \eqref{eq:w-bar-w-hat},
\begin{equation}
|\hat{W}_i(x)| \leq B + \sqrt{Lk}\kappa.
\label{eq:W-bound}
\end{equation}
We also have from Assumption \ref{asp:alpha-star} that
\begin{equation*}
\left|\perf^\lin[h] \,-\,  \E_{(x,y)\sim \mu}\left[\sum_{i=1}^k\bar{W}_i(x)\1(y=i)h_i(x)\right]\right| \leq \nu, \forall h. \numberthis 
\end{equation*}
Equivalently, this can be re-written in terms of the conditional class probabilities $\eta^\tr(x) = \P^\mu(y=1|x)$:
\begin{equation}
   \left|\perf^\lin[h] \,-\,  \E_{x\sim \P^\mu}\left[\sum_{i=1}^k\bar{W}_i(x)\eta^\tr_i(x)h_i(x)\right]\right| \leq \nu, \forall h,
\label{eq:perf-ex-over-P-mu} 
\end{equation}
where $\P^\mu$ denotes the marginal distribution of $\Dshift$ over $\X$.
Denoting $h^* \in \argmax_{h}\,\perf^\lin[h]$, we then have from \eqref{eq:perf-ex-over-P-mu},
\begin{eqnarray*}
\allowdisplaybreaks
\lefteqn{\max_{h}\,\perf^\lin[h] \,-\, \perf^\lin[\hat{h}]}\\
&=&
\sum_{i=1}^k \E_x\left[\bar{W}_{i}(x)\eta^\tr_{i}(x)h^*_i(x))\right]
\,-\, \sum_{i=1}^k \E_x\left[ \bar{W}_{i}(x)\eta^\tr_{i}(x)\hat{h}_i(x)\right] \,+\, 2\nu
\\
&\leq&
\sum_{i=1}^k \E_x\left[\hat{W}_{i}(x)\eta^\tr_{i}(x)h^*_i(x))\right]
\,-\, \sum_{i=1}^k \E_x\left[ \hat{W}_{i}(x)\eta^\tr_{i}(x)\hat{h}_i(x)\right] \,+\, 2\nu \,+\,
2\sqrt{Lk}\kappa\\
&&
\hspace{6cm}\text{(from \eqref{eq:w-bar-w-hat}, $\textstyle\sum_{i=1}^k\eta^\tr_i(x) = 1$ and $h_i(x) \leq 1$)}\\
&\leq&
\sum_{i=1}^k \E_x\left[\hat{W}_{i}(x)\eta^\tr_{i}(x)h^*_i(x))\right]
\,-\, \sum_{i=1}^k\E_x\left[\hat{W}_{i}(x)\hat{\eta}^\tr_{i}(x)h^*_i(x))\right]\\
&&
\hspace{0cm}
\,+\, \sum_{i=1}^k\E_x\left[\hat{W}_{i}(x)\hat{\eta}^\tr_{i}(x)h^*_i(x))\right]
\,-\, \sum_{i=1}^k \E_x\left[ \hat{W}_{i}(x)\eta^\tr_{i}(x)\hat{h}_i(x)\right] \,+\, 2\nu \,+\,
2\sqrt{Lk}\kappa \numberthis
\end{eqnarray*}
From definition of $\widehat{h}$ in \eqref{eq:h-hat-onehot}, we 
have that $\sum_{i=1}^k\hat{W}_{i}(x)\hat{\eta}^\tr_{i}(x)\hat{h}_i(x) \geq \sum_{i=1}^k\hat{W}_{i}(x)\hat{\eta}^\tr_{i}(x)h_i(x),$ for all $h: \X \> \Delta_k$. Therefore,
\begin{eqnarray*}
\lefteqn{\max_{h}\,\perf^\lin[h] \,-\, \perf^\lin[\hat{h}]}\\
&\leq&
\sum_{i=1}^k \E_x\left[\hat{W}_{i}(x)\eta^\tr_{i}(x)h^*_i(x))\right]
\,-\, \sum_{i=1}^k\E_x\left[\hat{W}_{i}(x)\hat{\eta}^\tr_{i}(x)h^*_i(x))\right] \,+\, 2\nu \,+\,
2\sqrt{Lk}\kappa \\
&&
\,+\, \sum_{i=1}^k\E_x\left[\hat{W}_{i}(x)\hat{\eta}^\tr_{i}(x)\hat{h}_i(x))\right]
\,-\, \sum_{i=1}^k \E_x\left[ \hat{W}_{i}(x)\eta^\tr_{i}(x)\hat{h}_i(x)\right] \tag{E.43 cont.}\\
&\leq&
\sum_{i=1}^k \E_x\left[\hat{W}_{i}(x)|\eta^\tr_{i}(x) - \hat{\eta}^\tr_i(x)||h^*_i(x) - \hat{h}_i(x)|\right]\,+\, 2\nu \,+\,
2\sqrt{Lk}\kappa
\\
&\leq&
\E_x\left[\max_i \left(\hat{W}_{i}(x)|h^*_i(x) - \hat{h}_i(x)|\right)\|\eta(x)-
\hat{\eta}(x)\|_1\right]\,+\, 2\nu \,+\,2\sqrt{Lk}\kappa
\\
&\leq&
(B + \sqrt{Lk}\kappa)\,\E_x\left[\|\eta(x)-
\hat{\eta}(x)\|_1\right]\,+\, 2\nu \,+\, 2\sqrt{Lk}\kappa, \numberthis 
\end{eqnarray*}
where the last step follows from \eqref{eq:W-bound} and $|h_i(x) - \hat{h}_i(x)|\leq 1$. This completes the proof. The second part, where $\|\bbeta\| \leq Q$, follows by applying Assumption \ref{asp:alpha-star} to normalized coefficients $\bbeta/\|\bbeta\|$, and scaling the associated slack $\nu$ by $Q$.
\end{proof}

\subsection{Proof of Theorem \ref{thm:iterative-plugin}}
\label{app:proof-FW}
We will make a couple of minor changes to the algorithm to simplify the analysis. Firstly, instead of using the same sample $S^\val$ for both estimating the example weights (through call to \textbf{PI-EW} in line 9) and estimating confusion matrices $\hat{\C}^{\val}$ (in line 10), we split $S^\val$ into two halves, use one half for the first step and the other half for the second step. Using independent samples for the two steps, we will be able to derive straight-forward confidence bounds on the estimated confusion matrices in each case. In our experiments however, we find the algorithm to be effective even when a common sample is used for both steps. Secondly, we modify line 8 to include a shifted version of the metric $\hat{\perf}^\val$, so that later in Appendix \ref{app:complex-unknown} when we handle the case of ``unknown $\psi$'', we can avoid having to keep track of an additive constant in the gradient coefficients.
\begin{algorithm}[t]
\caption{\hspace{-0.075cm}\textbf{:} \textbf{F}rank-\textbf{W}olfe with \textbf{E}licited \textbf{G}radients (\textbf{FW-EG}) for General Diagonal Metrics} \label{algo:FW-modified}
\begin{algorithmic}[1]
\STATE \textbf{Input:} $\hat{\perf}^\val$, Basis functions $\phi^1, \ldots, \phi^L: \X \> [0,1]$, Pre-trained $\hat{\eta}^\tr: \X \> \Delta_k$,
$S^\tr \sim \Dshift$, 
$S^\val \sim \Dtrue$ split into two halves
$S^\val_1$ and $S^\val_2$ of sizes $\lceil n^\val/2\rceil$ and $\lfloor n^\val/2\rfloor$ respectively, $T$, $\epsilon$
\STATE Initialize classifier $h^0$ and $\c^0 = \diag(\widehat{\C}^\val[h^0])$
\STATE \textbf{For} $t =  0$ \textbf{to} $T-1$ \textbf{do}
\STATE ~~~\textbf{if} $\perf^\Dtrue[h] = \psi(C_{11}^D[h],\ldots,C_{kk}^D[h])$ for known $\psi$:
\STATE ~~~~~~~$\bbeta^{t}\,=\, \nabla\psi(\c^{t})$
\STATE ~~~~~~~$\hat{\perf}^\lin[h]\,=\, \sum_i \beta^t_i \hat{C}^\val_{ii}[h],$  evaluated using $S^\val_1$
\STATE ~~~\textbf{else}
\STATE ~~~~~~~$\hat{\perf}^\lin[h]= \hat{\perf}^\val[h]- \hat{\perf}^\val[h^t]$, evaluated using $S^\val_1$ \hspace{2cm}\COMMENT{small $\epsilon$ recommended}
\STATE ~~~$\widehat{f} = \text{\textbf{PI-EW}}(\hat{\perf}^\lin, \phi^1,..., \phi^L, \hat{\eta}^\tr, S^\tr, S^\val_1, h^{t}, \epsilon)$
\STATE ~~~$\tilde{\c} = \diag(\widehat{\C}^\val[\widehat{f}])$, evaluated using $S^\val_2$
\STATE ~~~${h}^{t+1} = \big(1-\frac{2}{t+1}\big) {h}^{t} + \frac{2}{t+1} \onehot(\widehat{f})$
\STATE ~~~${\c}^{t+1} = \big(1-\frac{2}{t+1}\big) {\c}^{t} + \frac{2}{t+1}\tilde{\c}$
\STATE \textbf{End For}
\STATE \textbf{Output:} $\hat{h} = h^T$
\end{algorithmic}
\end{algorithm}

\btheorem[(Restated) \textbf{Error Bound for FW-EG with known $\psi$}]
Let $\perf^\Dtrue[h] = \psi(C^\Dtrue_{11}[h],\ldots, C^\Dtrue_{kk}[h])$ for a \emph{known} concave function $\psi: [0,1]^k \>\R_+$, which is $Q$-Lipschitz, and  $\lambda$-smooth w.r.t.\ the $\ell_1$-norm.  
Let $\hat{\perf}^\val[h]=\psi(\hat{C}^\val_{11}[h],\ldots, \hat{C}^\val_{kk}[h])$. 
Fix $\delta \in (0, 1)$.
Suppose Assumption \ref{asp:alpha-star} holds with slack $\nu$, and for any linear metric $\sum_i\beta_i C^\Dtrue_{ii}[h]$ with $\|\bbeta\|\leq 1$, whose associated weight coefficients is $\bar{\balpha}$ with $\|\bar{\balpha}\| \leq B$, 
 w.p. $\geq 1-\delta$ over draw of $S^\tr$ and $S^\val_1$, the weight elicitation routine in Algorithm \ref{algo:weight-coeff} outputs coefficients $\hat{\balpha}$ with
 $\|\hat{\balpha} -\bar{\balpha}\| \leq 
 \kappa(\delta, n^\tr, n^\val)
 $, for some function $\kappa(\cdot) > 0$. Let $B' = B + \sqrt{Lk}\,\kappa(\delta/T, n^\tr, n^\val).$
Assume $k \leq n^\val$.
 Then w.p.\  $\geq 1 - \delta$ over draws of $S^\tr$ and $S^\val$ from $\Dtrue$ and $\Dshift$ resp., the classifier $\hat{h}$ output by Algorithm \ref{algo:FW-modified} after $T$ iterations satisfies:

\vspace{0.5cm}
\begin{eqnarray*}
\lefteqn{
\hspace{-1cm}\max_{h}\perf^D[h] - \perf^D[\hat{h}]\,\leq\,
2QB'\E_x\left[\|\eta^\tr(x)-
\hat{\eta}^\tr(x)\|_1\right] +
4Q\nu +
 4Q\sqrt{Lk}\,\kappa(\delta/T, n^\tr, n^\val)}\\
 &&
 \hspace{3.5cm}
+\, \mathcal{O}\left(\lambda k\sqrt{\frac{k\log(n^\val)\log(k) + \log(k/\delta)}{n^\val}} + \frac{\lambda}{T}\right). \numberthis 
\end{eqnarray*}
\etheorem
The proof adapts techniques from \cite{narasimhan2015consistent}, who
show guarantees for a Frank-Wolfe based learning algorithm with a known $\psi$ in the \textit{absence} of distribution shift.
The main proof steps are listed below:
\begin{itemize}
    \item Prove a generalization bound for the confusion matrices $\hat{\C}^\val$ evaluated in line 10 on the validation sample (Lemma \ref{lem:C-genbound})
    \item Establish an error bound for the call to \textbf{PI-EW} in line 9 (Lemma \ref{lem:plugin-linear} in previous section)
    \item Combine the above two results to show that the classifier $\hat{f}$ returned in line 9 is an approximate linear maximizer needed by the Frank-Wolfe algorithm (Lemma \ref{lem:lmo})
    \item Combine Lemma \ref{lem:lmo} with a convergence guarantee for the outer Frank-Wolfe algorithm  \cite{narasimhan2015consistent, Jaggi13} (using convexity of the space of confusion matrices $\cC$) to complete the proof (Lemmas \ref{lem:C-convexity}--\ref{lem:fw}).
\end{itemize}
\blemma[Generalization bound for $\C^\Dtrue$]
\label{lem:C-genbound}
Fix $\delta \in (0,1)$. 
Let $\hat{\eta}^\tr: \X \> \Delta_m$ be a fixed class probability estimator. Let $\mathcal{G} = \{h:\X\>[m]\,|\,h(x) \in \argmax_{i\in[m]}\beta_i\hat{\eta}^\tr_i(x) \text{ for some }\bbeta \in \R_+^m\}$ be the set of plug-in classifiers defined with $\hat{\eta}^\tr$. 
Let 
\bequation
\bar{\mathcal{G}} = \{h(x) = \textstyle\sum_{t=1}^T u_t h_t(x)\,|\, T \in \N, h_1,\ldots,h_T \in \mathcal{G}, \mathbf{u} \in \Delta_T\}
\eequation
be the set of all randomized classifiers constructed from a finite number of plug-in classifiers in $\mathcal{G}$. 
Assume $m \leq n^\val$. 
Then with probability at least $1 - \delta$ over draw of $S^\val$ from $\Dtrue$, then for $h\in \bar{\mathcal{G}}$:
\bequation
\|\C^\Dtrue[h] - \hat{\C}^\val[h]\|_\infty \leq \mathcal{O}\left(\sqrt{\frac{m\log(m)\log(n^\val) + \log(m/\delta)}{n^\val}}\right).
\eequation
\elemma
\begin{proof}

The proof follows from standard convergence based generalization arguments, where we bound the capacity of the class of plug-in classifiers $\mathcal{G}$ in terms of its Natarajan dimension \cite{natarajan1989learning, daniely2011multiclass}.
Applying Theorem 21 from \cite{daniely2011multiclass}, we have that the Natarajan dimension  of $ \mathcal{G}$ is at most $d = k\log(k)$. 
Applying the generalization bound in Theorem 13 in \cite{daniely2015multiclass}, along with the assumption that $k\leq n^\val$, we have for any $i \in [k]$,
with probability at least $1 - \delta$ over draw of $S^\val$ from $\Dtrue$,  for any $h\in \mathcal{G}$:
\bequation
|C_{ii}^\Dtrue[h] - \hat{C}_{ii}^\val[h]| \leq \mathcal{O}\left(\sqrt{\frac{k\log(k)\log(n^\val) + \log(1/\delta)}{n^\val}}\right).
\eequation
Further note that for any randomized classifier $\bar{h}(x) = \sum_{t=1}^T u_t h_t(x) \in\bar{\mathcal{G}},$ for some $\mathbf{u} \in \Delta_T$,
\bequation
|C_{ii}^\Dtrue[\bar{h}] - \hat{C}_{ii}^\val[\bar{h}]| \leq
\sum_{t=1}^Tu_t|C_{ii}^\Dtrue[h_t] - \hat{C}_{ii}^\val[h_t]|
\leq
\mathcal{O}\left(\sqrt{\frac{k\log(k)\log(n^\val) + \log(1/\delta)}{n^\val}}\right),\eequation
where the first inequality follows from linearity of expectations. 
Taking a union bound over all diagonal entries $i \in[k]$ completes the proof.
\end{proof}

We next show that the call to \textbf{PI-EW} in line 9 of Algorithm \ref{algo:FW} computes an approximate  maximizer $\hat{f}$ for $\hat{\perf}^\lin$. This is an extension of Lemma 26 in \cite{narasimhan2015consistent}.
\blemma[Approximation error in linear maximizer $\hat{f}$]
\label{lem:lmo}
For each iteration $t$ in Algorithm \ref{algo:FW}, denote $\bar{\c}^t = \diag(\C^D[h^t])$, 
and $\bar{\bbeta}^t = \nabla\psi(\bar{\c}^t)$. Suppose the assumptions in Theorem \ref{thm:iterative-plugin} hold. Let $B' = B + \sqrt{Lk}\,\kappa(\delta, n^\tr, n^\val)$. 
Assume $k \leq n^\val$. Then w.p.\ $\geq 1 - \delta$ over draw of $S^\tr$ and $S^\val$ from $\Dshift$ and $\Dtrue$ resp., for any $t = 1, \ldots, T$, the classifier $\hat{f}$ returned by \emph{\textbf{PI-EW}} in line 9 satisfies:
\begin{eqnarray*}
\lefteqn{
\max_{h}\sum_{i}\bar{\beta}^t_i C^\Dtrue_{ii}[h] \,-\, 
\sum_{i}\bar{\beta}^t_iC^\Dtrue_{ii}[\hat{f}] \,\leq\,
QB'\E_x\left[\|\eta^\tr(x)-
\hat{\eta}^\tr(x)\|_1\right] 
\,+\, 2Q\nu
}\\
 &&
 \hspace{1cm}
 +\,
 2Q\sqrt{Lk}\,\kappa\left(\textstyle\frac{\delta}{T}, n^\tr, n^\val\right)
 \,+\, 
 \mathcal{O}\left(\lambda k\sqrt{\frac{k\log(k)\log(n^\val) + \log(k/\delta)}{n^\val}}\right). \numberthis 
\end{eqnarray*}
\elemma
\begin{proof}
The proof uses Theorem \ref{thm:alpha-diagonal-linear-conopt} to bound the approximation errors in the linear maximizer $\hat{f}$ (coupled with a union bound over $T$ iterations), and  
Lemma \ref{lem:C-genbound} to bound the estimation errors in the confusion matrix $\c^t$ used to compute the gradient $\nabla\psi(\c^t)$. 

Recall from Algorithm \ref{algo:FW} that $\c^t = \diag(\hat{\C}^\val[h^t])$ and $\bbeta^t =\nabla \psi(\c^t)$.
Note that these are approximations to the actual quantities we are interested in $\bar{\c}^t = \diag(\C^D[h^t])$
and $\bar{\bbeta}^t = \nabla\psi(\bar{\c}^t)$, both of which are evaluated using the population confusion matrix.
Also, $\|\bbeta\| = \|\nabla\psi(\c^t)\| \leq Q$ from $Q$-Lipschitzness of $\psi$.

Fix iteration $t$, and let $h^* \in \argmax_{h}\sum_{i}\bar{\beta}^t_i C^\Dtrue_{ii}[h]$ for this particular iteration. Then:
\allowdisplaybreaks
\begin{align*}
\lefteqn{\sum_{i}\bar{\beta}^t_i C^\Dtrue_{ii}[h^*] \,-\, 
\sum_{i}\bar{\beta}^t_iC^\Dtrue_{ii}[\hat{f}]}\\
&= 
\sum_{i}\bar{\beta}^t_i C^\Dtrue_{ii}[h^*] \,-\, 
\sum_{i}{\beta}^t_iC^\Dtrue_{ii}[h^*]
\,+\,
\sum_{i}{\beta}^t_iC^\Dtrue_{ii}[h^*] \,-\,
\sum_{i}{\beta}^t_i C^\Dtrue_{ii}[\hat{f}] \\
& \hspace{1cm}\,+\, 
\sum_{i}{\beta}^t_iC^\Dtrue_{ii}[\hat{f}] \,-\,
\sum_{i}\bar{\beta}^t_iC^\Dtrue_{ii}[\hat{f}] \\
&\leq
\|\bbeta^t - \bar{\bbeta}^t\|_\infty\sum_{i}C_{ii}^D[h^*]\,+\,
 \sum_{i}{\beta}^t_iC^\Dtrue_{ii}[h^*] \,-\,
\sum_{i}{\beta}^t_i C^\Dtrue_{ii}[\hat{f}] \,+\, 
\|\bbeta^t - \bar{\bbeta}^t\|_\infty\sum_{i}C_{ii}^D[\hat{f}]\\
&\leq
\|\bbeta^t - \bar{\bbeta}^t\|_\infty(1)\,+\,
\max_h \sum_{i}{\beta}^t_iC^\Dtrue_{ii}[h] \,-\,
\sum_{i}{\beta}^t_i C^\Dtrue_{ii}[\hat{f}] \,+\, 
\|\bbeta^t - \bar{\bbeta}^t\|_\infty(1)
~~\text{($\textstyle\sum_{i,j}C^D_{ij}[h] = 1$)}
\\
&=
2\|\bbeta^t - \bar{\bbeta}^t\|_\infty\,+\,
\max_h \sum_{i}{\beta}^t_iC^\Dtrue_{ii}[h] \,-\,
\sum_{i}{\beta}^t_i C^\Dtrue_{ii}[\hat{f}]\\
&=
2\|\nabla\psi(\c^t)- \nabla\psi(\bar{\c}^t)\|_\infty\,+\,
\max_h \sum_{i}{\beta}^t_iC^\Dtrue_{ii}[h] \,-\,
\sum_{i}{\beta}^t_i C^\Dtrue_{ii}[\hat{f}]\\
&\leq
2\lambda\|\c^t- \bar{\c}^t\|_1\,+\,
\max_h \sum_{i}{\beta}^t_iC^\Dtrue_{ii}[h] \,-\,
\sum_{i}{\beta}^t_i C^\Dtrue_{ii}[\hat{f}]
~~~~\text{($\psi$ is $\lambda$-smooth w.r.t.\ the $\ell_1$ norm)}\\
&\leq
2\lambda k\|\c^t- \bar{\c}^t\|_\infty\,+\,
\max_h \sum_{i}{\beta}^t_iC^\Dtrue_{ii}[h] \,-\,
\sum_{i}{\beta}^t_i C^\Dtrue_{ii}[\hat{f}]\\
&\leq
\mathcal{O}\left(\lambda k\sqrt{\frac{k\log(k)\log(n^\val) + \log(k/\delta)}{n^\val}}\right)+
QB'\E_x\left[\|\eta^\tr(x)-
\hat{\eta}^\tr(x)\|_1\right] \\
 & \hspace{2cm} + 2Q\sqrt{Lk}\,\kappa(\delta, n^\tr, n^\val) + 2Q\nu,
 \numberthis\label{eq:approx-lin-max-last}
\end{align*}
where $B' = B + \sqrt{Lk}\,\kappa(\delta, n^\tr, n^\val)$. The last step 
holds with probability at least $1-\delta$ over draw of $S^\val$ and $S^\tr$, and follows from  Lemma \ref{lem:C-genbound} and Lemma \ref{lem:plugin-linear} (using  $\|\bbeta^t\| \leq Q$). The first bound on $\|\c^t- \bar{\c}^t\|_\infty = \|\hat{\C}^\val[h^t] - {\C}^\Dtrue[h^t]\|_\infty$ holds for any randomized classifier $h^t$ constructed from a finite number of plug-in classifiers. The second bound on the linear maximization errors holds only for a fixed $t$, and so
we need to take a union bound over all iterations $t = 1, \ldots, T$, to complete the proof. 

Note that
because we use two independent samples $S^\val_1$ and $S^\val_2$ for the two bounds, they each hold with high probability over draws of $S^\val_1$ and $S^\val_2$ respectively, and hence with high probability over draw of $S^\val$.
\end{proof}

Our last two lemmas restate results from \cite{narasimhan2015consistent}. The first shows  convexity of the space of confusion matrices (Proposition 10 from their paper), and the second applies  a result from \cite{Jaggi13} to show convergence of the classical Frank-Wolfe algorithm with approximate linear maximization steps (Theorem 16 in \cite{narasimhan2015consistent}).
\blemma[{Convexity of space of confusion matrices}]
\label{lem:C-convexity}
Let $\cC = \{\diag(\C^D[h])\,|\,h: \X \> \Delta_k\}$ denote the set of all confusion matrices achieved by some randomized classifier $h: \X \> \Delta_k$. Then $\cC$ is convex.
\elemma
\begin{proof}
For any two confusion matrices $\C^1, \C^2 \in \cC$, there exist classifiers $h_1, h_2: \X \> \Delta_k$ such that $\c^1 = \diag(\C^D[h_1])$ and $\c^2 = \diag(\C^D[h_2])$. We need to show that for any $u \in [0,1]$, 
\bequation
u\c^1 + (1-u)\c^2 \in \cC.
\eequation
This is true because the randomized classifier $h(x) = u h_1(x) + (1-u)h_2(x)$ yields a confusion matrix $\diag(\C^D[h]) = u\,\diag(\C^D[h_1]) + (1-u)\diag(\C^D[h_2]) = u\c^1 + (1-u)\c^2 \in \cC$.
\end{proof}

\blemma[{Frank-Wolfe with approximate linear maximization} \cite{narasimhan2015consistent}]
\label{lem:fw}
Let the metric $\perf^\Dtrue[h] = \psi(C^\Dtrue_{11}[h],\ldots, C^\Dtrue_{kk}[h])$ for a  concave function $\psi: [0,1]^k \>\R_+$ that is $\lambda$-smooth w.r.t.\ the $\ell_1$-norm. 
For each iteration $t$, define $\bar{\bbeta}^t = \nabla\psi(\diag(\C^D[h^t]))$.
Suppose line 9 of Algorithm \ref{algo:FW} returns a classifier $\hat{f}$ such that $\max_{h}\sum_{i}\bar{\beta}^t_i C^\Dtrue_{ii}[h] \,-\,
\sum_{i}\bar{\beta}^t_iC^\Dtrue_{ii}[\hat{f}] \leq \Delta,\forall t \in [T]$. Then the  classifier $\hat{h}$ output by Algorithm \ref{algo:FW} after $T$ iterations satisfies:
\begin{eqnarray*}
\max_{h}\perf^D[h] - \perf^D[\hat{h}]\,\leq\,
2\Delta
+\, \frac{8\lambda}{T+2}. \numberthis 
\end{eqnarray*}
\elemma

\begin{proof}[Proof of Theorem \ref{thm:iterative-plugin}]
The proof follows by plugging in the result from Lemma \ref{lem:lmo} into Lemma \ref{lem:fw}.
\end{proof}
\section{Error Bound for Weight Elicitation with Fixed Probing Classifiers}
\label{app:norm-sigma-bound}
We first state a general error bound for Algorithm \ref{algo:weight-coeff} in terms of the singular values of $\bSigma$ for any \textit{fixed} choices for the probing classifiers. We then bound the singular values for the fixed choices in \eqref{eq:trivial-classifiers} under some specific assumptions.
\btheorem[\textbf{Error bound on elicited weights with fixed probing classifiers}]
Let $\perf^\Dtrue[h] = \sum_{i}\beta_i C^\Dtrue_{ii}[h]$ for some (unknown) $\bbeta \in \R^k$, and let 
$\hat{\perf}^\val[h] = \sum_{i}\beta_i \hat{C}^\val_{ii}[h]$. 
Let $\bar{\balpha}$ be the associated coefficient in Assumption \ref{asp:alpha-star} for metric $\perf^D$. 
Fix $\delta\in (0,1)$. Then for any fixed choices of the probing classifiers $h^{\ell,i}$, we have with probability  $\geq 1 - \delta$ over draws of $S^\tr$ and $S^\val$ from $\Dshift$ and $\Dtrue$ resp., 
the $\hat{\balpha}$ output by Algorithm \ref{algo:weight-coeff} satisfies: ${\|\hat{\balpha} - \bar{\balpha}\| \,\leq\,}$
\begin{eqnarray*}
\mathcal{O}\left(\frac{1}{\sigma_{\min}(\bSigma)^2}\left(Lk\sqrt{\frac{L\log(Lk/\delta)}{n^\tr}}
\,+\,
\sigma_{\max}(\bSigma)\sqrt{\frac{Lk\log(Lk/\delta)}{n^\val}}\right) \,+\, \frac{\nu\sqrt{Lk}}{\sigma_{\min}(\bSigma)}\right), \numberthis 
\end{eqnarray*}
where $\sigma_{\min}(\bSigma)$ and $\sigma_{\min}(\bSigma)$
are respectively the smallest and largest singular values of $\bSigma$.
\etheorem
\begin{proof}
The proof follows the same steps as Theorem \ref{thm:alpha-diagonal-linear-conopt}, except for the bound on  $\|\hat{\balpha} - {\balpha}\|$. Specifically, we have from \eqref{eq:lhs-penultimate}:
\begin{equation}
{\|\hat{\balpha} - \bar{\balpha}\|}
\,\leq\, {\|\hat{\balpha} - {\balpha}\|} \,+\, \nu\sqrt{Lk}\|\bSigma^{-1}\|.
\label{eq:lhs-fixed}
\end{equation}
We next bound: ${\|\hat{\balpha} - {\balpha}\|}$
\begin{eqnarray*}
&\leq&
\|{\balpha}\|\|\bSigma\|\|\bSigma^{-1}\|\left(
\frac{\|\bSigma - \hat{\bSigma}\|}{\|\bSigma\|}
\,+\,
\frac{\|\bcE - \hat{\bcE}\|}{\|\bcE\|}\right)\\
&\leq&
\|\bSigma^{-1}\|^2\|\bcE\|\left(
\|\bSigma - \hat{\bSigma}\|
\,+\,
\|\bSigma\|\frac{\|\bcE - \hat{\bcE}\|}{\|\bcE\|}\right)
~~(\text{from  $\balpha = \bSigma^{-1}\bcE$})\\
&\leq&
\|\bSigma^{-1}\|^2\left(
\|\bcE\|\|\bSigma - \hat{\bSigma}\|
\,+\,
\|\bSigma\|{\|\bcE - \hat{\bcE}\|}\right)\\
&\leq&
\|\bSigma^{-1}\|^2\left(\sqrt{Lk}
\|\bSigma - \hat{\bSigma}\|
\,+\,
\|\bSigma\|{\|\bcE - \hat{\bcE}\|}\right)
~~(\text{as  $\perf^\Dtrue[h] \in [0,1]$})\\
&\leq&
\mathcal{O}\left(\frac{1}{\sigma_{\min}(\bSigma)^2}\left(
\sqrt{Lk}\sqrt{\frac{L^2k\log(Lk/\delta)}{n^\tr}}
\,+\,
\sigma_{\max}(\bSigma)\sqrt{\frac{Lk\log(Lk/\delta)}{n^\val}}\right)\right), \numberthis 
\end{eqnarray*}
where the last step follows from 
 an adaptation of Lemma \ref{lem:Sigma-diff} (where $\H$ contains the $Lk$ fixed classifiers in \eqref{eq:trivial-classifiers}) and from Lemma \ref{lem:Perf-diff}. The last statement holds with probability at least $1-\delta$ over draws of $S^\tr$ and $S^\val$. Substituting this bound back in \eqref{eq:lhs-fixed} completes the proof.
\end{proof}

We next provide a bound on the singular values of $\bSigma$ for a specialized setting where the the probing classifiers $h^{\ell,i}$ are set to \eqref{eq:trivial-classifiers},  the basis functions $\phi^\ell$'s divide the data into disjoint clusters, and the base classifier $\bar{h}$ is close to having ``uniform accuracies'' across all the clusters and classes.
\blemma
Let $h^{\ell,i}$'s be defined as in \eqref{eq:trivial-classifiers}. Suppose for any $x$, $\phi^\ell(x) \in \{0,1\}$ and $\phi^\ell(x)\phi^{\ell'}(x) = 0, \forall \ell \ne \ell'$. Let $p_{\ell,i} = \E_{(x,y)\sim \mu}[\phi^\ell(x)\1(y=i)]$. 
Let $\bar{h}$ be such that
$\kappa - \tau \leq \Phi^{\Dshift,\ell}_i[\bar{h}] \leq \kappa, \forall \ell, i$ and for some $\kappa < \frac{1}{k}$ and $\tau < \kappa$. Then:
\vspace{-2pt}
\bequation
\sigma_{\max}(\bSigma) \,\leq\, L\max_{\ell,i}\,p_{\ell,i} \,+\, \Delta;~~~~~
\sigma_{\min}(\bSigma) \,\geq\, \epsilon(1-k\kappa)\min_{\ell,i}\,p_{\ell,i} \,-\, \Delta,
\eequation
\vskip -0.2cm
where $\displaystyle \Delta = Lk\tau\max_{\ell,i}p_{\ell,i}$.
\elemma
\vskip -0.2cm
\begin{proof}
We first write the matrix $\bSigma$ as $\bSigma = \bar{\bSigma} + \bE$, where
{\small
\bequation
\bar{\bSigma} = 
\begin{bmatrix}
p_{1,1}\left(\epsilon + (1-\epsilon)\kappa\right) &
p_{1,2}(1-\epsilon)\kappa & \ldots & p_{1,k}(1-\epsilon)\kappa &
p_{2,1}\kappa & \ldots & p_{L,k}\kappa\\
p_{1,1}(1-\epsilon)\kappa &
p_{1,2}\left(\epsilon + (1-\epsilon)\kappa\right) & \ldots & p_{1,k}(1-\epsilon)\kappa &
p_{2,1}\kappa & \ldots & p_{L,k}\kappa\\
&&&\vdots\\
p_{1,1}\kappa & p_{1,2}\kappa & \ldots & p_{1,k}\kappa & p_{2,1}\kappa & \ldots &p_{L,k}\left(\epsilon + (1-\epsilon)\kappa\right)
\end{bmatrix},
\eequation
}
and 
$\bE \in \R^{Lk\times Lk}$ with each
$\displaystyle|E_{(\ell,i), (\ell',i')}| \leq \max_{\ell,i}p_{\ell,i}\left(\kappa - \Phi^{\Dshift,\ell}_i[\bar{h}]\right) \leq \tau\max_{\ell,i}p_{\ell,i}$.

The matrix $\bar{\bSigma}$ can in turn be written as a product of a \textit{symmetric} matrix $\A\in \R^{Lk\times Lk}$ and a \textit{diagonal} matrix $\D\in \R^{Lk\times Lk}$:
\bequation
\bar{\bSigma} = \A\D,
\eequation
where
\vspace{-8pt}
\begin{align*}
\A &= 
\begin{bmatrix}
\epsilon + (1-\epsilon)\kappa &
(1-\epsilon)\kappa & \ldots &  (1-\epsilon)\kappa &
\kappa & \ldots & \kappa\\
(1-\epsilon)\kappa &
\epsilon + (1-\epsilon)\kappa & \ldots & (1-\epsilon)\kappa &
\kappa & \ldots & \kappa\\
&&&\vdots\\
(1-\epsilon)\kappa &
(1-\epsilon)\kappa & \ldots & \epsilon + (1-\epsilon)\kappa &
\kappa & \ldots & \kappa\\
&&&\vdots\\
\kappa & \kappa & \ldots & \kappa & \epsilon + (1-\epsilon)\kappa & \ldots &(1-\epsilon)\kappa\\
&&&\vdots\\
\kappa & \kappa & \ldots & \kappa & (1-\epsilon)\kappa & \ldots &\epsilon + (1-\epsilon)\kappa
\end{bmatrix} \\
\D &= \diag(p_{1,1},\ldots,p_{L,k}). \numberthis 
\end{align*}

We can then bound the largest and smallest singular values of $\bSigma$ in terms of those of $\A$ and $\D$. Using Weyl's inequality (see e.g.,\ \cite{stewart1998perturbation}), we have
\begin{equation}
\sigma_{\max}(\bSigma) \leq \sigma_{\max}(\bar{\bSigma})  + \|\bE\| \leq
\|\A\|\|\D\| \,+\, \|\bE\|
= \sigma_{\max}(\A)\sigma_{\max}(\D)  + \|\bE\|.
\label{eq:sigma-max-}
\end{equation}
and
\begin{equation}
\sigma_{\min}(\bSigma) \geq
\sigma_{\min}(\bar{\bSigma}) \,-\, \|\bE\|
=
\frac{1}{\|\bar{\bSigma}^{-1}\|} \,-\, \|\bE\|\geq \frac{1}{\|\A^{-1}\|\|\D^{-1}\|} \,-\, \|\bE\|= \sigma_{\min}(\A)\sigma_{\min}(\D)\,-\, \|\bE\|.
\label{eq:sigma-min-}
\end{equation}
Further, we have $\|\bE\| \leq \|\bE\|_F \leq  \displaystyle Lk\tau\max_{\ell,i}p_{\ell,i} = \Delta$, giving us:
\begin{equation}
\sigma_{\max}(\bSigma) \leq \sigma_{\max}(\A)\sigma_{\max}(\D)  + \Delta.
\vspace{-10pt}
\label{eq:sigma-max}
\end{equation}
\begin{equation}
\sigma_{\min}(\bSigma) \geq  \sigma_{\min}(\A)\sigma_{\min}(\D) - \Delta.
\label{eq:sigma-min}
\end{equation}
All that remains is to bound the singular values of $\bSigma$ and $\D$. Since $\D$ is a diagonal matrix, it's singular values are given by its diagonal entries:
\bequation
\sigma_{\max}(\D) = \max_{\ell,i}\,p_{\ell,i};~~~~\sigma_{\min}(\D) = \min_{\ell,i}\,p_{\ell,i}.
\eequation
The matrix $\A$ is symmetric and has a certain block structure. It's singular values are the same as the positive magnitudes of its Eigen values. We first write out it's $Lk$ Eigen vectors:

\bequation
\begin{matrix}
\x^{1,1} &= [&
\overbrace{1, -1, 0, \ldots, 0}^{k~\text{entries}},& \overbrace{ 0, \ldots, 0}^{k~\text{entries}}, &\ldots &\overbrace{ 0, \ldots, 0}^{k~\text{entries}}&]\\[5pt]
\x^{1,2} &= [&
{1, 0, -1, \ldots, 0}, &{ 0, \ldots, 0}, &\ldots & 0, \ldots, 0&]\\
&&&\vdots\\[5pt]
\x^{1,k-1} &= [&
{1, 0, 0, \ldots, -1},& { 0, \ldots, 0},& \ldots &{ 0, \ldots, 0}&]\\[5pt]
\x^{1,k} &= [&
{1, \ldots, 1},& { -1, \ldots, -1},& \ldots &{ 0, \ldots, 0}&]\\[5pt]
\x^{2,1} &= [&
{0, \ldots, 0},& { 1,-1,0, \ldots, 0},& \ldots& { 0, \ldots, 0}&]\\
&&&\vdots\\[5pt]
\x^{2,k-1} &= [&
 { 0, \ldots, 0},& {1, 0, 0, \ldots, -1},&\ldots &{ 0, \ldots, 0}&]\\[5pt]
\x^{2,k} &= [&
{-1, \ldots, -1},& { 1, \ldots, 1},& \ldots &{ 0, \ldots, 0}&]\\[5pt]
&&&\vdots\\[5pt]
\x^{L,1} &= [&
{0, \ldots, 0},& { 0, \ldots, 0},& \ldots& { 1,-1,0, \ldots, 0}&]\\[5pt]
&&&\vdots\\[5pt]
\x^{L,k-1} &= [&
{0, \ldots, 0},& { 0, \ldots, 0},& \ldots& { 1,0,0, \ldots, -1}&]\\[5pt]
\x^{L,k} &= [&
{1, \ldots, 1},& { 1, \ldots, 1},& \ldots& { 1, \ldots, 1}&]\\[5pt]
\end{matrix}
\eequation
One can then verify that the $Lk$  Eigen values of $\A$ are $\epsilon$ with a multiplicity of $(L-1)k$, $\epsilon(1-k\kappa)$ with a multiplicity of $k-1$ and $(L-\epsilon)k\kappa + \epsilon$ with a multiplicity of 1. Therefore: 
\bequation
\sigma_{\max}(\A) \leq L;~~~~\sigma_{\min}(\A) = \epsilon(1-k\kappa).
\eequation
Substituting the singular (Eigen) values of $\A, \D$ into \eqref{eq:sigma-max} and \eqref{eq:sigma-min} completes the proof. 
\end{proof}
In the above lemma, the base classifier $\bar{h}$ is assumed to have roughly uniformly low accuracies for all classes and clusters, and the closer it is to having uniform accuracies, i.e.\ the smaller the value of $\tau$, the tighter are the bounds.

We have shown a bound on the singular values of $\bSigma$ for a specific setting where the basis functions $\phi^\ell$'s divide the data into disjoint clusters. When this is not the case (e.g.\ with overlapping clusters \eqref{eq:hardlcuster}, or soft clusters \eqref{eq:softlcuster}), the singular values of $\bSigma$ would depend on how correlated the basis functions are.

\vspace{-0.4cm}
\section{Error Bound for FW-EG with Unknown $\psi$}
\label{app:complex-unknown}
In this section, we provide an error bound for Algorithm \ref{algo:FW-modified} for evaluation metrics of the form $\perf^\Dtrue[h] = \psi(C^\Dtrue_{11}[h],\ldots, C^\Dtrue_{kk}[h]),$ for a smooth, but \textit{unknown} $\psi: \R^k \> \R_+$. In this case, we do not have a closed-form expression for the gradient of $\psi$, but instead apply the example weight elicitation routine in Algorithm \ref{algo:weight-coeff} using probing classifiers chosen from within a small neighborhood around the current iterate $h^t$, where  $\psi$ is effectively linear. Specifically, we invoke Algorithm \ref{algo:weight-coeff} with the current iterate $h^t$ as the base classifier and with the radius parameter $\epsilon$ set to a small value. In the error bound that we state below for this version of the algorithm, we explicitly take into account the ``slack'' in using a local approximation to $\psi$ as a proxy for its gradient.

\btheorem[\textbf{Error Bound for Frank Wolfe with Elicited Gradients with unknown $\psi$}]
\label{thm:fw-eg-error-bound-unknown-psi}

Let $\perf^\Dtrue[h] = \psi(C^\Dtrue_{11}[h],\ldots, C^\Dtrue_{kk}[h])$ for an \emph{unknown} concave  $\psi: [0,1]^k \>\R_+$, which is $Q$-Lipschitz, and also $\lambda$-smooth w.r.t.\ the $\ell_1$-norm.  Let $\hat{\perf}^\val[h]=\psi(\hat{C}^\val_{11}[h],\ldots, \hat{C}^\val_{kk}[h])$. Fix $\delta \in (0, 1)$. 
Suppose Assumption \ref{asp:alpha-star} holds with slack $\nu$. Suppose for any linear metric $\sum_i\beta_i C^\Dtrue_{ii}[h]$, whose associated weight coefficients in the assumption is $\bar{\balpha}$  with $\|\bar{\balpha}\| \leq B$, the following holds.
For any $\delta \in (0,1)$, with probability $\geq 1-\delta$ over draw of $S^\tr$ and $S^\val$, 
 when the weight elicitation routine in Algorithm \ref{algo:weight-coeff} is given an 
 input metric $\hat{\perf}^\val$ with $|\hat{\perf}^\val - \sum_i\beta_i \hat{C}^\val_{ii}[h]| \leq \chi, \forall h$, it outputs coefficients $\hat{\balpha}$ such that
 $\|\hat{\balpha} -\bar{\balpha}\| \leq 
 \kappa(\delta, n^\tr, n^\val, \chi)
 $, for some function $\kappa(\cdot) > 0$. Let $B' = B + \sqrt{Lk}\,\kappa(\delta, n^\tr, n^\val, 2\lambda\epsilon^2)$. 
Assume $k \leq n^\val$.
 
 \noindent Then w.p.\  $\geq 1 - \delta$ over draws of $S^\tr$ and $S^\val$ from $\Dtrue$ and $\Dshift$ respectively, the classifier $\hat{h}$ output by Algorithm \ref{algo:FW-modified} with radius parameter $\epsilon$ after $T$ iterations satisfies:
 
\begin{eqnarray*}
\lefteqn{
\max_{h}\perf^D[h] - \perf^D[\hat{h}]\,\leq\,
2QB'\E_x\left[\|\eta^\tr(x)-
\hat{\eta}^\tr(x)\|_1\right] +
 4Q\sqrt{Lk}\,\kappa(\delta/T, n^\tr, n^\val, 2\lambda\epsilon^2)
 }
 \\
 &&
 \hspace{3.5cm}
\,+\, 4Q\nu \,+\, 
\mathcal{O}\left(\lambda k\sqrt{\frac{k\log(n^\val)\log(k) + \log(k/\delta)}{n^\val}} + \frac{\lambda}{T}\right). \numberthis 
\end{eqnarray*}
\etheorem

One can plug-in $\kappa(\cdot)$ with e.g. the error bound we derived for Algorithm \ref{algo:weight-coeff} in Theorem \ref{thm:alpha-diagonal-linear-conopt}, suitably modified to accommodate input metrics $\hat{\perf}^\val$ that may differ from the desired linear metric by at most $\chi$. Such modifications can be easily made to Theorem \ref{thm:alpha-diagonal-linear-conopt} and would result in an  additional term $\sqrt{Lk}\chi$  in the error bound to take into account the additional approximation errors in computing the right-hand side of the linear system in \eqref{eq:perf-emp}.

Before proceeding to prove Theorem \ref{thm:fw-eg-error-bound-unknown-psi}, we state a few useful lemmas. The following  lemma shows that because $\psi(\C)$ is $\lambda$-smooth, it is effectively linear within a small neighborhood around $\C$.
\blemma
\label{lem:eps-linear-approx}
Suppose $\psi$ is $\lambda$-smooth w.r.t.\ the $\ell_1$-norm. 
For each iteration $t$ of Algorithm \ref{algo:FW-modified}, let ${\bupsilon}^t = \nabla\psi(\c^t)$ denote the true gradient of $\psi$ at $\c^t$.
Then for any classifier $h^\epsilon(x) = (1-\epsilon)h^t(x) + \epsilon h(x),$
\begin{eqnarray*}
    \left|\hat{\perf}^\val[h^\epsilon] - \hat{\perf}^\val[h^t] - \sum_{i}\upsilon^t_i \hat{C}^\val_{ii}[h^\epsilon]\right|
    &\leq& 2\lambda\epsilon^2. \numberthis
\end{eqnarray*}
\elemma
\begin{proof}
For any randomized classifier $h^\epsilon(x) = (1-\epsilon)h^t(x) + \epsilon h(x),$
\allowdisplaybreaks
\begin{eqnarray*}
    \left|\hat{\perf}^\val[h^\epsilon] - \hat{\perf}^\val[h^t] - \sum_{i}\upsilon^t_i \hat{C}^\val_{ii}[h^\epsilon]\right| &=&
    \left|\psi(\diag(\hat{\C}^\val[h^\epsilon])) - \psi(\diag(\hat{\C}^\val[h^t])) -  \sum_{i}\upsilon^t_i \hat{C}^\val_{ii}[h^\epsilon]\right|\\
    &\leq& \frac{\lambda}{2}\|\diag(\hat{\C}^\val[h^\epsilon]) \,-\, \diag(\hat{\C}^\val[h^t])\|^2_1\\
    &=& \frac{\lambda}{2}\|\epsilon(\diag(\hat{\C}^\val[h]) \,-\, \diag(\hat{\C}^\val[h^t]))\|_1^2\\
    &=&  \frac{\lambda}{2}\epsilon^2\|\diag(\hat{\C}^\val[h]) \,-\, \diag(\hat{\C}^\val[h^t])\|_1^2\\
     &\leq&  \frac{\lambda}{2}\epsilon^2\left(\|\diag(\hat{\C}^\val[h])\|_1 \,+\, \|\diag(\hat{\C}^\val[h^t])\|_1\right)^2\\
    &\leq& \frac{\lambda}{2}\epsilon^2(2)^2 ~=~2\lambda\epsilon^2. \numberthis 
\end{eqnarray*}
Here the second line follows from the fact that $\psi$ is $\lambda$-smooth w.r.t.\ the $\ell_1$-norm,
and $\bupsilon^t = \nabla\psi(\diag(\hat{\C}^\val[h^t]))$. The third line follows from linearity of expectations. The last line follows from the fact that the sum of the entries of a confusion matrix  (and hence the sum of its diagonal entries) cannot exceed 1.
\end{proof}

We next restate the error bounds for the call to \textbf{PI-EW} in line 9 and the corresponding bound on the approximation error in the linear maximizer $\hat{f}$ obtained.
\blemma[Error bound for call to \textbf{PI-EW}  in line 9 with unknown $\psi$]
\label{lem:plugin-linear-unknown-psi}
For each iteration $t$ of Algorithm \ref{algo:FW}, let ${\bupsilon}^t = \nabla\psi(\c^t)$ denote the true gradient of $\psi$ at $\c^t$,  when the algorithm is run with an unknown $\psi$ that is $Q$-Lipschitz and $\lambda$-smooth w.r.t.\ the $\ell_1$-norm.
Let
 $\bar{\balpha}$ be the associated weighting coefficient for the linear metric $\sum_i {\upsilon}^t_i C^D_{ii}[h]$ (whose coefficients are unknown) in Assumption \ref{asp:alpha-star}, with $\|\bar{\balpha}\|_1 \leq B$, and with slack $\nu$. Fix $\delta>0$.
Suppose w.p. $\geq 1-\delta$ over draw of $S^\tr$ and $S^\val$, when the weight elicitation routine used in \emph{\textbf{PI-EW}} is called with the input metric $\hat{\perf}^\val[h] - \hat{\perf}^\val[h^t]$  with $|\hat{\perf}^\val[h] - \sum_i\upsilon_i \hat{C}^\val_{ii}[h]| \leq \chi, \forall h$, it
outputs coefficients $\hat{\balpha}$ such that
 $\|\hat{\balpha} -\bar{\balpha}\| \leq 
 \kappa(\delta, n^\tr, n^\val, \chi)
 $, for some function $\kappa(\cdot) > 0$.  Let $B' = B + \sqrt{Lk}\,\kappa(\delta, n^\tr, n^\val, 2\lambda\epsilon^2)$. 
 Then with the same probability, the classifier $\hat{h}$ output by \emph{\textbf{PI-EW}}  when called 
 by Algorithm \ref{algo:FW-modified}
 with metric $\hat{\perf}^\lin[h]= \hat{\perf}^\val[h] - \hat{\perf}^\val[h^t]$
 and radius $\epsilon$ satisfies:
\begin{align*}
{
\max_{h}\sum_i {\upsilon}^t_i C^D_{ii}[h] - \sum_i {\upsilon}^t_i C^D_{ii}[\hat{h}]} &\leq
Q\left(B'\E_x\left[\|\eta^\tr(x)-
\hat{\eta}^\tr(x)\|_1\right] \right.\\
& \left. \hspace{1cm} \,+\, 2\sqrt{Lk}\,\kappa(\delta, n^\tr, n^\val, 2\lambda\epsilon^2)
 \,+\, 2\nu 
 \right),
 \numberthis 
\end{align*}
where $\eta_i^\tr(x) = \P^\mu(y=i|x)$.
\elemma
\begin{proof}
The proof is the same as that of Lemma \ref{lem:plugin-linear} for the ``known $\psi$'' case, except that the  $\kappa(\cdot)$ guarantee for the call to weight elicitation routine in line 2 is different, and takes into account the fact that the input metric $\hat{\perf}^\val[h] - \hat{\perf}^\val[h^t]$ to the weight elicitation routine  is only a local approximation to the (unknown) linear metric $\sum_i\upsilon_i \hat{C}^\val_{ii}[h]$. We use Lemma \ref{lem:eps-linear-approx} to compute the value of slack $\chi$ in $\kappa(\cdot)$.
\end{proof}
\blemma[Approximation error in linear maximizer $\hat{f}$ in line 9 with unknown $\psi$]
\label{lem:lmo-unknown}
For each iteration $t$ in Algorithm \ref{algo:FW-modified}, let $\bar{\c}^t = \diag(\C^D[h^t])$ and let $\bar{\bbeta}^t = \nabla\psi(\bar{\c}^t)$ denote the unknown gradient of $\psi$ evaluated at $\bar{\c}^t$.
Suppose the assumptions in Theorem \ref{thm:fw-eg-error-bound-unknown-psi} hold. 
Let $B' = B + \sqrt{Lk}\,\kappa(\delta, n^\tr, n^\val, 2\lambda\epsilon^2)$. 
Assume $k \leq n^\val$. 

\noindent Then w.p.\ $\geq 1 - \delta$ over draw of $S^\tr$ and $S^\val$ from $\Dshift$ and $\Dtrue$ resp., for any $t = 1, \ldots, T$, the classifier $\hat{f}$ returned by \emph{\textbf{PI-EW}} in line 9 satisfies:
\begin{eqnarray*}
\lefteqn{
\max_{h}\sum_{i}\bar{\beta}^t_{ii}C^\Dtrue_i[h] \,-\, 
\sum_{i}\bar{\beta}^t_iC^\Dtrue_{ii}[\hat{f}] \,\leq\,
QB'\E_x\left[\|\eta^\tr(x)-
\hat{\eta}^\tr(x)\|_1\right] 
\,+\, 2Q\nu 
}\\
 &&
 \hspace{1cm}
 +\,
 2Q\sqrt{Lk}\,\kappa\left(\textstyle\frac{\delta}{T}, n^\tr, n^\val, 2\lambda \epsilon^2\right)
 \,+\, 
 \mathcal{O}\left(\lambda k\sqrt{\frac{k\log(k)\log(n^\val) + \log(k/\delta)}{n^\val}}\right). \numberthis 
\end{eqnarray*}
\elemma
\begin{proof}
The proof is the same as that of Lemma \ref{lem:lmo} for the ``known $\psi$'' case, with the only difference being that we use Lemma \ref{lem:plugin-linear-unknown-psi} (instead of Lemma \ref{lem:plugin-linear}) to bound the linear maximization errors in equation \eqref{eq:approx-lin-max-last}.
\end{proof}

\begin{proof}[Proof of Theorem \ref{thm:fw-eg-error-bound-unknown-psi}]
The proof follows from plugging Lemma \ref{lem:lmo-unknown} into the Frank-Wolfe convergence guarantee in Lemma \ref{lem:fw} stated in Appendix \ref{app:proof-FW}.
\end{proof}

\section{Running Time of Algorithm \ref{algo:FW}}
\label{app:run-time}
We discuss how one iteration of FW-EG (Algorithm~\ref{algo:FW}) compares with  one iteration (epoch) of training a class-conditional probability estimate $\hat\eta^{\tr}(x) \approx \P^\Dshift(y=1|x)$. In each iteration of FW-EG, we create $Lk$ probing classifiers, where each probing classifier via~\eqref{eq:trivial-classifiers} only  requires perturbing the predictions of the base classifier $\bar h = h^t$ and hence requires $n^\tr + n^\val$ computations. After constructing the $Lk$ probing classifiers, FW-EG solves a system of linear equations with $Lk$ unknowns, where a na\"ive matrix inversion approach requires $O((Lk)^3)$ time. Notice that this can be further  improved with efficient methods, e.g., using state-of-the-art linear regression solvers. Then FW-EG creates a plugin classifier and combines the predictions with the Frank-Wolfe style updates, requiring $Lk(n^\tr + n^\val)$ computations. So, the overall time complexity for each iteration of 
FW-EG is $O\left(Lk(n^\tr + n^\val) + (Lk)^3\right)$. On the other hand, one iteration (epoch) of training $\hat\eta^{\tr}(x)$ requires $O(n^\tr Hk)$ time, where $H$ represents the 
total number of parameters in the underlying model architecture up to the penultimate layer.
For deep networks such as ResNets (Sections~\ref{ssec:cifar10} and~\ref{ssec:adience}), clearly, the run-time is dominated by the training of $\hat\eta^{\tr}(x)$, as long as $L$ and $k$ are relatively small compared to the number of parameters in the neural network. Thus our approach is reasonably faster than having to train the model for $\hat\eta^\tr$ in each iteration~\cite{jiang2020optimizing}, training the model (such as ResNets) twice~\cite{patrini2017making}, or making multiple forward/backward passes on the training and validation set requiring three times the time for each epoch compared to training $\hat\eta^\tr$~\cite{ren2018learning}. 


\section{Plug-In with Coordinate-Wise Search Baseline}

\label{ssec:multiclass-plugin}
We describe the Plug-in [train-val] baseline used in Section \ref{sec:experiments}, which constructs a classifier $\widehat{h}(x) \,\in\, \argmax_{i \in [k]} w_i\hat{\eta}^\val_i(x)$, by tuning the weights $w_i \in \R$ to maximize the given metric on the validation set .
Note that there are $k$ parameters to be tuned, and a na\"{i}ve approach would be to use an $k$-dimensional grid search. Instead, we use a trick from \cite{hiranandani2019multiclass} to decompose this search into  an independent coordinate-wise search for each $w_i$. Specifically, one can estimate the relative weighting  $w_i/w_j$ between any pair of classes $i,j$ by constructing a classifier of the form
\bequation
h^{\zeta}(x) = \begin{cases}
i & \text{if}~~~\zeta \hat{\eta}^\tr_i(x) > (1-\zeta) \hat{\eta}^\tr_j(x)\\
j & \text{otherwise}
\end{cases},
\eequation
that  predicts either class $i$ or  $j$
based on which of these receives a higher (weighted) probability estimates, 
and (through a line search) finding the  parameter  $\zeta \in (0,1)$ for which $h^{\zeta}$ yields the highest validation metric:
\bequation
w_i/w_j \approx \argmax_{\zeta \in [0,1]} \hat{\perf}^\val[h^{\zeta}].
\eequation
By fixing $i$ to class $k$, and repeating this for classes $j \in [k-1]$, one can estimate  $w_j/w_k$ for each $j \in [k-1]$, and normalize the estimated related weights  to get estimates for $w_1, \ldots, w_k$. 

\section{Solving Constrained Satisfaction Problem in \eqref{eq:con-opt}}
\label{app:con-opt}

We describe some common special cases where one can easily identify classifiers $h^{\ell,i}$'s which satisfy the constraints in \eqref{eq:con-opt}.
We will make use of a pre-trained class probability model $\hat{\eta}_i^\tr(x) \approx \P^\Dshift(y=i|x)$, also used in Section \ref{sec:algorithms} to construct the plug-in classifier in Algorithm \ref{algo:linear-metrics}. The hypothesis class $\H$ we consider is the set of all plug-in classifiers obtained by post-shifting $\hat{\eta}^\tr$.

We start with a binary classification problem ($k = 2$) with basis functions $\phi^\ell(x) = \1(g(x) =\ell)$, which divide the data points into $L$ \textit{disjoint} groups according to $g(x) \in [L]$. For this setting, one can show under mild assumptions on the data distribution that \eqref{eq:con-opt} does indeed have a feasible solution (using e.g. the geometric techniques used by \cite{hiranandani2018eliciting} and also elaborated in the figure above).
One such feasible $h^{\ell, i}$  predicts class $i \in \{0,1\}$ on all example belonging to group $\ell$, and uses a thresholded of $\hat{\eta}^\tr$ for examples from other groups, with per-cluster thresholds. 
This would have the effect of maximizing the diagonal entry $\hat{\Phi}^{\tr,\ell}_i[h^{\ell, i}]$ of $\hat{\bSigma}$ and the thresholds can be tuned so that the off-diagonal entries $\hat{\Phi}^{\tr,\ell'}_{i'}[h^{\ell, i}], \forall (\ell',i') \ne (\ell,i)$ are small. 

More specifically, for any $\ell \in [L], i \in \{0,1\}$, the classifier $h^{\ell,i}$ can be constructed as:

\begin{equation}
h^{\ell, i}(x) = \begin{cases}
i &\text{ if  $g(x) = \ell$}\\
\1(\hat{\eta}^\tr(x) \leq 
\tau_{g(x)}) &\text{ otherwise},
\end{cases}
\label{eq:binaryconopt}
\end{equation}

\noindent where the thresholds $\tau_{\ell'} \in [0,1], \ell' \ne \ell$ can each be tuned independently using a line search to minimize $\max_{i'}\hat{\Phi}^{\tr,\ell'}_{i'}[h^{\ell, i}]$. As long as $\hat{\eta}^\tr$ is a close approximation of $\P(y|x)$, the above procedure is guaranteed to find an approximately feasible solution for \eqref{eq:con-opt}, provided one exists. Indeed one can tune the values of $\gamma$ and $\omega$ in \eqref{eq:con-opt}, so that the above construction (with tuned thresholds) satisfies the constraints.

\begin{figure}[t]
\centering
\includegraphics[scale=1]{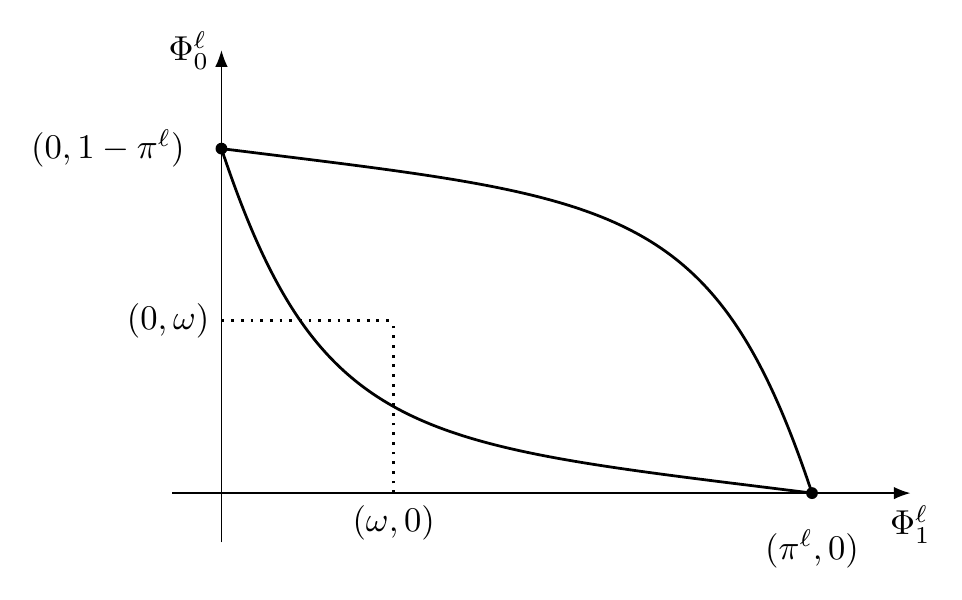}
\caption{Geometry of the space of $\Phi$-confusions~\cite{hiranandani2018eliciting} for $k=2$ classes and with basis functions $\phi^\ell(x) = \1(g(x) =\ell)$ which divide the data into $L$ disjoint clusters.
For a fixed cluster $\ell$, we plot the values of ${\Phi}^{\Dshift,\ell}_{0}[h]$ and ${\Phi}^{\Dshift,\ell}_{1}[h]$ for all randomized classifiers, with  $\pi^\ell = \P^{\Dshift}(y=1,g(x) = \ell)$. The points on the lower boundary correspond to  classifiers of the form $\1({\eta}^\tr(x) \leq 
\tau)$ for varying thresholds $\tau \in [0,1]$. The points on the lower boundary
within the dotted box correspond to the thresholded classifiers $h$ which yield both values $\Phi^{\Dshift,\ell}_0[h] \leq \omega$ and $\Phi^{\Dshift,\ell}_1[h] \leq \omega$. 
One can thus find a feasible probing classifier $h^{\ell,i}$ for the constrained optimization problem in~\eqref{eq:con-opt} using the construction from \eqref{eq:binaryconopt}
as long as 
$\pi^{\ell} \geq \gamma$ and $1 - \pi^{\ell} \geq \gamma$, 
and the lower boundary intersects with the dotted box for clusters $\ell'\neq\ell$. 
If the latter fails, one can increase $\omega$ slowly until the classifier given in~\eqref{eq:binaryconopt} is feasible for  \eqref{eq:con-opt}.
} 
\end{figure}

We next look a multiclass problem ($k > 2$) with basis functions $\phi^\ell(x) = \1(g(x) =\ell)$ which again divide the data points into $L$ \textit{disjoint} groups. 
Here again, one can show under mild assumptions on the data distribution that \eqref{eq:con-opt} does indeed have a feasible solution (using e.g. the geometric tools from \cite{hiranandani2019multiclass}). 
We can once again construct a feasible $h^{\ell, i}$ 
by predicting class $i \in [k]$ on all example belonging to group $\ell$, and using a post-shifted classifier for examples from other groups. In particular, for any $\ell \in [L], i \in [k]$, the classifier $h^{\ell,i}$ can be constructed as:

\begin{equation}
h^{\ell, i}(x) = 
\begin{cases}
i &\text{ if  $g(x) = \ell$}\\
\argmax_{j\in [k]} w^{g(x)}_j\hat{\eta}_j^\tr(x) &\text{ otherwise}
\end{cases},
\label{eq:h-ell-i-con-opt}
\end{equation}
where we use $k$ parameters $w^{\ell'}_1, \ldots, w^{\ell'}_k$ for each cluster $\ell' \ne \ell$. 
We can then tune these $k$ parameters  to minimize the maximum of the off-diagonal entries of $\hat{\bSigma}$, i.e.\ {minimize} $\max_{i'}\hat{\Phi}^{\tr,\ell'}_{i'}[h^{\ell, i}]$. However, this may require an $k$-dimensional grid search. Fortunately, as described in Appendix \ref{ssec:multiclass-plugin}, we can use a trick from \cite{hiranandani2019multiclass} 
to reduce the problem of tuning $k$ parameters into $k$ independent line searches. 
This is based on the idea that the optimal relative weighting $w^{\ell'}_i/w^{\ell'}_j$ between any pair of classes can be determined through a line search. In our case, we will fix $w^{\ell'}_k = 1, \forall \ell' \ne \ell$ and compute  $w^{\ell'}_i, i = 1, \ldots, k-1$ by solving the following one-dimensional optimization problem to determine the relative weighting $w^{\ell'}_i / w^{\ell'}_k = w^{\ell'}_i$.
\bequation
w^{\ell'}_i \in \underset{\zeta \in [0,1]}{\argmin}\left( \max_{i'}\hat{\Phi}^{\tr,\ell'}_{i'}[h^{\zeta}]\right),
~~
\text{where}
~~
h^{\zeta}(x) = \begin{cases}
i & \text{if}~~~\zeta \hat{\eta}^\tr_i(x) < (1-\zeta) \hat{\eta}^\tr_k(x)\\
k & \text{otherwise}
\end{cases}.
\eequation
We can repeat this for each cluster $\ell' \ne \ell$ to construct the $(\ell,i)$-th probing classifier $h^{\ell,i}$ in \eqref{eq:h-ell-i-con-opt}.

For the more general setting, where the basis functions $\phi^\ell$'s cluster the data into overlapping or soft clusters (such as in \eqref{eq:softlcuster}),  one can find feasible classifiers for \eqref{eq:con-opt} by posing this problem as a ``rate'' constrained optimization problem of the form below to pick $h^{\ell,i}$:
\bequation
\max_{h\in \H}\hat{\Phi}^{\tr,\ell}_i[h]
~~\text{s.t.}~~
\hat{\Phi}^{\tr,\ell'}_{i'}[h] \leq \omega, \forall (\ell',i') \ne (\ell,i),
\eequation
which can be solved using off-the-shelf toolboxes such as the open-source library offered by \cite{cotter2019optimization}.\footnote{\url{https://github.com/google-research/tensorflow_constrained_optimization}}
Indeed one can tune the hyper-parameters $\gamma$ and $\omega$ so that the solution to the above problem is feasible for \eqref{eq:con-opt}.
If $\H$ is the set of plug-in classifiers obtained by post-shifting $\hat{\eta}^\tr$, then one can alternatively use the approach of \cite{narasimhan2018learning} to identify the optimal post-shift on $\hat{\eta}^\tr$ that solves the above constrained problem.

\section{Additional Experimental Details}
\label{app:exp}

Below we provide some more details regarding the experiments:  

\begin{table}[t]
    \small
    \centering
    \caption{Test macro F-measure for the maximization task in Section 6.2 of~\cite{jiang2020optimizing}.}
    \begin{tabular}{ccc}
    \hline
     $\downarrow$ Data, Method $\rightarrow$&
         Adaptive Surrogates~\cite{jiang2020optimizing} & FW-EG \\
        \hline
        COMPAS 
        & 0.629 & \textbf{0.652}
        \\
        Adult 
        & 0.665 & \textbf{0.670} 
        \\
        Default
        & 0.533 & {0.536}
        \\
        \hline
    \end{tabular}
    \label{tab:addedexp}
\end{table}

\begin{itemize}
    \item \textit{Maximizing Accuracy under Label Noise on CIFAR-10 (Section~\ref{ssec:cifar10}):} The metric that we aim to optimize  is test accuracy, which is a linear metric in the diagonal entries of the confusion matrix. Notice that we work with the \emph{asymmetric} label noise model from Patrini et al.~\cite{patrini2017making}, which corresponds to the setting where a label is flipped to a particular label with a certain probability. This involves a non-diagonal noise transition matrix $\T$, and consequently the corrected  training objective  is a linear function of the entire confusion matrix. Indeed, the loss correction approach from~\cite{patrini2017making} makes use of the estimate of the entire noise-transition matrix, including the off-diagonal entries. Whereas, our approach in the experiment elicits weights for the diagonal entries alone, but assigns a different set of weights for each basis function, i.e., cluster. We are thus able to achieve better performance than \cite{patrini2017making}  by optimizing 
    correcting for the noise using a linear function of per-cluster diagonal entries. 
    Indeed, we also observed that PI-EW often achieves better accuracy during cross-validation with ten basis functions,  highlighting the benefit of underlying modeling in PI-EW. We expect to get further improvements by incorporating off-diagonal entries in PI-EW optimization on the training side as explained in Appendix~\ref{app:linear-gen}. We also stress that 
    the results from our methods can  be further improved by cross-validating over kernel width, UMAP dimensions, and selection of the cluster centers, which are currently set to fixed values in our experiments. Lastly, we did not compare to the Adaptive Surrogates~\cite{jiang2020optimizing} for this experiment as this baseline requires to re-train the ResNet model in every iteration, and more importantly, this method constructs its probing classifiers by perturbing the parameters of the ResNet model several times in each iteration, which can be prohibitively expensive in practice.
    
    \item \textit{Maximizing G-mean with Proxy Labels on Adult (Section~\ref{ssec:adult}):} In this experiment, we use binary features as basis functions instead of RBF kernels as done in CIFAR-10 experiment.  This reflects the flexibility of the proposed PI-EW and FW-EG methods. Our approach can incorporate any indicator features as basis function as long as it reflects cluster memberships. Moreover, our choice of basis function was motivated from choices made in~\cite{jiang2020optimizing}. We expect to further improve our results by incorporating more binary features as basis functions. 
    
    \item \textit{Maximizing F-measure under Domain Shift on Adience (Section~\ref{ssec:adience}):} As mentioned in Section~\ref{ssec:adience}, for the basis functions, in addition to the default basis $\phi^{\text{def}}(x) = 1 \, \forall x$, we choose from subsets of six basis  functions $\phi^1,\ldots,\phi^6$ that are averages of the RBFs, centered at points  from the validation set corresponding to each one of the six age-gender combinations. 
    We choose these subsets using knowledge of the underlying image classification task.
    Specifically, besides the default basis function, we cross-validate over three subsets of basis functions. The first subset comprises two basis functions, where the basis functions are averages of the RBF kernels with cluster centers belonging to the two true class. The second subset comprises three basis functions, where the basis functions are averages of the RBF kernels with cluster centers belonging to the three age-buckets. The third subset comprises six basis functions, where the basis functions are averages of the RBF kernels with cluster centers belonging to the combination of true class $\times$ age-bucket. We expect to further improve our results by cross-validating over kernel width and selection of the cluster centers. Lastly, we did not compare to Adaptive Surrogates, as this experiment again requires training a deep neural network model, and perturbing or retraining the model in each iteration can be prohibitively expensive in practice.
    
    \item \textit{Maximizing Black-box Fairness Metric on Adult (Section~\ref{ssec:adultbb}):}  In this experiment, since we treat the metric as a black-box, we do not assume access to gradients and thus do not run the [$\psi$ known] variant of FW-EG. We only report the [$\psi$ unknown] variant of FW-EG with varied basis functions as shown in Table~\ref{tab:adultbb}. 
    
    \item In Table~\ref{tab:addedexp}, we replicate the ``Macro F-measure'' experiment (without noise) from Section 6.2 in~\cite{jiang2020optimizing} and report results of maximizing the macro F-measure on Adult, COMPAS and Default datasets. We see that our approach yields notable gains on two out of the three datasets in comparison to Adaptive Surrogates approach~\cite{jiang2020optimizing}. 
\end{itemize}


%
\bibliographystyle{IEEE_ECE}
\bibliography{thesisrefs}  


\backmatter

\end{document}